\newcommand{\Ocap}{\mbox{\textcircled{$\scriptstyle{\cap}$}}}
\newcommand{\lpr}{{\underline{P}}}
\newcommand{\upr}{{\overline{P}}}
\newcommand{\cred}{{\mathcal{P}}}
\newtheorem{theorem}{Theorem}
\newtheorem{lemma}{Lemma}
\newtheorem{definition}{Definition}
\newtheorem{proposition}{Proposition}
\newtheorem{corollary}{Corollary}	
\newtheorem{axiom}{Axiom}
\begin{document}

\setcounter{page}{1}
\thispagestyle{empty} \null\def\baselinestretch{1}

\vskip 90mm
\begin{center}
\large {\huge \bf Visions of a Generalized Probability Theory}\\ \vskip 10mm \large Fabio Cuzzolin 
\end{center}

\vskip 90mm

\begin{center}\rm\normalsize October 18, 2018\end{center}
\thispagestyle{empty}

\normalsize

\chapter*{Preface}

\emph{Computer vision} is an ever growing discipline whose ambitious goal is to equip machines with the intelligent visual skills humans and animals are provided by Nature, allowing them to interact effortlessly with complex, dynamic environments. Designing automated visual recognition and sensing systems typically involves tackling a number of challenging tasks, and requires an impressive variety of sophisticated mathematical tools. In most cases, the knowledge a machine has of its surroundings is at best incomplete -- missing data is a common problem, and visual cues are affected by imprecision. The need for a coherent mathematical `language' for the description of uncertain models and measurements then naturally arises from the solution of computer vision problems. 

The \emph{theory of evidence} (sometimes referred to as `evidential reasoning', `belief theory' or `Dempster-Shafer theory') is, perhaps, one of the most successful approaches to uncertainty modelling, and arguably the most straightforward and intuitive approach to a generalized probability theory. Emerging in the late Sixties as a profound criticism of the more classical Bayesian theory of inference and modelling of uncertainty, evidential reasoning stimulated in the following decades an extensive discussion of the epistemic nature of both subjective `degrees of beliefs' and frequentist `chances', or relative frequencies. More recently a renewed interest in \emph{belief functions}, the mathematical generalization of probabilities which are the object of study of the theory of evidence, has seen a blossoming of applications to a variety of fields of human knowledge.

In this Book we are going to show how, indeed, the fruitful interaction of computer vision and belief calculus is capable of stimulating significant advances in both fields.\\ From a methodological point of view, novel theoretical results concerning the geometric and algebraic properties of belief functions as mathematical objects are illustrated and discussed in Part II, with a focus on both a perspective `geometric approach' to uncertainty and an algebraic solution to the issue of conflicting evidence.\\ In Part III we show how these theoretical developments arise from important computer vision problems (such as articulated object tracking, data association and object pose estimation) to which, in turn, the evidential formalism is able to provide interesting new solutions.\\ Finally, some initial steps towards a generalization of the notion of \emph{total probability} to belief functions are taken, in the perspective of endowing the theory of evidence with a complete battery of estimation and inference tools to the benefit of all scientists and practitioners.

\newpage

\null\def\baselinestretch{1} \noindent \emph{`'La vera logica di questo mondo  il calcolo delle probabilit\`a ... Questa
branca della matematica, che di solito viene ritenuta favorire il gioco d'azzardo, quello dei dadi e delle scommesse, e
quindi estremamente immorale, \`e la sola `matematica per uomini pratici', quali noi dovremmo essere. Ebbene, come la
conoscenza umana deriva dai sensi in modo tale che l'esistenza delle cose esterne  \`e inferita solo dall'armoniosa (ma
non uguale) testimonianza dei diversi sensi, la comprensione, che agisce per mezzo delle leggi del corretto
ragionamento, assegner\`a a diverse verit\`a (o fatti, o testimonianze, o comunque li si voglia chiamare) diversi gradi di
probabilit\`a."\vskip 4mm
\begin{flushright}{James Clerk Maxwell}\end{flushright}}
\thispagestyle{empty}

\tableofcontents \thispagestyle{empty}

\chapter{Introduction} \label{cha:introduction}

In the wide river of scientific research, seemingly separate streams often intertwine, generating novel, unexpected results. The fruitful interaction between mathematics and physics, for example, marked in the Seventeenth Century the birth of modern science in correspondence with the publication of Newton's \emph{Philosophia Naturalis Principia Mathematica}\footnote{Isaac Newton, 1687}. The accelerated accumulation of human knowledge which characterized the last century has, on the one hand, much increased the possibility of such fecund encounters taking place - on the other hand, this very growth has caused as a side effect a seemingly unstoppable trend towards extreme specialization.

The aim of this Book is to provide a significant example of how crossing the traditional boundaries between disciplines can lead to novel results and insights that would have never been possible otherwise.

As mentioned in the Preface, computer vision is an interesting case of a booming discipline involving a panoply of difficult problems, most of which involve the handling of various sources of uncertainty for decision making, classification or estimation. Indeed the latter are crucial problems in most applied sciences \cite{bell96generalized}, as both people and machines need to make inferences about the state of the external world, and take appropriate actions. Traditionally, the (uncertain) state of the world is assumed to be described by a probability distribution over a set of alternative, disjoint hypotheses. Making appropriate decisions or assessing quantities of interest requires therefore estimating such a distribution from the available data. 

Uncertainty \cite{klir95principles,Walley96,Kong86b,resconi93integration,dubois82several} is normally handled in the literature within the Bayesian framework \cite{shafer1981b,black87shafer}, a fairly intuitive and easy to use setting capable of providing a number of `off the shelf' tools to make inferences or compute estimates from time series. Sometimes, however, as in the case of extremely rare events (e.g., a volcanic eruption or a catastrophic nuclear power plant meltdown), few statistics are available to drive the estimation. Part of the data can be missing. Furthermore, under the law of large numbers, probability distributions are the outcome of an infinite process of evidence accumulation, drawn from an infinite series of samples, while in all practical cases the available evidence can only provide some sort of constraint on the unknown probabilities governing the process. All these issues have led to the recognition of the need for a coherent mathematical theory of uncertainty under partial data \cite{sheridan1991,Kohlas94,Levi83,zadeh1986,Smets88,Shafer85a,yager99modeling,cozman00reasoning}.

Different kinds of constraints are associated with different generalizations of probabilities \cite{polkowski96mereology,shafer78bernoulli}, formulated to model uncertainty at the level of  distributions \cite{klir1988,krause1993}. The simplest approach consists in setting upper $u(x)$ and lower $l(x)$ bounds to the probability values of each element $x$ of the sample space, yielding what is usually called a `probability interval'. A more general approach allows the (unknown) distribution to belong to an entire convex set of probability distributions -- a `credal set'. Convexity (as a mathematical requirement) is a natural consequence, in these theories, of rationality axioms such as coherence. A battery of different uncertainty theories has indeed been developed in the last century or so \cite{goodman85uncertainty,smithson1989,Grabish95,klir95book,km95book,shafer01book,halpern03book}, starting from De Finetti's pioneering work \cite{definetti74,kuhr2007finetti}. Among the most powerful and successful frameworks it is worth mentioning possibility-fuzzy set theory \cite{dubois88possibility}, the theory of random sets \cite{matheronrandom,ross86random}, and that of imprecise probabilities \cite{walley91book}, without forgetting other significant contributions such as monotone capacities \cite{hendon96product,denneberg00totally}, Choquet integrals \cite{wang97choquet}, rough sets, hints, and more recent approaches based on game theory \cite{pal92uncertainty1,pal92uncertainty2}. 

G. Shafer's theory of belief functions \cite{Dubois92,Fine77,zadeh84review,diaconis78review,lemmers86confidence,dubois87principle}, in particular, allows us to express partial belief by providing lower and upper bounds to probability values on all events \cite{stein93ds,spillman90managing,neapolitan93interpretation,strat89explaining,Wasserman92}. According to A. Dempster’s seminal work \cite{aitchinson68discussion}, a belief function is a lower probability measure induced by the application of a multi-valued mapping to a classical probability distribution. The term `belief function' was coined when Shafer proposed to adopt these mathematical objects to represent evidence in the framework of subjective probability, and gave an axiomatic definition for them as non-additive probability measures. In a rather controversial interpretation, belief functions can be also seen as a special case of credal set, for each of them determines a convex set of probabilities `dominating' its belief values.

Belief functions carried by different bodies of evidence can be combined using the so-called \emph{Dempster's rule}, a direct generalization of classical Bayes' rule. This combination rule is an attractive tool which has made the fortune of the theory of evidence, for it allows us to merge different sources of information prior to making decisions or estimating a quantity of interest. Many other combination rules have been proposed since, to address paradoxes generated by the indiscriminate application of Dempster's rule to all situations or to better suit cases in which the sources of evidence to combine are not independent or entirely reliable (as requested by the original combination rule).


\subsection*{Why a theory of evidence?} \label{sec:why}

Despite its success the theory of evidence, along with other non-`mainstream' uncertainty theories, is often the object of a recurring criticism: why investing effort and intellectual energy on learning a new and (arguably) rather more complex formalism only to satisfy some admittedly commendable philosophical curiosity? The implication being that classical probability theory is powerful enough to tackle any real-world application. Indeed people are often willing to acknowledge the greater naturalness of evidential solutions to a variety of problems, but tend also to point out belief calculus' issue with computational complexity while failing to see its practical edge over more standard solutions. 

Indeed, as we will see in this Book, the theory of belief functions does address a number of complications associated with the mathematical description of the uncertainty arising from the presence of partial or missing evidence (also called `ignorance'). It makes use of all and only the available (partial) evidence. It represents ignorance in a natural way, by assigning `mass' to entire sets of outcomes (in our jargon `focal elements'). It explicitly deals with the representation of evidence and uncertainty on domains that, while all being related to the same problem, remain distinct. It copes with missing data in the most natural of ways. As a matter of fact it has been shown that, when part of the data used to estimate a desired probability distribution is missing, the resulting constraint is a credal set of the type associated with a belief function \cite{zaffalon04incomplete}. Furthermore, evidential reasoning is a straightforward generalization of probability theory, one which does not require abandoning the notion of event (as is the case for Walley's imprecise probability theory). It contains as special cases both fuzzy set and possibility theory.

In this Book we will also demonstrate that belief calculus has the potential to suggest novel and arguably more `natural' solutions to real-world problems, in particular within the field computer vision, while significantly pushing the boundaries of its mathematical foundations.\\ 
A word of caution. We will neglect here almost completely the evidential interpretation of belief functions (i.e., the way Shafer's `weights of the evidence' induce degrees of belief), while mostly focussing on their mathematical nature of generalized probabilities. We will not attempt to participate in the debate concerning the existence of \emph{a} correct approach to uncertainty theory. Our belief, supported by many scientists in this field (e.g. by Didier Dubois), is that uncertainty theories form a battery of useful complementary tools among which the most suitable must be chosen depending on the specific problem at hand.

\subsection*{Scope of the Book} \label{sec:scope}

The theory of evidence is still a relatively young field. For instance, a major limitation (in its original formulation) is its being tied to {finite} decision spaces, or `frames of discernment', although a number of efforts have been brought forward to generalize belief calculus to continuous domains (see Chapter \ref{cha:state}). With this Book we wish to contribute to the completion of belief theory's mathematical framework, whose greater complexity (when compared to standard probability theory) is responsible, in addition, for the existence of a number of problems which do not have any correspondence in Bayesian theory.

We will introduce and discuss theoretical advances concerning the {geometrical} and {algebraic} properties of belief functions and the domains they are defined on, and formalize in the context of the theory of evidence a well-known notion of probability theory -- that of \emph{total function}. In the longer term, our effort is directed towards equipping belief calculus with notions analogous to those of {filtering} and {random process} in probability theory. Such tools are widely employed in all fields of applied science, and their development is, in our view, crucial to making belief calculus a valid alternative to the more classical Bayesian formalism.

We will show how these theoretical advances arise from the formulation of evidential solutions to classical computer vision problems. We believe this may introduce a novel perspective into a discipline that, in the last twenty years, has had the tendency to reduce to the application of kernel-based support vector machine classification to images and videos.

\subsection*{Outline of the Book} \label{sec:outline}

Accordingly, this Book is divided into three Parts. 

In Part I we recall the core definitions and the rationale of the theory of evidence (Chapter \ref{cha:toe}), along with the notions necessary to the comprehension of what follows. As mentioned above, many theories have been formulated with the goal of integrating or replacing classical probability theory, their rationales ranging from the more philosophical to the strictly application-orientated. Several of these methodologies for the mathematical treatment of {uncertainty} are briefly reviewed in Chapter \ref{cha:state}. We will not, however, try to provide a comprehensive view of the matter, which is still evolving as we write.

Part II is the core of this work. Starting from Shafer's axiomatic formulation of the theory of belief functions, and motivated by the computer vision applications later discussed in Part III, we propose novel analyses of the geometrical and algebraic properties of belief functions as set functions, and of the domains they are defined on.\\
In particular, in Chapter \ref{cha:geo} the geometry of belief functions is investigated by analyzing the convex shape of the set of all the belief functions defined over the same frame of discernement (which we call `{belief space}'). The belief space admits two different geometric representations, either as a simplex (a higher-dimensional triangle) or as a (recursive) fiber bundle. It is there possible to give both a description of the effect of conditioning on belief functions and a geometric interpretation of Dempster's rule of combination itself. In perspective, this geometric approach has the potential to allow us to solve problems such as the canonical decomposition of a belief function in term of its simple support components (see Chapter \ref{cha:toe}). The problem of finding a {probability transformation} of belief functions that is respectful of the principles of the theory of evidence, and may be used to make decisions based on classical utility theory, also finds a natural formulation in this geometric setting.\\
Stimulated by the so-called `conflict' problem, i.e., the fact that each and every collection of belief functions (representing, for example, a set of image features in the pose estimation problem of Chapter \ref{cha:pose}) is not combinable, in Chapter \ref{cha:alg} we analyze the algebraic structure of families of compatible frames of discernment, proving that they form a special class of lattices. In Chapter \ref{cha:independence} we build on this lattice structure to study Shafer's notion of `independence of frames', seen as elements of a semimodular lattice. We relate independence of frames to classical matroidal independence, and outline a future solution to the conflict problem based on a pseudo-Gram-Schmidt orthogonalization algorithm.

In Part III we illustrate in quite some detail two computer vision applications whose solution originally stimulated the mathematical developments of Part II.\\ In Chapter \ref{cha:total} we propose an evidential solution to the {model-based data association} problem, in which correspondences between feature points in adjacent frames of a video associated with the positions of markers on a moving articulated object are sought at each time instant. Correspondences must be found in the presence of occlusions and missing data, which induce uncertainty in the resulting estimates. Belief functions can be used to integrate the logical information available whenever a topological model of the body is known, with the predictions generated by a battery of classical Kalman filters. In this context the need to combine {conditional} belief functions arises, leading to the evidential analogue of the classical total probability theorem. This is the first step, in our view, towards a theory of filtering in the framework of generalized probabilities. Unfortunately, only a partial solution to this \emph{total belief} problem is given in this Book, together with sensible predictions on the likely future directions of this investigation.

Chapter \ref{cha:pose} introduces an evidential solution to the problem of estimating the configuration or `pose' of an articulated object from images and videos, while solely relying on a training set of examples. A sensible approach consists in learning maps from image features to poses, using the information provided by the training set. We present therefore a `Belief Modeling Regression' (BMR) framework in which, given a test image, its feature measurements translate into a collection of belief functions on the set of training poses. These are then combined to yield a belief estimation, equivalent to an entire family of probability distributions. From the latter, either a single central pose estimate (together with a measure of its reliability) or a set of extremal estimates can be computed. We illustrate BMR's performance in an application to human pose recovery, showing how it outperforms our implementation of both Relevant Vector Machine and Gaussian Process Regression. We discuss motivation and advantages of the proposed approach with respect to its competitors, and outline an extension of this technique to fully-fledged tracking.

Finally, some reflections are proposed in the Conclusions of Part IV on the future of belief calculus, its relationships with other fields of pure and applied mathematics and statistics, and a number of future developments of the lines of research proposed in this Book.

\part{Belief calculus}

\chapter{Shafer's mathematical theory of evidence} \label{cha:toe}

\begin{center}
\includegraphics[width = 0.6 \textwidth]{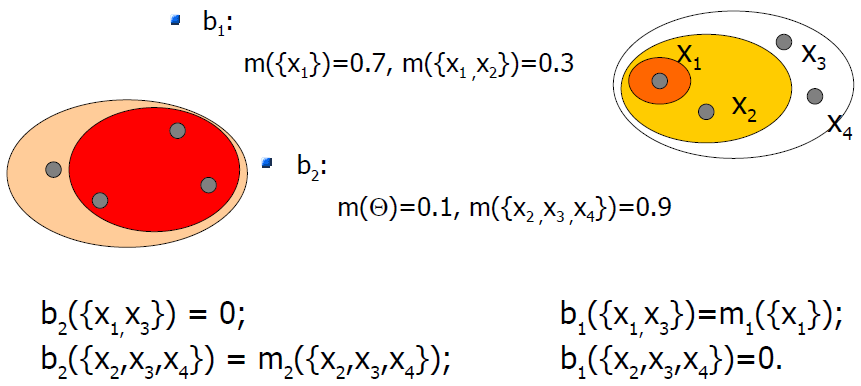}
\end{center}
\vspace{5mm}

The \emph{theory of evidence} \cite{Shafer76} was introduced in the Seventies by Glenn Shafer as a way of representing epistemic knowledge, starting from a sequence of seminal works (\cite{Dempster67}, \cite{Dempster68a}, \cite{Dempster69}) by Arthur Dempster, Shafer's advisor \cite{Dempster08b}. In this formalism the best representation of chance is a \emph{belief function} (b.f.) rather than a classical probability distribution. Belief functions assign probability values to \emph{sets} of outcomes, rather than single events: their appeal rests on their ability to naturally encode evidence in favor of {propositions}.\\ The theory embraces the familiar idea of assigning numbers between 0 and 1 to measure degrees of support but, rather than focusing on \emph{how} these numbers are determined, it concerns itself with the mechanisms driving the \emph{combination} of degrees of belief.\\ The formalism provides indeed a simple method for merging the evidence carried by a number of distinct sources (called \emph{Dempster's rule} \cite{hajek92deriving}), with no need for any prior distributions \cite{wilson91prior}. In this sense, according to Shafer, it can be seen as a theory of probable reasoning. The existence of different levels of granularity in knowledge representation is formalized via the concept of \emph{family of compatible frames}.

The most popular theory of probable reasoning is perhaps the \emph{Bayesian} framework \cite{bernardo94,dempster90bayes}, due to the English clergyman Thomas Bayes (1702-1761). There, all degrees of beliefs must obey the rule of chances (i.e., the proportion of time in which one of the possible outcomes of a random experiment tends to occur). Furthermore, its fourth rule dictates the way a Bayesian function must be updated whenever we learn that a certain proposition $A$ is true:
\begin{equation} \label{eq:bayes}
P(B|A) = \frac{P(B\cap A)}{P(A)}.
\end{equation}
This so-called \emph{Bayes' rule} is inextrably related to the notion of `conditional' probability $P(B|A)$ \cite{lewis76}. 

As we recall in this Chapter, the Bayesian framework is actually contained in the theory of evidence as a special case, since:
\begin{enumerate}
\item Bayesian functions form a special class of belief functions, and 
\item Bayes' rule is a special case of Dempster's rule of combination.
\end{enumerate}
In the following we will neglect most of the emphasis Shafer put on the notion of `weight of evidence', which in our view is not strictly necessary to the comprehension of what follows.

\section{Belief functions} \label{sec:belief-functions}

Let us first review the classical definition of probability measure, due to Kolmogorov \cite{kolmogorov}.

\subsection{Belief functions as superadditive measures}

\begin{definition}\label{def:prob}
A \emph{probability measure} over a $\sigma$-field or $\sigma$-algebra\footnote{\label{foot:sigma} Let $X$ be some set, and let $2^X$ represent its power set. Then a subset $\Sigma \subset 2^X$ is called a \emph{$\sigma$-algebra} if it satisfies the following three properties \cite{Rudin}:
\begin{itemize}
\item
$\Sigma$ is non-empty: there is at least one $A \subset X$ in $\Sigma$;
\item
$\Sigma$ is closed under complementation: if $A$ is in $\Sigma$, then so is its complement, $X \setminus A$;
\item
$\Sigma$ is closed under countable unions: if $A_1, A_2, A_3, ...$ are in $\Sigma$, then so is $A = A_1 \cup A_2 \cup A_3 \cup\cdots$.
\end{itemize}
From these properties, it follows that the $\sigma$-algebra is also closed under countable intersections (by De Morgan's laws).} $\mathbf{F}\subset 2^{\Omega}$ associated with a sample space
$\Omega$ is a function $p:\mathbf{F} \rightarrow [0,1]$ such that
\begin{itemize}
\item $p(\emptyset)=0$;
\item $p(\Omega)=1$;
\item if $A\cap B = \emptyset,\; A,B\in \mathbf{F}$ then $p(A\cup B)=p(A)+p(B)$
(\emph{additivity}).
\end{itemize}
\end{definition}

If we relax the third constraint to allow the function $p$ to meet additivity only as a lower bound, and restrict ourselves to finite sets, we obtain what Shafer \cite{Shafer76} called a \emph{belief function}.

\begin{definition}\label{def:bel1} Suppose $\Theta$ is a finite set, and let $2^{\Theta} = \{ A \subseteq \Theta \}$ denote the set of all subsets of $\Theta$. A \emph{belief function} (b.f.) on $\Theta$ is a function $b:2^{\Theta} \rightarrow [0,1]$ from the power set $2^\Theta$ to the real interval $[0,1]$ such that:
\begin{itemize}
\item $b(\emptyset)=0$;
\item $b(\Theta)=1$;
\item for every positive integer $n$ and for every collection $A_1,...,A_n\in 2^{\Theta}$ we have that:
\begin{equation} \label{eq:superadditivity}
\displaystyle b(A_1\cup ... \cup A_n)\geq \sum_i b(A_i) - \sum_{i<j} b(A_i\cap A_j)+ ... + (-1)^{n+1} b(A_1\cap ... \cap A_n) .
\end{equation}
\end{itemize}
\end{definition}

Condition (\ref{eq:superadditivity}), called \emph{superadditivity}, obviously generalizes Kolmogorov's additivity (Definition \ref{def:prob}). Belief functions can then be seen as {generalizations} of the familiar notion of (discrete) probability measure. The domain $\Theta$ on which a belief function is defined is usually interpreted as the set of possible answers to a given problem, exactly one of which is the correct one. For each subset (`event') $A\subset \Theta$ the quantity $b(A)$ takes on the meaning of \emph{degree of belief} that the truth lies in $A$.

\emph{Example: the Ming vase.} A simple example (from \cite{Shafer76}) can clarify the notion of degree of belief. We are looking at a vase that is represented as a product of the Ming dynasty, and we are wondering whether the vase is genuine. If we call $\theta_1$ the possibility that the vase is original, and $\theta_2$ the possibility that it is indeed counterfeited, then
\[
\Theta=\{\theta_1,\theta_2\}
\]
is the set of possible outcomes, and
\[
\big \{ \emptyset, \Theta,\{\theta_1\},\{\theta_2\} \big \}
\]
is the (power) set of all its subsets. A belief function $b$ over $\Theta$ will represent the degree of belief that the vase is genuine as $b(\{\theta_1\})$, and the degree of belief the vase is a fake as $b(\{\theta_2\})$ (note we refer to the \emph{subsets} $\{\theta_1\}$ and $\{\theta_2\}$). Axiom 3 of Definition \ref{def:bel1} poses a simple constraint over these degrees of belief, namely: $b(\{\theta_1\}) + b(\{\theta_2\}) \leq 1$. The belief value of the whole outcome space $\Theta$, therefore, represents evidence that cannot be committed to any of the two precise answers $\theta_1$ and $\theta_2$ and is therefore an indication of the level of uncertainty about the problem.

\subsection{Belief functions as set functions}

Following Shafer \cite{Shafer76} we call the finite set of possibilities/outcomes \emph{frame\footnote{For a note about the intuitionistic origin of this denomination see \emph{Rosenthal, Quantales and their applications} \cite{Rosenthal}.} of discernment} (FOD). 

\subsubsection{Basic probability assignment}

\begin{definition}\label{def:bpa}
A \emph{basic probability assignment} (b.p.a.) \cite{Augustin96} over a FOD $\Theta$ is a set function \cite{dubois86set,denneberg99interaction,dubois86set} $m : 2^\Theta\rightarrow[0,1]$ defined on the collection $2^\Theta$ of all subsets of $\Theta$ such that:
\[
m(\emptyset)=0, \; \sum_{A\subset\Theta} m(A)=1.
\]
\end{definition}
The quantity $m(A)$ is called the \emph{basic probability number} or `mass' \cite{kruse91tool,kruse91reasoning} assigned to $A$, and measures the belief committed exactly to $A\in 2^{\Theta}$. The elements of the power set $2^\Theta$ associated with non-zero values of $m$ are called the \emph{focal elements} of $m$ and their union is called its \emph{core}:
\begin{equation} \label{eq:core}
\mathcal{C}_m \doteq \bigcup_{A \subseteq \Theta : m(A)\neq 0} A.
\end{equation}
Now suppose that empirical evidence is available so that a basic probability assignment can be introduced over a specific FOD $\Theta$.
\begin{definition} \label{def:bel2}
The \emph{belief function} associated with a basic probability assignment $m : 2^\Theta\rightarrow[0,1]$ is the set function $b : 2^\Theta\rightarrow[0,1]$ defined as:
\begin{equation}\label{eq:belief}
b(A) = \sum_{B\subseteq A} m(B).
\end{equation}
\end{definition}
It can be proven that \cite{Shafer76}:
\begin{proposition}
Definitions \ref{def:bel1} and \ref{def:bel2} are equivalent formulations of the notion of belief function.
\end{proposition}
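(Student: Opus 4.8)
The statement is an equivalence, so the plan is to prove both implications, the common combinatorial engine being \emph{M\"obius inversion} over the Boolean lattice $(2^\Theta,\subseteq)$. As a preliminary I would record the purely formal fact that, for finite $\Theta$, any set function $f:2^\Theta\to\mathbb{R}$ and its ``zeta transform'' $g(A)=\sum_{B\subseteq A}f(B)$ determine one another through $f(A)=\sum_{B\subseteq A}(-1)^{|A\setminus B|}g(B)$; this requires no positivity and follows from the standard observation that the alternating sum $\sum_{C\subseteq B\subseteq A}(-1)^{|B\setminus C|}$ equals $(1-1)^{|A\setminus C|}$, hence vanishes unless $C=A$.

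For the implication ``Definition \ref{def:bel2} $\Rightarrow$ Definition \ref{def:bel1}'', let $m$ be a b.p.a. and put $b(A)=\sum_{B\subseteq A}m(B)$. The conditions $b(\emptyset)=m(\emptyset)=0$ and $b(\Theta)=\sum_{B\subseteq\Theta}m(B)=1$ are immediate. For the superadditivity inequality \eqref{eq:superadditivity} I would substitute $b(A)=\sum_{B\subseteq A}m(B)$ into both sides and compare, for each $B\in 2^\Theta$, the coefficient with which $m(B)$ occurs. On the left-hand side this coefficient is $1$ precisely when $B\subseteq A_1\cup\cdots\cup A_n$ and $0$ otherwise; on the right-hand side, writing $J=\{i:B\subseteq A_i\}$ and expanding each $b\big(\bigcap_{i\in I}A_i\big)$, inclusion--exclusion collapses the coefficient of $m(B)$ to $\sum_{\emptyset\neq I\subseteq J}(-1)^{|I|+1}$, which is $1$ when $J\neq\emptyset$ and $0$ when $J=\emptyset$. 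Subtracting, the left-hand side minus the right-hand side equals the sum of $m(B)$ over those $B$ with $B\subseteq\bigcup_i A_i$ but $B\not\subseteq A_i$ for every $i$; since $m\geq 0$ this is nonnegative, which is exactly \eqref{eq:superadditivity}.

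For the converse ``Definition \ref{def:bel1} $\Rightarrow$ Definition \ref{def:bel2}'', let $b$ satisfy Definition \ref{def:bel1} and define $m(A)=\sum_{B\subseteq A}(-1)^{|A\setminus B|}b(B)$. The preliminary M\"obius identity then gives $b(A)=\sum_{B\subseteq A}m(B)$ for free, and the requirements of Definition \ref{def:bpa} reduce to $m(\emptyset)=b(\emptyset)=0$ and $\sum_{A\subseteq\Theta}m(A)=b(\Theta)=1$. The substantial point --- and the step I expect to be the main obstacle --- is the nonnegativity $m(A)\geq 0$. For $|A|\leq 1$ it is immediate from $0\leq b\leq 1$. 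For $A=\{a_1,\dots,a_n\}$ with $n\geq 2$ I would apply \eqref{eq:superadditivity} to the particular family $A_i:=A\setminus\{a_i\}$: here $\bigcup_i A_i=A$ and $\bigcap_{i\in I}A_i=A\setminus\{a_i:i\in I\}$, so reindexing the right-hand side of \eqref{eq:superadditivity} by $B=A\setminus\{a_i:i\in I\}$ --- a bijection between the nonempty $I\subseteq\{1,\dots,n\}$ and the proper subsets $B\subsetneq A$, with $|I|=|A\setminus B|$ --- rewrites that inequality as $\sum_{B\subseteq A}(-1)^{|A\setminus B|}b(B)\geq 0$, i.e.\ precisely $m(A)\geq 0$. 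The only delicate bookkeeping is keeping the sign $(-1)^{|I|+1}=-(-1)^{|A\setminus B|}$ and the summation range straight in this reindexing; once that is in place, both implications are established and Definitions \ref{def:bel1} and \ref{def:bel2} are seen to be equivalent.
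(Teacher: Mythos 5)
Your proof is correct and follows essentially the standard argument (the one in Shafer's book, to which the paper defers without giving its own proof): coefficient-counting of $m(B)$ via inclusion--exclusion for the direction Definition \ref{def:bel2} $\Rightarrow$ Definition \ref{def:bel1}, and applying superadditivity to the family $A_i = A\setminus\{a_i\}$ to obtain $m(A)\geq 0$ for the converse. Both the M\"obius inversion bookkeeping and the reindexing $I \mapsto A\setminus\{a_i : i\in I\}$ are handled correctly, so nothing is missing.
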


The intuition behind the notion of belief function is now clearer: $b(A)$ represents the \emph{total} belief committed to a set of possible outcomes $A$ by the available evidence $m$. As the Ming vase example illustrates, belief functions readily lend themselves to the representation of ignorance, in the form of the mass assigned to the whole set of outcomes (FOD). Indeed, the simplest belief function assigns all the basic probability to the whole frame $\Theta$ and is called \emph{vacuous} belief function. 

Bayesian theory, in comparison, has trouble with the whole idea of encoding ignorance, for it cannot distinguish between `lack of belief' in a certain event $A$ ($1 - b(A)$ in our notation) and `disbelief' (the belief in the negated event $\bar{A} = \Theta \setminus A$). This is due to the additivity constraint: $P(A) + P(\bar{A}) = 1$.\\ The Bayesian way of representing the complete absence of evidence is to assign an equal degree of belief to every outcome in $\Theta$. As we will see in this Chapter, Section \ref{sec:priors}, this generates incompatible results when considering different descriptions of the same problem at different levels of granularity.

\subsubsection{Moebius inversion formula}

Given a belief function $b$ there exists a unique basic probability assignment which induces it. The latter can be recovered by means of the
\emph{Moebius inversion formula}\footnote{See \cite{Stern} for an explanation in term of the theory of monotone functions over partially ordered sets.}:
\begin{equation} \label{eq:moebius}
m(A)=\sum_{B\subset A} (-1)^{|A \setminus B|}b(B).
\end{equation}
Expression (\ref{eq:moebius}) establishes a 1-1 correspondence between the two set functions $m$ and $b$ \cite{grabish06moebius}.

\subsection{Plausibility functions or upper probabilities}

Other expressions of the evidence generating a given belief function $b$ are what can be called the \emph{degree of doubt} $d(A) \doteq b(\bar{A})$ on an event $A$ and, more importantly, the \emph{upper probability} of $A$:
\begin{equation}\label{eq:upper-probability}
pl(A) \doteq 1 - d(A) = 1 - b(\bar{A}),
\end{equation}
as opposed to the \emph{lower probability} of $A$, i.e., its belief value $b(A)$.
The quantity $pl(A)$ expresses the `plausibility' of a proposition $A$ or, in other words, the amount of evidence \emph{not against} $A$ \cite{cuzzolin10ida}. Once again the \emph{plausibility function} $pl : 2^\Theta \rightarrow [0,1]$ conveys the same information as $b$, and can be expressed as
\[
pl(A) = \sum_{B\cap A\neq \emptyset} m(B) \geq b(A).
\]

\subsubsection{Example} \label{sec:example-belief-function}

As an example, suppose a belief function on a frame $\Theta = \{ \theta_1, \theta_2, \theta_3 \}$ of cardinality three has two focal elements $B_1 = \{ \theta_1, \theta_2 \}$ and $B_2 = \{ \theta_1 \}$ as in Figure \ref{fig:classes}, with b.p.a. $m(B_1) = 1/3$, $m(B_2) = 2/3$. 

Then, for instance, the belief value of $A = \{ \theta_1, \theta_3 \}$ is: 
\begin{equation} \label{eq:example-belief-value}
b(A) = \sum_{B\subseteq \{ \theta_1, \theta_3 \}} m(B) = m(\{ \theta_1 \}) = 2/3, 
\end{equation}
while $b(\{ \theta_2 \}) = m(\{ \theta_2 \}) = 0$ and $b(\{ \theta_1, \theta_2 \}) = m(\{ \theta_1 \}) + m(\{ \theta_1, \theta_2 \}) = 2/3 + 1/3 = 1$ (so that the `core' of the considered belief function is $\mathcal{C} = \{ \theta_1, \theta_2 \}$).
\begin{figure}[ht!]
\begin{center}
\includegraphics[width = 0.5 \textwidth]{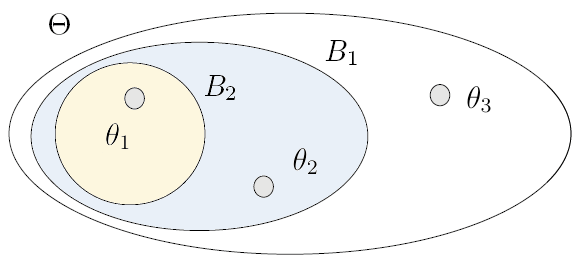} 
\caption{ \label{fig:classes} An example of (consonant, see Section \ref{sec:consonant}) belief function on a frame of discernment $\Theta = \{ \theta_1, \theta_2, \theta_3 \}$ of cardinality 3, with focal elements $B_2 = \{ \theta_1 \} \subset B_1 = \{ \theta_1, \theta_2 \}$. }
\end{center}
\end{figure}

To appreciate the difference between belief (lower probability) and plausibility (upper probability) values, let us focus in particular on the event $A' = \{ \theta_1, \theta_3 \}$. Its belief value (\ref{eq:example-belief-value}) represents the amount of evidence which \emph{surely supports} $\{ \theta_1, \theta_3 \}$, and is guaranteed to involve \emph{only} elements of $A'$.

On the other side, its plausibility value:
\[
pl (\{\theta_1, \theta_3 \}) = 1 - b(\{ \theta_1, \theta_3 \}^c) = \sum_{B \cap \{\theta_1, \theta_3\} \neq \emptyset } m (B) = m(\{ \theta_1 \}) + m(\{ \theta_1, \theta_2 \}) = 1
\]
accounts for the mass that \emph{might} be assigned to some element of $A'$, and measures the evidence \emph{not surely against} it. 

\subsection{Bayesian theory as a limit case} \label{sec:bayesian-belief-functions}

Confirming what said when discussing the superadditivity axiom (\ref{eq:superadditivity}), in the theory of evidence a (finite) probability function is simply a belief function satisfying the additivity rule for disjoint sets.
\begin{definition}
A \emph{Bayesian} belief function $b: 2^\Theta \rightarrow [0,1]$ meets the additivity condition:
\[
b(A) + b(\bar{A}) = 1
\]
whenever $A \subseteq \Theta$.
\end{definition}
Obviously, as it meets the axioms of Definition \ref{def:bel1}, a Bayesian belief function is indeed a belief function. It can be proved that \cite{Shafer76}:
\begin{proposition}
A belief function $b: 2^\Theta \rightarrow [0,1]$ is Bayesian if and only if $\exists \ p:\Theta\rightarrow [0,1]$ such that $\sum_{\theta\in\Theta} p(\theta)=1$ and:
\[
b(A) = \sum_{\theta\in A}p(\theta)\;\;\;\forall A\subseteq \Theta.
\]
\end{proposition}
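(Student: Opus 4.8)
The plan is to prove the two implications separately, leaning on the one-to-one correspondence between a belief function $b$ and its basic probability assignment $m$ furnished by the Moebius inversion formula (\ref{eq:moebius}).

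The ``if'' direction will be immediate. Assuming $b(A)=\sum_{\theta\in A}p(\theta)$ with $\sum_{\theta\in\Theta}p(\theta)=1$, I would simply observe that, since $A$ and $\bar{A}=\Theta\setminus A$ partition $\Theta$,
\[
b(A)+b(\bar{A})=\sum_{\theta\in A}p(\theta)+\sum_{\theta\in\bar{A}}p(\theta)=\sum_{\theta\in\Theta}p(\theta)=1,
\]
so $b$ is Bayesian; nothing else is needed, since $b$ is assumed to be a belief function from the outset.

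For the ``only if'' direction I would start from the unique b.p.a.\ $m$ inducing $b$ via (\ref{eq:belief}), recovered from $b$ through (\ref{eq:moebius}); recall that $m(B)\geq 0$ for all $B\subseteq\Theta$ and $\sum_{B\subseteq\Theta}m(B)=1$. The key step is to rewrite the Bayesian identity in terms of $m$: for every $A\subseteq\Theta$,
\[
1=b(A)+b(\bar{A})=\sum_{B\subseteq A}m(B)+\sum_{B\subseteq\bar{A}}m(B)=\sum_{B\subseteq\Theta}m(B)-\sum_{\substack{B\cap A\neq\emptyset\\ B\cap\bar{A}\neq\emptyset}}m(B),
\]
which forces $\sum\{m(B):B\cap A\neq\emptyset,\ B\cap\bar{A}\neq\emptyset\}=0$, and hence $m(B)=0$ for every $B$ meeting both $A$ and its complement (here the non-negativity of the masses is essential). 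Specializing this to the singletons $A=\{\theta\}$, $\theta\in\Theta$, shows that no focal element can contain $\theta$ alongside a second element, i.e.\ every focal element of $m$ is a singleton.

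It then remains to set $p(\theta)\doteq m(\{\theta\})$ and check the three required properties: $p$ is $[0,1]$-valued because $m$ is; $\sum_{\theta\in\Theta}p(\theta)=\sum_{B\subseteq\Theta}m(B)=b(\Theta)=1$ because only singletons carry mass; and, for the same reason, $b(A)=\sum_{B\subseteq A}m(B)=\sum_{\theta\in A}m(\{\theta\})=\sum_{\theta\in A}p(\theta)$ for every $A$, which is the claimed representation. I do not expect a genuine obstacle here: the whole argument turns on the single observation that imposing additivity $b(A)+b(\bar{A})=1$ merely for the \emph{singletons} already concentrates the mass on singletons. The only delicate point is the use of $m(B)\geq 0$, i.e.\ of the fact that the Moebius transform of a belief function is a legitimate (non-negative) b.p.a.; but this is precisely the equivalence of Definitions \ref{def:bel1} and \ref{def:bel2} recalled above, so it may be invoked directly.
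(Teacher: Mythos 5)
Your proof is correct. The paper itself gives no proof here — it simply cites Shafer's book — and your argument is precisely the standard one: Moebius inversion plus non-negativity of the masses shows that the additivity condition $b(A)+b(\bar{A})=1$, applied to singletons, forces every focal element to be a singleton, so $p(\theta)\doteq m(\{\theta\})$ does the job.
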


\section{Dempster's rule of combination} \label{sec:dempster}

Belief functions representing distinct bodies of evidence can be combined by means of \emph{Dempster's rule of combination} \cite{dubois86unicity}, also called \emph{orthogonal sum}.

\subsection{Definition}

\begin{definition} \label{def:dempster}
The \emph{orthogonal sum} $b_1 \oplus b_2 : 2^\Theta \rightarrow [0,1]$ of two belief functions $b_1 : 2^\Theta \rightarrow [0,1]$, $b_2 : 2^\Theta \rightarrow [0,1]$ defined on the same FOD $\Theta$ is the unique belief function on $\Theta$ whose focal elements are all the possible intersections of focal elements of $b_1$ and $b_2$, and whose basic probability assignment is given by:
\begin{equation} \label{eq:dempster}
\displaystyle m_{b_1 \oplus b_2}(A) = \frac{ \displaystyle \sum_{i,j : A_i\cap B_j=A} m_1(A_i) m_2(B_j)} { \displaystyle 1-\sum_{i,j :
A_i\cap B_j=\emptyset} m_1(A_i) m_2(B_j)},
\end{equation}
where $m_i$ denotes the b.p.a. of the input belief function $b_i$.
\end{definition}

Figure \ref{fig:dempdiagram} pictorially expresses Dempster's algorithm for computing the basic probability assignment of the combination $b_1 \oplus b_2$ of two belief functions. Let a unit square represent the total, unitary probability mass one can assign to subsets of $\Theta$, and associate horizontal and vertical strips with the focal elements $A_1,...,A_k$ and $B_1,...,B_l$ of $b_1$ and $b_2$, respectively. If their width is equal to their mass value, then their area is also equal to their own mass $m(A_i)$, $m(B_j)$. The area of the intersection of the strips related to any two focal elements $A_i$ and $B_j$ is then equal to the product $m(A_i)\cdot m(B_j)$, and is committed to the intersection event $A_i \cap B_j$. As more than one such rectangle can end up being assigned to the same subset $A$ (as different pairs of focal elements can have the same intersection) we need to sum up all these contributions, obtaining:
\[
m_{b_1 \oplus b_2}(A) \propto \sum_{i,j : A_i\cap B_j=A} m_1(A_i) m_2(B_j).
\]
Finally, as some of these intersections may be empty, we need to discard the quantity
\[
\sum_{i,j : A_i\cap B_j=\emptyset} m_1(A_i) m_2(B_j)
\]
by normalizing the resulting basic probability assignment, obtaining (\ref{eq:dempster}).

Note that, by Definition \ref{def:dempster} \emph{not all pairs of belief functions admit an orthogonal sum} -- two belief functions are combinable if and only if their cores (\ref{eq:core}) are not disjoint: $\mathcal{C}_1 \cap \mathcal{C}_2 \neq \emptyset$ or, equivalently, iff there exist a f.e. of $b_1$ and a f.e. of $b_2$ whose intersection is non-empty. $A_1 \; A_2 \; A_3 \; A_4 \; B_1 \; B_2 \; B_3$
\begin{figure}[ht!]
\begin{center}
\includegraphics[width = 0.53 \textwidth]{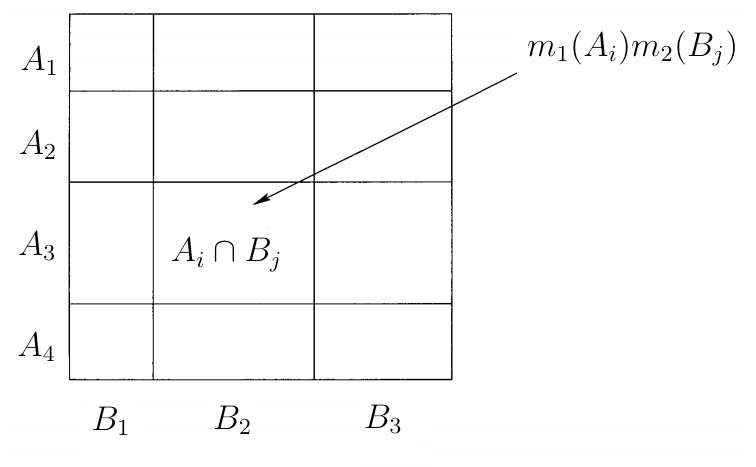}
\caption{\label{fig:dempdiagram} Graphical representation of Dempster's rule of combination: the sides of the square are divided into strips associated with the focal elements $A_i$ and $B_j$ of the belief functions $b_1$, $b_2$ to combine.}
\end{center}
\end{figure}

\begin{proposition}
\cite{Shafer76} If $b_1, b_2 : 2^\Theta \rightarrow [0,1]$ are two belief functions defined on the same frame $\Theta$, then the following conditions are equivalent:
\begin{itemize}
\item their Dempster's combination $b_1\oplus b_2$ does not exist;
\item their cores  (\ref{eq:core}) are disjoint, $\mathcal{C}_{b_1} \cap \mathcal{C}_{b_2} = \emptyset$;
\item $\exists A\subset \Theta$ s.t. $b_1(A) = b_2(\bar{A}) = 1$.
\end{itemize}
\end{proposition}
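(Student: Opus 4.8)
The plan is to route all three equivalences through one combinatorial condition on the focal elements of $b_1$ and $b_2$, namely that \emph{every} focal element of $b_1$ is disjoint from \emph{every} focal element of $b_2$.

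First I would unpack the non-existence of the orthogonal sum. By Definition~\ref{def:dempster} the combination $b_1\oplus b_2$ fails to exist exactly when the denominator in~(\ref{eq:dempster}) vanishes, i.e.\ when $\sum_{i,j:\,A_i\cap B_j=\emptyset} m_1(A_i)\,m_2(B_j)=1$. Since $m_1$ and $m_2$ are basic probability assignments, the unrestricted double sum satisfies $\sum_{i,j} m_1(A_i)\,m_2(B_j)=\big(\sum_i m_1(A_i)\big)\big(\sum_j m_2(B_j)\big)=1$, and all summands are non-negative; hence the restricted sum equals $1$ if and only if $m_1(A_i)\,m_2(B_j)=0$ for every pair with $A_i\cap B_j\neq\emptyset$, which (restricting attention to focal elements, for which the masses are positive) is exactly the condition that $A_i\cap B_j=\emptyset$ for all focal $A_i$ of $b_1$ and all focal $B_j$ of $b_2$. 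This gives the equivalence of the first bullet with the combinatorial condition. The equivalence with the second bullet is then immediate from the elementary identity $\mathcal{C}_{b_1}\cap\mathcal{C}_{b_2}=\big(\bigcup_i A_i\big)\cap\big(\bigcup_j B_j\big)=\bigcup_{i,j}(A_i\cap B_j)$, using the definition~(\ref{eq:core}) of the core as the union of the focal elements.

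For the third bullet I would first record the following simple fact, valid for any belief function $b$ with core $\mathcal{C}_b$: one has $b(A)=1$ if and only if $\mathcal{C}_b\subseteq A$. Indeed $b(A)=\sum_{B\subseteq A}m(B)$, and since the masses are non-negative and sum to $1$, this equals $1$ precisely when $m(B)=0$ for every $B\not\subseteq A$, i.e.\ when every focal element is contained in $A$, i.e.\ when $\mathcal{C}_b\subseteq A$. Applying this to $b_1$ and $b_2$, the existence of $A\subset\Theta$ with $b_1(A)=b_2(\bar A)=1$ is equivalent to the existence of a set $A$ with $\mathcal{C}_{b_1}\subseteq A$ and $\mathcal{C}_{b_2}\subseteq\bar A$. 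If such an $A$ exists then $\mathcal{C}_{b_1}\cap\mathcal{C}_{b_2}\subseteq A\cap\bar A=\emptyset$, which is the second bullet; conversely, assuming the second bullet, I would simply take $A=\mathcal{C}_{b_1}$, for which $\mathcal{C}_{b_1}\subseteq A$ trivially and $\mathcal{C}_{b_2}\subseteq\Theta\setminus\mathcal{C}_{b_1}=\bar A$ by disjointness --- noting that $A\neq\Theta$, since $\bar A=\emptyset$ would force the impossible $\mathcal{C}_{b_2}=\emptyset$.

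I do not expect a genuine obstacle: each step is a short counting or set-theoretic argument. The only points needing a little care are the normalization remark in the first step (that the \emph{unrestricted} sum of the products $m_1(A_i)m_2(B_j)$ is exactly $1$, so that ``restricted sum $=1$'' really does force all admissible intersections to be empty), and, in the last step, checking that the separating set $A$ produced from the disjoint cores can indeed be chosen as a proper subset of $\Theta$.
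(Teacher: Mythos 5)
Your proof is correct, and it follows essentially the same route the paper itself sketches just before the proposition (combinability fails iff no focal element of $b_1$ meets a focal element of $b_2$, which is then tied to the cores and to the separating event $A = \mathcal{C}_{b_1}$). The two points you flag as needing care — that the unrestricted double sum of mass products equals $1$, and that the separating set is a proper subset because cores are non-empty — are exactly the right ones, and both are handled correctly.
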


\subsubsection{Example of Dempster's combination}

\begin{figure}[ht!]
\begin{center}
\begin{tabular}{c}
\includegraphics[width = 0.7 \textwidth]{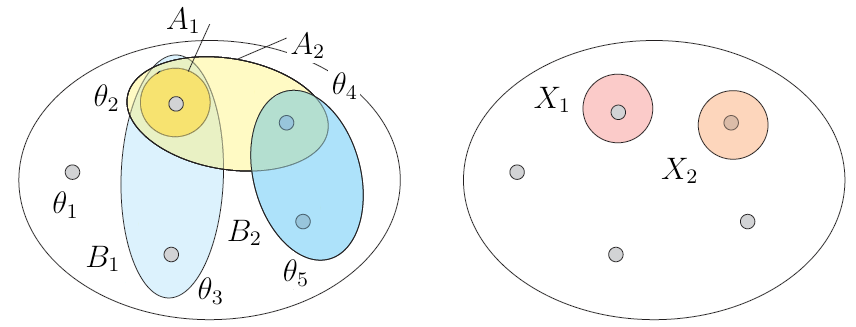} 
\end{tabular}
\caption{ \label{fig:example-dempster} Example of Dempster's sum. The belief functions $b_1$ with focal elements $A_1,A_2$ and $b_2$ with f.e.s $B_1,B_2$ (left) are combinable via Dempster's rule. This yields a new belief function $b_1\oplus b_2$ (right) with focal elements $X_1$ and $X_2$.}
\end{center}
\end{figure}

Consider a frame of discernment $\Theta = \{ \theta_1, \theta_2, \theta_3, \theta_4, \theta_5 \}$. We can define there a belief function $b_1$ with basic probability assignment: 
\[
\begin{array}{cc}
m_1(\{ \theta_2 \}) = 0.7, & m_1(\{ \theta_2, \theta_4 \}) = 0.3. 
\end{array}
\]
Such a b.f. has then two focal elements $A_1 = \{ \theta_2 \}$ and $A_2 = \{ \theta_2, \theta_4 \}$. As an example, its belief values on the events $\{\theta_4\}$, $\{\theta_2,\theta_5\}$, $\{\theta_2,\theta_3,\theta_4\}$ are respectively $b_1(\{\theta_4\}) = m_1(\{\theta_4\})=0$, $b_1(\{\theta_2,\theta_5\}) = m_1(\{\theta_2\}) + m_1(\{\theta_5\}) + m_1(\{\theta_2,\theta_5\}) = 0.7 + 0 + 0 = 0.7$ and $b_1(\{\theta_2,\theta_3,\theta_4\}) = m_1(\{\theta_2\}) + m_1(\{\theta_2,\theta_4\}) = 0.7 + 0.3 = 1$ (so that the core of $b_1$ is $\{\theta_2,\theta_4\}$). 

Now, let us introduce another belief function $b_2$ on the same FOD, with b.p.a.: 
\[
\begin{array}{cc}
m_2(B_1) = m_2(\{ \theta_2, \theta_3 \}) = 0.6, & m_2(B_2) = m_2(\{ \theta_4, \theta_5 \}) = 0.4.
\end{array}
\]

The pair of belief functions are combinable, as their cores $\mathcal{C}_1 = \{\theta_2,\theta_4\}$ and $\mathcal{C_2} = \{ \theta_2, \theta_3, \theta_4, \theta_5 \}$ are clearly not disjoint.

Dempster's combination (\ref{eq:dempster}) yields a new belief function on the same FOD, with focal elements (Figure \ref{fig:example-dempster}-right) $X_1 = \{ \theta_2 \} = A_1 \cap B_1 = A_2 \cap B_1$ and $X_2 = \{\theta_4\} = A_2 \cap B_2$ and b.p.a.: 
\[
\begin{array}{lll}
m(X_1) & = & \displaystyle \frac{m_1(\{ \theta_2 \}) \cdot m_2(\{ \theta_2, \theta_3 \}) + m_1(\{ \theta_2, \theta_4 \}) \cdot m_2(\{ \theta_2, \theta_3 \}) }{1 - m_1(\{ \theta_2 \}) \cdot m_2(\{ \theta_4, \theta_5 \})} = \frac{0.7 \cdot 0.6 + 0.3 \cdot 0.6}{1 - 0.7 \cdot 0.4} = 5/6, \\ \\ m(X_2) & = & \displaystyle \frac{m_1(\{ \theta_2, \theta_4 \}) \cdot m_2(\{ \theta_4, \theta_5 \})}{1 - m_1(\{ \theta_2 \}) \cdot m_2(\{ \theta_4, \theta_5 \})} = \frac{0.3 \cdot 0.4}{1 - 0.7 \cdot 0.4} = 1/6.
\end{array}
\]

Note that the resulting b.f. $b_1 \oplus b_2$ is Bayesian.

\subsection{Weight of conflict} \label{sec:weight-of-conflict}

The normalization constant in (\ref{eq:dempster}) measures the \emph{level of conflict} between the two input belief
functions, for it represents the amount of evidence they attribute to contradictory (i.e., disjoint) subsets.
\begin{definition}\label{def:conf}
We call \emph{weight of conflict} $\mathcal{K}(b_1,b_2)$ between two belief functions $b_1$ and $b_2$ the logarithm of the normalisation constant in their Dempster's combination:
\[
\mathcal{K} = \log\frac{1}{1-\sum_{i,j : A_i\cap B_j = \emptyset} m_1(A_i) m_2(B_j)}.
\]
\end{definition}

Dempster's rule can be trivially generalised to the combination of $n$ belief functions. It is interesting to note that, in that case, weights of conflict combine additively.
\begin{proposition}
Suppose $b_1,...,b_{n+1}$ are belief functions defined on the same frame $\Theta$, and assume that $b_1 \oplus \cdots \oplus b_{n+1}$ exist. Then:
\[
\mathcal{K}(b_1,...,b_{n+1}) = \mathcal{K}(b_1,..., b_{n}) + \mathcal{K}(b_1 \oplus \cdots \oplus b_{n}, b_{n+1}).
\]
\end{proposition}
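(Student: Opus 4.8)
The plan is to reduce the statement to a purely multiplicative identity between the normalization constants of Dempster's rule and then take logarithms. For belief functions $b_1,\dots,b_k$ on $\Theta$ with focal elements $\{A^{s}_i\}_i$ (for $b_s$), write
\[
\kappa(b_1,\dots,b_k) \doteq \sum_{\substack{i_1,\dots,i_k \\ A^1_{i_1}\cap\cdots\cap A^k_{i_k}=\emptyset}} m_1(A^1_{i_1})\cdots m_k(A^k_{i_k}),
\]
so that by Definition~\ref{def:conf} one has $\mathcal{K}(b_1,\dots,b_k)=\log\frac{1}{1-\kappa(b_1,\dots,b_k)}$, and the hypothesis that $b_1\oplus\cdots\oplus b_{n+1}$ exists guarantees every such $1-\kappa$ is strictly positive. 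Since $\log\frac{1}{1-x}+\log\frac{1}{1-y}=\log\frac{1}{(1-x)(1-y)}$, the claim is equivalent to
\[
1-\kappa(b_1,\dots,b_{n+1}) = \bigl(1-\kappa(b_1,\dots,b_n)\bigr)\bigl(1-\kappa(b_1\oplus\cdots\oplus b_n,\,b_{n+1})\bigr).
\]

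First I would record the one-shot description of the $n$-fold orthogonal sum. By associativity of $\oplus$, $b' \doteq b_1\oplus\cdots\oplus b_n$ has as focal elements the nonempty intersections $C=A^1_{i_1}\cap\cdots\cap A^n_{i_n}$ of focal elements of the $b_s$, with
\[
m'(C) = \frac{1}{1-\kappa(b_1,\dots,b_n)}\sum_{\substack{i_1,\dots,i_n \\ A^1_{i_1}\cap\cdots\cap A^n_{i_n}=C}} m_1(A^1_{i_1})\cdots m_n(A^n_{i_n}).
\]
This is the standard fact that iterating Dempster's rule amounts to forming all products of masses over $n$-tuples of focal elements, assigning each product to the corresponding intersection, and normalizing once by discarding the total product-mass sitting on empty intersections; I would either cite it from \cite{Shafer76} or establish it by a short induction on $n$ from Definition~\ref{def:dempster}. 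The case $n=1$ of the proposition is then immediate from the definitions, which also anchors an inductive phrasing if preferred.

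Next I would expand $\kappa(b',b_{n+1})=\sum_{C,\,j:\,C\cap B_j=\emptyset} m'(C)\,m_{n+1}(B_j)$ by substituting the formula for $m'(C)$ and re-indexing over tuples $(i_1,\dots,i_n,j)$, obtaining that $(1-\kappa(b_1,\dots,b_n))\,\kappa(b',b_{n+1})$ equals the sum of $m_1(A^1_{i_1})\cdots m_n(A^n_{i_n})\,m_{n+1}(B_j)$ over all tuples with $A^1_{i_1}\cap\cdots\cap A^n_{i_n}\neq\emptyset$ but $(A^1_{i_1}\cap\cdots\cap A^n_{i_n})\cap B_j=\emptyset$. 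On the other hand $\kappa(b_1,\dots,b_{n+1})$ is the same sum of products over all tuples with $A^1_{i_1}\cap\cdots\cap A^n_{i_n}\cap B_j=\emptyset$. Splitting this index set according to whether $A^1_{i_1}\cap\cdots\cap A^n_{i_n}$ is itself empty, and using that $m_{n+1}$ is a b.p.a. (so $\sum_j m_{n+1}(B_j)=1$) to collapse the "already empty at stage $n$" part to $\kappa(b_1,\dots,b_n)$, yields
\[
\kappa(b_1,\dots,b_{n+1}) = \kappa(b_1,\dots,b_n) + \bigl(1-\kappa(b_1,\dots,b_n)\bigr)\,\kappa(b',b_{n+1}).
\]
Rearranging gives the displayed multiplicative identity for $1-\kappa$, and taking logarithms finishes the proof.

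\textbf{Main obstacle.} There is no analytic difficulty here; the work is entirely combinatorial bookkeeping. The delicate points are (i) getting the one-shot form of $b_1\oplus\cdots\oplus b_n$ exactly right, with the single normalization constant $1-\kappa(b_1,\dots,b_n)$, and (ii) carefully partitioning the summation index set defining $\kappa(b_1,\dots,b_{n+1})$ so that the tuples already empty at stage $n$ factor out precisely as $\kappa(b_1,\dots,b_n)\cdot\sum_j m_{n+1}(B_j)=\kappa(b_1,\dots,b_n)$. Once the identity $\kappa(b_1,\dots,b_{n+1})=\kappa(b_1,\dots,b_n)+(1-\kappa(b_1,\dots,b_n))\,\kappa(b',b_{n+1})$ is in place, the conclusion is one line.
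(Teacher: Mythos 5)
Your proof is correct, and since the paper states this proposition without proof (deferring to Shafer's treatment), there is nothing to compare against except the standard argument — which is exactly what you give. The reduction to the multiplicative identity $1-\kappa(b_1,\dots,b_{n+1}) = (1-\kappa(b_1,\dots,b_n))(1-\kappa(b',b_{n+1}))$ via the partition of $(n+1)$-tuples according to whether $A^1_{i_1}\cap\cdots\cap A^n_{i_n}$ is already empty is the canonical route, and the bookkeeping you describe goes through. One small structural remark: the one-shot formula for $b_1\oplus\cdots\oplus b_n$ (with single normalization constant $1-\kappa(b_1,\dots,b_n)$) and the multiplicative identity are best established \emph{simultaneously} in a single induction on $n$ — the partition argument at stage $n$ uses only the stage-$n$ one-shot formula and then delivers both the identity and the correct normalization constant for stage $n+1$ — so you should phrase the induction that way rather than citing the one-shot formula as a prior standalone fact, to avoid any appearance of circularity.
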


\subsection{Conditioning belief functions}

Dempster's rule describes the way the assimilation of new evidence $b'$ changes our beliefs previously encoded by
a belief function $b$, determining new belief values given by $b \oplus b'(A)$ for all events $A$. In this formalism, \emph{a new body of evidence is not constrained to be in the form of a single proposition $A$ known with certainty}, as it happens in Bayesian theory.\\ Yet, the incorporation of new certainties is permitted as a special case. In fact, this special kind of evidence is represented by belief functions of the form:
\[
b'(A) = \left\{ \begin{array}{ll} 1 & if B\subset A\\ \\ 0 & if B \not \subset A \end{array} \right. ,
\]
where $B$ is the proposition known with certainty. Such a belief function is combinable with the original b.f. $b$ as long as $b(\bar{B}) < 1$, and the result has the form:
\[
b(A | B) \doteq b \oplus b' = \frac{b(A\cup \bar{B}) - b(\bar{B})}{1 - b(\bar{B})}
\]
or, expressing the result in terms of upper probabilities/plausibilities (\ref{eq:upper-probability}):
\begin{equation} \label{eq:dempster-conditioning}
pl(A | B) = \frac{pl(A \cap B)}{pl(B)}.
\end{equation}
Expression (\ref{eq:dempster-conditioning}) strongly reminds us of Bayes's rule of conditioning (\ref{eq:bayes}) -- Shafer calls it \emph{Dempster's rule of conditioning}.

\subsection{Combination vs conditioning}

Dempster's rule (\ref{eq:dempster}) is clearly symmetric in the role assigned to the two pieces of evidence $b$ and $b'$ (due to the commutativity of set-theoretical intersection). In Bayesian theory, instead, we are constrained to represent new evidence as a true proposition, and condition a Bayesian prior probability on that proposition. There is no obvious symmetry, but even more importantly {we are forced to assume that the consequence of any new piece of evidence is to support a single proposition with certainty}!

\section{Simple and separable support functions} \label{sec:separable}

In the theory of evidence a body of evidence (a belief function) usually supports more than one proposition (subset) of a frame of discernment. The simplest situation, however, is that in which the evidence points \emph{to a single non-empty subset} $A\subset \Theta$.\\
Assume $0\leq \sigma \leq 1$ is the degree of support for $A$. Then, the degree of support for a generic subset $B\subset\Theta$ of the frame is given by:
\begin{equation}\label{eq:simple}
b(B) = \left\{\begin{array}{ll} 0 & if \; B\not\supset A\\ \\ \sigma & if \; B\supset A,\;B\neq\Theta\\ \\ 1 & if \; B = \Theta.
\end{array} \right.
\end{equation}

\begin{definition} \label{def:simple-support-function}
The belief function $b : 2^{\Theta} \rightarrow [0,1]$ defined by Equation (\ref{eq:simple}) is called a \emph{simple support function} focused on $A$. Its basic probability assignment is: $m(A) = \sigma$, $m(\Theta) = 1 - \sigma$ and $m(B) = 0$ for every other $B$.
\end{definition}

\subsection{Heterogeneous and conflicting evidence}

We often need to combine evidence pointing towards different subsets, $A$ and $B$, of our frame of discernment. When
$A\cap B\neq \emptyset$ these two propositions are compatible, and we say that the associated belief functions represent \emph{heterogeneous} evidence.\\ In this case, if $\sigma_1$ and $\sigma_2$ are the masses committed respectively to $A$ and $B$ by two simple support functions $b_1$ and $b_2$, we have that their Dempster's combination has b.p.a.:
\[
m(A\cap B)=\sigma_1 \sigma_2, \;m(A)=\sigma_1(1-\sigma_2),\;m(B)=\sigma_2(1-\sigma_1),\;m(\Theta)=(1-\sigma_1)(1-\sigma_2).
\]
Therefore, the belief values of $b = b_1 \oplus b_2$ are as follows:
\[
b(X) = b_1 \oplus b_2 (X) = \left\{\begin{array}{ll} 0 & X\not\supset A\cap B\\ \\ \sigma_1\sigma_2 & X\supset A\cap B,\;X\not\supset A,B\\ \\ \sigma_1 & X\supset A,\; X\not\supset B\\ \\ \sigma_2 & X\supset B,\; X\not\supset A\\ \\ 1-
(1-\sigma_1)(1-\sigma_2) & X\supset A,B,\;X\neq \Theta\\ \\ 1 & X=\Theta.
\end{array} \right.
\]
As our intuition would suggest, the combined evidence supports $A\cap B$ with degree $\sigma_1\sigma_2$.

When the two propositions have empty intersection $A\cap B = \emptyset$, instead, we say that the evidence is \emph{conflicting}. In this situation the two bodies of evidence contrast the effect of each other.\\
The following example is also taken from \cite{Shafer76}.

\subsubsection{Example: the alibi}

A criminal defendant has an alibi: a close friend swears that the defendant was visiting his house at the time of the crime. This friend has a good reputation: suppose this commits a degree of support of $1/10$ to the innocence of the defendant ($I$). On the other side, there is a strong, actual body of evidence providing a degree of support of $9/10$ for his guilt ($G$).

To formalize this case we can build a frame of discernment $\Theta=\{G,I\}$, so that the defendant's friend provides a simple support function focused on $\{I\}$ with $b_I(\{I\}) = 1/10$, while the hard piece of evidence corresponds to another simple support function $b_G$ focused on $\{G\}$ with $b_G(\{G\}) = 9/10$.\\ Their orthogonal sum $b = b_I \oplus b_G$ yields then:
\[
b(\{I\}) = 1/91,\hspace{5mm} b(\{G\}) = 81/91.
\]
The effect of the testimony has mildly eroded the force of the circumstantial evidence.

\subsection{Separable support functions and decomposition}

In general, belief functions can support more than one proposition at a time.\\ The next simplest class of b.f.s is that of `separable support functions'.

\begin{definition} \label{def:separable-support-function}
A \emph{separable support function} $b$ is a belief function that is either simple or equal to the orthogonal sum of two or more simple support functions, namely:
\[
b = b_1 \oplus \cdots \oplus b_n,
\]
where $n \geq 1$ and $b_i$ is simple $\forall \; i=1,...,n$.
\end{definition}
\noindent A separable support function $b$ can be decomposed into simple support functions in different ways. More precisely, given one such decomposition $b = b_1 \oplus \cdots \oplus b_n$ with foci $A_1,...,A_n$ and denoting by $\mathcal{C}$ the core of $b$, each of the following
\begin{itemize}
\item $b = b_1 \oplus \cdots \oplus b_n \oplus b_{n+1}$ whenever $b_{n+1}$ is the vacuous belief function on the same frame;
\item $b = (b_1 \oplus b_2) \oplus \cdots \oplus b_n$ whenever $A_1 = A_2$;
\item $b = b'_1 \oplus \cdots \oplus b'_n$, whenever $b'_i$ is the simple support function focused on $A'_i \doteq A_i \cap \mathcal{C}$ such that $b'_i(A'_i) = b_i(A_i)$, if $A_i\cap \mathcal{C} \neq \emptyset$ for all $i$;
\end{itemize}
is a valid decomposition of $b$ in terms of simple belief functions. On the other hand,

\begin{proposition} \label{pro:canonical}
If $b$ is a non-vacuous, separable support function with core $\mathcal{C}_b$ then there exists a unique collection $b_1,...,b_n$ of non-vacuous simple support functions which satisfy the following conditions:
\begin{enumerate}
\item $n\geq 1$;
\item $b = b_1$ if  $n=1$, and $b = b_1 \oplus \cdots \oplus b_n$ if  $n\geq 1$;
\item $\mathcal{C}_{b_i} \subset \mathcal{C}_b$;
\item $\mathcal{C}_{b_i} \neq \mathcal{C}_{b_j}$ if $i\neq j$.
\end{enumerate}
\end{proposition}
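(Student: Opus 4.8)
\medskip
\noindent\textit{Sketch of a proof.}

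\textbf{Existence.} The plan is to prove existence by grinding an arbitrary simple-support decomposition into a canonical one, and uniqueness by passing to the commonality function, under which Dempster's rule is multiplicative. For existence: since $b$ is separable, $b = c_1 \oplus \cdots \oplus c_m$ with each $c_j$ a simple support function, focused on $A_j$ with weight $s_j$. I would first discard every vacuous $c_j$ --- at least one must survive, otherwise $b$ itself would be vacuous. Then I would discard every remaining $c_j$ with $A_j \cap \mathcal{C}_b = \emptyset$: since all focal elements of $b$ lie in $\mathcal{C}_b$, such a $c_j$ is redundant, a fact one checks by a short commonality computation showing that deleting it leaves $Q_b$, hence $b$, unchanged. (This is precisely what secures the hypothesis of the third reduction recalled before the Proposition.) Now every surviving focus meets $\mathcal{C}_b$, so I apply that third reduction, replacing each $A_j$ by $A_j \cap \mathcal{C}_b \subseteq \mathcal{C}_b$, followed by the second reduction, merging simple support functions that have come to share a focus. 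The result is a decomposition $b = b_1 \oplus \cdots \oplus b_n$ with $n \geq 1$, every $b_i$ non-vacuous and simple, and foci pairwise distinct and contained in $\mathcal{C}_b$: conditions 1--4.

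\textbf{Uniqueness when $\mathcal{C}_b = \Theta$.} Let $Q_b(X) = \sum_{B \supseteq X} m_b(B)$ be the commonality function; it determines $b$ (via a Möbius inversion dual to \eqref{eq:moebius}) and satisfies $Q_{b_1 \oplus \cdots \oplus b_n}(X) = \gamma \prod_i Q_{b_i}(X)$ for $X \neq \emptyset$, with a constant $\gamma$ independent of $X$. For a simple support function focused on $A$ with weight $s$ one has $Q(X) = 1$ if $X \subseteq A$ and $Q(X) = 1-s$ otherwise, so from any decomposition as in the Proposition $Q_b(X) = \gamma \prod_{i:\,\mathcal{C}_{b_i} \not\supseteq X}(1-s_i)$. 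Assume $\mathcal{C}_b = \Theta$: then no $b_i$ has weight $1$ (a categorical $b_i$ would have focus $\mathcal{C}_{b_i} \subsetneq \Theta$ while $\mathcal{C}_b \subseteq \mathcal{C}_{b_i}$), so every $1-s_i \in (0,1)$ and $Q_b$ is strictly positive. Writing $w(A) := 1-s_i$ when $A = \mathcal{C}_{b_i}$ and $w(A) := 1$ otherwise, one gets $\log Q_b(X) = \text{const} - \sum_{A \supseteq X} \log w(A)$ for all $X \neq \emptyset$, and Möbius inversion over $(2^\Theta \setminus \{\emptyset\}, \supseteq)$ recovers every $\log w(A)$ --- hence the foci $\{\mathcal{C}_{b_i}\} = \{A : w(A) \neq 1\}$ and the weights $s_i = 1 - w(\mathcal{C}_{b_i})$ --- from $b$ alone. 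Thus the collection is unique in this case.

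\textbf{Reducing the general case.} For an arbitrary decomposition as in the Proposition with $\mathcal{C}_b \neq \Theta$, the mass $m_b(\Theta)$ vanishes, which forces $\prod_i(1-s_i) = 0$, so some $b_{i_0}$ is categorical; its focus lies in $\mathcal{C}_b$ by condition 3 and contains $\mathcal{C}_b$ because Dempster combination only shrinks cores, hence equals $\mathcal{C}_b$, and by condition 4 it is the only categorical component. Using that the categorical belief function on $\mathcal{C}_b$ is idempotent under $\oplus$, together with associativity and commutativity, I combine each of the remaining $b_i$ with $b_{i_0}$; this exhibits $b$, regarded as a belief function on the frame $\mathcal{C}_b$, as an orthogonal sum of non-vacuous simple support functions on $\mathcal{C}_b$ with distinct foci contained in $\mathcal{C}_b$. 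That frame has core $\mathcal{C}_b$ itself, so the previous step makes those foci and weights unique, while $b_{i_0}$ is forced to appear exactly when $\mathcal{C}_b \neq \Theta$. Uniqueness follows in general.

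\textbf{Main obstacle.} Existence is essentially bookkeeping, the only subtlety being the deletion of a component whose focus misses the core. The substance is in uniqueness: recognising the commonality function as the right invariant, showing $Q_b$ pins down not merely $b$ but the entire multiset of foci and weights (the logarithm-plus-Möbius step), and disposing of the categorical ($s_i = 1$) components by passing to the core. I expect that last manoeuvre --- transporting the statement between the frames $\Theta$ and $\mathcal{C}_b$ --- to be the main obstacle.
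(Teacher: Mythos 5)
The paper does not actually prove this Proposition --- it is quoted from Shafer's monograph (his Theorem 7.2) and used as a black box --- so there is no in-text proof to compare against. Your argument is correct and is, in substance, the classical one. The existence half is exactly the intended use of the three reductions listed just above the Proposition; your deletion of a component whose focus misses $\mathcal{C}_b$ is legitimate, and it is worth noting explicitly that such a component cannot be categorical (since $\mathcal{C}_b$ is contained in the core of every component, a categorical component's focus must contain $\mathcal{C}_b$), so the constant factor $1-s_j$ you absorb into the normalisation is nonzero. The uniqueness half via $\log Q_b$ and M\"obius inversion over $(2^\Theta\setminus\{\emptyset\},\supseteq)$ is precisely Shafer's ``weight of evidence'' construction: the $w(A)$ you recover is his assessment function. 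One point deserves a sentence in a full write-up: $\log Q_b(X)$ determines $\sum_{A\supseteq X}\log w(A)$ only up to an additive constant, and that constant is killed by the alternating sum $\sum_{X\supseteq A}(-1)^{|X|-|A|}$ exactly because every focus $A$ is a \emph{proper} subset of $\Theta$ (non-vacuousness), so the inversion is indeed well posed. The reduction of the general case is also sound --- $m_b(\Theta)=K\prod_i(1-s_i)$ forces exactly one categorical component, necessarily focused on $\mathcal{C}_b$ --- though you can avoid the change-of-frame manoeuvre you flag as the main obstacle: since the lone categorical factor contributes $Q\equiv 1$ on nonempty $X\subseteq\mathcal{C}_b$ and every other weight is $<1$, the commonality $Q_b$ is strictly positive on $2^{\mathcal{C}_b}\setminus\{\emptyset\}$ and the log-M\"obius argument applies verbatim on that Boolean sublattice, recovering the remaining foci and weights directly. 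I see no genuine gap.
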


This unique decomposition is called the \emph{canonical decomposition} of $b$ -- we will reconsider it later in the Book.\\
An intuitive idea of what a separable support function represents is provided by the following result.

\begin{proposition} \label{pro:separable}
If $b$ is a separable belief function, and $A$ and $B$ are two of its focal elements with $A\cap B \neq \emptyset$, then $A\cap B$ is a focal element of $b$.
\end{proposition}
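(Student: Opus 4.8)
The plan is to induct on the number $n$ of simple components in a decomposition $b = b_1 \oplus \cdots \oplus b_n$ of the separable support function $b$ (Definition \ref{def:separable-support-function}), using throughout the description of the focal elements of an orthogonal sum contained in Definition \ref{def:dempster}: the focal elements of $b' \oplus b''$ are exactly the non-empty intersections of a focal element of $b'$ with a focal element of $b''$. Since the collection of focal elements of $b$ depends only on $b$, any such decomposition may be fixed at the outset; and by Definition \ref{def:simple-support-function} each $b_i$ is focused on some $A_i \subseteq \Theta$, so the focal elements of $b_i$ lie among $\{A_i, \Theta\}$ and in particular form a chain.

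For the base case $n = 1$, $b = b_1$ is simple, and since its focal elements form a chain, any two focal elements $A, B$ satisfy $A \subseteq B$ or $B \subseteq A$; hence $A \cap B$ equals $A$ or $B$ and is itself a focal element (the hypothesis $A \cap B \neq \emptyset$ is not even needed here). For the inductive step I would write $b = b' \oplus b_n$ with $b' = b_1 \oplus \cdots \oplus b_{n-1}$, itself a separable support function with $n-1$ simple components, the partial sum existing because $\oplus$ is associative and the full sum does. Given focal elements $A, B$ of $b$ with $A \cap B \neq \emptyset$, Definition \ref{def:dempster} lets me write $A = C_A \cap D_A$ and $B = C_B \cap D_B$ with $C_A, C_B$ focal elements of $b'$ and $D_A, D_B$ focal elements of $b_n$ (so $D_A, D_B \in \{A_n, \Theta\}$). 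Then $A \cap B = (C_A \cap C_B) \cap (D_A \cap D_B)$. Now $C_A \cap C_B \supseteq A \cap B \neq \emptyset$, so by the inductive hypothesis applied to $b'$ the set $C_A \cap C_B$ is a focal element of $b'$; and $D_A \cap D_B$, being the smaller of two elements of a chain, is again a focal element of $b_n$. Hence $A \cap B$ is a non-empty intersection of a focal element of $b'$ with a focal element of $b_n$, and therefore a focal element of $b = b' \oplus b_n$ by Definition \ref{def:dempster}, which closes the induction.

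I do not expect a genuinely hard step: the whole argument is an exercise in unwinding Definition \ref{def:dempster}. The only points that call for a word of care are bookkeeping ones -- that reordering the $b_i$ and forming partial orthogonal sums is legitimate (associativity/commutativity of $\oplus$, and existence of all the intermediate sums, which follow from the existence of $b$) -- and the mildly degenerate possibilities in which some $b_i$ is vacuous ($\sigma_i = 0$) or places all its mass on its focus ($\sigma_i = 1$); in those cases $b_i$ merely has fewer focal elements, the chain property persists, and the inductive step goes through verbatim. A useful by-product, worth recording, is that in the non-degenerate case ($0 < \sigma_i < 1$ for all $i$) the focal elements of a separable support function are precisely the non-empty intersections $\bigcap_{i \in S} A_i$, $S \subseteq \{1,\dots,n\}$ (with $\bigcap_{i \in \emptyset} A_i = \Theta$), of subcollections of the foci; from this the proposition is again immediate, since $\bigcap_{i \in S} A_i \cap \bigcap_{i \in T} A_i = \bigcap_{i \in S \cup T} A_i$.
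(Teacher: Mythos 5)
Your proof is correct. The paper states this proposition without giving a proof (it is imported from Shafer's book), and your induction on the number of simple components --- resting on the observation that the focal elements of an orthogonal sum are exactly the non-empty intersections of focal elements of the two operands, which holds because every such intersection receives a strictly positive product of masses under Dempster's rule --- is the standard argument for this result.
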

The set of f.e.s of a separable support function is closed under set-theoretical intersection. Such a n.f. $b$ is coherent in the sense that if it supports two propositions, then it must support the proposition `naturally' implied by them, i.e., their intersection. Proposition \ref{pro:separable} gives us a simple method to check whether a given belief function is indeed a separable support function.

\subsection{Internal conflict} \label{sec:internal-conflict}

Since a separable support function can support pairs of disjoint subsets, it flags the existence of what we can call `internal' conflict.
\begin{definition}
The \emph{weight of internal conflict} $\mathcal{W}_b$ for a separable support function $b$ is defined as:
\begin{itemize}
\item 0 if $b$ is a simple support function;
\item $\inf \mathcal{K}(b_1,...,b_n)$ for the various possible decompositions of $b$ into simple support functions $b = b_1 \oplus \cdots \oplus b_n$ if $b$ is not simple.
\end{itemize}
\end{definition}
\noindent It is easy to see (see \cite{Shafer76} again) that $\mathcal{W}_b = \mathcal{K}(b_1,...,b_n)$ where
$b_1\oplus \cdots \oplus b_n$ is the canonical decomposition of $b$.

\section{Families of compatible frames of discernment} \label{sec:families-of-frames}

\subsection{Refinings}

One appealing idea in the theory of evidence is the simple, sensible claim that our knowledge of any given problem is inherently imperfect and imprecise. As a consequence, new evidence may allow us to make decisions on more detailed decision spaces (represented by frames of discernments). All these frames need to be `compatible' with each other, in a sense that we will precise in the following.\\ One frame can certainly be assumed compatible with another if it can be obtained by introducing new distinctions, i.e., by analyzing or splitting some of its possible outcomes into finer ones. This idea is embodied by the notion of \emph{refining}.
\begin{definition} \label{def:refining}
Given two frames of discernment $\Theta$ and $\Omega$, a map $\rho :2^\Theta \rightarrow2^\Omega$ is said to be a \emph{refining} if it
satisfies the following conditions:
\begin{enumerate}
\item $\rho(\{\theta\})\neq\emptyset\;\forall \theta\in\Theta$;
\item $\rho(\{\theta\})\cap\rho(\{\theta'\})=\emptyset\; if\;
\theta\neq\theta'$;
\item $\displaystyle \cup_{\theta\in\Theta}\rho(\{\theta\})=\Omega$.
\end{enumerate}
\end{definition}
In other words, a refining maps the coarser frame $\Theta$ to a disjoint partition of the finer one $\Omega$ (see Figure \ref{fig:refining}).
\begin{figure}[ht!]
\begin{center}
\includegraphics[width = 0.48 \textwidth]{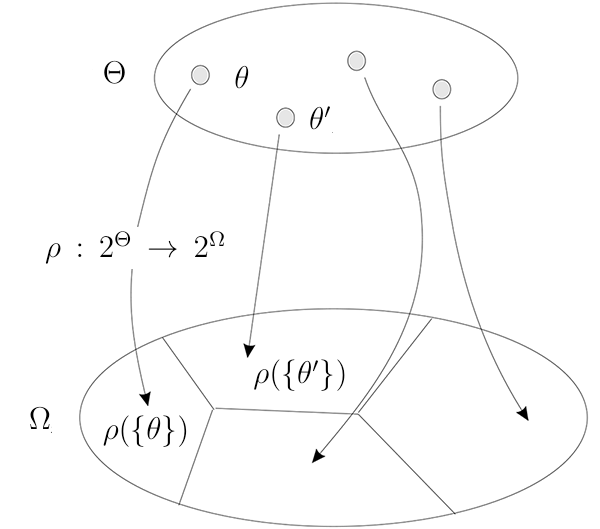}
\caption{\label{fig:refining} A refining between two frames of discernment.}
\end{center}
\end{figure}

\noindent The finer frame is called a \emph{refinement} of the first one, and we call $\Theta$ a \emph{coarsening} of $\Omega$. Both frames represent sets of admissible answers to a given decision problem (see Chapter \ref{cha:state} as well) -- the finer one is nevertheless a more detailed description, obtained by splitting each possible answer $\theta \in \Theta$ in the original frame. The image $\rho(A)$ of a subset $A$ of $\Theta$ consists of all the outcomes in $\Omega$ that are obtained by splitting an element of $A$.

Proposition \ref{pro:refining} lists some of the properties of refinings \cite{Shafer76}.
\begin{proposition} \label{pro:refining}
Suppose $\rho:2^{\Theta}\rightarrow 2^{\Omega}$ is a refining. Then
\begin{itemize}
\item $\rho$ is a one-to-one mapping;
\item $\rho(\emptyset)=\emptyset$;
\item $\rho(\Theta)=\Omega$;
\item $\rho(A\cup B)=\rho(A)\cup\rho(B)$;
\item $\rho(\bar{A})=\overline{\rho(A)}$;
\item $\rho(A\cap B)=\rho(A)\cap \rho(B)$;
\item if $A,B \subset \Theta$ then $\rho(A)\subset\rho(B)$ iff $A\subset B$;
\item if $A,B \subset \Theta$ then $\rho(A)\cap\rho(B)=\emptyset$ iff $A\cap B=\emptyset$.
\end{itemize}
\end{proposition}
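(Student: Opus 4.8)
The plan is to verify each of the eight listed properties of a refining $\rho : 2^\Theta \to 2^\Omega$ directly from the three defining conditions of Definition~\ref{def:refining}, using the fact that the images $\rho(\{\theta\})$, $\theta \in \Theta$, form a partition of $\Omega$. The natural first step is to extend the description of $\rho$ on singletons to all subsets: since $\rho$ maps to a disjoint partition, I expect that $\rho(A) = \bigcup_{\theta \in A} \rho(\{\theta\})$ for every $A \subseteq \Theta$; strictly this identity is part of what ``refining'' should mean, and I would either take it as implicit in the definition or note that the properties below are most cleanly proved once this union formula is assumed. With that formula in hand, most of the claims become elementary set algebra on disjoint unions.

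Concretely, I would proceed as follows. First, $\rho(\emptyset) = \bigcup_{\theta \in \emptyset} \rho(\{\theta\}) = \emptyset$, and $\rho(\Theta) = \bigcup_{\theta \in \Theta} \rho(\{\theta\}) = \Omega$ by condition~3. Next, $\rho(A \cup B) = \bigcup_{\theta \in A \cup B}\rho(\{\theta\}) = \bigl(\bigcup_{\theta \in A}\rho(\{\theta\})\bigr) \cup \bigl(\bigcup_{\theta \in B}\rho(\{\theta\})\bigr) = \rho(A) \cup \rho(B)$, which is immediate. For injectivity: if $A \neq B$, pick $\theta$ in the symmetric difference, say $\theta \in A \setminus B$; then $\rho(\{\theta\}) \subseteq \rho(A)$, while $\rho(\{\theta\}) \cap \rho(B) = \emptyset$ because the sets $\rho(\{\theta'\})$, $\theta' \in B$, are all disjoint from $\rho(\{\theta\})$ by condition~2 and $\rho(\{\theta\}) \neq \emptyset$ by condition~1 --- so $\rho(A) \neq \rho(B)$. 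The monotonicity equivalence ``$\rho(A) \subseteq \rho(B)$ iff $A \subseteq B$'' follows in one direction from the union formula, and in the other from the same disjointness-plus-nonemptiness argument: if $\theta \in A \setminus B$ then $\rho(\{\theta\})$ witnesses $\rho(A) \not\subseteq \rho(B)$. The disjointness equivalence ``$\rho(A) \cap \rho(B) = \emptyset$ iff $A \cap B = \emptyset$'' is handled similarly: $\rho(A) \cap \rho(B) = \bigcup_{\theta \in A,\,\theta' \in B} \bigl(\rho(\{\theta\}) \cap \rho(\{\theta'\})\bigr)$, and by condition~2 the only surviving terms are those with $\theta = \theta' \in A \cap B$, each of which is nonempty by condition~1.

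It then remains to deduce the complement formula $\rho(\bar A) = \overline{\rho(A)}$ and the intersection formula $\rho(A \cap B) = \rho(A) \cap \rho(B)$. For the complement: $\rho(A) \cup \rho(\bar A) = \rho(A \cup \bar A) = \rho(\Theta) = \Omega$, while $\rho(A) \cap \rho(\bar A) = \emptyset$ by the disjointness equivalence just proved (since $A \cap \bar A = \emptyset$); hence $\rho(\bar A)$ is exactly the complement of $\rho(A)$ in $\Omega$. For the intersection, I would use De Morgan together with the additivity over unions and the complement formula: $\rho(A \cap B) = \rho\bigl(\overline{\bar A \cup \bar B}\bigr) = \overline{\rho(\bar A \cup \bar B)} = \overline{\rho(\bar A) \cup \rho(\bar B)} = \overline{\overline{\rho(A)} \cup \overline{\rho(B)}} = \rho(A) \cap \rho(B)$. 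Alternatively one can repeat the direct disjoint-union computation used for the disjointness equivalence.

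I do not anticipate a genuine obstacle here: every step is routine finite set manipulation once the union formula $\rho(A) = \bigcup_{\theta \in A}\rho(\{\theta\})$ is available. The one point deserving care --- and the closest thing to a ``hard part'' --- is making sure that the union formula is legitimately in force. If Definition~\ref{def:refining} is read as specifying $\rho$ only on singletons (with the values on general subsets understood to be the induced union), there is nothing to prove; if instead $\rho$ is given abstractly on all of $2^\Theta$ subject only to the three conditions, one should first observe that conditions~1--3, together with the implicit requirement that $\rho$ respect unions, force this formula, and it is worth stating that dependency explicitly before launching into the eight verifications. I would therefore open the proof by fixing this convention, and then dispatch the properties in the order: $\rho(\emptyset)=\emptyset$; $\rho(\Theta)=\Omega$; additivity over unions; injectivity; the $\subseteq$-equivalence; the disjointness equivalence; the complement formula; and finally the intersection formula.
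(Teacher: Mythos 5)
Your proof is correct, and it is essentially the standard argument: the paper itself gives no proof of Proposition~\ref{pro:refining}, deferring to Shafer's book, where the same elementary verification from the union formula $\rho(A)=\bigcup_{\theta\in A}\rho(\{\theta\})$ is carried out. You were right to flag that this formula is the one genuine point of care --- Definition~\ref{def:refining} only constrains $\rho$ on singletons, and the paper adopts the union convention only informally in the sentence following the definition (``the image $\rho(A)$ of a subset $A$ of $\Theta$ consists of all the outcomes in $\Omega$ that are obtained by splitting an element of $A$''), so stating it explicitly as you do is exactly the right move; everything else in your write-up is sound.
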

\noindent A refining $\rho:2^{\Theta}\rightarrow 2^{\Omega}$ is not, in general, onto; in other words, there are
subsets $B\subset \Omega$ that are not images of subsets $A$ of $\Theta$. Nevertheless, we can define two different ways of associating each subset of the more refined frame $\Omega$ with a subset of the coarser one $\Theta$.
\begin{definition}
The \emph{inner reduction} associated with a refining $\rho:2^{\Theta}\rightarrow 2^{\Omega}$ is the map
$\underline{\rho} : 2^\Omega \rightarrow 2^\Theta$ defined as:
\begin{equation} \label{eq:inner-reduction}
\underline{\rho}(A) = \Big \{ \theta\in\Theta \Big | \rho(\{\theta\}) \subseteq A \Big \}.
\end{equation}
The \emph{outer reduction} associated with $\rho$ is the map $\bar{\rho} : 2^\Omega \rightarrow 2^\Theta$
given by:
\begin{equation} \label{eq:outer-reduction}
\bar{\rho}(A) = \Big \{ \theta\in\Theta \Big | \rho(\{\theta\}) \cap A \neq \emptyset \Big \}.
\end{equation}
\end{definition}
Roughly speaking, $\underline{\rho}(A)$ is {the largest subset of $\Theta$ that implies $A\subset \Omega$}, while $\bar{\rho}(A)$ is {the smallest subset of $\Theta$ that is implied by $A$}. As a matter of fact:
\begin{proposition}
Suppose $\rho:2^{\Theta}\rightarrow 2^{\Omega}$ is a refining, $A\subset \Omega$ and $B\subset\Theta$. Let $\bar{\rho}$ and $\underline{\rho}$ the related outer and inner reductions. Then $\rho(B)\subset A$ iff $B\subset \underline{\rho}(A)$, and $A\subset \rho(B)$ iff $\bar{\rho}(A)\subset B$.
\end{proposition}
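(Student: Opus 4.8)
The statement is an adjunction (Galois connection) between $2^\Theta$ and $2^\Omega$ along $\rho$, so the plan is to exploit two structural facts about refinings that are already available: (i) $\rho$ preserves unions (Proposition \ref{pro:refining}), hence $\rho(B) = \bigcup_{\theta\in B}\rho(\{\theta\})$ for every $B\subseteq\Theta$; and (ii) $\rho$ is monotone, i.e.\ $A\subseteq B$ implies $\rho(A)\subseteq\rho(B)$. For the second equivalence we shall additionally need the defining property of a refining that the sets $\rho(\{\theta\})$, $\theta\in\Theta$, form a \emph{partition} of $\Omega$ (conditions 2 and 3 of Definition \ref{def:refining}).

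First I would establish $\rho(B)\subseteq A \iff B\subseteq\underline{\rho}(A)$. For the forward direction, assume $\rho(B)\subseteq A$; then for each $\theta\in B$ we have $\rho(\{\theta\})\subseteq\rho(B)\subseteq A$ by monotonicity, so $\theta\in\underline{\rho}(A)$ by the definition (\ref{eq:inner-reduction}) of the inner reduction, whence $B\subseteq\underline{\rho}(A)$. Conversely, if $B\subseteq\underline{\rho}(A)$ then $\rho(\{\theta\})\subseteq A$ for every $\theta\in B$, and taking the union over $\theta\in B$ and using fact (i) gives $\rho(B)\subseteq A$.

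Next I would prove $A\subseteq\rho(B) \iff \bar{\rho}(A)\subseteq B$. Suppose $A\subseteq\rho(B)$ and take any $\theta\in\bar{\rho}(A)$; by (\ref{eq:outer-reduction}) there is some $\omega\in\rho(\{\theta\})\cap A$. Since $\omega\in A\subseteq\rho(B)=\bigcup_{\theta'\in B}\rho(\{\theta'\})$, there is $\theta'\in B$ with $\omega\in\rho(\{\theta'\})$; but $\omega$ also lies in $\rho(\{\theta\})$, and as the images of singletons are pairwise disjoint this forces $\theta=\theta'\in B$. Hence $\bar{\rho}(A)\subseteq B$. For the converse, assume $\bar{\rho}(A)\subseteq B$ and take $\omega\in A$; since $\bigcup_{\theta\in\Theta}\rho(\{\theta\})=\Omega$, there is $\theta$ with $\omega\in\rho(\{\theta\})$, so $\rho(\{\theta\})\cap A\neq\emptyset$, i.e.\ $\theta\in\bar{\rho}(A)\subseteq B$; therefore $\omega\in\rho(\{\theta\})\subseteq\rho(B)$ by monotonicity, and $A\subseteq\rho(B)$.

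The only nonroutine point is the forward direction of the second equivalence, where disjointness of the $\rho(\{\theta\})$ is essential to conclude $\theta=\theta'$; everywhere else the argument is just unwinding the definitions of $\underline{\rho}$ and $\bar{\rho}$ together with the union-preservation and monotonicity of $\rho$. I would therefore highlight that step and keep the rest terse.
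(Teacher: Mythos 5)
Your proof is correct and complete; the paper itself states this proposition without proof (it is quoted from Shafer's treatment of refinings), so there is nothing to compare against, but your argument is exactly the natural one: both equivalences follow by unwinding the definitions (\ref{eq:inner-reduction}) and (\ref{eq:outer-reduction}) together with $\rho(B)=\bigcup_{\theta\in B}\rho(\{\theta\})$, and you correctly identify that the only place where the partition structure of a refining is genuinely needed is the forward direction of the second equivalence, where disjointness of the singleton images forces $\theta=\theta'$.
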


\subsection{Families of frames}

The existence of distinct admissible descriptions at different levels of granularity of a same phenomenon is encoded in the theory of evidence by the concept of \emph{family of compatible frames} (see \cite{Shafer76}, pages 121-125), whose building block is the notion of refining (Definition \ref{def:refining}).

\begin{definition}\label{def:1}
A non-empty collection of finite non-empty sets $\mathcal{F}$ is a \emph{family of compatible frames of discernment} with refinings $\mathcal{R}$, where $\mathcal{R}$ is a non-empty collection of refinings between pairs of frames in $\mathcal{F}$, if $\mathcal{F}$ and $\mathcal{R}$ satisfy the following requirements:
\begin{enumerate}
\item 
\emph{composition of refinings}: if $\rho_1:2^{\Theta_1}\rightarrow2^{\Theta_2}$ and $\rho_2:2^{\Theta_2}\rightarrow2^{\Theta_3}$ are in $\mathcal{R}$, then $\rho_2 \circ \rho_1 : 2^{\Theta_1} \rightarrow 2^{\Theta_3}$ is in $\mathcal{R}$;
\item 
\emph{identity of coarsenings}: if $\rho_1:2^{\Theta_1}\rightarrow2^{\Omega}$, $\rho_2:2^{\Theta_2}\rightarrow2^{\Omega}$ are in $\mathcal{R}$ and $\forall \theta_1\in\Theta_1$ $\exists \theta_2\in\Theta_2$ such that $\rho_1(\{\theta_1\}) = \rho_2(\{\theta_2\})$, then $\Theta_1=\Theta_2$ and $\rho_1=\rho_2$;
\item 
\emph{identity of refinings}: if $\rho_1:2^{\Theta}\rightarrow2^{\Omega}$ and $\rho_2:2^{\Theta}\rightarrow2^{\Omega}$ are in $\mathcal{R}$, then $\rho_1=\rho_2$;
\item 
\emph{existence of coarsenings}: if $\Omega\in\mathcal{F}$ and $A_1,...,A_n$ is a disjoint partition of $\Omega$ then there is a coarsening in $\mathcal{F}$ corresponding to this partition;
\item 
\emph{existence of refinings}: if $\theta\in\Theta\in\mathcal{F}$ and $n\in \mathbb{N}$ then there exists a refining $\rho:2^{\Theta}\rightarrow2^{\Omega}$ in $\mathcal{R}$ and $\Omega\in{\mathcal{F}}$ such that $\rho(\{\theta\})$ has $n$ elements;
\item 
\emph{existence of common refinements}: every pair of elements in $\mathcal{F}$ has a common refinement in $\mathcal{F}$.
\end{enumerate}
\end{definition}
Roughly speaking, two frames are compatible if and only if they concern propositions which can be both expressed in terms of propositions of a common, finer frame.\\ By property (6) each collection of compatible frames has many common refinements. One of these is particularly simple.
\begin{theorem}\label{the:minimal}
If $\Theta_1,...,\Theta_n$ are elements of a family of compatible frames $\mathcal{F}$, then there exists a unique frame $\Theta\in\mathcal{F}$ such that:
\begin{enumerate}
\item $\exists$ a refining $\rho_i:2^{\Theta_i}\rightarrow2^\Omega$ for all $i=1,...,n$;
\item $\forall \theta\in\Theta \; \exists \; \theta_i\in\Theta_i \;
for\; i=1,...,n \; such \; that$
\[
\{\theta\}=\rho_1(\{\theta_1\})\cap...\cap\rho_n(\{\theta_n\}).
\]
\end{enumerate}
\end{theorem}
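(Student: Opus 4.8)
The plan is to prove the case $n=2$ explicitly and then obtain the general statement by induction on $n$, at every stage routing the construction through a common refinement (requirement (6) of Definition~\ref{def:1}) so as to work inside a single ambient frame, and through coarsenings (requirement (4)) to carve out the frame we want. This indirection is essentially forced: the axioms only guarantee that $\mathcal{R}$ is closed under composition (requirement (1)) and satisfies the rigidity conditions (2)--(3), so one cannot simply write down the required refinings as set maps and declare them to be in $\mathcal{R}$; everything must be reduced to coarsenings of, and common refinements between, frames already known to lie in $\mathcal{F}$.

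For $n=2$, I would first pick by (6) a common refinement $\Omega\in\mathcal{F}$ of $\Theta_1,\Theta_2$ with refinings $\tau_1:2^{\Theta_1}\to 2^\Omega$, $\tau_2:2^{\Theta_2}\to 2^\Omega$ in $\mathcal{R}$. Since a refining sends the singletons of its domain to a disjoint partition of its codomain, the non-empty sets $\tau_1(\{\theta_1\})\cap\tau_2(\{\theta_2\})$ form a disjoint partition of $\Omega$ (pairwise disjoint because the $\tau_i(\{\theta_i\})$ are; covering because $\bigcup_{\theta_1}\tau_1(\{\theta_1\})=\bigcup_{\theta_2}\tau_2(\{\theta_2\})=\Omega$), so by (4) there is a coarsening $\Theta\in\mathcal{F}$ realizing it, with refining $\omega:2^\Theta\to 2^\Omega$ in $\mathcal{R}$ whose singleton-images are exactly these non-empty blocks. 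To produce $\rho_1:2^{\Theta_1}\to 2^\Theta$, observe that each $\tau_1(\{\theta_1\})$ is a union of $\Theta$-blocks, hence the $\Theta$-partition has a coarsening grouping its blocks by which $\tau_1(\{\theta_1\})$ they lie in; applying (4) inside $\Theta$ yields $\Theta_1'\in\mathcal{F}$ and $\sigma_1:2^{\Theta_1'}\to 2^\Theta$ in $\mathcal{R}$, and then $\omega\circ\sigma_1\in\mathcal{R}$ by (1) sends each singleton to some $\tau_1(\{\theta_1\})$, so (2) forces $\Theta_1'=\Theta_1$ and $\omega\circ\sigma_1=\tau_1$; put $\rho_1:=\sigma_1$, and symmetrically $\rho_2$. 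The first assertion of the theorem then holds by construction, and for the second: writing $\omega(\{\theta\})=\tau_1(\{\theta_1\})\cap\tau_2(\{\theta_2\})$ and using that $\omega$ is injective and preserves intersections (Proposition~\ref{pro:refining}) together with $\omega(\rho_i(\{\theta_i\}))=\tau_i(\{\theta_i\})$, one gets $\omega(\rho_1(\{\theta_1\})\cap\rho_2(\{\theta_2\}))=\omega(\{\theta\})$, hence $\rho_1(\{\theta_1\})\cap\rho_2(\{\theta_2\})=\{\theta\}$. For $n>2$ one inducts: apply the theorem to $\Theta_1,\dots,\Theta_{n-1}$ to get $\Theta'$ with refinings $\rho_i'$, then the $n=2$ case to $\Theta',\Theta_n$ to get $\Theta$ with refinings $\lambda:2^{\Theta'}\to 2^\Theta$ and $\rho_n$, and set $\rho_i:=\lambda\circ\rho_i'$ (in $\mathcal{R}$ by (1)); the second assertion follows by applying $\lambda$ to $\{\theta'\}=\rho_1'(\{\theta_1\})\cap\cdots\cap\rho_{n-1}'(\{\theta_{n-1}\})$, intersecting with $\rho_n(\{\theta_n\})$, and using $\{\theta\}=\lambda(\{\theta'\})\cap\rho_n(\{\theta_n\})$ once more via intersection-preservation.

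For uniqueness, suppose $(\Theta,\rho_1,\dots,\rho_n)$ and $(\hat\Theta,\hat\rho_1,\dots,\hat\rho_n)$ both satisfy the two assertions. By (6) take a common refinement $\Omega$ of $\Theta$ and $\hat\Theta$ with refinings $\mu:2^\Theta\to 2^\Omega$, $\hat\mu:2^{\hat\Theta}\to 2^\Omega$ in $\mathcal{R}$; then $\mu\circ\rho_i$ and $\hat\mu\circ\hat\rho_i$ are refinings $2^{\Theta_i}\to 2^\Omega$ in $\mathcal{R}$, so (3) gives $\mu\circ\rho_i=\hat\mu\circ\hat\rho_i=:\nu_i$ for every $i$. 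Applying $\mu$ to the second assertion shows that each block $\mu(\{\theta\})$ equals $\nu_1(\{\theta_1\})\cap\cdots\cap\nu_n(\{\theta_n\})$ for a suitable tuple, and conversely every non-empty such intersection is some $\mu(\{\theta\})$ (a point of it lies in a unique $\mu(\{\theta\})$, and because the sets $\nu_i(\{\cdot\})$ partition $\Omega$ for each fixed $i$, the tuple is pinned down); the identical argument applies to $\hat\mu$. Thus $\mu$ and $\hat\mu$ are coarsening refinings of $\Omega$ onto the same partition, so (2) yields $\Theta=\hat\Theta$ and $\mu=\hat\mu$, whence $\mu\circ\rho_i=\mu\circ\hat\rho_i$ and injectivity of $\mu$ forces $\rho_i=\hat\rho_i$.

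I expect the main obstacle to be not any single idea but the bookkeeping of membership in $\mathcal{R}$: at each construction step one must verify that the map produced is genuinely one of the allowed refinings, and the only instruments for this are closure under composition and the two identity axioms, which is precisely why the proof repeatedly passes to coarsenings of a common refinement rather than defining the maps $\rho_i$ directly. A secondary point to get right is the verification that the "product" family $\{\tau_1(\{\theta_1\})\cap\tau_2(\{\theta_2\})\}$ (and, in the uniqueness part, $\{\nu_1(\{\theta_1\})\cap\cdots\cap\nu_n(\{\theta_n\})\}$) is a bona fide disjoint partition, so that requirement (4) is actually applicable.
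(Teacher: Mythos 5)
Your proof is correct. The paper does not actually give an argument for this theorem — it is recalled from Shafer's monograph \cite{Shafer76} — but your construction (pass to a common refinement $\Omega$, coarsen it by the disjoint partition of non-empty intersections $\tau_1(\{\theta_1\})\cap\cdots\cap\tau_n(\{\theta_n\})$, recover each $\rho_i$ and the uniqueness claim via the identity-of-coarsenings and identity-of-refinings axioms, and use the intersection-preservation and injectivity properties of Proposition~\ref{pro:refining}) is exactly the standard argument the citation points to, with every appeal to membership in $\mathcal{R}$ correctly routed through the axioms of Definition~\ref{def:1}.
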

This unique frame is called the \emph{minimal refinement} $\Theta_1 \otimes \cdots \otimes \Theta_n$ of the collection
$\Theta_1,...,\Theta_n$, and is the simplest space in which we can compare propositions pertaining to different compatible frames. Furthermore:
\begin{proposition}\label{pro:minimal}
If $\Omega$ is a common refinement of $\Theta_1,...,\Theta_n$, then $\Theta_1\otimes\cdots\otimes\Theta_n$ is a coarsening of $\Omega$. Furthermore, $\Theta_1\otimes\cdots\otimes\Theta_n$ is the only common refinement of $\Theta_1,...,\Theta_n$ that is a coarsening of every other common refinement.
\end{proposition}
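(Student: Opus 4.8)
The plan is to establish the two assertions in turn, reducing both to the uniqueness clause of Theorem \ref{the:minimal} and to the axioms of Definition \ref{def:1}. Throughout write $\Theta_0 := \Theta_1\otimes\cdots\otimes\Theta_n$ for the minimal refinement, equipped with the refinings $\rho_i : 2^{\Theta_i}\to 2^{\Theta_0}$ and the singleton decompositions $\{\theta\} = \rho_1(\{\theta_1\})\cap\cdots\cap\rho_n(\{\theta_n\})$ supplied by Theorem \ref{the:minimal}. Let $\Omega$ be an arbitrary common refinement of $\Theta_1,\dots,\Theta_n$, and let $\sigma_i : 2^{\Theta_i}\to 2^\Omega$ be the refinings witnessing this (they are unique, by the \emph{identity of refinings} axiom).

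For the first assertion I would bring in a common refinement $\Psi\in\mathcal F$ of the two frames $\Theta_0$ and $\Omega$, which exists by the \emph{existence of common refinements} axiom, with refinings $\alpha : 2^{\Theta_0}\to 2^\Psi$ and $\beta : 2^\Omega\to 2^\Psi$. Then $\Psi$ refines every $\Theta_i$ in two different ways, via $\alpha\circ\rho_i$ and via $\beta\circ\sigma_i$, both of which lie in $\mathcal R$ by the \emph{composition of refinings} axiom; hence the \emph{identity of refinings} axiom forces
\[ \alpha\circ\rho_i \;=\; \beta\circ\sigma_i \qquad\text{for every } i. \]
Since refinings commute with finite intersections (Proposition \ref{pro:refining}), for each $\theta\in\Theta_0$ with decomposition $\{\theta\} = \bigcap_i\rho_i(\{\theta_i\})$ I then obtain
\[ \alpha(\{\theta\}) \;=\; \bigcap_i \alpha\bigl(\rho_i(\{\theta_i\})\bigr) \;=\; \bigcap_i \beta\bigl(\sigma_i(\{\theta_i\})\bigr) \;=\; \beta\Bigl(\bigcap_i \sigma_i(\{\theta_i\})\Bigr), \]
so that $\alpha(\{\theta\})$ is the $\beta$-image of a subset of $\Omega$, hence a union of cells of the partition $\{\beta(\{\omega\})\}_{\omega\in\Omega}$ of $\Psi$. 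Consequently the partition of $\Psi$ induced by $\Theta_0$ is coarser than the one induced by $\Omega$, and — modulo the bookkeeping described next — this exhibits $\Theta_0$ as a coarsening of $\Omega$.

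The step I expect to be the main obstacle is precisely that bookkeeping: promoting ``the $\Theta_0$-partition of $\Psi$ is coarser than the $\Omega$-partition'' to an \emph{honest} refining $\delta : 2^{\Theta_0}\to 2^\Omega$ lying in $\mathcal R$. Concretely one would set $\delta(\{\theta\}) := \underline{\beta}\bigl(\alpha(\{\theta\})\bigr)$ (well defined exactly because each $\alpha(\{\theta\})$ is a union of $\beta$-cells), verify the three conditions of Definition \ref{def:refining} directly, and then argue from the \emph{composition}, \emph{identity} and \emph{existence of coarsenings} axioms that this $\delta$ belongs to $\mathcal R$ and satisfies $\beta\circ\delta = \alpha$. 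An alternative route that avoids $\Psi$ altogether is to work inside $\Omega$ itself: declare $\omega,\omega'\in\Omega$ equivalent when $\omega,\omega'$ lie in the same cell $\sigma_i(\{\theta_i\})$ for every $i$, use the \emph{existence of coarsenings} axiom to obtain the coarsening $\Theta_0'\in\mathcal F$ of $\Omega$ associated with this partition, check that $\Theta_0'$ is a common refinement of the $\Theta_i$ and that each of its singletons equals $\bigcap_i\rho_i'(\{\theta_i\})$ for suitable $\theta_i$, and then invoke the uniqueness clause of Theorem \ref{the:minimal} to conclude $\Theta_0' = \Theta_0$ — which again displays $\Theta_0$ as a coarsening of $\Omega$.

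For the second assertion, note first that $\Theta_0$ is itself a common refinement of $\Theta_1,\dots,\Theta_n$ which, by the first assertion, is a coarsening of every common refinement. Suppose $\Theta^*\in\mathcal F$ enjoys the same two properties. Applying ``$\Theta^*$ is a coarsening of every common refinement'' to the common refinement $\Theta_0$ shows $\Theta^*$ is a coarsening of $\Theta_0$; applying the first assertion to the common refinement $\Theta^*$ shows $\Theta_0$ is a coarsening of $\Theta^*$. Thus $\Theta_0$ and $\Theta^*$ refine one another, so composing the two refinings between them gives a refining of a frame into itself, which by injectivity of refinings (Proposition \ref{pro:refining}) must send singletons to singletons; hence both refinings are bijective on singletons, and the \emph{identity of coarsenings} and \emph{identity of refinings} axioms then force $\Theta^* = \Theta_0 = \Theta_1\otimes\cdots\otimes\Theta_n$, as required.
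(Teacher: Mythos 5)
The paper itself states this proposition without proof (it is imported from Shafer's monograph), so there is no internal proof to compare against; judged on its own terms, your argument is essentially correct, and your second, ``alternative'' route is in fact the cleaner and more standard one. The genuinely delicate point --- which you correctly isolate --- is that showing ``the partition of $\Psi$ (or of $\Omega$) induced by $\Theta_0$ is coarser than the one induced by the other frame'' is not yet the same as producing a refining $2^{\Theta_0}\to 2^{\Omega}$ \emph{that belongs to} $\mathcal{R}$, since the axioms never assert that every set map satisfying Definition \ref{def:refining} between two frames of $\mathcal{F}$ lies in $\mathcal{R}$. Both of your routes close this gap in the right way: apply the existence-of-coarsenings axiom to the disjoint partition $\{\bigcap_i\sigma_i(\{\theta_i\})\}$ of $\Omega$ (every $\omega$ lies in exactly one such cell, so this is indeed a partition) to obtain some coarsening $\Theta_0'$ together with a refining in $\mathcal{R}$, and then use the identity-of-coarsenings axiom and the uniqueness clause of Theorem \ref{the:minimal} to force $\Theta_0'=\Theta_0$. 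The uniqueness half is also sound. Two small corrections: (i) the fact that a refining of a finite frame into itself sends singletons to singletons follows from pairwise disjointness plus counting ($|\Theta|$ non-empty disjoint cells covering $\Theta$ must all be singletons), not from injectivity of $\rho$ as a map on $2^{\Theta}$; (ii) once $f:2^{\Theta^*}\to 2^{\Theta_0}$ is known to send singletons to singletons, the quickest finish is to compare $f$ with the identity refining on $\Theta_0$ (which is in $\mathcal{R}$ by the existence-of-coarsenings axiom applied to the partition into singletons) via the identity-of-coarsenings axiom, which yields $\Theta^*=\Theta_0$ directly.
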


\subsubsection{Example: number systems}

Figure \ref{fig:familyexample} illustrates a simple example of compatible frames. A real number $r$ between 0 and 1 can be expressed, for instance, using either binary or base-5 digits. Furthermore, even within a number system of choice (for example the binary one), the real number can be represented with different degrees of approximation, using for instance one or two digits. Each of these quantized versions of $r$ is associated with an interval of $[0,1]$ (red rectangles) and can be expressed in a common frame (their common refinement, Definition \ref{def:1}, property (6)), for example by selecting a 2-digit decimal approximation.\\ Refining maps between coarser and finer frames are easily interpreted, and are depicted in Figure \ref{fig:familyexample}.

\begin{figure}[ht!]
\begin{center}
\includegraphics[width = 0.55 \textwidth]{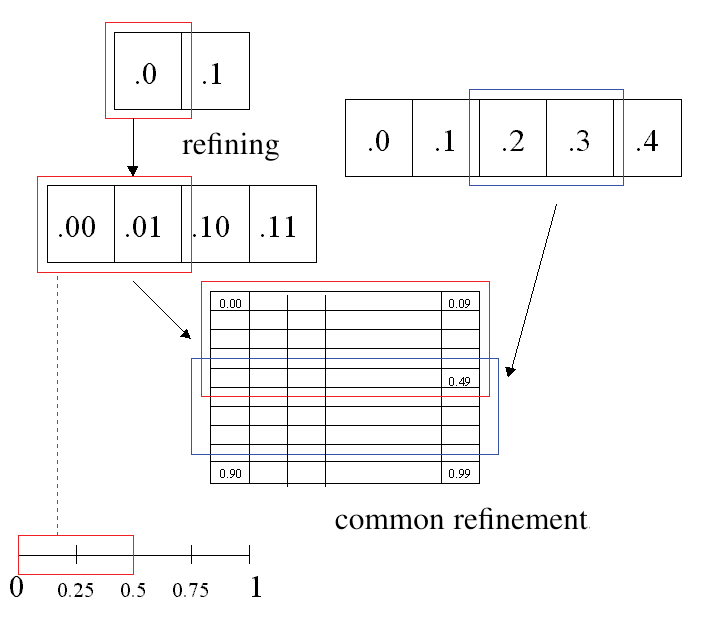}
\caption{\label{fig:familyexample} The different digital representations of the same real number $r \in [0,1]$ constitute a simple example of family of compatible frames.}
\end{center}
\end{figure}

\subsection{Consistent belief functions}

If $\Theta_1$ and $\Theta_2$ are two compatible frames, then two belief functions $b_1 : 2^{\Theta_1}\rightarrow [0,1]$, $b_2 : 2^{\Theta_2}\rightarrow [0,1]$ can potentially be expression of the same body of evidence. This is the case only if $b_1$ and $b_2$ agree on those propositions that are discerned by both $\Theta_1$ and $\Theta_2$, i.e., they represent the same subset of their minimal refinement.
\begin{definition}
Two belief functions $b_1$ and $b_2$ defined over two compatible frames $\Theta_1$ and $\Theta_2$ are said to be \emph{consistent} if
\[
b_1(A_1) = b_2(A_2)
\]
whenever
\[
\begin{array}{ll}
A_1\subset\Theta_1, \; A_2\subset\Theta_2 \; and \; \rho_1(A_1)=\rho_2(A_2), & \rho_i:2^{\Theta_i}\rightarrow
2^{\Theta_1\otimes \Theta_2},
\end{array}
\]
where $\rho_i$ is the refining between $\Theta_i$ and the minimal refinement $\Theta_1\otimes \Theta_2$ of $\Theta_1$ and $\Theta_2$.
\end{definition}

A special case is that in which the two belief functions are defined on frames connected by a refining $\rho : 2^{\Theta_1} \rightarrow 2^{\Theta_2}$ (i.e., $\Theta_2$ is a refinement of $\Theta_1$). In this case $b_1$ and $b_2$ are consistent iff:
\[
b_1(A) = b_2(\rho(A)), \;\;\; \forall A\subseteq \Theta_1.
\]
The b.f. $b_1$ is called the \emph{restriction} of $b_2$ to $\Theta_1$, and their mass values are in the following relation:
\begin{equation} \label{eq:restriction}
m_1(A) = \sum_{A = \bar{\rho}(B)} m_2(B),
\end{equation}
where $A \subset \Theta_1$, $B \subset \Theta_2$ and $\bar{\rho}(B) \subset \Theta_1$ is the inner reduction (\ref{eq:inner-reduction}) of $B$.

\subsection{Independent frames}

Two compatible frames of discernment are \emph{independent} if no proposition discerned by one of them trivially implies a proposition discerned by the other. Obviously we need to refer to a common frame: by Proposition \ref{pro:minimal} what common refinement we choose is immaterial.

\begin{definition}\label{def:indep}
Let $\Theta_1,...,\Theta_n$ be compatible frames, and $\rho_i:2^{\Theta_i}\rightarrow
2^{\Theta_1\otimes\cdots\otimes \Theta_n}$ the corresponding refinings to their minimal refinement. The frames $\Theta_1,...,\Theta_n$ are said to be \emph{independent} if
\begin{equation}
\label{eq:2}\rho_1(A_1)\cap\cdots\cap\rho_n(A_n)\neq \emptyset
\end{equation}
whenever $\emptyset\neq A_i\subset\Theta_i$ for $i=1,...,n$.
\end{definition}

Equivalently, condition (\ref{eq:2}) can be expressed as follows:
\begin{itemize}
\item if $A_i\subset\Theta_i$ for $i=1,...,n$ and $\rho_1(A_1)\cap \cdots \cap \rho_{n-1}(A_{n-1}) \subset \rho_n(A_n)$ then $A_n = \Theta_n$ or one of the first $n-1$ subsets $A_i$ is empty.
\end{itemize}
The notion of independence of frames is illustrated in Figure \ref{fig:if}.

In particular, it is easy to see that if $\exists j \in [1,..,n]$ s.t. $\Theta_j$ is a coarsening of some other frame $\Theta_i$, $|\Theta_j|>1$, then $\{ \Theta_1,...,\Theta_n \}$ are \emph{not} independent. Mathematically, families of compatible frames are collections of Boolean subalgebras of their common refinement \cite{Sikorski}, as Equation (\ref{eq:2}) is nothing but the independence condition for the associated Boolean sub-algebras
\footnote{The following material comes from \cite{Sikorski}.
\begin{definition}
A \emph{Boolean algebra} is a non-empty set $\mathcal{U}$ provided with three internal operations
\[
\begin{array}{ccc}
\begin{array}{cccc}
\cap : & {\mathcal{U}}\times {\mathcal{U}} & \longrightarrow & {\mathcal{U}}\\ & A,B & \mapsto & A\cap B
\end{array}\begin{array}{cccc} \cup : & {\mathcal{U}}\times {\mathcal{U}} & \longrightarrow & {\mathcal{U}}\\ & A,B &
\mapsto & A\cup B \end{array}\begin{array}{cccc} \neg : & {\mathcal{U}} & \longrightarrow & {\mathcal{U}}\\ & A &
\mapsto & \neg A\end{array}
\end{array}
\]
called respectively \emph{meet}, \emph{join} and \emph{complement}, characterized by the following properties:
\[
\begin{array}{cc}
A \cup B = B \cup A, & A \cap B = B \cap A \\ \\ A \cup (B \cup C)= (A \cup B) \cup C, & A \cap (B \cap C)= (A \cap B) \cap C \\ \\ (A \cap B) \cup B = B, & (A \cup B) \cap B = B \\ \\ A \cap (B \cup C)= (A \cap B)\cup (A \cap C),& A \cup (B \cap C)= (A \cup B)\cap (A \cup C) \\ \\ (A \cap \neg A)\cup B = B,& (A \cup \neg A)\cap B = B
\end{array}
\]
\end{definition}

As a special case, the collection $(2^S, \subset)$ of all the subsets of a given set $S$ is a Boolean algebra.

\begin{definition} \label{def:boolean-subalgebras}
$\mathcal{U}'$ is a subalgebra of a Boolean algebra $\mathcal{U}$ iff whenever $A, B \in \mathcal{U}'$ it follows that $A \cup B, A \cap B$ and $\neg A$ are all in $\mathcal{U}'$.
\end{definition}

The `zero' of a Boolean algebra $\mathcal{U}$ is defined as: $0 = \cap_{A \in \mathcal{U}} A$.

\begin{definition}
A collection $\{{\mathcal{U}}_t\}_{t\in T}$ of subalgebras of a Boolean algebra ${\mathcal{U}}$ is said to be \emph{independent} if
\begin{equation} \label{eq:independence-boolean}
A_1\cap\cdots\cap A_n \neq 0
\end{equation}
whenever $0 \neq A_j\in{\mathcal{U}}_{t_j}$, $t_j\neq t_k$ for $j\neq k$.
\end{definition}
Compare expressions (\ref{eq:independence-boolean}) and (\ref{eq:2}).
}.

\begin{figure}[ht!]
\centering
\includegraphics[scale = 0.4]{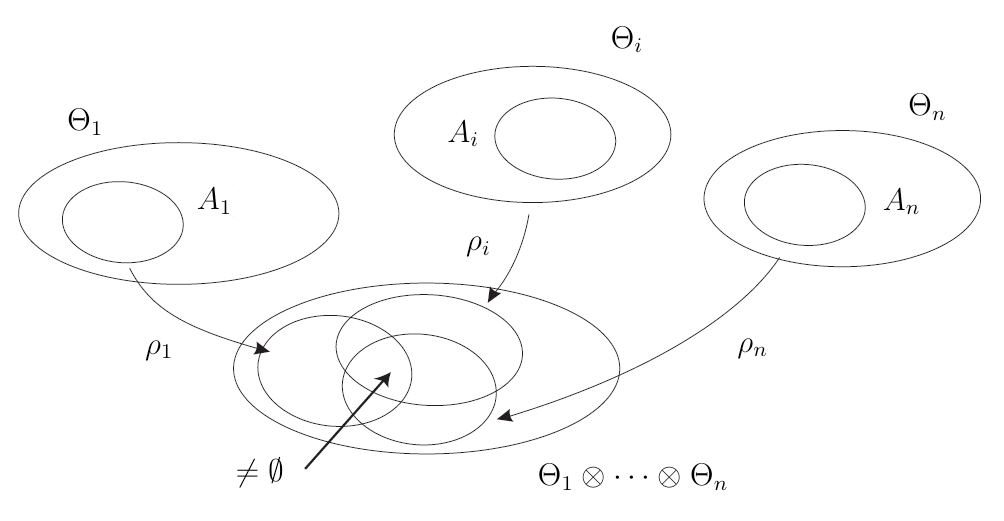}
\caption{\label{fig:if} Independence of frames.}
\end{figure}

\section{Support functions} 

\noindent Since Dempster's rule of combination is applicable only to set functions satisfying the axioms of belief functions (Definition \ref{def:bel1}), we are tempted to think that the class of separable belief functions is sufficiently large to describe the impact of a body of evidence on any frame of a family of compatible frames. This is, however, not the case as \emph{not all} belief functions are separable ones. 

Let us consider a body of evidence inducing a separable b.f. $b$ over a certain frame $\Theta$ of a family $\mathcal{F}$: the `impact' of this evidence onto a coarsening $\Omega$ of $\Theta$ is naturally described by the restriction $b | 2^{\Omega}$ of $b$ (Equation \ref{eq:restriction}) to $\Omega$.

\begin{definition}\label{def:support}
A belief function $b : 2^{\Theta}\rightarrow [0,1]$ is a \emph{support function} if there exists a refinement $\Omega$ of $\Theta$ and a separable support function $b' : 2^{\Omega}\rightarrow [0,1]$ such that $b = b'|2^{\Theta}$.
\end{definition}

In other words, a support function \cite{kohlas93c} is the restriction of some separable support function.\\ As it can be expected, not all support functions are separable support functions. The following Proposition gives us a simple equivalent condition.
\begin{proposition}\label{pro:support}
Suppose $b$ is a belief function, and $\mathcal{C}$ its core. The following conditions are equivalent:
\begin{itemize}
\item $b$ is a support function;
\item $\mathcal{C}$ has a positive basic probability number, $m(\mathcal{C})>0$.
\end{itemize}
\end{proposition}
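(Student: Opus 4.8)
This statement is an equivalence, so I would prove the two implications separately, and both rest on one structural observation that I would establish first. \textbf{Lemma.} \emph{Every separable support function $b'$ has its core $\mathcal{C}_{b'}$ among its focal elements; in particular $m'(\mathcal{C}_{b'})>0$.} To prove it, write $b'=s_1\oplus\cdots\oplus s_m$ with each $s_i$ simple and focused on $F_i$ with degree $\tau_i$ (Definition \ref{def:simple-support-function}), and set $J=\{i:\tau_i=1\}$. Each $s_i$ has focal elements $F_i$ and, when $\tau_i<1$, also $\Omega$; expanding the orthogonal sum, the focal elements of $b'$ are precisely the non-empty sets $\bigcap_{i\in J\cup I}F_i$ for $I\subseteq\{1,\dots,m\}\setminus J$, and the associated unnormalised Dempster weights are non-negative (they form a product distribution over the independent choices made at each $s_i$), so no cancellation can occur. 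The largest such set is $\bigcap_{i\in J}F_i$ (take $I=\emptyset$); it is non-empty, for otherwise all candidate intersections would be empty and the normalisation in (\ref{eq:dempster}) would vanish, contradicting the existence of $b'$; it carries unnormalised mass at least $\prod_{i\notin J}(1-\tau_i)>0$, hence positive normalised mass; and it contains every other focal element, so it coincides with the core. This gives the Lemma.

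For ``$b$ a support function $\Rightarrow m(\mathcal{C}_b)>0$'': write $b=b'|2^\Theta$ with $b'$ a separable support function on a refinement $\Omega$ of $\Theta$, with refining $\rho$ and outer reduction $\bar\rho$ (\ref{eq:outer-reduction}). By the restriction rule (\ref{eq:restriction}), $m(A)=\sum_{\bar\rho(B)=A}m'(B)$ with non-negative summands, so every focal element of $b$ has the form $\bar\rho(B)$ for some focal element $B$ of $b'$. Since $\bar\rho$ is monotone and every such $B$ lies in $\mathcal{C}_{b'}$, every focal element of $b$ is contained in $\bar\rho(\mathcal{C}_{b'})$; and by the Lemma $\mathcal{C}_{b'}$ is itself a focal element of $b'$, so $\bar\rho(\mathcal{C}_{b'})$ is the largest focal element of $b$, hence equals $\mathcal{C}_b$. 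Therefore $m(\mathcal{C}_b)\ge m'(\mathcal{C}_{b'})>0$.

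For the converse ``$m(\mathcal{C}_b)>0 \Rightarrow b$ a support function'': let $E_1,\dots,E_k$ be the focal elements of $b$; the hypothesis says one of them is $\mathcal{C}_b$, say $E_k=\mathcal{C}_b$. If $k=1$, then $b$ is itself a simple support function focused on $\mathcal{C}_b$, hence a support function. If $k\ge2$, I would take a refinement $\Omega$ of $\Theta$ in which every singleton $\{\theta\}$ is split into $k-1$ elements $\theta^1,\dots,\theta^{k-1}$ (such a refinement exists in a family of compatible frames, cf.\ Definition \ref{def:1}(5)), and set $E'_i=\{\theta^i:\theta\in E_i\}$ for $i=1,\dots,k-1$, so that $\bar\rho(E'_i)=E_i$ while the $E'_i$ are pairwise disjoint (every intersection of two or more of them is empty). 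I would then form $b'=s_1\oplus\cdots\oplus s_k$, where $s_i$ is the simple support function on $\Omega$ focused on $E'_i$ with degree $\sigma_i=m(E_i)/(m(E_i)+m(\mathcal{C}_b))$ --- which lies in $[0,1)$ precisely because $m(\mathcal{C}_b)>0$ --- and $s_k$ is the categorical belief function focused on $\rho(\mathcal{C}_b)$ (omitted if $\mathcal{C}_b=\Theta$). This sum exists (the cores of the factors are $\Omega$ and $\rho(\mathcal{C}_b)$, which intersect) and is a separable support function; pairwise disjointness of the $E'_i$ makes its focal elements exactly $E'_1,\dots,E'_{k-1}$ and $\rho(\mathcal{C}_b)$, with mass ratios $m'(E'_i):m'(\rho(\mathcal{C}_b))=\sigma_i:(1-\sigma_i)$. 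Restricting along $\bar\rho$ via (\ref{eq:restriction}) turns these into focal elements $E_1,\dots,E_{k-1},\mathcal{C}_b$ of $b'|2^\Theta$ with ratios $m(E_i):m(\mathcal{C}_b)=\sigma_i:(1-\sigma_i)$, i.e.\ exactly the mass ratios of $b$; since both belief functions are normalised, $b'|2^\Theta=b$, so $b$ is a support function.

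The ``only if'' direction is bookkeeping once the Lemma is available; the heart of the argument --- and the main obstacle --- is the construction in the ``if'' direction. Its two delicate points are (i) choosing the refinement so that the lifted focal elements $E'_i$ can be made pairwise disjoint, which is what renders the Dempster combination of the $s_i$ transparent and lets one read off its masses; and (ii) inserting the categorical factor $s_k$ focused on $\rho(\mathcal{C}_b)$ when $\mathcal{C}_b\subsetneq\Theta$, so that no mass leaks onto $\Theta$ under restriction. After those choices, recovering $b$ reduces to solving the elementary relations $m(E_i)/m(\mathcal{C}_b)=\sigma_i/(1-\sigma_i)$, which is the one place where the hypothesis $m(\mathcal{C}_b)>0$ is genuinely used.
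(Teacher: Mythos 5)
Your proof is correct. Note that the paper itself does not prove Proposition \ref{pro:support} --- it only quotes the result from Shafer --- so there is no in-text argument to compare against; your proof is essentially the classical one. The two pillars are exactly right: (i) the lemma that a separable support function's focal elements are the non-empty intersections $\bigcap_{i\in J\cup I}F_i$, so that $\bigcap_{i\in J}F_i$ is simultaneously a focal element, the core, and a carrier of mass at least $\prod_{i\notin J}(1-\tau_i)>0$ (this is the quantitative sharpening of Proposition \ref{pro:separable}); and (ii) the ``disjointification'' construction for the converse, where lifting the $k-1$ non-core focal elements to pairwise disjoint copies in a refinement makes the Dempster combination of the $k$ simple factors transparent, and the odds $\sigma_i/(1-\sigma_i)=m(E_i)/m(\mathcal{C}_b)$ reproduce $b$ exactly after restriction. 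Your verification that the restriction map sends $E'_i\mapsto E_i$ and $\rho(\mathcal{C}_b)\mapsto\mathcal{C}_b$ injectively, so that no masses merge, is the point most often glossed over and you handle it correctly.

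Two cosmetic remarks. First, Equation (\ref{eq:restriction}) in the text is mislabelled: it writes $\bar{\rho}$ but calls it the inner reduction; the correct map for the restriction mass formula is the \emph{outer} reduction (\ref{eq:outer-reduction}), which is what you use, so your forward direction is right despite the paper's wording. Second, Axiom 5 of Definition \ref{def:1} only refines one singleton at a time; the refinement $\Omega$ you need (every $\{\theta\}$ split into $k-1$ copies) is obtained either by iterating that axiom or, more simply, by directly exhibiting $\Omega=\Theta\times\{1,\dots,k-1\}$ with $\rho(\{\theta\})=\{\theta\}\times\{1,\dots,k-1\}$, which manifestly satisfies Definition \ref{def:refining}. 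Neither point affects the validity of the argument.
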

Since there exist belief functions whose core has mass zero, Proposition \ref{pro:support} tells us that not all the belief functions are support ones (see Section \ref{sec:quasi-support}).

\subsection{Vacuous extension}

There are occasions in which the impact of a body of evidence on a frame $\Theta$ is fully discerned by one of its coarsening $\Omega$, i.e., no proposition discerned by $\Theta$ receives greater support than what is implied by propositions discerned by $\Omega$.
\begin{definition}\label{def:vacuous}
A belief function $b : 2^{\Theta}\rightarrow [0,1]$ on $\Theta$ is the \emph{vacuous extension} of a second belief function $b_0 : 2^{\Omega}\rightarrow [0,1]$, where $\Omega$ is a coarsening of $\Theta$, whenever:
\[
b(A) = \max_{B \subset \Omega,\; \rho(B) \subseteq A} b_0 (B) \hspace{5mm} \forall A \subseteq \Theta.
\]
\end{definition}
We say that $b$ is `carried' by the coarsening $\Omega$. We will make use of this all important notion in our treatment of two computer vision problems in Part III, Chapter \ref{cha:total} and Chapter \ref{cha:pose}.

\section{Impact of the evidence} \label{sec:impact}

\subsection{Families of compatible support functions}

In its 1976 essay \cite{Shafer76} Glenn Shafer distinguishes between a `subjective' and an `evidential' vocabulary, keeping distinct objects with the same mathematical description but different philosophical interpretations.

Each body of evidence $\mathcal{E}$ supporting a belief function $b$ (see \cite{Shafer76}) simultaneously affects the whole family $\mathcal{F}$ of compatible frames of discernment the domain of $b$ belongs to, determining a support function over every element of $\mathcal{F}$. We say that $\mathcal{E}$ determines a \emph{family of compatible support functions} $\{ s^{\Theta}_{\mathcal{E}}\}_{\Theta\in\mathcal{F}}$.\\ The complexity of this family depends on the following property.

\begin{definition}
The evidence $\mathcal{E}$ \emph{affects $\mathcal{F}$ sharply} if there exists a frame $\Omega \in \mathcal{F}$ that carries $s^{\Theta}_{\mathcal{E}}$ for every $\Theta \in \mathcal{F}$ that is a refinement of $\Omega$. Such a frame $\Omega$ is said to \emph{exhaust the impact} of $\mathcal{E}$ on $\mathcal{F}$.
\end{definition}
Whenever $\Omega$ exhausts the impact of $\mathcal{E}$ on $\mathcal{F}$, $s^{\Omega}_{\mathcal{E}}$ determines the whole family $\{ s^{\Theta}_{\mathcal{E}} \}_{\Theta \in \mathcal{F}}$, for any support function over any given frame $\Theta \in\mathcal{F}$ is the restriction to $\Theta$ of $s^{\Omega}_{\mathcal{E}}$'s vacuous extension (Definition \ref{def:vacuous}) to $\Theta \otimes \Omega$.\\ A typical example in which the evidence affects the family sharply is statistical evidence, in which case both frames and evidence are highly idealized \cite{Shafer76}.

\subsection{Discerning the interaction of evidence}

It is almost a commonplace to affirm that, by selecting particular inferences from a body of evidence and combining them with particular inferences from another body of evidence, one can derive almost arbitrary conclusions. In the evidential framework, in particular, it has been noted that Dempster's rule {may produce inaccurate results when applied to `inadequate' frames of discernment}.

Namely, let us consider a frame $\Theta$, its coarsening $\Omega$, and a pair of support functions $s_1, s_2$ on $\Theta$ determined by two distinct bodies of evidence. Applying Dempster's rule directly on $\Theta$ yields the following support function on its coarsening $\Omega$:
\[
(s_1\oplus s_2)|2^{\Omega},
\]
while its application on the coarser frame $\Theta$ after computing the restrictions of $s_1$ and $s_2$ to it yields:
\[
(s_1 | 2^{\Omega}) \oplus (s_2|2^{\Omega}).
\]
In general, the outcomes of these two combination strategies will be different. Nevertheless, a condition on the refining linking $\Omega$ to $\Theta$ can be imposed which guarantees their equivalence.

\begin{proposition}
Assume that $s_1$ and $s_2$ are support functions over a frame $\Theta$, their Dempster's combination $s_1\oplus s_2$ exists, $\bar{\rho}:2^{\Theta}\rightarrow 2^{\Omega}$ is an outer reduction, and
\begin{equation}
\bar{\rho}(A\cap B) = \bar{\rho}(A)\cap \bar{\rho}(B)
\end{equation}
holds wherever $A$ is a focal element of $s_1$ and $B$ is a focal element of $s_2$. Then
\[
(s_1\oplus s_2)|2^{\Omega} = (s_1|2^{\Omega}) \oplus (s_2|2^{\Omega}).
\]
\end{proposition}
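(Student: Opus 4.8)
The plan is to establish the equality at the level of basic probability assignments and then conclude, since a belief function is determined by its b.p.a.\ (Definition~\ref{def:bel2}). Write $\{A_i\}_i$ and $\{B_j\}_j$ for the focal elements of $s_1$ and $s_2$, with masses $m_1(A_i),m_2(B_j)$, and put $\kappa=\sum_{i,j:\,A_i\cap B_j=\emptyset}m_1(A_i)m_2(B_j)$, the normalisation constant of $s_1\oplus s_2$. I will use the restriction formula~(\ref{eq:restriction}) in the form: the mass $b|2^\Omega$ assigns to $D\subseteq\Omega$ equals $\sum_{C\subseteq\Theta:\,\bar\rho(C)=D}m_b(C)$, with $\bar\rho$ the given outer reduction.

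First I would compute both sides in closed form. For the left-hand side, apply Dempster's rule~(\ref{eq:dempster}) on $\Theta$ and then restrict, collapsing the two nested sums (over the $C\subseteq\Theta$ with $\bar\rho(C)=D$, and over the pairs $i,j$ with $A_i\cap B_j=C$) into a single one:
\[
m_{(s_1\oplus s_2)|2^\Omega}(D)=\frac{1}{1-\kappa}\sum_{\substack{i,j:\ A_i\cap B_j\neq\emptyset\\ \bar\rho(A_i\cap B_j)=D}}m_1(A_i)m_2(B_j).
\]
For the right-hand side, $s_1|2^\Omega$ has focal elements among the sets $\bar\rho(A_i)$, with mass $\sum_{i:\,\bar\rho(A_i)=A'}m_1(A_i)$, and similarly for $s_2|2^\Omega$; combining the two by Dempster's rule and regrouping the pairs $(i,j)$ by the value of $\big(\bar\rho(A_i),\bar\rho(B_j)\big)$ yields, for $D\neq\emptyset$,
\[
m_{(s_1|2^\Omega)\oplus(s_2|2^\Omega)}(D)=\frac{1}{1-\kappa'}\sum_{i,j:\ \bar\rho(A_i)\cap\bar\rho(B_j)=D}m_1(A_i)m_2(B_j),
\]
where $\kappa'=\sum_{i,j:\ \bar\rho(A_i)\cap\bar\rho(B_j)=\emptyset}m_1(A_i)m_2(B_j)$.

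Then I would match the two expressions. The only ingredients are the hypothesis $\bar\rho(A_i\cap B_j)=\bar\rho(A_i)\cap\bar\rho(B_j)$ on focal elements and the elementary remark that $\bar\rho(C)=\emptyset$ precisely when $C=\emptyset$ (immediate from~(\ref{eq:outer-reduction}): the sets $\rho(\{\omega\})$, $\omega\in\Omega$, partition $\Theta$, so a nonempty $C$ meets at least one of them). Together these give $A_i\cap B_j=\emptyset\iff\bar\rho(A_i)\cap\bar\rho(B_j)=\emptyset$, whence $\kappa'=\kappa$; since some $A_i\cap B_j$ is nonempty we also get $\kappa<1$, so that $(s_1|2^\Omega)\oplus(s_2|2^\Omega)$ is indeed defined. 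Moreover, for $D\neq\emptyset$ the index condition ``$A_i\cap B_j\neq\emptyset$ and $\bar\rho(A_i\cap B_j)=D$'' is exactly ``$\bar\rho(A_i)\cap\bar\rho(B_j)=D$''. Hence the two basic probability assignments agree on every $D\subseteq\Omega$ (both vanish at $\emptyset$), and the claimed equality of belief functions follows.

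I expect the one genuinely delicate step to be the combinatorial bookkeeping when the nested sums are collapsed and re-expanded: one must check that distinct focal elements $A_i\neq A_{i'}$ sharing the same outer reduction are correctly absorbed by the regrouping, and that the hypothesis is invoked at exactly the spot where the two summation ranges have to be identified. No deeper idea is required: the proposition says, in effect, that the outer reduction passes through Dempster's combination whenever it passes through the pairwise intersections of the focal elements involved.
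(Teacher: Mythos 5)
Your proof is correct. The paper states this proposition without proof (it is quoted from Shafer's monograph), and your argument — computing both basic probability assignments explicitly, using $\bar\rho(C)=\emptyset\iff C=\emptyset$ to identify the two normalisation constants, and invoking the hypothesis to identify the two summation ranges for $D\neq\emptyset$ — is exactly the standard direct verification; every step, including the regrouping of pairs $(i,j)$ by the values of $\bar\rho(A_i)$ and $\bar\rho(B_j)$, checks out.
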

In this case $\Omega$ is said to \emph{discern the relevant interaction} of $s_1$ and $s_2$. Of course if $s_1$ and $s_2$ are carried by a coarsening of $\Theta$ then this latter frame discerns their relevant interaction.

The above definition generalizes to entire bodies of evidence.

\begin{definition}
Suppose $\mathcal{F}$ is a family of compatible frames, $\{ s^{\Theta}_{\mathcal{E}_1}\}_{\Theta \in \mathcal{F}}$ is the family of support functions determined by a body of evidence $\mathcal{E}_1$, and $\{ s^{\Theta}_{\mathcal{E}_2}\}_{\Theta \in \mathcal{F}}$ is the family of support functions determined by a second body of evidence $\mathcal{E}_2$.\\ Then, a particular frame $\Omega \in \mathcal{F}$ is said to discern the relevant interaction of $\mathcal{E}_1$ and $\mathcal{E}_2$ if:
\[
\bar{\rho}(A\cap B) = \bar{\rho}(A)\cap \bar{\rho}(B)
\]
whenever $\Theta$ is a refinement of $\Omega$, where $\bar{\rho}:2^{\Theta}\rightarrow 2^{\Omega}$ is the associated outer reduction, $A$ is a focal element of $s_{\mathcal{E}_1}^{\Theta}$ and $B$ is a focal element of $s_{\mathcal{E}_2}^{\Theta}$.
\end{definition}

\section{Quasi support functions} \label{sec:quasi-support}

Not every belief function is a support function. The question remains of how to characterise in a precise way the class of belief functions which are \emph{not} support functions.\\ Let us consider a finite power set $2^{\Theta}$. A sequence $f_1,f_2,...$ of set functions on $2^{\Theta}$ is said to tend to a limit function $f$ if
\begin{equation} \label{eq:limit}
\lim_{i\rightarrow \infty} f_i(A)=f(A)\;\;\;\forall A\subset \Theta.
\end{equation} 
It can be proved that \cite{Shafer76}:
\begin{proposition} \label{pro:limit}
If a sequence of belief functions has a limit, then the limit is itself a belief function.
\end{proposition}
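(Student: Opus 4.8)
The plan is to observe that every condition defining a belief function in Definition~\ref{def:bel1} is a \emph{closed} condition on the finite tuple of values $\{b(A)\}_{A\subseteq\Theta}$ --- an equality or a weak inequality between finite linear combinations of these values --- and that closed conditions pass to pointwise limits. Concretely, set $f(A)=\lim_{i\to\infty}b_i(A)$ for every $A\subseteq\Theta$, as in~(\ref{eq:limit}) (the limit exists by hypothesis), and then verify the three requirements of Definition~\ref{def:bel1} for $f$ one at a time, each reduced to the corresponding requirement for the $b_i$.

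First I would check that $f$ takes values in $[0,1]$: since $b_i(A)\in[0,1]$ for all $i$ and $[0,1]$ is closed in $\mathbb{R}$, the limit $f(A)$ lies in $[0,1]$ as well. The normalisation conditions are then immediate, $f(\emptyset)=\lim_i b_i(\emptyset)=0$ and $f(\Theta)=\lim_i b_i(\Theta)=1$. Finally, for superadditivity, fix an arbitrary positive integer $n$ and an arbitrary collection $A_1,\dots,A_n\in 2^\Theta$. For each $i$, inequality~(\ref{eq:superadditivity}) reads
\[
b_i(A_1\cup\cdots\cup A_n)\;\geq\;\sum_k b_i(A_k)-\sum_{k<l}b_i(A_k\cap A_l)+\cdots+(-1)^{n+1}b_i(A_1\cap\cdots\cap A_n).
\]
The right-hand side is a fixed finite linear combination of the convergent sequences $\{b_i(\cdot)\}$, hence converges to the same linear combination of the values $f(\cdot)$, while the left-hand side converges to $f(A_1\cup\cdots\cup A_n)$; since ``$\geq$'' is preserved under limits, $f$ satisfies~(\ref{eq:superadditivity}) for this $n$ and this tuple. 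As $n$ and $A_1,\dots,A_n$ were arbitrary, $f$ meets all of Definition~\ref{def:bel1} and is a belief function.

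There is no genuine obstacle here; the only point deserving care is that superadditivity must be verified for \emph{every} $n$ and \emph{every} tuple (with repetitions allowed), but each such instance involves only finitely many set values, so the limiting argument applies verbatim and separately to each one --- finiteness of $\Theta$ being used only to ensure that all the sums appearing are finite. As an equivalent route I would mention passing through the Möbius inversion~(\ref{eq:moebius}): this is a fixed finite linear map, so the basic probability assignments $m_i$ of the $b_i$ converge pointwise to the Möbius transform $m$ of $f$; then $m(\emptyset)=\lim_i m_i(\emptyset)=0$, $m(A)=\lim_i m_i(A)\geq 0$, and $\sum_{A\subseteq\Theta}m(A)=\lim_i\sum_{A\subseteq\Theta}m_i(A)=1$, so $m$ is a basic probability assignment (Definition~\ref{def:bpa}) and $f$ is the belief function it induces, by Definition~\ref{def:bel2} and the equivalence of Definitions~\ref{def:bel1} and~\ref{def:bel2}.
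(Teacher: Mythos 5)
Your proof is correct. The paper itself gives no argument for this proposition --- it simply cites Shafer --- and your observation that all the defining conditions of Definition~\ref{def:bel1} are closed conditions (equalities and weak inequalities among finitely many values, each instance of superadditivity involving only finitely many sets) is exactly the standard argument; the alternative route via the M\"obius transform, which is a fixed finite linear map so that nonnegativity and normalisation of the $m_i$ pass to the limit, is equally valid and arguably cleaner given the paper's emphasis on Equation~(\ref{eq:moebius}).
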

In other words, {the class of belief functions is closed} with respect to the limit operator (\ref{eq:limit}). The latter provides us with an insight into the nature of non-support functions.
\begin{proposition}
If a belief function $b:2^\Theta \rightarrow [0,1]$ is not a support function, then there exists a refinement $\Omega$ of $\Theta$ and a sequence $s_1,s_2,...$ of separable support functions over $\Omega$ such that:
\[
b = \Big ( \lim_{i\rightarrow \infty} s_i \Big ) \Big | 2^{\Theta}.
\]
\end{proposition}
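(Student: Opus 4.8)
The plan is to combine two facts established earlier: (i) by Proposition \ref{pro:support}, if $b$ is not a support function then its core $\mathcal{C}$ has zero mass, $m(\mathcal{C})=0$; and (ii) the class of belief functions is closed under the pointwise limit (Proposition \ref{pro:limit}). So the strategy is constructive: exhibit an explicit refinement $\Omega$ of $\Theta$ together with a sequence of separable support functions $s_i$ on $\Omega$ whose restrictions to $2^\Theta$ converge to $b$. The natural choice of refinement is one in which the offending ``ignorance'' mass carried by $\mathcal{C}$ can be reallocated, in the limit, onto $\mathcal{C}$ itself --- so I would pick $\Omega$ refining $\Theta$ so that some element $\theta_0 \in \mathcal{C}$ is split into countably many (or at least arbitrarily many) finer outcomes, giving room to build simple support functions focused on ever-smaller subsets of $\rho(\mathcal{C})$.

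First I would set up the approximating b.p.a.\ on $\Omega$. The idea is: keep all focal elements of $b$ other than $\mathcal{C}$ (vacuously extended, i.e.\ replace each focal element $A$ by $\rho(A)$), and replace the mass $m(\mathcal{C})$ --- which would make the ``vacuous-like'' part of $b$ non-separable on $\Theta$ --- by a finite sum of masses on a nested family $\rho(\mathcal{C}) \supset C^{(1)} \supset C^{(2)} \supset \cdots$ of subsets of $\rho(\mathcal{C})$, chosen so that $\underline{\rho}$ (the inner reduction, Equation~(\ref{eq:inner-reduction})) sends each $C^{(k)}$ back to $\mathcal{C}$. For the $i$-th approximant I would distribute $m(\mathcal{C})$ among $C^{(1)},\dots,C^{(i)},\rho(\mathcal{C})$ in proportions that make each resulting b.f.\ on $\Omega$ a genuine separable support function (one checks separability via Proposition \ref{pro:separable}: the focal elements must be closed under intersection, which the nested construction guarantees, and one must verify the intersection condition against the vacuously-extended focal elements of $b$ as well --- here the fact that $\mathcal{C}$ is the core, hence contains every focal element of $b$, does the work). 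Each $s_i$ so built is a finite orthogonal sum of simple support functions by Definition \ref{def:separable-support-function}.

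Next I would compute the restriction $s_i|2^\Theta$ using the mass relation (\ref{eq:restriction}), $m_1(A) = \sum_{A = \underline{\rho}(B)} m_2(B)$: all the masses $m(C^{(1)}),\dots,m(C^{(i)})$ on the nested sets fold back onto $\mathcal{C} \subset \Theta$, and the vacuously extended focal elements fold back to their originals, so $s_i|2^\Theta$ has the \emph{same} focal elements as $b$, with masses that differ from $m$ only in how $m(\mathcal{C})$ is internally structured on $\Omega$ --- and after restriction they agree. I would arrange the proportions so that as $i \to \infty$ the sequence $m_{s_i|2^\Theta}(A)$ converges to $m(A)$ for every $A$, hence $s_i|2^\Theta(A) \to b(A)$ for all $A$ by the definition (\ref{eq:belief}) of a b.f.\ from its b.p.a.

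The main obstacle I expect is verifying separability of each $s_i$ on $\Omega$ --- i.e.\ showing the chosen focal-element family really is intersection-closed \emph{and} that the construction honestly decomposes into simple support functions with the right weights rather than merely into something satisfying the superadditivity axioms. This is where the freedom to choose $\Omega$ (splitting an element of the core finely enough, per property (5) of Definition \ref{def:1}) and the freedom to choose the nested sets $C^{(k)}$ and their masses must be used carefully; a clean way is to make the $C^{(k)}$ totally ordered by inclusion so that the focal family is a chain above the (already intersection-closed) image of $b$'s focal elements, which forces intersection-closure. The remaining steps --- computing restrictions and checking pointwise convergence --- are routine bookkeeping with (\ref{eq:restriction}) and (\ref{eq:belief}), and closure of the belief-function class under limits (Proposition \ref{pro:limit}) then confirms the limit object is exactly $b$.
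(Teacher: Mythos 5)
There is a genuine gap, and it sits at the centre of your construction. You correctly begin from Proposition \ref{pro:support}: since $b$ is not a support function, $m(\mathcal{C}) = 0$. But your construction then proposes to ``replace the mass $m(\mathcal{C})$'' by masses spread over a nested chain $C^{(1)} \supset C^{(2)} \supset \cdots$ inside $\rho(\mathcal{C})$ --- there is no mass there to redistribute. More seriously, your own bookkeeping defeats the construction: you arrange matters so that every $s_i|2^{\Theta}$ has the same focal elements as $b$ and, after the chain folds back onto $\mathcal{C}$, the same basic probability assignment. If that were achievable you would have exhibited $b$ itself as the restriction of a separable support function on $\Omega$, i.e.\ proved that $b$ \emph{is} a support function --- contradicting the hypothesis. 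The approximants cannot restrict exactly to $b$; they must genuinely differ from it and only converge to it, and your proposal never identifies where that difference lives.

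The missing idea is to perturb $b$ before refining, not after. Set $b_i = (1-\epsilon_i)\,b + \epsilon_i\, b_{\mathcal{C}}$ with $\epsilon_i \downarrow 0$, where $b_{\mathcal{C}}$ is the categorical belief function focused on the core. Each $b_i$ has the same core $\mathcal{C}$ and $m_{b_i}(\mathcal{C}) \geq \epsilon_i > 0$, so Proposition \ref{pro:support} applies and each $b_i$ is a support function: $b_i = s_i|2^{\Theta}$ for some separable $s_i$ on some refinement. One then checks that a single refinement $\Omega$ serves for all $i$ (the witnessing construction depends only on the focal structure, which is the same for every $i$), extracts a convergent subsequence of the $s_i$ (they live in a compact subset of $\mathbb{R}^{|2^{\Omega}|}$, and Proposition \ref{pro:limit} guarantees the limit is a belief function), and observes that restriction commutes with pointwise limits. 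Your route instead tries to manufacture separability by hand, and the tool you lean on --- intersection-closure of the focal family --- is only the \emph{necessary} direction of Proposition \ref{pro:separable}; it does not certify that a belief function decomposes into simple support functions, and the vacuously extended focal elements of a non-separable $b$ need not be intersection-closed in the first place, a defect a chain sitting above them cannot repair. That hard step is exactly the content of Proposition \ref{pro:support}, which your argument must invoke rather than circumvent.
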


\begin{definition}
We call belief functions of this class \emph{quasi-support functions}.
\end{definition}
It should be noted that
\[
\Big ( \lim_{i\rightarrow \infty} s_i \Big ) \Big | 2^{\Theta} = \lim_{i\rightarrow \infty} (s_i|2^{\Theta}),
\]
so that we can also say that $s$ is a limit of a sequence of support functions.

The following proposition investigates some of the properties of quasi-support functions.
\begin{proposition}\label{pro:negation}
Suppose $b: 2^\Theta \rightarrow [0,1]$ is a belief function over $\Theta$, and $A\subset \Theta$ a subset of $\Theta$. If $b(A)>0$ and $b(\bar{A})>0$, with $b(A) + b(\bar{A}) = 1$, then $b$ is a quasi-support function.
\end{proposition}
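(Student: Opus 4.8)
The plan is to reduce the statement to Proposition~\ref{pro:support}: a belief function is a support function exactly when its core carries strictly positive mass. So it suffices to show that the hypotheses force $m(\mathcal{C})=0$, where $\mathcal{C}$ is the core (\ref{eq:core}) of $b$. Since the proposition immediately preceding this statement shows that every belief function which fails to be a support function is a restriction of a limit of separable support functions — i.e.\ precisely a quasi-support function — establishing $m(\mathcal{C})=0$ completes the argument.

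First I would unwind the identity $b(A)+b(\bar A)=1$ in terms of the basic probability assignment $m$ of $b$. By (\ref{eq:belief}) we have $b(A)=\sum_{B\subseteq A} m(B)$ and $b(\bar A)=\sum_{B\subseteq \bar A} m(B)$; adding these and comparing with $\sum_{B\subseteq\Theta} m(B)=1$ yields $\sum_{B:\,B\not\subseteq A,\ B\not\subseteq \bar A} m(B)=0$. In other words, every focal element of $b$ is contained either in $A$ or in $\bar A$: no focal element straddles the partition $\{A,\bar A\}$ of $\Theta$.

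Next I would locate the core relative to this partition. Because $b(A)>0$, there is a focal element $B_1\subseteq A$ with $m(B_1)>0$, and as $B_1\neq\emptyset$ this gives $\mathcal{C}\cap A\supseteq B_1\neq\emptyset$; symmetrically $b(\bar A)>0$ gives $\mathcal{C}\cap\bar A\neq\emptyset$. Hence $\mathcal{C}$ meets both $A$ and $\bar A$, so $\mathcal{C}\not\subseteq A$ and $\mathcal{C}\not\subseteq \bar A$. But by the previous step no focal element has this property, so $\mathcal{C}$ is not itself a focal element, i.e.\ $m(\mathcal{C})=0$. Proposition~\ref{pro:support} then says $b$ is not a support function, and therefore $b$ is a quasi-support function.

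I do not expect any genuine obstacle: the argument is purely combinatorial, resting on the additivity of $m$ over $2^\Theta$ and the definition of the core. The only point deserving a word of care is the identification of the `quasi-support' class with the complement of the support functions within the belief functions — but this is exactly the content of the proposition preceding the statement, so once $m(\mathcal{C})=0$ is in hand nothing further needs to be checked.
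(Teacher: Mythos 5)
Your proof is correct. The paper states this proposition without giving a proof (it is quoted from Shafer), and your argument is exactly the standard one: the identity $b(A)+b(\bar{A})=1$ forces every focal element into $A$ or $\bar{A}$, while $b(A)>0$ and $b(\bar{A})>0$ force the core $\mathcal{C}$ to meet both sides, so $\mathcal{C}$ cannot itself be focal, $m(\mathcal{C})=0$, and Proposition~\ref{pro:support} together with the characterisation of non-support functions as quasi-support functions finishes the argument.
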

It easily follows that Bayesian b.f.s are quasi-support functions, unless they commit all their probability mass to a single element of the frame.
\begin{proposition}
A Bayesian belief function $b$ is a support function iff there exists $\theta\in\Theta$ such that
$b(\{\theta\})=1$.
\end{proposition}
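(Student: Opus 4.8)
The plan is to reduce the statement to Proposition \ref{pro:support}, which characterises support functions among all belief functions by the single condition $m(\mathcal{C})>0$ on the core. So the real work is to compute, for a Bayesian belief function, what its core is and what mass is attached to it.

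First I would pin down the focal elements of a Bayesian $b$. By the Proposition in Section \ref{sec:bayesian-belief-functions}, there is a distribution $p:\Theta\rightarrow[0,1]$ with $\sum_{\theta\in\Theta}p(\theta)=1$ and $b(A)=\sum_{\theta\in A}p(\theta)$ for all $A$. Substituting this into the Moebius inversion formula (\ref{eq:moebius}) gives $m(\{\theta\})=b(\{\theta\})=p(\theta)$ for singletons, while for any $A$ with $|A|\geq 2$ one swaps the two summations and is left, for each fixed $\theta\in A$, with $\sum_{C\subseteq A\setminus\{\theta\}}(-1)^{|A\setminus\{\theta\}|-|C|}=(1-1)^{|A|-1}=0$; hence $m(A)=0$ whenever $|A|\geq 2$. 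Consequently the focal elements of $b$ are exactly the singletons $\{\theta\}$ with $p(\theta)>0$, and the core is $\mathcal{C}=\{\theta\in\Theta:p(\theta)>0\}$.

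Then I would combine the two observations. If $\mathcal{C}$ has at least two elements, it is not a singleton, hence not a focal element of $b$, so $m(\mathcal{C})=0$ and $b$ is not a support function by Proposition \ref{pro:support}. If instead $\mathcal{C}=\{\theta\}$, then $p$ is concentrated on $\theta$, i.e. $p(\theta)=1$, equivalently $b(\{\theta\})=1$; in that case $m(\mathcal{C})=m(\{\theta\})=1>0$, so $b$ is a support function. (Alternatively, for this direction one may simply note that such a $b$ is the simple support function focused on $\{\theta\}$ with $\sigma=1$, hence separable, hence a support function by Definitions \ref{def:simple-support-function} and \ref{def:separable-support-function}.) This settles both implications at once: $b$ is a support function $\iff$ $\mathcal{C}$ is a singleton $\iff$ $b(\{\theta\})=1$ for some $\theta\in\Theta$.

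The only mildly delicate point is the Moebius/telescoping identity giving $m(A)=0$ for $|A|\geq 2$; the rest is bookkeeping on top of Proposition \ref{pro:support}. A purely alternative route for the "only if" direction would invoke Proposition \ref{pro:negation} — if $b$ is Bayesian but not concentrated on a single outcome, pick a nontrivial $A$ with $0<b(A)<1$, and additivity gives $b(A)+b(\bar A)=1$, so $b$ is a quasi-support function and hence not a support function — but this still implicitly rests on the same core-mass dichotomy, so I would present the direct argument via the core as the main proof.
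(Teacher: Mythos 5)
Your proof is correct, but it takes a different route from the one the paper intends. The paper presents this proposition as an immediate corollary of Proposition~\ref{pro:negation}: if $b$ is Bayesian and no singleton carries all the mass, one picks a nontrivial event $A$ with $0<b(A)<1$, additivity gives $b(A)+b(\bar A)=1$ with both terms positive, so $b$ is a quasi-support function and hence not a support function; conversely, $b(\{\theta\})=1$ makes $b$ the simple support function focused on $\{\theta\}$, hence trivially a support function. This is exactly the ``alternative route'' you mention in your last paragraph and then set aside. Your main argument instead goes through Proposition~\ref{pro:support} (a b.f.\ is a support function iff $m(\mathcal{C})>0$), first establishing via Moebius inversion that a Bayesian b.f.\ has only singleton focal elements, so its core $\mathcal{C}=\{\theta:p(\theta)>0\}$ carries zero mass unless it is itself a singleton. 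Both arguments are sound; your telescoping computation $\sum_{C\subseteq A\setminus\{\theta\}}(-1)^{|A|-1-|C|}=(1-1)^{|A|-1}=0$ for $|A|\geq 2$ is correct, and the singleton-focal-element fact is in any case already used in the paper (proof of Theorem~\ref{the:dominated}). What your route buys is self-containedness: it rests only on the elementary core-mass criterion rather than on the limit-of-separable-functions machinery behind quasi-support functions. What it costs is that it obscures the conceptual point the paper is making in that section, namely that non-degenerate Bayesian belief functions are not merely non-support functions but genuinely quasi-support functions (limits requiring infinite evidence) — a strictly stronger statement that your core argument does not recover.
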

Furthermore, it is easy to see that vacuous extensions of Bayesian belief functions are also quasi-support functions.

As Shafer remarks, people used to think of beliefs as chances can be disappointed to see them relegated to a peripheral role, as beliefs that cannot arise from actual, finite evidence. On the other hand, statistical inference already teaches us that chances can be evaluated only after infinitely many repetitions of independent random experiments.\footnote{Using the notion of \emph{weight of evidence} Shafer gives a formal explanation of this intuitive observation by showing that a Bayesian b.f. indicates an \emph{infinite} amount of evidence in favor of \emph{each} possibility in its core \cite{Shafer76}.}

\subsection{Bayes' theorem}

Indeed, as it commits an infinite amount of evidence in favor of each possible element of a frame of discernment, a Bayesian belief function tends to obscure much of the evidence additional belief functions may carry with them.
\begin{definition} \label{def:relative-plausibilities}
A function $l:\Theta\rightarrow [0,\infty)$ is said to express the \emph{relative plausibilities} of singletons
under a support function $s : 2^\Theta \rightarrow [0,1]$ if
\[
l(\theta) = c\cdot pl_s(\{\theta\})
\]
for all $\theta \in \Theta$, where $pl_s$ is the plausibility function for $s$ and the constant $c$ does not depend on
$\theta$.
\end{definition}
\begin{proposition}\label{pro:bayes} (\textbf{Bayes' theorem})
Suppose $b_0$ and $s$ are a Bayesian belief function and a support function on the same frame $\Theta$, respectively. Suppose $l:\Theta\rightarrow [0,\infty)$ expresses the relative plausibilities of singletons under $s$. Suppose also that their Dempster's sum $b' = s \oplus b_0$ exists. Then $b'$ is Bayesian, and
\[
b'(\{\theta\}) = K \cdot b_0(\{\theta\}) l(\theta) \hspace{5mm} \forall \theta\in\Theta,
\]
where
\[
K = \bigg( \sum_{\theta \in \Theta} b_0(\{\theta\}) l(\theta) \bigg )^{-1}.
\]
\end{proposition}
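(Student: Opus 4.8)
The plan is to exploit the fact that a Bayesian belief function has only singletons as focal elements, so that Dempster's rule collapses into a simple reweighting of those singletons. First I would recall, via the Moebius inversion formula (\ref{eq:moebius}) together with the Proposition characterising Bayesian belief functions, that the b.p.a. $m_0$ of $b_0$ is supported on singletons, with $m_0(\{\theta\}) = b_0(\{\theta\})$ for every $\theta \in \Theta$ and $m_0(A) = 0$ whenever $|A| \geq 2$. Writing $m_s$ for the b.p.a. of $s$, Definition \ref{def:dempster} then says that the focal elements of $b' = s \oplus b_0$ are the non-empty sets of the form $A_i \cap \{\theta\}$ with $A_i$ a focal element of $s$; each such intersection is either $\{\theta\}$ (when $\theta \in A_i$) or $\emptyset$. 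Hence every focal element of $b'$ is a singleton, and by the Proposition characterising Bayesian belief functions $b'$ is itself Bayesian; in particular $b'(\{\theta\}) = m'(\{\theta\})$, where $m'$ denotes the b.p.a. of $b'$.

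Next I would evaluate the two sums appearing in Dempster's rule (\ref{eq:dempster}) specialised to $A = \{\theta\}$. In the numerator, the only pairs $(A_i,\{\theta'\})$ with $A_i \cap \{\theta'\} = \{\theta\}$ are those with $\theta' = \theta$ and $\theta \in A_i$, so the numerator equals $m_0(\{\theta\}) \sum_{A_i \ni \theta} m_s(A_i) = b_0(\{\theta\})\, pl_s(\{\theta\})$, using the expression $pl_s(\{\theta\}) = \sum_{B \cap \{\theta\} \neq \emptyset} m_s(B)$. For the denominator, the total non-conflicting mass $\sum_{i,j : A_i \cap \{\theta_j\} \neq \emptyset} m_s(A_i) m_0(\{\theta_j\})$ splits, by the same reasoning, as $\sum_{\theta \in \Theta} b_0(\{\theta\}) pl_s(\{\theta\})$, a quantity that is strictly positive precisely because $s \oplus b_0$ is assumed to exist. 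Dividing gives $m'(\{\theta\}) = b_0(\{\theta\}) pl_s(\{\theta\}) \big/ \sum_{\theta' \in \Theta} b_0(\{\theta'\}) pl_s(\{\theta'\})$.

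Finally I would substitute $pl_s(\{\theta\}) = l(\theta)/c$ from Definition \ref{def:relative-plausibilities}; the constant $c$ cancels between numerator and denominator, leaving $b'(\{\theta\}) = m'(\{\theta\}) = K \cdot b_0(\{\theta\}) l(\theta)$ with $K = \big( \sum_{\theta' \in \Theta} b_0(\{\theta'\}) l(\theta') \big)^{-1}$, as claimed. There is no serious obstacle here; the only points requiring care are the verification that all focal elements of $b'$ are singletons (so that $b'$ is genuinely Bayesian and its belief and mass values on singletons coincide) and the remark that the existence of the orthogonal sum guarantees both that the normalisation denominator does not vanish and that $K$ is a well-defined, finite, positive constant.
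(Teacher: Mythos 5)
Your proof is correct, and it is the standard argument (the one in Shafer's book, which the paper cites for this result without reproducing a proof): since $b_0$ is Bayesian its mass is concentrated on singletons, so every focal element of $s\oplus b_0$ is a singleton, and Dempster's rule (\ref{eq:dempster}) evaluated on $\{\theta\}$ yields numerator $b_0(\{\theta\})\,pl_s(\{\theta\})$ and denominator $\sum_{\theta'} b_0(\{\theta'\})\,pl_s(\{\theta'\})$, after which the proportionality constant $c$ in $l(\theta)=c\cdot pl_s(\{\theta\})$ cancels. You also correctly flag the two points that need explicit mention — that $b'(\{\theta\})=m'(\{\theta\})$ because all focal elements are singletons, and that existence of the orthogonal sum guarantees the denominator is strictly positive — so there is nothing to add.
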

This implies that the combination of a Bayesian b.f. with a support function requires nothing more than the latter's relative plausibilities of singletons.\\ It is interesting to note that the latter functions behave multiplicatively under combination, 
\begin{proposition} If $s_1,...,s_n$ are combinable support functions, and $l_i$ represents the relative plausibilities of singletons under $s_i$ for $i=1,...,n$, then $l_1\cdot l_2\cdot \cdots \cdot l_n$ expresses the relative
plausibilities of singletons under $s_1\oplus\cdots \oplus s_n$.
\end{proposition}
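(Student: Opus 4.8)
The plan is to reduce the statement about relative plausibilities of singletons under a Dempster combination to a direct computation using the formula $pl(A) = 1 - b(\bar A)$ together with the conditioning/combination identity already recorded in the excerpt. The key observation is Dempster's rule of conditioning in its plausibility form, Equation~(\ref{eq:dempster-conditioning}): if $b'$ is a simple support function (or more generally if we think of combining with an arbitrary belief function), the plausibility of a singleton transforms multiplicatively up to normalization. So first I would establish the two-function case, $n=2$: given combinable support functions $s_1,s_2$, I claim $pl_{s_1\oplus s_2}(\{\theta\}) = c \cdot pl_{s_1}(\{\theta\})\, pl_{s_2}(\{\theta\})$ for some constant $c$ independent of $\theta$.

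To prove the $n=2$ case I would compute $pl_{s_1\oplus s_2}(\{\theta\})$ directly from the orthogonal sum formula. Writing $m = m_{s_1\oplus s_2}$, we have $pl_{s_1\oplus s_2}(\{\theta\}) = \sum_{C \ni \theta} m(C)$, and by Definition~\ref{def:dempster} this equals
\[
\frac{\sum_{A_i \cap B_j \ni \theta} m_1(A_i) m_2(B_j)}{1 - \sum_{A_i\cap B_j = \emptyset} m_1(A_i)m_2(B_j)}.
\]
The numerator sum is over all pairs of focal elements whose intersection contains $\theta$; but $A_i \cap B_j \ni \theta$ holds if and only if $\theta \in A_i$ and $\theta\in B_j$, so the numerator factors as $\big(\sum_{A_i \ni \theta} m_1(A_i)\big)\big(\sum_{B_j \ni \theta} m_2(B_j)\big) = pl_{s_1}(\{\theta\})\, pl_{s_2}(\{\theta\})$. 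The denominator is a fixed normalization constant $1/c$ that does not depend on $\theta$. Hence $pl_{s_1\oplus s_2}(\{\theta\}) = c\, pl_{s_1}(\{\theta\})\, pl_{s_2}(\{\theta\})$, and therefore if $l_i = c_i\, pl_{s_i}$ represents the relative plausibilities under $s_i$, the product $l_1 l_2 = c_1 c_2\, pl_{s_1} pl_{s_2}$ is proportional to $pl_{s_1\oplus s_2}$ with proportionality constant $c_1 c_2 / c$, which is again independent of $\theta$; so $l_1 l_2$ expresses the relative plausibilities of singletons under $s_1 \oplus s_2$.

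For the general case I would proceed by induction on $n$, using associativity of the orthogonal sum: $s_1 \oplus \cdots \oplus s_n = (s_1\oplus\cdots\oplus s_{n-1})\oplus s_n$. By the inductive hypothesis $l_1\cdots l_{n-1}$ expresses the relative plausibilities under $s_1\oplus\cdots\oplus s_{n-1}$, and since that combination is itself a support function (hence in particular a belief function to which the $n=2$ argument applies — note the factorization argument above used only that $s_1,s_2$ are belief functions, not specifically support functions), one more application of the $n=2$ result gives that $(l_1\cdots l_{n-1})\cdot l_n$ expresses the relative plausibilities under $(s_1\oplus\cdots\oplus s_{n-1})\oplus s_n = s_1\oplus\cdots\oplus s_n$. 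The main obstacle is essentially bookkeeping: making sure the combinability hypotheses propagate correctly through the induction (i.e.\ that each partial combination exists, which follows from the assumed combinability of the full collection), and checking that the factorization of the numerator sum over focal-element pairs is legitimate — the one genuinely substantive step, though it is a one-line observation once one writes $\{(i,j): \theta \in A_i \cap B_j\} = \{i : \theta\in A_i\}\times\{j : \theta\in B_j\}$.
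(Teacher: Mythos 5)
Your proof is correct: the paper states this proposition without proof (it is quoted from Shafer), and your argument — factorising the unnormalised Dempster numerator over singletons via $\{(i,j):\theta\in A_i\cap B_j\}=\{i:\theta\in A_i\}\times\{j:\theta\in B_j\}$, absorbing the $\theta$-independent normalisation into the constant $c$, and then inducting using associativity of $\oplus$ — is exactly the standard proof of this fact. The only point worth flagging is cosmetic: Definition~\ref{def:relative-plausibilities} is phrased for support functions, so strictly speaking the inductive step invokes "relative plausibilities under $s_1\oplus\cdots\oplus s_{n-1}$"; as you note, your $n=2$ computation uses nothing beyond the fact that the operands are combinable belief functions, so this causes no difficulty.
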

providing a simple algorithm to combine any number of support functions with a Bayesian b.f.

\subsection{Incompatible priors}\label{sec:priors}

Having an established convention on how to set a Bayesian {prior} would be useful, as it would prevent us from making arbitrary and possibly unsupported choices that could eventually affect the final result of our inference process. Unfortunately, the only natural such convention (a uniform prior) is strongly dependent on the frame of discernment at hand, and is sensitive to both refining and coarsening operators.

More precisely, on a frame $\Theta$ with $n$ elements it is natural to represent our ignorance by adopting an uninformative uniform prior assigning a mass $1/n$ to every outcome $\theta\in\Theta$. However, the same convention applied to a different compatible frame $\Omega$ of the same family may yield a prior that is incompatible with the first one. As a result, the combination of a given body of evidence with one arbitrary such prior can yield almost any possible result \cite{Shafer76}.

\subsubsection{Example: Sirius' planets}

A team of scientists wonder whether there is life around Sirius. Since they do not have any evidence concerning this question, they adopt a vacuous belief function to represent their ignorance on the frame
\[
\Theta = \{ \theta_1,\theta_2 \},
\]
where $\theta_1,\theta_2$ are the answers ``there is life" and ``there is no life". They can also consider the
question in the context of a more refined set of possibilities. For example, our scientists may raise the question of
whether there even exist planets around Sirius. In this case the set of possibilities becomes
\[
\Omega = \{ \zeta_1,\zeta_2,\zeta_3 \},
\]
where $\zeta_1,\zeta_2,\zeta_3$ are respectively the possibility that there is life around Sirius, that there are
planets but no life, and there are no planets at all. Obviously, in an evidential setup our ignorance still needs to be represented by a vacuous belief function, which is exactly the vacuous extension of the vacuous b.f. previously defined on $\Theta$.

From a Bayesian point of view, instead, it is difficult to assign consistent degrees of belief over $\Omega$ and $\Theta$ both symbolizing the lack of evidence. Indeed, on $\Theta$ a uniform prior yields $p(\{\theta_1\}) = p(\{\theta_1\}) = 1/2$, while on $\Omega$ the same choice will yield $p'(\{\zeta_1\})=p'(\{\zeta_2\})=p'(\{\zeta_3\})=1/3$. $\Omega$ and $\Theta$ are obviously compatible (as the former is a refinement of the latter): the vacuous extension of $p$ onto $\Omega$ produces a Bayesian distribution
\[
p(\{\zeta_1\}) = 1/3,\hspace{5mm} p(\{\zeta_1,\zeta_2\}) = 2/3
\]
which is inconsistent with $p'$!

\section{Consonant belief functions} \label{sec:consonant}

To conclude this brief review of evidence theory we wish to recall a class of belief functions which is, in some sense, opposed to that quasi-support functions -- that of \emph{consonant} belief functions.
\begin{definition} \label{def:consonant}
A belief function is said to be consonant if its focal elements $A_1,...,A_m$ are nested: $A_1 \subset A_2 \subset \cdots \subset A_m$.
\end{definition}
The following Proposition illustrates some of their properties.
\begin{proposition}\label{pro:cons}
If $b$ is a belief function with upper probability function $pl$, then the following conditions
are equivalent:
\begin{enumerate}
\item $b$ is consonant;
\item $b(A\cap B) = \min(b(A),b(B))$ for every $A,B\subset\Theta$;
\item $pl(A\cup B) = \max(pl(A),pl(B))$ for every $A,B\subset\Theta$;
\item $pl(A) = \max_{\theta\in A} pl(\{\theta\})$ for all non-empty $A \subset \Theta$;
\item there exists a positive integer $n$ and a collection of simple support functions $s_1,...,s_n$ such that $b = s_1 \oplus \cdots \oplus s_n$ and the focus of $s_i$ is contained in the focus of $s_j$ whenever $i<j$.
\end{enumerate}
\end{proposition}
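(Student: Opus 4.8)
The plan is to prove the chain of equivalences (1) $\Leftrightarrow$ (2) $\Leftrightarrow$ (3) $\Leftrightarrow$ (4), and then (1) $\Leftrightarrow$ (5), working always in terms of the basic probability assignment $m$ and the defining formulas $b(A)=\sum_{B\subseteq A}m(B)$ and $pl(A)=\sum_{B\cap A\neq\emptyset}m(B)$. Throughout I would exploit that, because $\Theta$ is finite, the focal elements can be listed and that consonance is exactly the statement that the family of focal elements forms a chain under inclusion.

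First I would establish (1) $\Rightarrow$ (4). Assuming $A_1\subset\cdots\subset A_m$ are the focal elements, for a non-empty $A$ let $k$ be the least index with $A_k\cap A\neq\emptyset$; then $A_j\cap A\neq\emptyset$ for all $j\geq k$ by nesting, so $pl(A)=\sum_{j\geq k}m(A_j)$. On the other hand, pick any $\theta\in A_k\cap A$: since $\theta\in A_j$ for all $j\geq k$ and $\theta\notin A_j$ for $j<k$, we get $pl(\{\theta\})=\sum_{j\geq k}m(A_j)=pl(A)$, while $pl(\{\theta'\})\leq pl(A)$ for every $\theta'\in A$ because the focal elements meeting $\{\theta'\}$ form a sub-collection of those meeting $A$. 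Hence $pl(A)=\max_{\theta\in A}pl(\{\theta\})$. The implication (4) $\Rightarrow$ (3) is then immediate from $pl(A\cup B)=\max_{\theta\in A\cup B}pl(\{\theta\})=\max\big(\max_{\theta\in A}pl(\{\theta\}),\max_{\theta\in B}pl(\{\theta\})\big)=\max(pl(A),pl(B))$. For (3) $\Leftrightarrow$ (2) I would apply the duality $pl(X)=1-b(\bar X)$ together with De Morgan: $pl(A\cup B)=\max(pl(A),pl(B))$ says $1-b(\overline{A}\cap\overline{B})=\max(1-b(\overline A),1-b(\overline B))=1-\min(b(\overline A),b(\overline B))$, i.e. $b(\overline A\cap\overline B)=\min(b(\overline A),b(\overline B))$; since $A,B$ range over all subsets so do $\overline A,\overline B$, giving (2), and the argument reverses.

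The main obstacle is closing the loop back to (1), i.e. (2) $\Rightarrow$ (1) (equivalently showing any one of (2)–(4) forces nested focal elements). Here I would argue by contradiction: suppose the focal elements are not nested, so there exist focal $A_i,A_j$ with $A_i\not\subseteq A_j$ and $A_j\not\subseteq A_i$. I would then exhibit subsets $A,B$ violating $b(A\cap B)=\min(b(A),b(B))$ — the natural candidates being built from $A_i\cup A_j$, $A_i$, $A_j$ and their complements — using the Moebius formula to track exactly which focal elements contribute to each belief value, so that the masses $m(A_i),m(A_j)>0$ create the strict inequality. Some care is needed to handle the presence of other focal elements and to make the choice of test sets uniform; isolating the two incomparable focal elements of, say, minimal total size should let the extraneous contributions cancel.

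Finally, for (1) $\Leftrightarrow$ (5): if $b=s_1\oplus\cdots\oplus s_n$ with the focus $C_i$ of $s_i$ satisfying $C_1\subseteq\cdots\subseteq C_n$, then every focal element of the orthogonal sum is an intersection $C_{i_1}\cap\cdots\cap C_{i_r}$ of some sub-collection, which by nesting equals the one with smallest index, so the focal elements of $b$ form a sub-chain of $C_1\subseteq\cdots\subseteq C_n$ and $b$ is consonant; I would verify combinability (cores intersect, since $C_1\neq\emptyset$ lies in every $C_i$). Conversely, given consonant $b$ with focal chain $A_1\subset\cdots\subset A_m$ and masses $m_1,\dots,m_m$ (plus $m(\Theta)$), I would construct simple support functions $s_i$ focused on $A_i$ with suitably chosen degrees $\sigma_i$ and check inductively, using the combination formula for simple support functions recalled earlier in the text, that $s_1\oplus\cdots\oplus s_m$ reproduces exactly these masses — the degrees $\sigma_i$ being determined by solving the resulting triangular system $m_i=\sigma_i\prod_{j>i}(1-\sigma_j)$ from the top down, which is always solvable in $[0,1]$. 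This last verification is routine once the recursion is set up correctly.
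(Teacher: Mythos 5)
The paper itself gives no proof of this proposition: it is recalled directly from Shafer's monograph \cite{Shafer76}, so there is no in-paper argument to compare yours against beyond the classical one. Your overall architecture --- the cycle $(1)\Rightarrow(4)\Rightarrow(3)\Leftrightarrow(2)\Rightarrow(1)$ together with $(1)\Leftrightarrow(5)$ --- is sound, and the steps $(1)\Rightarrow(4)$, $(4)\Rightarrow(3)$, $(3)\Leftrightarrow(2)$ and both directions of $(1)\Leftrightarrow(5)$ are carried out correctly. For $(1)\Rightarrow(5)$, note only that with the chain indexed as $A_1\subset\cdots\subset A_m$ the intersection of a subcollection of foci is the one of \emph{smallest} index, so the triangular system is $m(A_k)=\sigma_k\prod_{j<k}(1-\sigma_j)$, solved forward as $\sigma_k=m(A_k)/(1-\sum_{j<k}m(A_j))\in[0,1]$; your $\prod_{j>i}$ is the opposite indexing convention, a cosmetic slip.

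The one genuine gap is $(2)\Rightarrow(1)$, where you only gesture at the violating pair and suggest that choosing incomparable focal elements of ``minimal total size'' and arranging cancellations is needed. Neither is: for \emph{any} two focal elements $E,F$ with $E\setminus F\neq\emptyset$ and $F\setminus E\neq\emptyset$, take
\[
A=\overline{F\setminus E},\qquad B=\overline{E\setminus F},\qquad \text{so that}\qquad A\cap B=\overline{(E\setminus F)\cup(F\setminus E)}.
\]
Then $E\subseteq A$ but $E\not\subseteq A\cap B$ (since $E$ meets $E\setminus F$), and $F\subseteq B$ but $F\not\subseteq A\cap B$. Since every focal element contained in $A\cap B$ is a fortiori contained in $A$, and $E$ is contained in $A$ but not in $A\cap B$, one gets $b(A)\geq b(A\cap B)+m(E)$, and likewise $b(B)\geq b(A\cap B)+m(F)$; hence $\min(b(A),b(B))\geq b(A\cap B)+\min(m(E),m(F))>b(A\cap B)$, contradicting (2). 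No bookkeeping of the remaining focal elements is required. With this construction inserted, your proof is complete.
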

Consonant b.f.s represent collections of pieces of evidence \emph{all pointing towards the same direction}. Moreover,
\begin{proposition}
Suppose $s_1,...,s_n$ are non-vacuous simple support functions with foci $\mathcal{C}_{s_1},...,\mathcal{C}_{s_n}$ respectively, and $b = s_1\oplus\cdots\oplus s_n$ is consonant. If $\mathcal{C}_b$ denotes the core of $b$, then all the sets $\mathcal{C}_{s_i}\cap \mathcal{C}_b$, $i=1,...,n$ are nested.
\end{proposition}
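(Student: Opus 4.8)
The plan is to deduce the statement from the defining property of consonant belief functions (Definition \ref{def:consonant}, nested focal elements), by showing that every set $\mathcal{C}_{s_i}\cap\mathcal{C}_b$ is either empty, or equal to $\mathcal{C}_b$, or itself a focal element of $b$; once this is established, the conclusion is immediate because any family consisting of $\emptyset$, of $\mathcal{C}_b$, and of focal elements of a consonant $b$ is automatically totally ordered by inclusion.

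First I would fix notation: write $A_i := \mathcal{C}_{s_i}$ for the focus of $s_i$ and $\sigma_i := m_{s_i}(A_i)$, so that $m_{s_i}(\Theta)=1-\sigma_i$ and non-vacuity gives $\sigma_i\in(0,1]$; set $J := \{\, i : \sigma_i = 1 \,\}$. Expanding the $n$-fold Dempster's rule, every candidate focal element of $b$ has the form $\bigcap_{i\in S}A_i$ (with the convention $\bigcap_{i\in\emptyset}A_i=\Theta$), obtained by selecting the focus $A_i$ from $s_i$ for $i\in S$ and the frame $\Theta$ from $s_i$ for $i\notin S$; this selection is admissible only when $1-\sigma_i>0$ for every $i\notin S$, i.e. only when $S\supseteq J$, and in that case its unnormalised weight $\prod_{i\in S}\sigma_i\,\prod_{i\notin S}(1-\sigma_i)$ is strictly positive. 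Since all such weights are non-negative, no cancellation can occur, so $X$ is a focal element of $b$ if and only if $X\neq\emptyset$ and $X=\bigcap_{i\in S}A_i$ for some $S$ with $J\subseteq S\subseteq\{1,\ldots,n\}$. Taking $S=J$ shows that $C:=\bigcap_{j\in J}A_j$ is the largest such intersection; the existence of $b$ forces $C\neq\emptyset$ (otherwise every admissible intersection would be empty), so $C$ is a focal element and, every other focal element being a subset of it, $\mathcal{C}_b=C$.

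Now I would classify $\mathcal{C}_{s_i}\cap\mathcal{C}_b=A_i\cap C$. If $i\in J$ then $C\subseteq A_i$, hence $A_i\cap C=C=\mathcal{C}_b$. If $i\notin J$ and $A_i\cap C=\emptyset$, the set is $\emptyset$. If $i\notin J$ and $A_i\cap C\neq\emptyset$, then $A_i\cap C=\bigcap_{l\in J\cup\{i\}}A_l$ with $J\cup\{i\}\supseteq J$, so by the description above $A_i\cap C$ is a focal element of $b$. Invoking consonance: by Definition \ref{def:consonant} the focal elements of $b$ are nested, their union $\mathcal{C}_b$ is the largest of them and so contains every focal element, and $\emptyset$ is contained in everything; thus $\{\emptyset,\mathcal{C}_b\}\cup\{\text{focal elements of }b\}$ is totally ordered by inclusion, and all $n$ sets $\mathcal{C}_{s_i}\cap\mathcal{C}_b$ lie inside this family, so they are nested.

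The only delicate point is the bookkeeping of the first step — correctly tracking that a given $s_i$ contributes $\Theta$ to an intersection only when $\sigma_i<1$, which is exactly what singles out the set $J$ and pins down $\mathcal{C}_b=\bigcap_{j\in J}A_j$ — together with remembering that the degenerate case $A_i\cap C=\emptyset$ can genuinely arise (when some $\sigma_i=1$) and is harmless since $\emptyset$ is nested with every set. Everything else is routine once the focal elements of $b$ are in hand.
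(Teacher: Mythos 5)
Your proof is correct. The paper states this proposition without proof (it is part of the background review of Shafer's theory, cited from \cite{Shafer76}), so there is no argument to compare against; your route — characterising the focal elements of $b$ as the non-empty intersections $\bigcap_{i\in S}\mathcal{C}_{s_i}$ over $S\supseteq J$, identifying $\mathcal{C}_b=\bigcap_{j\in J}\mathcal{C}_{s_j}$, and then observing that each $\mathcal{C}_{s_i}\cap\mathcal{C}_b$ is $\emptyset$, $\mathcal{C}_b$ itself, or a focal element of the consonant $b$ — is essentially the standard one, and the delicate points (the role of $J$, non-cancellation of the non-negative Dempster weights, and the genuinely possible empty case) are all handled properly.
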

By condition (2) of Proposition \ref{pro:cons} we have that:
\[
0 = b(\emptyset) = b(A\cap \bar{A}) = min(b(A),b(\bar{A})),
\]
i.e., either $b(A) = 0$ or $b(\bar{A}) = 0$ for every $A\subset \Theta$. Comparing this result to Proposition \ref{pro:negation} explains in part why consonant and quasi-support functions can be considered as representing diametrically opposed subclasses of belief functions.


\chapter{State of the art} \label{cha:state}

\begin{center}
\includegraphics[width = 0.5 \textwidth]{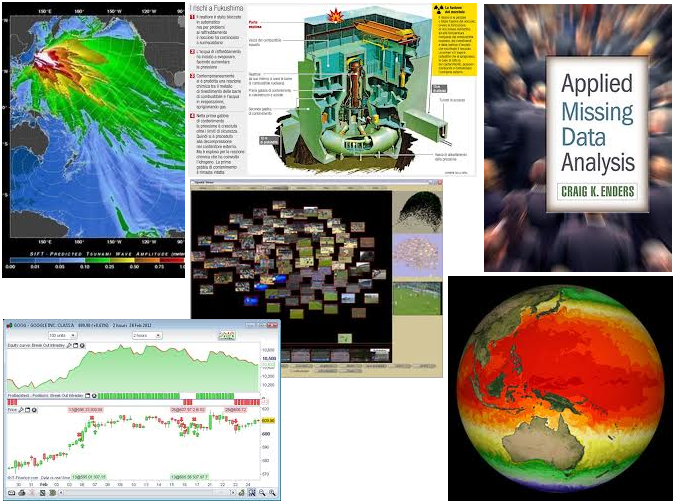}
\end{center}
\vspace{5mm}

It the almost forty years since its formulation the theory of evidence has obviously evolved quite substantially, thanks to the work of several talented researchers \cite{hajek92state}, and now this denomination refers to a number of slightly different interpretations of the idea of generalized probability. Some people have proposed their own framework as the `correct version of the evidential reasoning, partly in response to strong criticisms brought forward by important scientists (compare for instance Judea Pearl's contribution in \cite{pearl89reasoning}, later recalled in \cite{lee88comparison}, and \cite{Pearl90}). Several generalizations of the initial finite-space formulation to continuous frames of discernment have been proposed \cite{yen92computing}, although none of them has been yet acknowledged as `the' ultimate answer to the limitations of Shafer's original formulation. 

In the same period of time, the number of applications of the theory of evidence to engineering \cite{getler92failure,rakar99transferable,demotier06smcc} and applied sciences \cite{lesh86evidential,boucher90speech,cleynenbreugel91road} has been steadily increasing: its diffusion, however, is still relatively limited when compared to that of classical probability \cite{Fua86} or fuzzy methods \cite{cross91compatibilityIEEE}. A good (albeit outdated) survey on the topic, from a rather original point of view, can be found in \cite{Kohlas94}. Another comparative review about texts on evidence theory is presented in \cite{ramer96comparative}.

\subsection*{Scope of the Chapter}

In this Chapter, we wish to give a more up-to-date survey of the current state of the art of the theory of evidence, including the various associated frameworks proposed during the years, the theoretical advances achieved since its inception, and the algorithmic schemes \cite{kofler94algorithmic} (mainly based on propagation networks \cite{dempster90construction}) proposed to cope with the inherent computational complexity which comes with dealing with an exponential number of focal elements \cite{wang92exponential}. Many interesting new results have been achieved of late\footnote{The work of Roesmer \cite{roesmer00nonstandard} deserves a special mention for its original connection between nonstandard analysis and theory of evidence.}, showing that the discipline is evolving towards maturity. Here we would just like to briefly mention some of those results concerning major open problems in belief calculus, in order to put into context  the work we ourselves are going to develop in Part II. The most accredited approaches to decision making and inference with belief functions are also reviewed \cite{jaffray89coherent}, and a brief overview of the various proposals for a continuous generalization of Shafer's belief functions is given. The relationships between Dempster-Shafer theory and other related uncertainty theories are shortly outlined, and a very limited sample of the various applications of belief calculus to the most disparate fields is discussed.

\section{The alternative interpretations of belief functions} \label{sec:origins}

The axiomatic set up that Shafer gave originally to his work could seem quite arbitrary at a first glance \cite{shenoy90axioms,klawonn90axiomatic}. For example, Dempster's rule \cite{Ginsberg84} is not really given a convincing justification in his seminal book \cite{Shafer76}, leaving the reader wondering whether a different rule of combination could be chosen instead \cite{zadeh86simple,fagin91new,dasilva92algorithms,sudkamp92consistency,smets86bayes}. This question has been posed by several authors (e.g. \cite{Voorbraak91}, \cite{Zadeh86}, \cite{Shafer86} and \cite{Wilson92} among the others), most of whom tried to provide an axiomatic support to the choice of this mechanism for combining evidence.

\subsection{Upper and lower probabilities, multi-valued mappings and compatibility relations} \label{sec:multivalued}

As a matter of fact the notion of belief function \cite{Shafer81,Shafer87d} originally derives from a series of Dempster's works on upper and lower probabilities induced by multi-valued mappings, introduced in \cite{Dempster67}, \cite{Dempster68a} and \cite{Dempster69}. Shafer later reformulated Dempster's work by identifying his upper and lower probabilities with epistemic probabilities or `degrees of belief', i.e., the quantitative assessments of one's belief in a given fact or proposition. The following sketch of the nature of belief functions is abstracted from \cite{Shafer90}: another debate on the relation between b.f.s and upper and lower probabilities is developed in \cite{smets87versus}.

Let us consider a problem in which we have probabilities (coming from arbitrary sources, for instance subjective judgement or objective measurements) for a question $Q_1$ and we want to derive degrees of belief for a related question $Q_2$. For example, $Q_1$ could be the judgement on the reliability of a witness, and $Q_2$ the decision about the truth of the reported fact. In general, each question will have a number of possible answers, only one of them being correct.\\ Let us call $\Omega$ and $\Theta$ the sets of possible answers to $Q_1$ and $Q_2$ respectively. So, given a probability measure $P$ on $\Omega$ we want to derive a degree of belief $b(A)$ that $A\subset \Theta$ contains the correct response to $Q_2$ (see Figure \ref{fig:compat}).

\begin{figure}[ht!]
\begin{center}
\includegraphics[width = 0.55 \textwidth]{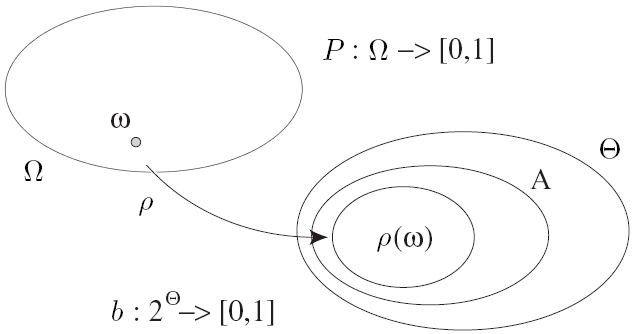} 
\caption{\label{fig:compat} Compatibility relations and multi-valued mappings. A probability measure $P$ on $\Omega$ induces a belief function $b$ on $\Theta$ whose values on the events $A$ of $\Theta$ are given by (\ref{eq:belvalue}).}
\end{center}
\end{figure}

If we call $\Gamma(\omega)$ the subset of answers to $Q_2$ compatible with $\omega\in\Omega$, each element $\omega$
tells us that the answer to $Q_2$ is somewhere in $A$ whenever
\[
\Gamma(\omega) \subset A.
\]
The degree of belief $b(A)$ of an event $A\subset\Theta$ is then the total probability (in $\Omega$) of all the answers
$\omega$ to $Q_1$ that satisfy the above condition, namely:
\begin{equation} \label{eq:belvalue}
b(A) = P(\{ \omega | \Gamma(\omega) \subset A \}).
\end{equation}
The map $\Gamma : \Omega \rightarrow 2^{\Theta}$ (where $2^\Theta$ denotes, as usual, the collection of subsets of $\Theta$) is
called a \emph{multi-valued mapping} from $\Omega$ to $\Theta$. Each of those mappings $\Gamma$, together with a probability measure $P$ on $\Omega$, induce a \emph{belief function} on $\Theta$:
\[
\begin{array}{lllll}
b & : & 2^\Theta & \rightarrow & [0,1]\\ & & A\subset\Theta & \mapsto & \displaystyle b(A) \doteq \sum_{\omega\in\Omega
: \Gamma(\omega)\subset A} P(\omega).
\end{array}
\]
Obviously a multi-valued mapping is equivalent to a \emph{relation}, i.e., a subset $C$ of $\Omega \times \Theta$. The
\emph{compatibility relation} associated with $\Gamma$:
\begin{equation} \label{eq:compat}
C = \{(\omega,\theta) | \theta \in \Gamma(\omega)\}
\end{equation}
describes indeeed the subset of answers $\theta$ in $\Theta$ compatible with a given $\omega\in\Omega$.

As Shafer himself admits in \cite{Shafer90}, compatibility relations are only a new name for multivalued mappings. Nevertheless, several authors (among whom Shafer \cite{Shafer87c}, Shafer and Srivastava \cite{Srivastava89}, Lowrance \cite{lowrance88automated} and Yager \cite{yagernonmonotonicity}) chose this approach to build the mathematics of belief functions.

\subsection{Random sets} \label{sec:random-sets}

Having a multi-valued mapping $\Gamma$, a straightforward step is to consider the probability value $P(\omega)$ as attached to the subset $\Gamma(\omega) \subset \Theta$: what we obtain is a \emph{random set} in $\Theta$, i.e., a probability measure on a collection of subsets (see \cite{goutsias97random,Goutsias98,matheronrandom} for the most complete introductions to the matter). The degree of belief $b(A)$ of an event $A$ then becomes the total probability that the random set is contained in $A$. This approach has been emphasized in particular by Nguyen (\cite{goodman85uncertainty}, \cite{Nguyen97,Nguyen78}) and Hestir \cite{Hestir91}, and resumed in \cite{Shafer87b}.

Consider a multi-valued mapping $\Gamma: \Omega \rightarrow 2^\Theta$. The \emph{lower inverse} of $\Gamma$ is defined as:
\begin{equation}\label{eq:lower}
\begin{array}{llll}
\Gamma_*: & 2^\Theta & \rightarrow & 2^\Omega\\ & A & \mapsto & \Gamma_*(A)\doteq \{ \omega \in \Omega : \Gamma(\omega)
\subset A, \Gamma(\omega) \neq \emptyset \},
\end{array}
\end{equation}
while its \emph{upper inverse} is
\begin{equation}\label{eq:upper}
\begin{array}{llll}
\Gamma^*: & 2^\Theta & \rightarrow & 2^\Omega \\ & A & \mapsto & \Gamma^*(A)\doteq \{ \omega \in \Omega :
\Gamma(\omega)\cap A \neq \emptyset \}.
\end{array}
\end{equation}

Given two $\sigma$-fields (see Chapter \ref{cha:toe}, Footnote \ref{foot:sigma}) $\mathcal{A},\mathcal{B}$ on $\Omega, \Theta$ respectively, $\Gamma$ is said \emph{strongly measurable} iff $\forall B\in\mathcal{B}$, $\Gamma^*(B)\in\mathcal{A}$. The \emph{lower probability} measure on $\mathcal{B}$ is defined as $P_*(B)\doteq P(\Gamma_*(B))$ for all $B\in\mathcal{B}$. By Equation (\ref{eq:belvalue}) the latter is nothing but a belief function.

Nguyen proved that, if $\Gamma$ is strongly measurable, the probability distribution $\hat{P}$ of the random set \cite{Nguyen78} coincides with the lower probability measure:
\[
\begin{array}{cc}
\hat{P}[I(B)] = P_*(B) & \forall B\in\mathcal{B},
\end{array}
\]
where $I(B)$ denotes the interval $\{C\in\mathcal{B}, C\subset B\}$.\\ In the finite case the probability distribution of the random set $\Gamma$ is precisely the basic probability assignment (Definition \ref{def:bpa}) associated with the lower probability or belief function $P_*$.

An extensive analysis of the relations between Smets' Transferable Belief Model and the theory of random sets can be found in \cite{smets92TBMrandom}.

\subsection{Inner measures} \label{sec:inner-measures}

Belief functions can also be assimilated to \emph{inner measures}.
\begin{definition} \label{eq:inner-probability}
Given a probability measure $P$ defined over a $\sigma$-field of subsets $\mathcal{F}$ of a finite set $\mathcal{X}$, the \emph{inner probability} of $P$ is the function $P_*$ defined by:
\begin{equation}
\begin{array}{ccc}
P_*(A) = \max\{ P(B) | B\subset A,\; B \in \mathcal{F} \}, & & A\subset \mathcal{X}
\end{array}
\end{equation}
for \emph{each} subset $A$ of $\mathcal{X}$, not necessarily in $\mathcal{F}$.
\end{definition}
The inner probability value $P_*(A)$ represents the degree to which the available probability values of $P$ suggest us to believe in $A$.\\ Now, let us define as domain $\mathcal{X}$ of the inner probability function (\ref{eq:inner-probability}) the compatibility relation $C$ (\ref{eq:compat}) associated with a multi-valued mapping $\Gamma$, and choose as $\sigma$-field $\mathcal{F}$ on $C$ the collection:
\begin{equation} \label{eq:effe}
\mathcal{F} = \{ C \cap (E \times \Theta), \forall E\subset\Omega \}.
\end{equation}
Each element of $\mathcal{F}$ is the collection of all pairs in $C$ which relate a point of $E\subset\Omega$ to a subset of $\Theta$. It is then natural to define a probability measure $Q$ over the $\sigma$-field (\ref{eq:effe}) which depends on the original measure $P$ on $\Omega$:
\[
\begin{array}{lllll}
Q & : & \mathcal{F} & \rightarrow & [0,1]\\
& & C\cap(E\times\Theta) & \mapsto & P(E).
\end{array}
\]
The inner probability measure associated with $Q$ is then the function on $2^C$:
\[
\begin{array}{lllll}
Q_* : & 2^C & \rightarrow & [0,1] \\
& \mathcal{A} \subset C & \mapsto & Q_*(\mathcal{A}) = \max\{ P(E) | E \subset \Omega,\; C\cap (E \times
\Theta))\subset \mathcal{A} \}.
\end{array}
\]
We can then compute the inner probability of the subset $\mathcal{A} = C \cap (\Omega \times A)$ of $C$ which
corresponds to a subset $A$ of $\Theta$ as:
\[
\begin{array}{lll}
Q_*(C\cap (\Omega \times A)) & = & \max \{ P(E) | E \subset \Omega,\;C\cap (E \times \Theta) \subset C \cap (\Omega \times A )\} \\ &
= & \max \{ P(E) | E \subset \Omega, \; \omega \in E \wedge (\omega,\theta)\in C \Rightarrow \theta \in A \} \\ & =  & P(\{
\omega | (\omega,\theta) \in C \Rightarrow \theta \in A \})
\end{array}
\]
which, by definition of compatibility relation, becomes:
\[
= P(\{ \omega : \Gamma(\omega) \subset A \}) = b(A),
\]
i.e., the classical definition (\ref{eq:belvalue}) of the belief value of $A$ induced by a multi-valued mapping $\Gamma$. This connection between inner measures and belief functions appeared in the literature in the second half of the Eighties (\cite{Ruspini87,Ruspini92}, \cite{Fagin88}).

\subsection{Belief functions as credal sets} \label{sec:credal-sets}

The interpretation of belief values as lower bounds to the true unknown probability value of an event (Section \ref{sec:random-sets}) generates, in turn, an additional angle of the nature of belief functions \cite{laskey87beliefs}. Belief functions admit the following order relation:
\[
\begin{array}{ccc}
b \leq b' \equiv b(A) \leq b'(A) & & \forall A \subset \Theta,
\end{array}
\]
called \emph{weak inclusion}. A b.f. $b$ is weakly included in $b'$ whenever its belief values are dominated by those of $b'$ for all the events of $\Theta$.

A probability distribution $P$ in which a belief function $b$ is weakly included ($P(A)\geq b(A)$ $\forall A$) is said to be \emph{consistent} with $b$ \cite{kyburg87bayesian}. Each belief function $b$ then uniquely identifies a lower envelope of the set of probabilities consistent with it:
\begin{equation} \label{eq:consistent}
P[b] = \{ P \in \mathcal{P} : P(A) \geq b(A) \},
\end{equation}
i.e., the set of probability measures whose values dominate that of $b$ on all events $A$. Accordingly, the theory of evidence is seen by some authors as a special case of robust statistics \cite{seidenfeld97some}. This position has been heavily critised along the years.

Convex sets of probabilities are often called \emph{credal sets} \cite{levi80enterprise,zaffalon-treebased,cuzzolin2010credal,antonucci10ipmu}. Of course not all credal sets `are' belief functions. The set (\ref{eq:consistent}) is a polytope in the simplex of all probabilities we can define on $\Theta$. Its vertices are all the distributions $p^\pi$ induced by any permutation $\pi = \{ x_{\pi(1)}, ..., x_{\pi(|\Theta|)} \}$ of the singletons of $\Theta$ of the form \cite{chateauneuf1989,cuzzolin08jelia}:
\begin{equation} \label{eq:prho}
p^\pi[b](x_{\pi(i)}) = \sum_{\substack{A \ni x_\pi(i); \; A \not\ni x_\pi(j)\; \forall j<i}} m(A),
\end{equation}
assigning to a singleton element put in position $\pi(i)$ by the permutation $\pi$ the mass of all focal elements containing it, but not containing any elements preceeding it in the permutation order \cite{wallner2005}.

\subsection{The debate on the foundations} \label{sec:debate}

A number of researchers have fueled a debate about the nature and foundations of the notion of belief function \cite{Lowrance82,Ruspini87,Fagin88,benferhat95belief,smets94what,smets91updating,williams82discussion} and its relation to other theories \cite{williams1978,Ruspini92,wilson92howmuch,provan92validity,smets97normative,shapley71cores,smets1993no,wakker99dempster,Shafer04comments}, in particular standard probability theory and the Bayesian approach to statistical inference \cite{smets88versus,shafer82bayes,shenoy88axiomatic}. We only mention a few here.

Halpern and Fagin \cite{halpern92twoviews}, for instance, underlined two different views of belief functions - as generalized probabilities (corresponding to the inner measures of Section \ref{sec:inner-measures}), and as mathematical representations of evidence (perspective which we completely neglected in our brief overview of Chapter \ref{cha:toe}). They maintained that many issues with the practical use of belief functions can be explained as consequences of confusing these two interpretations. As an example, they cite Pearl and other authors' remarks that the belief function approach leads to incorrect or counterintuitive answers in a number of situations \cite{pearl89reasoning,Pearl90}.

In \cite{smets93quantifying} Smets gave an axiomatic justification of the use of belief functions to quantify partial beliefs. In \cite{smets92concept}, instead, he tried to precise the notion of distinct pieces of evidence combined via Dempster's rule. He also responded in \cite{smets92resolving} to Pearl's criticisms appeared in \cite{pearl89reasoning}, by accurately distinguishing the different epistemic interpretations of the theory of evidence (resounding Halpern \emph{et al}. \cite{halpern92twoviews}), focusing in particular on his Transferable Belief Model (see Section \ref{sec:tbm}).

\section{Frameworks and approaches} \label{sec:approaches}

The theory of evidence has been elaborated upon by a number of researchers \cite{den99reasoning,campos05nlp,grabish06lattices,Lowrance90,Zarley88a,Laskey88,Hajek96}, who have proposed original angles on the formalism \cite{baldwin90general,baldwin91combining,wang94robust,wang94polar} which later developed into proper frameworks \cite{an93relation,Kramosil96nonnumerical,andersen96linear,smarandache2005introduction,Lowrance86,lamata94calculus}. We mention here a couple of significant examples.

\subsection{Smets' Transferable Belief Model} \label{sec:tbm}

In his 1990's seminal work \cite{smets90PAMI} P. Smets introduced his \emph{Transferable Belief Model} (TBM) as a framework for quantifying degrees of belief based on Shafer's belief functions. In \cite{smets94transferable} and \cite{smets95axiomatic} (but also \cite{smets98quantified} and \cite{smets97TBMbelief}) an extensive analysis of the major traits of the TBM can be found. To summarise them briefly, within the TBM \emph{positive basic probability values can be assigned to the empty set}, originating `unnormalized' belief
functions (see also \cite{ubf,cuzzolin14tfs}) whose nature is analyzed in \cite{smets92nature}. In \cite{smets91other} Smets compared the Transferable Belief Model with other interpretations of the theory of evidence. In \cite{smets90constructing} he axiomatically derived a `pignistic' transform which can be used to map a belief function to a probability function and make decisions in any uncertain context via classical utility theory.

Smets applied the TBM to diagnostic \cite{smets98application} and reliability \cite{smets92reliability} problems, among others. Dubois \emph{et al} later used the TBM approach on an illustrative example: the assessment of the value of a candidate
\cite{dubois01using}. The vulnerability of the TBM to Dutch books was investigated by Snow in \cite{snow98vulnerability}.

\subsection{Kramosil's probabilistic interpretation of the Dempster-Shafer theory} \label{sec:kramosil}

We have seen that the theory of evidence can be developed in an axiomatic way quite independent of probability theory. Such axioms express the requirements any uncertainty calculus intuitively ought to meet. Nevertheless, D-S theory can also be seen as a sophisticated application of probability theory in a random set (see Section \ref{sec:random-sets}) context.

From a similar angle I. Kramosil \cite{Kramosil96a} proposed to exploit measure theory to expand the theory of belief functions beyond its original scope. The scope of his investigation ranges from Boolean and non-standard valued belief functions \cite{kramosil93boolean,kramosil97nonstandard}, with application to expert systems \cite{Kramosil96nonnumerical}, to the extension of belief functions to countable sets \cite{kramosil94definability}, to the introduction of a {strong law of large numbers} for random sets \cite{kramosil94stronglaw}.

Of particular interest is the notion of \emph{signed belief function} \cite{kramosil97belief}, in which frames of discernment are replaced by measurable spaces equipped by a signed measure. The latter is a $\sigma$-additive set function which can take values also outside the unit interval, including the negative and infinite ones. An assertion analogous to Jordan's decomposition theorem for signed measures is stated and proven \cite{kramosil96jordan}, according to which each signed belief function restricted to its finite values can be defined by a linear combination of two classical probabilistic belief functions, assuming that the domain is finite. A probabilistic analysis of Dempster's rule is also developed by Kramosil \cite{kramosil97dempster}, and its version for signed belief functions is formulated in
\cite{kramosil98dempstersigned}.

A detailed review of Kramosil's work on his measure-theoretic approach to belief functions can be found in two technical reports of the Academy of Sciences of the Czech Republic (\cite{kramosil97probabilistic}, \cite{kramosil98probabilistic}).

\section{Conditional belief functions} \label{sec:conditional-belief-functions}

In the original model in which belief functions are induced by multi-valued mappings of probability distributions, Dempster's conditioning can be judged inappropriate from a Bayesian point of view.

\subsection{Dempster's conditioning}

Recall that in Dempster's approach, conditional belief functions with respect to an arbitrary event $A$ are obtained by simply combining the original b.f. with a `categorical' (in Smets' terminology) belief function focussed on $A$, by means of Dempster's rule of combination: $b(.|A) = b \oplus b_A$. In an interesting work, Kyburg \cite{Kyburg87} indeed analyzed the links between Dempster's conditioning of belief functions and Bayesian conditioning of closed, convex sets of probabilities (of which belief functions are a special case). He arrived at the conclusion that the probability intervals \cite{lemmer91conditions} generated by Dempster updating were included in those generated by Bayesian updating.

As a consequence, several theories of and approaches to conditioning of belief functions \cite{yu94conditional} have been proposed along the years \cite{Chateauneuf89,fagin91new,Jaffray92,gilboa93updating,denneberg94conditioning,itoh95new} to address this criticism.

\subsection{Fagin's conditional belief}

In the framework of credal sets and lower probabilities, Fagin and Halpern defined a new notion of \emph{conditional belief} \cite{fagin91new}, different from Dempster's definition, as the lower envelope of a family of conditional probability functions, and provided a closed-form expression for it.  This definition is related to the idea of inner measure (see Section \ref{sec:inner-measures})

\subsection{Spies' conditional events}

On his side, Spies \cite{spies94conditional} established a link between conditional events and discrete random sets. Conditional events were defined as \emph{sets of equivalent events under the conditioning relation}. By applying to them a multivalued mapping he gave a new definition of conditional belief function. Finally, an updating rule (that is equivalent to the law of total probability is all beliefs are probabilities) was introduced.

Slobodova also conducted some early studies on the issue of conditioning, quite related to Spies' work. In particular, she introduced a multi-valued extension of conditional b.f.s and examined its properties \cite{slobodova97lncs}. In \cite{slobodova94conditional}, in particular, she described how conditional belief functions (defined as in Spies' approach) fit in the framework of valuation-based systems.

\subsection{Smets' conditional approaches}

One way of dealing with the Bayesian criticism of Dempster's rule is to abandon all notions of multivalued mapping to define belief directly in terms of basis belief assignments, as in Smets' Transferable Belief Model \cite{smets93belief} (Section \ref{sec:tbm}). The unnormalized conditional belief function $b_U(.|B)$ with b.b.a. $m_U(.|B)$\footnotemark[1]\footnotetext[1]{Author's notation.}
\[
m_U(.|B) = \left \{ \begin{array}{ll} \displaystyle \sum_{X \subseteq B^c} m(A \cup X) & if \; A \subseteq B \\ 0 & elsewhere  \end{array} \right .
\]
is the `minimal commitment' \cite{hsia91characterizing} specialization of $b$ such that $pl_b(B^c|B) = 0$ \cite{klawonn92dynamic}.\\ In \cite{xu94evidential,xu96reasoning}, Xu and Smets used conditional belief functions to represent relations between variables in evidential networks, and presented a propagation algorithm for such networks. In \cite{smets93jeffrey}, Smets pointed out the distinction between revision \cite{gilboa93updating,perea09amodel} and focussing in the conditional process, and how they led to unnormalized and geometric \cite{suppes1977} conditioning 
\begin{equation} \label{eq:geometric-conditioning}
b_G(A|B) = \frac{b(A \cap B)}{b(B)}, 
\end{equation}
respectively. Note the strong resemblance between (\ref{eq:geometric-conditioning}) and classical Bayes' rule (\ref{eq:bayes}). In these two scenarios he proposed generalizations of Jeffrey's rule of conditioning \cite{jeffrey65book,shafer81jeffrey,jeffrey1988conditioning} 
\[
P(A|P',\mathbb{B}) = \sum_{B \in \mathbb{B}} \frac{P(A \cap B)}{P(B)} P'(B) 
\]
to the case of belief functions.

\subsection{Other contributions}

In an interesting work, Klopotek and Wierzchon \cite{klopotek99lncs} provided a frequency-based interpretation for conditional belief functions. More recently, Tang and Zheng \cite{tang05dempster} discussed the issue of conditioning in a multi-dimensional space. Quite related to the topic of Chapter \ref{cha:geo}, Lehrer \cite{lehrer05updating} proposed a geometric approach to determine the conditional expectation of non-additive probabilities. Such conditional expectation was then applied to updating, whenever information became available, and to introduce a notion of independence.

\section{Statistical inference and estimation}

The question of how to transform the available data (typically coming in the form of a series of trials) into a belief function (the so-called \emph{inference problem}) \cite{krantz83priors,chateauneuf00ambiguity} is crucial to allow practical statistical inference with belief functions \cite{Smets97FAPR2,bryson98qualitative,ngwenyama98generating}. The data can be of different nature: statistical \cite{seidenfeld78,durham92statistical}, logical \cite{watada94logical}, expressed in terms of mere preferences, subjective. The problem has been studied by scholars of the caliber of Dempster \cite{dempster1966}, Shafer, Seidenfeld, Walley, Wasserman \cite{Wasserman90prior} and others \cite{Beran71,dutta1985,Chen95,chen1995}, who delivered an array of approaches to the problem. Unfortunately, the different approaches to the inference problem produce different belief functions from the same statistical data. A very general exposition by Chateauneuf and Vergnaud providing some foundation for a belief revision process, in which both the initial knowledge and the new evidence is a belief function, can be found in \cite{chateauneuf00ambiguity}. We give here a very brief survey of the topic.

Given a parametric model of the data as a function of a number of parameters, we want to identify (or compute the support for) the parameter values which better describe the available data. Shafer's initial proposal for inferring a belief functions from the data via a likelihood-based support function \cite{Shafer76} was supported by Seidenfeld \cite{seidenfeld78}, but led him to criticise Dempster's rule as an appropriate way of combining different pieces of statistical evidence. Wasserman \cite{Wasserman90} showed that the likelihood-based belief function can indeed be used to handle partial prior information, and related it to robust Bayesian inference.

Later \cite{shafer82parametric} Shafer illustrated three different ways of doing statistical inference in the belief framework, according to the nature of the available evidence. He stressed how the strength of belief calculus is really about allowing inference under partial knowledge or ignorance, when simple parametric models are not available.

Many others have also contributed to the debate on the inference issue \cite{aregui08constructing}.\\ In the late Eighties Walley \cite{Walley87} characterized the classes of belief and commonality functions for which statistical independent observations can be combined by Dempster's rule, and those for which Dempster's rule is consistent with Bayes' rule. Van den Acker \cite{acker00belief} designed a method to represent statistical inference as belief functions, designed for application in an audit context \cite{srivastava94integrating}. An original paper by Hummel and Landy \cite{hummel88statistical} proposed a new interpretation of Dempster's rule of combination in terms of statistics of opinions of experts. Liu \emph{et al}. \cite{liu97method} described an algorithm for inducting implication networks from empirical data samples. The values in the implication networks were predicted by applying the belief updating scheme and then compared to Pearl's stochastic simulation method, demonstrating that evidential-based inference has a much lower computational cost.

As far as non-quantitative data are concerned, Bryson \emph{et al}. \cite{ngwenyama98generating,bryson98qualitative} presented an approach to the generation of quantitative belief functions that included linguistic quantifiers to avoid the premature use of numeric measures. Similarly, Wong and Lingras \cite{wong94representation} proposed to generate belief functions from symbolic information such as qualitative preferences \cite{wong90generation} of users in decision systems.

\section{Decision making} \label{sec:decision-making}

Decision making is, together with estimation, the natural final link of any inference chain \cite{seidenfeld07isipta}. Indeed, decision making in the presence of partial evidence and subjective assessment \cite{einhorn1986} is the original rationale for the development of the theory of evidence. Consequently, decision making with belief functions has been studied throughout the last three decades \cite{smets2002decision,smets2005decision,Caselton92,Jaffray94,Strat90,klir94dynamic}, originating a number of different approaches to the problem \cite{horiuchi98decision}. As an example, a work by Beynon \emph{et al}. \cite{beynon00alternative} explored the potential of the theory of evidence as an alternative approach to multi-criteria decision modeling.\\ The situation quite resembles that surrounding the argument around inference. There is no lack of proposal solutions, but rather too many of them have been proposed, and a clear notion of what option is the most sensible in what context is needed. A recent discussion on the meaning of belief functions in the context of decision making can be found in \cite{smets01decision}.

Perhaps the first one who noted the lack in Shafer's theory of belief functions of a formal procedure for making decision was Strat \cite{strat94decisionanalysis}. He proposed a simple assumption that disambiguates decision problems in the context of belief calculus, by enforcing a separation between the evidence carrying information \cite{maluf97monotonicity} about the decision problem and the assumptions that need to be made in order to disambiguate the decision outcomes. He also showed how to generalize the methodology for decision analysis employed in probabilistic reasoning to the use of belief functions, allowing their use within the framework of decision trees. Schubert \cite{schubert94thesis,schubert95onrho} subsequently studied the influence of the $\rho$ parameter in Strat's decision apparatus. Elouedi, Smets \emph{et al}. \cite{elouedi00classification,elouedi00decision} adapted this decision tree technique to the presence of uncertainty about the class value, represented by a belief function.

A decision system based on the Transferable Belief Model was developed \cite{Xu92a,xu96transferable,xu96decision} and applied to a waste disposal problem by Xu \emph{et al}. The framework makes use of classical expected utility theory \cite{vonneumann44}.\\ There, given a utility function $u(x)$ on the possible outcomes $x \in \Theta$, a decision maker can pick one among a number of `lotteries' (probability distributions) $p_i(x)$, in order to maximize the expected return or utility 
\[
E(p_i) = \sum_x u(x) p_i(x) 
\]
of the lottery. In \cite{smets05ijar}, the author proved the necessity of the linearity axiom (and therefore of the pignistic transform) by maximizing the following expected utility (our notation), where $p = BetP$ is the pignistic approximation (\ref{eq:pignistic-probability}) of a belief function $b$:
\[
E[u] = \sum_{x\in\Theta} u(a,x) p(x).
\]
The set of possible actions (decisions) $a \in \mathcal{A}$ and the set $\Theta$ of the possible outcomes $x$ of the problem are distinct, and the utility function $u(a,x)$ is defined on $\mathcal{A} \times \Theta$.\\ A significant contribution to the application of linear utility theory to belief functions is due to Jaffray \cite{yaffray88application,jaffray89linear,yaffray94dynamic}.

A number of decision rules not based on the application of utility theory to the result of a probability transform have also been proposed, for instance by Troffaes \cite{troffaes07}. Most of those proposals are based on order relations between uncertainty measures \cite{denoeux08ai}, in particular the `least commitment principle' \cite{ristic04ipmu,caron06ijar,denoeux08flairs}, the analogous of maximum entropy in belief function theory \cite{chau93upper}. Xu and Yang \cite{yang94evidential} proposed instead a decision calculus in the framework of {valuation based systems} \cite{xu97valuation}, and showed that decision problems can be solved using local computations. Wakker \cite{wakker99dempster} argued on the central role that the so-called `principle of complete ignorance' plays in the evidential approach to decision problems.

\section{Efficient implementation of belief calculus} \label{sec:efficient}

The complexity of Dempster's rule of computation is inherently exponential, due to having to consider all possible subsets of a frame of discernment. Indeed, Orponen \cite{orponen90dempster} proved that the problem of computing the orthogonal sum of a finite set of belief functions is $\mathcal{NP}$-complete. This has encouraged much research on the efficient implementation \cite{clarke91efficient} of the theory of evidence \cite{Barnett81,guan94computational,kennes91fast,kennes91computational,kennes92computational,Xu92,km91,kohlas89b,Xu94,Thoma91,bissig97fastdivision,gordon85method}, and Dempster's rule in particular \cite{Shafer87a,Lehmann99,yager86arithmetic}.

\subsection{Graphical models} \label{sec:graphical-models}

In their 1987's work \cite{Shafer87b} Shafer, Shenoy and Mellouli posed the problem in the lattice of partitions of a fixed frame of discernment. Different questions were represented as different partitions of this frame, and their relations were represented as qualitative conditional independence or dependence relations between partitions.\\ They showed that an efficient implementation of Dempster's rule is possible if the questions are arranged in a \emph{qualitative Markov tree} \cite{Xu94a,xu95computing}, as in such a case belief functions can be propagated through the tree \cite{almond95book,yaghlane06ipmu}. Multivariate belief functions on graphical models \cite{bergsten93dempster} were explored by Kong in her PhD thesis \cite{Kong86a}, but also by Mellouli \cite{Mellouli86,mellouli97pooling}. The fusion and propagation of graphical belief models was studied by Almond \cite{almond90thesis}. The close relation of Shafer-Shenoy's architecture with the contents of Chapter \ref{cha:alg}, where we will discuss the algebraic structure of families of frames, is worth noticing. Indeed, in \cite{Shafer87b} the analysis was limited to a lattice of partitions, rather than encompassing entire families of frames.

In related work, Bissig, Kohlas and Lehmann proposed a so-called \emph{Fast-Division architecture} \cite{bissig97fastdivision} for Dempster's rule computation. The latter has the advantage, over the Shenoy-Shafer and the Lauritzen-Spiegelhalter architectures \cite{lepar98uai}, of guaranteeing all intermediate results to be belief functions. Each of them has a Markov tree as the underlying computational structure.\\ When the evidence is ordered in a \emph{complete direct acyclic graph} it is possible to formulate algorithms with lower computational complexity \cite{bergsten93dempster}. Credal networks \cite{zaffalon-treebased,cozman2000} have also been proposed.

\subsection{Monte-Carlo methods} \label{sec:monte-carlo}

Monte-Carlo methods are extremely useful in Bayesian estimation when the need arises to represent complex, multi-modal probability distributions at an acceptable computational cost. Some work on MonteCarlo methods for belief functions has been done in the past by Wilson and Moral \cite{wilson91montecarlo,moral99montecarlo}. Kramosil \cite{kramosil98montecarlo} also worked on adapting Monte-Carlo estimation to belief functions. Resconi \emph{et al}. \cite{resconi98speed-up} achieved a speed-up of Monte-Carlo methods by using a physical model of belief measures. This approach could potentially provide the ultimate solution to efficient implementation of belief calculus. In particular, estimation techniques based on the notion of `particle filtering' \cite{Deutscher99,Deutscher00} may be very useful in the context of the theory of belief functions, as a way or reducing the damning computational complexity of handling belief functions.

\subsection{Transformation approaches} \label{sec:transformation}

One approach to efficient belief calculus that has been explored since the late Eighties, known as `probability transformation' \cite{daniel06ijis} consists in approximating belief functions by means of appropriate probability measures prior to using them for making decisions. A number of distinct transformations have been introduced \cite{weiler94approximation,Kramosil95,bauer97approximation,Yaghlane01ecsqaru,Denoeux01ijufk,Denoeux02ijar,Haenni02ijar}. It is worth noticing that different approximations appear to be aimed at different goals, and do not necessarily seek an efficient implementation of the rule of combination \cite{BaroniV03}.

Given a frame of discernment $\Theta$, let us denote by $\mathcal{B}$ the set of all belief functions on $\Theta$, and by $\mathcal{P}$ the set of all probability measures on $\Theta$.\\ According to \cite{daniel06on}, we call a \emph{probability transform} \cite{daniel04ipmu} of belief functions an operator 
\[
pt : \mathcal{B} \rightarrow \mathcal{P}, \hspace{5mm} b \mapsto pt[b] 
\]
mapping belief measures onto probability measures, such that 
\begin{equation} \label{eq:probability-transform}
b(x) \leq pt[b](x) \leq pl_b(x) = 1 - b(\{x\}^c). 
\end{equation}

Note that (\ref{eq:probability-transform}) requires the probability which results from the transform to be compatible with the upper and lower bounds the original b.f. $b$ enforces \emph{on the singletons only}, and not on all the focal sets as in Equation (\ref{eq:consistent}). This is a minimal, sensible constraint which does not require probability transforms to adhere to the credal semantics of belief functions (Section \ref{sec:credal-sets}). As a matter of fact, important such transforms are not compatible with such semantics.

As mentioned above, in Smets' `Transferable Belief Model' \cite{smets88beliefversus,smets94transferable} decisions are made by resorting to the \emph{pignistic probability}:
\begin{equation}\label{eq:pignistic-probability}
BetP[b](x) = \sum_{A\supseteq \{x\}} \frac{m_b(A)}{|A|},
\end{equation}
generated by what he called the \emph{pignistic transform} $BetP : \mathcal{B} \rightarrow \mathcal{P}$, $b \mapsto BetP[b]$ \cite{wilson93pignistic}. Justified by a `linearity' axiom \cite{smets94transferable}, the pignistic probability is the result of a redistribution process in which the mass of each focal element $A$ is re-assigned to all its elements $x \in A$ on an equal basis, and is perfectly compatible with the upper-lower probability semantics of belief functions, as it is the center of mass of the polytope (\ref{eq:consistent}) of consistent probabilities \cite{chateauneuf1989}. Generalizations of the pignistic transform for partial bet have been recently proposed by Burger \cite{burger09gene} and Dezert \cite{dezert2004generalized}.

Originally developed by Voorbraak \cite{voorbraak89efficient} as a probabilistic approximation intended to limit the computational cost of operating with belief functions in the Dempster-Shafer framework, the \emph{plausibility transform} \cite{Cobb03isf} has later been supported by Cobb and Shenoy \cite{Cobb03ecsqaru} in virtue of its desirable commutativity properties with respect to Dempster's sum. Although initially defined in terms of commonality values, the plausibility transform $\tilde{pl} : \mathcal{B} \rightarrow \mathcal{P}$, $b \mapsto \tilde{pl}[b]$ maps each belief function $b$ onto the probability distribution $\tilde{pl}[b] = \tilde{pl}_b$ obtained by normalizing the plausibility values $pl_b(x)$\footnotemark[1]\footnotetext[1]{With a harmless abuse of notation we denote the values of b.f.s and pl.f.s on a singleton $x$ by $m_b(x), pl_b(x)$ instead of $m_b(\{x\}),pl_b(\{x\})$.} of the element of $\Theta$:
\begin{equation}\label{eq:relplaus}
\tilde{pl}_b(x) = \frac{pl_b(x)}{\sum_{y\in\Theta} pl_b(y)}.
\end{equation}
We call the output $\tilde{pl}_b$ (\ref{eq:relplaus}) of the plausibility transform \emph{relative plausibility of singletons} \cite{cuzzolin10amai}. Voorbraak proved that his (in our terminology) relative plausibility of singletons $\tilde{pl}_b$ is a perfect representative of $b$ when combined with other probabilities $p \in \mathcal{P}$ through Dempster's rule $\oplus$: 
\[
\tilde{pl}_b \oplus p = b\oplus p \hspace{5mm} \forall p \in\mathcal{P}.
\]

Dually, a \emph{relative belief transform} $\tilde{b} : \mathcal{B} \rightarrow \mathcal{P}$, $b \mapsto \tilde{b}[b]$ mapping each belief function to the corresponding \emph{relative belief of singletons} $\tilde{b}[b] = \tilde{b}$ \cite{cuzzolin08pricai,cuzzolin2008semantics,cuzzolin2008dual,cuzzolin08unclog-semantics,haenni08aggregating,daniel06on} such that:
\begin{equation}\label{eq:btilde}
\tilde{b}(x) = \frac{b(x)}{\sum_{y \in \Theta} b(y)}
\end{equation}
can be defined. This notion (under the name of `normalized belief of singletons') was first proposed by Daniel \cite{daniel06on}. Some initial analyses of the relative belief transform and its close relationship with the (relative) plausibility transform were presented in \cite{cuzzolin08pricai,cuzzolin08unclog-semantics}.

More recently, other proposals have been brought forward by Dezert et al. \cite{dezert} and Sudano \cite{sudano01fusion,sudano01IPT,sudano03icif}, based on redistribution processes similar to that of the pignistic transform. Two new Bayesian approximations of belief functions have been derived by the author of this Book from purely geometric considerations \cite{cuzzolin07smcb} in the context of the geometric approach to the ToE \cite{cuzzolin08smcc}, in which belief and probability measures are represented as points of a Cartesian space (see Part II, Chapter \ref{cha:geo}). Consonant transformation approaches have also been proposed \cite{cuzzolin11isipta-consonant}.

\subsection{Reducing the number of focal elements} \label{sec:reducing}

Probability (and possibility \cite{dubois93possibility-probability}) transforms reduce the number of focal elements to store to $O(N)$ by re-distributing the mass assignment of a belief function to size-1 subsets or chains of subsets, respectively. An alternative approach to efficiency can be sought by re-distributing all the mass to subsets of size up to $k$, obtaining a \emph{$k$-additive belief function} \cite{grabish97fss,miranda06ejor,burger10approx,burger09gene,burger2010barycenters}. Some approaches to probability transformation explicitly aim at reducing the complexity of belief calculus. Tessem \cite{tessem93approximations}, for instance, incorporates only the highest-valued focal elements in his $m_{klx}$ approximation. A similar approach inspires the {summarization} technique formulated by Lowrance \emph{et al.} \cite{Lowrance86}.

\section{Continuous belief functions} \label{sec:continuous-formulation}

The original formulation of the theory of evidence summarized in Chapter \ref{cha:toe} is inherently linked to finite frames of discernment. Since the late Seventies, the need for a general formulation of the theory of evidence to continuous domains has been recognized. Numerous efforts have been made since then in order to extend the theory of belief functions to infinite sets of possibilities. None of them has been found entirely convincing yet (see \cite{Kohlas94,wang92continuous}). Nevertheless, they all contributed to a clearer vision of this issue -- we summarize here the most significant ones.

\subsection{Shafer's allocation of probabilities} \label{sec:allocation}

The first attempt (1979) is due to Shafer himself, and goes under the name of \emph{allocation of probabilities} \cite{shafer79allocations}. Shafer proved that every belief function can be represented as an allocation of probability, i.e., a $\cap$-homomorphism into a positive and completely additive probability algebra, deduced from the integral representation due to Choquet. Canonical continuous extensions of belief functions defined on `multiplicative subclasses' $E$ to an arbitrary power set can be introduced by allocation of probability: for every belief function there exists a complete Boolean algebra $M$, a positive measure $\mu$ and a mapping $\rho$ between $E$ and $M$ such that $f = \mu \circ \rho$.

In \cite{shafer79allocations} the concepts of \emph{continuity} and \emph{condensability} are defined for belief functions, and it is shown how to extend a b.f. defined on an algebra of subsets to the whole power set. Canonical extensions satisfy Shafer's notion of belief function definable on infinitely many compatible frames, and show significant resemblance with the notions of inner measure (Section \ref{sec:inner-measures}) and extension of capacities \cite{honda06entropy}. This approach was later reviewed by Jurg Kohlas \cite{kohlas97allocation}, who conducted an algebraic study of argumentation systems (\cite{kb95},\cite{kb94}) as ways of defining numerical degrees of support of hypotheses by means of allocation of probability.

\subsection{From belief functions to random sets}

Possibly the most elegant formalism in which to formulate a continuous version of the theory of belief functions is the theory of random sets \cite{matheronrandom,goutsias97random,ross86random,Nguyen97}, i.e., probability measures over power sets, of which traditional belief functions are indeed a special case (recall Section \ref{sec:random-sets}). A serious obstacle, however, is given by the formulation of aggregation operators for random sets. Neither Shafer's allocations of probability approach nor Nguyen's random set interpretation seemed to be much concerned with combination rules, and not much progress seems to have been made since.

\subsection{Belief functions on Borel intervals} \label{sec:borel-intervals}

Almost in syncronous, Strat \cite{Strat84} and Smets \cite{smets05real} had the idea of making the problem of generalising belief functions to continuous frames tractable via standard calculus, by \emph{allowing only focal elements which are closed intervals of the real line}. Such extension of belief functions to mere \emph{Borel sets of the real line} \cite{kuhr2007finetti} has demonstrated in time its fertility. Generalizations of combination and conditioning rules follow quite naturally \cite{smets05real}. Inference mechanisms with predictive b.f.s on real numbers have been proposed \cite{aregui07isipta}. The computation of a pignistic probability for b.f.s on Borel intervals is straightforward, allowing the formulation of a theory of decision making with continuous belief functions.

\subsection{Theory of hints} \label{sec:hints}

Kohlas and Monney proposed a very similar definition of belief functions on real numbers. Indeed, some of the relations introduced in  \cite{smets05real} and \cite{kohlas95foundations} had already appeared in \cite{Dempster68a}. This led to the so-called \emph{mathematical theory of hints} \cite{kohlas95foundations,kohlas97allocation,kohlas95foundations,kohlas94representation,vakili93} (see the monograph \cite{km95book} for a detailed exposition). Hints \cite{kohlas94representation} are bodies of information inherently imprecise and uncertain, that do not point to precise answers but are used to judge hypotheses, leading to support and plausibility functions similar to those introduced by Shafer. They allow a logical derivation of Dempster's rule, and originate a theory valid for general, infinite frames of discernment. Among others, hints have been applied to model-based diagnostic \cite{kohlas95modelbased}.

\subsection{Monotone capacities and Choquet integrals} \label{sec:monotone}

\emph{Monotone capacities} \cite{chateauneuf1989,maccheroni05annals,miranda03extreme} have been also suggested as a general framework for the mathematical description of uncertainty \cite{bruning02stat}.
\begin{definition}
Let $S$ be a finite set, and $2^{S}$ be the power set of $S$. Then $v : 2^{S}\rightarrow [0,1]$ is a
\emph{capacity} on $S$ if $v(\emptyset)=0$, $v(S)=1$ and:
\begin{itemize}
\item if $E\subseteq F$ then $v(E)\leq v(F)$, for $E,F\in\mathcal{S}$ (\emph{monotonicity}).
\end{itemize}
\end{definition}
Obviously \cite{denneberg00totally}:
\begin{proposition} Belief functions are totally monotone capacities.
\end{proposition}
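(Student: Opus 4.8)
The plan is to obtain the statement almost directly from the axiomatic Definition~\ref{def:bel1}, after checking that a belief function qualifies as a capacity in the first place. Recall that a capacity on $\Theta$ must satisfy $v(\emptyset)=0$, $v(\Theta)=1$ and monotonicity ($E\subseteq F\Rightarrow v(E)\leq v(F)$), while being \emph{totally monotone} (equivalently, $\infty$-monotone) means that for every integer $n\geq 2$ and all $A_1,\dots,A_n\subseteq\Theta$,
\[
v(A_1\cup\cdots\cup A_n)\;\geq\;\sum_i v(A_i)-\sum_{i<j}v(A_i\cap A_j)+\cdots+(-1)^{n+1}v(A_1\cap\cdots\cap A_n).
\]
Comparing this with the third clause of Definition~\ref{def:bel1}, one sees that total monotonicity is \emph{verbatim} the superadditivity axiom~(\ref{eq:superadditivity}); so the only real work is to confirm the capacity axioms.

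First I would dispatch the capacity part for an arbitrary belief function $b:2^{\Theta}\to[0,1]$. The requirements $b(\emptyset)=0$ and $b(\Theta)=1$ are two of the three clauses of Definition~\ref{def:bel1}. Monotonicity is not literally one of the belief-function axioms, but it follows from the $n=2$ instance of~(\ref{eq:superadditivity}): given $A\subseteq B$, apply it to the disjoint pair $A_1=A$, $A_2=B\setminus A$, for which $A_1\cup A_2=B$ and $A_1\cap A_2=\emptyset$, obtaining $b(B)\geq b(A)+b(B\setminus A)-b(\emptyset)=b(A)+b(B\setminus A)\geq b(A)$, the last inequality because $b$ is $[0,1]$-valued. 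Hence $b$ is a capacity, and since it also satisfies~(\ref{eq:superadditivity}) for every $n$ it is totally monotone, which is the claim.

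For completeness --- and to make the nonnegativity entirely transparent --- I would also record the standard mass-based verification of~(\ref{eq:superadditivity}), using the equivalent Definition~\ref{def:bel2}. Writing $b(X)=\sum_{C\subseteq X}m(C)$ with $m\geq 0$, one expands the right-hand side and collects the coefficient of each $m(C)$; with $J(C)=\{i:C\subseteq A_i\}$ and the elementary identity $\sum_{\emptyset\neq I\subseteq J}(-1)^{|I|+1}=1$ valid for $J\neq\emptyset$, every $C$ with $J(C)\neq\emptyset$ cancels, leaving
\[
b(A_1\cup\cdots\cup A_n)-\Big(\sum_i b(A_i)-\cdots+(-1)^{n+1}b(A_1\cap\cdots\cap A_n)\Big)=\sum_{\substack{C\subseteq A_1\cup\cdots\cup A_n\\ C\not\subseteq A_i\ \forall i}}m(C)\;\geq\;0.
\]
There is essentially no obstacle here: the one point worth stating carefully is the identification of the paper's ``superadditivity'' clause with the hierarchy of $n$-monotonicity inequalities, together with the one-line derivation of monotonicity of the capacity from the $n=2$ case; everything else is bookkeeping.
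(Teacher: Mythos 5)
Your proof is correct. The paper itself offers no argument here --- it states the proposition as ``obvious'' with a citation to Denneberg --- so there is nothing to compare against; your write-up simply supplies the standard justification. The two substantive points you make are exactly the right ones: total monotonicity is literally the superadditivity clause of Definition~\ref{def:bel1}, and monotonicity of the capacity (which is \emph{not} one of the belief-function axioms) follows from the $n=2$ instance applied to $A$ and $B\setminus A$ together with nonnegativity of $b$. The mass-based verification via $\sum_{\emptyset\neq I\subseteq J}(-1)^{|I|+1}=1$ is a correct, if optional, confirmation that the superadditivity axiom really holds for set functions of the form (\ref{eq:belief}).
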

Hendon \emph{et al}. \cite{hendon96product} examined the question of defining the product of two independent capacities. In particular, for the product of two belief functions as totally monotone capacities, there is a unique minimal product belief function.

The Choquet's integral \cite{gilboa94additive} of monotone set functions (such as belief functions) is a generalization of the Lebesgue integral with respect to $\sigma$-additive measures. Wang and Klir investigated the relations between Choquet integrals and belief measures in \cite{wang97choquet}.

\section{Other theoretical developments} \label{sec:other-advances}

\subsection{Inverting Dempster's rule: Canonical decomposition} \label{sec:canonical}

The quest for an inverse operation to Dempster's combination rule has a natural appeal and an intuitive interpretation. If Dempster's rule reflects a modification of one's system of belief when the subject becomes familiar with the degrees of beliefs of another subject and accepts the arguments on which these degrees are based, the inverse operation would enable to \emph{erase the impact} of this modification, and to return back to one's original degrees of beliefs, supposing that the reliability of the second subject is put into doubts. This inversion problem, called also `canonical decomposition', was solved in an algebraic framework by Smets in \cite{smets95canonical}. Kramosil also proposed a solution to the inversion problem, within his measure-theoretic approach \cite{kramosil97measure}.

\subsection{Frequentist formulations} \label{sec:frequentist}

The theory of evidence was born as an attempt to formulate a mathematical theory of subjective belief, in a rather incompatible approach to theories of chance in which probabilities are the result of series of empirical trials. To our knowledge only Walley has tried, in an interesting even if not very recent paper \cite{Walley82frequentist}, to formulate a frequentist theory of upper and lower probabilities (see also \cite{denoeux06ipmu}). Namely, he considered models for independent repetitions of experiments described by \emph{interval probabilities}, and suggested generalizations of the familiar concepts of independence and asymptotic behavior.

\subsection{Gaussian belief functions} \label{sec:gaussian}

The notion of \emph{Gaussian belief function} \cite{liu95model,shafer92note} is an interesting effort to extend Dempster-Shafer theory to represent mixed knowledge, some of which logical and some uncertain. The notion of Gaussian b.f. was proposed by A. Dempster and formalized by L. Liu in \cite{liu96theory}. Technically, a Gaussian belief function is a Gaussian distribution over the members of the parallel partition of an hyperplane. By adapting Dempster's rule to the continuous case, Liu derives a rule of combination and proves its equivalence to Dempster's geometrical description \cite{dempster90normal}. In \cite{liu99local}, Liu proposed a join-tree computation scheme for expert systems using Gaussian belief functions, for he proved their rule of combination satisfies the axioms of Shenoy and Shafer \cite{shenoy90axioms}.

\section{Relation with other mathematical theories of uncertainty} \label{sec:other-theories}

Currently several different mathematical theories of uncertainty compete to be adopted by practitioners of all field of applied science \cite{walley91book,yao98interpretations,shafer01book,halpern03book,maass06philosophical,Gardenfors}, a process resulting in a growing number of applications of these frameworks. The consensus is that there no such a thing as \emph{the} best mathematical description of uncertainty (compare \cite{joslyn98towards}, \cite{Fagin88}, \cite{sheridan1991}, \cite{klir95principles,klir99fuzzy} and \cite{denoeux99reasoning} to cite a few), and that the choice of the most suitable methodology depends on the actual problem at hand (an extensive presentation of a possible unified theory of imprecise probability can be found in \cite{Walley91coherent,walley00towards}). Smets (\cite{smets98which}, \cite{smets94what}) also contributed to the analysis of the difference between imprecision and uncertainty \cite{Keynes21pir}, and compared the applicability of various models of uncertainty. Whenever a probability measure can be estimated, most authors suggest the use of a classical Bayesian approach. If probability values cannot be reliably estimate, upper and lower probabilities should instead be preferred.

Here we briefly survey the links between the theory of evidence and other approaches to uncertainty theory.

\subsection{Lower probabilities} \label{sec:lower-probabilities}

A \emph{lower probability} $\lpr$ is a function from $2^\Theta$, the power set of $\Theta$, to the unit interval $[0,1]$. With any lower probability $\lpr$ is associated a dual upper probability function $\upr$, defined for any $A \subseteq \Theta$ as $\overline{P}(A)=1-\underline{P}(A^c)$, where $A^c$ is the complement of $A$. With any lower probability $\lpr$ we can associate a (closed convex) set
\begin{equation}\label{eq:credal}
\cred(\lpr)= \Big \{p: P(A) \geq \lpr(A), \forall A \subseteq \Theta \Big \}
\end{equation}
of probability distributions $p$ whose measure $P$ dominates $\lpr$. Such a polytope or convex set of probability distributions is usually called a \emph{credal set} \cite{levi80enterprise}. Note, however, that not all convex sets of probabilities can be described by merely focusing on events (see Walley~\cite{Walley91}).

A lower probability $\lpr$ is called `consistent' if $\cred(\lpr)\neq \emptyset$ and `tight' if
\[
\inf_{p \in \cred(\lpr)}P(A) = \lpr(A)
\]
(respectively $\lpr$ `avoids sure loss" and $\lpr$ is `coherent' in Walley's~\cite{Walley91} terminology). Consistency means that the lower bound constraints $\lpr(A)$ can indeed be satisfied by some probability measure, while tightness indicates that $\lpr$ is the lower envelope on subsets of $\cred(\lpr)$.

Belief functions are indeed a special type of {coherent lower probabilities}, which in turn can be seen as a special class of \emph{lower previsions} (consult \cite{Walley91}, Section 5.13). Walley has proved that coherent lower probabilities are closed under convex combination: the relationship of belief functions with convexity will be discussed in Part II.

\subsection{Probability intervals} \label{sec:probability-intervals}

Dealing with general lower probabilities defined on $2^\Theta$ can be difficult when $\Theta$ is large: it may then be interesting for practical applications to focus on simpler models.

A \emph{set of probability intervals} or \emph{interval probability system} \cite{tessem92interval,decampos94} is a system of constraints on the probability values of a probability distribution $p:\Theta \rightarrow [0,1]$ on a finite domain $\Theta$ of the form:
\begin{equation} \label{eq:credal-interval}
\mathcal{P}(l,u) \doteq \Big \{p: l(x) \leq p(x) \leq u(x), \forall x \in \Theta \Big \}.
\end{equation}
Probability intervals were introduced as a tool for uncertain reasoning in \cite{decampos94,moral93partially}, where combination and marginalization of intervals were studied in detail. The authors also studied the specific constraints such intervals ought to satisfy in order to be consistent and tight.

As pointed out for instance in \cite{unclog08book}, probability intervals typically arise through measurement errors. As a matter of fact, measurements can be inherently of interval nature (due to the finite resolution of the instruments). In that case the \emph{probability} interval of interest is the class of probability measures consistent with the \emph{measured} interval.

A set of constraints of the form (\ref{eq:credal-interval}) also determines a credal set: credal sets generated by probability intervals are a sub-class of all credal sets generated by lower and upper probabilities \cite{smets87versus}. Note that given a set of bounds $\mathcal{P}(l,u)$ we can obtain lower and upper probability values $\lpr(A)$ on any subset $A \subseteq \Theta$ by using the following simple formulas:
\begin{equation}\label{eq:lowup-from-int}
\lpr(A)=\max \left \{ \sum_{x \in A} l(x), 1- \sum_{x \not\in A} u(x) \right \}, \quad \lpr(A)=\min \left \{ \sum_{x \in A} u(x), 1- \sum_{x \not\in A} l(x) \right \}.
\end{equation}

Belief functions are also associated with a set of lower and upper probability constraints of the form (\ref{eq:credal-interval}): they correspond therefore to a special class of probability intervals, associated with credal sets of a specific form.

\subsection{Possibility theory} \label{sec:possibility-theory}

\emph{Possibility theory} \cite{dubois83unfair,dubois88possibility,dubois87properties} is a theory of uncertainty based on the notion of \emph{possibility measure}.

\begin{definition} \label{def:possibility-measure}
A \emph{possibility measure} on a domain $\Theta$ is a function $Pos: 2^\Theta \rightarrow [0,1]$ such that $Pos(\emptyset) =0$, $Pos(\Theta)=1$ and 
\[
Pos(\bigcup_i A_i)=\sup_i Pos(A_i) 
\]
for any family $\{A_i|A_i\in 2^\Theta, i\in I\}$ where $I$ is an arbitrary set index.
\end{definition}

Each possibility measure is uniquely characterized by a \emph{membership function} or \emph{possibility distribution} $\pi: \Theta \rightarrow [0,1]$ s.t. $\pi(x)\doteq Pos(\{x\})$ via the formula $Pos(A)=\sup_{x\in A} \pi(x)$. Its dual $Nec(A) = 1 - Pos(A^c)$ is called \emph{necessity measure}.

A number of studies have pointed out that necessity measures coincide in the theory of evidence with the class of consonant belief functions (Definition \ref{def:consonant}). Let us call \emph{plausibility assignment} $\bar{pl}_b$ \cite{joslyn91towards} the restriction of the plausibility function to singletons $\bar{pl}_b(x) = pl_b(\{x\})$. From Condition 4 of Proposition \ref{pro:cons} it follows immediately that:
\begin{proposition} \label{pro:conso}
The plausibility function $pl_b$ associated with a belief function $b$ on a domain $\Theta$ is a possibility measure iff $b$ is consonant, in which case the membership function coincides with the plausibility assignment: $\pi = \bar{pl}_b$.\\ Equivalently, a b.f. $b$ is a necessity measure iff $b$ is consonant.
\end{proposition}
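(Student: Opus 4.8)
The plan is to derive everything from Proposition~\ref{pro:cons}, which already packages the consonance conditions we need, together with the elementary identity $pl_b(A) = 1 - b(\bar A)$ from~(\ref{eq:upper-probability}). Observe first that this identity gives $pl_b(\emptyset) = 1 - b(\Theta) = 0$ and $pl_b(\Theta) = 1 - b(\emptyset) = 1$ for \emph{every} belief function $b$, so the only nontrivial axiom of a possibility measure (Definition~\ref{def:possibility-measure}) left to check will be $\sup$-compatibility with unions.

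For the direction ``$b$ consonant $\Rightarrow pl_b$ is a possibility measure'', I would start from condition~(3) of Proposition~\ref{pro:cons}, namely $pl_b(A\cup B) = \max(pl_b(A), pl_b(B))$ for all $A, B \subseteq \Theta$ (the degenerate cases $A=\emptyset$ or $A=\Theta$ being immediate from the boundary values just noted). Since $\Theta$ is finite, its power set is finite, so any union $\bigcup_{i\in I} A_i$ is in fact a union of finitely many of the $A_i$; a straightforward induction on the number of terms then upgrades condition~(3) to $pl_b(\bigcup_i A_i) = \max_i pl_b(A_i) = \sup_i pl_b(A_i)$, which is exactly what Definition~\ref{def:possibility-measure} demands. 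Together with the boundary values, this shows $pl_b$ is a possibility measure. For the identification of the membership function, I would invoke condition~(4) of Proposition~\ref{pro:cons}: $pl_b(A) = \max_{\theta\in A} pl_b(\{\theta\}) = \sup_{x\in A}\bar{pl}_b(x)$ for all nonempty $A$, which says precisely that the possibility distribution $\pi$ characterising $pl_b$ via $Pos(A) = \sup_{x\in A}\pi(x)$ is $\pi = \bar{pl}_b$. Conversely, if $pl_b$ is a possibility measure, then applying the defining property to the two-element family $\{A, B\}$ yields $pl_b(A\cup B) = \sup\{pl_b(A), pl_b(B)\} = \max(pl_b(A), pl_b(B))$ for all $A, B\subseteq\Theta$; this is condition~(3) of Proposition~\ref{pro:cons}, equivalent to $b$ being consonant (condition~(1)). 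This closes the first equivalence.

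For the statement about necessity measures, I would again use $b(A) = 1 - pl_b(\bar A)$, valid for all $A\subseteq\Theta$, which exhibits $b$ as the dual set function $A \mapsto 1 - pl_b(A^c)$. Hence $b$ is a necessity measure --- i.e., of the form $A \mapsto 1 - Pos(A^c)$ for some possibility measure $Pos$ --- if and only if $pl_b$ itself is a possibility measure: indeed, if $b(A) = 1 - Pos(A^c)$ for all $A$, then $Pos(A^c) = pl_b(A^c)$ for all $A$, and since $A^c$ ranges over all subsets this forces $Pos = pl_b$. By the equivalence established above, $pl_b$ is a possibility measure iff $b$ is consonant, giving the claim.

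The argument is essentially a bookkeeping exercise once Proposition~\ref{pro:cons} is in hand; the only point requiring a moment's care is the passage from binary unions to arbitrary ones (which is trivial precisely because $\Theta$ is finite) and the handling of the degenerate subsets $\emptyset$ and $\Theta$, whose $pl_b$-values are forced and make those cases of the union axiom automatic.
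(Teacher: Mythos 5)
Your proof is correct and follows essentially the same route as the paper, which simply observes that the result is immediate from Proposition~\ref{pro:cons}: you pivot on conditions (3) and (4) of that proposition (binary $\max$-additivity upgraded to arbitrary unions by finiteness, plus the singleton characterisation for $\pi=\bar{pl}_b$), together with the duality $b(A)=1-pl_b(\bar{A})$ for the necessity-measure statement. Nothing further is needed.
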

Possibility theory (in the finite case) is then embedded in the ToE as a special case.

The links between the Transferable Belief Model and possibility theory have been briefly investigated by Ph. Smets in \cite{smets90possibility}, while Dubois and Prade \cite{dubois90} have worked extensively on consonant approximations of belief functions \cite{cuzzolin09ecsqaru-outer,cuzzolin10fss}. Their work has been later considered in \cite{cliff92minimal,joslyn97possibilistic}.

\subsection{Fuzzy measures} \label{sec:fuzzy-measures}

While evidential reasoning generalises both standard probability and possibility theory, a further generalization of the class of belief measures is expressed by \emph{fuzzy measures} \cite{cuzzolin04ipmu}.
\begin{definition} \label{def:fuzzy-measures}
Given a domain $\Theta$ and a non-empty family $\mathcal{C}$ of subsets of $\Theta$, a \emph{fuzzy measure} $\mu$ on
$\langle \Theta, \mathcal{C} \rangle$ is a function $\mu : \mathcal{C} \rightarrow [0,1]$ which meets the following conditions:
\begin{itemize}
\item $\mu(\emptyset)=0$;
\item if $A\subseteq B$ then $\mu(A)\leq \mu(B)$, for every $A,B\in\mathcal{C}$;
\item for any increasing sequence $A_1\subseteq A_2 \subseteq \cdots$ of subsets in $\mathcal{C}$,
\[
if \bigcup_{i=1}^{\infty}A_i\in\mathcal{C},\;then\;\lim_{i\rightarrow \infty} \mu(A_i) = \mu\Big(\bigcup_{i=1}^{\infty}A_i\Big)
\]
(\emph{continuity from below});
\item  for any decreasing sequence $A_1\supseteq A_2 \supseteq \cdots$ of subsets in $\mathcal{C}$,
\[if \bigcap_{i=1}^{\infty}A_i\in\mathcal{C}\;and \;\mu(A_1)<\infty,\;then\;\lim_{i\rightarrow \infty}\mu(A_i) = \mu\Big(\bigcap_{i=1}^{\infty}A_i \Big) \]
(\emph{continuity from above}).
\end{itemize}
\end{definition}
Clearly, from Definition \ref{def:bel1} a belief measure is also a fuzzy measure \cite{smets81degree}.

Klir \emph{et al}. published an excellent discussion \cite{klir97constructing} on the relations between belief and possibility theory \cite{feriet1982,lee95fuzzy}, and examined different methods for constructing fuzzy measures in the context of expert systems. Authors like Heilpern \cite{heilpern97representation}, Yager \cite{yager99class}, Palacharla \cite{palacharla94understanding}, Romer \cite{romer95applicability} and others \cite{Renaud99} also studied the connection between fuzzy numbers and Dempster-Shafer theory. Lucas and Araabi  proposed in \cite{lucas99generalization} their own generalization of the Dempster-Shafer theory \cite{yen90generalizing} to a fuzzy valued measure, while Mahler \cite{mahler95combining} formulated his own `fuzzy conditioned Dempster-Shafer (FCDS) theory', as a probability-based calculus for dealing with possibly imprecise and vague evidence. Palacharla and Nelson \cite{palacharla94understanding,Palacharla94b}, instead, focused on comparing their application to data fusion problems in transportation engineering.

Ronal Yager \cite{yager86entailment,yager99class}, one of the main contributors to fuzzy logic, proposed \cite{yager95including} a combined fuzzy-evidential framework for fuzzy modeling. In another work, Yager investigated the issue of normalization (i.e., the assignment of non-zero values to empty sets as a consequence of the combination of evidence) in the fuzzy Dempster-Shafer theory of evidence,
proposing a technique called \emph{smooth normalization} \cite{yager96normalization}.

\subsection{Probabilistic logic} \label{sec:probabilistic-logic}

Generalizations of classical logic in which propositions are assigned probability values have been proposed in the past. As belief functions naturally generalize probability measures, it is quite natural to define non-classical logic frameworks in which propositions are assigned \emph{belief values} \cite{smets91patterns}, rather than probability values \cite{cholvy2009using,HRWW08a,paris08unclog,batens00}. This approach has been brought forward in particular by Saffiotti \cite{Saffiotti_abelief-function,Saffiotti90,Saffiotti91}, Haenni \cite{haenni05isipta}, and others \cite{kramosil93boolean,provan90logicbased,harmanec94qualitative,kohlas87b}.

In propositional logic, propositions or formulas are either true or false, i.e., their truth value $T$ is either 0 or 1 \cite{Mates72}. Formally, an \emph{interpretation} or \emph{model} of a formula $\phi$ is a valuation function mapping $\phi$ to the truth value `true'  or `1'. Each formula can therefore be associated with the set of interpretations or models under which its truth value is 1. If we define the frame of discernment of all the possible interpretations, each formula $\phi$ is associated with the subset $A(\phi)$ of this frame which collects all its interpretations.\\
If the available evidence allows to define a belief function on this frame of possible interpretations, to each formula $A(\phi) \subseteq \Theta$ is then naturally assigned a degree of belief $b(A(\phi))$ between 0 and 1 \cite{Saffiotti_abelief-function,haenni05isipta}, measuring the total amount of evidence supporting the proposition `$\phi$ is true'.

Saffiotti \cite{Saffiotti92} built a hybrid logic for representing uncertain logic called \emph{belief function logic} by attaching degrees of belief and degrees of doubt to classical first-order logic, and elaborating original angles on the role of Dempster's rule. The issue was studied by other authors as well. In \cite{benferhat95tech} and \cite{benferhat95belief}, Benferhat \emph{et al}., for instance, defined a semantics based on $\epsilon$-belief assignments where values committed to focal elements are either close to 0 or close to 1. Andersen and Hooker \cite{andersen96linear} proved probabilistic logic and Dempster-Shafer theory to be instances of a certain type of linear programming model, with exponentially many variables (see also the work of Hunter \cite{hunter87versus}). In particular it is worth mentioning the work of Resconi, Harmanec \emph{et al}. \cite{resconi93integration,harmanec94modal,harmanec94qualitative,resconi96interpretations,harmanec96modal}, who proposed `propositional modal logic' as a unifying framework for various uncertainty theories, including fuzzy, possibility and evidential theory, and proposed an interpretation of belief measures on infinite sets. In a series of papers by Tsiporkova \emph{et al} \cite{tsiporkova99evidence,tsiporkova99dempster} Harmanec's modal logic interpretation was further developed, and a modal logic interpretation of Dempster's rule was proposed. The links of DS theory with penalty logic were explored by Dupin \emph{et al} \cite{dupin94penalty}.

To conclude this survey, \emph{incidence calculus} \cite{bundy85incidence,liu98method} is a probabilistic logic for dealing with uncertainty in intelligent systems. Incidences are assigned to formulae: they are the logic conditions under which a formula is true. Probabilities are assigned to incidences, and the probability of a formula is computed from the sets of incidences assigned to it. In \cite{liu98method} Liu, Bundy \emph{et al}. did propose a method for discovering incidences that can be used to calculate mass functions for belief functions.

\section{Applications} \label{sec:applications}

The number of applications of the theory of evidence to a variety of fields of engineering, computer science, and business has been steadily growing in the past decades \cite{smets99practical,shenoy94using,boston00signal,ip91exchange,soh98seaice,besserer93multiple,simpson90application,ferrari89coupling} - we will therefore not attempt to provide a comprehensive review of the matter here.

\subsection{Machine learning} \label{sec:applications-machine-learning}

Machine learning, including clustering, classification \cite{foucher99multiscale} and decision making, is a natural field of application for evidential reasoning \cite{bergsten97applying,quost06ipmu1,Burger06,Aran09,kessentini10ipmu}. A lot of work in this fields has been done by Thierry Denoeux and his co-workers \cite{mas09}. Recent efforts to generalise the maximum entropy classification framework were made by the Author \cite{Cuzzolin2018maxent}.

Already in the Nineties Denoeux and Zouhal \cite{denoeux95knearest} proposed a $k$-nearest neighbor classifier based on the theory of evidence, in which each neighbor of a test sample was considered as an item of evidence supporting hypotheses about the class membership of the test measure. The pieces of evidence provided by the $k$ nearest neighbors were then pooled as usual by Dempster's sum. The problem of tuning the parameters of the classification rule was solved by minimizing an error function \cite{zouhal98evidence}. Le-Hegarat, Bloch \emph{et al}. also worked on unsupervised classification in a multisource remote sensing environment \cite{lehegarat97application} in the framework of the theory of evidence, as it allows to consider {unions of classes}. Binaghi et al \cite{binaghi99fuzzy} defined an empirical learning strategy for the automatic generation of Dempster-Shafer classification rules from a set of training data. Fixsen \emph{et al}. described a modified rule of combination with foundations in the theory of random sets and proved this `modified Dempster-Shafer' (MDS) \cite{fixen95modified,fixsen97modified} approach's relation to Smets' pignistic probabilities. MDS was then used to build an original classification algorithm.
Elouedi et al. \cite{elouedi00classification,elouedi00decision} adapted the classical `decision tree' technique to the presence of uncertainty on the class value, uncertainty represented by a belief function.

Several papers have been written on the application of the theory of evidence to neural network classifiers (see for instance \cite{denoeux95neural,wang98majority,loonis95multi}). In \cite{loonis95multi}, for instance, Loonis \emph{et al}. compared a multi-classifier neural network fusion scheme with the straightforward application of Dempster's rule in a pattern recognition context \cite{denoeux97analysis,ng98equalisation}. Original work has been conducted by J. Schubert, who much contributed to studying the {clustering} problem in an evidential context \cite{schubert98neural,schubert98fast,schubert99fast,schubert99simultaneous}. In his approach, $2^n-1$ pieces of evidence were clustered into $n$ clusters by minimizing a `metaconflict' function. He found neural structures more effective and much faster than optimization methods for larger problems.\\ Building on his work on clustering of nonspecific evidence, Schubert \cite{schubert97creating} developed a classification method based on the comparison with prototypes representing clusters, instead of going for a full clustering of all the evidence. The resulting computational complexity is $\mathcal{O}(M\cdot N)$, where $M$ is the maximum number of subsets and $N$ the number of prototypes chosen for each subset.

Since they both are suitable to solve classification problems \cite{wilkinson90evidential}, neural networks and belief functions are sometimes
integrated to yield more robust systems \cite{wang91neural,mohiddin94evidential}. Giacinto \emph{et al}., on their side, ran a comparison of neural networks and belief-based approaches to pattern recognition in the context of earthquake
risk evaluation \cite{giacinto97application}.

\subsection{Computer vision and pattern recognition} \label{sec:applications-vision}

Computer vision applications are still rare \cite{wesley86cv,pinz96active,boshra99accommodating,li88evidential}, although of late there seems to be a growing interest of vision scientists for approximate reasoning and related techniques \cite{guironnet06eusipco,Burger08,Kes09}.\\ Andr\'e Ayoun and Philippe Smets (\cite{ayoun01data}) used the Transferable Belief Model to quantify the conflict among sources of information in order to solve the data association problem (see Chapter \ref{cha:total} for our approach to the problem), and applied this method to the detection of sub-marines. F. Martinerie \emph{et al}. \cite{martinerie92dataassociation} proposed a solution to target tracking \cite{bogler87} from distributed sensors by modeling the evolution of a target as a Markovian process, and combining the hidden Markov model formalism with evidential reasoning in the fusion phase.

To our knowledge only a few attempts have been made to apply the theory of evidence to \emph{recognition} problems \cite{lohmann91evidential}. Ip and Chiu \cite{ip94facial} adopted DS theory to deal with uncertainties on the features used to interpret facial gestures. In an interesting work published on Computing (\cite{borotschnig98comparison}), Borotschnig, Paletta \emph{et al}. compared probabilistic, possibilistic and evidential fusion schemes for active object recognition (in which, based on tentative object hypotheses, active steps are decided until the
classification is sufficiently unambiguous), using parametric eigenspaces as representation. The probabilistic approach seemed to outperform the other, probably due to the reliability of the produced likelihoods.\\ In another paper appeared on Pattern Recognition Letters, Printz, Borotschnig \emph{et al}. \cite{pinz96active} perfected this \emph{active fusion} framework for image interpretation.

In \cite{ng98equalisation} Ng and Singh applied the data equalization technique to the output node of individual classifiers in a multi-classifier system for pattern recognition, combining outputs by using a particular kind of support function (see Definition \ref{def:support}).

Some work has been done in the \emph{segmentation} field, too. In \cite{vasseur99perceptual} Vasseur, Pegard \emph{et al}. proposed a two-stage framework to solve the segmentation task on both indoor and outdoor scenes. In the second stage, in particular, a Dempster-Shafer-style fusion technique was used to detects object in the scene by forming groups of primitive segments (perceptual organization). Similarly, B. Besserer \emph{et al}. \cite{besserer93multiple} exploited multiple sources of evidence from segmented images to discriminate among possible object classes, using Dempster's rule to update beliefs in classes.

Among evidential applications to medical imaging and diagnostics \cite{smets78theory,smets79medical,smets98application,chen93medical,liu93datafusion,deutsch91knowledge}, I. Bloch used some key features of the theory, such as its representation of ignorance and conflict computation, for the purpose of classifying multi-modality medical images \cite{bloch96aspects}. Chen \emph{et al}. used multivariate belief functions to identify anatomical structures from x-ray data (\cite{chen92spatial}).

\subsection{Sensor fusion} \label{sec:applications-fusion}

Sensor fusion applications are more common \cite{aran2009sequential,mascle98introduction,reece97qualitative}, since Dempster's rule fits naturally in an information integration schemes \cite{smets00fusion}. An and Moon, for instance, \cite{an93structure} implemented an evidential framework for representing and integrating geophysical and geological information from remote sensors. Filippidis \cite{filippidis99fuzzy} compared fuzzy and evidential reasoning in surveillance tasks (deriving actions from identity attributes, such as `friend' or `foe'), illustrating the superior performance of belief calculus. Hong \cite{hong92recursive} designed an interesting recursive algorithm for information fusion using belief functions.

Target tracking is a typical problem whose solution relies on sensor fusion. Buede \cite{buede97target} proposed a comparison between Bayesian and evidential reasoning by implementing the same target identification problem involving multiple levels of abstraction (type, class and nature). He argued from the algorithms' convergence rates the superiority of the classical approach. A similar comparison was conducted in \cite{leung00bayesian}, using real-life as well as simulated radar data.

\subsection{Robotics and autonomous navigation} \label{sec:applications-robotics}

Decision problems are very common in autonomous navigation and path planning \cite{wesley93autonomous,xia97driven,golshani96dynamic,abel88lattice}. For example, robot {localization} techniques usually exploit different types of sensors to estimate the current position of the robot on a map. The inverse problem, called `map building', consists in inferring the structure of an unknown environment from sensor data. For instance, Pagac \emph{et al}. \cite{Pagac98} examined the problem of constructing and maintaining a map (namely a simple 2D occupancy grid) of an autonomous vehicle environment and used Dempster's rule to fuse sensor readings. In a related work Gambino \emph{et al}. \cite{gambino97tbm} adopted Smets' Transferable Belief Model for sensor fusion and compared the results to those of a straightforward application of Dempster's rule. Murphy \cite{murphy98dempster}, instead, used the evidential `weight of conflict' (see Chapter \ref{cha:toe}, Section \ref{sec:weight-of-conflict}) to measure the consensus among different sensors, and attempted to integrate abstract and logical information.

\subsection{Other applications} \label{sec:applications-others}

Another field of information technology which is seeing an increasing number of application of the theory of evidence is database management \cite{lim94resolving}: in particular, data mining \cite{bergsten97applying} and concept-oriented databases \cite{dubitzky99towards}. Mc Lean \emph{et al}. showed how to represent incomplete data frequently present in databases \cite{mcclean00background} by means of mass functions, and how to integrate distributed databases \cite{mcclean97evidence} using the evidential sensor fusion scheme.\\
It is also worth citing the work of Webster \emph{et al}. \cite{websterii99vadidation} on an entropy criterion based on the theory of evidence for the validation of expert systems \cite{Biswas89,iancu97prosum,guan90rule} performance. In \cite{xu96some}, some strategies for explanations \cite{Strat87} for belief-based reasoning in the context of expert systems were suggested. 

Several applications to control theory and the theory of dynamical systems have been brought forward in recent years \cite{ramasso2007forward}. Climate change \cite{haduong06climate} is a promising testbed for theories of uncertainty as well.

Finally, economics has always experimented with mathematical models in an attempt to describe the amazing complexity of the systems it needs to describe. To cite a few, applications of evidential reasoning to project management \cite{shipley99project}, exchange rate forecasting \cite{ip91exchange} and monetary unit sampling \cite{gillett00monetary} have been proposed.

\part{Advances}

\chapter{A geometric approach to belief calculus} \label{cha:geo}

\begin{center}
\includegraphics[width = 0.45 \textwidth]{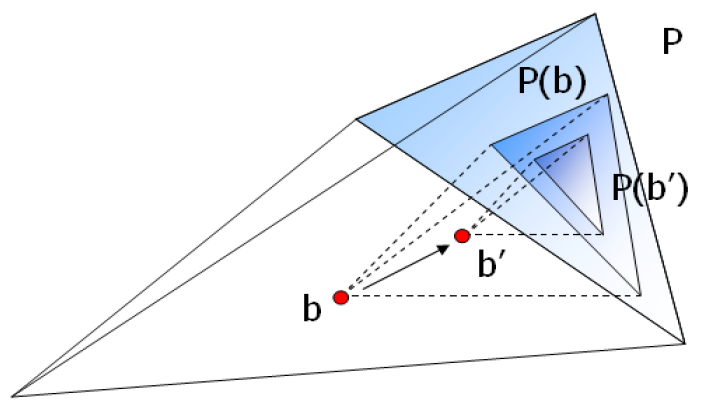}
\end{center}
\vspace{5mm}

When one tries and apply the theory of evidence to classical computer vision problems, a number of important issues arise. \emph{Object tracking} \cite{Cuzzolin99}, for instance, consists in estimating at each time instant the current configuration or `pose' of a moving object from a sequence of images of the latter. Image features can be represented as belief functions and combined to produce an estimate $\hat{q}(t)\in\tilde{\mathcal{Q}}$ of the object's pose, where $\tilde{\mathcal{Q}}$ is a finite approximation of the pose space $\mathcal{Q}$ of the object collected in a training stage (compare Chapter \ref{cha:pose}).\\ Deriving a pointwise estimate from the belief function emerging from the combination is desirable to provide an expected pose estimate - this can be done, for example, by finding the `best' probabilistic approximation of the current belief estimate and computing the corresponding expected pose. This requires a notion of `distance' between belief functions, or between a belief function and a probability distribution.

In \emph{data association} \cite{cuzzolin2000e}, a problem described in detail in Chapter \ref{cha:total}, the correspondence between moving points appearing in consecutive images of a sequence is sought. Whenever these points belong to an articulated body whose topological model is known, the rigid motion constraint acting on each link of the body can be used to obtain the desired correspondence. Since the latter can only be expressed in a conditional way, the notion of combining {conditional belief functions} in a filtering-like process emerges. Conditional belief functions can again be defined in a geometric fashion, as those objects which minimise an appropriate distance \cite{diaz06fusion,shi10distance,khatibi10new,jousselme10brest} between the original belief function and the  `conditional simplex' associated with the conditioning event $A$ (an approach developed in \cite{cuzzolin10brest,cuzzolin-geometric-conditioning,cuzzolin11isipta-conditional}).

From a more general point of view, the notion of representing uncertainty measures such as belief functions \cite{wang91geometrical} and probability distributions as points of a certain space \cite{black96examination,black97geometric,cuzzolin02fsdk,cuzzolin04smcb,monney91planar} can be appealing, as it provides a picture in which different forms of uncertainty descriptions are unified in a single geometric framework. Distances can there be measured, approximations sought, and decompositions easily calculated.\\ In this Chapter we conduct therefore a geometric analysis of the basis concepts of the theory of evidence, such as basic probability assignments and Dempster's rule, laying the foundations for such a geometric approach to uncertainty theory.

\subsection*{Chapter Outline}

A central role is played by the notion of \emph{belief space} $\mathcal{B}$, introduced in Section \ref{sec:belspace}, as the space of all the belief functions one can define on a given frame of discernment. \\ In Section \ref{sec:simplex} we characterize the relation between the focal elements of a belief function and the \emph{convex closure} operator in the belief space. In particular, we show that every belief function can be uniquely decomposed as a convex combination of `basis' or `categorical' belief functions, giving $\mathcal{B}$ the form of a \emph{simplex}, i.e., the convex closure of a set of affinely independent points.\\ In Section \ref{sec:bundle}, instead, the Moebius inversion lemma (\ref{eq:moebius}) is exploited to investigate the symmetries of the belief space. With the aid of some combinatorial results, a \emph{recursive bundle structure} of $\mathcal{B}$ is proved and an interpretation of its components (bases and fibers) in term of important classes of belief functions is provided.\\ In Section \ref{sec:global-geometry-dempster} the global behavior of Dempster's rule of combination within this geometric framework is analysed, by proving that the orthogonal sum \emph{commutes} with the convex closure operator. This allows us to give a geometric description of the set of belief functions combinable with a given b.f. $b$, and the set of belief functions that can be generated from $b$ by combination with additional evidence (its \emph{conditional subspace}).\\ In Section \ref{sec:pointwise-geometry-dempster}, instead, the \emph{pointwise} geometry of Dempster's rule is briefly studied and a geometric algorithm for Dempster's combination of two belief functions, based on the notion of `focus' of a conditional subspace, is outlined.

Finally (Section \ref{sec:applications-geometric-approach}) some of the many potential applications of the geometric approach to belief theory are discussed. In particular, we consider the computation of the canonical decomposition of a separable belief function (Section \ref{sec:canonical-decomposition}), the study of two different order relations associated with belief functions (Section \ref{sec:order-relations}) and the search for a probabilistic approximation of belief functions based on its behavior with respect to Dempster's rule of combination (Section \ref{sec:approx}).

\section{The space of belief functions} \label{sec:belspace}

Consider a frame of discernment $\Theta$ and introduce in the Euclidean space $\mathbb{R}^{|2^{\Theta|}}$ an orthonormal reference frame $\{ \vec{x}_i\}_{i=1,...,|2^{\Theta}|}$.
\begin{definition} \label{def:belief-space}
The \emph{belief space} associated with $\Theta$ is the set $\mathcal{B}_\Theta$ of vectors $\vec{v}$ of $\mathbb{R}^{|2^{\Theta|}}$ such that there exists a belief function $b:2^\Theta \rightarrow [0,1]$ whose belief values correspond to the components of $\vec{v}$, for an appropriate ordering of the subsets of $\Theta$.
\end{definition}
In the following we will drop the dependency on the underlying frame $\Theta$, and denote the belief space by $\mathcal{B}$.

\subsection{Limit simplex} \label{sec:limit-simplex}

The have a first idea of the shape of the belief space it can be useful to start understanding the geometric properties of Bayesian belief functions.

\begin{lemma}\label{lem:baysum}
Whenever $p:2^\Theta \rightarrow [0,1]$ is a Bayesian belief function defined on a frame $\Theta$, and $B$ is an arbitrary subset of $\Theta$, we have that:
\[
\sum_{A \subseteq B} p(A) = 2^{|B|-1} p(B).
\]
\end{lemma}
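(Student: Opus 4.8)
The plan is to use the characterization of Bayesian belief functions recalled just above (the Proposition following the definition of Bayesian b.f.): a belief function $p$ is Bayesian if and only if there is a mass assignment $p:\Theta\rightarrow[0,1]$ on singletons with $\sum_{\theta\in\Theta}p(\theta)=1$ such that $p(A)=\sum_{\theta\in A}p(\theta)$ for all $A\subseteq\Theta$. The only other ingredient is the elementary count of subsets.

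First I would substitute the singleton expansion into the left-hand side and swap the order of summation:
\[
\sum_{A\subseteq B} p(A) = \sum_{A\subseteq B}\,\sum_{\theta\in A} p(\theta) = \sum_{\theta\in B} p(\theta)\cdot\big|\{A : A\subseteq B,\ \theta\in A\}\big|.
\]
The swap is legitimate since all sums are finite. Next I would observe that for a fixed $\theta\in B$, a subset $A\subseteq B$ containing $\theta$ is determined by an arbitrary choice of subset of $B\setminus\{\theta\}$, so there are exactly $2^{|B|-1}$ of them. Substituting this count gives
\[
\sum_{A\subseteq B} p(A) = \sum_{\theta\in B} p(\theta)\cdot 2^{|B|-1} = 2^{|B|-1}\sum_{\theta\in B} p(\theta) = 2^{|B|-1}\,p(B),
\]
where the last equality is again the Bayesian characterization applied to $B$ itself. (The edge case $B=\emptyset$ is consistent: both sides equal $\tfrac12 p(\emptyset)=0$ once one notes the empty sum on the left; if one prefers, one simply restricts attention to non-empty $B$, where $2^{|B|-1}$ is a genuine integer.)

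There is no real obstacle here: the statement is a one-line consequence of the additive structure of Bayesian belief functions plus a trivial binomial count, so the only thing to be careful about is stating the subset count cleanly and handling (or excluding) the $B=\emptyset$ boundary case.
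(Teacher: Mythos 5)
Your proof is correct and is essentially the paper's own argument: the paper likewise rewrites the sum as $\sum_{\theta\in B}k_\theta\, p(\theta)$ with $k_\theta$ the number of subsets of $B$ containing $\theta$, notes $k_\theta = 2^{|B|-1}$, and concludes by the additivity of $p$ on $B$. Your explicit treatment of the $B=\emptyset$ boundary case is a small extra care the paper omits, but it does not change the substance.
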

\begin{proof} The sum can be rewritten as $\sum_{\theta\in B}k_{\theta}p(\theta)$ where $k_{\theta}$ is the number of subsets $A$ of $B$ containing $\theta$. But $k_{\theta}=2^{|B|-1}$ for each singleton, so that:
\[
\sum_{A \subseteq B} p(A) = 2^{|B|-1} \sum_{\theta\in B} p(\theta) = 2^{|B|-1} p(B).
\]
\end{proof}

As a consequence, all Bayesian belief functions are constrained to belong to a well-determined region of the belief space.

\begin{corollary}
The set $\mathcal{P}$ of all the Bayesian belief functions which can be defined on a frame of discernment $\Theta$ is a subset of the following $|\Theta|-1$-dimensional region
\begin{equation} \label{eq:limit-simplex}
\mathcal{L} = \bigg \{ b: 2^\Theta \rightarrow [0,1] \in \mathcal{B} \;s.t. \; \sum_{A \subseteq \Theta} b(A) = 2^{|\Theta|-1} \bigg \}
\end{equation}
of the belief space $\mathcal{B}$, which we call the \emph{limit simplex}\footnote{As it can be proved that $\mathcal{L}$ is indeed a simplex, i.e., the convex closure of a number of affinely independent points (http://www.cis.upenn.edu/~cis610/geombchap2.pdf).}.
\end{corollary}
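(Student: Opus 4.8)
The plan is to obtain the corollary as the special case $B=\Theta$ of Lemma~\ref{lem:baysum}. First I would instantiate that lemma at $B=\Theta$: for any Bayesian belief function $p$ on $\Theta$ it gives $\sum_{A\subseteq\Theta} p(A) = 2^{|\Theta|-1}\, p(\Theta)$, and since $p(\Theta)=1$ by the second axiom of Definition~\ref{def:bel1}, the right-hand side is exactly $2^{|\Theta|-1}$. Hence every $p\in\mathcal{P}$ satisfies the defining equation of $\mathcal{L}$, which is precisely the asserted inclusion $\mathcal{P}\subseteq\mathcal{L}$.

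To also justify that $\mathcal{L}$ is genuinely $(|\Theta|-1)$-dimensional (and, as claimed in the footnote, a simplex), I would prove the reverse inclusion $\mathcal{L}\subseteq\mathcal{P}$, so that in fact $\mathcal{L}=\mathcal{P}$. Take $b\in\mathcal{L}$ with b.p.a.\ $m$. Using $b(A)=\sum_{B\subseteq A}m(B)$ (Definition~\ref{def:bel2}) and exchanging the order of summation,
\[
\sum_{A\subseteq\Theta} b(A) = \sum_{B\subseteq\Theta} m(B)\,\bigl|\{A:B\subseteq A\subseteq\Theta\}\bigr| = \sum_{B\subseteq\Theta} m(B)\,2^{|\Theta|-|B|}.
\]
Setting this equal to $2^{|\Theta|-1}$ and subtracting the normalization $\sum_{B} m(B)=1$, the constraint becomes $\sum_{B\subseteq\Theta} m(B)\,(2^{1-|B|}-1)=0$. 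The coefficient $2^{1-|B|}-1$ vanishes when $|B|\le 1$ and is strictly negative when $|B|\ge 2$; since $m\ge 0$, this forces $m(B)=0$ for every $B$ with $|B|\ge 2$, i.e.\ the mass of $b$ is carried entirely by the singletons, which is exactly the characterisation of a Bayesian belief function. Thus $\mathcal{L}=\mathcal{P}$.

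Finally, to read off the dimension and the simplex structure, I would note that $\mathcal{P}=\mathcal{L}$ consists exactly of the convex combinations $\sum_{\theta\in\Theta} p(\theta)\,b_\theta$ of the $|\Theta|$ categorical belief functions $b_\theta$ focused on the singletons $\{\theta\}$, with $p(\theta)\ge 0$ and $\sum_\theta p(\theta)=1$. These $|\Theta|$ vectors are affinely independent in $\mathbb{R}^{2^{|\Theta|}}$: restricting their coordinates to the $|\Theta|$ entries indexed by the singletons yields the identity matrix, so no non-trivial affine relation can hold among them. Hence $\mathcal{L}$ is the convex closure of $|\Theta|$ affinely independent points, that is, a $(|\Theta|-1)$-dimensional simplex. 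I do not expect a real obstacle here: the inclusion $\mathcal{P}\subseteq\mathcal{L}$ is an immediate corollary of Lemma~\ref{lem:baysum}, and the only mildly non-mechanical step is the sign analysis of $2^{1-|B|}-1$ in the reverse inclusion, the affine-independence verification being routine bookkeeping.
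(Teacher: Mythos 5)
Your first paragraph is exactly the paper's (implicit) argument: the corollary is stated as an immediate consequence of Lemma~\ref{lem:baysum}, obtained by instantiating it at $B=\Theta$ and using $p(\Theta)=1$. That part is correct and identical in approach.

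The additional material goes beyond what the corollary asserts, and is worth a remark. Your reverse inclusion $\mathcal{L}\subseteq\mathcal{P}$ is correct as a computation and is in substance the equality case of Theorem~\ref{the:dominated}, which the paper proves immediately afterwards by the same double-counting identity $\sum_{A\subseteq\Theta}b(A)=\sum_{B}m(B)\,2^{|\Theta|-|B|}$ (there written as $\sum_i a_i m(A_i)$ with $a_i=2^{|\Theta\setminus A_i|}$); your sign analysis of $2^{1-|B|}-1$ is just a normalized restatement of the paper's observation that $a_i\leq 2^{|\Theta|-1}$ with equality iff $|A_i|=1$. Be aware, however, that your conclusion $\mathcal{L}=\mathcal{P}$ sits in apparent tension with the paper's explicit later statements that ``$\mathcal{P}$ does not, in general, sell out the limit simplex'' and that $\mathcal{P}_2$ is a \emph{proper} subset of $\mathcal{L}$ in the binary example. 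The resolution is that the paper uses $\mathcal{L}$ in two senses: as written in the corollary (with the constraint $b\in\mathcal{B}$) your equality is right, but in the later sections $\mathcal{L}$ is tacitly enlarged to the hyperplane of normalized sum functions satisfying $\sum_{A}b(A)=2^{|\Theta|-1}$, whose M\"obius inverses need not be non-negative, and that larger set strictly contains $\mathcal{P}$. Your affine-independence argument for the $(|\Theta|-1)$-dimensionality is routine and correct.
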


\begin{theorem} \label{the:dominated}
Given a frame of discernment $\Theta$, the corresponding belief space $\mathcal{B}$ is a subset of the region of $\mathbb{R}^{|2^{\Theta|}}$ `dominated' by the limit simplex $\mathcal{L}$:
\[
\sum_{A \subseteq \Theta} b(A) \leq 2^{|\Theta|-1},
\]
where the equality holds iff $b$ is Bayesian.
\end{theorem}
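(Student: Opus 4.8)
The plan is to reduce the statement to a sum over all subsets of the belief values, rewrite that double sum by swapping the order of summation, and bound the inner multiplicities using the Moebius relationship between $m$ and $b$. Concretely, I would start from $\sum_{A\subseteq\Theta} b(A)$ and substitute the definition $b(A)=\sum_{B\subseteq A} m(B)$ (Definition~\ref{def:bel2}), obtaining
\[
\sum_{A\subseteq\Theta} b(A) = \sum_{A\subseteq\Theta}\sum_{B\subseteq A} m(B) = \sum_{B\subseteq\Theta} m(B)\cdot\#\{A : B\subseteq A\subseteq\Theta\}.
\]
The number of sets $A$ with $B\subseteq A\subseteq\Theta$ is exactly $2^{|\Theta|-|B|}$, so the sum equals $\sum_{B\subseteq\Theta} m(B)\,2^{|\Theta|-|B|}$. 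Since every b.p.a.\ satisfies $m(B)\geq 0$ and $\sum_{B\subseteq\Theta} m(B)=1$ (Definition~\ref{def:bpa}), and since $2^{|\Theta|-|B|}\leq 2^{|\Theta|-1}$ for every \emph{non-empty} $B$ while $m(\emptyset)=0$, we get
\[
\sum_{A\subseteq\Theta} b(A) = \sum_{\emptyset\neq B\subseteq\Theta} m(B)\,2^{|\Theta|-|B|} \leq 2^{|\Theta|-1}\sum_{\emptyset\neq B\subseteq\Theta} m(B) = 2^{|\Theta|-1}.
\]

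For the equality case, observe that in the chain above equality holds if and only if $m(B)=0$ whenever $2^{|\Theta|-|B|}<2^{|\Theta|-1}$, i.e.\ whenever $|B|\geq 2$. Hence equality forces all focal elements to be singletons, which by the Proposition characterizing Bayesian belief functions (Section~\ref{sec:bayesian-belief-functions}) is exactly the condition that $b$ be Bayesian. Conversely, if $b$ is Bayesian then $m$ is supported on singletons, each contributing $m(\{\theta\})\,2^{|\Theta|-1}$, and the total is $2^{|\Theta|-1}$; this is also immediate from Lemma~\ref{lem:baysum} applied with $B=\Theta$, which gives $\sum_{A\subseteq\Theta} p(A)=2^{|\Theta|-1}p(\Theta)=2^{|\Theta|-1}$. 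This shows membership of $\mathcal{B}$ in the dominated region and pins down the boundary as the locus of Bayesian functions.

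The only mild subtlety — and the one place to be careful — is making sure the argument uses the b.p.a.\ rather than manipulating belief values directly: the inequality is transparent once expressed through the masses because they are nonnegative and normalized, whereas a direct attack on $\sum_A b(A)$ via inclusion–exclusion (superadditivity, Equation~\ref{eq:superadditivity}) would be awkward since superadditivity points the wrong way for an \emph{upper} bound. So the main ``obstacle'' is really just recognizing that one should pass to $m$ via Moebius inversion (Equation~\ref{eq:moebius}) and count the number of supersets of each focal element; everything else is the routine interchange of summation and the elementary bound $2^{|\Theta|-|B|}\leq 2^{|\Theta|-1}$ with equality iff $|B|=1$.
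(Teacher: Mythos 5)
Your proposal is correct and follows essentially the same route as the paper's own proof: both rewrite $\sum_{A\subseteq\Theta} b(A)$ as a mass-weighted count of supersets, $\sum_{B} m(B)\,2^{|\Theta|-|B|}$, then use nonnegativity and normalization of $m$ together with the bound $2^{|\Theta|-|B|}\leq 2^{|\Theta|-1}$ (with equality iff $|B|=1$) to conclude. The equality analysis pinning the boundary to Bayesian belief functions is likewise identical.
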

\begin{proof}
The sum $\sum_{A \subseteq \Theta} b(A)$ can be written as
\[
\sum_{A \subseteq \Theta} b(A) = \sum_{i=1}^{f} a_i \cdot m(A_i)
\]
where $f$ is the number of focal elements of $b$ and $a_i$ is the number of subsets of $\Theta$ which include the $i$-th focal element $A_i$, namely: $a_i = | \{ B \subset \Theta \; s.t. \; B \supseteq A_i \}|$.\\ Obviously, $a_i=2^{|\Theta\setminus A|}\leq 2^{|\Theta|-1}$ and the equality holds iff $|A|=1$. Therefore:
\[
\begin{array}{l}
\displaystyle \sum_{A \subseteq \Theta} b(A) = \sum_{i=1}^{f} m(A_i) 2^{|\Theta\setminus A|} \leq 2^{|\Theta|-1} \sum_{i=1}^{f} m(A_i) = 2^{|\Theta|-1} \cdot 1 = 2^{|\Theta|-1},
\end{array}
\]
where the equality holds iff $|A_i|=1$ for every focal element of $b$, i.e., $b$ is Bayesian.
\end{proof}

It is important to point out that $\mathcal{P}$ does not, in general, sell out the limit simplex $\mathcal{L}$. Similarly, the belief space does not necessarily coincide with the entire region bounded by $\mathcal{L}$. 

\subsection{Consistent probabilities and $L_1$ norm}  \label{sec:consistent-probabilities-l1}

Another hint on the structure of $\mathcal{B}$ comes from a particular property of Bayesian belief functions with respect to the classical $L_1$ distance in the Cartesian space $\mathbb{R}^{|\Theta|}$. Let $\mathcal{C}_b$ denote the core of a belief function $b$, and introduce the following order relation:
\begin{equation} \label{eq:order-relation-1}
b \geq b' \hspace{5mm} \Leftrightarrow \hspace{5mm} b(A) \geq b'(A) \hspace{5mm} \forall A \subseteq \Theta.
\end{equation}

\begin{lemma} \label{lem:core}
If $b \geq b'$, then $\mathcal{C}_b \subseteq \mathcal{C}_{b'}$.
\end{lemma}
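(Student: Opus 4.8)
The plan is to exploit the fact that the core of a belief function is, in a precise sense, the smallest event carrying unit belief, and that the monotonicity hypothesis $b\geq b'$ transfers unit belief from $b'$ to $b$ on the core of $b'$. Concretely, I would first record the elementary observation that, for any belief function $\beta$ with b.p.a. $m_\beta$ and any $C\subseteq\Theta$, one has $\beta(C)=1$ if and only if every focal element of $\beta$ is contained in $C$. This is immediate from $\beta(C)=\sum_{B\subseteq C}m_\beta(B)$ together with $\sum_{B\subseteq\Theta}m_\beta(B)=1$ and the non-negativity of the masses: $\beta(C)=1$ forces $\sum_{B\not\subseteq C}m_\beta(B)=0$, hence $m_\beta(B)=0$ for every $B\not\subseteq C$. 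In particular, taking $C=\mathcal{C}_\beta$ shows $\beta(\mathcal{C}_\beta)=1$, and the displayed equivalence shows $\mathcal{C}_\beta$ is the least such set.

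Next I would apply the hypothesis. Since $b'(\mathcal{C}_{b'})=1$ by the remark above (applied to $b'$) and $b(A)\geq b'(A)$ for all $A\subseteq\Theta$, we get $b(\mathcal{C}_{b'})\geq b'(\mathcal{C}_{b'})=1$, and since belief values never exceed $1$ we conclude $b(\mathcal{C}_{b'})=1$. Now invoke the equivalence once more, this time for $b$ with $C=\mathcal{C}_{b'}$: every focal element $B$ of $b$ satisfies $B\subseteq\mathcal{C}_{b'}$. Taking the union over all focal elements of $b$ yields $\mathcal{C}_b=\bigcup_{B:\,m_b(B)\neq 0}B\subseteq\mathcal{C}_{b'}$, which is the claim.

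There is essentially no hard step here: the only thing to be careful about is the direction of the containment (a larger belief function has a \emph{smaller} core, which is intuitively correct since more committed mass can only shrink the support), and the fact that we never need the full strength of $b\geq b'$ — only its value on the single event $\mathcal{C}_{b'}$ is used. If one prefers to avoid even the short lemma about focal elements, an equivalent one-line route is to note that $\mathcal{C}_b$ can be characterised intrinsically as $\bigcap\{C\subseteq\Theta : b(C)=1\}$, so that $b(\mathcal{C}_{b'})=1$ immediately gives $\mathcal{C}_b\subseteq\mathcal{C}_{b'}$; I would include whichever phrasing meshes best with the conventions already fixed in the text.
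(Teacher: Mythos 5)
Your proof is correct and follows the same route as the paper's: both deduce $b(\mathcal{C}_{b'})\geq b'(\mathcal{C}_{b'})=1$ from the hypothesis and conclude $\mathcal{C}_b\subseteq\mathcal{C}_{b'}$. You merely spell out the step the paper leaves implicit, namely that $\beta(C)=1$ forces every focal element of $\beta$ to lie inside $C$.
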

\begin{proof}
Trivially, since $b(A)\geq b'(A)$ for every $A \subseteq \Theta$, that holds for $\mathcal{C}_{b'}$ too, so that $b(\mathcal{C}_{b'}) = 1$. But then, $\mathcal{C}_b \subseteq \mathcal{C}_{b'}$.
\end{proof}

\begin{theorem} \label{the:upper}
If $b:2^{\Theta} \rightarrow [0,1]$ is a belief function defined on a frame $\Theta$, then
\[
\| b - p \|_{L_1} = \sum_{A \subseteq \Theta} | b(A) - p(A) | = const
\]
for every Bayesian belief function $p : 2^{\Theta} \rightarrow [0,1]$ dominating $b$ according to order relation (\ref{eq:order-relation-1}).
\end{theorem}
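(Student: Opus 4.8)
The plan is to exploit the domination hypothesis to strip the absolute values out of the $L_1$ norm, which reduces the claim to the fact — already established earlier in this section — that the coordinates of any Bayesian belief function sum to the fixed constant $2^{|\Theta|-1}$.

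First I would note that, since $p$ dominates $b$ in the sense of the order relation (\ref{eq:order-relation-1}), we have $p(A) \geq b(A)$ for \emph{every} $A \subseteq \Theta$, so that $|b(A) - p(A)| = p(A) - b(A) \geq 0$ uniformly in $A$. (Note also $b(\emptyset) = p(\emptyset) = 0$, so the empty set contributes nothing in any case.) Therefore
\[
\| b - p \|_{L_1} = \sum_{A\subseteq\Theta} \big( p(A) - b(A) \big) = \sum_{A\subseteq\Theta} p(A) \; - \sum_{A\subseteq\Theta} b(A).
\]

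Next I would invoke the Corollary to Lemma \ref{lem:baysum} (equivalently Theorem \ref{the:dominated}): every Bayesian belief function lies on the limit simplex $\mathcal{L}$ of (\ref{eq:limit-simplex}), i.e. $\sum_{A\subseteq\Theta} p(A) = 2^{|\Theta|-1}$. Substituting this into the identity above gives
\[
\| b - p \|_{L_1} = 2^{|\Theta|-1} - \sum_{A\subseteq\Theta} b(A),
\]
and the right-hand side depends only on $b$, not on the particular dominating probability $p$ — which is precisely the assertion that the distance is constant.

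There is essentially no hard step here; the only point requiring care is that the order relation (\ref{eq:order-relation-1}) is exactly the hypothesis needed to drop the modulus signs \emph{simultaneously} for all subsets $A$, after which everything collapses onto the ``limit simplex'' computation. One could add, via Lemma \ref{lem:core}, that any such $p$ is supported on the core $\mathcal{C}_b$, which lets one rewrite the common constant as $2^{|\Theta|-1} - \sum_{A \subseteq \mathcal{C}_b} b(A)$; but this refinement is not needed for the statement as stated.
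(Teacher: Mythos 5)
Your argument is correct, and it is in fact more direct than the one in the paper. You exploit the domination hypothesis globally — $p(A)\geq b(A)$ for \emph{all} $A$ lets you drop every modulus at once — and then reduce everything to the limit-simplex identity $\sum_{A\subseteq\Theta}p(A)=2^{|\Theta|-1}$ from Lemma \ref{lem:baysum} (with $B=\Theta$), obtaining the constant in the clean form $2^{|\Theta|-1}-\sum_{A\subseteq\Theta}b(A)$. The paper instead organizes the sum according to the core: it uses Lemma \ref{lem:core} to get $\mathcal{C}_p\subseteq\mathcal{C}_b$, observes that every $A\supseteq\mathcal{C}_b$ and every $A$ disjoint from $\mathcal{C}_b$ contributes zero, groups the remaining subsets by their intersection with $\mathcal{C}_b$ (each class having $2^{|\Theta\setminus\mathcal{C}_b|}$ members), and then applies Lemma \ref{lem:baysum} on the smaller frame $\mathcal{C}_b$ to reach the expression (\ref{eq:star}). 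The two constants agree, since $\sum_{A\subseteq\Theta}b(A)=2^{|\Theta\setminus\mathcal{C}_b|}\sum_{A\subseteq\mathcal{C}_b}b(A)$ and $2^{|\Theta|-1}=2^{|\Theta\setminus\mathcal{C}_b|}\cdot 2^{|\mathcal{C}_b|-1}$; what the paper's longer route buys is precisely the core-based closed form (\ref{eq:star}), which is the version quoted later, while your route makes the independence of $p$ transparent with no combinatorics at all.

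One small caveat on your closing aside: the constant does not rewrite as $2^{|\Theta|-1}-\sum_{A\subseteq\mathcal{C}_b}b(A)$, because each $A\subseteq\mathcal{C}_b$ is counted $2^{|\Theta\setminus\mathcal{C}_b|}$ times when summing over all of $2^\Theta$; the correct core-level form carries that factor, as in (\ref{eq:star}). Since you explicitly flag this remark as unnecessary, it does not affect the validity of your proof.
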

\begin{proof}
Lemma \ref{lem:core} guarantees that $\mathcal{C}_p \subseteq \mathcal{C}_{b}$, so that $p(A) - b(A) = 1 - 1 = 0$ for every $A \supseteq \mathcal{C}_{b}$. On the other hand, if $A \cap \mathcal{C}_{b} = \emptyset$ then $p(A) - b(A) = 0 - 0 = 0$. We are left with sets which amount to the union of a non-empty proper subset of $\mathcal{C}_{b}$ and an arbitrary subset of $\Theta \setminus \mathcal{C}_{b}$. Given $A \subseteq \mathcal{C}_{b}$ there exist $2^{| \Theta \setminus \mathcal{C}_{b} |}$ subsets of the above type which contain $A$. Therefore:
\[
\sum_{A \subseteq\Theta}| b(A) - p(A)| = 2^{|\Theta\setminus \mathcal{C}_{b}|} \bigg [ \sum_{A \subseteq \mathcal{C}_{b}} p(A) - \sum_{A \subseteq \mathcal{C}_{b}} b(A) \bigg].
\]
Finally, by Lemma \ref{lem:baysum} the latter is equal to:
\begin{equation} \label{eq:star}
f(b) \doteq 2^{|\Theta \setminus \mathcal{C}_{b}|} \bigg [2^{|\mathcal{C}_{b}| - 1} - 1 - \sum_{A \subseteq \mathcal{C}_{b} } b(A) \bigg].
\end{equation}
\end{proof}

The $L_1$ distance (\ref{eq:star}) between a belief function and any Bayesian belief function $p$ dominating it is not a function of $p$, and depends only on $b$. A probability distribution satisfying the hypothesis of Theorem \ref{the:upper} is said to be \emph{consistent} with $b$ \cite{kyburg87bayesian}. Ha \emph{et al.} \cite{Ha} proved that the set $\mathcal{P}[b]$ of probability measures consistent with a given belief function $b$ can be expressed (in the {probability} simplex $\mathcal{P}$) as the sum of the probability simplexes associated with its focal elements $A_i, \;i=1,...,k$, weighted by the corresponding masses:
\[
\mathcal{P}[b] = \sum_{i=1}^{k} m(A_i) conv(A_i)
\]
where $conv(A_i)$ is the convex closure of the probabilities $\{ p_{\theta} : \theta\in A_i \}$ assigning mass 1 to a single element $\theta$ of $A_i$. The analytical form of the set $\mathcal{P}[b]$ of consistent probabilities has been further studied in \cite{cuzzolin03isipta}.

\subsection{Exploiting the Moebius inversion lemma} \label{sec:moebius}

These preliminary results suggest that the belief space may have the form of a simplex. To proceed in our analysis we need to resort to the axioms of basic probability assignments (Definition \ref{def:bpa}).\\
Given a belief function $b$, the corresponding basic probability assignment can be found by applying the Moebius inversion lemma (\ref{eq:moebius}), which we recall here:
\begin{equation} 
m(A) = \sum_{B \subseteq A}(-1)^{|A \setminus B|} b(B).
\end{equation}
We can exploit it to determine whether a point $b \in \mathbb{R}^{|2^{\Theta|}}$ corresponds indeed to a belief function, by simply computing the related b.p.a. and checking whether the resulting $m$ meets the axioms b.p.a.s must obey.

The \emph{normalization} constraint $\sum_{A \subseteq \Theta} m(A)=1$ trivially translates into $\mathcal{B} \subseteq \{ b : b(\Theta) = 1 \}$. The \emph{positivity} condition is more interesting, for it implies an inequality which echoes the third axiom of belief functions (compare Definition \ref{def:bel1} or \cite{Shafer76}, page 5):
\begin{equation} \label{eq:5}
b(A) - \sum_{B \subseteq A, |B|=|A|-1} b(B) +  \cdots + (-1)^{|A\setminus B|} \sum_{|B|=k} b(B) + \cdots + (-1)^{|A|-1} \sum_{\theta \in \Theta} b(\{\theta\}) \geq 0 \hspace{10mm} \forall A \subseteq \Theta.
\end{equation}

\subsubsection{Example: ternary frame} \label{sec:ex}

Let us see how these constraints act on the belief space in the case of a ternary frame $\Theta = \{ \theta_1,\theta_2,\theta_3 \}$. After introducing the notation
\[
\begin{array}{l}
x = b(\{\theta_1\}), \; y = b(\{\theta_2\}), \; z = b(\{\theta_3\}), \; u = b(\{\theta_1,\theta_2\}), v=s(\{\theta_1,\theta_3\}), \; w = b(\{\theta_2,\theta_3\})
\end{array}
\]
the positivity constraint (\ref{eq:5}) can be rewritten as
\begin{equation} \label{eq:4}
\mathcal{B}: \left \{
\begin{array}{l}
x\geq 0,\;\;  u\geq (x+y)\\ \\ y\geq 0,\;\; v\geq (x+z)\\
\\ z\geq 0,\;\; w\geq (y+z)\\ \\ 1-(u+v+w)+(x+y+z)\geq 0.
\end{array}
\right.
\end{equation}
Note that $b(\Theta)$ is not needed as a coordinate, for it can be recovered by normalization.

By combining the last equation in (\ref{eq:4}) with the others, it follows that the belief space $\mathcal{B}$ is the set of points $[x,y,z,u,v,w]'$ of $\mathbb{R}^{6}$ such that:
\[
\begin{array}{l}
0\leq x+y+z\leq 1,\;\;0\leq u+v+w\leq 2.
\end{array}
\]
After defining $k \doteq x+y+z$, it necessary follows that points of $\mathcal{B}$ ougth to meet:
\[
u\geq (x+y),\hspace{5mm} v\geq (x+z), \hspace{5mm} w\geq (y+z), \hspace{5mm} 2k \leq u+v+w\leq 1+k.
\]

\subsection{Convexity of the belief space} \label{sec:convexity}

Now, all the positivity constraints of Equation (\ref{eq:5}) (which determine the shape of the belief space $\mathcal{B}$) are of the form:
\[
\sum_{i\in G_1}x_i\geq \sum_{j\in G_2}x_j
\]
where $G_1$ and $G_2$ are two disjoint sets of coordinates, as the above example and Equation (\ref{eq:4}) confirm. It immediately follows that:

\begin{theorem} \label{the:convex}
The belief space $\mathcal{B}$ is convex.
\end{theorem}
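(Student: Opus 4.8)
The plan is to notice that the theorem is essentially already proved: the discussion preceding it has exhibited $\mathcal{B}$ as the solution set of a finite system of affine equalities and inequalities, and the solution set of such a system is automatically convex. So the proof reduces to making this observation precise, and I would additionally record the equivalent, slightly more self-contained, argument via the linearity of M\"obius inversion.

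First I would recall that, by Definition~\ref{def:belief-space} together with the M\"obius inversion lemma~(\ref{eq:moebius}), a point $\vec{v}\in\mathbb{R}^{|2^\Theta|}$ lies in $\mathcal{B}$ if and only if the set function $m$ recovered from the associated $b$ via~(\ref{eq:moebius}) is a valid basic probability assignment, i.e. $m(\emptyset)=0$, $\sum_{A\subseteq\Theta}m(A)=1$ and $m(A)\ge 0$ for every $A\subseteq\Theta$. Since each $m(A)=\sum_{B\subseteq A}(-1)^{|A\setminus B|}b(B)$ is a fixed linear functional of the coordinates, the first condition is the affine equality $b(\emptyset)=0$, the second is the affine equality $b(\Theta)=1$ (normalization), and each positivity condition is exactly the affine inequality~(\ref{eq:5}), which, as observed above, has the form $\sum_{i\in G_1}x_i\ge\sum_{j\in G_2}x_j$ for disjoint index sets $G_1,G_2$. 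Hence $\mathcal{B}$ is the intersection of finitely many hyperplanes and closed half-spaces of $\mathbb{R}^{|2^\Theta|}$; each such set is convex, and any intersection of convex sets is convex, so $\mathcal{B}$ is convex.

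Equivalently, and this is the version I would actually write down since it avoids appealing to the explicit shape of~(\ref{eq:5}): given $b_1,b_2\in\mathcal{B}$ with basic probability assignments $m_1,m_2$ and $\lambda\in[0,1]$, set $b_\lambda\doteq\lambda b_1+(1-\lambda)b_2$. Because~(\ref{eq:belief}) and~(\ref{eq:moebius}) are mutually inverse \emph{linear} maps between set functions, $b_\lambda$ has basic probability assignment $m_\lambda=\lambda m_1+(1-\lambda)m_2$; this satisfies $m_\lambda(\emptyset)=0$, $\sum_A m_\lambda(A)=\lambda+(1-\lambda)=1$, and $m_\lambda(A)\ge 0$ for all $A$ as a convex combination of non-negative numbers, so $m_\lambda$ is a b.p.a. and $b_\lambda$ the belief function it induces, whence $b_\lambda\in\mathcal{B}$. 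There is essentially no obstacle: the only ingredient that needs care has already been supplied, namely that positivity of $m$ is equivalent to the linear inequalities~(\ref{eq:5}) (equivalently, that M\"obius inversion is linear). The more delicate facts — that $\mathcal{B}$ is actually a \emph{simplex} and the identification of its vertices — are the business of the following section, not of this theorem.
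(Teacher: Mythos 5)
Your proposal is correct and takes essentially the same approach as the paper: the paper's proof verifies directly that a convex combination of two belief functions still satisfies the linear positivity inequalities~(\ref{eq:5}), which is exactly your observation that $\mathcal{B}$ is an intersection of half-spaces and hyperplanes. Your second formulation in terms of the masses $m_\lambda=\lambda m_1+(1-\lambda)m_2$ is just a repackaging of the same fact via the linearity of M\"obius inversion, since the inequalities~(\ref{eq:5}) are precisely the conditions $m(A)\ge 0$.
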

\begin{proof}
Let us consider two points of the belief space $b_0, b_1 \in \mathcal{B}$ (two belief functions) and prove that all the points $b_\alpha$ of the segment $b_0 + \alpha (b_1 - b_0), \;0 \leq \alpha \leq 1$, belong to $\mathcal{B}$. Since $b_0,\;b_1$ belong to $\mathcal{B}$:
\[
\sum_{i\in G_1}x^0_i\geq \sum_{j\in G_2}x^0_j, \hspace{5mm} \sum_{i\in G_1}x^1_i\geq \sum_{j\in G_2}x^1_j
\]
where $x^0_i, x^1_i$ are the $i$-th coordinates in $\mathbb{R}^{2^{|\Theta|}}$ of $b_0, b_1$, respectively. Hence, for every point $b_\alpha$ with coordinates $x^{\alpha}_i$ we have that:
\[
\begin{array}{l}
\displaystyle \sum_{i \in G_1}x^{\alpha}_i = \sum_{i \in G_1} [x^0_i + \alpha(x^1_i - x^0_i)] = \sum_{i \in G_1} x^0_i + \alpha \sum_{i\in G_1}(x^1_i-x^0_i) =(1-\alpha)\sum_{i\in G_1}x^0_i+\alpha \sum_{i\in G_1}x^1_i \geq \\ \displaystyle \geq (1-\alpha)\sum_{j\in G_2}x^0_j+\alpha \sum_{j\in G_2}x^1_j = \sum_{j\in G_2} [x^0_j+\alpha(x^1_j-x^0_j)] = \sum_{j\in G_2}x^{\alpha}_j,
\end{array}
\]
hence $b_\alpha$ meets the same constraints. Therefore, $\mathcal{B}$ is convex.
\end{proof}

\subsubsection{Belief functions and coherent lower probabilities}

It is well-known that belief functions are a special type of \emph{coherent lower probabilities} (see Chapter \ref{cha:state}, Section \ref{sec:lower-probabilities}), which in turn can be seen as a sub-class of \emph{lower previsions} (consult \cite{Walley91}, Section 5.13). Walley proved that coherent lower probabilities are closed under convex combination --- this implies that convex combinations of belief functions (completely monotone lower probabilities) are still coherent.\\ Theorem \ref{the:convex} is a stronger result, stating that they are also completely monotone.

\subsection{Symmetries of the belief space} \label{sec:symmetry}

In the ternary example \ref{sec:ex}, the system of equations (\ref{eq:4}) exhibits a natural symmetry which reflects the intuitive partition of the variables in two sets, each associated with subsets of $\Theta$ of the same cardinality, respectively $\{x,y,z\}\sim |A|=1$ and $\{u,v,w\}\sim |A|=2$.\\ It is easy to see that the symmetry group of $\mathcal{B}$ (i.e., the group of transformations which leave the belief space unchanged) is the permutation group $S_3$, acting onto $\{x,y,z\}\times\{u,v,w\}$ via the correspondence:
\[
x \leftrightarrow w,\;\;\; y \leftrightarrow v, \;\;\; z \leftrightarrow u.
\]
This observation can be extended to the general case of a finite $n$-dimensional frame $\Theta = \{ \theta_1, \cdots, \theta_n \}$. Let us adopt here for sake of simplicity the following notation:
\[
x_ix_j...x_k\doteq b(\{\theta_i,\theta_j,...,\theta_k\}).
\]
The symmetry of the belief space in the general case is described by the following logic expression:
\[
\begin{array}{l}
\displaystyle \bigvee_{1\leq i,j \leq n}\;\;\; \bigwedge_{k=1}^{n-1}
\bigwedge_{\begin{array}{c}\{i_1,...,i_{k-1}\} \subset\{1,...,n\} \setminus \{i,j\}\end{array}} \begin{array}{ccc} x_i x_{i_1}\cdots x_{i_{k-1}}&
\leftrightarrow & x_j x_{i_1}\cdots x_{i_{k-1}}, \end{array}
\end{array}
\]
where $\bigvee (\bigwedge)$ denotes the logical or (and), while $\leftrightarrow$ indicates the permutation of
pairs of coordinates.

To see this, let us rewrite the Moebius constraints using the above notation:
\[
\displaystyle x_{i_1}\cdots x_{i_k}\geq \sum_{l=1}^{k-1}(-1)^{k-l+1}\sum_{\{j_1,...,j_{l}\} \subset \{i_1,...,i_{k}\}} x_{j_1}\cdots x_{j_l}.
\]
Focussing on the right side of the equation, it is clear that only a permutation between coordinates associated with subsets of the {same cardinality} may leave the inequality inalterate.\\
Given the triangular form of the system of inequalities (the first group concerning variables of size 1, the second one variables of size 1 and 2, and so on), permutations of size-$k$ variables are bound to be induced by permutations of variables of smaller size. Hence, the symmetries of $\mathcal{B}$ are determined by permutations of singletons. Each such swap $x_i\leftrightarrow x_j$ determines in turn a number of permutations of the coordinates related to subsets containing $\theta_i$ and $\theta_j$.

The resulting symmetry $V_k$ induced by $x_i\leftrightarrow x_j$ for the $k$-th group of constraints is then:
\[
\begin{array}{l}
(x_i\leftrightarrow x_j)\wedge \cdots \wedge (x_ix_{i_1}\cdots x_{i_{k-1}}\leftrightarrow x_j x_{i_1}\cdots
x_{i_{k-1}}) \hspace{5mm} \forall \{i_1,...,i_{k-1}\}\subset\{1,...,n\} \setminus \{i,j\}.
\end{array}
\]
Since $V_k$ is obviously implied by $V_{k+1}$, and $V_n$ is always trivial (as a simple check confirms), the
overall symmetry induced by a permutation of singletons is determined by $V_{n-1}$, and by considering all the
possible permutations $x_i \leftrightarrow x_j$ we have as desired.

In other words, the symmetries of $\mathcal{B}$ are determined by the action of the permutation group $S_n$ on the collection of cardinality-1 variables, {and} the action of $S_n$ naturally induced on higher-size variables by set-theoretical membership:
\begin{equation} \label{eq:cross}
\begin{array}{cccc}
s \in S_n : & P_k(\Theta)& \rightarrow & P_k(\Theta)\\ & x_{i_1}\cdots x_{i_k} & \mapsto & s x_{i_1}\cdots s x_{i_k},
\end{array}
\end{equation}
where $P_k(\Theta)$ is the collection of the size-$k$ subsets of $\Theta$. 

It is not difficult to recognize in (\ref{eq:cross}) the symmetry properties of a \emph{simplex}, i.e., the convex closure of a collection $v_0,v_1,...,v_{k}$ of $k+1$  of affinely independent\footnote{The points $v_0,v_1,...,v_{k}$ are said to be affinely independent iff $v_1 - v_0,...,v_{k}-v_0$ are linearly independent. \label{foot:affinely-independent}} points (vertices) of $\mathbb{R}^m$.

\section{Simplicial form of the belief space} \label{sec:simplex}

Indeeed, $\mathcal{B}$ is a simplex, with as vertices the special belief functions which assign unitary mass to a single subset of the frame of discernment.\\ Let us call \emph{categorical} belief function focused on $A \subseteq \Theta$, and denote it by $b_A$, the unique belief function with b.p.a. $m_{b_A}(A) = 1$, $m_{b_A}(B) = 0$ for all $B \neq A$.

\begin{theorem} \label{the:convex-combination}
Every belief function\footnote{Here and in the rest of the Chapter we will denote both a belief function and the vector of $\mathbb{R}^{N-2}$ representing it by $b$. This should not lead to confusion.} $b \in \mathcal{B}$ can be {uniquely} expressed as a convex combination of all the categorical belief functions:
\begin{equation} \label{eq:convex-combination}
b = \sum_{\emptyset \neq A \subsetneq \Theta} m(A) b_A,
\end{equation}
with coefficients given by the basic probability assignment $m$.
\end{theorem}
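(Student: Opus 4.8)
The plan is to verify that the right-hand side of~(\ref{eq:convex-combination}) is a legitimate convex combination and that it reproduces $b$, and then to deduce uniqueness from the affine independence of the categorical belief functions. First I would check that~(\ref{eq:convex-combination}) is indeed a \emph{convex} combination: by Definition~\ref{def:bpa} the mass values satisfy $m(A)\geq 0$ and $\sum_{A\subseteq\Theta} m(A)=1$; since $m(\emptyset)=0$ the sum over $\emptyset\neq A\subsetneq\Theta$ plus the term $m(\Theta)$ still adds to $1$, and the coefficient $m(\Theta)$ multiplies $b_\Theta$, the vacuous belief function, which is the origin of our coordinate system (recall $b(\Theta)$ is recovered by normalization and is not a coordinate). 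So on the $N-2$ coordinates actually used, the vector $b$ is the combination $\sum_{\emptyset\neq A\subsetneq\Theta} m(A) b_A$ of the categorical functions $b_A$ with nonnegative coefficients summing to at most $1$, the slack being $m(\Theta)$ --- i.e.\ a point of the simplex with those vertices together with the origin.

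Next I would verify the identity~(\ref{eq:convex-combination}) coordinate by coordinate. The $C$-th coordinate of the right-hand side (for $\emptyset\neq C\subsetneq\Theta$) is $\sum_{\emptyset\neq A\subsetneq\Theta} m(A)\, b_A(C)$. By Definition~\ref{def:bel2}, $b_A(C)=\sum_{B\subseteq C} m_{b_A}(B)$, and since $m_{b_A}$ is supported on the single set $A$, this equals $1$ if $A\subseteq C$ and $0$ otherwise. Hence the $C$-th coordinate of the right-hand side is $\sum_{A\subseteq C} m(A)$, which is exactly $b(C)$ by~(\ref{eq:belief}). This establishes existence of the decomposition.

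For uniqueness, the key fact is that the categorical belief functions $\{b_A : \emptyset\neq A\subseteq\Theta\}$ are affinely independent as points of $\mathbb{R}^{|2^\Theta|}$ (equivalently, the vectors $b_A - b_\Theta$ for $\emptyset\neq A\subsetneq\Theta$ are linearly independent). This follows from the Moebius inversion formula~(\ref{eq:moebius}): the linear map sending a vector of belief values to its associated mass vector is invertible (Equation~(\ref{eq:moebius}) exhibits the inverse explicitly, so the matrix $[b_A(B)]_{A,B}$ of the $b_A$'s in the subset basis is triangular with unit diagonal under the inclusion order, hence nonsingular). Consequently the coefficients in any expression of $b$ as an affine combination of the $b_A$ are uniquely determined; applying Moebius inversion to the coordinates of $b$ recovers precisely $m(A)$, so the coefficients must be the basic probability numbers. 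Combining this with the existence argument and Theorem~\ref{the:convex} (convexity of $\mathcal{B}$, which guarantees every such convex combination lies in $\mathcal{B}$) gives the claimed simplicial structure.

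The main obstacle is not any single hard step but rather the bookkeeping around the role of $\Theta$ itself: the statement sums over $\emptyset\neq A\subsetneq\Theta$, so one must be careful about whether $b_\Theta$ (the vacuous b.f., i.e.\ the origin in the chosen coordinates) is counted as a vertex of the simplex. The cleanest treatment is to note that in the $N-2$ working coordinates the origin is automatically an affinely independent extra vertex, so $\mathcal{B}$ is the convex hull of the $N-2$ points $b_A$ ($\emptyset\neq A\subsetneq\Theta$) together with $b_\Theta=0$, a total of $N-1$ affinely independent points --- exactly the vertex count of an $(N-2)$-simplex. Making this identification precise, and reconciling it with the symmetry discussion of Section~\ref{sec:symmetry}, is where most of the care is needed; the algebraic content (existence via~(\ref{eq:belief}), uniqueness via~(\ref{eq:moebius})) is routine once the setup is fixed.
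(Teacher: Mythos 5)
Your proof is correct and follows essentially the same route as the paper's: the existence part is the same coordinate-wise computation (swapping the order of summation and using $b_A(C)=1$ iff $A\subseteq C$), and your uniqueness argument via the triangularity of $[b_A(B)]$ / Moebius inversion simply makes explicit what the paper leaves implicit by appealing to the 1--1 correspondence of Equation~(\ref{eq:moebius}). Your careful handling of the $b_\Theta=0$ term and the coefficients summing to $1-m(\Theta)$ is a welcome clarification of a point the paper glosses over, but it does not change the substance of the argument.
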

\begin{proof}
Every belief function $b$ in $\mathcal{B}$ is represented by the vector:
\[
b = \bigg [ \sum_{B \subseteq A} m(B),\; \emptyset \neq A \subsetneq \Theta \bigg ]' = \sum_{\emptyset \neq A \subsetneq \Theta} m(A) \big [ \delta(B),\; \emptyset \neq B \subsetneq \Theta \big ]' \in \mathbb{R}^{N-2},
\]
where $N \doteq |2^{\Theta}|$ and $\delta(B) = 1$ iff $B \supseteq A$. As the vector $[ \delta(B),\; B \subseteq \Theta ]'$ is the vector of belief values associated with the categorical b.f. $b_A$, we have the thesis.
\end{proof}

This `convex decomposition' property can be easily generalized in the following way.

\begin{theorem} \label{the:decompo}
The set of all the belief functions with focal elements in a given collection $\mathcal{X} \subset 2^{2^(\Theta)}$ is closed and convex in $\mathcal{B}$, namely:
\[
\big \{ b: \mathcal{E}_b \subset \mathcal{X} \big \} = Cl(\{ b_A : A \in \mathcal{X} \}),
\]
where $Cl$ denotes the convex closure of a set of points of a Cartesian space:
\begin{equation} \label{eq:convex-closure}
Cl(b_1,...,b_k) = \bigg \{ b \in \mathcal{B} : b = \alpha_1 b_1 + \cdots + \alpha_k b_k, \;\;\; \sum_i \alpha_i = 1, \; \alpha_i\geq 0\; \forall i \bigg \}.
\end{equation}
\end{theorem}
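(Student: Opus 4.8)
The plan is to prove the two inclusions separately, using Theorem \ref{the:convex-combination} as the main engine. For the inclusion $\{ b : \mathcal{E}_b \subset \mathcal{X} \} \subseteq Cl(\{ b_A : A \in \mathcal{X} \})$, I would take any belief function $b$ whose focal elements all lie in $\mathcal{X}$ and invoke the unique convex decomposition (\ref{eq:convex-combination}): $b = \sum_{\emptyset \neq A \subsetneq \Theta} m(A) b_A$. Since $m(A) = 0$ whenever $A \notin \mathcal{E}_b$, and by hypothesis $\mathcal{E}_b \subseteq \mathcal{X}$, all terms with nonzero coefficient have $A \in \mathcal{X}$; thus $b = \sum_{A \in \mathcal{X}} m(A) b_A$ is a convex combination (the $m(A)$ are nonnegative and sum to one by the b.p.a. axioms of Definition \ref{def:bpa}), hence $b \in Cl(\{ b_A : A \in \mathcal{X}\})$.

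For the reverse inclusion, I would start from an arbitrary $b \in Cl(\{b_A : A \in \mathcal{X}\})$, so $b = \sum_{A \in \mathcal{X}} \alpha_A b_A$ with $\alpha_A \geq 0$ and $\sum_A \alpha_A = 1$. First one checks that this $b$ is genuinely a belief function — this follows because $\mathcal{B}$ is convex (Theorem \ref{the:convex}) and each $b_A$ lies in $\mathcal{B}$, so any convex combination does too. Next, by the uniqueness clause of Theorem \ref{the:convex-combination}, the expression $\sum_{A \in \mathcal{X}} \alpha_A b_A$ must coincide with the canonical decomposition $\sum_{\emptyset \neq C \subsetneq \Theta} m_b(C) b_C$ of $b$, which forces $m_b(C) = \alpha_C$ for $C \in \mathcal{X}$ and $m_b(C) = 0$ for $C \notin \mathcal{X}$. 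Consequently every focal element of $b$ lies in $\mathcal{X}$, i.e. $\mathcal{E}_b \subseteq \mathcal{X}$, which is what we wanted. (One should also dispose of the trivial edge cases, e.g. whether $\Theta$ or $\emptyset$ is allowed in $\mathcal{X}$; since categorical b.f.s are indexed by $\emptyset \neq A \subsetneq \Theta$ in (\ref{eq:convex-combination}), I would implicitly assume $\mathcal{X}$ consists of such subsets, matching the surrounding conventions.)

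Finally, the claim that the set is \emph{closed} and \emph{convex} is immediate once the identification with $Cl(\{b_A : A \in \mathcal{X}\})$ is established: the convex closure of a finite set of points in $\mathbb{R}^{N-2}$ is by definition (\ref{eq:convex-closure}) a convex polytope, hence both convex and (being compact) closed. So no separate argument is needed for that part — it is subsumed by the set-equality.

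The only delicate point — the part I'd expect to be the real content rather than bookkeeping — is the appeal to \emph{uniqueness} in Theorem \ref{the:convex-combination} when pulling back a convex combination to the b.p.a. The coefficients $\alpha_A$ a priori range only over $A \in \mathcal{X}$, whereas the canonical decomposition ranges over all nonempty proper subsets; reconciling the two requires knowing that the categorical belief functions $\{b_A\}_{\emptyset \neq A \subsetneq \Theta}$ are affinely independent (which is exactly what makes $\mathcal{B}$ a simplex, as established in Section \ref{sec:simplex}), so that the representation of $b$ as an affine — in fact convex — combination of them is unique and the "missing" coefficients are genuinely forced to be zero rather than merely unconstrained. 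Everything else is a direct substitution into the definitions.
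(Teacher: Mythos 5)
Your proof is correct and follows essentially the same route as the paper's: both rest on the convex decomposition $b = \sum_A m(A)\, b_A$ of Theorem \ref{the:convex-combination} and on extending $m$ by zero outside $\mathcal{E}_b$. In fact you are somewhat more complete than the paper, which only spells out the inclusion $\{b : \mathcal{E}_b \subset \mathcal{X}\} \subseteq Cl(\{b_A : A \in \mathcal{X}\})$ and leaves the converse (where, as you rightly note, the affine independence of the categorical belief functions is what forces the coefficients outside $\mathcal{X}$ to vanish) to an implicit ``the thesis follows.''
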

\begin{proof}
By definition:
\[
\big \{ b : \mathcal{E}_b \subset \mathcal{X} \big \} = \bigg \{ b : b = \bigg [\sum_{B \subseteq A, B \in \mathcal{E}_b} m(B), \emptyset \neq A \subsetneq \Theta \bigg ]', \mathcal{E}_b \subset X \bigg \}.
\]
But 
\[
b = \bigg [\sum_{B \subseteq A, B \in \mathcal{E}_b} m(B), \emptyset \neq A \subsetneq \Theta \bigg ]' = \sum_{B \in \mathcal{E}_b} m(B) b_B = \sum_{B \in \mathcal{X}} m(B) b_B 
\]
after extending $m$ to the elements $B \in \mathcal{X} \setminus \mathcal{E}_b$, by enforcing $m(B)=0$ for those elements. Since $m$ is a basic probability assignment, $\sum_{B\in X}m(B)=1$ and the thesis follows.
\end{proof}

As a direct consequence,

\begin{corollary} \label{cor:belief-space}
The belief space $\mathcal{B}$ is the convex closure of all the categorical belief function, namely:
\begin{equation} \label{eq:belief-space}
\mathcal{B} = Cl(b_A, \; \forall \emptyset \neq A \subseteq \Theta).
\end{equation}
\end{corollary}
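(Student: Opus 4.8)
The plan is to read this off directly from Theorem \ref{the:decompo}, since the corollary is just the instance of that theorem in which the allowed collection of focal elements is as large as possible. First I would observe that, by the very definition of a basic probability assignment (Definition \ref{def:bpa}), every belief function $b$ satisfies $m_b(\emptyset)=0$, so all of its focal elements lie in the collection $\mathcal{X}\doteq 2^\Theta\setminus\{\emptyset\}$ of non-empty subsets of $\Theta$. Hence $\mathcal{E}_b\subset\mathcal{X}$ for every $b$, which means the set $\{b:\mathcal{E}_b\subset\mathcal{X}\}$ appearing on the left-hand side of Theorem \ref{the:decompo} is exactly the whole belief space $\mathcal{B}$.

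Then I would simply apply Theorem \ref{the:decompo} with this choice of $\mathcal{X}$: it yields
\[
\mathcal{B} = \big\{ b : \mathcal{E}_b \subset \mathcal{X} \big\} = Cl\big(\{ b_A : A \in \mathcal{X} \}\big) = Cl\big(b_A,\; \forall\, \emptyset \neq A \subseteq \Theta\big),
\]
which is precisely \eqref{eq:belief-space}. As an independent sanity check of both inclusions, I would note that ``$\subseteq$'' is already contained in Theorem \ref{the:convex-combination}, which writes an arbitrary $b$ as the convex combination $\sum_{A} m(A) b_A$ of categorical belief functions with the $m(A)$ summing to one; and ``$\supseteq$'' follows because each $b_A$ is itself a belief function, hence a point of $\mathcal{B}$, and $\mathcal{B}$ is convex by Theorem \ref{the:convex}, so it contains the convex closure of any of its finite subsets.

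The one point requiring a little care — and the only thing resembling an ``obstacle'' — is purely a matter of bookkeeping conventions: Theorem \ref{the:convex-combination} is phrased in the ambient space $\mathbb{R}^{N-2}$, dropping the coordinates indexed by $\emptyset$ and $\Theta$ and recovering $b(\Theta)=1$ by normalization, whereas the corollary lists the vertices as all non-empty $A\subseteq\Theta$, including $A=\Theta$ (whose categorical function $b_\Theta$ is the vacuous belief function). I would reconcile these by noting that adjoining the coordinate $b(\Theta)\equiv 1$ embeds $\mathcal{B}$ as an affine slice, under which the normalization constraint $\sum_{A}m(A)=1$ is exactly the condition that the coefficients in \eqref{eq:convex-closure} form a convex combination, so no generality is lost and the vertex set is the stated one. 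With that remark in place the proof is complete; there are no real calculations to grind through.
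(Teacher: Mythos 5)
Your proposal is correct and follows exactly the route the paper takes: the corollary is obtained as a direct consequence of Theorem \ref{the:decompo} by choosing $\mathcal{X}$ to be the collection of all non-empty subsets of $\Theta$. The additional sanity checks via Theorems \ref{the:convex-combination} and \ref{the:convex} and the remark on the $\mathbb{R}^{N-2}$ coordinate convention are consistent with the paper but not needed beyond what the paper itself records.
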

As it is easy to see that the vectors $\{b_A, \emptyset \neq A \subsetneq \Theta \}$ associated with all categorical belief functions (except the vacuous one) are linearly independent, the vectors $\{b_A - b_\Theta = b_A, \emptyset \neq A \subsetneq \Theta \}$ (since $b_\Theta = 0$ is the origin of $\mathbb{R}^{N-2}$) are also linearly independent, i.e., the vertices $\{b_A, \emptyset \neq A \subseteq \Theta \}$ of the belief space (\ref{eq:belief-space}) are affinely independent. Hence:
\begin{corollary}
$\mathcal{B}$ is a simplex.
\end{corollary}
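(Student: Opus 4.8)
The plan is to combine Corollary~\ref{cor:belief-space} with a direct verification that the categorical belief functions are affinely independent, which is precisely what the definition of a simplex demands. Recall that a simplex is the convex closure of a finite collection of affinely independent points, and that $v_0,\dots,v_k$ are affinely independent exactly when $v_1-v_0,\dots,v_k-v_0$ are linearly independent. By Corollary~\ref{cor:belief-space} we already have $\mathcal{B}=Cl(b_A:\emptyset\neq A\subseteq\Theta)$, so the only thing left is to prove the affine independence of the $2^{|\Theta|}-1$ vertices $\{b_A\}$.

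First I would pin down the coordinate system. We work in $\mathbb{R}^{N-2}$, $N=|2^{\Theta}|$, with coordinates indexed by the proper non-empty subsets $\emptyset\neq B\subsetneq\Theta$ (the values on $\emptyset$ and $\Theta$ being fixed at $0$ and $1$, hence carrying no information). In these coordinates the vacuous belief function $b_\Theta$ is the origin: since $m_{b_\Theta}$ is concentrated on $\Theta$, we get $b_\Theta(B)=\sum_{C\subseteq B}m_{b_\Theta}(C)=0$ for every $B\subsetneq\Theta$. Choosing $v_0=b_\Theta$, the differences $b_A-b_\Theta$ reduce to the vectors $b_A$ with $\emptyset\neq A\subsetneq\Theta$, so affine independence of the full vertex set is equivalent to linear independence of these $N-2$ vectors.

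To establish that, I would use the explicit form of a categorical belief function: the $B$-component of $b_A$ is $b_A(B)=\sum_{C\subseteq B}m_{b_A}(C)=\mathbf{1}[A\subseteq B]$. Order the proper non-empty subsets in any way refining the cardinality order, $|B_1|\le|B_2|\le\cdots$. Then the matrix $M$ with entries $M_{ij}=\mathbf{1}[B_j\subseteq B_i]$, whose columns are exactly the vectors $b_{B_j}$, is lower triangular with $1$'s on the diagonal, because $B_j\subseteq B_i$ forces $|B_j|\le|B_i|$ and equality of cardinalities together with inclusion forces $B_j=B_i$. Hence $\det M=1\neq 0$, the vectors $\{b_A\}$ are linearly independent, and the claim follows. (Equivalently, a dependence $\sum_A\lambda_A b_A=0$ reads $\sum_{A\subseteq B}\lambda_A=0$ for all $B$, which by the Moebius inversion formula~(\ref{eq:moebius}) yields $\lambda_A=0$ for all $A$.)

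There is no genuine obstacle here; the only step requiring care is the bookkeeping about the ambient space — checking that discarding the $\emptyset$ and $\Theta$ coordinates is harmless and that $b_\Theta$ really is the origin in what remains — after which the triangularity of the incidence (zeta) matrix does all the work.
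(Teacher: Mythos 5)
Your proof is correct and follows essentially the same route as the paper: it invokes Corollary~\ref{cor:belief-space}, observes that $b_\Theta$ is the origin of $\mathbb{R}^{N-2}$, and reduces affine independence of the vertices to linear independence of the non-vacuous categorical belief functions. The only difference is that you actually supply the linear-independence argument (triangularity of the incidence matrix, equivalently Moebius inversion) which the paper dismisses as ``easy to see''.
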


\subsection{Simplicial structure on a binary frame}

\begin{figure}[ht!]
\begin{center}
\includegraphics[width = 0.45 \textwidth]{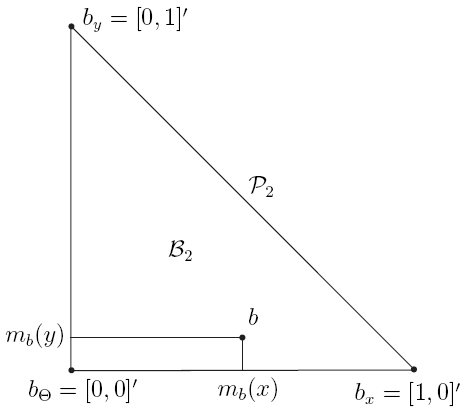}
\end{center}
\caption{\label{fig:b2} The belief space $\mathcal{B}_2$ for a binary frame is a triangle in $\mathbb{R}^2$ whose vertices are the categorical belief functions $b_x,b_y,b_\Theta$ focused on $\{x\},\{y\}$ and $\Theta$, respectively.}
\end{figure}

As an example let us consider a frame of discernment containing only two elements, $\Theta_2 = \{x,y\}$. In this very simple case each belief function $b:2^{\Theta_2}\rightarrow [0,1]$ is completely determined by its belief values $b(x),b(y)$, as $b(\Theta) = 1$ and $b(\emptyset)=0$ $\forall b$. 

We can therefore collect them in a vector of $\mathbb{R}^{N-2} = \mathbb{R}^2$ (since $N = 2^2 = 4$):
\begin{equation} \label{eq:b}
[b(x) = m(x), b(y) = m(y)]' \in \mathbb{R}^2.
\end{equation}
Since $m(x)\geq 0$, $m(y)\geq 0$, and $m(x) + m(y) \leq 1$ we can easily infer that the set $\mathcal{B}_2$ of all the possible belief functions on $\Theta_2$ can be depicted as the triangle in the Cartesian plane of Figure \ref{fig:b2}, whose vertices are the points:
\[
\begin{array}{ccc}
b_\Theta = [0,0]', & b_x = [1,0]', & b_y = [0,1]'
\end{array}
\]
(compare Equation (\ref{eq:belief-space})). These correspond (through Equation (\ref{eq:b})) to the `vacuous' belief function $b_\Theta$ ($m_{b_\Theta}(\Theta) = 1$), the categorical Bayesian b.f. $b_x$ with $m_{b_x}(x) = 1$, and the categorilca Bayesian b.f. $b_y$ with $m_{b_y}(y) = 1$, respectively.

Bayesian belief functions on $\Theta_2$ obey the constraint $m(x) + m(y) = 1$, and are therefore located on the segment $\mathcal{P}_2$ joining $b_x = [1,0]'$ and $b_y = [0,1]'$. Clearly the $L_1$ distance between $b$ and any Bayesian b.f. dominating it is constant and equal to $1 - m(x) - m(y)$ (see Theorem \ref{the:upper}).\\ The limit simplex (\ref{eq:limit}) is the region of set functions such that:
\[
b(\emptyset) + b(x) + b(y) + b(x,y) = 1 + b(x) + b(y) = 2,
\]
i.e. $b(x) + b(y) = 1$. Clearly $\mathcal{P}_2$ is a proper\footnote{The limit simplex is indeed the region of normalized sum functions (Section \ref{sec:normalized-sum-functions}) $\varsigma$ which meet the constraint $\sum_{x\in\Theta} m_\varsigma (x) = 1$} subset of the limit simplex (recall Section \ref{sec:limit-simplex}).

\section{The bundle structure of the belief space} \label{sec:bundle}

As the convexity results of Theorem \ref{the:convex} suggested us that the belief space may have the form of a simplex, the symmetry analysis of Section \ref{sec:symmetry} and the ternary example of Section \ref{sec:ex} advocate an interesting additional structure for $\mathcal{B}$.

Indeed, in the ternary example $\Theta = \{x,y,z\}$ we can note that the variables $d = [m_b(x),m_b(y),m_b(z)]'$ ($x,y,z$ in the notation of Section \ref{sec:ex}) associated with the masses of the singletons can move freely in the unitary simplex $\mathcal{D}$ (the `base space'), while the variables $m_b(\{x,y\}),m_b(\{x,z\}),m_b(\{y,z\})$ ($u,v,w$) associated with size-2 subsets are constrained to belong to a separate simplex (the `fiber') which is a function on the mass $d$ already assigned to subsets of smaller cardinality. We can express this fact by saying that there exists a projection $\pi : \mathcal{B} \rightarrow \mathcal{D}$ such that all the belief functions of a given fiber $\mathcal{F}(d)$ project onto the same point $d$ of the base space: $\mathcal{F}(d) = \{b: \pi[b] = d\}$ (see Figure \ref{fig:ternary-case}).

\begin{figure}[ht!]
\begin{center}
\includegraphics[width = 0.95 \textwidth]{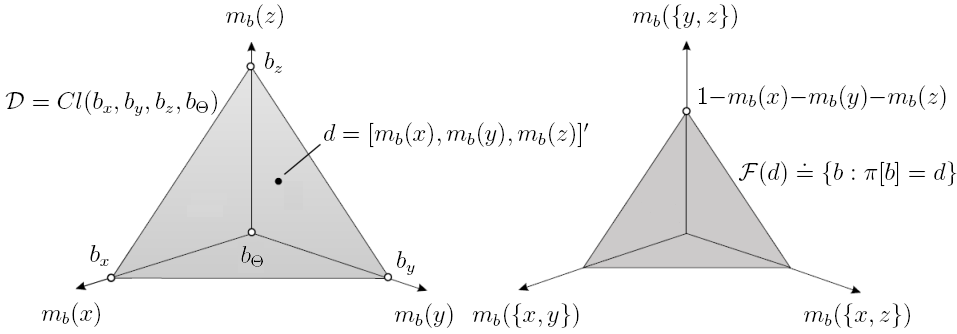}
\caption{Bundle decomposition of the belief space in the ternary case.}
\end{center}\vspace{-4mm} \label{fig:ternary-case}
\end{figure}

This decomposition is a hint of a general feature of the belief space: $\mathcal{B}$ can be \emph{recursively decomposed into bases and fibers}, each parameterized by sets of coordinates related to subsets of $\Theta$ with a same cardinality. Formally, the belief space has the structure of a \emph{fiber bundle} \cite{Novikov_russian}. 

\subsection{Fiber bundles} \label{sec:bundles}

A \emph{fiber bundle} \cite{Novikov_russian,Socolovsky94} is a generalization of the familiar Cartesian product, in which each point of the (total) space analyzed can be smoothly projected onto a \emph{base space} -- this projection determines a the decomposition of the total space into a collection of \emph{fibers} formed by points which all project onto the same element of the base.

\begin{definition}\label{def:smooth}
A smooth fiber bundle $\xi$ is a composed object $\{E, B, \pi, F, G, \mathcal{U}\}$, where
\begin{enumerate}
\item $E$ is an $s+r$-dimensional differentiable manifold called \emph{total space};
\item $B$ is an $r$-dimensional differentiable manifold called \emph{base space};
\item $F$ is an $s$-dimensional differentiable manifold called \emph{fiber};
\item $\pi:E \rightarrow B$ is a smooth application of full rank $r$ in each point of $B$, called
\emph{projection};
\item $G$ is the \emph{structure group};
\item the atlas $\mathcal{U}=\{(U_\alpha, \phi_\alpha)\}$ defines a \emph{bundle structure}; namely
\begin{itemize}
\item the base $B$ admits a covering with open sets $U_\alpha$ such that
\item $E_\alpha\doteq \pi^{-1}(U_\alpha)$ is equipped with \emph{smooth direct product coordinates}
\begin{equation}\label{eq:coordinates}
\begin{array}{cccc}
\phi_\alpha: & \pi^{-1}(U_\alpha) & \rightarrow & U_\alpha \times F\\ & e & \mapsto & (\phi'_\alpha(e),
\phi''_\alpha(e))
\end{array}
\end{equation}
satisfying two conditions:
\begin{itemize}
\item the coordinate component with values in the base space is \emph{compatible with the projection map}:
\begin{equation}\label{eq:uno}
\pi \circ \phi_\alpha^{-1}(x,f)=x
\end{equation}
or equivalently $\phi'_\alpha(e) = \pi(e)$;
\item the coordinate component with values on the fiber can be transformed, jumping from a coordinate chart into
another, by means of elements of the structure group. Formally the applications
\[
\begin{array}{cccc}
\lambda_{\alpha \beta}\doteq \phi_\beta \phi^{-1}_\alpha: & U_{\alpha \beta}\times F & \rightarrow & U_{\alpha
\beta}\times F\\ & (x,f) & \mapsto & (x,T^{\alpha \beta}(x)f)
\end{array}
\] called \emph{gluing functions} are implemented by means of transformations $T^{\alpha \beta}(x): F \rightarrow
F$ defined by applications from a domain $U_{\alpha \beta}$ to the structure group
\[T^{\alpha \beta}: U_{\alpha \beta} \rightarrow G\]
satisfying the following conditions
\begin{equation}\label{eq:due}
T^{\alpha \beta}=(T^{\beta \alpha})^{-1},\;\;\;\;\; T^{\alpha \beta} T^{\beta \gamma} T^{\gamma \alpha} =1.
\end{equation}
\end{itemize}
\end{itemize}
\end{enumerate}
\end{definition}
Intuitively, the base space is covered by a number of open neighborhoods $\{ U_\alpha \}$, which induce a similar
covering $\{E_\alpha = \pi^{-1}(U_\alpha)\}$ on the total space $E$. Points $e$ of each neighborhood $E_\alpha$ of the
total space admit coordinates separable into two parts: the first one $\phi'(e) = \pi(e)$ is the projection of $e$ onto
the base $B$, while the second part is its coordinate on the fiber $F$. Fiber coordinates are such that in the
intersection of two different charts $E_\alpha \cap E_\beta$ they can be transformed into each other by means of the
action of a group (the `structure group') $G$.

Note that in the following all the involved manifolds are linear spaces, so that each of them can be covered by a single chart. This makes the bundle structure trivial, i.e., the identity transformation. The reader can then safely ignore the gluing conditions on $\phi''_\alpha$.

\subsection{Combinatorial facts}

To prove the bundle decomposition of the belief space $\mathcal{B}$ we first need a simple combinatorial result.

\begin{lemma} \label{lem:combin2}
The following inequality holds:
\[
\sum_{|A| = i} b(A)\leq \displaystyle 1 + \sum_{m = 1}^{i-1} (-1)^{i - (m+1)}\binom{ n-(m+1)}{i-m} \sum_{|B| = m} b(B), 
\]
and the upper bound is reached whenever 
\[
\sum_{|A| = i} m_b(A) = 1 - \sum_{|A|<i} m_b(A).
\]
\end{lemma}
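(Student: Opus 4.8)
The claim is an inequality bounding $\sum_{|A|=i} b(A)$ in terms of the lower-cardinality belief values $\sum_{|B|=m} b(B)$ for $m<i$, together with a characterization of when equality holds. The natural approach is to express everything in terms of the basic probability assignment $m_b$ via $b(A)=\sum_{B\subseteq A} m_b(B)$, turn the left-hand side into a weighted sum of masses, and then bound those weights. The plan is: (1) compute $\sum_{|A|=i} b(A)$ as $\sum_{B} c(|B|)\, m_b(B)$ for suitable combinatorial coefficients $c(\cdot)$ depending only on $|B|$ and on $n=|\Theta|$, $i$; (2) likewise express each $\sum_{|B|=m} b(B)$ as $\sum_{C} d_m(|C|)\, m_b(C)$; (3) verify the alleged identity coefficient-by-coefficient, reducing it to a binomial identity; (4) read off the equality case from where the bound on the masses is tight.

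**Carrying it out.** For step (1): fixing a focal set $B$ with $|B|=j\le i$, the number of sets $A$ with $|A|=i$ and $A\supseteq B$ is $\binom{n-j}{i-j}$, so $\sum_{|A|=i} b(A) = \sum_{j=0}^{i}\binom{n-j}{i-j}\sum_{|B|=j} m_b(B)$. (The $j=0$ term is $\binom{n}{i}\,m_b(\emptyset)=0$ for a genuine belief function, but keeping it costs nothing.) Now use $\sum_{|B|=j} m_b(B) = 1 - \sum_{|B|\ne j} m_b(B)$, or more cleanly, write $\sum_{j\le i}\binom{n-j}{i-j}\sum_{|B|=j} m_b(B)$ and split off the top term $j=i$, which contributes $\binom{n-i}{0}\sum_{|B|=i} m_b(B) = \sum_{|B|=i} m_b(B)$. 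The whole point of the lemma is that $\sum_{|B|=i} m_b(B) \le 1 - \sum_{|B|<i} m_b(B)$, since all masses are nonnegative and sum to $1$ (Definition~\ref{def:bpa}); this is exactly the stated tightness condition. Substituting $\sum_{|B|=i} m_b(B) \le 1 - \sum_{j<i}\sum_{|B|=j} m_b(B)$ gives
\[
\sum_{|A|=i} b(A) \le 1 + \sum_{j=1}^{i-1}\Bigl[\binom{n-j}{i-j}-1\Bigr]\sum_{|B|=j} m_b(B),
\]
where again the $j=0$ term vanishes. So the remaining task is purely to match this against the claimed right-hand side.

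**The binomial identity.** The RHS of the lemma is stated in terms of $\sum_{|B|=m} b(B)$, not $\sum_{|B|=m} m_b(B)$, so I substitute $\sum_{|B|=m} b(B) = \sum_{j\le m}\binom{n-j}{m-j}\sum_{|C|=j} m_b(C)$ and collect the coefficient of $\sum_{|C|=j} m_b(C)$ for each $j<i$ on both sides. On the left (from the display above) the coefficient of $\sum_{|C|=j} m_b(C)$ is $\binom{n-j}{i-j}-1$. On the right it is $\sum_{m=j}^{i-1} (-1)^{i-(m+1)}\binom{n-(m+1)}{i-m}\binom{n-j}{m-j}$. So the lemma reduces to verifying, for each $0\le j\le i-1$,
\[
\sum_{m=j}^{i-1} (-1)^{\,i-m-1}\binom{n-m-1}{i-m}\binom{n-j}{m-j} \;=\; \binom{n-j}{i-j}-1 .
\]
This is the anticipated main obstacle — a genuine alternating-sum binomial identity. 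I expect to prove it by a standard technique: reindex with $k=m-j$, $N=n-j$, $I=i-j$ to get $\sum_{k=0}^{I-1}(-1)^{I-k-1}\binom{N-k-1}{I-k}\binom{N}{k} \stackrel{?}{=} \binom{N}{I}-1$, then either (a) use the Vandermonde/absorption identities and the hockey-stick relation, (b) recognize $\binom{N-k-1}{I-k}$ via its generating function $[x^{I-k}](1-x)^{-(N-k)}$ and sum a geometric-type series, or (c) induct on $I$, peeling off the $k=I-1$ term and using Pascal's rule. The "$-1$" on the right is the fingerprint of the missing top term $m=i$ (which would contribute $(-1)^{-1}\binom{n-i-1}{0}\binom{n-j}{i-j}$... ), so an inductive or telescoping argument that tracks that term is likely cleanest. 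Once the identity is in hand, equality in the lemma holds precisely when the one inequality used, $\sum_{|B|=i} m_b(B)\le 1-\sum_{|B|<i} m_b(B)$, is an equality, i.e. when $\sum_{|A|<i} m_b(A) + \sum_{|A|=i} m_b(A) = 1$, equivalently $\sum_{|A|=i} m_b(A) = 1 - \sum_{|A|<i} m_b(A)$, which is the stated condition. The rest is bookkeeping.
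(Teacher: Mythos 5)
Your proposal is correct and follows the same master plan as the paper's proof: write $\sum_{|A|=i}b(A)$ as the mass-weighted sum $\sum_{j\le i}\binom{n-j}{i-j}\sum_{|B|=j}m_b(B)$, bound the top term by $\sum_{|B|=i}m_b(B)\le 1-\sum_{|B|<i}m_b(B)$ using nonnegativity and normalization of the b.p.a.\ (which is exactly where the stated equality condition comes from), and then reconcile the resulting mass expression with the belief-value form of the claimed bound via a combinatorial identity. Where you diverge is in the direction of that final reconciliation. The paper converts its mass expression \emph{forward} into belief values, applying the M\"obius inversion formula and then (twice) the alternating column-sum identity $\sum_{m=l}^{i-1}(-1)^{m-l}\binom{n-l}{m-l}=(-1)^{i-(l+1)}\binom{n-(l+1)}{i-(l+1)}$ quoted from Gould. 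You instead expand the claimed right-hand side \emph{backward} into masses and match coefficients, reducing everything to the single identity $\sum_{m=j}^{i-1}(-1)^{i-m-1}\binom{n-m-1}{i-m}\binom{n-j}{m-j}=\binom{n-j}{i-j}-1$. That identity is true, and your option (a) closes it in one line: with $k=m-j$, $N=n-j$, $I=i-j$, extend the sum to $k=I$ (whose term is $-\binom{N}{I}$, absorbing the $\binom{N}{I}$ on the right and leaving $-1$), use upper negation $(-1)^{I-k}\binom{N-k-1}{I-k}=\binom{I-N}{I-k}$, and apply Vandermonde to get $\sum_{k=0}^{I}\binom{I-N}{I-k}\binom{N}{k}=\binom{I}{I}=1$, which is the claim. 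So the only piece you deferred is a routine consequence of Vandermonde; your endgame is arguably cleaner than the paper's double application of the Gould identity, at the cost of verifying the target expression rather than deriving it.
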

\begin{proof}
Since $\binom{n-m}{i-m}$ is the number of subsets of size $i$ containing a fixed set $B,\;|B|=m$ in a frame with $n$ elements, we can write:
\begin{equation} \label{eq:lemma-0}
\begin{array}{lll}
\displaystyle
\sum_{|A|=i} b(A) & = & \displaystyle \sum_{|A|=i} \sum_{B \subseteq A} m_b(B) = \sum_{m=1}^i \sum_{|B|=m}  \binom{n-m}{i-m} m_b(B) \\ & = & \displaystyle \sum_{|B|=i} m_b(B) + \sum_{m=1}^{i-1} \sum_{|B|=m} \binom{n-m}{i-m} m_b(B) \\ & \leq & \displaystyle 1 - \sum_{|B|<i} m_b(B) + \sum_{m=1}^{i-1} \sum_{|B|=m} \binom{n-m}{i-m} m_b(B),
\end{array}
\end{equation}
as $\displaystyle \sum_{|B|=i} m_b(B) = 1 - \sum_{|B|<i} m_b(B)$ by normalization. By M\"obius inversion (\ref{eq:moebius}):
\begin{equation} \label{eq:lemma-1}
\begin{array}{lll}
\displaystyle \sum_{|A|<i} m_b(A) = \sum_{|A|<i} \sum_{B \subseteq A}(-1)^{|A \setminus B|} b(B) = \sum_{|A| = m = 1}^{i - 1}\sum_{|B| = l = 1}^m (-1)^{m - l} \binom{n - l}{m - l} \sum_{|B| = l} b(B)
\end{array}
\end{equation}
for, again, $\binom{n-l}{m-l}$ is the number of subsets of size $m$ containing a fixed set $B,\;|B|=l$ in a frame with $n$ elements. The role of the indexes $m$ and $l$ can be exchanged, obtaining:
\begin{equation} \label{eq:lemma-11}
\sum_{|B|=l=1}^{i-1} m_b(B) = \sum_{|B|=l=1}^{i-1} \bigg[ \sum_{|B|=l} b(B) \cdot \sum_{m=l}^{i-1} (-1)^{m-l}\binom{n-l}{m-l} \bigg].
\end{equation}
Now, a well known combinatorial identity (\cite{Gould}, volume 3, Equation (1.9)) states that, for $i-(l+1)\geq 1$:
\begin{equation} \label{eq:lemma-2}
\sum_{m=l}^{i-1} (-1)^{m-l}\binom{n-l}{m-l}=(-1)^{i-(l+1)}\binom{n-(l+1)}{i-(l+1)}.
\end{equation}
By applying (\ref{eq:lemma-2}) to the last equality, (\ref{eq:lemma-1}) becomes:
\begin{equation}\label{eq:lemma-3}
\sum_{|B|=l=1}^{i-1} \bigg [ \sum_{|B|=l} b(B) \cdot (-1)^{i-(l+1)}\binom{n-(l+1)}{i-(l+1)} \bigg ].
\end{equation}
Similarly, by (\ref{eq:lemma-11}) we have:
\[
\begin{array}{lll}
\displaystyle \sum_{m=1}^{i-1} \sum_{|B|=m} \binom{n-m}{i-m} m_b(B) & = & \displaystyle \sum_{l = 1}^{i-1} \sum_{|B|=l} b(B) \cdot \sum_{m=l}^{i-1} (-1)^{m-l} \binom{n-l}{m-l} \binom{n-m}{i-m} \\ & = & \displaystyle \sum_{l = 1}^{i-1} \sum_{|B|=l} b(B) \cdot \sum_{m=l}^{i-1} (-1)^{m-l} \binom{i-l}{m-l} \binom{n-l}{i-l},
\end{array}
\]
as it is easy to verify that $\displaystyle \binom{n-l}{m-l} \binom{n-m}{i-m} = \binom{i-l}{m-l} \binom{n-l}{i-l}$.\\ By applying (\ref{eq:lemma-2}) again to the last equality we get:
\begin{equation}\label{eq:lemma-4}
\sum_{m=1}^{i-1} \sum_{|B|=m} \binom{n-m}{i-m} m_b(B) = \sum_{l = 1}^{i-1} \sum_{|B|=l} (-1)^{i-(l+1)} \binom{n-l}{i-l}.
\end{equation}
By replacing (\ref{eq:lemma-11}) and (\ref{eq:lemma-4}) in (\ref{eq:lemma-0}) we get the thesis.
\end{proof}

The bottom line of Lemma \ref{lem:combin2} is that, having assigned mass to events of size $1,...,i-1$ the upper
bound for $\sum_{|A| = i} b(A)$ is obtained by assigning all the remaining mass to the collection of size-$i$ subsets.

\subsection{Normalized sum functions} \label{sec:normalized-sum-functions}

The definition of fiber bundle (see Appendix) requires the involved spaces (bases and fibers) to be \emph{manifolds}. The ternary case suggests instead that the belief space decomposes into \emph{simplices}. The idea of recursively assigning mass to subsets of increasing size, however, does not necessarily require the mass itself to be positive. 

Indeed, each vector $v = [v_A, \emptyset \subsetneq A \subsetneq \Theta]' \in \mathbb{R}^{N-2}$ can be thought of as a set function 
\[
\varsigma:2^\Theta \setminus \emptyset \rightarrow {\mathbb{R}}\; s.t. \;\varsigma(A) = v_A, \; \varsigma(\Theta) = 1 - \sum_{A\neq \Theta} v_A.
\] 
By applying the M\"obius transformation (\ref{eq:moebius}) to such a $\varsigma$ we obtain another set function $m_{\varsigma}:2^\Theta\setminus \emptyset \rightarrow \mathbb{R}$ with $\varsigma(A)=\sum_{B\subseteq A} m_\varsigma(B)$ (as it is the case for belief functions, compare (\ref{eq:belvalue})). Contrarily to basic probability assignments, however, the M\"obius inverse $m_\varsigma$ of such a \emph{normalized sum function} (n.s.f.) \cite{cuzzolin04smcb,cuzzolin14annals} $\varsigma$ is not guaranteed to meet the non-negativity constraint: 
\[
m_\varsigma(A) \not\geq 0 \hspace{5mm} \forall A\subseteq\Theta.
\]
Geometrically, normalized sum functions correspond to arbitrary points of $\mathcal{S} = \mathbb{R}^{N-2}$. 

\subsection{Recursive convex fiber bundle structure} \label{sec:recursive-bundle-structure}

If we recursively assign mass to normalized sum functions, we obtain a classical fiber bundle structure for the space $\mathcal{S}$ of all n.s.f.s on $\Theta$, in which all the involved bases and fibers are `conventional' {linear} spaces. 

\begin{theorem} \label{the:bundle-nsf}
The space $\mathcal{S} = \mathbb{R}^{N-2}$ of all the sum functions $\varsigma$ with domain on a finite frame $\Theta$
of cardinality $|\Theta| = n$ has a recursive fiber bundle structure. Namely, there exists a sequence of smooth fiber bundles
\[
\begin{array}{ccc}
\xi_i = \Big \{ \mathcal{F}_\mathcal{S}^{(i-1)}, \mathcal{D}_\mathcal{S}^{(i)}, \mathcal{F}_\mathcal{S}^{(i)}, \pi_i \Big \}, & &
i = 1,..., n-1
\end{array}
\]
where $\mathcal{F}_\mathcal{S}^{(0)} = \mathcal{S} = \mathbb{R}^{N-2}$, the \emph{total space} $\mathcal{F}_\mathcal{S}^{(i-1)}$, the
\emph{base space} $\mathcal{D}_\mathcal{S}^{(i)}$ and the \emph{fiber} $\mathcal{F}_\mathcal{S}^{(i)}$ of the $i$-th
bundle level are linear subspaces of $\mathbb{R}^{N-2}$ of dimension $\sum_{k = i}^{n-1} \binom{n}{k}, \binom{n}{i},
\sum_{k = i+1}^{n-1} \binom{n}{k}$ respectively. 

Both $\mathcal{F}_\mathcal{S}^{(i-1)}$ and $\mathcal{D}_\mathcal{S}^{(i)}$ admit a \emph{global} coordinate chart. As
\[
\dim\mathcal{F}_\mathcal{S}^{(i-1)} = \sum_{k=i,...,n-1} \binom{n}{k} = \Big | \Big \{ A \subset \Theta : i \leq |A| < n \Big \} \Big |,
\]
each point $\varsigma^{i-1}$ of $\mathcal{F}_\mathcal{S}^{(i-1)}$ can be written as
\[
\varsigma^{i-1} = \Big [ \varsigma^{i-1}(A), A\subset\Theta, i\leq |A| < n \Big ]'
\]
and the \emph{smooth direct product coordinates}
(\ref{eq:coordinates}) at the $i$-th bundle level are
\[
\begin{array}{cc}
\phi'(\varsigma^{i-1}) = \Big \{ \varsigma^{i-1}(A),\; |A|=i \Big \}, & \phi''(\varsigma^{i-1}) = \Big \{ \varsigma^{i-1}(A),\; i<|A|<n \Big \}.
\end{array}
\]
The projection map $\pi_i$ of the i-th bundle level is a full-rank differentiable application
\[
\begin{array}{cccc}
\pi_{i}:& \mathcal{F}_\mathcal{S}^{(i-1)} & \rightarrow & \mathcal{D}_\mathcal{S}^{(i)}\\ & \varsigma^{i-1} & \mapsto &
\pi_i [\varsigma^{i-1}]
\end{array}
\]
whose expression in this coordinate chart is
\begin{equation}\label{eq:projection}
\pi_i [\varsigma^{i-1} ] = [ \varsigma^{i-1}(A), |A|= i]'.
\end{equation}
\end{theorem}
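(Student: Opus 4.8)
The plan is to unwind Definition \ref{def:smooth} in the situation --- emphasised in the remark following that definition --- where every space involved is a finite-dimensional real vector space and every map is linear, so that a single global chart suffices and the structure group is trivial; apart from this, the only real content is a dimension count.

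\emph{Step 1: the spaces.} First I would order the coordinates of $\mathbb{R}^{N-2}$ by the nonempty proper subsets of $\Theta$ indexing them, grouped by cardinality, and set $\mathcal{D}_\mathcal{S}^{(i)}$ to be the coordinate subspace spanned by the axes indexed by the size-$i$ subsets of $\Theta$, $\mathcal{F}_\mathcal{S}^{(i)}$ the coordinate subspace spanned by the axes indexed by the subsets $A$ with $i < |A| < n$, and $\mathcal{F}_\mathcal{S}^{(0)} = \mathcal{S} = \mathbb{R}^{N-2}$. Since $|\{ A \subseteq \Theta : |A| = k \}| = \binom{n}{k}$, summing over the relevant cardinalities gives $\dim \mathcal{D}_\mathcal{S}^{(i)} = \binom{n}{i}$, $\dim \mathcal{F}_\mathcal{S}^{(i)} = \sum_{k=i+1}^{n-1}\binom{n}{k}$, and $\dim \mathcal{F}_\mathcal{S}^{(i-1)} = \sum_{k=i}^{n-1}\binom{n}{k} = |\{ A \subset \Theta : i \le |A| < n \}|$, exactly as asserted; in particular $\dim \mathcal{F}_\mathcal{S}^{(0)} = \sum_{k=1}^{n-1}\binom{n}{k} = 2^n - 2 = N - 2$. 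Moreover $\mathcal{F}_\mathcal{S}^{(i-1)} = \mathcal{D}_\mathcal{S}^{(i)} \oplus \mathcal{F}_\mathcal{S}^{(i)}$ as a direct sum of complementary coordinate subspaces, which at once yields the arithmetic identity $\sum_{k=i}^{n-1}\binom{n}{k} = \binom{n}{i} + \sum_{k=i+1}^{n-1}\binom{n}{k}$ and exhibits the recursion: the fiber $\mathcal{F}_\mathcal{S}^{(i)}$ of the $i$-th level \emph{is} the total space of the $(i+1)$-th level. The index $i$ runs from $1$ to $n-1$, and the process bottoms out because $\mathcal{F}_\mathcal{S}^{(n-1)}$ is the zero space, the value on $\Theta$ being fixed by the normalization $\varsigma(\Theta) = 1 - \sum_{A \neq \Theta}\varsigma(A)$.

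\emph{Step 2: the bundle axioms.} For fixed $i$ I would verify the axioms in turn. Items (1)--(3) of Definition \ref{def:smooth} hold because $\mathcal{F}_\mathcal{S}^{(i-1)}$, $\mathcal{D}_\mathcal{S}^{(i)}$ and $\mathcal{F}_\mathcal{S}^{(i)}$ are linear subspaces of a Euclidean space, hence differentiable manifolds of the stated dimensions carrying the global chart given by their subset-indexed coordinates. For item (4), the projection $\pi_i$ of (\ref{eq:projection}) merely extracts the size-$i$ block of coordinates, so it is linear; its differential at every point equals $\pi_i$ itself, which surjects onto $\mathcal{D}_\mathcal{S}^{(i)}$, so $\pi_i$ is smooth of full rank $r = \binom{n}{i}$ everywhere. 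For items (5)--(6), take structure group $G = \{1\}$ and the atlas consisting of the single pair $(\mathcal{D}_\mathcal{S}^{(i)}, \mathrm{id})$; the smooth direct product coordinates (\ref{eq:coordinates}) are the splitting $\phi'(\varsigma^{i-1}) = \{ \varsigma^{i-1}(A) : |A| = i \}$, $\phi''(\varsigma^{i-1}) = \{ \varsigma^{i-1}(A) : i < |A| < n \}$, which is a linear isomorphism $\mathcal{F}_\mathcal{S}^{(i-1)} \cong \mathcal{D}_\mathcal{S}^{(i)} \times \mathcal{F}_\mathcal{S}^{(i)}$. The compatibility condition (\ref{eq:uno}) holds because $\phi'(\varsigma^{i-1})$ is by construction the size-$i$ block of $\varsigma^{i-1}$, i.e. $\pi_i(\varsigma^{i-1})$; and with a single chart the gluing functions reduce to the identity, so the cocycle conditions (\ref{eq:due}) are vacuous. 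Carrying this out for every $i = 1, \dots, n-1$, with the total space of $\xi_{i+1}$ taken to be the fiber $\mathcal{F}_\mathcal{S}^{(i)}$ of $\xi_i$, assembles the required recursive sequence of smooth fiber bundles.

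\emph{Where the difficulty lies.} There is no analytic obstacle here: the whole argument is verification, and its hardest ingredient is the elementary combinatorial identity of Step 1. The one thing that genuinely needs care is the index bookkeeping --- the coordinate indexed by $\Theta$ itself must be excluded throughout, since its value is fixed by normalization, which is why all cardinalities run only up to $n-1$ and why the recursion terminates at a point rather than continuing indefinitely; and the cardinality-graded coordinate blocks must be kept cleanly separated when reading off $\phi'$, $\phi''$ and $\pi_i$. It is precisely the passage from basic probability assignments to normalized sum functions --- dropping the non-negativity of the M\"obius inverse --- that keeps every base and fiber a full linear space and the bundle structure trivial; the genuinely non-linear, simplicial refinement of this picture for the belief space $\mathcal{B}$ itself requires separate treatment.
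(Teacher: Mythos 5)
Your proposal is correct and follows essentially the same route the paper indicates: the paper only sketches the argument (recursively assigning coordinates to subsets of increasing cardinality, deferring details to an external reference), and your cardinality-graded coordinate decomposition with trivial structure group, single global chart, and the dimension count $\sum_{k=i}^{n-1}\binom{n}{k} = \binom{n}{i} + \sum_{k=i+1}^{n-1}\binom{n}{k}$ is exactly that argument carried out in full. Your observations that the coordinate indexed by $\Theta$ is eliminated by normalization and that the gluing conditions are vacuous match the paper's own remarks following Definition \ref{def:smooth}.
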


\begin{proof} (sketch)
The bottom line of the proof of Theorem \ref{the:bundle-nsf} is that {the mass associated with a sum function can be recursively assigned to subsets of increasing size}. The proof is done by induction \cite{cuzzolin14annals}.
\end{proof}

As the belief space is a simplex immersed in $\mathcal{S} = \mathbb{R}^{N-2}$, the fibers of $\mathbb{R}^{N-2}$ do intersect the space of belief functions too. The belief space $\mathcal{B}$ then inherits some sort of bundle structure from the Cartesian space in which it is immersed. 

\begin{theorem} \label{the:bundle-bf}
The belief space $\mathcal{B} \subset \mathcal{S} = \mathbb{R}^{N-2}$ inherits by intersection with the recursive
bundle structure of $\mathcal{S}$ a `convex'-bundle decomposition. Each $i$-th level `fiber' can be expressed as
\begin{equation}\label{eq:fiber}
\mathcal{F}_\mathcal{B}^{(i-1)}(d^1,...,d^{i-1}) = \Big \{ b \in \mathcal{B} : V_i \wedge \cdots \wedge V_{n-1}
(d^1,...,d^{i-1}) \Big \},
\end{equation}
where $V_i(d^1,...,d^{i-1})$ denotes the system of constraints
\begin{equation}\label{eq:vi}
V_i(d^1,...,d^{i-1}) : \left\{
\begin{array}{ll}
m_b(A) \geq 0 & \forall A \subseteq \Theta : |A|=i, \\ \displaystyle \sum_{|A|=i} m_b(A) \leq 1 - \sum_{|A|<i} m_b(A) &
\end{array}
\right.
\end{equation}
and depends on the mass assigned to lower size subsets $d^m = [ m_b(A), |A| = m ]'$, $m = 1,...,i-1$. 

The corresponding $i$-th level convex `base' $\mathcal{D}_\mathcal{B}^{(i)}(d^1,...,d^{i-1})$ can be expressed in terms of basic probability assignments as the collection of b.f.s $b \in \mathcal{F}^{(i-1)}(d^1,...,d^{i-1})$ such that
\begin{equation}\label{eq:base-m}
\left\{ \begin{array}{ll} m_b(A) = 0, & \forall A : i<|A|<n \\ \\ m_b(A) \geq 0, & \forall A : |A|=i \\ \\ \displaystyle \sum_{|A| = i} m_b(A) \leq 1 - \sum_{|A|<i} m_b(A). & \end{array} \right.
\end{equation}
\end{theorem}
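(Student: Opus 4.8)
The idea is to take the fiber bundle structure for the space $\mathcal{S}$ of normalized sum functions established in Theorem \ref{the:bundle-nsf} and intersect every stratum of it with the belief space $\mathcal{B}$. Concretely, I would first recall that $\mathcal{B}$ sits inside $\mathcal{S} = \mathbb{R}^{N-2}$ and that, by Corollary \ref{cor:belief-space}, a point $\varsigma \in \mathcal{S}$ lies in $\mathcal{B}$ precisely when its M\"obius inverse $m_\varsigma$ is non-negative on every proper non-empty subset (and then automatically sums to $1$ by the normalization built into $\mathcal{S}$). So the whole argument is: translate the linear fibration of $\mathcal{S}$, expressed in the $\varsigma$-coordinates, into the $m$-coordinates via the M\"obius inversion formula (\ref{eq:moebius}), and then bolt on the positivity constraints $m_b(A)\geq 0$ level by level.

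First I would fix the level $i$ and the already-chosen mass data $d^1,\dots,d^{i-1}$, with $d^m = [m_b(A), |A|=m]'$. The $i$-th fiber of $\mathcal{S}$, namely $\pi_i^{-1}(d^1,\dots,d^{i-1})$ in the recursive sense, is the affine subspace of sum functions whose mass on subsets of size $<i$ is the prescribed $d^1,\dots,d^{i-1}$; its coordinates are the free values $m_\varsigma(A)$ for $|A|\geq i$. Intersecting with $\mathcal{B}$ forces, for each $A$ with $|A|\geq i$, the inequality $m_b(A)\geq 0$. I would then observe that the constraints with $|A|\geq i$ can be organised by size, and that the constraints of size exactly $i$ alone give the base $\mathcal{D}_\mathcal{B}^{(i)}$ once we additionally freeze $m_b(A)=0$ for $i<|A|<n$ — this is exactly (\ref{eq:base-m}). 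The companion statement is that the positivity constraints for size $i$ reduce, after using the normalization $\sum_A m_b(A)=1$, to the two displayed conditions in $V_i(d^1,\dots,d^{i-1})$: $m_b(A)\geq 0$ for $|A|=i$ together with $\sum_{|A|=i} m_b(A)\leq 1 - \sum_{|A|<i} m_b(A)$, the latter being the condition that what is left over for sizes $>i$ is still non-negative in total — and this is where Lemma \ref{lem:combin2} enters, since it is precisely the combinatorial fact that the upper bound on $\sum_{|A|=i} b(A)$ is attained exactly when all remaining mass is placed on size-$i$ subsets, i.e. when $\sum_{|A|=i} m_b(A) = 1 - \sum_{|A|<i} m_b(A)$. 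Rewriting this bound back through (\ref{eq:moebius}) gives the $b$-valued inequality hidden behind $V_i$, which is what makes (\ref{eq:fiber}) a genuine (convex) `fiber' cut out of the linear fiber of $\mathcal{S}$. Finally I would close the induction: $\mathcal{F}_\mathcal{B}^{(i-1)}(d^1,\dots,d^{i-1})$ is the intersection $\bigcap_{k\geq i} V_k$ of all the higher-level constraint systems, each $V_k$ depending only on $d^1,\dots,d^{k-1}$, so the projection $\pi_i$ restricted to $\mathcal{B}$ does land in $\mathcal{D}_\mathcal{B}^{(i)}$ and the fibers stack recursively exactly as in the $\mathcal{S}$-case; convexity of each fiber and base is immediate from Theorem \ref{the:convex} (or directly, since they are cut out by linear inequalities).

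**Where the difficulty lies.** The only non-bookkeeping step is verifying that the positivity constraint on size-$i$ mass translates into the clean single inequality $\sum_{|A|=i} m_b(A) \leq 1 - \sum_{|A|<i} m_b(A)$ in $V_i$ rather than into something that also references sizes $>i$; this is exactly the content of Lemma \ref{lem:combin2}, so the substantive work has already been done and the proof here is largely a matter of carefully carrying the M\"obius change of coordinates through the already-established linear bundle of $\mathcal{S}$ and checking that intersecting a linear fiber with the polyhedral cone $\{m\geq 0\}$ respects the recursive structure. I would therefore write this proof as a short deduction — "by intersecting the bundle of Theorem \ref{the:bundle-nsf} with $\mathcal{B}$, and applying Lemma \ref{lem:combin2} at each level" — flagging the level-by-level compatibility of the constraint systems $V_i$ as the point that needs a line or two of genuine checking, and leaving the dimension counts (which are inherited verbatim from $\mathcal{S}$ minus the boundary) as routine.
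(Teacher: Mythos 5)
Your proposal is correct and follows essentially the same route as the paper, which itself only sketches the argument as "intersect the bundle decomposition of $\mathcal{S}$ with $\mathcal{B}$ by imposing the non-negativity and normalization constraints recursively on collections of subsets of the same size" (deferring the full details to an external reference). Your additional observation that Lemma \ref{lem:combin2} is what converts the residual-mass inequality in $V_i$ into a bound on the belief-value coordinates is consistent with how the paper sets up that lemma immediately before the bundle theorems, so it is a legitimate filling-in of the sketch rather than a departure from it.
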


\begin{proof} (sketch)
The proof is based on understanding the effect on $\mathcal{B}\subset\mathcal{S}$ of the bundle decomposition of the space of normalized sum functions $\mathcal{S} = \mathbb{R}^{N-2}$. This is done by applying the non-negativity $m_\varsigma\geq 0$ and normalization $\sum_A m_\varsigma(A) = 1$ constraints on the admissible values of the coordinates of points of $\mathcal{S}$, recursively to collections of subsets of the same size. For a full proof see \cite{cuzzolin14annals}.
\end{proof}

The intersections of the fibers of $\mathcal{S} =\mathbb{R}^{N-2}$ with the simplex $\mathcal{B}$ are themselves simplices: bases and fibers in the belief space case are polytopes, rather than linear spaces.


\subsection{Bases and fibers as simplices} \label{sec:simplices}

We have seen that the  $(i-1)$-th level fiber $\mathcal{F}_\mathcal{B}^{(i-1)}(d^1,...,d^{i-1})$ of $\mathcal{B}$ admits a pseudo-bundle structure whose pseudo-base space is $\mathcal{D}_\mathcal{B}^{(i)}(d^1,\cdots,d^{i-1})$ given by Equation (\ref{eq:base-m}). Let us denote by $k = \sum_{|A|<i} m_A$ the total mass already assigned to lower size events, and call
\[
\begin{array}{l}
\displaystyle \mathcal{P}^{(i)}(d^1,...,d^{i-1}) \doteq \Big\{ b \in \mathcal{F}_\mathcal{B}^{(i-1)}(d^1,...,d^{i-1}) :
\sum_{|A|=i} m_b(A) = 1 - k \Big\} \\ \displaystyle \mathcal{O}^{(i)}(d^1,...,d^{i - 1}) \doteq \Big\{ b \in
\mathcal{F}_\mathcal{B}^{(i-1)}(d^1,...,d^{i-1}) : m_b(\Theta) = 1 - k \Big\}
\end{array}
\]
the collections of belief functions on the fiber $\mathcal{F}_\mathcal{B}^{(i-1)}(d^1,...,d^{i-1})$ assigning all the remaining basic probability $1 - k$ to subsets of size $i$ or to $\Theta$, respectively.\\ 

Each belief function $b \in \mathcal{F}_\mathcal{B}^{(i-1)}(d^1,...,d^{i-1})$ on such a fiber can be written as:
\[
b = k \sum_{|A|<i} m_{b_0}(A) b_A + (1 - k) \sum_{|A|\geq i} m_{b'}(A) b_A = k b_0 + (1-k) b'
\]
where $b_0 \in Cl(b_A : |A|<i)$ and $b' \in Cl(b_A : |A|\geq i)$. Note that $b_0$ is the same for all the b.f.s on the fiber, while the second component $b'$ is free to vary in $Cl(b_A: |A|\geq i)$. 

It can be proven that the following convex expressions for $\mathcal{F}_\mathcal{B}^{(i-1)},\mathcal{P}^{(i)}$ and $\mathcal{O}^{(i)}$ (neglecting for sake of simplicity the dependence on $d^1,...,d^{i-1}$ or, equivalently, on $b_0$) hold \cite{cuzzolin14annals}:
\begin{equation}\label{eq:elements}
\begin{array}{lll}
\mathcal{F}_\mathcal{B}^{(i-1)} & = & \Big\{ b = k b_0 + (1-k) b', b'\in Cl(b_A, |A|\geq i) \Big\} = k b_0 + (1-k) Cl(b_A, |A|\geq i), \\ \\ \mathcal{P}^{(i)} & = & k b_0 + (1 - k) Cl(b_A : |A| = i),\\ \\ \mathcal{O}^{(i)} & = & k b_0 + (1 - k) b_\Theta,
\end{array}
\end{equation}
and we can write:
\[
\mathcal{D}_\mathcal{B}^{(i)} = Cl(\mathcal{O}^{(i)}, \mathcal{P}^{(i)}).
\]
As a consequence, the elements of the convex bundle decomposition of $\mathcal{B}$ possess a natural meaning in terms of belief values. In particular, ${\mathcal{P}}^{(1)}= {\mathcal{P}}$ is the set of all the Bayesian belief functions, while ${\mathcal{D}}^{(1)}$ is the collection of all the \emph{discounted} probabilities \cite{Shafer76}, i.e., belief functions of the form $(1-\epsilon) p + \epsilon b_\Theta$, with $0 \leq \epsilon \leq 1$ and $p \in \mathcal{P}$. 

B.f.s assigning mass to events of cardinality smaller than a certain size $i$ are called in the literature \emph{$i$-additive belief functions} \cite{miranda04ipmu}. The set $\mathcal{P}^{(i)}$ (\ref{eq:elements}) is nothing but the collection of all $i$-additive b.f.s. The $i$-th level base of $\mathcal{B}$ can then be interpreted as the region of all `discounted' $i$-additive belief functions.

\section{Global geometry of Dempster's rule} \label{sec:global-geometry-dempster}

Once established the geometrical properties of belief functions as set functions, we take a step forward and analyze the behaviour
of the rule of combination in the framework of the belief space.

\subsection{Commutativity} \label{sec:commutativity}

In \cite{cuzzolin08smcc} we proved the following fundamental results on the relationship between Dempster's sum and the convex combination of belief functions as points of a Cartesian space.

\begin{proposition}\label{pro:concom}
Consider a belief function $b$ and a collection of b.f.s $\{ b_1,\cdots , b_n \}$ such that at least one of them is combinable with $b$. If $\sum_i \alpha_i = 1$, $\alpha_i\geq 0$ for all $i = 1,...,n$\footnote{Here $n$ is only an index, with nothing to do with the cardinality of the frame on which the belief functions are defined.} then
\[
b\oplus \sum_i \alpha_i b_i = \sum_i \beta_i (b\oplus b_i),
\]
where
\begin{equation}\label{eq:beta}
\beta_i = \frac{\alpha_i k(b,b_i)}{\sum_{j=1}^n \alpha_j k(b,b_j)}
\end{equation}
and $k(b,b_i)$ is the normalization factor for the sum $b \oplus b_i$: $k(b,b_i) \doteq \sum_{A \cap B \neq \emptyset} m_b(A) m_{b_i}(B)$.
\end{proposition}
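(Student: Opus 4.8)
The plan is to work directly with the definition of Dempster's rule in terms of basic probability assignments and track how the normalization constant behaves under a convex combination of inputs. First I would set $b' \doteq \sum_i \alpha_i b_i$ and compute its b.p.a.: since the Moebius transform (\ref{eq:moebius}) is linear, $m_{b'}(B) = \sum_i \alpha_i m_{b_i}(B)$ for every $B \subseteq \Theta$. The key quantity to control is the non-normalized mass that $b \oplus b'$ assigns to a set $A$, namely $\mu(A) \doteq \sum_{C \cap B = A} m_b(C) m_{b'}(B)$. Substituting the expression for $m_{b'}$ and exchanging the (finite) sums gives $\mu(A) = \sum_i \alpha_i \sum_{C \cap B = A} m_b(C) m_{b_i}(B) = \sum_i \alpha_i \mu_i(A)$, where $\mu_i(A)$ is the corresponding non-normalized mass for $b \oplus b_i$. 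The same manipulation applied to $A = \emptyset$ shows that the total conflict mass of $b$ against $b'$ is $\sum_i \alpha_i (1 - k(b,b_i))$, so the normalization constant of $b \oplus b'$ is $k(b,b') = \sum_i \alpha_i k(b,b_i)$; in particular this is strictly positive precisely because at least one $b_i$ is combinable with $b$ and the $\alpha_i$ are non-negative summing to one, which is exactly the hypothesis ensuring $b \oplus b'$ exists.

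Next I would assemble these pieces. By definition $m_{b \oplus b'}(A) = \mu(A)/k(b,b')$ for $A \neq \emptyset$, and similarly $m_{b \oplus b_i}(A) = \mu_i(A)/k(b,b_i)$ whenever $b \oplus b_i$ exists. Writing $\beta_i = \alpha_i k(b,b_i) / \sum_j \alpha_j k(b,b_j)$ as in (\ref{eq:beta}), one checks that the $\beta_i$ are non-negative and sum to one, so $\sum_i \beta_i (b \oplus b_i)$ is a genuine convex combination, hence a belief function by Theorem \ref{the:convex}. Then for each $A \neq \emptyset$,
\[
\sum_i \beta_i\, m_{b \oplus b_i}(A) = \sum_i \frac{\alpha_i k(b,b_i)}{\sum_j \alpha_j k(b,b_j)} \cdot \frac{\mu_i(A)}{k(b,b_i)} = \frac{\sum_i \alpha_i \mu_i(A)}{\sum_j \alpha_j k(b,b_j)} = \frac{\mu(A)}{k(b,b')} = m_{b \oplus b'}(A),
\]
which means the two belief functions have identical b.p.a.s and therefore, by the Moebius correspondence, coincide. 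A small point of care: if some $b_i$ is \emph{not} combinable with $b$, then $b \oplus b_i$ is undefined, but in that case $k(b,b_i) = 0$ (its core is disjoint from that of $b$, so $\mu_i(\emptyset) = 1$), hence $\beta_i = 0$ and that term drops out of the sum; it is cleanest to restrict the index set to the combinable $b_i$ from the outset and note the formula for $\mu(A)$ is unaffected since those terms contribute only conflict mass to $b \oplus b'$ relative to $A=\emptyset$.

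I would also handle the degenerate subtlety around $A = \emptyset$ versus the normalization explicitly, since the identity $\mu(A) = \sum_i \alpha_i \mu_i(A)$ holds for \emph{all} $A$ including $\emptyset$, and it is the $\emptyset$ case that produces the denominator identity; I'd state this as a separate one-line computation to avoid conflating it with the $A \neq \emptyset$ case. The main obstacle is not any single hard step but bookkeeping: making sure the exchange of the two finite summations is justified (it is, trivially, being finite), that the normalization constants never vanish under the stated hypothesis, and that the edge case of non-combinable summands is dispatched cleanly rather than swept under the rug. Once the b.p.a. identity $m_{b \oplus b'}(A) = \sum_i \beta_i m_{b \oplus b_i}(A)$ is established for every non-empty $A$, the conclusion $b \oplus \sum_i \alpha_i b_i = \sum_i \beta_i (b \oplus b_i)$ follows immediately because a belief function is determined by its basic probability assignment (the Moebius inversion of Definition \ref{def:bel2}).
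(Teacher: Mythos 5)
Your proof is correct. Note that the paper does not actually reproduce a proof of Proposition \ref{pro:concom} — it only cites it from an external reference — but your argument is the natural one and almost certainly the intended one: pass to basic probability assignments via the linearity of the Moebius transform, observe that the unnormalized Dempster masses satisfy $\mu(A)=\sum_i\alpha_i\mu_i(A)$ for every $A$ (including $A=\emptyset$, which yields $k(b,\sum_i\alpha_i b_i)=\sum_i\alpha_i k(b,b_i)$), and then divide. Your handling of the edge cases — the vanishing of $\beta_i$ for non-combinable summands and the need for the total normalization constant to be strictly positive — is exactly the bookkeeping that makes the identity rigorous, and the concluding appeal to the bijectivity of the Moebius correspondence closes the argument cleanly.
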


Proposition \ref{pro:concom} can be used to prove that convex closure and Dempster's sum \emph{commute} \cite{cuzzolin08smcc}, i.e., the order of their action on a set of b.f.s can be swapped.

\begin{theorem}\label{the:commutativity}
$Cl$ and $\oplus$ commute in the belief space. Namely, if $b$ is combinable (in Dempster's sense) with $b_i,\;\forall i=1,...,n$ then:
\[
b \oplus Cl(b_1,\cdots,b_n) = Cl(b \oplus b_1,\cdots, b \oplus b_n).
\]
\end{theorem}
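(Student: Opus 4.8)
The plan is to deduce Theorem \ref{the:commutativity} as a direct corollary of Proposition \ref{pro:concom}, which already gives us the key algebraic identity for a single convex combination. The statement to prove is a set equality, so I would prove the two inclusions separately, though in fact both will follow from the same computation once we track the coefficients carefully.

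First I would establish the inclusion $b \oplus Cl(b_1,\dots,b_n) \subseteq Cl(b\oplus b_1,\dots,b\oplus b_n)$. Take an arbitrary point of the left-hand side; by Definition \ref{def:belief-space} and Equation (\ref{eq:convex-closure}) it has the form $b \oplus \big(\sum_i \alpha_i b_i\big)$ with $\alpha_i \ge 0$ and $\sum_i \alpha_i = 1$. Since $b$ is combinable with every $b_i$, in particular it is combinable with at least one of them, so Proposition \ref{pro:concom} applies and yields
\[
b\oplus \sum_i \alpha_i b_i = \sum_i \beta_i (b\oplus b_i),
\]
with $\beta_i$ given by (\ref{eq:beta}). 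The point is then to check that the $\beta_i$ form a legitimate set of convex coefficients: each $k(b,b_i) = \sum_{A\cap B\neq\emptyset} m_b(A)m_{b_i}(B)$ is non-negative, and since $b$ is combinable with $b_i$ for every $i$, each $k(b,b_i)$ is in fact strictly positive; hence the denominator $\sum_j \alpha_j k(b,b_j)$ is positive (as $\sum_j\alpha_j=1$, at least one $\alpha_j>0$), each $\beta_i\ge 0$, and $\sum_i \beta_i = 1$ by construction. Therefore $\sum_i \beta_i(b\oplus b_i)$ lies in $Cl(b\oplus b_1,\dots,b\oplus b_n)$, giving the first inclusion.

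For the reverse inclusion, take a point $\sum_i \beta_i(b\oplus b_i)$ of $Cl(b\oplus b_1,\dots,b\oplus b_n)$ with $\beta_i\ge 0$, $\sum_i\beta_i = 1$. I want to exhibit $\alpha_i\ge 0$ with $\sum_i\alpha_i=1$ such that the $\beta_i$ are recovered from the $\alpha_i$ via (\ref{eq:beta}); then Proposition \ref{pro:concom} again gives $\sum_i\beta_i(b\oplus b_i) = b\oplus\sum_i\alpha_i b_i$, which lies in $b\oplus Cl(b_1,\dots,b_n)$. The natural candidate is to invert (\ref{eq:beta}): set $\alpha_i \doteq \dfrac{\beta_i / k(b,b_i)}{\sum_j \beta_j / k(b,b_j)}$. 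Since all $k(b,b_i)>0$, these are well defined, non-negative, and sum to $1$; substituting them back into (\ref{eq:beta}) and simplifying recovers $\beta_i$, because the $\alpha_i$ were defined precisely so that $\alpha_i k(b,b_i) \propto \beta_i$. This closes the second inclusion and hence the theorem.

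The main obstacle, such as it is, is bookkeeping rather than conceptual: one must be careful that the combinability hypothesis ``$b$ combinable with $b_i$ for all $i$'' is genuinely used to guarantee $k(b,b_i)>0$ for every $i$ (not merely for one), since this positivity is what makes the coefficient transformation $\alpha \leftrightarrow \beta$ a well-defined bijection between the two simplices of coefficients. One should also note that Proposition \ref{pro:concom} as stated only requires \emph{at least one} $b_i$ to be combinable with $b$ for the identity to hold, but for the clean two-sided correspondence of convex coefficients we do need the full hypothesis of the theorem. With that caveat observed, the proof is essentially the observation that the map $\alpha \mapsto \beta$ of (\ref{eq:beta}) is a continuous bijection of the standard simplex onto itself, intertwining the parametrizations of $b\oplus Cl(b_1,\dots,b_n)$ and $Cl(b\oplus b_1,\dots,b\oplus b_n)$.
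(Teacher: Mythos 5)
Your proof is correct and follows essentially the same route as the paper's: both directions rest on Proposition \ref{pro:concom}, and your inversion formula $\alpha_i = (\beta_i/k(b,b_i))/\sum_j(\beta_j/k(b,b_j))$ is exactly the normalized solution the paper constructs for the reverse inclusion. Your explicit remark that the full hypothesis is needed to guarantee $k(b,b_i)>0$ for every $i$ (so that the coefficient map is a genuine bijection of the simplex) is a welcome clarification of a point the paper leaves implicit, but it does not change the argument.
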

\begin{proof}

\emph{Sufficiency.}
We need to prove that if $b' \in b\oplus Cl(b_1,...,b_n)$ then $b' \in Cl(b\oplus b_{1},...,b\oplus b_{n})$. If $b'= b \oplus \sum_{i=1}^n \alpha_i b_i$, $\sum_i \alpha_i = 1$, $\alpha_i \geq 0$, then by Proposition \ref{pro:concom}:
\[
b' = \sum_{i=1}^n \beta_i b \oplus b_i \in Cl(b\oplus b_{1},...,b\oplus b_{n}),
\]
as $\beta_i$ given by Equation (\ref{eq:beta}) is such that $\sum_i \beta_i = 1$, $\beta_i \geq 0$ for all $i$.

\emph{Necessity.}
We have to show that if $b' \in Cl(b\oplus b_{1}, ..., b\oplus b_{n})$ then $b' \in b\oplus Cl(b_{1},...,b_{n})$.
An arbitrary element of $Cl(b\oplus b_{1}, ..., b\oplus b_{n})$ has the form:
\begin{equation} \label{eq:tmp4}
\sum_{i = 1}^n \alpha'_i b\oplus b_{i}
\end{equation}
for some set of coefficients $\alpha'_i$ such that $\sum_i \alpha'_i = 1$, $\alpha'_i \geq 0$.\\ On the other hand, elements of $b \oplus \Big( \sum_{i = 1}^n \alpha_i b_{i} \Big)$, $\sum_i \alpha =1$, $\alpha \geq 0$ for all $i$, have the form:
\[
\sum_{i = 1}^n \beta_i b\oplus b_{i}
\]
where $\beta_i$ is given by Equation (\ref{eq:beta}).

Hence, any belief function $b'$ of the form (\ref{eq:tmp4}) with coefficients $\{ \alpha'_i,\;i = 1,...,n\}$ belongs to the region $b \oplus Cl(b_{1} , \cdots , b_{n})$ iff we can find another collection of coefficients $\{ \alpha_i,\;i = 1,...,n\}$, $\sum_i \alpha_i = 1$, such that the following constraints are met:
\begin{equation}\label{eq:tmp5}
\alpha'_i = \beta_i = \frac{\alpha_i k_i}{\sum_i \alpha_i k_i} \hspace{5mm} \forall i = 1,...,n,
\end{equation}
where $k_i = k(b,b_i)$ for all $i$.

An admissible solution to the system of equations (\ref{eq:tmp5}) is $\tilde{\alpha}_i \doteq \beta_i / k_i$, as $\forall i$ $\beta_i = \beta_i /\sum_i \beta_i = \beta_i$ (since the $\beta_i$s sum to one), and system (\ref{eq:tmp5}) is satisfied up to the normalization constraint.\\ We can further normalize the solution by setting:
\[
\alpha_i = \tilde{\alpha_i}/\sum_{j} \tilde{\alpha_{j}} = \frac{\beta_i}{k_i \sum_{j} (\frac{\beta_{j}}{k_{j}})},
\] 
for which (\ref{eq:tmp5}) is still met.
\end{proof}

\subsection{Conditional subspaces} \label{sec:conditional-subspaces}

The fact that orthogonal sum and convex closure commute is a powerful tool, for it provides us with a simple language in which to express the geometric interpretations of the notions of combinability and conditioning.

\begin{definition} \label{def:conditional-subspace}
The \emph{conditional subspace} $\langle b \rangle$ associated with a belief function $b$ is the set of all the belief functions \emph{conditioned by $b$}, namely
\begin{equation}
\langle b \rangle \doteq \big\{ b\oplus b', \forall b'\in{\mathcal{B}}\;s.t.\;\exists\;b\oplus b'
\big\}.
\end{equation}
\end{definition}
In other words, the conditional subspace $\langle b \rangle$ is the possible `future' of the imprecise knowledge state encoded by a belief function $b$, under the assumption that evidence combination follows Dempster's rule.

Since belief functions are not necessarily combinable, we first need to understand the geometry of the notion of combinability.
\begin{definition} \label{def:nc}
The \emph{non-combinable region} $NC(b)$ associated with a belief function $b$ is the collection of all the b.f.s which are not combinable with $b$, namely:
\[
NC(b)\doteq \{b':\nexists b'\oplus b\} = \{ b' : k(b,b') = 0 \}.
\]
\end{definition}
The results of Section \ref{sec:simplex} once again allow us to understand the shape of this set. As a matter of fact the non-combinable region $NC(b)$ of $b$ is also a simplex, whose vertices are the categorical belief functions related to subsets disjoint from the core $\mathcal{C}_b$ of $b$ (the union of its f.e.s) \cite{cuzzolin08smcc}.
\begin{proposition}
$NC(b) = Cl(b_A, A \cap \mathcal{C}_b = \emptyset)$.
\end{proposition}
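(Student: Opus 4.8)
The statement to prove is that the non-combinable region $NC(b)$ equals the convex closure $Cl(b_A : A \cap \mathcal{C}_b = \emptyset)$ of those categorical belief functions focused on subsets disjoint from the core of $b$. The plan is to characterize $NC(b)$ directly via the combinability criterion already recalled in the excerpt, and then to apply the simplicial decomposition machinery of Section~\ref{sec:simplex} (Theorem~\ref{the:decompo} in particular) to identify the resulting set with a convex closure of categorical belief functions.

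First I would recall the combinability criterion: two belief functions $b$ and $b'$ fail to have a Dempster's sum exactly when their cores are disjoint, $\mathcal{C}_b \cap \mathcal{C}_{b'} = \emptyset$ (Proposition on page with the core-disjointness characterization), equivalently when $k(b,b') = \sum_{A\cap B\neq\emptyset} m_b(A)m_{b'}(B) = 0$. Since $\mathcal{C}_{b'} = \bigcup_{B : m_{b'}(B)\neq 0} B$, the condition $\mathcal{C}_b \cap \mathcal{C}_{b'} = \emptyset$ is equivalent to: every focal element $B$ of $b'$ satisfies $B \cap \mathcal{C}_b = \emptyset$. Hence $NC(b) = \{ b' : \mathcal{E}_{b'} \subseteq \mathcal{X} \}$ where $\mathcal{X} = \{ A \subseteq \Theta : A \cap \mathcal{C}_b = \emptyset \}$ is the collection of all subsets of $\Theta$ disjoint from the core.

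Now I would invoke Theorem~\ref{the:decompo}, which states precisely that for any collection $\mathcal{X} \subseteq 2^{2^\Theta}$ the set $\{ b : \mathcal{E}_b \subseteq \mathcal{X} \}$ of belief functions whose focal elements all lie in $\mathcal{X}$ is closed and convex, and equals $Cl(\{ b_A : A \in \mathcal{X} \})$. Applying this with $\mathcal{X} = \{ A \subseteq \Theta : A \cap \mathcal{C}_b = \emptyset \}$ immediately yields
\[
NC(b) = Cl\big( b_A : A \cap \mathcal{C}_b = \emptyset \big),
\]
which is the claimed identity. One should only double-check the edge cases: that $\mathcal{X}$ is non-empty (it always contains at least one non-empty subset as long as $\mathcal{C}_b \neq \Theta$, which holds since $b$ is a genuine belief function with a proper core; if $\mathcal{C}_b = \Theta$ then $NC(b) = \emptyset$, consistent with the empty convex closure) and that the empty set, while formally disjoint from $\mathcal{C}_b$, is excluded since $m_{b'}(\emptyset) = 0$ for all belief functions, so $b_\emptyset$ never appears as a vertex.

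\textbf{Main obstacle.} There is no deep obstacle here: the result is essentially a corollary of the core-disjointness characterization of non-combinability combined with the simplicial structure theorem (Theorem~\ref{the:decompo}). The only point requiring a little care is the bookkeeping translation between ``$\mathcal{C}_b \cap \mathcal{C}_{b'} = \emptyset$'' and ``every focal element of $b'$ is disjoint from $\mathcal{C}_b$'' — this uses that a union of sets is disjoint from $\mathcal{C}_b$ iff each set in the union is — together with confirming that the convex-closure representation of Theorem~\ref{the:decompo} indeed captures \emph{all} such $b'$ and not merely those whose core is exactly contained in some fixed subset. Once the set $\mathcal{X}$ is correctly identified, the rest is a direct citation.
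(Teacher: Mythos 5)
Your proof is correct and follows exactly the route the paper intends: the text preceding the proposition explicitly points to ``the results of Section~\ref{sec:simplex}'', i.e., the identification of $NC(b)$ with $\{b' : \mathcal{E}_{b'} \subseteq \mathcal{X}\}$ for $\mathcal{X} = \{A : A \cap \mathcal{C}_b = \emptyset\}$ via the core-disjointness criterion, followed by an application of Theorem~\ref{the:decompo}. The edge-case remarks (emptiness of $\mathcal{X}$ when $\mathcal{C}_b = \Theta$, exclusion of $b_\emptyset$) are sound and complete the argument.
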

Clearly, as the vertices of a simplex are affinely independent (see Footnote \ref{foot:affinely-independent}), the dimension of the linear space generated by $NC(b)$ is $2^{|\Theta\setminus {\mathcal{C}}_b|}-2$. Using Definition \ref{def:nc} we can write:
\[
\langle b \rangle = b\oplus ({\mathcal{B}}\setminus NC(b)) = b\oplus\{b': {\mathcal{C}}_{b'}\cap {\mathcal{C}}_b \neq \emptyset\},
\]
where $\setminus$ denotes, as usual, the set-theoretic difference $A\setminus B = A\cap \overline{B}$.

Unfortunately, ${\mathcal{B}}\setminus NC(b)$ does not satisfy Theorem \ref{the:decompo}: for a b.f. $b'$ to be compatible with $b$ it suffices for it to have \emph{one} focal element intersecting the core ${\mathcal{C}}_b$, not necessarily all of them. Geometrically, this means that $\mathcal{B}\setminus NC(b)$ \emph{is not a simplex}. Hence, we cannot apply the commutativity results of Section \ref{sec:commutativity} directly to $\mathcal{B}\setminus NC(b)$ to find the shape of the conditional subspace.

Nevertheless, $\langle b \rangle$ can still be expressed as a Dempster's sum of $b$ and a simplex.

\begin{definition} \label{def:compatible-simplex}
The \emph{compatible simplex} $C(b)$ associated with a belief function $b$ is the collection of all
b.f.s whose focal elements are in the core of $b$:
\[
C(b)\doteq \{b' : {{\mathcal{C}}}_{b'} \subseteq {{\mathcal{C}}}_b\} = \{ b' : \mathcal{E}_{b'}
\subseteq 2^{\mathcal{C}_b} \}.
\]
\end{definition}

From from Theorem \ref{the:decompo} it follows that
\begin{corollary}$C(b) = Cl(b_A: A\subseteq {{\mathcal{C}}}_b)$.
\end{corollary}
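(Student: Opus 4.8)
The plan is to obtain this as an immediate instance of Theorem~\ref{the:decompo}, applied to the particular collection of subsets $\mathcal{X} = 2^{\mathcal{C}_b}$, the power set of the core of $b$. The only point that needs to be checked before invoking that theorem is that the defining condition of $C(b)$, namely $\mathcal{C}_{b'} \subseteq \mathcal{C}_b$, is equivalent to the focal-element condition $\mathcal{E}_{b'} \subseteq 2^{\mathcal{C}_b}$ required in the hypothesis of Theorem~\ref{the:decompo}. This equivalence is essentially already recorded in Definition~\ref{def:compatible-simplex}: since $\mathcal{C}_{b'} = \bigcup_{A \in \mathcal{E}_{b'}} A$ by Equation~(\ref{eq:core}), the union of the focal elements of $b'$ is contained in $\mathcal{C}_b$ if and only if each individual focal element $A \in \mathcal{E}_{b'}$ is contained in $\mathcal{C}_b$, i.e. if and only if every focal element belongs to $2^{\mathcal{C}_b}$.

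With this identification in hand, the remaining step is a direct substitution. By Theorem~\ref{the:decompo}, for any collection $\mathcal{X} \subset 2^{2^\Theta}$ one has $\{ b' : \mathcal{E}_{b'} \subset \mathcal{X} \} = Cl(\{ b_A : A \in \mathcal{X} \})$. Taking $\mathcal{X} = 2^{\mathcal{C}_b}$ yields
\[
C(b) = \big\{ b' : \mathcal{E}_{b'} \subseteq 2^{\mathcal{C}_b} \big\} = Cl\big( \{ b_A : A \in 2^{\mathcal{C}_b} \} \big) = Cl\big( b_A : A \subseteq \mathcal{C}_b \big),
\]
which is the claimed identity. (One should keep the standing convention that $A$ ranges over the non-empty subsets of $\mathcal{C}_b$, since $b_\emptyset$ is not a belief function; this does not affect the argument, as $2^{\mathcal{C}_b}$ here is understood as the collection of admissible focal elements contained in $\mathcal{C}_b$, exactly as in Theorem~\ref{the:decompo}.)

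I do not anticipate any real obstacle here: the result is a one-line corollary, and the only thing worth stating explicitly is the translation between the ``core'' formulation and the ``focal elements'' formulation of membership in $C(b)$, after which Theorem~\ref{the:decompo} applies verbatim. If one wanted to be fully self-contained one could also remark that $C(b)$ is thereby a simplex, its vertices $\{ b_A : A \subseteq \mathcal{C}_b \}$ being affinely independent for the same reason invoked after Corollary~\ref{cor:belief-space} (the categorical belief functions on any fixed collection of subsets are linearly, hence affinely, independent), and that its affine dimension is $2^{|\mathcal{C}_b|} - 2$; but this is not required for the statement as phrased.
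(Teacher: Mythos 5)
Your proposal is correct and follows exactly the route the paper takes: the paper states the corollary as an immediate consequence of Theorem \ref{the:decompo} applied with $\mathcal{X} = 2^{\mathcal{C}_b}$, relying on the equivalence $\mathcal{C}_{b'} \subseteq \mathcal{C}_b \Leftrightarrow \mathcal{E}_{b'} \subseteq 2^{\mathcal{C}_b}$ already recorded in Definition \ref{def:compatible-simplex}. Your explicit verification of that equivalence via Equation (\ref{eq:core}) is the only substantive step, and it is carried out correctly.
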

The compatible simplex $C(b)$ is only a \emph{proper} subset of the collection of b.f.s combinable with $b$, ${\mathcal{B}}\setminus NC(b)$ --- nevertheless, \emph{it contains all the relevant information}. As a matter of fact:
\begin{theorem} \label{the:compa}
The conditional subspace $\langle b \rangle$ associated with a belief function $b$ coincides with the orthogonal sum of $b$ and the related compatible simplex $C(b)$, namely:
\[
\langle b \rangle = b \oplus C(b).
\]
\end{theorem}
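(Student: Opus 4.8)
The plan is to prove the two inclusions $\langle b \rangle \supseteq b \oplus C(b)$ and $\langle b \rangle \subseteq b \oplus C(b)$ separately.

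The first inclusion is immediate. By Definition \ref{def:compatible-simplex} every $b' \in C(b)$ has all its focal elements contained in $\mathcal{C}_b$, so its core $\mathcal{C}_{b'}$ is a non-empty subset of $\mathcal{C}_b$; hence $\mathcal{C}_{b'} \cap \mathcal{C}_b = \mathcal{C}_{b'} \neq \emptyset$ and $b'$ is combinable with $b$. Therefore $C(b) \subseteq \mathcal{B} \setminus NC(b)$, whence $b \oplus C(b) \subseteq b \oplus (\mathcal{B} \setminus NC(b)) = \langle b \rangle$.

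For the reverse inclusion the key step --- and the main obstacle, since $\mathcal{B} \setminus NC(b)$ is not a simplex so Theorem \ref{the:commutativity} cannot be applied to it directly --- is a ``projection onto the core'' lemma: for any $b'$ combinable with $b$ there is a belief function $b'' \in C(b)$ with $b \oplus b' = b \oplus b''$. I would construct $b''$ by collapsing each focal element $B$ of $b'$ onto $B \cap \mathcal{C}_b$ and discarding (then renormalising) the mass sitting on focal elements disjoint from $\mathcal{C}_b$; explicitly, $m_{b''}(A) = Z^{-1}\sum_{B : B \cap \mathcal{C}_b = A} m_{b'}(B)$ for $\emptyset \neq A \subseteq \mathcal{C}_b$, with $Z = \sum_{B : B \cap \mathcal{C}_b \neq \emptyset} m_{b'}(B)$, which is strictly positive precisely because $b'$ is combinable with $b$ (a focal element of $b'$ meeting some focal element of $b$ meets $\mathcal{C}_b$). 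Since every focal element $A_i$ of $b$ satisfies $A_i \subseteq \mathcal{C}_b$, one has $A_i \cap B = A_i \cap (B \cap \mathcal{C}_b)$, and focal elements $B$ of $b'$ with $B \cap \mathcal{C}_b = \emptyset$ intersect every $A_i$ in $\emptyset$. Substituting into Dempster's rule (\ref{eq:dempster}) then shows that for each $A \neq \emptyset$ the unnormalised numerator $\sum_{A_i \cap C = A} m_b(A_i)\,m_{b''}(C)$ of $b \oplus b''$ equals $Z^{-1}$ times the corresponding numerator of $b \oplus b'$; a common positive scaling of the numerators disappears under normalisation, so $b \oplus b' = b \oplus b''$.

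With the lemma in hand the argument closes at once: an arbitrary element of $\langle b \rangle$ has the form $b \oplus b'$ with $b'$ combinable with $b$, and the lemma rewrites it as $b \oplus b''$ with $b'' \in C(b)$, so $b \oplus b' \in b \oplus C(b)$; hence $\langle b \rangle \subseteq b \oplus C(b)$, and together with the first inclusion this proves $\langle b \rangle = b \oplus C(b)$. As an alternative ending, after the lemma one may expand $b'' = \sum_{A \subseteq \mathcal{C}_b} m_{b''}(A)\, b_A$ and apply Proposition \ref{pro:concom}, or note that $C(b) = Cl(b_A : A \subseteq \mathcal{C}_b)$ is a simplex all of whose vertices are combinable with $b$ and invoke Theorem \ref{the:commutativity}, obtaining the sharper description $\langle b \rangle = Cl(b \oplus b_A : A \subseteq \mathcal{C}_b)$ along the way.
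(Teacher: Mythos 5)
Your proof is correct and follows essentially the same route as the paper's: both hinge on replacing an arbitrary combinable $b'$ by a belief function $b''$ supported on $2^{\mathcal{C}_b}$ obtained by intersecting the focal elements of $b'$ with the core of $b$, so that $b \oplus b' = b \oplus b''$. Your version is in fact slightly more careful than the paper's, since you explicitly discard and renormalise the mass of focal elements of $b'$ that are disjoint from $\mathcal{C}_b$ (a case the paper's construction glosses over) and you verify the equality of the two Dempster sums rather than merely asserting it.
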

\begin{proof}
Let us denote by $\mathcal{E}_{b} = \{A_i,i\}$ and $\mathcal{E}_{b'} = \{B_j,j\}$ the lists of focal elements of $b$ and $b'$, respectively. By definition $A_i = A_i \cap \mathcal{C}_b$ so that $B_j \cap A_i = B_j \cap (A_i \cap \mathcal{C}_b) = (B_j\cap\mathcal{C}_b)\cap A_i$. Once defined a new belief function $b^{''}$ with focal elements $\{ B'_k,\;k = 1,...,m \} \doteq \{B_j \cap \mathcal{C}_b,\; j = 1,...,|\mathcal{E}_{b'}| \}$ (note that $m\leq |\mathcal{E}_{b'}|$ since some intersections may coincide) and basic probability assignment
\[
m_{b^{''}}(B'_k) = \sum_{j : B_j \cap \mathcal{C}_b = B'_k} m_{b'}(B_j)
\]
we have that $b \oplus b' = b \oplus b^{''}$.
\end{proof}

We are now ready to understand the convex geometry of conditional subspaces. From Theorems \ref{the:decompo} and \ref{the:compa} it follows that:
\begin{corollary}\label{cor:cond}
The conditional subspace $\langle b \rangle$ associated with a belief function $b$ is the convex closure of the orthogonal sums involving $b$ and all the categorical belief functions compatible with it. Namely:
\[
\langle b \rangle = Cl(b\oplus b_A, \forall A\subseteq {{\mathcal{C}}}_b).
\]
\end{corollary}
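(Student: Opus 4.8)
The plan is to chain together the three results already available: Theorem~\ref{the:compa}, the vertex characterization of Theorem~\ref{the:decompo}, and the commutativity of $Cl$ and $\oplus$ proved in Theorem~\ref{the:commutativity}.

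First I would apply Theorem~\ref{the:compa} to replace the conditional subspace by $\langle b \rangle = b \oplus C(b)$, trading the non-simplicial set $\mathcal{B}\setminus NC(b)$ of all belief functions combinable with $b$ for the genuine simplex $C(b)$ while, by that theorem, losing no relevant information. Next, since by Definition~\ref{def:compatible-simplex} we have $C(b) = \{ b' : \mathcal{E}_{b'} \subseteq 2^{\mathcal{C}_b} \}$, Theorem~\ref{the:decompo} applied with $\mathcal{X} = 2^{\mathcal{C}_b}$ yields the convex decomposition $C(b) = Cl(b_A : A \subseteq \mathcal{C}_b)$, whence $\langle b \rangle = b \oplus Cl(b_A : A \subseteq \mathcal{C}_b)$.

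The remaining step is to push $b\oplus$ inside the convex closure by invoking Theorem~\ref{the:commutativity}. The one hypothesis that must be checked is that $b$ is Dempster-combinable with every categorical belief function $b_A$ with $\emptyset \neq A \subseteq \mathcal{C}_b$; this is immediate, since the core of $b_A$ is $A$ itself and $A \cap \mathcal{C}_b = A \neq \emptyset$, so no such vertex lies in $NC(b)$. Theorem~\ref{the:commutativity} then gives $b \oplus Cl(b_A : A \subseteq \mathcal{C}_b) = Cl(b \oplus b_A : A \subseteq \mathcal{C}_b)$, which is exactly the asserted identity.

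Since every ingredient has already been established, there is no substantial obstacle here; the only point worth stating explicitly is the combinability verification above, which is what licenses the application of Theorem~\ref{the:commutativity}, whose commutativity statement was proved only under the assumption that $b$ is combinable with each of the belief functions whose convex closure is taken.
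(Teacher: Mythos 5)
Your proposal is correct and follows essentially the same route as the paper, which derives the corollary by combining Theorem~\ref{the:compa} ($\langle b \rangle = b \oplus C(b)$) with the simplicial description $C(b) = Cl(b_A : A \subseteq \mathcal{C}_b)$ from Theorem~\ref{the:decompo} and then applying the commutativity of $\oplus$ with $Cl$ from Theorem~\ref{the:commutativity}. Your explicit check that $b$ is combinable with every categorical $b_A$ for $\emptyset \neq A \subseteq \mathcal{C}_b$ (since $\mathcal{C}_{b_A} = A$ meets $\mathcal{C}_b$) is a detail the paper leaves implicit, and it is exactly the right hypothesis to verify before invoking Theorem~\ref{the:commutativity}.
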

Note that, since $b \oplus b_{{\mathcal{C}}_b} = b$ (where $b_{{\mathcal{C}}_b}$ is the categorical belief function focused on the core of $b$), $b$ is always one of the vertices of $\langle b \rangle$. Furthermore, $\langle b \rangle \subseteq C(b)$, since the core of a belief function $b$ is a monotone function on the partially ordered set $(\mathcal{B},\geq_{\oplus})$, namely \cite{Shafer76} $\mathcal{C}_{b\oplus b'} = \mathcal{C}_b \cap \mathcal{C}_{b'} \subseteq \mathcal{C}_b$. 

Furthermore:
\begin{equation}\label{eq:dim}
\dim (\langle b \rangle) = 2^{|{\mathcal{C}}_b|} - 2,
\end{equation}
as the dimension of the linear space generated by $\langle b \rangle$ is simply the cardinality of $C(b)$ (note that $\emptyset$ is not included) minus 1. We can observe that, in general:
\[
\dim(NC(b)) + \dim(\langle b \rangle) \neq \dim(\mathcal{B}).
\]

\section{Pointwise geometry of Dempster's rule} \label{sec:pointwise-geometry-dempster}

Corollary \ref{cor:cond} depicts, in a sense, the \emph{global} behavior of the rule of combination in the belief space, as it describes the form of the collection of all the possible outcomes of the combination of new evidence with a given belief function. We still do not know understand the \emph{pointwise} geometric behavior of $\oplus$, i.e., how the location (in the belief space $\mathcal{B}$) of a Dempster's sum $b_1 \oplus b_2$ is related to that of the belief functions $b_1,b_2$ to combine. In this Section we will analyze the simple case of a binary frame $\Theta = \{x,y\}$, and recall the recent general results on the matter \cite{cuzzolin04smcb}.

\subsection{Binary case} \label{sec:example-binary-dempster}

Given two belief functions $b_1 = [m_1(x), m_1(y)]'$ and $b_2 = [m_2(x), m_2(y)]'$ defined on a binary frame $\Theta = \{ x,y \}$, it is straightforward to derive the point $b_1 \oplus b_2 = [m(x), m(y)]'$ of the belief space $\mathcal{B}_2$ which corresponds to their orthogonal sum:

\begin{equation} \label{eq:dempster2}
\begin{array}{l}
\displaystyle m(x) = 1 - \frac{(1 - m_1(x))(1 - m_2(x))}{1 - m_1(x) m_2(y) - m_1(y) m_2(x) }\\ \\ \displaystyle m(y) = 1 - \frac{(1 - m_1(y))(1 - m_2(y))}{1 - m_1(x) m_2(y) - m_1(y) m_2(x)}.
\end{array}
\end{equation}

Let us fix the first belief function $b_1$, and analyze the behaviour of $b_1 \oplus b_2$ as a function of of $b_2$ (or, equivalently, of the two variables $m_2(x), m_2(y)$). If we assume $m_2(x)$ constant in Equation (\ref{eq:dempster2}), the combination $b_1 \oplus b_2 = [m(x), m(y)]'$ describes a line segment in the belief space. Analogously, if we keep $m_2(y)$ constant the combination describes a different segment.\\ These facts are illustrated in Figure \ref{fig:dempster}.

\begin{figure}[ht!]
\includegraphics[width = 0.65 \textwidth]{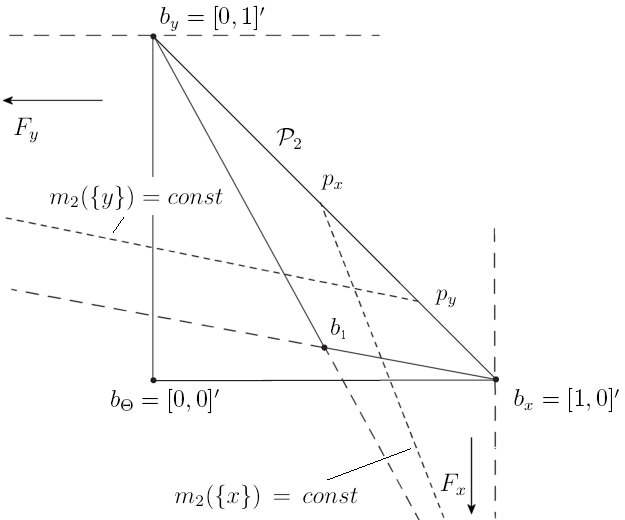}
\caption{\label{fig:dempster} Pointwise geometrical representation of Dempster's rule in the binary belief space $\mathcal{B}_2$.}
\end{figure}

As we know from Section \ref{sec:global-geometry-dempster}, the region of all orthogonal sums involving $b_1$ is a triangle whose sides are the probability simplex $\mathcal{P}$, the locus $\{ b = b_1 \oplus b_2 \; : \; m_2(\{ x \}) = 0 \}$ and its dual $\{ b = b_1 \oplus b_2 \; : \; m_2(\{y\}) = 0 \}$. In other words, the collection of all the belief functions obtainable from $b_1$ by Dempster's combination of additional evidence is:
\[
\langle b_1 \rangle = Cl(\{ b_1, b_1 \oplus b_{x}, b_1 \oplus b_{y} \}) = Cl(\{ b_1, b_{x} = [1,0]', b_{y} = [0,1]' \})
\]
(where $Cl$ as usual denotes the convex closure operator (\ref{eq:convex-closure})), confirming Corollary \ref{cor:cond}.

\subsection{Foci of a conditional subspace in the binary case} \label{sec:foci}

Indeed, collections of Dempster's combinations of the form $b_1 \oplus b_2$ with $m_2(\{ x \}) = const$ are intersections with the conditional subspace $\langle b_1 \rangle$ of lines all passing through a `focal' point $F_{x}$ outside the belief space . Dually, sums $b_1\oplus b_2$ with $m_2(\{ y \}) = const$ lie on a set of lines all passing through a twin point $F_{y}$ (see Figure \ref{fig:dempster}).\\ The coordinates of $F_{x}$ are obtained by simply intersecting the line associated with $m_2(\{x\}) = 0$ with that associated with $m_2(\{x\}) = 1$ -- dually for $F_{y}$. We get:

\begin{equation}\label{eq:foci}
\begin{array}{cc}
\displaystyle F_{x} = \bigg [ 1, - \frac{m_{1}(\Theta)}{m_{1}(\{x\})} \bigg]' & \displaystyle F_{y} = \bigg[ -
\frac{m_{1}(\Theta)}{m_{1}(\{y\})}, 1 \bigg ] '.
\end{array}
\end{equation}
We call these points \emph{foci} of the conditional subspace generated by $b_1$ (see Figure \ref{fig:dempster} again). By Equation (\ref{eq:foci}) it follows that:

\begin{proposition}
$\lim_{b_1 \rightarrow b_\Theta} F_{x/y} =  \infty$, $\lim_{b_1 \rightarrow p, \; p \in{\mathcal{P}}} F_{x/y} = b_{x/y}$.
\end{proposition}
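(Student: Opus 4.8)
The statement to prove is the limiting behavior of the foci $F_x, F_y$ from Equation (\ref{eq:foci}), namely that $F_{x/y}\to\infty$ as $b_1\to b_\Theta$ (the vacuous belief function) and $F_{x/y}\to b_{x/y}$ as $b_1$ tends to a probability $p\in\mathcal{P}$.

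The plan is to read off both limits directly from the closed-form expressions $F_x = [1, -m_1(\Theta)/m_1(\{x\})]'$ and $F_y = [-m_1(\Theta)/m_1(\{y\}), 1]'$, using the parametrization of the binary belief space. On $\Theta_2 = \{x,y\}$ a belief function $b_1$ is a point $[m_1(\{x\}), m_1(\{y\})]'$ in the triangle $\mathcal{B}_2$ with $m_1(\{x\})\geq 0$, $m_1(\{y\})\geq 0$, $m_1(\{x\}) + m_1(\{y\})\leq 1$, and $m_1(\Theta) = 1 - m_1(\{x\}) - m_1(\{y\})$. First I would treat the vacuous limit: $b_1 \to b_\Theta$ means $[m_1(\{x\}), m_1(\{y\})]'\to[0,0]'$, hence $m_1(\Theta)\to 1$ while $m_1(\{x\})\to 0^+$ (and likewise $m_1(\{y\})\to 0^+$). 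Then the second coordinate of $F_x$, namely $-m_1(\Theta)/m_1(\{x\})$, has numerator tending to $1$ and denominator tending to $0^+$, so it diverges to $-\infty$; thus $\|F_x\|\to\infty$, and symmetrically for $F_y$. This establishes $\lim_{b_1\to b_\Theta} F_{x/y} = \infty$.

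Next I would treat the probability limit: $b_1\to p$ with $p\in\mathcal{P}_2$ means $b_1$ approaches the segment $m_1(\{x\}) + m_1(\{y\}) = 1$, so $m_1(\Theta) = 1 - m_1(\{x\}) - m_1(\{y\})\to 0$. Writing $p = [p(x), p(y)]'$ with $p(x) + p(y) = 1$, if $p\neq b_x$ (i.e. $p(x) < 1$, equivalently $p(y) > 0$) then $m_1(\{x\})\to p(x)$ stays away from $0$ only when $p(x)>0$; the cleaner statement is that $-m_1(\Theta)/m_1(\{x\})\to -0/p(x) = 0$ whenever $p(x) > 0$, so $F_x \to [1, 0]' = b_x$. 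One should note the degenerate subcase $p = b_y$, where $m_1(\{x\})\to 0$ simultaneously with $m_1(\Theta)\to 0$; here the ratio is a $0/0$ form, but along the boundary $m_1(\{y\}) = 1 - m_1(\{x\}) - m_1(\Theta)$ one has $m_1(\Theta)/m_1(\{x\})$ which can be made to tend to $0$ by approaching $b_y$ from within the interior appropriately (e.g. along $m_1(\Theta) = o(m_1(\{x\}))$), consistent with the claimed limit $b_x$. I would remark that the limit is understood in this generic sense, matching the geometric picture in Figure \ref{fig:dempster} where the focus $F_x$ slides to the vertex $b_x$ as $b_1$ becomes Bayesian. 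The analogous computation with $x$ and $y$ interchanged gives $F_y \to b_y$.

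The main obstacle is not computational but interpretive: the foci are defined by intersecting the lines $\{m_2(\{x\}) = 0\}$ and $\{m_2(\{x\}) = 1\}$ inside the conditional subspace $\langle b_1\rangle$, and when $b_1$ degenerates to the vacuous function these two lines become parallel (the conditional subspace flattens), so the intersection point escapes to infinity — I would want to phrase the $\infty$ limit so that it is clear this is the correct geometric meaning (the direction of escape, along the vertical in the case of $F_x$, can also be recorded). Likewise, when $b_1$ becomes Bayesian the conditional subspace $\langle b_1\rangle$ collapses onto the probability simplex $\mathcal{P}_2$ and its two generating lines both pass through $b_{x/y}$, so the focus coincides with that vertex. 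Thus the bulk of the proof is the two elementary limit evaluations above, preceded by a sentence recalling that $m_1(\Theta) = 1 - m_1(\{x\}) - m_1(\{y\})$ and that $b_1\to b_\Theta$ resp. $b_1\to p\in\mathcal{P}$ translate into $m_1(\{x\}), m_1(\{y\})\to 0$ resp. $m_1(\Theta)\to 0$.
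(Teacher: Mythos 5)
Your proposal is correct and follows essentially the same route as the paper, which simply asserts that the proposition "follows by Equation (\ref{eq:foci})" — i.e., by reading the limits directly off the closed-form coordinates $F_{x} = [1, -m_{1}(\Theta)/m_{1}(\{x\})]'$ and its twin, exactly as you do. Your additional care with the degenerate subcase $p = b_{y}$ (the $0/0$ form) is a reasonable refinement of the same argument, consistent with the paper's own footnote noting that the Bayesian case is singular.
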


where $\infty$ denotes the point at infinity of the Cartesian plane.

In other words, the `categorical' probabilities $b_{x}$, $b_{y}$ can be interpreted as the foci of the probability simplex $\mathcal{P}$, seen as the conditional subspace generated by any probability measure $p$\footnote{The case in which $b_1 = p \in \mathcal{P}$ is a Bayesian belief function is a singular one, as pointed out in \cite{cuzzolin04smcb}.}.

\begin{figure}[ht!]
\includegraphics[width = 0.55 \textwidth]{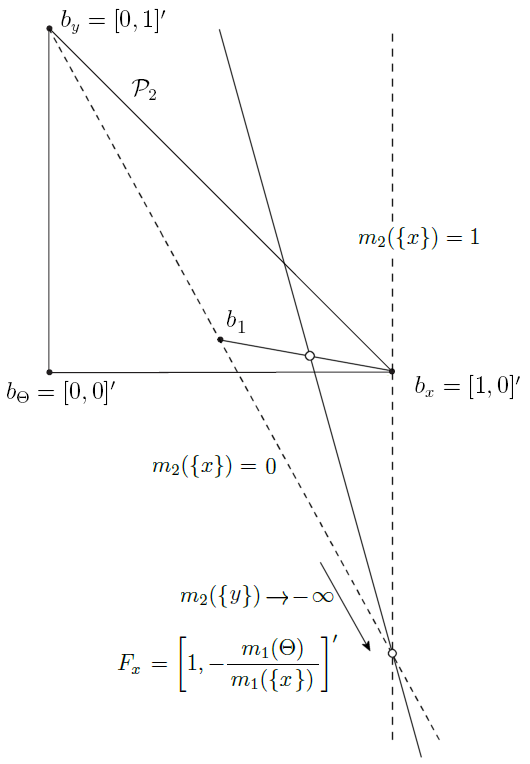}
\caption{\label{fig:focus-binary} The $x$ focus of a conditional subspace $\langle b_1 \rangle$ in the binary belief space $\mathcal{B}_2$ (when $m_1(\Theta) \neq 0$). The white circle in correspondence of $F_{x}$ indicates that the latter is a missing point for each of the lines representing images of constant mass loci.}
\end{figure}

\subsection{Probabilistic `coordinates' of conditional belief functions} \label{sec:probabilistic-coordinates}

From Figure \ref{fig:dempster} we can appreciate that to each point $b_2 \in \langle b_1 \rangle$ of the conditional subspace generated by a belief function $b_1$ (at least in the binary case considered so far) is uniquely attached a pair of probabilities, the intersections of the lines $l_x = \overline{F_{x} b_2}$ and $l_y = \overline{F_{y} b_2}$ with the probability simplex $\mathcal{P}$:
\begin{equation}
\begin{array}{cc}
p_x \doteq l_x \cap {\mathcal{P}} & p_y \doteq l_y \cap {\mathcal{P}}.
\end{array}
\end{equation}

\begin{definition}
We call $p_x$ and $p_y$ the \emph{probabilistic coordinates} of the conditional belief function $b_2|_{b_1} \in \langle b_1 \rangle$.
\end{definition}

In the binary case we can calculate their analytical form by simply exploiting their definition.\\ We obtain for $p_y$:      

\begin{equation} \label{eq:procord}
\begin{array}{ccc}
p_y & = & \displaystyle \bigg [ \frac{(1 - m_2(x) - m_2(y))(1 - m_1(y))(1 - m_2(y))}{(1 - m_2(x))(1 - m_2(y) - m_1(y) m_2(x)) - m_2(y) (1 - m_1(y))(1 - m_2(y))}, \\ \\ & & \displaystyle  \frac{m_1(y) (1 - m_2(x) - m_2(y))(1 - m_2(x))}{(1 - m_2(x)) (1 - m_2(y) - m_1(y) m_2(x)) - m_2(y) (1 - m_1(y)) (1 - m_2(y))} \bigg ]'.
\end{array}
\end{equation}
Similar expressions can be derived for $p_x$. 

Probabilistic coordinates have some remarkable properties. For instance:
\begin{proposition}
If $b_1 \in \mathcal{P}$ then $p_x = p_y = b_1 \oplus b_2 \in \mathcal{P}$ for all $b_2 \in \langle b_1 \rangle$.
\end{proposition}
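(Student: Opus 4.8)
The statement asserts that when $b_1 = p$ is itself a Bayesian belief function (a probability measure), the two probabilistic coordinates $p_x$ and $p_y$ of any conditional belief function $b_2|_{b_1} \in \langle b_1 \rangle$ coincide, and moreover both equal the Dempster combination $b_1 \oplus b_2$ — which is again a probability measure. The natural route is to combine two facts already established in the excerpt. First, by the Proposition immediately preceding this one (and by the general Bayes' theorem, Proposition \ref{pro:bayes}), if $b_1 \in \mathcal{P}$ then $b_1 \oplus b_2 \in \mathcal{P}$ whenever the sum exists, so $\langle b_1 \rangle \subseteq \mathcal{P}$; in fact $\langle b_1 \rangle$ is contained in the probability simplex, which in the binary case is the segment $\mathcal{P}_2 = Cl(b_x, b_y)$. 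Second, $p_x$ and $p_y$ are defined as the intersections of the lines $\overline{F_x\, b_2}$ and $\overline{F_y\, b_2}$ with $\mathcal{P}$, where $b_2$ here is shorthand for the conditional belief function $b_1 \oplus b_2 \in \langle b_1 \rangle$.

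First I would pin down the geometry of the foci when $b_1 \in \mathcal{P}$. By Equation (\ref{eq:foci}), $F_x = [1, -m_1(\Theta)/m_1(\{x\})]'$ and $F_y = [-m_1(\Theta)/m_1(\{y\}), 1]'$; but a Bayesian $b_1$ on $\Theta = \{x,y\}$ has $m_1(\Theta) = 0$, so the two foci degenerate to $F_x = [1,0]' = b_x$ and $F_y = [0,1]' = b_y$ — exactly the limiting statement $\lim_{b_1 \to p} F_{x/y} = b_{x/y}$ recorded just after (\ref{eq:foci}). Then I would argue that for a point $q = b_1 \oplus b_2$ lying on the segment $\mathcal{P}_2 = \overline{b_x b_y}$, the line $\overline{F_x\, q} = \overline{b_x\, q}$ is simply the line containing $\mathcal{P}_2$ itself (since both $b_x$ and $q$ lie on it), so its intersection with $\mathcal{P}$ is all of $\mathcal{P}_2$ — not a single point. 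This is the singular case flagged in the footnote referencing \cite{cuzzolin04smcb}: the construction of probabilistic coordinates breaks down because the defining lines become indeterminate. The honest resolution is to take the limit: for $b_1 \to p \in \mathcal{P}$ along non-Bayesian belief functions, $F_x \to b_x$ and $F_y \to b_y$ while the point $b_1 \oplus b_2$ varies continuously (by continuity of Dempster's rule, which is well-defined and smooth away from the non-combinable region), and one checks that both $p_x$ and $p_y$ converge to the common limit $b_1 \oplus b_2$.

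Concretely, I would substitute $m_1(\Theta) \to 0$, equivalently $m_1(y) \to 1 - m_1(x)$, into the closed-form expression (\ref{eq:procord}) for $p_y$ (and the analogous one for $p_x$), and verify by direct algebraic simplification that the limit equals $[\,m_1(x)\oplus m_2\,]$, i.e. the $x$- and $y$-components of $b_1 \oplus b_2$ given by (\ref{eq:dempster2}). The same computation run for $p_x$ yields the identical vector, giving $p_x = p_y = b_1 \oplus b_2$. The main obstacle is therefore twofold: conceptually, one must recognise and correctly handle the degeneracy of the foci (and hence of the lines $l_x, l_y$) when $b_1$ is Bayesian, making clear that the identity is to be read in the limiting sense; and computationally, one must push the rational-function simplification of (\ref{eq:procord}) under $m_1(\Theta) \to 0$ to the end and match it against (\ref{eq:dempster2}). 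A cleaner, coordinate-free alternative — which I would present alongside the computation — is this: since $\langle b_1 \rangle \subseteq \mathcal{P}$, any $b_2|_{b_1}$ already \emph{is} a probability measure, and $p_x$, $p_y$ are by definition the points of $\mathcal{P}$ cut out by lines through $b_2|_{b_1}$; the only point of $\mathcal{P}$ that such a line through a point already on $\mathcal{P}$ can distinguish is that point itself, forcing $p_x = p_y = b_2|_{b_1} = b_1 \oplus b_2$.
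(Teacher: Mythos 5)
Your proposal is correct and is essentially the argument the paper intends: the paper's entire ``proof'' is the remark that ``it suffices to look at Figure \ref{fig:dempster}'', i.e.\ exactly your observation that $m_1(\Theta)=0$ collapses the foci onto $b_x,b_y$ while Bayes' theorem (Proposition \ref{pro:bayes}) forces $\langle b_1\rangle\subseteq\mathcal{P}$, so the construction degenerates and the two coordinates must coincide with the point itself. You are in fact more careful than the paper, since you correctly flag that this is a singular case and must be read as a limit or checked directly; and the direct check does go through --- substituting $1-m_1(y)=m_1(x)$ into (\ref{eq:procord}) makes the denominator factor as $m_2(\Theta)$ times the Dempster normalisation constant, cancelling the $m_2(\Theta)$ in the numerator and leaving exactly the components of $b_1\oplus b_2$ from (\ref{eq:dempster2}). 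The one thing you should drop, or at least not present as rigorous, is the closing ``coordinate-free alternative'': it contradicts the degeneracy you yourself identified two paragraphs earlier. When $b_1\in\mathcal{P}$ the focus $F_x=b_x$ and the point $b_1\oplus b_2$ both lie on $\mathcal{P}$, so the line $\overline{F_x\,(b_1\oplus b_2)}$ \emph{is} the line supporting $\mathcal{P}$ and its intersection with $\mathcal{P}$ is the whole segment, not a single point; the claim that such a line ``distinguishes'' only the point itself is precisely what fails in the singular case. Only the limiting argument or the explicit computation carries the proof.
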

namely, when the conditioning b.f. is Bayesian the probabilistic coordinates of every conditional b.f. coincide (it suffices to look at Figure \ref{fig:dempster}). 

\begin{figure}[ht!] 
\includegraphics[width = 0.65 \textwidth]{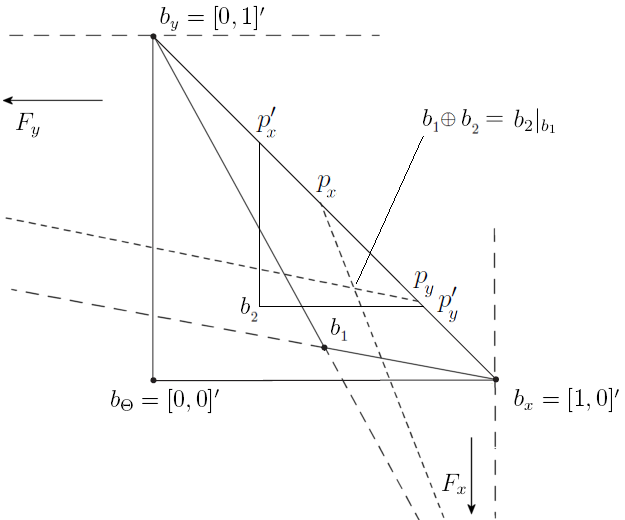}
\caption{Graphical construction of Dempster's orthogonal sum in $\mathcal{B}_2$. \label{fig:dempalg}}
\end{figure}

\subsection{Geometric construction of Dempster's rule}

Most importantly, probabilistic coordinates allow us to formulate a geometrical construction for the orthogonal sum of a pair of belief functions $b_1, b_2 \in \mathcal{B}_2$ defined on a binary frame (Figure \ref{fig:dempalg}). \vspace{3mm}

\textbf{Algorithm (binary case).} \vspace{3mm}

\begin{enumerate}
\item Compute the foci $F_{x}$, $F_{y}$ of the conditional subspace $\langle b_1 \rangle$;
\item project $b_2$ onto $\mathcal{P}$ along the orthogonal directions, obtaining two Bayesian belief functions $p'_x$ and $p'_y$;
\item combine $b_1$ with $p'_x$ and $p'_y$ to obtain the probabilistic coordinates $p_x$ and $p_y$ of $b_2$;
\item finally, draw the lines $\overline{p_x F_x}$ and $\overline{p_y F_y}$: their intersection is the desired orthogonal sum $b_1 \oplus
b_2$.
\end{enumerate}

This geometric algorithm's main feature is that the convex structure of conditional subspaces can be used to `decompose' the belief function $b_2$ to combine into a pair of probabilities (Bayesian b.f.s). Such probabilities are then combined with the first operand (a quite simpler operation than the orthogonal sum of two proper belief functions) and used in a simple intersection of lines to generate the desired Dempster's combination. A generalization of the above algorithm would therefore be rather appealing, as it would allow a significant reduction of the computational complexity of Dempster's rule.

Indeed, such a general geometric construction exists, and has been proven in \cite{cuzzolin04smcb}. In the general case, each conditional subspace $\langle b \rangle$ possesses a separate focus $\mathcal{F}_A(b)$ for each subset $A$ of the core $\mathcal{C}_b$ of the generating belief function $b$. \vspace{3mm}

\textbf{Algorithm (general case).} \vspace{3mm}

\begin{enumerate}
\item All the foci $\{ \mathcal{F}_A(b), A \subseteq \mathcal{C}_b \}$ of the subspace $\langle b \rangle$ conditioned by the first belief function $b$ are calculated as the affine subspaces\footnote{The affine space generated by a set of points $v_1,...,v_m$ is the set $\{\alpha_1 v_1 + ... + \alpha_m v_m, \; \sum_{i=1}^m \alpha_i = 1\}$. \label{foot:affine-space}} generated by their \emph{focal points} (see \cite{cuzzolin04smcb}, Corollary 4):
\begin{equation} \label{eq:focal-points}
\varsigma_B = \frac{1}{1 - pl_b(B)} b + \frac{pl_b(B)}{pl_b(B) - 1} b \oplus b_B, \hspace{5mm} B \subseteq \mathcal{C}_b, B \neq A;
\end{equation}
\item given the second belief function to combine $b'$, an additional point $b \oplus m_{b'}(A) b_A$ for each event $A \subseteq \mathcal{C}_b$ is detected (these correspond to $b'$s `probabilistic coordinates' in the binary case);
\item for each $A \subseteq \mathcal{C}_b$, each pair focus + additional point selects an affine subspace of normalized sum functions (see Section \ref{sec:normalized-sum-functions}), namely that generated by the points:
\[
b \oplus [m_{b'}(A) b_A + (1 - m_{b'}(A)) b_B] \hspace{3mm} \forall B \subset \mathcal{C}_b, B \neq A;
\]
\item all such affine subspaces are intersected, yielding the desired combination $b \oplus b'$.
\end{enumerate}
The pointwise behavior of the rule of combination depends, in conclusion, on the notion of `constant mass locus' \cite{cuzzolin04smcb}.

It is interesting to note that the focal points (\ref{eq:focal-points}) have to be computed just once as trivial functions of the upper probabilities (plausibilities) $pl_b(B)$. In fact, each focus is nothing more than a particular selection of $2^{|\mathcal{C}_b|} - 3$ focal points among a collection of $2^{|\mathcal{C}_b|} - 2$. Furthermore, the computation of each focal point $\varsigma_B$ involves a single application of Bayes' conditioning rather then general Dempster's sum, avoiding time-consuming multiplications of probability assignments.

\section{Applications of the geometric approach} \label{sec:applications-geometric-approach}

\subsection{Towards a geometric canonical decomposition} \label{sec:canonical-decomposition}

The graphical representation introduced above can be used to find the canonical decomposition of a generic belief function $b \in \mathcal{B}_2$ defined on a binary frame of discernment $\Theta = \{x,y\}$ (since any b.f. in $\mathcal{B}_2$ is separable, but $b_x$ and $b_y$). Let us call $\mathcal{CO}_x$ and $\mathcal{CO}_y$ the sets of simple support functions focussing on $\{x\}$ and $\{y\}$, respectively\footnote{The notation comes from the fact that simple support functions coincide with consonant belief functions on a binary frame, see Chapter \ref{cha:toe}, Section \ref{sec:consonant}.}.

\begin{theorem} \label{the:canonical}
For all $b \in \mathcal{B}_2$ there exist two {uniquely determined} simple belief functions $e_x \in \mathcal{CO}_x$ and $e_y \in \mathcal{CO}_y$ such that:
\[
b = e_x \oplus e_y.
\]
The two simple support functions are geometrically defined as the intersections:
\begin{equation} \label{eq:canonical}
e_x = \overline{b_y b} \cap \mathcal{CO}_x,\hspace{5mm} e_y = \overline{b_x b} \cap \mathcal{CO}_y.
\end{equation}
where $\overline{b_x b}$ denotes the line passing through $b_x$ and $b$.
\end{theorem}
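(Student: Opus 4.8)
The plan is to pass to the coordinates of Section~\ref{sec:simplex}, where $\mathcal{B}_2$ is the triangle with vertices $b_\Theta=[0,0]'$, $b_x=[1,0]'$, $b_y=[0,1]'$ and a belief function is the point $[m_b(\{x\}),m_b(\{y\})]'$. In these coordinates a simple support function focused on $\{x\}$, with b.p.a.\ $m(\{x\})=\sigma_1$, $m(\Theta)=1-\sigma_1$, is exactly the point $[\sigma_1,0]'$, so $\mathcal{CO}_x$ is the leg $\overline{b_\Theta b_x}$ of the triangle and, dually, $\mathcal{CO}_y=\overline{b_\Theta b_y}$ (consistently with the footnote identifying simple and consonant b.f.s on a binary frame). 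Hence producing a decomposition $b=e_x\oplus e_y$ with $e_x\in\mathcal{CO}_x$, $e_y\in\mathcal{CO}_y$ is the same as exhibiting a pair $(\sigma_1,\sigma_2)\in[0,1]^2$ whose associated simple support functions combine to $b$.

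First I would write down that combination explicitly. Since the foci $\{x\}$ and $\{y\}$ are disjoint, Definition~\ref{def:dempster} (this is the conflicting-evidence instance of the simple-support-function computations in Section~\ref{sec:separable}) gives $m_b(\{x\})=\sigma_1(1-\sigma_2)/(1-\sigma_1\sigma_2)$, $m_b(\{y\})=\sigma_2(1-\sigma_1)/(1-\sigma_1\sigma_2)$ and $m_b(\Theta)=(1-\sigma_1)(1-\sigma_2)/(1-\sigma_1\sigma_2)$, the sum being defined exactly when the cores meet, i.e.\ when $(\sigma_1,\sigma_2)\neq(1,1)$. This reduces the theorem to showing that the map $(\sigma_1,\sigma_2)\mapsto b$ so defined is injective with the prescribed inverse.

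Next I would invert the system. Adding the first and third formulas, and the second and third, collapses them to $m_b(\{x\})+m_b(\Theta)=(1-\sigma_2)/(1-\sigma_1\sigma_2)$ and $m_b(\{y\})+m_b(\Theta)=(1-\sigma_1)/(1-\sigma_1\sigma_2)$. Setting $t\doteq 1-\sigma_1\sigma_2$, these read $\sigma_2=1-(1-m_b(\{y\}))t$ and $\sigma_1=1-(1-m_b(\{x\}))t$; substituting into $\sigma_1\sigma_2=1-t$ and simplifying (using $m_b(\{x\})+m_b(\{y\})+m_b(\Theta)=1$) leaves $t\bigl((1-m_b(\{x\}))(1-m_b(\{y\}))\,t-m_b(\Theta)\bigr)=0$, so $t=m_b(\Theta)/\bigl((1-m_b(\{x\}))(1-m_b(\{y\}))\bigr)$ and therefore, uniquely, $\sigma_1=m_b(\{x\})/(1-m_b(\{y\}))$ and $\sigma_2=m_b(\{y\})/(1-m_b(\{x\}))$; these lie in $[0,1]$ because $m_b(\{x\})+m_b(\{y\})\le 1$. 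The geometric characterisation~\eqref{eq:canonical} then falls out by parametrising $\overline{b_y b}$ as $b_y+s(b-b_y)$: imposing that the second coordinate vanish gives $s=1/(1-m_b(\{y\}))$, whose first coordinate is precisely $\sigma_1$, so $e_x=\overline{b_y b}\cap\mathcal{CO}_x$; the dual computation with $\overline{b_x b}$ gives $e_y$.

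The step I expect to be delicate is not the algebra but the boundary bookkeeping. When $m_b(\{x\})=1$ or $m_b(\{y\})=1$ (that is, $b=b_x$ or $b=b_y$) the quotients are $0/0$: these are precisely the two points of $\mathcal{B}_2$ that are not separable, which must be excluded, as the parenthetical before the theorem already signals. More subtly, if $m_b(\Theta)=0$ (a Bayesian $b$ other than $b_x,b_y$) the formulas force $\sigma_1=\sigma_2=1$, i.e.\ $e_x=b_x$, $e_y=b_y$, which are not combinable, so the equality $b=e_x\oplus e_y$ holds only in the limit; the clean statement is therefore for $b$ with $m_b(\Theta)>0$, in which case $\sigma_1,\sigma_2<1$, both $e_x$ and $e_y$ have core $\Theta$, and their orthogonal sum is genuinely defined throughout. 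I would make this hypothesis explicit and note that uniqueness is then automatic, since the inversion above involves no free choice.
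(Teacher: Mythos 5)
Your proof is correct, but it takes a genuinely different route from the paper's. The paper's argument is purely geometric and presupposes the existence of the decomposition: it observes that the map $e_x \oplus (\cdot)$ sends the ordinate $Cl(b_\Theta, b_y)$ onto the line through $e_x \oplus b_\Theta = e_x$ and $e_x \oplus b_y = b_y$ (the latter being an annihilator), a line which also contains $b = e_x \oplus e_y$ since $e_y$ lies on the ordinate; hence $e_x \in \overline{b_y b} \cap \mathcal{CO}_x$, and dually for $e_y$. You instead invert the Dempster sum explicitly, obtaining the closed forms $\sigma_1 = m_b(\{x\})/(1 - m_b(\{y\}))$ and $\sigma_2 = m_b(\{y\})/(1 - m_b(\{x\}))$. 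What your route buys is (i) existence and uniqueness in one stroke, without assuming separability up front, and (ii) an explicit formula for the canonical components, which the paper's argument never produces. Your boundary bookkeeping is also a genuine improvement on the statement as printed: the formulas show that for a non-categorical Bayesian $b$ (i.e.\ $m_b(\Theta)=0$ with $m_b(\{x\}),m_b(\{y\})>0$) one is forced to $\sigma_1=\sigma_2=1$, so $e_x=b_x$ and $e_y=b_y$ are not combinable and no decomposition exists; such $b$ are in fact quasi-support functions (Proposition \ref{pro:negation}), so the parenthetical claim before the theorem that ``any b.f.\ in $\mathcal{B}_2$ is separable, but $b_x$ and $b_y$'' is too generous, and your added hypothesis $m_b(\Theta)>0$ is the right fix.
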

\begin{proof}
The proof is illustrated in Figure \ref{fig:canonical}. The ordinate axis is mapped by Dempster's combination with the first simple component $e_x \oplus (.)$ to $\overline{b_y b}$. As $e_x = e_x \oplus b_\Theta$ and $b_\Theta$ belongs to the ordinate, $e_x$ must lie on the line $\overline{b_y b}$, while belonging to $\mathcal{CO}_x$ by definition. The thesis trivially follows.

Analogously, the abscissa is mapped by Dempster's combination with the second simple component $e_y \oplus (.)$ to $\overline{b_x b}$. As $e_y = e_y \oplus b_\Theta$ and $b_\Theta$ belongs to the abscissa, $e_y$ must lie on the line $\overline{b_x b}$ (while also belonging to $\mathcal{CO}_y$).
\end{proof}

\begin{figure}[ht!] 
\includegraphics[width = 0.45 \textwidth]{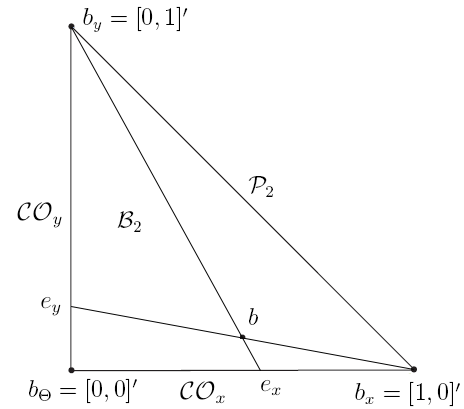}
\caption{Geometric canonical decomposition of a belief function in $\mathcal{B}_2$. \label{fig:canonical}}
\end{figure}

Proposition \ref{pro:canonical} suggests the possibility of exploiting our knowledge of the geometry of conditional subspaces to generalize Theorem \ref{the:canonical} to arbitrary belief spaces.\\ Indeed, Equation (\ref{eq:canonical}) can be expressed in the following way:
\[
e_{x/y} = Cl(b, b \oplus b_{x/y})\cap Cl(b_\Theta, b_{x/y}) = Cl(b,b_{x/y}) \cap Cl(b_\Theta,b_{x/y}).
\]
This shows that the geometric language we introduced in this Chapter, based on the two operators of convex closure and orthogonal sum, may be powerful enough to provide a general solution to the canonical decomposition problem (see Chapter \ref{cha:state}), alternative to both Smets' \cite{smets95canonical} and Kramosil's \cite{kramosil97measure}).

\subsection{Order relations} \label{sec:order-relations}

One of the problems which motivate the introduction of a geometrical representation of belief measures is the search for a rigorous computation of consonant and probabilistic approximations of belief functions. In order to tackle this problem we need to introduce two sets of coordinates naturally associated with the belief space, which we call here \emph{simple} and \emph{belief} coordinates.

\subsubsection{Partially ordered sets and lattices} \label{sec:posets}

\begin{definition} \label{def:pos}
A \emph{partially ordered set} or \emph{poset} consists of a set $P$ together with a binary relation $\leq$ over $P$ which is reflexive, antisymmetric and transitive. Namely, for all $p,q$ and $r$ in $P$ the following properties hold:
\begin{itemize}
\item
$p \leq p$ (reflexivity);
\item
if $p \leq q$ and $q \leq p$ then $p = q$ (antisymmetry);
\item
if $p \leq q$ and $q \leq r$ then $p \leq r$ (transitivity).
\end{itemize}
\end{definition}
We also write $q \geq p$ whenever $p \leq q$. A \emph{chain} of a poset is a collection of consecutive elements: two elements $p,q$ are consecutive if $p\geq q$ (or $q\geq p$) and $\not\exists r$ s.t. $p\geq r\geq q$ ($q\geq r\geq p$). The \emph{length} of a chain is the number of consecutive elements which form it. A poset is said to have \emph{finite length} if the length of all its chains is bounded. An \emph{interval} $I[p,q]$ in $P$ is the following subset of $P$: $\{ r \in L : p \leq r \leq q \}$.

In a poset the dual notions of \emph{least upper bound} and \emph{greatest lower bound} of a pair of elements can be introduced.
\begin{definition} \label{def:inf}
Given two elements $p , q \in P$ of a poset $P$ their \emph{least upper bound} $\sup_P (p,q)$ is the smallest element of
$P$ that is bigger than both $p$ and $q$. Namely, $\sup_P (p,q) \geq p,q $ and 
\[ 
\exists r \; s.t. \; r \leq \sup_P (p,q),\; r \geq p,q \hspace{5mm} \Rightarrow \hspace{5mm} r = \sup_P (p,q).
\]
\end{definition}
\begin{definition} \label{def:sup}
Given two elements $p, q \in P$ of a poset $P$ their \emph{greatest lower bound} $\inf_P (p,q)$ is the biggest element of $P$ that is smaller than both $p$ and $q$. Namely, $\inf_P (p,q) \leq p,q$ and 
\[
\exists r \; s.t. \; r \geq \inf_P (p,q), \; r \leq p,q \hspace{5mm} \Rightarrow \hspace{5mm} r = \inf_P (p,q).
\]
\end{definition}
The standard notations for greatest lower bound and least upper bound are $\inf(p,q) = p \wedge q$ and $\sup(p,q) = p \vee q$, respectively. By induction $\sup$ and $\inf$ can be defined for arbitrary finite collections, too. However, any collection of elements of a poset does not admit $\inf$ and/or $\sup$, in general.

\begin{definition} \label{def:lattice}
A \emph{lattice} is a poset in which any arbitrary finite collection of elements admits both $\inf$ and $\sup$. The latter meet the following properties:
\begin{enumerate}
\item associativity: $p \vee (q\vee r) = (p\vee q) \vee r,\;$ $p \wedge (q\wedge r) = (p\wedge q) \wedge r$;
\item commutativity: $p \vee q=q \vee p,\;$ $p \wedge q=q \wedge p$;
\item idempotence: $p\vee p=p,\;$ $p \wedge p=p$;
\item $(p\vee q)\wedge p=p,\;$ $(p\wedge q)\vee p=p$.
\end{enumerate}
\end{definition}

\subsubsection{The order relation $\geq_+$}

We have seen that the following relation:
\begin{equation} \label{eq:order+}
b \geq_+ b' \equiv b(A) \geq b'(A) \hspace{5mm} \forall A\subseteq \Theta,
\end{equation}
known as `weak inclusion', plays an important role in our geometric framework. 
Indeed:
\begin{proposition}
The belief space endowed with the weak inclusion relation (\ref{eq:order+}), $(\mathcal{B},\geq_+)$, is a partially ordered set.
\end{proposition}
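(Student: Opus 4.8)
The plan is to verify directly the three defining properties of a partially ordered set from Definition \ref{def:pos}, namely reflexivity, antisymmetry and transitivity, for the relation $\geq_+$ on $\mathcal{B}$ defined by (\ref{eq:order+}). Each of these will reduce to the corresponding property of the usual order $\geq$ on real numbers, applied simultaneously to all the values $b(A)$ as $A$ ranges over the subsets of $\Theta$. The only point requiring a moment's care is that $\geq_+$ is a relation \emph{on $\mathcal{B}$}, so I must check that the elements being compared are genuine belief functions (but this is given, since we quantify over $b,b' \in \mathcal{B}$), and that equality of belief functions means equality as set functions, i.e.\ pointwise on $2^\Theta$.

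First I would prove reflexivity: for any $b \in \mathcal{B}$ and any $A \subseteq \Theta$ we have $b(A) \geq b(A)$, hence $b \geq_+ b$. Next, antisymmetry: suppose $b \geq_+ b'$ and $b' \geq_+ b$. Then for every $A \subseteq \Theta$ both $b(A) \geq b'(A)$ and $b'(A) \geq b(A)$ hold, so $b(A) = b'(A)$ for all $A$; since a belief function is determined by its values on all subsets (equivalently, by Definition \ref{def:bel2} and the M\"obius inversion formula (\ref{eq:moebius}), by its b.p.a.), this gives $b = b'$. Finally, transitivity: if $b \geq_+ b'$ and $b' \geq_+ b''$, then for every $A \subseteq \Theta$ we have $b(A) \geq b'(A) \geq b''(A)$, hence $b(A) \geq b''(A)$ for all $A$, i.e.\ $b \geq_+ b''$.

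There is essentially no obstacle here — the statement is a routine unwinding of definitions, and the ``hard part'' is merely to note explicitly that $\geq_+$ inherits each of the three poset axioms from the pointwise order on $\mathbb{R}$, together with the (already established) fact that distinct belief functions differ at some subset so that pointwise equality forces equality in $\mathcal{B}$. One could present the whole argument in three short displayed implications; I would keep it at the level of ``for all $A \subseteq \Theta$, \dots'' statements rather than introducing coordinates, since nothing about the convex or simplicial structure of $\mathcal{B}$ is needed. It is worth remarking, for context, that $(\mathcal{B},\geq_+)$ will in fact turn out to have more structure (it relates to the simplicial geometry developed in Section \ref{sec:simplex}), but the proposition as stated asks only for the poset property, which the above three verifications settle.
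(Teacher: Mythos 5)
Your proof is correct and is exactly the routine verification the paper leaves implicit (the proposition is stated there without proof): reflexivity and transitivity are inherited pointwise from $\geq$ on $\mathbb{R}$, and antisymmetry holds because a belief function is determined by its values on all subsets of $\Theta$. Nothing further is needed.
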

It is interesting to note that:
\begin{proposition}
If $b \geq_+ b'$ then $\mathcal{C}_b \subseteq \mathcal{C}_{b'}$, i.e., the core is a monotone function on the poset $(\mathcal{B},\geq_+)$.
\end{proposition}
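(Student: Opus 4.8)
The plan is to observe that the relation $\geq_+$ of Equation (\ref{eq:order+}) is literally the same relation as (\ref{eq:order-relation-1}), so the statement is nothing but a restatement of Lemma \ref{lem:core}; nonetheless I would include the short self-contained argument, since it is the place where the definition of the core is genuinely used.

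First I would record the elementary fact that $b'(\mathcal{C}_{b'})=1$. Indeed, by definition the core $\mathcal{C}_{b'}$ is the union of the focal elements of $b'$, so every focal element $B$ of $b'$ satisfies $B\subseteq\mathcal{C}_{b'}$; hence, using (\ref{eq:belief}), $b'(\mathcal{C}_{b'})=\sum_{B\subseteq\mathcal{C}_{b'}}m_{b'}(B)=\sum_{B\subseteq\Theta}m_{b'}(B)=1$. Next I would instantiate the hypothesis $b\geq_+ b'$ at the event $A=\mathcal{C}_{b'}$: this gives $b(\mathcal{C}_{b'})\geq b'(\mathcal{C}_{b'})=1$, and since belief values lie in $[0,1]$ we conclude $b(\mathcal{C}_{b'})=1$. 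Expanding via (\ref{eq:belief}) once more, $\sum_{B\subseteq\mathcal{C}_{b'}}m_b(B)=1$; but also $\sum_{B\subseteq\Theta}m_b(B)=1$ with all masses non-negative, so subtracting forces $m_b(B)=0$ whenever $B\not\subseteq\mathcal{C}_{b'}$. Therefore every focal element of $b$ is contained in $\mathcal{C}_{b'}$, and taking the union of the focal elements of $b$ yields $\mathcal{C}_b\subseteq\mathcal{C}_{b'}$, which is exactly the claimed monotonicity (the core, viewed as a map from $(\mathcal{B},\geq_+)$ into $(2^\Theta,\supseteq)$, is order-preserving).

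There is essentially no obstacle here; the only points requiring a moment's care are the direction of the inclusion — a \emph{larger} belief function has a \emph{smaller} core — and the fact that the whole argument hinges on the single identity $b'(\mathcal{C}_{b'})=1$, which encodes the definition of the core and lets the normalization axiom $b(\Theta)=1$ do the rest.
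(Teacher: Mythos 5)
Your proof is correct and follows essentially the same route as the paper's: evaluate the hypothesis at $A=\mathcal{C}_{b'}$, deduce $b(\mathcal{C}_{b'})\geq b'(\mathcal{C}_{b'})=1$, and conclude $\mathcal{C}_b\subseteq\mathcal{C}_{b'}$. The only difference is that you spell out the final step (forcing $m_b(B)=0$ for $B\not\subseteq\mathcal{C}_{b'}$ via normalization and non-negativity of the masses), which the paper leaves implicit, and you correctly note the statement duplicates Lemma \ref{lem:core}.
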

\begin{proof}
If $b \geq_+ b'$ then $b(\mathcal{C}_{b'}) \geq b'(\mathcal{C}_{b'}) = 1$ so that $b(\mathcal{C}_{b'})=1$, i.e., $\mathcal{C}_{b'} \supseteq \mathcal{C}_{b}$.
\end{proof}
The inverse condition does not hold. We can prove, however, that:
\begin{proposition}
If ${\mathcal{C}}_b \subseteq {\mathcal{C}}_{b'}$ then $b(A) \geq b'(A)$ $\forall A \supseteq \mathcal{C}_b.$
\end{proposition}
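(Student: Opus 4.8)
The claim is that if $\mathcal{C}_b \subseteq \mathcal{C}_{b'}$ then $b(A) \geq b'(A)$ for every $A \supseteq \mathcal{C}_b$. The plan is to work directly from the definition of belief values as sums of mass over subsets, exploiting the fact that the core is the union of focal elements, so that any focal element of $b$ lies inside $\mathcal{C}_b$ and any focal element of $b'$ lies inside $\mathcal{C}_{b'}$.

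First I would fix $A \supseteq \mathcal{C}_b$. Using Definition~\ref{def:bel2}, write $b(A) = \sum_{B \subseteq A} m_b(B)$. Since every focal element $B$ of $b$ satisfies $B \subseteq \mathcal{C}_b \subseteq A$, every focal element of $b$ is counted in this sum, so $b(A) = \sum_{B \subseteq \Theta} m_b(B) = 1$. In other words, $A \supseteq \mathcal{C}_b$ forces $b(A) = 1$, which is just the statement that $b(\mathcal{C}_b) = 1$ together with monotonicity of belief values. Then for any belief function $b'$ whatsoever, $b'(A) \leq 1 = b(A)$, which already gives the inequality. So the hypothesis $\mathcal{C}_b \subseteq \mathcal{C}_{b'}$ is in fact not even needed for this direction beyond ensuring $\mathcal{C}_b$ is well-defined; the essential fact is simply that $A$ contains the core of $b$.

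Concretely I would phrase the proof as follows. Because $\mathcal{C}_b = \bigcup_{B : m_b(B) \neq 0} B$, every focal element $B$ of $b$ is contained in $\mathcal{C}_b$, hence in $A$ whenever $A \supseteq \mathcal{C}_b$. Therefore $b(A) = \sum_{B \subseteq A} m_b(B) \geq \sum_{B : m_b(B) \neq 0} m_b(B) = 1$, and since $b(A) \leq 1$ we get $b(A) = 1$. On the other hand $b'(A) \leq 1$ trivially from Definition~\ref{def:bel1}. Combining, $b(A) = 1 \geq b'(A)$, which is the desired conclusion.

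I do not anticipate any real obstacle here; the only subtlety is recognizing that the hypothesis $\mathcal{C}_b \subseteq \mathcal{C}_{b'}$ is essentially a red herring for this particular implication — it is stated to parallel the previous proposition (where the reverse inclusion followed from $b \geq_+ b'$), and the point of this proposition is to show that the converse only recovers the order relation $\geq_+$ on the restricted family of events $A \supseteq \mathcal{C}_b$, not on all of $2^\Theta$. If one wanted to use the hypothesis genuinely, one could instead observe that $\mathcal{C}_{b'} \supseteq \mathcal{C}_b$ together with $b'(\mathcal{C}_{b'}) = 1$ does not by itself pin down $b'(A)$ for $A$ strictly between $\mathcal{C}_b$ and $\mathcal{C}_{b'}$, which is precisely why the inequality cannot be upgraded to all subsets; but for the stated range $A \supseteq \mathcal{C}_b$ the short argument above suffices.
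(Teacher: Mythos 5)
Your proof is correct: for any $A \supseteq \mathcal{C}_b$ every focal element of $b$ is contained in $\mathcal{C}_b$ and hence in $A$, so $b(A)=1$, and $b'(A)\leq 1$ gives the inequality at once; the paper states this proposition without an explicit proof, and this is precisely the one-line argument it relies on. Your observation that the hypothesis $\mathcal{C}_b \subseteq \mathcal{C}_{b'}$ is never actually used is also accurate --- the conclusion holds for an arbitrary $b'$, and the hypothesis serves only to frame the result as a partial converse of the preceding proposition on the monotonicity of the core.
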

Unfortunately $(\mathcal{B},\geq_+)$ \emph{is not a lattice} (compare Definition \ref{def:lattice}), i.e., there exist finite collections $F$ of belief functions in $\mathcal{B}$ which have no common upper bound, namely: $\nexists u\in{\mathcal{B}}$ s.t. $u \geq_+ f \;\forall f\in F$. For instance, any finite set of probabilities $\{p_1,...,p_k\}$ form such a collection (observe Figure \ref{fig:order-relations}-left). The vacuous belief function $b_{\Theta}:\;m_{b_{\Theta}}(\Theta) = 1$, on the other hand, is a lower bound for every arbitrary subset of the belief space.

\begin{figure}[ht!]
\begin{center}
\begin{tabular}{cc}
\includegraphics[width = 0.45 \textwidth]{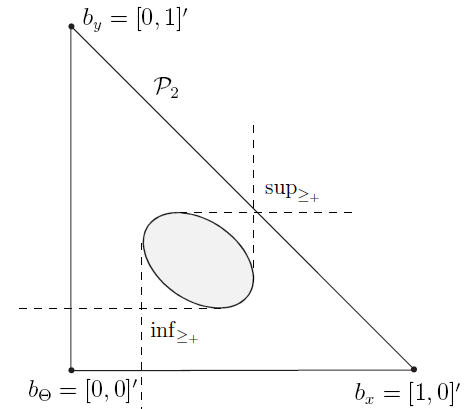} & \includegraphics[width = 0.52 \textwidth]{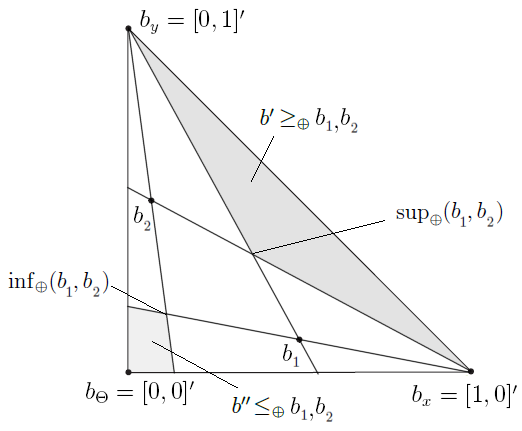}
\end{tabular}
\caption{\label{fig:order-relations} Left: geometrical representation of $\inf_{\geq_+}$ and $\sup_{\geq_+}$ in the binary belief space $\mathcal{B}_2$. Right: least upper bound and greatest lower bound for the order relation $\geq_\oplus$ in $\mathcal{B}_2$.}
\end{center}
\end{figure}

\subsubsection{Bayesian and consonant belief functions}

Bayesian and consonant belief functions behave in a opposite ways with respect to order relation (\ref{eq:order+}). For instance, Bayesian b.f.s are greater than any belief function which does not assign any mass to the singleton elements of the frame. Furthermore, Bayesian belief functions are upper elements in the partially ordered set $(\mathcal{B},\geq_+)$.
\begin{proposition}
If $b \geq_+ p$, and $p$ is Bayesian, then $b = p$.
\end{proposition}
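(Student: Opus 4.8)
The plan is to play the weak‑inclusion inequality off against superadditivity, evaluated simultaneously on a set and its complement. First I would record the one elementary fact about belief functions I need: applying the superadditivity axiom of Definition \ref{def:bel1} with $n=2$, $A_1 = A$ and $A_2 = \bar A$, and using $b(\emptyset)=0$, $b(\Theta)=1$, gives
\[
1 = b(\Theta) = b(A \cup \bar A) \geq b(A) + b(\bar A) - b(A \cap \bar A) = b(A) + b(\bar A),
\]
so that every belief function satisfies $b(A) + b(\bar A) \leq 1$ for all $A \subseteq \Theta$ (the subadditivity of $pl$ on $A,\bar A$, if one prefers to phrase it that way).

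Next I would use that $p$ is Bayesian, so $p(A) + p(\bar A) = 1$ for every $A$. From $b \geq_+ p$ we have both $b(A) \geq p(A)$ and $b(\bar A) \geq p(\bar A)$; adding these two inequalities yields
\[
b(A) + b(\bar A) \;\geq\; p(A) + p(\bar A) \;=\; 1.
\]
Combined with the upper bound $b(A) + b(\bar A) \leq 1$ from the first step, this forces $b(A) + b(\bar A) = 1$. Since the sum of the two nonnegative quantities $b(A) - p(A)$ and $b(\bar A) - p(\bar A)$ is then $0$ while each is individually $\geq 0$, both must vanish; in particular $b(A) = p(A)$.

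As $A$ was an arbitrary subset of $\Theta$, this gives $b = p$, which is the claim. There is essentially no obstacle here: the only non‑formal ingredient is the subadditivity of $b$ on a complementary pair, which is an immediate special case of the superadditivity axiom, and everything else is a two‑line squeeze. (One could also remark that the argument simultaneously reproves the companion fact that $b$ is itself Bayesian, consistent with Proposition on weak inclusion and cores, but that is not needed for the statement.)
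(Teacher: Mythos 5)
Your proof is correct, but it takes a different route from the paper's. You work entirely at the level of belief values: the $n=2$ instance of superadditivity on the complementary pair $A,\bar A$ gives $b(A)+b(\bar A)\leq 1$, while weak inclusion plus the Bayesian additivity of $p$ gives $b(A)+b(\bar A)\geq p(A)+p(\bar A)=1$, and the squeeze forces $b(A)=p(A)$ for every $A$ directly. The paper instead descends to the basic probability assignment: from $m_b(\{x\})=b(\{x\})\geq p(x)$ and $\sum_x p(x)=1$ it concludes that the singleton masses of $b$ already exhaust the unit of mass, so $m_b(\{x\})=p(x)$ for all $x$ and no mass is left for non-singleton focal elements. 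Both are two-line squeeze arguments; yours has the advantage of never invoking the M\"obius inverse or the normalization of $m_b$ (and of yielding $b(A)=p(A)$ on every event in one stroke), whereas the paper's version makes explicit the structural fact that all of $b$'s mass must sit on singletons, i.e.\ that $b$ is itself Bayesian — which you correctly note falls out of your argument as well.
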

\begin{proof}
Since by hypothesis $m_b(\{x\}) = b(\{x\}) \geq p(x)$ $\forall x \in\Theta$, and by normalization $\sum_x m_b(\{x\}) = 1 = \sum_x p(x)$, it must follow that $m_b (\{x\}) = p(x)$ $\forall x$.
\end{proof}
As a consequence:
\begin{corollary}
Probability measures are upper elements in any interval of belief functions, i.e., if $I = [b,b']$ is an interval with
respect to the order relation (\ref{eq:order+}), and $\exists p\in I$ Bayesian, then $[p,b'] \subseteq \mathcal{P}$.
\end{corollary}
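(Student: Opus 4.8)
The plan is to derive this directly from the preceding Proposition, which asserts that a Bayesian belief function cannot be strictly dominated in the weak inclusion order: if $b \geq_+ p$ with $p$ Bayesian, then $b = p$. Granting that statement, the corollary is almost immediate, so the bulk of the work is just unpacking the definition of an interval in the poset $(\mathcal{B},\geq_+)$.

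First I would make the hypothesis explicit. Since $p \in I = [b,b']$, by Definition \ref{def:lattice}'s surrounding notion of interval in the poset $(\mathcal{B},\geq_+)$ we have $b \leq_+ p \leq_+ b'$. In particular $p \leq_+ b'$, so $[p,b']$ is a genuine (nonempty) sub-interval containing $p$ itself, and $[p,b'] = \{ r \in \mathcal{B} : p \leq_+ r \leq_+ b' \}$.

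Next, take an arbitrary $r \in [p,b']$. Then $p \leq_+ r$, i.e.\ $r \geq_+ p$, and $r$ is a belief function while $p$ is Bayesian. Applying the preceding Proposition with the dominating function taken to be $r$, we conclude $r = p$, hence $r \in \mathcal{P}$. Since $r$ was arbitrary, $[p,b'] \subseteq \mathcal{P}$; in fact the argument yields the sharper conclusion $[p,b'] = \{p\}$, which is exactly the sense in which a Bayesian belief function is an \emph{upper} (indeed maximal) element of every interval containing it --- nothing lies strictly above it.

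I do not expect any real obstacle: the single substantive ingredient is the cited Proposition, whose own proof rests on the elementary chain $\sum_{\theta \in \Theta} m_r(\{\theta\}) \leq \sum_{A \subseteq \Theta} m_r(A) = 1$ combined with $\sum_{\theta} r(\{\theta\}) \geq \sum_{\theta} p(\theta) = 1$, forcing all the mass of $r$ onto singletons and then matching it with $p$. If a self-contained corollary were desired one could inline this short computation, but since the Proposition is already available the proof reduces to the bookkeeping described above; the only point worth stating carefully is that membership of $p$ in $I$ guarantees $p \leq_+ b'$, so that the claim is about a nonempty interval.
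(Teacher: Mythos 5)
Your proposal is correct and follows exactly the route the paper intends: the corollary is stated as an immediate consequence of the preceding Proposition ($r \geq_+ p$ with $p$ Bayesian forces $r = p$), and your unpacking --- any $r \in [p,b']$ dominates $p$, hence equals $p$, so $[p,b'] = \{p\} \subseteq \mathcal{P}$ --- is precisely that argument made explicit (the paper itself gives no separate proof beyond ``As a consequence''). Your sharper observation that the interval collapses to $\{p\}$ is a correct and worthwhile clarification of what ``upper element'' means here.
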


On the other hand, we prove that:
\begin{theorem}
If $b' \geq_+ b$ and $\cap_{A_i\in\mathcal{E}_b} A_i = \emptyset$ (i.e., the intersection of all the focal elements of $b$ is void) then $b'$ is not consonant.
\end{theorem}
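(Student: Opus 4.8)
If $b' \geq_+ b$ (weak inclusion) and $\bigcap_{A_i \in \mathcal{E}_b} A_i = \emptyset$, then $b'$ is not consonant.

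**The plan.** The natural strategy is to argue by contradiction: assume $b'$ is consonant and derive that the focal elements of $b$ must have non-empty intersection, contradicting the hypothesis. The key tool is the characterization of consonance in Proposition \ref{pro:cons}, condition (2): a belief function $b'$ is consonant if and only if $b'(A \cap B) = \min(b'(A), b'(B))$ for every $A, B \subseteq \Theta$. By induction this extends to any finite intersection: $b'(A_1 \cap \cdots \cap A_k) = \min_i b'(A_i)$.

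**Key steps.** First I would unpack what $b' \geq_+ b$ gives us on the focal elements of $b$. For each $A_i \in \mathcal{E}_b$ we have $b(A_i) \geq m_b(A_i) > 0$, hence $b'(A_i) \geq b(A_i) > 0$ by weak inclusion. Now list the focal elements of $b$ as $A_1, \dots, A_k$ and consider their intersection $I = A_1 \cap \cdots \cap A_k$. If $b'$ were consonant, then by the iterated form of condition (2) of Proposition \ref{pro:cons},
\[
b'(I) = b'(A_1 \cap \cdots \cap A_k) = \min_{1 \le i \le k} b'(A_i) > 0,
\]
since each $b'(A_i) > 0$ as just shown. But by hypothesis $I = \bigcap_i A_i = \emptyset$, and every belief function satisfies $b'(\emptyset) = 0$. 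This yields $0 = b'(\emptyset) = b'(I) > 0$, a contradiction. Therefore $b'$ cannot be consonant.

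**Expected main obstacle.** The argument is essentially a one-line application of Proposition \ref{pro:cons}(2) once the right quantity is identified, so there is no serious analytic difficulty. The only point requiring a little care is the inductive extension of the binary identity $b'(A\cap B) = \min(b'(A),b'(B))$ to arbitrarily many sets; I would note that this follows immediately by associativity of intersection and of $\min$, applying the binary case $k-1$ times. One should also double-check the edge cases: if $b$ has a single focal element $A_1$, then $\bigcap_i A_i = A_1 \neq \emptyset$ (a focal element is non-empty), so the hypothesis is vacuous and there is nothing to prove; the interesting situation is genuinely $k \geq 2$. No appeal to the geometry of the belief space is needed — this is a purely combinatorial consequence of the min-characterization of consonance.
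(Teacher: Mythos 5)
Your proof is correct, and it takes a slightly different route from the one in the paper. The paper works directly with the structural definition of consonance: assuming $b'$ consonant, its focal elements are nested, and since $b'(A_i) \geq b(A_i) > 0$ forces some focal element of $b'$ to lie inside each focal element $A_i$ of $b$, the \emph{innermost} focal element of $b'$ must be contained in every $A_i$, hence in their (empty) intersection --- which is absurd because focal elements are non-empty. You instead invoke the functional characterization of consonance, condition (2) of Proposition \ref{pro:cons}, iterate it to $b'(A_1 \cap \cdots \cap A_k) = \min_i b'(A_i) > 0$, and contradict $b'(\emptyset) = 0$. Both arguments hinge on the same observation (weak inclusion makes $b'$ strictly positive on every focal element of $b$), so they are cryptomorphic; yours has the small advantage of never mentioning the focal elements of $b'$ and of making the needed inductive step explicit, whereas the paper's one-line proof leaves implicit the justification for why the innermost focal element of $b'$ sits inside each $A_i$. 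Your handling of the degenerate single-focal-element case is also a sensible addition.
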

\begin{proof} 
Suppose that $b'$ is consonant. Hence $A_1$, the innermost focal element of $b'$, must be a subset of all the focal elements of $b$. But since $\cap_{A_i\in\mathcal{E}_b} A_i = \emptyset$ this is impossible.
\end{proof}

\subsubsection{Order relation $\geq_{\oplus}$}

The operation of orthogonal sum $\oplus$ is internal for any belief space (setting aside the combinability problem). It therefore generates an order relation of its own. Let us briefly see its properties, as we have done for $\geq_+$.

\begin{definition} \label{def:commutative-monoid}
A set $\mathcal{M}$ endowed with an operation $\cdot$ is called a \emph{commutative monoid} whenever for every $p,q,r \in \mathcal{M}$ the following properties hold:
\begin{enumerate}
\item 
$p \cdot (q \cdot r) = (p \cdot q) \cdot r$ (associativity);
\item
$p \cdot q =q \cdot p$ (commutativity);
\item 
$\exists 1 \in \mathcal{M}$ such that $p \cdot 1 = p$ $\forall p \in \mathcal{M}$ (existence of unit).
\end{enumerate}
\end{definition}
The monoid is said to be `with annihilator' if, in addition, $\exists 0 \in \mathcal{M}$ such that $ p \cdot 0 = 0$ $\forall p \in \mathcal{M}$.

\begin{theorem} \label{the:commutative-monoid}
The belief space endowed with Dempster's sum, $(\mathcal{B},\oplus)$, is a commutative monoid.
\end{theorem}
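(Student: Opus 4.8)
The plan is to verify the three monoid axioms of Definition \ref{def:commutative-monoid} directly, invoking only properties of Dempster's rule already recorded in the excerpt. Associativity and commutativity are classical facts about the orthogonal sum; commutativity was noted explicitly in Section \ref{sec:dempster} as a consequence of the commutativity of set-theoretic intersection in formula (\ref{eq:dempster}), and associativity follows because the combined mass of a subset $A$ under $b_1\oplus b_2\oplus b_3$ is, up to the common normalization factor, $\sum_{A_i\cap B_j\cap C_k = A} m_1(A_i)m_2(B_j)m_3(C_k)$, an expression manifestly symmetric in the three arguments and independent of the order of pairing. I would state these two properties, point to the symmetry of the defining sum, and note that the normalization constants multiply consistently (this is exactly the content of the additivity of weights of conflict, Proposition in Section \ref{sec:weight-of-conflict}), so that $(b_1\oplus b_2)\oplus b_3 = b_1\oplus(b_2\oplus b_3)$ whenever all the relevant sums exist.

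The substantive point is the existence of a unit. The natural candidate is the vacuous belief function $b_\Theta$, with $m_{b_\Theta}(\Theta)=1$ and $m_{b_\Theta}(B)=0$ otherwise (Definition \ref{def:simple-support-function} with $\sigma=0$, equivalently the categorical b.f. $b_\Theta$ of Section \ref{sec:simplex}). First I would check that $b\oplus b_\Theta$ always exists: the core of $b_\Theta$ is $\Theta$, which intersects the core $\mathcal{C}_b$ of any belief function $b$, so by the combinability criterion of Section \ref{sec:dempster} the orthogonal sum is defined for every $b\in\mathcal{B}$. Then I would compute it from (\ref{eq:dempster}): the only focal element of $b_\Theta$ is $\Theta$ itself, so $A_i\cap\Theta = A_i$ is never empty, the denominator is $1-0=1$, and $m_{b\oplus b_\Theta}(A) = \sum_{i: A_i\cap\Theta = A} m_b(A_i)\cdot m_{b_\Theta}(\Theta) = m_b(A)$ for every $A$. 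Hence $b\oplus b_\Theta = b$, establishing $b_\Theta$ as the unit.

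Putting these three facts together gives the claim: $(\mathcal{B},\oplus)$ satisfies associativity, commutativity, and has the vacuous belief function as unit, so it is a commutative monoid. I would remark in passing — since the excerpt anticipates this — that $(\mathcal{B},\oplus)$ is in fact a monoid \emph{with annihilator} only in a degenerate sense: there is no single element $0$ with $b\oplus 0 = 0$ for all $b$, precisely because not all pairs of belief functions are combinable (the would-be annihilator would be uncombinable with everything), which is why Definition \ref{def:commutative-monoid}'s ``with annihilator'' clause is mentioned separately and not asserted here.

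The only genuine subtlety — and the step I would treat most carefully — is the handling of partiality: $\oplus$ is not everywhere defined on $\mathcal{B}\times\mathcal{B}$, so strictly speaking $(\mathcal{B},\oplus)$ is a \emph{partial} monoid, and the associativity statement must be read with the proviso ``whenever the sums involved exist.'' I would make this caveat explicit (the excerpt itself brackets it as ``setting aside the combinability problem''), noting that the unit axiom is unconditional since $b\oplus b_\Theta$ always exists, while associativity and commutativity are identities between defined expressions. This is the point a careful reader will scrutinize, but it is a matter of precise phrasing rather than of mathematical difficulty.
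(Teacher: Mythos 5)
Your proof is correct and follows essentially the same route as the paper's: commutativity and associativity from the symmetry and associativity of set-theoretic intersection in Dempster's rule, and the vacuous belief function $b_\Theta$ as the unit, verified by direct computation. Your additional care about combinability of $b\oplus b_\Theta$ and the partial-operation caveat are sound refinements of what the paper states more tersely.
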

\begin{proof}
\emph{Commutativity}: $b \oplus b' = b' \oplus b$ by definition. \emph{Associativity}: $b \oplus (b' \oplus b'') = (b \oplus b') \oplus b''$ by the associativity of set-theoretical intersection.\\ \emph{Unity}: $b \oplus {b}_\Theta = b$ for all $b$, by direct application of Dempster's rule.
\end{proof}

Dempster's sum has a `preferential' direction, so there is no opposite element $b^{-1}$ such that $b \oplus b^{-1} = b_\Theta$ for any non-vacuous belief function: evidence combination cannot be reversed.\\
It is interesting to note that the only `annihilators' of $\oplus$ are the categorical probabilities $\{ b_x, x \in \Theta \}$:
$b \oplus b_x = b_x$ for all $x \in \Theta$ such that the combination exists.

\begin{definition} \label{the:order-dempster}
We say that $b$ `is conditioned by' $b'$ and write $b \geq_{\oplus} b'$ if $b$ belongs to the subspace of
$\mathcal{B}$ conditioned by $b'$. Namely:
\[
\begin{array}{ccccc}
b \geq_{\oplus} b' & \equiv & b \in \langle b' \rangle & \equiv & \exists b'' \in \mathcal{S} \; s.t.\; b = b' \oplus b''.
\end{array}
\]
\end{definition}

\begin{proposition}
$\geq_{\oplus}$ is an order relation.
\end{proposition}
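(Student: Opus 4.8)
The plan is to check the three defining properties of an order relation for $\geq_{\oplus}$ as given in Definition~\ref{the:order-dempster}, namely $b \geq_{\oplus} b' \Leftrightarrow b \in \langle b' \rangle \Leftrightarrow b = b' \oplus b''$ for some $b''$. Reflexivity is immediate: $b_\Theta$ is the unit of $\oplus$ (Theorem~\ref{the:commutative-monoid}), so $b = b \oplus b_\Theta$ and hence $b \in \langle b \rangle$. The workhorse for the other two properties is the following inclusion lemma: if $b \geq_{\oplus} b'$, say $b = b' \oplus c$, then $\langle b \rangle \subseteq \langle b' \rangle$. Indeed any element of $\langle b \rangle$ has the form $b \oplus d = (b' \oplus c) \oplus d = b' \oplus (c \oplus d)$ by associativity of $\oplus$ (the combinations involved all exist because the one we started from does, which forces the required cores to intersect), so it lies in $\langle b' \rangle$. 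I would note here that, although the definition phrases the witness $b''$ in $\mathcal{S}$, Theorem~\ref{the:compa} ($\langle b \rangle = b \oplus C(b)$) lets us always take it to be a genuine belief function in the compatible simplex, so there is no friction with the earlier definition of conditional subspace.

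Transitivity then follows at once: if $b \geq_{\oplus} b'$ and $b' \geq_{\oplus} b''$, then $b \in \langle b' \rangle$ and $\langle b' \rangle \subseteq \langle b'' \rangle$ by the lemma, hence $b \in \langle b'' \rangle$, i.e. $b \geq_{\oplus} b''$. For antisymmetry, suppose $b \geq_{\oplus} b'$ and $b' \geq_{\oplus} b$. Applying the inclusion lemma both ways gives $\langle b \rangle \subseteq \langle b' \rangle$ and $\langle b' \rangle \subseteq \langle b \rangle$, so $\langle b \rangle = \langle b' \rangle$ as subsets of the belief space.

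The crux — the step I expect to be the real obstacle — is to extract $b = b'$ from the equality $\langle b \rangle = \langle b' \rangle$, and this is where the simplicial geometry of conditional subspaces does the job. By Corollary~\ref{cor:cond}, $\langle b \rangle = Cl\big(b \oplus b_A : \emptyset \neq A \subseteq \mathcal{C}_b\big)$, a simplex whose $2^{|\mathcal{C}_b|}-1$ generating points are affinely independent and therefore are precisely its vertices, and by~(\ref{eq:dim}) $\dim\langle b\rangle = 2^{|\mathcal{C}_b|}-2$. Since $b'$ is a vertex of $\langle b' \rangle = \langle b \rangle$, it must be one of these generators, say $b' = b \oplus b_A$ for some $\emptyset \neq A \subseteq \mathcal{C}_b$; then $\mathcal{C}_{b'} = \mathcal{C}_b \cap A = A$, so $\dim\langle b'\rangle = 2^{|A|}-2$. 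But $\langle b'\rangle = \langle b\rangle$ forces $2^{|A|}-2 = 2^{|\mathcal{C}_b|}-2$, whence $A = \mathcal{C}_b$ and $b' = b \oplus b_{\mathcal{C}_b} = b$. Thus $b = b'$; together with reflexivity and transitivity this shows $\geq_{\oplus}$ is an order relation. The points demanding care are purely bookkeeping — ensuring each invoked orthogonal sum exists (handled every time by associativity and the core-intersection condition) and reconciling the $\mathcal{S}$/$\mathcal{B}$ discrepancy via Theorem~\ref{the:compa} — rather than any hard estimate.
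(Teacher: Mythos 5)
Your proof is correct, but it takes a genuinely different route from the paper's. The paper disposes of this proposition in two lines, by observing that the internal operation of a commutative monoid induces an order relation (the divisibility relation $a\,|\,b \equiv \exists c:\, b = a\cdot c$, citing Jacobson) and that $\geq_{\oplus}$ is exactly the relation induced by the monoid $(\mathcal{B},\oplus)$ of Theorem~\ref{the:commutative-monoid}. You instead verify the three axioms by hand. The comparison is instructive: reflexivity and transitivity are where the two arguments coincide in substance (unit element and associativity are all that the monoid-induced relation uses), but \emph{antisymmetry} is where your proof does real work that the paper's does not. Divisibility in a general commutative monoid is only a preorder --- in a group, for instance, every element divides every other --- so the blanket appeal to ``the order relation induced by the monoid'' silently assumes antisymmetry rather than proving it. Your argument closes exactly that gap: the inclusion $\langle b\rangle \subseteq \langle b'\rangle$ whenever $b \geq_{\oplus} b'$, followed by the identification of $b'$ as a vertex of the simplex $\langle b\rangle = Cl(b\oplus b_A,\ \emptyset\neq A\subseteq\mathcal{C}_b)$ and the dimension count $2^{|A|}-2 = 2^{|\mathcal{C}_b|}-2$ forcing $A=\mathcal{C}_b$, is a clean use of Corollary~\ref{cor:cond} and Equation~(\ref{eq:dim}) and correctly yields $b'=b$. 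Your bookkeeping on combinability (existence of the triple sum forcing the pairwise cores to intersect) and on the $\mathcal{S}$-versus-$\mathcal{B}$ phrasing of the witness via Theorem~\ref{the:compa} is also sound. In short, your proof is longer but self-contained and actually establishes the one property the paper's citation-based proof leaves implicit; the paper's proof buys brevity at the cost of resting on a general fact that, stated for arbitrary monoids, gives only a preorder.
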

\begin{proof} 
Monoids are inherently associated with an order relation (see Chapter \ref{cha:alg}, Section \ref{sec:dual-order-relations}). $\geq_{\oplus}$ is clearly the order relation induced by the monoid $(\mathcal{B},\oplus)$ of Theorem \ref{the:commutative-monoid}.
\end{proof}

Is $(\mathcal{B},\geq_{\oplus})$ also a lattice, namely, do every finite collection of belief functions admit a greater lower bound $\inf$ and a smaller upper bound $\sup$? The analysis of the binary case $\mathcal{B}_2$ suggests that only pairs of \emph{separable} support functions (Chapter \ref{cha:toe}, Section \ref{sec:separable}) whose cores are \emph{not disjoint} admit $\inf_\oplus$ and $\sup_\oplus$. Their analytic expressions can be easily calculated for $\mathcal{B}_2$, and their geometric locations are shown in Figure \ref{fig:order-relations}-right.

The latter also suggests a significant relationship between canonical decomposition and Dempster's rule-induced ordering.
\begin{proposition} \label{pro:dempster-canonical}
If $b,b' : 2^\Theta \rightarrow [0,1]$ are two belief functions defined on a binary frame $\Theta = \{ x,y \}$ and  $b = e_x^b \oplus e_y^b$, $b' = e_x^{b'} \oplus e_y^{b'}$ are the unique canonical decompositions of $b$ and $b'$, respectively, we have that:
\[
\begin{array}{l}
\inf_\oplus(b,b') = \inf (e_x^b , e_x^{b'}) \oplus \inf (e_y^b, e_y^{b'})\\ \\ \sup_\oplus (b,b') = \sup (e_x^b, e_x^{b'}) \oplus \sup (e_y^b, e_y^{b'}),
\end{array}
\]
where $\inf$ and $\sup$ on the right hand side of the equations denote the standard greatest lower bound and least upper bound on real numbers.

Namely, $\inf$ and $\sup$ commute with canonical decomposition.
\end{proposition}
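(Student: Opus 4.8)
The plan is to work entirely in the binary belief space $\mathcal{B}_2$, exploiting the explicit coordinates and the geometric description of canonical decomposition given in Theorem \ref{the:canonical}. Recall that on $\Theta = \{x,y\}$ every non-categorical belief function $b$ is separable, with unique canonical decomposition $b = e_x^b \oplus e_y^b$ where $e_x^b \in \mathcal{CO}_x$, $e_y^b \in \mathcal{CO}_y$. A simple support function $e_x \in \mathcal{CO}_x$ is parametrised by a single number $\sigma_x = m_{e_x}(\{x\}) \in [0,1)$, and likewise $e_y$ by $\sigma_y \in [0,1)$; their Dempster combination gives (via the formulas of Section \ref{sec:separable}) explicit expressions $m_b(\{x\}) = \frac{\sigma_x(1-\sigma_y)}{1-\sigma_x\sigma_y}$, $m_b(\{y\}) = \frac{\sigma_y(1-\sigma_x)}{1-\sigma_x\sigma_y}$, $m_b(\Theta) = \frac{(1-\sigma_x)(1-\sigma_y)}{1-\sigma_x\sigma_y}$. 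So the map $(\sigma_x,\sigma_y) \mapsto (e_x, e_y) \mapsto b$ is a bijection from $[0,1)^2$ onto $\mathcal{B}_2$ minus its two Bayesian vertices, and canonical decomposition is precisely its inverse.

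The key step is to understand $\inf_\oplus$ and $\sup_\oplus$ in these $(\sigma_x,\sigma_y)$ coordinates. First I would establish that, for separable $b,b'$ with non-disjoint cores, the relation $b \geq_\oplus b'$ is equivalent to the componentwise inequality $\sigma_x^b \geq \sigma_x^{b'}$ \emph{and} $\sigma_y^b \geq \sigma_y^{b'}$ on the canonical coefficients. One direction is immediate: if $\sigma_x^b \geq \sigma_x^{b'}$ and $\sigma_y^b \geq \sigma_y^{b'}$, then writing $e_x^b = e_x^{b'} \oplus s_x$ and $e_y^b = e_y^{b'} \oplus s_y$ for suitable simple support functions $s_x,s_y$ (possible since simple support functions on a fixed focus compose by $\sigma \mapsto 1-(1-\sigma_1)(1-\sigma_2)$, which is surjective onto the relevant interval), commutativity and associativity of $\oplus$ give $b = b' \oplus (s_x \oplus s_y)$, so $b \in \langle b' \rangle$. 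The converse uses the geometric picture of the conditional subspace $\langle b' \rangle$ in Figure \ref{fig:dempster}: any $b'' \in \langle b' \rangle$ is obtained from $b'$ by combining further simple support functions, which can only increase the canonical coefficients. Once this equivalence is in hand, $\inf_\oplus(b,b')$ and $\sup_\oplus(b,b')$ become simply the componentwise $\min$ and $\max$ of the canonical coefficients — which is exactly what the proposition asserts, since $\inf(e_x^b,e_x^{b'})$ is the simple support function with coefficient $\min(\sigma_x^b,\sigma_x^{b'})$, etc. One must also check that the pair whose coefficients are these componentwise extrema still lies in $\mathcal{B}_2$ and has non-disjoint core: this holds automatically because both coefficients stay in $[0,1)$, so the result is again a genuine separable belief function with core $\Theta$.

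The main obstacle I anticipate is the converse direction of the order-equivalence, i.e., showing that $b \geq_\oplus b'$ forces \emph{both} canonical coefficients to be non-decreasing — one needs to rule out the possibility of a combined piece of evidence that increases $\sigma_x$ while decreasing $\sigma_y$ (or a cross term entangling the two foci). The cleanest route is to observe that any $b'' \in \mathcal{S}$ (or $\mathcal{B}_2$) combinable with $b'$ is itself, when core-non-disjoint, separable with canonical form $e_x^{b''} \oplus e_y^{b''}$, and then to compute directly that the canonical coefficients of $b' \oplus b''$ are $1-(1-\sigma_x^{b'})(1-\sigma_x^{b''})$ and $1-(1-\sigma_y^{b'})(1-\sigma_y^{b''})$ respectively — a short calculation using that $\oplus$ of consonant-on-$\{x\}$ functions stays consonant on $\{x\}$ and likewise for $y$, together with the uniqueness of canonical decomposition. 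This makes the monotonicity manifest and also reconstructs $b'' $ as $\inf_\oplus$-complement data, closing the argument. Degenerate cases (when one of $b,b'$ is Bayesian, so has no proper canonical decomposition, or when cores are disjoint so that neither $\inf_\oplus$ nor $\sup_\oplus$ exists) are handled by noting these are explicitly excluded from the scope of Proposition \ref{pro:dempster-canonical} as discussed in the surrounding text.
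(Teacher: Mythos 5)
The paper states Proposition \ref{pro:dempster-canonical} without proof (the surrounding text only remarks that the analytic expressions ``can be easily calculated'' in $\mathcal{B}_2$), so there is no official argument to compare against; your proposal is essentially a correct reconstruction. The parametrization $(\sigma_x,\sigma_y)\mapsto e_x\oplus e_y$, the composition law $\sigma\mapsto 1-(1-\sigma_1)(1-\sigma_2)$ on each focus, and the resulting equivalence of $\geq_\oplus$ with the componentwise order on canonical coefficients are all sound, and they do reduce the claim to the observation that glb and lub in a componentwise product order are taken coordinatewise. Note in passing that your order-equivalence is exactly Proposition \ref{pro:geq} of the paper, but with the inequality in the direction your computation actually gives ($e_x^b\geq e_x^{b'}$ when $b\geq_\oplus b'$); the paper's statement appears to have the inequality reversed. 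Two smaller corrections: the image of $[0,1)^2$ under your parametrization is the set $\{b:m_b(\Theta)>0\}$, i.e.\ $\mathcal{B}_2$ minus the \emph{entire} Bayesian segment, not merely minus the two vertices (a non-degenerate Bayesian b.f.\ on $\{x,y\}$ is a quasi-support function and is not separable); and in the converse direction you should record that $m_b(\Theta)=m_{b'}(\Theta)\,m_{b''}(\Theta)/N$, so $m_b(\Theta)>0$ forces $m_{b''}(\Theta)>0$ and hence the separability of $b''$ that your composition formula needs.

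The one step that needs genuine repair is the passage from ``componentwise order on separable b.f.s'' to ``$\sup_\oplus$ is the componentwise max.'' The least upper bound is taken in the whole poset $(\mathcal{B}_2,\geq_\oplus)$, and the set of upper bounds of $\{b,b'\}$ contains \emph{non-separable} elements: any non-degenerate Bayesian $u$ satisfies $u\geq_\oplus d$ for every $d$ with $m_d(\Theta)>0$ (solve $d\oplus w=u$ with $w$ Bayesian; the map $q\mapsto (d\oplus[q,1-q]')(\{x\})$ is onto $(0,1)$), and likewise $b_x=d\oplus b_x$. For your candidate $c$ with coefficients $(\max\sigma_x,\max\sigma_y)$ to be the minimum of the upper-bound set you must check that these Bayesian and categorical upper bounds also dominate $c$ — which they do, by the same observation, so the gap closes, but it is not covered by the componentwise criterion alone since those elements have no canonical decomposition. (For $\inf_\oplus$ no such issue arises: any lower bound $d$ of $b$ satisfies $b=d\oplus w$, and $m_b(\Theta)>0$ forces $m_d(\Theta)>0$, so every lower bound is separable and your argument applies verbatim.)
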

Furthermore:
\begin{proposition} \label{pro:geq}
$b \geq_{\oplus} b'$, with $b = e_x^b \oplus e_y^b$ and $b' = e_x^{b'} \oplus e_y^{b'}$, if and only if $e_x^{b'} \geq e_x^b$ and $e_y^{b'} \geq e_y^b$.
\end{proposition}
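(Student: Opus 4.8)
The plan is to reduce everything to elementary facts about the binary belief space $\mathcal{B}_2$ and the explicit canonical decomposition given in Theorem \ref{the:canonical}. Recall that on $\Theta=\{x,y\}$ a simple support function $e_x\in\mathcal{CO}_x$ focused on $\{x\}$ is determined by a single number $\sigma_x=e_x(\{x\})=m_{e_x}(\{x\})\in[0,1]$, and similarly $e_y\in\mathcal{CO}_y$ by $\sigma_y=e_y(\{y\})$. Thus the order $\geq$ on $\mathcal{CO}_x$ (the weak inclusion relation restricted to simple support functions focused on $\{x\}$) is nothing but the usual order on the parameter $\sigma_x$, and likewise for $\mathcal{CO}_y$. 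By Theorem \ref{the:canonical} every $b\in\mathcal{B}_2$ is written uniquely as $b=e_x^b\oplus e_y^b$, so the map $b\mapsto(\sigma_x^b,\sigma_y^b)\in[0,1)^2$ (excluding the degenerate endpoints $b_x,b_y$) is a bijective re-coordinatization of $\mathcal{B}_2$. The whole statement, together with the preceding Proposition \ref{pro:dempster-canonical}, then just says that in these canonical coordinates $\langle b\rangle$ and the order $\geq_\oplus$ become a product of two one-dimensional orders — albeit, as I will flag below, with the orientation \emph{reversed} relative to the parameters.

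The first step is to make the combination law explicit in canonical coordinates. If $b=e_x\oplus e_y$ and $b'=e_x'\oplus e_y'$ then, by commutativity and associativity of $\oplus$ (Theorem \ref{the:commutative-monoid}) and the fact that two simple support functions focused on the \emph{same} singleton combine to another such function, one computes $b\oplus b' = (e_x\oplus e_x')\oplus(e_y\oplus e_y')$, where $e_x\oplus e_x'$ is the simple support function focused on $\{x\}$ whose mass on $\{x\}$ is $1-(1-\sigma_x)(1-\sigma_x')$ (the masses on $\Theta$ multiply), and similarly for the $y$-component. In particular the canonical decomposition of $b\oplus b'$ is $(e_x\oplus e_x', e_y\oplus e_y')$. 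This already handles Proposition \ref{pro:dempster-canonical}: $\inf$ and $\sup$ in $(\mathcal{B}_2,\geq_\oplus)$ are computed componentwise because conditioning by $e_x\oplus e_x'$ can only \emph{decrease} the $\Theta$-mass of the $x$-component, hence \emph{increase} $\sigma_x$; the monotone quantity along each fiber is the parameter $\sigma$, and $\inf_\oplus/\sup_\oplus$ match $\min/\max$ of the parameters on each axis. A short argument, reading off Figure \ref{fig:order-relations}-right, is enough here.

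For the final proposition itself I would argue as follows. By Definition \ref{the:order-dempster}, $b\geq_\oplus b'$ means $b\in\langle b'\rangle$, i.e.\ $b=b'\oplus b''$ for some $b''$. Writing $b''=e_x''\oplus e_y''$ and using the product formula from the previous paragraph, $b = (e_x^{b'}\oplus e_x'')\oplus(e_y^{b'}\oplus e_y'')$; by uniqueness of canonical decomposition (Theorem \ref{the:canonical}) this forces $e_x^b = e_x^{b'}\oplus e_x''$ and $e_y^b = e_y^{b'}\oplus e_y''$. Since $e_x^{b'}\oplus e_x''$ has $\{x\}$-mass $1-(1-\sigma_x^{b'})(1-\sigma_x'')\ge\sigma_x^{b'}$, we get $\sigma_x^b\ge\sigma_x^{b'}$, i.e.\ $e_x^b\ge e_x^{b'}$ — which is the statement $e_x^{b'}\le e_x^b$ written in the paper's notation — and likewise on the $y$-axis. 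Conversely, if $e_x^{b'}\ge e_x^b$ and $e_y^{b'}\ge e_y^b$, i.e.\ $\sigma_x^{b'}\ge\sigma_x^b$ and $\sigma_y^{b'}\ge\sigma_y^b$, then choose $e_x''$ with $\{x\}$-mass $\sigma_x'' = 1-\frac{1-\sigma_x^{b'}}{1-\sigma_x^b}\in[0,1)$ and $e_y''$ analogously; a direct check gives $e_x^{b'}\oplus e_x'' = e_x^b$ and $e_y^{b'}\oplus e_y''=e_y^b$, hence $b'\oplus(e_x''\oplus e_y'')=b$ and $b\geq_\oplus b'$.

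The main obstacle is purely a matter of bookkeeping the \emph{orientation}: $e_x^{b'}\ge e_x^b$ in the statement corresponds, under $e_x\mapsto\sigma_x$, to the parameter \emph{inequality} in the same direction, yet $\geq_\oplus$ is the relation ``$b$ carries more evidence than $b'$'' and $b$ sits \emph{downstream} of $b'$ — so one must be careful that a larger conditional state corresponds to a \emph{smaller} $\Theta$-mass and hence a \emph{larger} $\sigma$, which is exactly why the roles of $b$ and $b'$ appear swapped between $\geq_\oplus$ and the componentwise inequalities $e_x^{b'}\geq e_x^b$. Once this sign convention is pinned down (and it is easiest to pin it down by looking at the explicit formula for the $\Theta$-mass under combination), both directions are a one-line computation. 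The only other point deserving a remark is the degenerate case $b_x,b_y\in\mathcal{B}_2$, which are not separable and have no canonical decomposition of the stated form; these should simply be excluded, exactly as in the hypotheses of Theorem \ref{the:canonical}, or treated as limiting cases $\sigma_x\to1$.
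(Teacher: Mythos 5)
Your overall strategy --- pass to canonical coordinates $(\sigma_x,\sigma_y)$ via Theorem \ref{the:canonical}, observe that two simple support functions focused on the same singleton combine into another one whose $\Theta$-masses multiply, and use uniqueness of the canonical decomposition to read off the components of $b'\oplus b''$ --- is the right one (the paper states this proposition without proof, so there is nothing to compare it against). The forward direction is correct as a computation: from $b=b'\oplus b''$ you obtain $e_x^b=e_x^{b'}\oplus e_x''$ and hence $\sigma_x^b\geq\sigma_x^{b'}$. But under the identification you yourself fix on $\mathcal{CO}_x$ (the usual order on $\sigma_x$, which coincides with weak inclusion for these functions), this reads $e_x^{b'}\leq e_x^b$, whereas the proposition asserts $e_x^{b'}\geq e_x^b$; these are opposite inequalities, not ``the same statement in the paper's notation''. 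Your closing paragraph tries to explain the swap away, but the explanation cuts the other way: the downstream element $b$ has the \emph{larger} $\sigma$'s. Either the proposition has the primes transposed (which is what consistency with Proposition \ref{pro:dempster-canonical} suggests: $\inf_{\oplus}$ can only equal the componentwise minimum of the canonical components if larger $\sigma$ means larger in $\geq_{\oplus}$), or the order on simple components is meant to be reversed --- you must commit to one reading and prove that, rather than sliding between the two.

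The converse direction adds a concrete computational error on top of the orientation problem. Assuming $\sigma_x^{b'}\geq\sigma_x^b$ you set $\sigma_x''=1-\frac{1-\sigma_x^{b'}}{1-\sigma_x^b}$ and claim $e_x^{b'}\oplus e_x''=e_x^b$; but $e_x^{b'}\oplus e_x''$ then has $\{x\}$-mass $1-(1-\sigma_x^{b'})^2/(1-\sigma_x^b)$, which equals $\sigma_x^b$ only when $\sigma_x^{b'}=\sigma_x^b$. The witness your formula actually produces satisfies $e_x^{b}\oplus e_x''=e_x^{b'}$, so it proves $b'\in\langle b\rangle$, i.e.\ $b'\geq_{\oplus}b$ --- the reverse of what this direction needs. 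Taken together, your two halves establish ``$b\geq_{\oplus}b'\Rightarrow\sigma^b\geq\sigma^{b'}$ componentwise'' and ``$\sigma^{b'}\geq\sigma^b$ componentwise $\Rightarrow b'\geq_{\oplus}b$'', which are the two halves of the biconditional $b\geq_{\oplus}b'\Leftrightarrow(\sigma_x^b\geq\sigma_x^{b'}\wedge\sigma_y^b\geq\sigma_y^{b'})$ with $b$ and $b'$ relabelled in the second; so the underlying mathematics is salvageable, but as written the two directions address different statements and neither matches the proposition verbatim. Fix one orientation (flagging the apparent transposition in the source), solve $(1-\sigma_x^{b'})(1-\sigma_x'')=1-\sigma_x^b$ for $\sigma_x''$ correctly, and verify that the witness lies in $[0,1)$ exactly under the corresponding componentwise hypothesis.
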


In other words, $\geq_{\oplus}$ {is induced by the usual order relation over real numbers, when applied to the canonical components} of the two belief functions.\\ Bayesian belief functions play the role of upper bounds also under the order relation induced by the rule of combination.

\begin{proposition}
Probabilities are upper elements of any interval of belief functions with respect to $\geq_\oplus$. Namely, if $I=[b,b']$ is an interval with respect to the order relation $\geq_\oplus$, and $p\in I$ is a Bayesian belief function, then $b'=p$.
\end{proposition}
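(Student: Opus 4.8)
The statement to prove is: if $I = [b, b']$ is an interval of $(\mathcal{B}, \geq_\oplus)$ and $p \in I$ is Bayesian, then $b' = p$. The plan is to reduce everything, via the preceding Proposition on canonical decomposition, to a purely elementary statement about the order on real numbers. First I would recall that $p \in I = [b,b']$ means $b \geq_\oplus p$ and $p \geq_\oplus b'$; in particular the key relation to exploit is $p \geq_\oplus b'$. Since we are working on a binary frame $\Theta = \{x,y\}$ (the canonical decomposition results of Section~\ref{sec:canonical-decomposition} are stated there), I would write the unique canonical decompositions $b' = e_x^{b'} \oplus e_y^{b'}$ and, for the probability $p$, attempt to make sense of a decomposition $p = e_x^p \oplus e_y^p$. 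Here one must be careful: $b_x$ and $b_y$ are the only belief functions on $\mathcal{B}_2$ that are not separable, and a Bayesian $p$ in the relative interior of $\mathcal{P}_2$ \emph{is} separable, so Theorem~\ref{the:canonical} applies and gives its canonical components as limits of simple support functions. The main work is to handle the boundary cases $p = b_x$ or $p = b_y$ separately, since there the canonical decomposition degenerates.

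For the generic case, I would invoke Proposition~\ref{pro:geq}: $p \geq_\oplus b'$ holds if and only if $e_x^{b'} \geq e_x^{p}$ and $e_y^{b'} \geq e_y^{p}$, where $\geq$ on the right is the ordinary order on the scalar weights $\sigma$ parametrizing the simple support functions in $\mathcal{CO}_x$ and $\mathcal{CO}_y$. Now the crucial observation is that a Bayesian belief function on $\mathcal{B}_2$ sits on the segment $\mathcal{P}_2 = Cl(b_x, b_y)$, which is the `limit' of the conditional subspaces: expressing $p$ via its canonical components forces those components to have \emph{maximal} weight in the appropriate sense, i.e. $e_x^p$ and $e_y^p$ are limiting (improper) simple support functions with $\sigma \to 1$. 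Concretely, Dempster-combining any simple support function focused on $\{x\}$ with one focused on $\{y\}$ yields a Bayesian function only in the limit where both weights tend to $1$; so $e_x^p$ and $e_y^p$ are the maximal elements of $\mathcal{CO}_x$, $\mathcal{CO}_y$ respectively (in the $\sigma$-ordering). Then $e_x^{b'} \geq e_x^p$ and $e_y^{b'} \geq e_y^p$ can only hold with equality, whence by uniqueness of the canonical decomposition $b' = e_x^{b'} \oplus e_y^{b'} = e_x^p \oplus e_y^p = p$.

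For the boundary cases, if $p = b_x$ then $b_x$ is an annihilator of $\oplus$ (as noted right after Theorem~\ref{the:commutative-monoid}: $b \oplus b_x = b_x$ whenever the combination exists), so $p \geq_\oplus b'$ means $b' \in \langle b_x\rangle = \{b_x\}$, the conditional subspace of an annihilator being the singleton itself; hence $b' = b_x = p$. The case $p = b_y$ is symmetric. This disposes of every $p \in \mathcal{P}_2$.

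The step I expect to be the main obstacle is making rigorous the claim that the canonical components of a Bayesian $p$ are the \emph{maximal} simple support functions in $\mathcal{CO}_x$ and $\mathcal{CO}_y$ — i.e. that $p$ is a boundary/limit point of the family of separable support functions and that Proposition~\ref{pro:geq} can be applied across that limit without loss. One clean way around this is to avoid canonical components for $p$ altogether and argue directly: $p \geq_\oplus b'$ gives $b' \oplus b'' = p$ for some $b'' \in \mathcal{S}$; applying the fact that $\mathcal{C}_{b' \oplus b''} = \mathcal{C}_{b'} \cap \mathcal{C}_{b''} \subseteq \mathcal{C}_{b'}$, and that $p$ Bayesian forces $\mathcal{C}_p = \Theta$ (if $p$ is in the interior) or a singleton, one pins down $\mathcal{C}_{b'}$, and then a short computation with Dempster's rule on the two or three remaining mass coordinates of $\mathcal{B}_2$ shows $b'$ must already be Bayesian and equal to $p$. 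I would present the argument this second way if the limiting argument proves delicate, keeping the canonical-decomposition route as the motivating picture from Figure~\ref{fig:order-relations}-right.
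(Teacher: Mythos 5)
The paper states this proposition without proof, so there is nothing to match your argument against; judged on its own terms, however, your argument has a gap I do not think can be repaired. Everything runs through the canonical decomposition $p = e_x^p \oplus e_y^p$ of the Bayesian $p$, together with the claim that its components are the maximal elements of $\mathcal{CO}_x$ and $\mathcal{CO}_y$. But an interior Bayesian on $\Theta=\{x,y\}$ has \emph{no} canonical decomposition: by Proposition \ref{pro:negation} (or by Proposition \ref{pro:support}, since $m_p(\mathcal{C}_p)=m_p(\Theta)=0$) it is a quasi-support function, hence not separable --- the parenthetical in Theorem \ref{the:canonical} asserting that every b.f.\ in $\mathcal{B}_2$ except $b_x,b_y$ is separable is exactly backwards, and you inherited that error. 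The repair via limits does not work either: $b_x\oplus b_y$ is undefined (total conflict), and the limit of $e_x(\sigma_x)\oplus e_y(\sigma_y)$ as $\sigma_x,\sigma_y\to 1$ is path-dependent (the mass on $x$ tends to $c/(1+c)$ along $\sigma_x=1-\epsilon$, $\sigma_y=1-c\epsilon$, so any point of $\mathcal{P}_2$ is a limit), so ``both components maximal'' does not single out $p$, and Proposition \ref{pro:geq}, which presupposes actual canonical decompositions of both operands, cannot be invoked. The boundary case is also off: by Definition \ref{the:order-dempster}, $p \geq_\oplus b'$ means $p \in \langle b'\rangle$, not $b'\in\langle p\rangle$, so with $p=b_x$ the annihilator property gives nothing ($b_x = b'\oplus b_x$ for any combinable $b'$, e.g.\ $b'=b_\Theta$).

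More seriously, the statement appears to fail under either reading of the interval, so no proof can succeed without reinterpreting it. If $b'$ is the $\geq_\oplus$-top of $I$ (as ``upper element'' suggests), take $b=b_\Theta$, $p$ the uniform probability on $\{x,y\}$ and $b'=b_x$: then $b_x = p\oplus b_x \geq_\oplus p \geq_\oplus b_\Theta$, so $p\in I$, yet $b'=b_x\neq p$. Under your reading ($b \geq_\oplus p \geq_\oplus b'$), take $b'=b_\Theta$ and $b=p$: again $b'\neq p$. Your fallback route founders on the same example: $p = b'\oplus b''$ with $\mathcal{C}_p=\Theta$ only forces $\mathcal{C}_{b'}=\Theta$, which is satisfied by $b'=b_\Theta$. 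What \emph{is} true, and is presumably the intended analogue of the corresponding corollary for $\geq_+$, is that everything $\geq_\oplus$-above a Bayesian is Bayesian, i.e.\ $[p,b']\subseteq\mathcal{P}$: any $r\geq_\oplus p$ has the form $p\oplus t$, which is Bayesian by Proposition \ref{pro:bayes}, or simply because every focal element of $p\oplus t$ is the intersection of a singleton with something. I would prove that weaker statement and flag the discrepancy rather than attempt the equality $b'=p$.
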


\subsection{A Dempster's rule-based probabilistic approximation} \label{sec:approx}

To conclude, let us investigate the possibility of exploiting our geometric approach to belief calculus in order to approximate, according to a criterion to be established, a given belief function with a finite probability (or Bayesian b.f.).\\ Indeed, much work has already been done on both probabilistic \cite{voorbraak89efficient,smets88beliefversus} and possibilistic \cite{dubois90,baroni04ipmu,cuzzolin09ecsqaru-outer,cuzzolin11isipta-consonant,cuzzolin14tfs} approximation of belief functions. The reader is referred to Chapter \ref{cha:state}, Section \ref{sec:transformation} for a more complete review of the topic. 

Nevertheless, we explore in this Section a different angle on the problem provided by the geometric framework introduced here.

\subsubsection{Approximation criteria}

Suppose first that the desired approximation is the Bayesian belief function which minimizes a certain distance from the original b.f., measured in the belief space $\mathcal{B}$. Such an approximation should meet a number of sensible criteria. In particular, the desired transformation should be such that:
\begin{itemize}
\item the result does \emph{not} depend on the choice of a specific distance function in the belief space;
\item the outcome is a \emph{single} pointwise approximation, rather than a whole set of approximations;
\item its rationale is consistent with the main principles of the theory of belief functions.
\end{itemize}
In Section \ref{sec:order-relations} we have learned that not every belief function has canonical coordinates (in particular, non-separable ones do not). Hence, no distance function based on canonical coordinates is suitable to induce such a `sensible' probabilistic approximation.

Let us then focus on belief coordinates $\{ b(A), \emptyset \neq A \subset \Theta \}$. The issue of what specific distance function based on them we should be choosing arises.\\ A (limited) number of options are provided by the usual $L_p$ norms:
\begin{equation} \label{eq:dist}
\|b - p\|_{L_1} = \sum_{A\subset \Theta} | b(A) - p(A) |, \;\; \| b - p \|_{L_2} = \sqrt{ \sum_{A \subset \Theta} | b(A) - p(A) |^2}, \;\; \| b - p \|_{L_{\infty}} = \sup_{A \subset \Theta} | b(A) - p(A) |.
\end{equation}
Theorem \ref{the:upper}, however, states that every belief function $b$ is related to a whole subset of Bayesian belief functions at the same (minimum) $L_1$ distance from it. Clearly, $L_1$ does not satisfy our criteria for a probabilistic approximation.

\subsubsection{External behavior and approximation criterion}

On the other side, there seems to be no justification for the choice of any the above distances. The \emph{raison d'etre} of the theory of evidence is the rule of combination: a belief function is useful only when fused with others in an automated reasoning process. Consequently, from the principles of evidential reasoning it follows that  \emph{a good approximation, when combined with any other belief function, has to produce results `similar' to what obtained by combining the original function}. Now, how `similarity' between the result of evidence combination is measured remains to be decided. 

Analytically, such a criterion translates into looking for approximations of the form:
\begin{equation} \label{eq:approx}
\hat{b} = \arg \min_{b' \in \mathcal{C}l } \int_{t \in C(b)} dist(b \oplus t, b' \oplus t) dt,
\end{equation}
\noindent where $dist$ is a distance function in Cartesian coordinates (for instance, an $L_p$ norm (\ref{eq:dist})), and $\mathcal{C}l$ is the class of belief functions where the approximation must belong.

\subsubsection{The desired approximation in ${\mathcal{B}}_2$}

At least in the binary frame, such an approximation can indeed be computed. Let us focus, in particular, on the target class of Bayesian belief functions: $\mathcal{C}l = \mathcal{P}$. Firstly, intuition suggests a slight simplification of expression (\ref{eq:approx}).\vspace{3mm}

\textbf{Conjecture.} When computing the probabilistic approximation of a given belief function $b$, it suffices to measure the integral distance (\ref{eq:approx}) only on the collection of \emph{Bayesian} belief functions compatible with $b$, $\mathcal{P}\cap C(b)$, namely:
\begin{equation} \label{eq:approx1}
\hat{p} = \arg \min_{p \in \mathcal{P}} \int_{t \in \mathcal{P} \cap C(b)} dist(b \oplus t, p \oplus t) dt.
\end{equation}
As we know, on a binary frame the compatible subset coincides with the whole belief space $\mathcal{B}_2$ for every belief function distinct from $b_x$ and $b_y$, so that $\mathcal{P} \cap C(b) = \mathcal{P}$ $\forall b \in\mathcal{B}_2$. 

The outcome of approximation criterion (\ref{eq:approx1}) turns out to be rather interesting.

\begin{theorem} \label{the:prob-approx-in-b2}
For every belief function $b \in \mathcal{B}_2$ defined on a binary frame the solution of the optimization problem (\ref{eq:approx1}) is unique, and corresponds to the normalized plausibility of singletons (\ref{eq:relplaus}) {regardless the choice of the distance function} $L_{p}$ $\forall p$.
\end{theorem}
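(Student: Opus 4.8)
The claim is about belief functions on a binary frame $\Theta = \{x,y\}$, so the whole problem lives in the two-dimensional belief space $\mathcal{B}_2$ of Figure \ref{fig:b2}, with coordinates $b = [m_b(x), m_b(y)]'$ and $m_b(\Theta) = 1 - m_b(x) - m_b(y)$. Since $\mathcal{P}\cap C(b) = \mathcal{P}$ here, the quantity to minimize (over $p = [q, 1-q]'$, $q\in[0,1]$) is
\[
J_p(q) = \int_{t\in\mathcal{P}} \mathrm{dist}_{L_p}\big(b\oplus t,\, p\oplus t\big)\, dt,
\]
where $t = [s, 1-s]'$ ranges over the probability segment. The first thing I would do is write down $b\oplus t$ and $p\oplus t$ explicitly via the binary Dempster formula (\ref{eq:dempster2}). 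Because both $t$ and $p$ are Bayesian and $p\oplus t$ is again Bayesian, and because combining a Bayesian b.f. with anything yields a Bayesian b.f. concentrated according to the relative plausibilities (this is exactly Proposition \ref{pro:bayes}, Bayes' theorem, together with the multiplicativity of relative plausibilities of singletons), I expect $b\oplus t$ to collapse onto $\mathcal{P}$, landing at the point determined by $pl_b(x)\,s$ versus $pl_b(y)\,(1-s)$. Likewise $p\oplus t$ lands at the point determined by $q\,s$ versus $(1-q)(1-s)$. So the integrand reduces to the distance between two points of the one-dimensional simplex $\mathcal{P}$, i.e. a scalar.

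**Key steps.** First, reduce the integrand: show $b\oplus t = \big[\tfrac{pl_b(x)\,s}{pl_b(x)\,s + pl_b(y)\,(1-s)},\,\cdot\,\big]'$ and $p\oplus t = \big[\tfrac{q\,s}{q\,s + (1-q)(1-s)},\,\cdot\,\big]'$, using (\ref{eq:dempster2}) and $pl_b(x) = 1 - m_b(y)$, $pl_b(y) = 1 - m_b(x)$. Second, observe that on the segment $\mathcal{P}$ every $L_p$ norm of the difference of two points is a fixed positive multiple of the absolute difference of their first coordinates (all $L_p$ distances on a one-dimensional set agree up to the trivial constant $2^{1/p}$ coming from the two mirrored coordinates); hence
\[
J_p(q) = c_p \int_0^1 \left| \frac{pl_b(x)\,s}{pl_b(x)\,s + pl_b(y)(1-s)} - \frac{q\,s}{q\,s + (1-q)(1-s)} \right| dt,
\]
and the minimizing $q$ does not depend on $c_p$, which already gives the "regardless of $L_p$" part of the statement. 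Third, identify the minimizer: for each fixed $s\in(0,1)$ the integrand vanishes precisely when $q/(1-q) = pl_b(x)/pl_b(y)$, i.e. when $q = pl_b(x)/(pl_b(x) + pl_b(y))$ — which is the normalized plausibility of singletons $\tilde{pl}_b$ of (\ref{eq:relplaus}). Since a single value of $q$ makes the integrand identically zero in $s$, it is the unique global minimizer (the integrand is nonnegative and continuous in $q$, strictly positive for $q\neq\tilde{pl}_b(x)$ on a set of positive measure), and the minimum value of $J_p$ is $0$. Finally, note that $b\oplus t$ literally equals $\tilde{pl}_b \oplus t$ for all $t$, by Voorbraak's identity $\tilde{pl}_b\oplus p = b\oplus p$ stated in the excerpt — this is the conceptual reason the optimum is $\tilde{pl}_b$ and the optimal value is exactly zero, independent of the metric.

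**Main obstacle.** The routine-but-delicate part is Step 1: verifying algebraically that plugging a Bayesian $t$ into the binary Dempster formula (\ref{eq:dempster2}) really produces the clean "normalize $pl_b(x)s$ against $pl_b(y)(1-s)$" form, including checking the boundary cases $s\in\{0,1\}$ and the degenerate $b\in\{b_x,b_y\}$ (where $C(b)$ is not all of $\mathcal{B}_2$ and the Conjecture's simplification must be handled separately, or excluded as in the statement's implicit scope). The genuinely conceptual point — that $L_p$-independence is automatic because on a $1$-simplex all these norms are proportional, and that the minimum is $0$ and attained at $\tilde{pl}_b$ by Voorbraak's combinatorial identity — is short once Step 1 is in place. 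I would lean on Proposition \ref{pro:bayes} and the multiplicativity of relative plausibilities to avoid grinding the Dempster algebra by hand, and invoke $\tilde{pl}_b\oplus t = b\oplus t$ directly to collapse the integral to zero.
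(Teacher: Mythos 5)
Your proposal is correct and follows essentially the same route as the paper: both reduce $b\oplus t - p\oplus t$ to a scalar multiple of $[1,-1]'$ on the probability segment, observe that the $\tau$-dependent factor is strictly positive so the $L_p$ choice is immaterial, and conclude that the integrand vanishes identically precisely when $\pi = pl_b(x)/(pl_b(x)+pl_b(y)) = \tilde{pl}_b(x)$. Your invocation of Voorbraak's identity $\tilde{pl}_b\oplus t = b\oplus t$ is a pleasant conceptual shortcut for seeing that the optimal value is zero, but the underlying computation and uniqueness argument coincide with the paper's.
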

\begin{proof}
Using the analytic expression (\ref{eq:dempster2}) of Dempster's rule in $\mathcal{B}_2$, and adopting the notations $b = [a_1,a_2]'$,  $p = [\pi,1 - \pi ]$ and $t = [\tau, 1 - \tau]'$ for the involved belief and probability functions, we get:
\[
b \oplus t - p \oplus t = \frac{\tau (1-\tau)}{1 - p t} \cdot \frac{\pi(1-a_1+1-a_2)-(1-a_2)}{(1-a_1)+\tau(a_1-a_2)}\cdot [1,-1]' ,
\]
since both $b \oplus t$ and $p \oplus t$ belong to $\mathcal{P}$ (the line $a_1+a_2 = 1$), so that their difference is proportional to the vector $[1,-1]'$. This implies:
\[
\| b \oplus t - p\oplus t\|_{L_p}^p = 2 \cdot \left | \frac{\tau(1-\tau)}{1-\pi \tau}\cdot \frac{\pi(1-a_1+1-a_2)-(1-a_2)}{(1-a_1) + \tau(a_1 -a_2)} \right|^p.
\]

The solution of the desired approximation problem (\ref{eq:approx1}) becomes therefore:
\[
\begin{array}{l}
\displaystyle \int_{0}^1 \| b \oplus t - p\oplus t\|_{L_p}^p d\tau = \\ \hspace{10mm} \displaystyle = 2 \cdot
|\pi(1-a_1+1-a_2)-(1-a_2)|^p \cdot \int_{0}^1 \left | \frac{\tau(1 - \tau)}{(1 - \pi \tau)[(1-a_1)+\tau(a_1-a_2)]} \right |^p d\tau \\ \\ \hspace{10mm} = 2 \cdot |\pi(1-a_1+1-a_2)-(1-a_2)|^p\cdot I(\pi) = F(\pi)\cdot I(\pi).
\end{array}
\]
Of the two factors involved, $I(\pi)\neq 0$ for every $\pi$, since its argument is strictly positive for $\pi\in(0,1)$. The other factor $F(\pi)$, instead, is nil whenever $\pi(1-a_1+1-a_2)-(1-a_2)=0$, i.e., when:
\[
\pi=\frac{1-a_2}{1-a_1+1-a_2}.
\]
The sought approximation is therefore:
\[
\hat{p} = \left[ \frac{1-a_2}{1-a_1+1-a_2},\frac{1-a_1}{1-a_1+1-a_2} \right]'.
\]
\end{proof}

Going back to Chapter \ref{cha:toe}, Equation (\ref{eq:upper}), we can recognize the Bayesian belief function $\tilde{pl}_b$ obtained by normalizing the plausibility values of the singleton elements of the binary frame, or `relative plausibility of singletons' \cite{voorbraak89efficient,Cobb03isf,cuzzolin05hawaii,cuzzolin06ipmu,cuzzolin10amai}. It is interesting to note that, also:
\[
\tilde{pl}_b = \arg \min_{p \in\mathcal{P}} \| b - p \|^2,
\]
i.e., \emph{the normalized plausibility of singletons is also the unique probability that minimizes the standard quadratic distance from the original belief function in the Euclidean space}.

Theorem \ref{the:prob-approx-in-b2} suggests that the optimal approximation, according to criterion (\ref{eq:approx}), could be computed in closed form in the general case as well, at least in the case of probability transformations. In any case the proposed criterion has a general scope and rests on intuitive principles at the foundation of the theory of evidence. It has therefore the potential to bring order to the matter of transformations of belief functions if further developed, as we intend to do in the near future.

\section{Conclusive comments} \label{sec:comments}

The geometric framework introduced in this Chapter is still in its early days \cite{cuzzolin01thesis,cuzzolin01space,cuzzolin2008geometric}, although some interesting results have already been achieved. We now have an overall picture of the behavior of belief functions as geometrical objects, but many questions still need to be addressed \cite{cuzzolin03isipta}.\\ As far as our approximation criterion is concerned, our preliminary results appear to confirm the soundness of our criteria. Simple maths in the {consonant} approximation case confirm the independence of the outcome from the chosen distance function, and its link to what obtained by minimizing a standard quadratic distance.

The lack of an evidential counterpart of the notion of random process is perhaps one of the major drawbacks of the theory of evidence (as we mentioned in the Introduction), preventing a wider application of belief calculus to engineering problems. The knowledge of the geometrical form of conditional subspaces could indeed be useful to predict the behavior of \emph{series of belief functions}:
\[
\lim_{n\rightarrow \infty} (b_1 \oplus \cdots \oplus b_n)
\]
and their asymptotic properties.\\ On the other side, the geometric description of conditional subspaces promises to be a suitable tool for the solution of problems such as canonical decomposition and the search for a geometric construction of Dempster's rule, providing as well a quantitative measure of the \emph{distance from separability} of an arbitrary belief function. Recently, the Author has worked towards extending this geometric analysis of other combination operators \cite{Cuzzolin2018geo}.

It seems safe to argue that, although the geometric interpretation of belief functions was originally motivated by the approximation problem, its potential applications are rather more far-reaching \cite{rota97book,ha98geometric}.

\chapter{Algebraic structure of the families of compatible frames} \label{cha:alg}

\begin{center}
\includegraphics[width = 0.6 \textwidth]{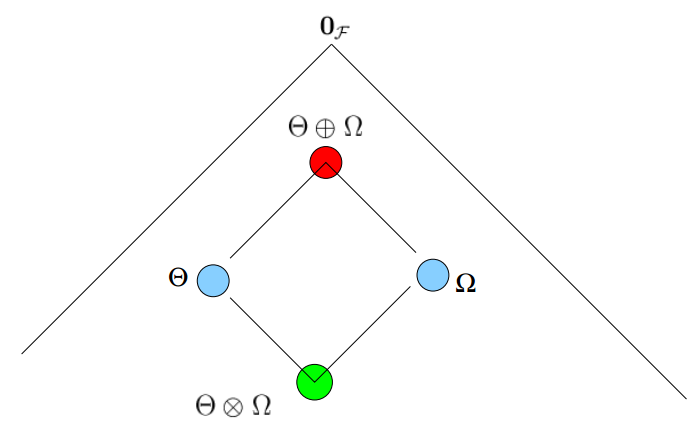}
\end{center}
\vspace{5mm}

The introduction of belief functions as the most suitable mathematical descriptions of empirical evidence or subjective states of belief, and the related mechanisms for their combination in a belief revision process is the most important contribution of the theory of evidence. Another major pillar of evidential reasoning is the formalization of the idea of structured collection of representations of the external world, encoded by the notion of `{family of frames}' (see Chapter \ref{cha:toe}, Section \ref{sec:families-of-frames}).\\ Indeed, in sensor fusion applications \cite{hunter06fusion} and computer vision \cite{Cuzzolin99}, among others, a variety of measurements of different nature need  typically to be extracted in order to make inferences on the problem at hand -- think for instance of different types of images features. If we choose to represent such measurements as belief functions, the latter turn out to be inherently defined on distinct frames of a family. Nevertheless, they need to be combined in order to reach a decision or to provide an estimate of the internal state of a system (e.g., the pose of an articulated body \cite{cuzzolin13fusion}).

Unfortunately, the combination of belief functions defined on distinct, compatible frames is not always possible. As we show in the following (Theorem \ref{the:7}), this is guaranteed at all times only when their domains are `{independent}' in the sense of Definition \ref{def:indep}. The existence of collections of belief functions which are not combinable under Dempster's rule is an issue often referred to by the term \emph{conflict problem} \cite{dubois92various,wierman01measuring,lefevre02if,cattaneo03isipta,josang03strategies,1163941}. A naive solution to the conflict problem consists in building a graph whose nodes are associated with the belief functions to combine, and recursively detecting the most coherent set of belief functions as a maximal clique of this graph \cite{Cuzzolin99}. This, however, is a rather ad-hoc solution which suffers from a high computational cost. In addition, no clear criteria for choosing a specific `maximal' collection of belief functions rather than a different one are provided by the theory.

\subsection*{Scope of the Chapter}

In this Chapter we lay the foundations for a rigorous algebraic analysis of the conflict problem, by studying the algebraic structure of the families of compatible frames \cite{kb95} as mathematical objects obeying a small number of axioms, namely those proposed by Shafer in \cite{Shafer76}. The intuition comes from a striking resemblance between the notion of independence of frames (see Definition \ref{def:indep})
\[
\begin{array}{cc}
\rho_1 (A_1) \cap \cdots \cap \rho_n (A_n) \neq \emptyset, & \forall A_i\subset\Theta_i
\end{array}
\]
and the familiar linear independence relation for collections of subspaces $\{V_i\}$ of a vector space $V$:
\[
\begin{array}{cc}
v_1 + \cdots + v_n \neq 0, & \forall v_i\in V_i.
\end{array}
\]
As we will see in Chapters \ref{cha:alg} and \ref{cha:independence}, this is more than a simple analogy: it is the symptom of a deeper similarity of these structures at an algebraic level.

Given a collection of arbitrary elements of a vector space a well-known procedure (the \emph{Gram-Schmidt algorithm}) can be applied to generate a new collection of independent vectors, spanning the same subspace. A similar procedure acting on a set of belief functions defined on arbitrary elements of a family of frames,  and able to generate a second collection of (combinable) belief functions defined on independent frames with Dempster's sum `equivalent' (in a sense to be defined) to that of the original set of b.f.s, would provide a definitive solution to the conflict problem. 

In Chapter \ref{cha:alg} we prepare the mathematical ground for this ambitious goal, by studying the monoidal and lattice structures of families of compatible frames of discernment. We distinguish finite from general families of frames, describe the monoidal properties of compatible collections of both frames and refinings, and introduce the internal operation of `{maximal coarsening}', which in turns induces in a family of frames the structures of Birkhoff, upper semimodular and lower semimodular lattice \cite{Szasz,Jacobson}.\\
Both vector subspaces and families of frames share the structure of \emph{Birkhoff lattice} \cite{Stern} (Corollary \ref{cor:family}). A formal linear independence relation can be introduced on the atoms of a Birkhoff lattice (the elements covering its initial element $\mathbf{0}$), which form therefore a \emph{matroid} \cite{Whitney35,harary69,Oxley,cuzzolin08isaim-matroid}, the algebraic structure which constitutes the classical formalization of the notion of independence. Unfortunately, this linear independence relation cannot be uniquely extended to arbitrary elements of the lattice, nor the resulting relations make the lattice itself a matroid. 

In Chapter \ref{cha:independence} we will investigate the relation between Shafer's classical definition of independence of frames and these various extensions of matroidal independence to compatible frames, as elements of a Birkhoff lattice, and draw some conclusions on the conjectured algebraic solution to the conflict problem \cite{cuzzolin14algebraic}.

\subsection*{Related Work} \label{sec:previous-work}

To out knowledge not much work has been done along this line of research. In \cite{Shafer87b} an analysis of the collections of partitions of a given frame in the context of a hierarchical representation of belief could be found. A more extensive discussion of the algebraic properties of the families of frames appeared in \cite{Kohlas95}. There, Chapter 7 was devoted to the lattice-theoretical interpretation of families of frames (compare Section \ref{sec:lattice-structure} of this Chapter) and the meaning of the concept of independence. Chapter 8, instead, explored the consequences of the application of suitable constraints to the structure of the family and developed in more detail the properties of {Markov trees}.

\subsection*{Chapter Outline} \label{sec:chapter-6-outline}

The Chapter is structured as follows. We start from Shafer's definition of a family of compatible frames, and look for a `{constructive}' set of axioms (Section \ref{sec:axiom-analysis}). Assuming a finite knowledge of the problem at hand (a realistic assumption in real-world applications), the latter allows us to build the subfamily of compatible frames of discernment generated by any given frame.

In Section \ref{sec:monoidal} we focus on these  finite subfamilies of frames, and show that the minimal refinement operator $\otimes$ induces on them a structure of commutative monoid with annihilator (Theorem \ref{the:monfin}). The collection of refinings of a finite subfamily of frames is also a monoid, as one can build an isomorphism between frames and refinings (Equation (\ref{eq:corr})). More importantly, both the set of frames $(\mathcal{F},\otimes)$ and the set of refinings $(\mathcal{R},\otimes)$ of a \emph{general} family of compatible frames of discernment admit the algebraic structure of commutative monoid (Section \ref{sec:general}), with finite subfamilies as their submonoids.

As the internal operation of a monoid induces an order relation, we are led to a lattice-theoretic interpretation of families of frames (Section \ref{sec:lattice-structure}). As a matter of fact, in Section \ref{sec:semimodular} we prove that the collection of sets of a family of compatible frames is a {Birkhoff lattice} (with minimal refinement as least upper bound, and the dual operation of `{maximal coarsening}' as greatest lower bound) of locally finite length and with a smallest element. Families of frames are also both upper semimodular and lower semimodular lattices, with respect to a dual pair of order relations.

\section{Axiom analysis} \label{sec:axiom-analysis}

Let us recall, for sake of simplicity, the axioms which define a family of frames $\{ \mathcal{F}, \mathcal{R} \}$ (Chapter \ref{cha:toe}, Definition \ref{def:1}).

\begin{axiom}
\emph{Composition of refinings}: if $\rho_1 : 2^{\Theta_1} \rightarrow 2^{\Theta_2}$ and
$\rho_2:2^{\Theta_2}\rightarrow2^{\Theta_3}$ are in $\mathcal{R}$, then $\rho_1 \circ \rho_2$ is in $\mathcal{R}$.
\end{axiom}
\begin{axiom} \emph{Identity of coarsenings}: if $\rho_1:2^{\Theta_1} \rightarrow 2^{\Omega}$and $\rho_2 : 2^{\Theta_2} \rightarrow2^{\Omega}$ are in $\mathcal{R}$ and
$\forall\;\theta_1\in\Theta_1\;\exists\;\theta_2\in\Theta_2 \;such\; that\;
\rho_1(\{\theta_1\}) = \rho_2(\{\theta_2\})$ then $\Theta_1=\Theta_2$ and $\rho_1 = \rho_2$.
\end{axiom}
\begin{axiom} 
\emph{Identity of refinings}: if $\rho_1:2^{\Theta}\rightarrow2^{\Omega}$ and $\rho_2:2^{\Theta}\rightarrow2^{\Omega}$ are in $\mathcal{R}$, then $\rho_1 = \rho_2$.
\end{axiom}
\begin{axiom} 
\emph{Existence of coarsenings}: if $\Omega\in\mathcal{F}$ and $A_1,...,A_n$ is a disjoint partition of $\Omega$, then there is a coarsening $\Omega'$ of $\Omega$ in $\mathcal{F}$ corresponding to this partition, i.e., $\forall A_i$ there exists an element of $\Omega'$ whose image
under the appropriate refining is $A_i$.
\end{axiom}
\begin{axiom} \emph{Existence of refinings}: if $\theta\in\Theta\in\mathcal{F}$ and $n \in \mathbb{N}$ then there exists a refining $\rho : 2^{\Theta} \rightarrow2^{\Omega}$ in $\mathcal{R}$ and $\Omega\in{\mathcal{F}}$ such that $\rho(\{\theta\})$ has $n$ elements.
\end{axiom}
\begin{axiom} \emph{Existence of common refinements}: every pair of elements in $\mathcal{F}$ has a common refinement in $\mathcal{F}$.
\end{axiom}

Consider now an arbitrary finite set $S$ and assess the results of applying the above axioms. We first need to apply Axiom $A4$, obtaining the collection of all the possible partitions of $S$ and the refinings between each of them and $S$ itself. By applying $A4$ again to the sets so generated we obtain all the refinings between them: no other set is added to the collection. Axioms $A2$ and $A3$ guarantee the uniqueness of the maps and sets so generated. Observe that rule $A1$ is \emph{in this situation} redundant, for it does not add any new refining.\\ Besides, it is clear at a first glance that rule $A6$ states an existence condition but is not `{constructive}', i.e., it does not allow us to generate new frames from a given initial collection.

Let us therefore introduce a new axiom:
\begin{axiom}
\emph{Existence of the minimal refinement}: every pair of elements of $\mathcal{F}$ (compatible frames) have a minimal refinement in $\mathcal{F}$, i.e., a frame satisfying the conditions of Proposition \emph{\ref{the:minimal}}.
\end{axiom}
\noindent and build a new set of axioms by replacing $A6$ with $A7$.\\ Let us call $A_{1..6}$ and $A_{1..5,7}$ these two formulations.

\begin{theorem} \label{the:equivalence-of-axioms}
$A_{1..6}$ and $A_{1..5,7}$ are equivalent formulations of the notion of family of frames.
\end{theorem}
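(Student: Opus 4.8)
The plan is to prove the two axiom systems equivalent by showing each implies the other, with the only real content being the mutual derivability of $A6$ (existence of common refinements) and $A7$ (existence of the minimal refinement) in the presence of $A1$--$A5$. The direction $A_{1..5,7} \Rightarrow A_{1..6}$ is immediate: a minimal refinement of a pair $\{\Theta_1,\Theta_2\}$ is in particular a common refinement, so $A7$ trivially entails $A6$, and $A1$--$A5$ are shared. Hence all six conditions of $A_{1..6}$ hold.

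For the converse, $A_{1..6} \Rightarrow A_{1..5,7}$, the task is to construct, given $A1$--$A6$, a minimal refinement of any pair $\Theta_1, \Theta_2 \in \mathcal{F}$. First I would invoke $A6$ to obtain \emph{some} common refinement $\Omega \in \mathcal{F}$ of $\Theta_1$ and $\Theta_2$, with refinings $\tau_i : 2^{\Theta_i} \rightarrow 2^\Omega$. Inside $\Omega$ one defines the partition whose blocks are the non-empty sets of the form $\tau_1(\{\theta_1\}) \cap \tau_2(\{\theta_2\})$ for $\theta_1 \in \Theta_1$, $\theta_2 \in \Theta_2$; these are pairwise disjoint and cover $\Omega$ because the $\tau_i(\{\theta\})$ partition $\Omega$ for each fixed $i$. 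By Axiom $A4$ (existence of coarsenings corresponding to partitions) there is a coarsening $\Theta \in \mathcal{F}$ of $\Omega$ realising exactly this partition, with a refining $\sigma : 2^\Theta \rightarrow 2^\Omega$. I would then define $\rho_i \doteq \underline{\sigma} \circ \tau_i$ (using the inner reduction, or more directly check that $\tau_i$ factors through $\sigma$) to get refinings $\rho_i : 2^{\Theta_i} \rightarrow 2^\Theta$, and verify that $\Theta$ together with these $\rho_i$ satisfies the two defining conditions of Theorem~\ref{the:minimal}: every singleton of $\Theta$ is, by construction of the partition, of the form $\rho_1(\{\theta_1\}) \cap \rho_2(\{\theta_2\})$, and the $\rho_i$ are genuine refinings since $\tau_i(\{\theta\})$ is always a union of blocks of the partition (it equals $\bigcup_{\theta'} \tau_i(\{\theta\}) \cap \tau_j(\{\theta'\})$). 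Uniqueness of $\Theta$ follows from Axiom $A2$ (identity of coarsenings). Since Proposition~\ref{pro:minimal} and Theorem~\ref{the:minimal} are already available, I can either reprove this directly or, more economically, note that $A1$--$A6$ are precisely Shafer's axioms under which Theorem~\ref{the:minimal} was established, so $A7$ holds as a \emph{theorem} in the system $A_{1..6}$.

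The main obstacle, such as it is, is purely bookkeeping: one must be careful that the maps $\rho_i$ obtained by composing $\tau_i$ with the coarsening map associated to $\sigma$ are well-defined refinings (conditions 1--3 of Definition~\ref{def:refining}), which rests on the observation that each $\tau_i(\{\theta\})$ is saturated with respect to the constructed partition of $\Omega$. Once that is checked, the verification of conditions (1) and (2) of Theorem~\ref{the:minimal} is direct from the definition of the blocks. A subtlety worth flagging is that $A6$ only guarantees common refinements of \emph{pairs}; extending to finite collections $\Theta_1,\dots,\Theta_n$ is routine induction, using that the binary minimal refinement is itself a frame in $\mathcal{F}$ to which $A6$ applies again, and associativity of $\otimes$ (which in turn follows from Proposition~\ref{pro:minimal}'s characterisation of $\Theta_1 \otimes \cdots \otimes \Theta_n$ as the coarsest common refinement). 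I would close by remarking that the equivalence shows $A7$ may replace $A6$ as a \emph{constructive} axiom without changing the notion of a family of compatible frames, which is exactly the methodological point needed for the rest of the Chapter.
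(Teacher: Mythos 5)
Your proposal is correct and follows essentially the same route as the paper: the direction $A7 \Rightarrow A6$ is handled identically (the minimal refinement is itself a common refinement, with $A5$ available to produce further ones), and for $A_{1..6} \Rightarrow A7$ the paper simply cites Shafer's Theorem~\ref{the:minimal}, which is the "economical" option you yourself flag. Your explicit construction of the minimal refinement via the partition of a common refinement into the non-empty intersections $\tau_1(\{\theta_1\})\cap\tau_2(\{\theta_2\})$ and Axiom $A4$ is just an unpacking of that cited result, not a different argument.
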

\begin{proof} 
It is necessary and sufficient to prove that (i) Axiom $A7$ can be obtained by using the set of axioms $A1,...,A6$ and (ii) that $A6$ can be obtained from $A1,...,A5,A7$.\\ (i) See \cite{Shafer76} or Proposition \ref{the:minimal}. (ii) Each common refinement of a given pair of frames $\Theta_1,\Theta_2$ can be obtained by arbitrarily refining $\Theta_1\otimes \Theta_2$ by means of Axiom $A5$. In fact their minimal refinement is obviously itself a refinement, so that $A7\Rightarrow A6$.
\end{proof}

\subsection{Family generated by a set} \label{sec:family-generated-by-a-set}

If we assume that our knowledge of the phenomenon is \emph{finite and static}, Axiom $A5$ of the definition of families of compatible frames (Definition \ref{def:1}) cannot be used. According to the  notation established above we will call $A_{1..4,7}$ the set of axioms corresponding to such finite knowledge case.
\begin{definition}
We define the \emph{subfamily generated by a collection of sets} $\Theta_1,...,\Theta_n$ by means of a set of axioms $\mathcal{A}$ as the smallest collection of frames $\langle \Theta_1,\cdots,\Theta_n \rangle_{\mathcal{A}}$ which includes $\Theta_1,...,\Theta_n$ and is closed under the application of the axioms in $\mathcal{A}$.
\end{definition}
\begin{lemma} \label{lem:auxx}
The minimal refinement of two coarsenings $\Theta_1$, $\Theta_2$ of a frame $\Theta$ is still a coarsening of $\Theta$.
\end{lemma}
\begin{proof} 
By hypothesis $\Theta$ is a common refinement of $\Theta_1$ and $\Theta_2$. Since the minimal refinement is a coarsening of every other refinement the thesis follows.
\end{proof}
Lemma \ref{lem:auxx} allow us to prove that:
\begin{theorem}
The subfamily of compatible frames generated by the application of the restricted set of rules $A_{1..4,7}$ to a basis frame $\Theta$ is the collection of all the disjoint partitions of $\Theta$ along with the associated refinings.
\end{theorem}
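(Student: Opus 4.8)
The plan is to exhibit the subfamily $\langle\Theta\rangle_{A_{1..4,7}}$ explicitly and then verify the two halves of ``smallest collection closed under the axioms''. Let $\mathcal{C}(\Theta)$ denote the collection consisting of all coarsenings of $\Theta$ (equivalently, the frames corresponding to the disjoint partitions of $\Theta$, with $\Theta$ itself coming from the partition into singletons) together with all refinings between pairs of its members. Recall that a refining $\rho:2^{\Omega_1}\rightarrow 2^{\Omega_2}$ can exist only when $\Omega_1$ is a coarsening of $\Omega_2$, and that by Axiom $A3$ there is at most one such map; hence the ``associated refinings'' are precisely the unique canonical refinings between comparable coarsenings of $\Theta$. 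Since $\Theta$ is finite it has finitely many disjoint partitions, so $\mathcal{C}(\Theta)$ is a genuine finite subfamily. Throughout, the underlying combinatorial fact is that the coarsenings of $\Theta$ are in bijection with the disjoint partitions of the finite set $\Theta$, with the minimal refinement $\Omega_1\otimes\Omega_2$ corresponding to the common refinement obtained by intersecting blocks of the two partitions.

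First I would check that $\mathcal{C}(\Theta)$ is closed under $A1$, $A2$, $A3$, $A4$ and $A7$ (the rules $A5$ and $A6$ are not in play). Closure under $A1$ and under the coarsening part of $A4$ rests on transitivity of the coarsening relation: a coarsening of a coarsening of $\Theta$ is again a coarsening of $\Theta$, so applying $A4$ to some $\Omega\in\mathcal{C}(\Theta)$ produces a frame still in $\mathcal{C}(\Theta)$ and a refining between two of its members; likewise the composite of two refinings between members of $\mathcal{C}(\Theta)$ is again a refining between members of $\mathcal{C}(\Theta)$, and by $A3$ it coincides with the canonical one already recorded. Closure under the uniqueness axioms $A2$ and $A3$ is immediate from the bijection between coarsenings of $\Theta$ and partitions of $\Theta$: two coarsenings of a common frame $\Omega$ whose singleton images coincide blockwise induce the same partition of $\Omega$, hence are the same frame equipped with the same (unique) refining. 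For $A7$ I would invoke Lemma \ref{lem:auxx}: given $\Omega_1,\Omega_2\in\mathcal{C}(\Theta)$, the frame $\Theta$ is a common refinement of both, so the minimal refinement $\Omega_1\otimes\Omega_2$ is again a coarsening of $\Theta$, hence lies in $\mathcal{C}(\Theta)$, together with the refinings onto it.

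Next I would prove minimality: any collection $\mathcal{G}$ of frames and refinings containing $\Theta$ and closed under $A_{1..4,7}$ must contain all of $\mathcal{C}(\Theta)$. Applying $A4$ to $\Theta$ and to each of its disjoint partitions forces every coarsening of $\Theta$, together with its refining to $\Theta$, into $\mathcal{G}$. To capture the refinings between two comparable coarsenings $\Omega_2$ and $\Omega_1$ of $\Theta$ (with $\Omega_2$ a coarsening of $\Omega_1$), apply $A4$ again to $\Omega_1$, using the partition of $\Omega_1$ obtained by pulling back, through the refining $2^{\Omega_1}\rightarrow 2^{\Theta}$, the partition of $\Theta$ defining $\Omega_2$ (this pulls back to a legitimate partition of $\Omega_1$ precisely because $\Omega_2$ being a coarsening of $\Omega_1$ means the $\Omega_2$-blocks of $\Theta$ are unions of $\Omega_1$-blocks). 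This yields a coarsening of $\Omega_1$ and a refining $2^{\Omega'}\rightarrow 2^{\Omega_1}$; by $A2$, applied with the common refinement $\Theta$, that coarsening is exactly $\Omega_2$ and the refining is the one already present in $\mathcal{C}(\Theta)$. Since every comparable pair of coarsenings of $\Theta$ arises this way, $\mathcal{G}$ contains all refinings of $\mathcal{C}(\Theta)$ too. Combining the two inclusions gives $\langle\Theta\rangle_{A_{1..4,7}}=\mathcal{C}(\Theta)$, which is the assertion.

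The only delicate point — and the main obstacle — is the bookkeeping of refinings rather than frames: one must confirm that the maps produced by composition ($A1$), by reduction to a partition ($A4$), and by minimal refinement ($A7$) are literally the canonical refinings already in $\mathcal{C}(\Theta)$, which is exactly where $A3$ (uniqueness of refinings) and the associativity of composition do the work. Everything else is the elementary correspondence between coarsenings of the finite set $\Theta$ and its partition lattice.
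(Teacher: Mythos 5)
Your proof is correct and follows essentially the same route as the paper: generate all coarsenings and their refinings by repeated application of $A4$, use $A2$ and $A3$ for uniqueness, note that $A1$ adds nothing new, and invoke Lemma \ref{lem:auxx} to get closure under $A7$. You merely make explicit the two halves (closure of the candidate collection and minimality) that the paper leaves as an informal generation argument.
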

Note that this is not necessarily true when Axiom $A6$ is employed.

\section{Monoidal structure of families of frames} \label{sec:monoidal}

Let us introduce in a family of compatible frames the internal operation
\begin{equation}
\begin{array}{llll} \otimes : & \mathcal{F}\times \cdots \times
\mathcal{F} & \longrightarrow & \mathcal{F}\\ & \{\Theta_1,...,\Theta_n\}& \mapsto & \otimes_i \Theta_i
\end{array}
\end{equation}
mapping a collection of frames to their minimal refinement. The above operation is well defined, for Axiom $A7$ ensures the existence of $\otimes_i \Theta_i$ and the results which follow guarantee its associativity and commutativity.

\subsection{Finite families as commutative monoids} \label{sec:finite-families}

Let us first consider finite subfamilies of frames of the form $\{ \mathcal{F}', \mathcal{R}' \} = \langle \Omega \rangle_{A_{1..4,7}}$ for some $\Omega \in \mathcal{F}$.

\begin{theorem} \label{the:monfin}
A finite family of frames of discernment $(\mathcal{F}',\otimes)$ is a \emph{finite commutative monoid with annihilator} (recall Chapter \ref{cha:geo}, Definition \ref{def:commutative-monoid}) with respect to the internal operation of minimal refinement.
\end{theorem}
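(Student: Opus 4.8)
The plan is to verify the four clauses of Definition \ref{def:commutative-monoid} (associativity, commutativity, existence of a unit, and — for the ``with annihilator'' part — existence of a zero), working throughout from the universal characterization of the minimal refinement recorded in Proposition \ref{pro:minimal}: for frames $\Theta_1,\dots,\Theta_k$ in $\mathcal{F}'$, the frame $\Theta_1\otimes\cdots\otimes\Theta_k$ is the unique member of $\mathcal{F}'$ that is simultaneously a common refinement of the $\Theta_i$ and a coarsening of every common refinement of the $\Theta_i$. Before touching the algebra I would first settle well-definedness and finiteness. By the structural theorem of Section \ref{sec:family-generated-by-a-set}, $\mathcal{F}'=\langle\Omega\rangle_{A_{1..4,7}}$ is precisely the collection of disjoint partitions of the finite set $\Omega$ together with their refinings, so $\mathcal{F}'$ is finite; existence of $\Theta_1\otimes\Theta_2$ is ensured by Axiom $A7$, and Lemma \ref{lem:auxx} guarantees that the minimal refinement of two coarsenings of $\Omega$ is again a coarsening of $\Omega$, hence still lies in $\mathcal{F}'$, so $\otimes$ is a genuine internal (and, by iteration, finitary) operation. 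Moreover $\Omega$ itself is always a common refinement of any two members of $\mathcal{F}'$ (each being a coarsening of $\Omega$), so the universal property never selects from the empty set.

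Next I would dispatch the algebraic axioms. Commutativity is immediate, since the characterization of $\Theta_1\otimes\Theta_2$ in Proposition \ref{pro:minimal} is symmetric in its arguments. For associativity I would show that both $(\Theta_1\otimes\Theta_2)\otimes\Theta_3$ and $\Theta_1\otimes(\Theta_2\otimes\Theta_3)$ satisfy the universal property for the triple $\{\Theta_1,\Theta_2,\Theta_3\}$, and then invoke uniqueness to conclude that both coincide with $\Theta_1\otimes\Theta_2\otimes\Theta_3$. Concretely, $(\Theta_1\otimes\Theta_2)\otimes\Theta_3$ is a common refinement of the three frames by transitivity of refinings (Axiom $A1$); and if $\Phi$ is any common refinement of $\Theta_1,\Theta_2,\Theta_3$, then $\Phi$ is in particular a common refinement of $\Theta_1$ and $\Theta_2$, hence (by the universal property of $\Theta_1\otimes\Theta_2$) a refinement of $\Theta_1\otimes\Theta_2$, hence a common refinement of $\Theta_1\otimes\Theta_2$ and $\Theta_3$, hence a refinement of $(\Theta_1\otimes\Theta_2)\otimes\Theta_3$. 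The symmetric argument handles $\Theta_1\otimes(\Theta_2\otimes\Theta_3)$. For the unit I would take $\mathbf{1}$ to be the one-element frame (the trivial single-block partition of $\Omega$), which belongs to $\mathcal{F}'$ by Axiom $A4$ and is a coarsening of every member of $\mathcal{F}'$; then for any $\Theta$, the frame $\Theta$ is a common refinement of $\Theta$ and $\mathbf{1}$, and any common refinement of $\Theta$ and $\mathbf{1}$ refines $\Theta$, so $\Theta$ meets the universal property and $\Theta\otimes\mathbf{1}=\Theta$. For the annihilator I would take $\mathbf{0}=\Omega$: every $\Theta\in\mathcal{F}'$ is a coarsening of $\Omega$, so $\Omega$ is a common refinement of $\Theta$ and $\Omega$, and any common refinement of $\Theta$ and $\Omega$ refines $\Omega$, whence $\Omega$ meets the universal property and $\Theta\otimes\Omega=\Omega$ for all $\Theta$.

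Once the universal property of $\otimes$ is in hand the whole argument is essentially bookkeeping, so I do not expect deep difficulty; the step requiring the most care — and the one I would treat as the main obstacle — is associativity, where one must correctly lift a common refinement of $\{\Theta_1,\Theta_2,\Theta_3\}$ to a common refinement of $\{\Theta_1\otimes\Theta_2,\Theta_3\}$ via the universal property of the inner product and then close the argument by uniqueness in Proposition \ref{pro:minimal}. A secondary point worth making explicit is that every ``common refinement'' invoked along the way genuinely exists inside the finite subfamily $\mathcal{F}'$ (thanks to Axiom $A7$ and Lemma \ref{lem:auxx}), so that $\otimes$ never leaves $\mathcal{F}'$ — which is exactly what makes $(\mathcal{F}',\otimes)$ a \emph{finite} commutative monoid with annihilator rather than merely a partial operation.
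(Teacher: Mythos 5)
Your proof is correct, but it reaches the conclusion by a different route than the paper on the two non-trivial clauses. For associativity the paper simply appeals to the intersection characterization of Theorem \ref{the:minimal} (elements of $\Theta_1\otimes\cdots\otimes\Theta_n$ are non-empty intersections $\bigcap_i\rho_i(\{\theta_i\})$, so associativity and commutativity are inherited from set-theoretic intersection), whereas you derive it from the universal property in Proposition \ref{pro:minimal} by showing that both bracketings satisfy the defining property of the minimal refinement of the triple and invoking uniqueness. Your argument is the standard order-theoretic one for the join of a poset and effectively anticipates Theorem \ref{the:sup} later in the chapter; the paper's is shorter but leans on the concrete construction of the minimal refinement. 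For the unit, the paper does \emph{not} presuppose the structural theorem identifying $\mathcal{F}'$ with the partitions of $\Omega$: it constructs a unitary coarsening $1_\Theta$ of each frame via Axiom $A4$, pushes it to $\Omega$ via $A1$, and then uses Axiom $A2$ (identity of coarsenings) to prove that all these unitary frames coincide, so uniqueness of the unit is established from the axioms alone. You instead take the one-block partition of $\Omega$ and rely on the structural theorem of Section \ref{sec:family-generated-by-a-set} to see that it coarsens every member of $\mathcal{F}'$; this is legitimate since that theorem is proved before Theorem \ref{the:monfin}, but it makes your unit argument dependent on that characterization where the paper's is self-contained. The annihilator argument ($\Theta\otimes\Omega=\Omega$ because every $\Theta\in\mathcal{F}'$ is a coarsening of $\Omega$) is identical in both. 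Your explicit attention to well-definedness and closure of $\otimes$ inside $\mathcal{F}'$ (via Axiom $A7$ and Lemma \ref{lem:auxx}) is a point the paper leaves implicit and is worth keeping.
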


\begin{proof} 
\emph{Associativity and commutativity.} Going back to Theorem \ref{the:minimal}, associativity and commutativity follow from the analogous properties of set-theoretic intersection.

\emph{Unit.} Let us prove that there exists a \emph{unique} frame in $\mathcal{F}'$ of cardinality 1. As $\Theta\in\mathcal{F}'$, due to Axiom $A4$ (existence of coarsenings) there exists a coarsening $1_{\Theta}$ of $\Theta$, together with the refining $\rho_{1_{\Theta}\Theta} : 2^{1_{\Theta}} \longrightarrow 2^{\Theta}$. But then, by Axiom $A1$ there exists another refining $1_{\rho}\in \mathcal{R}'$ such that 
\[
1_{\rho}:2^{1_{\Theta}}\longrightarrow 2^{\Omega}.
\]
Now, consider a pair of elements of $\mathcal{F}'$, say $\Theta_1,\;\Theta_2$. The above procedure yields two pairs set-refining $(1_{\Theta_i},1_{\rho_i})$ with $1_{\rho_i}:2^{1_{\Theta_i}} \rightarrow 2^{\Omega}$. If we call $1_{\theta_i}$ the single element of $1_{\Theta_i}$ we have that:
\[
1_{\rho_i}(\{1_{\theta_i}\}) = \Omega \hspace{5mm} \forall i=1,2.
\]
By Axiom $A2$ (`identity of coarsenings') the uniqueness of the unit frame follows: $1_{\rho_1} = 1_{\rho_2}$, $1_{\Theta_1}=1_{\Theta_2}$. 

\emph{Annihilator.} If $\Theta\in\mathcal{F}'$ then obviously $\Theta$ is a coarsening of $\Omega$. Therefore, their minimal refinement coincides with $\Omega$ itself.
\end{proof}

\subsubsection{Isomorphism frames-refinings}

A family of frames can be dually viewed as a set of refining maps with attached the associated domains and codomains (perhaps the most correct approach, for it takes frames into account  automatically). The following correspondence can be established in a finite family of frames $\langle \Omega \rangle_{A_{1..4,7}}$:
\begin{equation} \label{eq:corr}
\Theta \longleftrightarrow \rho^{\Theta}_\Omega : 2^{\Theta} \rightarrow 2^\Omega,
\end{equation}
which associates each frame of the family with the corresponding unique refining to the base set $\Omega$. As a consequence, the minimal refinement operator induces a composition of refinings as follows.

\begin{definition} \label{def:composition-of-refinings}
Given two refinings $\rho_1 : 2^{\Theta_1} \rightarrow 2^{\Omega}$ and $\rho_2 : 2^{\Theta_2} \rightarrow 2^{\Omega}$, their composition induced by the operation of minimal refinement is the unique (by Axiom $A3$) refining from $\Theta_1 \otimes \Theta_2$ to $\Omega$:
\begin{equation} \label{eq:compo}
\rho_1 \otimes \rho_2 : 2^{\Theta_1 \otimes \Theta_2} \rightarrow 2^\Omega. 
\end{equation}
\end{definition}

\begin{theorem} \label{the:monoid-of-refinings}
The subcollection of refinings of a finite family of frames $\langle \Omega \rangle_{A_{1..4,7}}$ with codomain $\Omega$, namely:
\[
\langle \Omega \rangle_{A_{1..4,7}}^{\rho^{(.)}_\Omega} \doteq \Big \{ \rho^{\Theta}_\Omega,\;\Theta \in \langle \Omega \rangle_{A_{1..4,7}} \Big \}
\]
is a finite commutative monoid with annihilator with respect to operation (\ref{eq:compo}).
\end{theorem}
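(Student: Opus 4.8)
The plan is to transport the monoid structure of $(\mathcal{F}', \otimes)$ established in Theorem \ref{the:monfin} across the bijection (\ref{eq:corr}) between frames of $\langle \Omega \rangle_{A_{1..4,7}}$ and refinings with codomain $\Omega$. First I would observe that the correspondence $\Theta \longleftrightarrow \rho^{\Theta}_\Omega$ is genuinely a bijection: surjectivity is immediate since every element of $\langle \Omega \rangle_{A_{1..4,7}}^{\rho^{(.)}_\Omega}$ is by definition of the form $\rho^{\Theta}_\Omega$ for some $\Theta$, while injectivity follows from Axiom $A3$ (identity of refinings), which guarantees that for a fixed pair $(\Theta, \Omega)$ there is at most one refining in $\mathcal{R}'$, so distinct frames yield distinct maps (and distinct frames really are distinct as coarsenings by Axiom $A2$). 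Moreover, Definition \ref{def:composition-of-refinings} was set up precisely so that $\rho^{\Theta_1}_\Omega \otimes \rho^{\Theta_2}_\Omega = \rho^{\Theta_1 \otimes \Theta_2}_\Omega$; this identity is what makes the bijection an isomorphism of magmas, and again its well-definedness rests on Axiom $A3$.

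Given that, the three monoid axioms transfer automatically. For \emph{associativity} and \emph{commutativity}: for any $\Theta_1, \Theta_2, \Theta_3 \in \langle \Omega \rangle_{A_{1..4,7}}$ we have
\[
(\rho^{\Theta_1}_\Omega \otimes \rho^{\Theta_2}_\Omega) \otimes \rho^{\Theta_3}_\Omega = \rho^{(\Theta_1 \otimes \Theta_2) \otimes \Theta_3}_\Omega = \rho^{\Theta_1 \otimes (\Theta_2 \otimes \Theta_3)}_\Omega = \rho^{\Theta_1}_\Omega \otimes (\rho^{\Theta_2}_\Omega \otimes \rho^{\Theta_3}_\Omega),
\]
using associativity of $\otimes$ on frames (Theorem \ref{the:monfin}), and similarly $\rho^{\Theta_1}_\Omega \otimes \rho^{\Theta_2}_\Omega = \rho^{\Theta_1 \otimes \Theta_2}_\Omega = \rho^{\Theta_2 \otimes \Theta_1}_\Omega = \rho^{\Theta_2}_\Omega \otimes \rho^{\Theta_1}_\Omega$ for commutativity. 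For the \emph{unit}: let $1_\Theta$ be the unique one-element frame of the family, whose existence and uniqueness were shown in the proof of Theorem \ref{the:monfin}; then $\rho^{1_\Theta}_\Omega$ is the unit of the refining monoid, since $\rho^{1_\Theta}_\Omega \otimes \rho^{\Theta}_\Omega = \rho^{1_\Theta \otimes \Theta}_\Omega = \rho^{\Theta}_\Omega$ because $1_\Theta \otimes \Theta = \Theta$ ($1_\Theta$ being a coarsening of every frame, hence its minimal refinement with any $\Theta$ is $\Theta$). For the \emph{annihilator}: $\Omega$ itself is the annihilator element of $(\mathcal{F}', \otimes)$, and the identity refining $\rho^{\Omega}_\Omega = \mathrm{id}_{2^\Omega}$ satisfies $\rho^{\Omega}_\Omega \otimes \rho^{\Theta}_\Omega = \rho^{\Omega \otimes \Theta}_\Omega = \rho^{\Omega}_\Omega$ since $\Omega \otimes \Theta = \Omega$ for every $\Theta \in \mathcal{F}'$ (every frame being a coarsening of $\Omega$). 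Finiteness is clear since the set of frames is finite and the correspondence is a bijection.

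The only real subtlety — and the step I would write most carefully — is verifying that the composition operation of Definition \ref{def:composition-of-refinings} is internal to $\langle \Omega \rangle_{A_{1..4,7}}^{\rho^{(.)}_\Omega}$, i.e., that $\Theta_1 \otimes \Theta_2$ again lies in the finite subfamily. This is exactly the content of the earlier structural results: by Lemma \ref{lem:auxx} the minimal refinement of two coarsenings of $\Omega$ is again a coarsening of $\Omega$, and since (by the theorem on the structure of $\langle \Omega \rangle_{A_{1..4,7}}$) the subfamily consists precisely of all disjoint partitions of $\Omega$ together with their canonical refinings to $\Omega$, closure of the frame set under $\otimes$ is guaranteed — so the induced refining $\rho^{\Theta_1 \otimes \Theta_2}_\Omega$ is indeed an element of the collection. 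Once internality is in hand, everything else is the routine transport of structure sketched above, and I would simply remark that $(\mathcal{F}', \otimes) \cong (\langle \Omega \rangle_{A_{1..4,7}}^{\rho^{(.)}_\Omega}, \otimes)$ as monoids via (\ref{eq:corr}), whence the latter inherits the finite-commutative-monoid-with-annihilator structure from Theorem \ref{the:monfin}.
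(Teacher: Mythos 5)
Your proof is correct and follows essentially the same route as the paper's: both arguments reduce the monoid axioms for refinings to the corresponding properties of the minimal refinement of their domains, using the uniqueness axioms to identify $\rho^{\Theta_1}_\Omega \otimes \rho^{\Theta_2}_\Omega$ with $\rho^{\Theta_1\otimes\Theta_2}_\Omega$ and picking out $\rho^{1_\Omega}_\Omega$ and $\rho^{\Omega}_\Omega$ as unit and annihilator. The only difference is presentational — you establish the isomorphism (\ref{eq:corr}) up front and transport the structure, whereas the paper verifies the axioms directly on domains and records the isomorphism afterwards as a corollary.
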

\begin{proof} 
Associativity and commutativity follow from the analogous properties of the minimal refinement operator. The unit element is $1_{\rho} : 2^{1_\Omega} \rightarrow 2^\Omega$, as
\[
Dom(1_{\rho} \otimes \rho) = 1_\Omega \otimes Dom(\rho) = Dom(\rho)
\]
which implies $1_{\rho} \otimes \rho = \rho$ by Axiom $A2$. On the other side, if we consider the unique refining $0_{\rho} : 2^\Omega \rightarrow 2^\Omega$ from $\Omega$ onto itself (which exists by Axiom $A4$ with $n=1$) we have:
\[
Dom(0_{\rho} \otimes \rho) = \Omega \otimes Dom(\rho) = \Omega,
\]
so that $0_{\rho}$ is an annihilator for $\langle \Omega \rangle_{A_{1..4,7}}^{\rho^{(.)}_\Omega}$.
\end{proof}

From Theorem \ref{the:monoid-of-refinings}'s proof it follows that:

\begin{corollary}
Given a finite family of compatible frames, the map (\ref{eq:corr}) is an isomorphism between commutative monoids.
\end{corollary}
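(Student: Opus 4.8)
The plan is to verify, in turn, that the correspondence (\ref{eq:corr}) is well defined, bijective, and compatible with the two monoid structures --- operation, unit and annihilator. Most of the ingredients are already available from the proofs of Theorems \ref{the:monfin} and \ref{the:monoid-of-refinings}, so the corollary is essentially a matter of assembling them. First I would recall that every frame $\Theta$ in a finite subfamily $\langle \Omega \rangle_{A_{1..4,7}}$ is a disjoint partition, hence a coarsening, of $\Omega$; consequently a refining $2^{\Theta} \rightarrow 2^{\Omega}$ exists, and Axiom $A3$ (identity of refinings) forces it to be unique. Thus $\rho^{\Theta}_{\Omega}$ is unambiguous, so the map of (\ref{eq:corr}) is well defined, and by construction its image is the whole monoid $\langle \Omega \rangle_{A_{1..4,7}}^{\rho^{(.)}_\Omega}$, which yields surjectivity for free. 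Injectivity is handled by Axiom $A2$ (identity of coarsenings): if $\rho^{\Theta_1}_{\Omega}$ and $\rho^{\Theta_2}_{\Omega}$ coincide, then for every $\theta_1 \in \Theta_1$ the set $\rho^{\Theta_1}_{\Omega}(\{\theta_1\})$ equals $\rho^{\Theta_2}_{\Omega}(\{\theta_2\})$ for some $\theta_2 \in \Theta_2$, whence $\Theta_1 = \Theta_2$.

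For the homomorphism property, the key observation is that the composition of refinings induced by minimal refinement (Definition \ref{def:composition-of-refinings}, Equation (\ref{eq:compo})) was introduced precisely as the unique refining $2^{\Theta_1 \otimes \Theta_2} \rightarrow 2^{\Omega}$; by uniqueness (again Axiom $A3$) this is exactly $\rho^{\Theta_1 \otimes \Theta_2}_{\Omega}$. Hence $\rho^{\Theta_1}_{\Omega} \otimes \rho^{\Theta_2}_{\Omega} = \rho^{\Theta_1 \otimes \Theta_2}_{\Omega}$, i.e.\ the map intertwines the operation $\otimes$ on frames with the composition $\otimes$ on refinings. The same bookkeeping shows that the unit $1_{\Omega}$ of $(\mathcal{F}',\otimes)$ is sent to the unit $1_{\rho}$ of $\langle \Omega \rangle_{A_{1..4,7}}^{\rho^{(.)}_\Omega}$, and the annihilator $\Omega$ is sent to the annihilator $0_{\rho} : 2^{\Omega} \rightarrow 2^{\Omega}$; both identifications have in fact already been carried out inside the proof of Theorem \ref{the:monoid-of-refinings}. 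Combining these three points gives that (\ref{eq:corr}) is a bijective monoid homomorphism carrying unit to unit and annihilator to annihilator, hence an isomorphism.

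I do not expect a genuine obstacle here: the statement is, by design, a repackaging of the two preceding theorems. The one step that deserves a line of justification rather than being automatic is well-definedness of the map, which rests on the structural fact (established above for the restricted axiom set $A_{1..4,7}$) that the finite subfamily generated from $\Omega$ consists exactly of the partitions of $\Omega$, so that $\Omega$ dominates every member and all the refinings $\rho^{\Theta}_{\Omega}$ exist and are unique. Once Definition \ref{def:composition-of-refinings} is in hand, the clauses concerning the operation, the unit and the annihilator are immediate, and no further computation is needed.
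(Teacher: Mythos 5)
Your proposal is correct and follows essentially the same route as the paper, which simply observes that the corollary falls out of the proof of Theorem \ref{the:monoid-of-refinings}: bijectivity comes from Axioms $A2$--$A3$ together with the fact that the finite subfamily consists exactly of the partitions of $\Omega$, and the homomorphism property is built into Definition \ref{def:composition-of-refinings}, since $\rho^{\Theta_1}_\Omega \otimes \rho^{\Theta_2}_\Omega$ and $\rho^{\Theta_1 \otimes \Theta_2}_\Omega$ are both, by Axiom $A3$, \emph{the} unique refining $2^{\Theta_1 \otimes \Theta_2} \rightarrow 2^{\Omega}$. Your write-up merely makes explicit the bookkeeping (well-definedness, injectivity, preservation of unit and annihilator) that the paper leaves implicit, and no gap remains.
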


It is interesting to note that the existence of both the unit element and the annihilator in a finite family of frames are consequences of the following result.

\begin{proposition} \label{pro:1}
$(\rho^{\Theta}_\Omega \circ \rho^{\Theta'}_{\Theta}) \otimes \rho^{\Theta}_\Omega = \rho^{\Theta}_\Omega \hspace{5mm} \forall \Theta, \Theta' \in \langle \Omega \rangle_{A_{1..4,7}}$.
\end{proposition}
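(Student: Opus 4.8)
The plan is to collapse the identity onto the uniqueness axiom $A3$ together with the defining property of the minimal refinement, using the frame--refining isomorphism (\ref{eq:corr}). First I would rewrite the left-hand side: by Axiom $A1$ the composite $\rho^{\Theta}_\Omega \circ \rho^{\Theta'}_{\Theta}$ is a refining $2^{\Theta'} \rightarrow 2^\Omega$, and since $A3$ guarantees that at most one refining links a given ordered pair of frames, this composite is exactly the canonical refining $\rho^{\Theta'}_\Omega$ attached to $\Theta'$ by the correspondence (\ref{eq:corr}). So the quantity to be evaluated is $\rho^{\Theta'}_\Omega \otimes \rho^{\Theta}_\Omega$.

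Next I would unwind Definition \ref{def:composition-of-refinings}: $\rho^{\Theta'}_\Omega \otimes \rho^{\Theta}_\Omega$ is, by definition, the unique refining $2^{\Theta' \otimes \Theta} \rightarrow 2^\Omega$, i.e. $\rho^{\Theta' \otimes \Theta}_\Omega$. Hence the whole statement reduces to the claim that $\Theta' \otimes \Theta = \Theta$ as frames of $\langle \Omega \rangle_{A_{1..4,7}}$; once this is known, $A3$ again forces $\rho^{\Theta' \otimes \Theta}_\Omega = \rho^{\Theta}_\Omega$ and the proof is finished by assembling $(\rho^{\Theta}_\Omega \circ \rho^{\Theta'}_{\Theta}) \otimes \rho^{\Theta}_\Omega = \rho^{\Theta'}_\Omega \otimes \rho^{\Theta}_\Omega = \rho^{\Theta' \otimes \Theta}_\Omega = \rho^{\Theta}_\Omega$.

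The only non-bookkeeping step, and the one I expect to be the main obstacle, is establishing $\Theta' \otimes \Theta = \Theta$ whenever $\Theta$ refines $\Theta'$ — which is our situation, since inside $\langle \Omega \rangle_{A_{1..4,7}}$ the frame $\Theta'$ occurs as a coarsening of $\Theta$ via $\rho^{\Theta'}_\Theta$. I would argue it as follows: $\Theta$ is a common refinement of the pair $\{\Theta',\Theta\}$, and every common refinement $\Psi$ of this pair is in particular a refinement of $\Theta$, so $\Theta$ is a coarsening of every common refinement of $\{\Theta',\Theta\}$. By Proposition \ref{pro:minimal}, $\Theta' \otimes \Theta$ is the \emph{unique} common refinement with exactly this property, whence $\Theta' \otimes \Theta = \Theta$. (An alternative route uses Lemma \ref{lem:auxx}: there $\Theta' \otimes \Theta$ is seen to be simultaneously a refinement and a coarsening of $\Theta$; a cardinality comparison then forces the connecting refining to map singletons to singletons, and $A2$--$A3$ identify the two frames.) Everything else in the argument is a direct appeal to the axioms and to Definition \ref{def:composition-of-refinings}, so I anticipate no further difficulty; one may also remark afterwards that, specialising $\Theta' = \Theta$, Proposition \ref{pro:1} exhibits $\rho^{\Theta}_\Omega$ as idempotent under $\otimes$, which is precisely the mechanism behind the existence of the unit ($\Theta = 1_\Omega$) and the annihilator ($\Theta = \Omega$) in Theorem \ref{the:monoid-of-refinings}.
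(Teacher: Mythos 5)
Your proposal is correct and follows essentially the same route as the paper's proof: both reduce the identity to the observation that the domain of the composed refining is $\Theta' \otimes \Theta = \Theta$ (since $\Theta'$ is a coarsening of $\Theta$) with codomain $\Omega$, and then invoke Axiom $A3$ to identify it with $\rho^{\Theta}_\Omega$. The only difference is that you spell out the step $\Theta' \otimes \Theta = \Theta$ via Proposition \ref{pro:minimal}, which the paper leaves implicit.
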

\begin{proof} 
As $Cod(\rho^{\Theta}_\Omega \circ \rho^{\Theta'}_{\Theta}) = \Omega$, the mapping $\rho^{\Theta}_\Omega \circ \rho^{\Theta'}_{\Theta}$ is a refining to the base set $\Omega$ of the finite family of frames $\langle \Omega \rangle_{A_{1..4,7}}$, and the composition of refinings $\otimes$ (\ref{eq:compo}) can be applied. After noting that $Dom(\rho^{\Theta}_\Omega \circ \rho^{\Theta'}_{\Theta}) = \Theta'$ is a coarsening of $Dom(\rho^{\Theta}_\Omega)=\Theta$ we get:
\[
\begin{array}{l}
Dom((\rho^{\Theta}_\Omega \circ \rho^{\Theta'}_{\Theta}) \otimes \rho^{\Theta}_\Omega) =Dom(\rho^{\Theta}_\Omega \circ \rho^{\Theta'}_{\Theta}) \otimes Dom(\rho^{\Theta}_\Omega) = Dom(\rho^{\Theta}_\Omega) = \Theta. 
\end{array}
\] 
By Axiom $A3$ the thesis follows.
\end{proof}

As a matter of fact, if $\Theta' = 1_\Omega$ then $\rho^{\Theta}_\Omega \circ \rho^{\Theta'}_{\Theta} = \rho^{\Theta}_\Omega \circ \rho^{1_\Omega}_{\Theta} = 1_{\rho}$ and we get $1_{\rho} \otimes \rho^{\Theta}_\Omega = \rho^{\Theta}_\Omega$. On the other hand, whenever $\Theta = \Omega$ we have $\rho^{\Theta}_\Omega = \rho^\Omega_\Omega = 0_{\rho}$ and the annihilation property $\rho^{\Theta'}_\Omega \otimes 0_{\rho} = 0_{\rho}$ holds.

\subsubsection{Generators of finite families of frames}

Given a monoid $\mathcal{M}$, the submonoid $\langle S \rangle$ generated by its subset $S \subset \mathcal{M}$ is defined as the intersection of all the submonoids of $\mathcal{M}$ containing $S$.

\begin{definition} \label{def:generators}
The set of \emph{generators} of a monoid $\mathcal{M}$ is a finite subset $S$ of $\mathcal{M}$ whose generated submonoid coincides with $\mathcal{M}$: $\langle S \rangle = \mathcal{M}$.
\end{definition}

\begin{theorem} \label{the:generators-of-finite-families}
The set of generators of a finite family of frames $\langle \Omega \rangle_{A_{1..4,7}}$, seen as a finite commutative monoid with respect to the internal operation $\otimes$ of minimal refinement, is the collection of all its binary frames. The set of generators of a finite family of refinings $\langle \Omega \rangle_{A_{1..4,7}}^{\rho^{(.)}_\Omega}$ is the collection of refinings from all the binary partitions of $\Omega$ to $\Omega$ itself:
\[
\langle \Omega \rangle_{A_{1..4,7}} = \Big \langle \Big \{ \rho_{ij} : 2^{\Theta_{ij}} \rightarrow 2^\Omega \Big | \; |\Theta_{ij}| = 2, \; \rho_{ij} \in \mathcal{R} \Big \} \Big \rangle.
\]
\end{theorem}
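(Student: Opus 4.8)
The plan is to show a chain of two facts: first, that every finite family of frames $\langle \Omega \rangle_{A_{1..4,7}}$ is generated (as a commutative monoid under $\otimes$) by its binary partitions, and second, that the corresponding statement for refinings follows by transporting this through the monoid isomorphism (\ref{eq:corr}) established in the Corollary to Theorem \ref{the:monoid-of-refinings}. So the entire combinatorial content lives on the frame side, and the refining side is a formality.

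First I would recall that, by the Theorem in Section \ref{sec:family-generated-by-a-set}, the finite family $\langle \Omega \rangle_{A_{1..4,7}}$ is exactly the collection of all disjoint partitions of $\Omega$ together with the associated refinings; each such frame $\Theta$ is identified with a partition $\{A_1,\dots,A_k\}$ of $\Omega$, and $|\Theta|=k$. The binary frames in the family are precisely those partitions of the form $\{B, \Omega\setminus B\}$ for $\emptyset \neq B \subsetneq \Omega$. The key point is to recognize the operation $\otimes$ in partition language: if $\Theta_{ij}$ corresponds to a partition $P_i$ and $\Theta'$ to a partition $P_j$, then $\Theta_i \otimes \Theta_j$ corresponds to the \emph{common refinement of partitions}, i.e. the partition whose blocks are the non-empty intersections $C \cap D$ with $C \in P_i$, $D \in P_j$. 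This identification follows directly from Proposition \ref{the:minimal}: the unique frame whose singletons are $\rho_1(\{\theta_1\}) \cap \rho_2(\{\theta_2\})$ is exactly the partition of $\Omega$ into those pairwise intersections.

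The heart of the argument is then a purely combinatorial claim: every partition $P = \{A_1,\dots,A_k\}$ of a finite set $\Omega$ is the common refinement of finitely many two-block partitions. I would prove this by noting that $P$ is the common refinement of the $k$ binary partitions $Q_\ell = \{A_\ell,\ \Omega\setminus A_\ell\}$, $\ell = 1,\dots,k$. Indeed, two elements $\omega,\omega' \in \Omega$ lie in the same block of the common refinement of the $Q_\ell$ iff for every $\ell$ they lie on the same side of $Q_\ell$, i.e. iff $\omega \in A_\ell \Leftrightarrow \omega' \in A_\ell$ for all $\ell$; since the $A_\ell$ partition $\Omega$, this happens exactly when $\omega$ and $\omega'$ lie in the same block $A_\ell$. (One must handle the degenerate cases: if $k=1$ then $P = \{\Omega\} = 1_\Omega$ is the unit of the monoid, hence in the submonoid generated by any subset; if $k=2$, $P$ is already binary.) This shows every element of $\langle \Omega \rangle_{A_{1..4,7}}$ is a finite $\otimes$-product of binary frames, so the binary frames form a generating set in the sense of Definition \ref{def:generators}; conversely the binary frames are manifestly elements of the family, so they are a legitimate generating \emph{subset}. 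The refining statement then follows: under the isomorphism $\Theta \leftrightarrow \rho^{\Theta}_\Omega$, which by the Corollary to Theorem \ref{the:monoid-of-refinings} carries $\otimes$ on frames to $\otimes$ on refinings (\ref{eq:compo}), the image of the binary-frame generating set is precisely $\{\rho_{ij} : 2^{\Theta_{ij}} \to 2^\Omega \mid |\Theta_{ij}|=2\}$, yielding the displayed equation.

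The main obstacle I anticipate is not the combinatorics, which is elementary, but making the translation between the axiomatic "family of frames" language and the concrete "partitions of $\Omega$" language airtight — in particular verifying carefully that $\otimes$ really does correspond to the common-refinement-of-partitions operation (this rests on Proposition \ref{the:minimal} and on the fact, from Section \ref{sec:family-generated-by-a-set}, that the finite family is \emph{exactly} the partition lattice of $\Omega$ with nothing extra), and handling the unit element $1_\Omega$ cleanly so that the empty product / one-block partition case does not break the "generated submonoid" bookkeeping. A secondary, purely expository care point is to state precisely what "set of generators" means here (Definition \ref{def:generators} asks for a finite subset whose generated submonoid is everything), and to confirm the collection of binary partitions of a \emph{finite} $\Omega$ is indeed finite, so the definition applies.
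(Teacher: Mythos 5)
Your proposal is correct, and its overall architecture is the same as the paper's: reduce everything to the combinatorial claim that every partition of $\Omega$ is the minimal refinement of binary partitions, then transport the statement to refinings through the isomorphism (\ref{eq:corr}). The one genuine difference is the choice of binary decomposition. The paper writes an $n$-block partition $\Pi=\{\Omega^1,\dots,\Omega^n\}$ as the minimal refinement of the $n-1$ ``chain'' partitions $\Pi_\ell=\{\Omega^1\cup\cdots\cup\Omega^\ell,\ \Omega^{\ell+1}\cup\cdots\cup\Omega^n\}$, and verifies the claim by explicitly enumerating which intersections $X_1\cap\cdots\cap X_{n-1}$ are non-empty. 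You instead use the $n$ ``block versus complement'' partitions $Q_\ell=\{A_\ell,\Omega\setminus A_\ell\}$ and verify via the equivalence-relation characterisation (two points are identified in the common refinement iff they fall on the same side of every $Q_\ell$). Your decomposition costs one more generator per partition but the verification is cleaner and less error-prone than the paper's case analysis of mixed intersections; the paper itself notes that the choice of binary frames is not unique, and yours is simply another admissible choice. Your explicit handling of the $k=1$ case (the one-block partition as the unit $1_\Omega$, i.e.\ the empty product) is a point the paper glosses over, and your insistence on pinning down that $\otimes$ restricted to coarsenings of $\Omega$ is exactly the common refinement of partitions (via Proposition \ref{the:minimal} and the identification of $\langle\Omega\rangle_{A_{1..4,7}}$ with the partition lattice of $\Omega$) is precisely the right place to concentrate the care. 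No gaps.
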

\begin{proof} 
We need to prove that all possible partitions of $\Omega$ can be obtained as the minimal refinement of a number of binary partitions. Consider a generic partition $\Pi = \{ \Omega^1, \cdots , \Omega^n \}$ of $\Omega$, and define the following associated partitions:
\[
\begin{array}{lll}
\Pi_1 & = & \big \{ \Omega^1, \Omega^2 \cup \cdots \cup \Omega^n \big \} \doteq \{ A_1, B_1 \} \\ \\ \Pi_2 & = & \big \{ \Omega^1 \cup \Omega^2, \Omega^3 \cup \cdots \cup \Omega^n \big \} \doteq \{ A_2, B_2 \} \\ \\ & & \cdots \\ \\ \Pi_{n-1} & = & \big \{ \Omega^1 \cup \cdots \cup \Omega^{n-1}, \Omega^n \big \} \doteq \{ A_{n-1}, B_{n-1} \}.
\end{array}
\]
It is not difficult to see that every arbitrary intersection of elements of $\Pi_1, \cdots , \Pi_{n-1}$ is an element of the $n$-ary partition $\Pi$. Indeed,
\[
A_i \cap B_k = \emptyset \; \forall k\geq i, \hspace{5mm} A_i\cap A_k = A_i \; \forall k\geq i, \hspace{5mm} B_i\cap B_k = B_k \; \forall k \geq i
\]
so that:
\[
\begin{array}{cc}
\bigcap_i A_i = A_1 = \Omega^1, & \bigcap_i B_i = B_{n-1} = \Omega^n.
\end{array}
\]
If both $A$s and $B$s are present in the intersection, the result is $\emptyset$ whenever there exists a pair $B_l,A_m$ with $l \geq m$. Consequently, the only non-empty mixed intersections in the class $\{ X_1 \cap \cdots \cap X_n \}$ are of the following kind:
\[
\begin{array}{c}
B_1 \cap \cdots \cap B_k \cap A_{k+1} \cap \cdots \cap A_{n-1} = (\Omega^{k+1} \cup \cdots \cup \Omega^n) \cap (\Omega^{1} \cup \cdots \cup \Omega^{k+1}) = \Omega^{k+1},
\end{array}
\]
with $k+1$ ranging from 2 to $n-1$. This meets the fundamental condition for $\Pi$ to be the minimal refinement of $\Pi_1,...,\Pi_{n-1}$: note that the choice of the binary frames is not unique.

The second part of the thesis, concerning the set of generators of finite families of refinings, comes directly from the existence of  isomorphism (\ref{eq:corr}).
\end{proof}

\subsection{General families as commutative monoids} \label{sec:general-families}

We can ask whether a general family of frames also possesses the algebraic structure of monoid. The answer is positive.

\begin{theorem} \label{the:mongen} 
A family of compatible frames $\mathcal{F}$ is an infinite commutative monoid without annihilator.
\end{theorem}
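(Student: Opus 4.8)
The plan is to lift the finite case (Theorem \ref{the:monfin}) to a general family $\mathcal{F}$, so the skeleton of the argument mirrors the proof of Theorem \ref{the:monfin}, but with the role of Axiom $A4$ played by Axiom $A5$ (existence of arbitrary refinings). First I would check that the operation $\otimes$ is well-defined on all of $\mathcal{F}$: given frames $\Theta_1,\dots,\Theta_n\in\mathcal{F}$, Axiom $A6$ (equivalently $A7$, by Theorem \ref{the:equivalence-of-axioms}) guarantees a common refinement and hence, by Theorem \ref{the:minimal}, a unique minimal refinement $\Theta_1\otimes\cdots\otimes\Theta_n\in\mathcal{F}$. So $\otimes$ maps any finite tuple of frames of $\mathcal{F}$ back into $\mathcal{F}$.

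Next I would verify the monoid axioms (Definition \ref{def:commutative-monoid}). Commutativity and associativity are inherited verbatim from the proof of Theorem \ref{the:monfin}: by the characterisation in Theorem \ref{the:minimal}, elements of $\Theta_1\otimes\cdots\otimes\Theta_n$ correspond to non-empty intersections $\rho_1(\{\theta_1\})\cap\cdots\cap\rho_n(\{\theta_n\})$, and set-theoretic intersection is commutative and associative, so the same holds for $\otimes$ (one also invokes Proposition \ref{pro:minimal} to see that the construction does not depend on the auxiliary common refinement chosen). For the unit, I would produce a frame $1_\mathcal{F}$ of cardinality $1$: pick any $\Theta\in\mathcal{F}$ (which is non-empty by definition), take the trivial partition $\{\Theta\}$ of $\Theta$ into one block, and apply Axiom $A4$ to obtain a coarsening $1_\mathcal{F}$ with $|1_\mathcal{F}|=1$ together with its refining $\rho:2^{1_\mathcal{F}}\to 2^\Theta$, necessarily $\rho(\{1\})=\Theta$. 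Uniqueness of this one-element frame across the whole family follows exactly as in Theorem \ref{the:monfin} from Axiom $A2$ (identity of coarsenings): any two one-element frames $1_\mathcal{F},1'_\mathcal{F}$ each map to a common refinement of the two, and their single elements have identical images there, forcing $1_\mathcal{F}=1'_\mathcal{F}$. Then $1_\mathcal{F}$ is a coarsening of every $\Theta\in\mathcal{F}$, so $1_\mathcal{F}\otimes\Theta=\Theta$, i.e. $1_\mathcal{F}$ is the unit.

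The remaining points are the "infinite" and "without annihilator" claims. Infiniteness is immediate from Axiom $A5$: starting from any $\theta\in\Theta\in\mathcal{F}$ and any $n\in\mathbb{N}$, $A5$ produces a refining $\rho:2^\Theta\to 2^\Omega$ with $|\rho(\{\theta\})|=n$, hence $|\Omega|\geq n+|\Theta|-1$; letting $n\to\infty$ yields frames of unbounded cardinality, so $\mathcal{F}$ is infinite (this is where the finite subfamilies genuinely differ). For the absence of an annihilator, suppose $Z\in\mathcal{F}$ satisfied $Z\otimes\Theta=Z$ for all $\Theta\in\mathcal{F}$; then every $\Theta\in\mathcal{F}$ would be a coarsening of $Z$, so $|\Theta|\leq|Z|$ for all $\Theta$, contradicting the unboundedness just established. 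I expect the main obstacle — really the only subtle point — to be making the well-definedness and associativity of $\otimes$ fully rigorous in the general setting, since unlike the finite case there is no canonical ambient frame: one must be careful that the minimal refinement of a collection, and the iterated minimal refinements appearing in associativity, are computed consistently, which is precisely what Proposition \ref{pro:minimal} (the minimal refinement is a coarsening of every common refinement) is there to supply. Everything else transfers mechanically from Theorem \ref{the:monfin}.
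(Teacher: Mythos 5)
Your proposal is correct and follows essentially the same route as the paper: associativity and commutativity are inherited from the finite case via the intersection characterisation of the minimal refinement, the unit is the unique unitary frame (uniqueness via Axioms $A6$ and $A2$, existence as a coarsening of every frame via $A4$), and the absence of an annihilator follows from Axiom $A5$ producing frames strictly finer than any candidate. Your cardinality-bound phrasing of the annihilator step is a minor cosmetic variant of the paper's direct argument (which takes a proper refinement $\Theta'$ of the putative annihilator and notes $0_\Theta\otimes\Theta'=\Theta'\neq 0_\Theta$), and your explicit remarks on well-definedness and infiniteness are sound additions rather than departures.
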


\begin{proof} 
The proof of Theorem \ref{the:monfin} holds for the first two points (associativity and commutativity).\\
\emph{Existence of unit}. Suppose there exist two frames $\mathbf{1} = \{1\}$ and $\mathbf{1}'=\{1'\}$ of cardinality 1. By Axiom $A6$ they have a common refinement $\Theta$, with $\rho_\mathbf{1} : 2^{\mathbf{1}} \rightarrow 2^{\Theta}$ and $\rho_{\mathbf{1}'} : 2^{\mathbf{1}'} \rightarrow 2^{\Theta}$ refinings. But then $\rho_\mathbf{1}(\{ 1 \}) = \Theta = \rho_{\mathbf{1}'}(\{ 1' \})$, and by Axiom $A2$ $\mathbf{1} = \mathbf{1}'$. \\
Now, for every frame $\Theta' \in \mathcal{F}$ Axiom $A4$ ensures that there exists a partition of $\Theta'$ with only one element, $\mathbf{1}_{\Theta'}$. From the above argument it follows that $\mathbf{1}_{\Theta'} = \mathbf{1}$. In conclusion, there is only one monic frame in $\mathcal{F}$, and since this frame is a refinement of every other frame, it is the unit element with respect to $\otimes$. 

\emph{Annihilator}. Suppose a frame $0_{\Theta}$ exists such that $0_{\Theta} \otimes \Theta = 0_{\Theta}$ for each $\Theta$. Then we would have, given a refinement $\Theta'$ of $0_{\Theta}$ (obtained via Axiom $A5$), that $0_{\Theta} \otimes \Theta' = \Theta'$, which is a contradiction.
\end{proof}

In a general family of frames, whatever basis frame $\Omega$ you choose there exist refinings with codomain distinct from $\Omega$. It is therefore impossible to establish a 1-1 correspondence between frames and refinings. Nevertheless, after noticing that for any two refinings $\rho_1$ and $\rho_2$ their codomains $\Omega_1$, $\Omega_2$ always have a common refinement (which we denote by $\Omega$), we can write:
\begin{equation}
\rho_1\otimes \rho_2 \doteq \rho'_1 \otimes \rho'_2 \label{eq:otimes}
\end{equation}
where, calling $w_1$ ($w_2$) the refining map between $\Omega_1$ ($\Omega_2$) and $\Omega$:
\[
\rho'_1 = w_1 \circ \rho_1,\hspace{5mm} \rho'_2 = w_2 \circ \rho_2
\]
and the $\otimes$ sign on the right side of Equation (\ref{eq:otimes}) stands for the composition of refinings in the finite family $\langle\Omega\rangle_{A_{1..4,7}}$ generated by $\Omega$. In this way the composition of refinings $\otimes$ is again well-defined, even in general families of frames.

In fact, a more general definition can be provided as follows.

\begin{definition}
Given a family of frames $( \mathcal{F}, \mathcal{R})$ the composition of two refinings $\rho_1 : 2^{\Theta_1} \longrightarrow 2^{\Omega_1}$ and $\rho_2 : 2^{\Theta_2} \longrightarrow 2^{\Omega_2}$, $\rho_1, \rho_2 \in \mathcal{R}$, is defined as:
\begin{equation} \label{eq:otimes1}
\rho_1 \otimes \rho_2 : 2^{\Theta_1 \otimes \Theta_2} \longrightarrow 2^{\Omega_1 \otimes \Omega_2}.
\end{equation}
\end{definition}

This operation is well-defined, for the correspondence
\begin{equation} \label{eq:corr1}
(Dom(\rho),Cod(\rho)) \longleftrightarrow \rho 
\end{equation}
(whose existence is guaranteed by Axiom $A3$) is a bijection.

\begin{theorem} 
The set of refinings $\mathcal{R}$ of a general family of frames is a commutative monoid with respect to the internal operation \emph{(\ref{eq:otimes1})}.
\end{theorem}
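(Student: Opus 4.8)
The plan is to mirror, in the general-family setting, the monoid argument already given for finite families (Theorem \ref{the:monoid-of-refinings}) and for general families of frames (Theorem \ref{the:mongen}), transporting each property across the bijection (\ref{eq:corr1}) between a refining and the pair $(Dom(\rho), Cod(\rho))$. The key observation that makes everything go through is that the composition $\otimes$ of refinings defined by (\ref{eq:otimes1}) is built componentwise out of the monoid operation $\otimes$ on frames: the domain of $\rho_1 \otimes \rho_2$ is $\Theta_1 \otimes \Theta_2$ and its codomain is $\Omega_1 \otimes \Omega_2$. So the structure of $(\mathcal{R}, \otimes)$ is essentially that of the product monoid $(\mathcal{F}, \otimes) \times (\mathcal{F}, \otimes)$ restricted to those pairs $(\Theta, \Omega)$ for which $\Theta$ is a coarsening of $\Omega$ (equivalently, for which a refining $2^\Theta \rightarrow 2^\Omega$ exists in $\mathcal{R}$).

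First I would check that $\otimes$ is well-defined and internal to $\mathcal{R}$: given $\rho_1, \rho_2 \in \mathcal{R}$, the frames $\Theta_1 \otimes \Theta_2$ and $\Omega_1 \otimes \Omega_2$ exist in $\mathcal{F}$ by Axiom $A7$ (equivalently $A6$), and there is a unique refining between them by Axiom $A3$; moreover $\Theta_1 \otimes \Theta_2$ is indeed a coarsening of $\Omega_1 \otimes \Omega_2$, since each $\Theta_i$ is a coarsening of $\Omega_i$ and minimal refinement preserves this (this is the general-family analogue of Lemma \ref{lem:auxx}, using Proposition \ref{pro:minimal}). Next, associativity and commutativity: both follow immediately from the corresponding properties of $\otimes$ on $\mathcal{F}$ (Theorem \ref{the:mongen}) applied separately to domains and to codomains, via the bijection (\ref{eq:corr1}). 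Finally, the unit: let $\mathbf{1} = \{1\}$ be the unique monic frame of $\mathcal{F}$ (its uniqueness was established in the proof of Theorem \ref{the:mongen}), and let $0_\rho = \rho^\mathbf{1}_\mathbf{1} : 2^\mathbf{1} \rightarrow 2^\mathbf{1}$ be the identity refining on it. For any $\rho : 2^\Theta \rightarrow 2^\Omega$ in $\mathcal{R}$ we have $Dom(0_\rho \otimes \rho) = \mathbf{1} \otimes \Theta = \Theta$ and $Cod(0_\rho \otimes \rho) = \mathbf{1} \otimes \Omega = \Omega$, so by Axiom $A3$ $0_\rho \otimes \rho = \rho$; hence $0_\rho$ is the unit. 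One should also record, for completeness, that there is no annihilator, exactly as in Theorem \ref{the:mongen}: an annihilating refining would force an annihilating frame for $(\mathcal{F},\otimes)$, contradicting Axiom $A5$.

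The main obstacle I anticipate is purely bookkeeping rather than conceptual: one must be careful that the two candidate definitions of $\otimes$ on refinings — the ad hoc one in (\ref{eq:otimes}), where codomains are first pushed up to a common refinement $\Omega$ and then the finite-family composition of Definition \ref{def:composition-of-refinings} is used, and the clean one in (\ref{eq:otimes1}) — actually coincide, and that the result does not depend on the choice of common refinement $\Omega$. This is a consequence of Proposition \ref{pro:minimal} (the minimal refinement is a coarsening of every common refinement) together with the uniqueness of refinings between a fixed pair of frames (Axiom $A3$), but it deserves to be spelled out so that the subsequent manipulations of $\otimes$ are unambiguous. Once that compatibility is in hand, the proof is a short transcription of Theorem \ref{the:monoid-of-refinings} with the single codomain $\Omega$ replaced throughout by the varying codomain, so I would write: \emph{Associativity and commutativity follow from the analogous properties of the minimal refinement operator on $\mathcal{F}$ (Theorem \ref{the:mongen}) applied componentwise to domains and codomains via the bijection (\ref{eq:corr1}). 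The unit element is the identity refining $0_\rho : 2^{\mathbf{1}} \rightarrow 2^{\mathbf{1}}$ on the unique monic frame $\mathbf{1}$ of $\mathcal{F}$, since $Dom(0_\rho \otimes \rho) = \mathbf{1} \otimes Dom(\rho) = Dom(\rho)$ and $Cod(0_\rho \otimes \rho) = \mathbf{1} \otimes Cod(\rho) = Cod(\rho)$, whence $0_\rho \otimes \rho = \rho$ by Axiom $A3$. That no annihilator exists follows as in Theorem \ref{the:mongen}: an annihilating refining would yield an annihilating frame, contradicting Axiom $A5$.}
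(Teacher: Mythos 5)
Your proof is correct and follows essentially the same route as the paper's: commutativity and associativity are inherited componentwise from the minimal-refinement monoid on $\mathcal{F}$, and the unit is the identity refining on the unique monic frame $\mathbf{1}$, pinned down via Axiom $A3$. The only cosmetic point is notational --- you call the unit $0_\rho$ where the paper writes $1_\rho$ (reserving $0_\rho$ for the annihilator of the finite-family case) --- and your additional remarks on well-definedness of (\ref{eq:otimes1}) and the absence of an annihilator are consistent with the paper's surrounding discussion, though the latter is not part of the statement being proved.
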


\begin{proof} 
Obviously $\otimes$ is commutative and associative because of the commutativity and associativity of the operation of minimal refinement of frames.\\ As for the unit element, it suffices to note that such a refining $1_{\rho} : 2^\Theta \rightarrow 2^\Omega$ has to be such that:
\[
\begin{array}{ccc}
\Theta\otimes \Theta_1=\Theta_1\;\forall \Theta_1\in \mathcal{F} & \wedge & \Omega\otimes \Omega_1 = \Omega_1 \;\forall \Omega_1\in \mathcal{F}.
\end{array}
\] 
This implies $\Theta = \Omega = \mathbf{1}$, so that $1_{\rho} : 2^{\mathbf{1}} \rightarrow 2^{\mathbf{1}}$ and $1_{\rho}$ is simply the identity map on the unit frame $\mathbf{1}$.
\end{proof}

\begin{corollary}
$\mathcal{R}$ is a submonoid of the product monoid $(\mathcal{F},\otimes) \times (\mathcal{F},\otimes)$, via the mapping $(\ref{eq:corr1})$.
\end{corollary}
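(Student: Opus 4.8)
The plan is to exhibit the bijection (\ref{eq:corr1}), which I will write $\iota : \rho \mapsto (Dom(\rho),Cod(\rho))$, as an \emph{injective homomorphism of monoids} from $(\mathcal{R},\otimes)$ into the product monoid $(\mathcal{F},\otimes)\times(\mathcal{F},\otimes)$. Its image is then by definition a submonoid, and $\mathcal{R}$ is identified (up to isomorphism) with that submonoid, which is exactly the assertion ``$\mathcal{R}$ is a submonoid of $(\mathcal{F},\otimes)\times(\mathcal{F},\otimes)$ via $(\ref{eq:corr1})$''.

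First I would recall why $\iota$ is well defined and injective: this is precisely the content of Axiom $A3$ (identity of refinings), which guarantees that a refining belonging to $\mathcal{R}$ is uniquely determined by the pair consisting of its domain and its codomain. Thus $\iota$ is a bijection of $\mathcal{R}$ onto the set of pairs $(\Theta,\Omega)\in\mathcal{F}\times\mathcal{F}$ for which a refining $2^{\Theta}\rightarrow 2^{\Omega}$ lies in $\mathcal{R}$. No computation is needed here; it is just a restatement of the remark following Definition of $\otimes$ on refinings.

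Next I would verify that $\iota$ intertwines the two monoid operations. Take $\rho_1 : 2^{\Theta_1}\rightarrow 2^{\Omega_1}$ and $\rho_2 : 2^{\Theta_2}\rightarrow 2^{\Omega_2}$ in $\mathcal{R}$. By the very definition (\ref{eq:otimes1}) of the composition of refinings induced by minimal refinement, $\rho_1\otimes\rho_2$ is the refining $2^{\Theta_1\otimes\Theta_2}\rightarrow 2^{\Omega_1\otimes\Omega_2}$, so that $Dom(\rho_1\otimes\rho_2)=\Theta_1\otimes\Theta_2=Dom(\rho_1)\otimes Dom(\rho_2)$ and $Cod(\rho_1\otimes\rho_2)=\Omega_1\otimes\Omega_2=Cod(\rho_1)\otimes Cod(\rho_2)$. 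Reading the product-monoid operation off componentwise, this is exactly $\iota(\rho_1\otimes\rho_2)=(Dom(\rho_1),Cod(\rho_1))\otimes(Dom(\rho_2),Cod(\rho_2))$. For the units, the unit of $(\mathcal{R},\otimes)$ is the identity map $1_{\rho}:2^{\mathbf{1}}\rightarrow 2^{\mathbf{1}}$ on the unique monic frame $\mathbf{1}$ (established in the theorem just proved), whence $\iota(1_{\rho})=(\mathbf{1},\mathbf{1})$, which is the unit of $(\mathcal{F},\otimes)\times(\mathcal{F},\otimes)$ because $\mathbf{1}$ is the unit of $(\mathcal{F},\otimes)$ by Theorem \ref{the:mongen}.

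Finally, since $\mathcal{R}$ is itself a monoid (so it contains its unit and is closed under $\otimes$), the homomorphism property forces $\iota(\mathcal{R})$ to contain $(\mathbf{1},\mathbf{1})$ and to be closed under the componentwise product, i.e.\ $\iota(\mathcal{R})$ is a submonoid of $(\mathcal{F},\otimes)\times(\mathcal{F},\otimes)$; the injectivity of $\iota$ then gives the desired identification. I do not expect a genuine obstacle: the only point requiring care is making sure the codomain minimal refinement $\Omega_1\otimes\Omega_2$ that appears in (\ref{eq:otimes1}) is the same one used to render $\otimes$ well defined on $\mathcal{R}$ in the first place — but that is precisely how (\ref{eq:otimes1}) was constructed, and the appeal to Axiom $A3$ is what makes ``via the mapping $(\ref{eq:corr1})$'' literally meaningful.
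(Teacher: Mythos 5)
Your argument is correct and is exactly the reasoning the paper leaves implicit (it states this corollary without proof, immediately after establishing that $(\mathcal{R},\otimes)$ is a monoid and that $(\ref{eq:corr1})$ is a bijection by Axiom $A3$): the map $\rho\mapsto(Dom(\rho),Cod(\rho))$ is an injective homomorphism because $(\ref{eq:otimes1})$ acts componentwise on domains and codomains, and it sends the unit $1_{\rho}:2^{\mathbf{1}}\rightarrow 2^{\mathbf{1}}$ to $(\mathbf{1},\mathbf{1})$. Nothing further is needed.
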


\subsection{Monoidal structure of the family}

To complete our picture of the algebraic structures arising from the notion of family of frames, we need to specify some missing relations.

Clearly, $(\langle \Omega \rangle_{A_{1..4,7} }^{\rho},\otimes)$ (where $\langle \Omega \rangle_{A_{1..4,7}}^{\rho}$ is the collection of \emph{all} the refinings of the finite family with base frame $\Omega$) is a monoid, too, and:

\begin{proposition}
$(\langle \Omega \rangle_{A_{1..4,7}}^{\rho_\Omega^{(.)}},\otimes)$ is a submonoid of $(\langle
\Omega \rangle_{A_{1..4,7}}^{\rho}, \otimes)$.
\end{proposition}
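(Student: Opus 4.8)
The statement to prove is that $(\langle \Omega \rangle_{A_{1..4,7}}^{\rho_\Omega^{(.)}},\otimes)$ is a submonoid of $(\langle \Omega \rangle_{A_{1..4,7}}^{\rho}, \otimes)$. Recall the distinction: $\langle \Omega \rangle_{A_{1..4,7}}^{\rho_\Omega^{(.)}}$ is the collection of refinings \emph{with codomain $\Omega$}, i.e., the maps $\rho^{\Theta}_\Omega : 2^\Theta \rightarrow 2^\Omega$, whereas $\langle \Omega \rangle_{A_{1..4,7}}^{\rho}$ is the collection of \emph{all} refinings internal to the finite family generated by $\Omega$, including those whose codomain is a proper coarsening of $\Omega$. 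The plan is first to observe that both are already known to be monoids under $\otimes$ (the first by Theorem \ref{the:monoid-of-refinings}, the second by the proposition immediately preceding this one), so what remains is purely the verification of the three submonoid conditions: containment of the underlying set, closure under the operation, and agreement of the unit elements.

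First I would check set-theoretic containment. Every refining of the form $\rho^{\Theta}_\Omega : 2^\Theta \rightarrow 2^\Omega$ is in particular a refining between two frames of the finite family $\langle \Omega \rangle_{A_{1..4,7}}$, hence it belongs to $\langle \Omega \rangle_{A_{1..4,7}}^{\rho}$; thus $\langle \Omega \rangle_{A_{1..4,7}}^{\rho_\Omega^{(.)}} \subseteq \langle \Omega \rangle_{A_{1..4,7}}^{\rho}$. Next, closure: given two refinings $\rho^{\Theta_1}_\Omega$ and $\rho^{\Theta_2}_\Omega$ with codomain $\Omega$, their composition $\rho^{\Theta_1}_\Omega \otimes \rho^{\Theta_2}_\Omega$ is, by Definition \ref{def:composition-of-refinings} (Equation (\ref{eq:compo})), the unique refining from $\Theta_1 \otimes \Theta_2$ to $\Omega$ --- which again has codomain $\Omega$, hence lies in $\langle \Omega \rangle_{A_{1..4,7}}^{\rho_\Omega^{(.)}}$. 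This is just a restatement of the fact already proven in Theorem \ref{the:monoid-of-refinings} that the subcollection of refinings with codomain $\Omega$ is closed under $\otimes$, so no new work is needed.

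Finally, and this is the only point requiring a moment's care, I would verify that the unit element of the larger monoid $\langle \Omega \rangle_{A_{1..4,7}}^{\rho}$ actually coincides with the unit element of the smaller one, rather than merely being \emph{some} unit. By the proof of Theorem \ref{the:monoid-of-refinings}, the unit of $\langle \Omega \rangle_{A_{1..4,7}}^{\rho_\Omega^{(.)}}$ is $1_\rho : 2^{1_\Omega} \rightarrow 2^\Omega$. In the larger monoid, the unit refining $\rho$ must satisfy $\rho \otimes \sigma = \sigma$ for all $\sigma$, which via the bijection (\ref{eq:corr1}) between refinings and their (domain, codomain) pairs forces $Dom(\rho)$ to be the unit of $(\mathcal{F}',\otimes)$, namely $1_\Omega$, and $Cod(\rho)$ to be the unit with respect to the relevant family of codomains --- which, within the finite family generated by $\Omega$, is again $1_\Omega$ only if we are careful, but in fact $1_\rho$ as defined does act as a two-sided identity on $\langle \Omega \rangle_{A_{1..4,7}}^{\rho}$ by the same domain-computation argument ($Dom(1_\rho \otimes \sigma) = 1_\Omega \otimes Dom(\sigma) = Dom(\sigma)$ and similarly for codomains), and by Axiom $A3$ uniqueness this forces $1_\rho \otimes \sigma = \sigma$. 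Hence the units agree. The main (very mild) obstacle is simply making this unit-compatibility argument precise rather than hand-waving it; everything else is immediate from the already-established monoid structures and Definition \ref{def:composition-of-refinings}.
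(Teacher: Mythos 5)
Your containment and closure steps coincide with the paper's own argument: the paper simply observes that $\langle \Omega \rangle_{A_{1..4,7}}^{\rho_\Omega^{(.)}} \subset \langle \Omega \rangle_{A_{1..4,7}}^{\rho}$ set-theoretically, and that for $\rho_1 : 2^{\Theta_1} \rightarrow 2^\Omega$ and $\rho_2 : 2^{\Theta_2} \rightarrow 2^\Omega$ the ambient operation (\ref{eq:otimes1}) produces a refining onto $2^{\Omega \otimes \Omega} = 2^\Omega$, which therefore coincides with the composition (\ref{eq:compo}) of the smaller monoid and stays inside the subset. The one thing worth making explicit in your closure step --- and the paper does make it explicit --- is precisely the identity $\Omega \otimes \Omega = \Omega$: a priori the two monoids carry \emph{different} composition laws (one targets $\Omega$, the other targets the minimal refinement of the two codomains), so you should first compute the ambient composition and then observe that it collapses to Definition \ref{def:composition-of-refinings}, rather than invoking that definition directly.

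The genuine problem is your unit paragraph. You assert that $1_\rho : 2^{1_\Omega} \rightarrow 2^\Omega$ is a two-sided identity for $(\langle \Omega \rangle_{A_{1..4,7}}^{\rho}, \otimes)$, arguing that the domain computation works ``and similarly for codomains.'' It does not work for codomains: for a refining $\sigma$ with $Cod(\sigma) = \Theta'$ a proper coarsening of $\Omega$, Equation (\ref{eq:otimes1}) gives $Cod(1_\rho \otimes \sigma) = \Omega \otimes \Theta' = \Omega \neq \Theta'$, because within the finite family $\Omega$ is the \emph{annihilator}, not the unit, of the frame monoid (Theorem \ref{the:monfin}). Hence $1_\rho \otimes \sigma \neq \sigma$ in general, and the actual unit of the larger monoid is the identity refining $2^{1_\Omega} \rightarrow 2^{1_\Omega}$, which does \emph{not} belong to $\langle \Omega \rangle_{A_{1..4,7}}^{\rho_\Omega^{(.)}}$ unless $|\Omega|=1$. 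The conclusion ``the units agree'' is therefore false. This does not sink the proposition, because the paper uses ``submonoid'' in the weaker sense of a subset on which the ambient operation restricts to the given monoid structure --- its proof checks only that the operation is inherited and says nothing about units --- but your attempt to upgrade the statement to unit-compatibility introduces a claim that is wrong and should be dropped, or replaced by the observation that the two monoids have distinct identity elements.
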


\begin{proof} 
Obviously $\langle \Omega \rangle_{A_{1..4,7}}^{\rho_\Omega^{(.)}} \subset \langle \Omega \rangle_{A_{1..4,7}}^{\rho}$ in a set-theoretical sense. We only have to prove that the internal operation of the first monoid is inherited from that of the second one.\\ Given $\rho_1:2^{\Theta_1} \rightarrow 2^{\Omega}$ and $\rho_2 : 2^{\Theta_2} \rightarrow 2^{\Omega}$, since $\Omega \otimes \Omega = \Omega$, we have that:
\[
\rho_1 \otimes_{\langle \Omega \rangle^{\rho}} \rho_2 : 2^{\Theta_1 \otimes \Theta_2} \rightarrow 2^{\Omega \otimes \Omega} = 2^\Omega \hspace{5mm} \equiv \hspace{5mm} \rho_1 \otimes_{\langle \Omega \rangle^{\rho_\Omega^{(.)}}} \rho_2,
\]
where on the right hand side we have the composition of refinings of Definition \ref{def:composition-of-refinings}.
\end{proof}

Clearly, monoids associated with finite families of frames are submonoids of those associated with general families.

\begin{proposition}
$(\langle \Omega \rangle_{A_{1..4,7}}^{\rho}, \otimes)$ is a submonoid of $(\mathcal{R},\otimes)$.
\end{proposition}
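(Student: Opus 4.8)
The plan is to follow the template of the preceding proposition, checking the two defining conditions for a submonoid: that the carrier set is contained in that of $(\mathcal{R},\otimes)$, and that the monoid operation — unit included — is the restriction of the ambient one. First I would note that the set inclusion $\langle \Omega \rangle_{A_{1..4,7}}^{\rho} \subseteq \mathcal{R}$ is immediate: by the theorem of Section~\ref{sec:family-generated-by-a-set} the finite subfamily $\langle \Omega \rangle_{A_{1..4,7}}$ is exactly the collection of all disjoint partitions of $\Omega$ together with the associated refinings, and this subfamily is built using only the axioms $A_{1..4,7}$, which are among the axioms defining the full family; hence every refining belonging to $\langle \Omega \rangle_{A_{1..4,7}}^{\rho}$ is in particular an element of $\mathcal{R}$.

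Next I would verify closure under the operation $\otimes$ of Equation~(\ref{eq:otimes1}). Given two refinings $\rho_1 : 2^{\Theta_1} \rightarrow 2^{\Omega_1}$ and $\rho_2 : 2^{\Theta_2} \rightarrow 2^{\Omega_2}$ in $\langle \Omega \rangle_{A_{1..4,7}}^{\rho}$, the frames $\Theta_1,\Theta_2,\Omega_1,\Omega_2$ are all coarsenings of $\Omega$. By Lemma~\ref{lem:auxx} the minimal refinement of two coarsenings of $\Omega$ is again a coarsening of $\Omega$, so both $\Theta_1 \otimes \Theta_2$ and $\Omega_1 \otimes \Omega_2$ lie in $\langle \Omega \rangle_{A_{1..4,7}}$; therefore the refining $\rho_1 \otimes \rho_2 : 2^{\Theta_1 \otimes \Theta_2} \rightarrow 2^{\Omega_1 \otimes \Omega_2}$ furnished by (\ref{eq:otimes1}) has domain and codomain in $\langle \Omega \rangle_{A_{1..4,7}}$, hence is itself an element of $\langle \Omega \rangle_{A_{1..4,7}}^{\rho}$. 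Moreover, since on $\langle \Omega \rangle_{A_{1..4,7}}^{\rho}$ the operation is, by construction, defined by taking minimal refinements of domains and codomains — exactly as (\ref{eq:otimes1}) is — the internal operation of the candidate submonoid agrees with the one inherited from $(\mathcal{R},\otimes)$.

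Finally I would confirm that the unit element is shared. Axiom $A4$ applied to $\Omega$ yields the trivial partition $1_\Omega = \mathbf{1}$ of cardinality one, which is therefore a member of $\langle \Omega \rangle_{A_{1..4,7}}$; the identity map on $2^{\mathbf{1}}$ trivially satisfies the three conditions of Definition~\ref{def:refining}, so it is a refining in $\langle \Omega \rangle_{A_{1..4,7}}^{\rho}$, and it coincides with the unit $1_\rho : 2^{\mathbf{1}} \rightarrow 2^{\mathbf{1}}$ of $(\mathcal{R},\otimes)$ identified in Theorem~\ref{the:mongen} and the ensuing discussion. With all three submonoid requirements met, the proof is complete. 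I do not anticipate any genuine obstacle; the only point requiring a little care is the bookkeeping that, when all domains and codomains are coarsenings of a single $\Omega$, the general composition (\ref{eq:otimes1}) collapses to the finite-family composition of Definition~\ref{def:composition-of-refinings} — a consequence of $\Omega \otimes \Omega = \Omega$, argued just as in the proof that $\langle \Omega \rangle_{A_{1..4,7}}^{\rho_\Omega^{(.)}}$ is a submonoid of $\langle \Omega \rangle_{A_{1..4,7}}^{\rho}$.
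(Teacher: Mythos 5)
Your proposal is correct and follows essentially the same route as the paper: the paper's proof simply observes that closure suffices and that $Dom(\rho_1\otimes\rho_2)$ and $Cod(\rho_1\otimes\rho_2)$, being minimal refinements of coarsenings of $\Omega$, are again coarsenings of $\Omega$. Your additional checks of the set inclusion and the shared unit are routine points the paper leaves implicit, and your bookkeeping remark about the general composition collapsing to the finite-family one is sound.
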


\begin{proof} 
It suffices to prove that $\langle \Omega \rangle_{A_{1..4,7}}^{\rho}$ is closed with respect to the composition operator (\ref{eq:otimes1}). But then, given two maps $\rho_1,\rho_2$ whose domains and codomains are both coarsening of $\Omega$, $Dom(\rho_1\otimes \rho_2) = Dom(\rho_1) \otimes Dom(\rho_1)$ and $Cod(\rho_1 \otimes \rho_2) = Cod(\rho_1) \otimes Cod(\rho_1)$ are still coarsenings of $\Omega$.
\end{proof}

\begin{proposition}
$(\langle \Omega \rangle_{A_{1..4,7}}^{\Theta}, \otimes)$ is a submonoid of $(\mathcal{F},\otimes)$.
\end{proposition}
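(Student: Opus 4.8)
The statement to prove is that $(\langle \Omega \rangle_{A_{1..4,7}}^{\Theta}, \otimes)$ is a submonoid of $(\mathcal{F},\otimes)$, where $\langle \Omega \rangle_{A_{1..4,7}}^{\Theta}$ denotes the collection of frames in the finite subfamily generated by $\Omega$. The plan is to follow exactly the pattern used in the two preceding propositions: verify that the underlying set is a subset of $\mathcal{F}$, that the monoid operation $\otimes$ restricted to it coincides with the ambient one, and that the unit element lies in the subset.

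First I would observe that $\langle \Omega \rangle_{A_{1..4,7}}^{\Theta} \subseteq \mathcal{F}$ holds trivially by construction: the finite subfamily generated by $\Omega$ is built by applying the axioms $A_{1..4,7}$, all of which produce frames (and refinings) that already belong to any family of compatible frames containing $\Omega$; hence every element of $\langle \Omega \rangle_{A_{1..4,7}}^{\Theta}$ is a frame in $\mathcal{F}$. Second, I would check closure under $\otimes$: given two frames $\Theta_1, \Theta_2 \in \langle \Omega \rangle_{A_{1..4,7}}^{\Theta}$, both are coarsenings of $\Omega$ (by the theorem in Section \ref{sec:family-generated-by-a-set}, the generated subfamily consists precisely of the disjoint partitions of $\Omega$). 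By Lemma \ref{lem:auxx} the minimal refinement $\Theta_1 \otimes \Theta_2$ of two coarsenings of $\Omega$ is again a coarsening of $\Omega$, hence an element of $\langle \Omega \rangle_{A_{1..4,7}}^{\Theta}$. Moreover, the minimal refinement computed inside $\langle \Omega \rangle_{A_{1..4,7}}^{\Theta}$ is the same frame as the one computed in $\mathcal{F}$, because the minimal refinement is characterized uniquely by the universal property of Proposition \ref{the:minimal}, which does not depend on the ambient family; so the operation is genuinely inherited.

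Third, I would check that the unit element of $(\mathcal{F},\otimes)$ — the unique monic frame $\mathbf{1}$, whose existence and uniqueness were established in the proof of Theorem \ref{the:mongen} — lies in $\langle \Omega \rangle_{A_{1..4,7}}^{\Theta}$. This follows from Axiom $A4$ applied to $\Omega$ with the trivial partition $\{\Omega\}$: there is a coarsening $1_\Omega$ of $\Omega$ of cardinality one in the generated subfamily, and by the uniqueness of the monic frame in $\mathcal{F}$ we have $1_\Omega = \mathbf{1}$. Combining these three observations with Theorem \ref{the:mongen} (which says $(\mathcal{F},\otimes)$ is a monoid) and Theorem \ref{the:monfin} (which says $(\langle \Omega \rangle_{A_{1..4,7}}^{\Theta},\otimes) = (\mathcal{F}',\otimes)$ is itself a monoid) yields that the finite family is a submonoid.

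I do not anticipate a genuine obstacle here; the only point demanding slight care is making explicit that the minimal refinement operation does not change when one enlarges the ambient family — i.e. that $\otimes$ is "inherited" rather than merely "defined analogously" — which is precisely what the universal characterization in Proposition \ref{the:minimal} guarantees, since $\Theta_1 \otimes \Theta_2$ computed in $\langle \Omega \rangle_{A_{1..4,7}}^{\Theta}$ is a common refinement of $\Theta_1,\Theta_2$ that is a coarsening of every common refinement lying in that subfamily, and by Proposition \ref{pro:minimal} this coincides with the minimal refinement taken in all of $\mathcal{F}$. The proof is therefore essentially a three-line verification mirroring the proofs of the two propositions immediately above it.

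\begin{proof}
Since the finite subfamily $\langle \Omega \rangle_{A_{1..4,7}}^{\Theta}$ is generated from $\Omega$ by applying the axioms $A_{1..4,7}$, which produce only frames already present in any family of compatible frames containing $\Omega$, we have $\langle \Omega \rangle_{A_{1..4,7}}^{\Theta} \subseteq \mathcal{F}$ in the set-theoretic sense. It remains to show that the operation $\otimes$ of the submonoid is inherited from that of $(\mathcal{F},\otimes)$, and that the two contain the same unit.

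Let $\Theta_1,\Theta_2 \in \langle \Omega \rangle_{A_{1..4,7}}^{\Theta}$. By the description of the generated subfamily in Section \ref{sec:family-generated-by-a-set}, both $\Theta_1$ and $\Theta_2$ are coarsenings of $\Omega$. By Lemma \ref{lem:auxx}, their minimal refinement $\Theta_1 \otimes \Theta_2$ is again a coarsening of $\Omega$, hence lies in $\langle \Omega \rangle_{A_{1..4,7}}^{\Theta}$. Furthermore, by the universal characterization of the minimal refinement (Proposition \ref{the:minimal}) together with Proposition \ref{pro:minimal}, the frame $\Theta_1 \otimes \Theta_2$ does not depend on whether the operation is performed inside $\langle \Omega \rangle_{A_{1..4,7}}^{\Theta}$ or inside $\mathcal{F}$: it is, in both cases, the unique common refinement of $\Theta_1,\Theta_2$ that is a coarsening of every other common refinement. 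Hence the internal operation of $(\langle \Omega \rangle_{A_{1..4,7}}^{\Theta},\otimes)$ is the restriction of that of $(\mathcal{F},\otimes)$.

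Finally, by Axiom $A4$ applied to $\Omega$ with the trivial partition $\{\Omega\}$, there is a coarsening of $\Omega$ of cardinality one in $\langle \Omega \rangle_{A_{1..4,7}}^{\Theta}$; by the uniqueness of the monic frame $\mathbf{1}$ of $\mathcal{F}$ established in the proof of Theorem \ref{the:mongen}, this coarsening equals $\mathbf{1}$. Thus $\mathbf{1} \in \langle \Omega \rangle_{A_{1..4,7}}^{\Theta}$, and since $(\langle \Omega \rangle_{A_{1..4,7}}^{\Theta},\otimes)$ is a monoid by Theorem \ref{the:monfin} whose operation and unit agree with those of $(\mathcal{F},\otimes)$ by Theorem \ref{the:mongen}, it is a submonoid of $(\mathcal{F},\otimes)$.
\end{proof}
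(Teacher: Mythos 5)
Your proof is correct and follows essentially the same route as the paper's: set-theoretic inclusion plus closure of $\langle \Omega \rangle_{A_{1..4,7}}^{\Theta}$ under $\otimes$, the latter resting on the fact (Lemma \ref{lem:auxx}) that the minimal refinement of two coarsenings of $\Omega$ is again a coarsening of $\Omega$. The paper's own proof is a one-line version of this; your additional checks that $\otimes$ is genuinely inherited and that the unit $\mathbf{1}$ lies in the subfamily are harmless elaborations of the same argument.
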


\begin{proof} 
Trivial, for the finite family $(\langle \Omega \rangle_{A_{1..4,7}}^{\Theta}, \otimes)$ is strictly included in the complete one $(\mathcal{F},\otimes)$ and $\langle \Omega \rangle_{A_{1..4,7}}^{\Theta}$ is closed with respect to $\otimes$, i.e., if $\Theta_1$ and $\Theta_2$ are coarsenings of $\Omega$ then their minimal refinement is still a coarsening of $\Omega$.
\end{proof}

The various relationships between monoidal structures associated with a family of compatible
frames $({\mathcal{F}},{\mathcal{R}})$ are summarized in the following diagram.

\[
\begin{array}{ccccc}
(\langle \Omega \rangle_{A_{1..4,7}}^{\rho_\Omega^{(.)}}, \otimes) & \subset & (\langle
\Omega \rangle_{A_{1..4,7}}^{\rho}, \otimes) & \subset & (\mathcal{R}, \otimes)\\ \\ \wr & & \wr & &  \\ \\ (\langle \Omega \rangle_{A_{1..4,7}}^{\Theta},\otimes) & \subset &
\begin{array}{c}
(\langle \Omega \rangle_{A_{1..4,7}}^{\Theta}, \otimes)\\ \times \\ (\langle \Omega \rangle_{A_{1..4,7}}^{\Theta}, \otimes) \end{array} & & \cap \\ \\ = & & & & \\ (\langle
\Omega \rangle_{A_{1..4,7}}^{\Theta}, \otimes) & \subset & (\mathcal{F}, \otimes) & \subset &
\begin{array}{c}
(\mathcal{F}, \otimes)\\ \times \\ (\mathcal{F},\otimes) \end{array}
\end{array}
\]

\section{Lattice structure of families of frames} \label{sec:lattice-structure}

\subsection{Two dual order relations} \label{sec:dual-order-relations}

It is well-known (see \cite{Jacobson}, page 456) that the internal operation $\cdot$ of a monoid $\mathcal{M}$ induces an order relation (see Definition \ref{def:pos}) $|$ on the elements of $\mathcal{M}$. Namely:
\[
a|b \hspace{5mm} \equiv \hspace{5mm} \exists \;c\;s.t.\; b=a\cdot c.
\]
For monoids of compatible frames this monoid-induced order relation reads as:
\begin{equation} \label{eq:10}
\begin{array}{ccccc}
\Theta_2 \geq^* \Theta_1 & \equiv & \exists\;\Theta_3\;s.t.\;\Theta_2 = \Theta_1 \otimes \Theta_3 & \equiv & \Theta_1\otimes \Theta_2 =\Theta_2,
\end{array}
\end{equation}
i.e., \emph{$\Theta_2$ is a refinement of $\Theta_1$}. Since both finite and general families of frames are monoids:
\begin{proposition}
Both $\langle \Omega \rangle_{A_{1..4,7}}$ and $\mathcal{F}$ are {partially ordered sets} with respect to the order relation (\ref{eq:10}).
\end{proposition}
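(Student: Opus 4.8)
The plan is to invoke the general fact, stated just before the proposition (citing Jacobson, p.~456), that the internal operation of any monoid induces an order relation on its elements, and then observe that both $\langle \Omega \rangle_{A_{1..4,7}}$ and $\mathcal{F}$ have already been shown to be commutative monoids under $\otimes$ (Theorem~\ref{the:monfin} and Theorem~\ref{the:mongen} respectively). The only thing that requires checking is that the monoid-induced relation $\geq^*$ defined in~(\ref{eq:10}) is indeed reflexive, antisymmetric and transitive in the sense of Definition~\ref{def:pos}; so the proof is essentially a verification, and the bulk of the work is making sure antisymmetry actually holds, since for a general monoid the induced divisibility preorder need not be antisymmetric.

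First I would verify reflexivity: $\Theta \geq^* \Theta$ because $\Theta \otimes \Theta = \Theta$. For finite families this is the annihilator-style idempotence observed in the proof of Theorem~\ref{the:monfin} (a frame is trivially a coarsening of itself, so its minimal refinement with itself is itself); for general families the same identity $\Theta \otimes \Theta = \Theta$ holds because $\Theta$ is a common refinement of $\Theta$ and $\Theta$ and the minimal refinement is a coarsening of every common refinement. Next, transitivity: if $\Theta_2 = \Theta_1 \otimes \Theta_3$ and $\Theta_3' $ gives $\Theta_3 = \Theta_2 \otimes \Theta_3'$, then using associativity and commutativity of $\otimes$ one gets $\Theta_3 = \Theta_1 \otimes \Theta_3 \otimes \Theta_3'$, and substituting back shows $\Theta_3$ is a refinement of $\Theta_1$; more directly, the equivalent formulation $\Theta_1 \otimes \Theta_2 = \Theta_2$ combined with $\Theta_2 \otimes \Theta_3 = \Theta_3$ yields $\Theta_1 \otimes \Theta_3 = \Theta_1 \otimes \Theta_2 \otimes \Theta_3 = \Theta_2 \otimes \Theta_3 = \Theta_3$, i.e.\ $\Theta_1 \geq^* \Theta_3$ fails to be the conclusion --- one wants $\Theta_3 \geq^* \Theta_1$, which is exactly what this computation gives.

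The main obstacle, and the step I would spend the most care on, is antisymmetry: suppose $\Theta_1 \geq^* \Theta_2$ and $\Theta_2 \geq^* \Theta_1$, i.e.\ $\Theta_1 \otimes \Theta_2 = \Theta_1$ and $\Theta_1 \otimes \Theta_2 = \Theta_2$; then immediately $\Theta_1 = \Theta_2$ as \emph{sets}. This looks automatic, but the subtlety is that elements of $\mathcal{F}$ are genuine sets and $\otimes$ returns the unique (by Theorem~\ref{the:minimal} and Axiom~$A2$/$A3$) minimal refinement, so the equality $\Theta_1 \otimes \Theta_2 = \Theta_1 = \Theta_2$ is an honest equality of frames and no quotient is needed --- this is where the uniqueness clauses in the axioms (identity of coarsenings, identity of refinings) do real work, ruling out the usual failure of antisymmetry for divisibility preorders. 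I would state this explicitly: because minimal refinements are unique and the relation ``is a coarsening of'' is genuinely antisymmetric on sets (two sets each refining the other via bijective refinings are equal by Axiom~$A2$), the induced relation is a partial order. Finally I would note that the argument is uniform: nothing used the finiteness of the family, so the same verification applies verbatim to both $\langle \Omega \rangle_{A_{1..4,7}}$ and $\mathcal{F}$, giving the proposition.
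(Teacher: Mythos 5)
Your proof is correct, and it is worth noting that it does strictly more than the paper, which offers no explicit proof at all: the paper simply cites the general fact that a monoid's internal operation induces an order relation and lets the proposition follow from Theorems \ref{the:monfin} and \ref{the:mongen}. As you rightly observe, that general fact only yields a \emph{preorder} — divisibility in an arbitrary monoid is not antisymmetric (in a group, everything divides everything) — so the paper's one-line justification has a genuine gap that your verification closes. The key point you isolate is the right one: idempotence $\Theta \otimes \Theta = \Theta$ (itself a consequence of the uniqueness of the minimal refinement) is what makes the two formulations in (\ref{eq:10}) equivalent and turns the divisibility preorder into the relation $\Theta_1 \otimes \Theta_2 = \Theta_2$, for which antisymmetry is immediate by commutativity. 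Two small presentational remarks: the first half of your transitivity paragraph overloads the symbol $\Theta_3$ as both the witness of the first divisibility and the third element of the chain, which makes it momentarily read as circular — the ``more directly'' computation $\Theta_1 \otimes \Theta_3 = \Theta_1 \otimes \Theta_2 \otimes \Theta_3 = \Theta_2 \otimes \Theta_3 = \Theta_3$ is the clean version and should stand alone. And the appeal to Axiom $A2$ in the antisymmetry step, while not wrong, is not needed once $\otimes$ is accepted as a single-valued operation (guaranteed by Theorem \ref{the:minimal}); the equality $\Theta_1 = \Theta_1 \otimes \Theta_2 = \Theta_2$ is then a plain equality of elements and requires no further use of the axioms.
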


Indeed, in a family of compatible frames one can define two distinct order relations on pairs of frames, both associated with the notion of refining (Chapter \ref{cha:toe}, Section \ref{sec:families-of-frames}):
\begin{equation}\label{eq:order-coa}
\Theta_1 \leq^* \Theta_2 \Leftrightarrow \exists \rho : 2^{\Theta_1} \rightarrow 2^{\Theta_2} \; refining
\end{equation}
(the same as (\ref{eq:10})), or
\begin{equation}\label{eq:order-ref}
\Theta_1 \leq \Theta_2 \Leftrightarrow \exists \rho : 2^{\Theta_2} \rightarrow 2^{\Theta_1} \; refining
\end{equation}
i.e., $\Theta_1$ is a refinement of $\Theta_2$. Relation (\ref{eq:order-ref}) is clearly the inverse of (\ref{eq:order-coa}). It makes sense to distinguish them explicitly as they generate two distinct algebraic structures, in turn associated with {different extensions of the notion of matroidal independence}, as we will see in Chapter \ref{cha:independence}.

Immediately:

\begin{theorem} \label{the:sup}
In a family of frames $\mathcal{F}$ seen as a poset with order relation (\ref{eq:order-coa}) the $\sup$ of a finite collection $\Theta_1,\cdots,\Theta_n$ of frames coincides with their minimal refinement, namely:
\[
\sup_{(\mathcal{F},\leq^*)}(\Theta_1, \cdots ,\Theta_n) = \Theta_1\otimes \cdots \otimes \Theta_n.
\]
\end{theorem}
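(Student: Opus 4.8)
The plan is to show that the minimal refinement $\Theta_1 \otimes \cdots \otimes \Theta_n$ satisfies the two defining properties of a least upper bound in the poset $(\mathcal{F}, \leq^*)$: it is an upper bound for the collection, and it is below any other upper bound. Throughout I will use the equivalence, established in Equation (\ref{eq:10}), that $\Theta \leq^* \Theta'$ is the same as the existence of a refining $2^\Theta \to 2^{\Theta'}$, and the same as $\Theta \otimes \Theta' = \Theta'$.

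First I would check that $\Theta_1 \otimes \cdots \otimes \Theta_n$ is an upper bound. By Theorem \ref{the:minimal} (property 1), for each $i = 1,\ldots,n$ there is a refining $\rho_i : 2^{\Theta_i} \rightarrow 2^{\Theta_1 \otimes \cdots \otimes \Theta_n}$. By the definition of the order relation (\ref{eq:order-coa}) this says precisely that $\Theta_i \leq^* \Theta_1 \otimes \cdots \otimes \Theta_n$ for every $i$, so the minimal refinement dominates the whole collection.

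Second I would verify the minimality. Let $\Omega$ be any frame with $\Theta_i \leq^* \Omega$ for all $i$, i.e.\ $\Omega$ is a common refinement of $\Theta_1,\ldots,\Theta_n$. By Proposition \ref{pro:minimal}, $\Theta_1 \otimes \cdots \otimes \Theta_n$ is a coarsening of every common refinement of $\Theta_1,\ldots,\Theta_n$; in particular it is a coarsening of $\Omega$, which means there is a refining $2^{\Theta_1 \otimes \cdots \otimes \Theta_n} \rightarrow 2^{\Omega}$, i.e.\ $\Theta_1 \otimes \cdots \otimes \Theta_n \leq^* \Omega$. Hence $\Theta_1 \otimes \cdots \otimes \Theta_n$ is below every upper bound, and combining this with the previous paragraph it is the least upper bound: $\sup_{(\mathcal{F},\leq^*)}(\Theta_1,\ldots,\Theta_n) = \Theta_1 \otimes \cdots \otimes \Theta_n$. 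One should also note that the $\sup$ is unique because $\leq^*$ is a partial order (antisymmetry), which is already part of the poset statement preceding the theorem.

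I do not expect a genuine obstacle here: the theorem is essentially a repackaging of Theorem \ref{the:minimal} and Proposition \ref{pro:minimal} in order-theoretic language, and the only care needed is to keep straight which direction of refining corresponds to $\leq^*$ versus its dual $\leq$ of (\ref{eq:order-ref}). The mildest subtlety is that $\sup$ was defined in Definitions \ref{def:inf}--\ref{def:sup} for pairs and then extended to finite collections by induction; to be fully rigorous one could instead argue by induction on $n$, using $\sup(\Theta_1,\ldots,\Theta_n) = \sup(\sup(\Theta_1,\ldots,\Theta_{n-1}), \Theta_n)$ together with the associativity of $\otimes$ from Theorem \ref{the:monfin} (or \ref{the:mongen}), but the direct argument above already covers the finite case via Proposition \ref{pro:minimal}.
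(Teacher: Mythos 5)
Your proof is correct and follows essentially the same route as the paper's: the minimal refinement is an upper bound because a refining exists from each $\Theta_i$ to it, and it is the least such because any common refinement $\Omega$ of the $\Theta_i$ is, by Proposition \ref{pro:minimal}, a refinement of $\Theta_1\otimes\cdots\otimes\Theta_n$. Your explicit citation of Proposition \ref{pro:minimal} for the minimality step is exactly the fact the paper invokes implicitly.
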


\begin{proof} 
Of course $\Theta_1\otimes \cdots \otimes \Theta_n \geq^* \Theta_i\;\forall i=1,...,n$ for there exists a
refining between each $\Theta_i$ and the minimal refinement. Now, if there exists another frame $\Omega$ greater than each $\Theta_i$ then $\Omega$ is a common refinement for $\Theta_1,...,\Theta_n$, hence it is a refinement of the minimal refinement -- namely, $\Omega \geq^* \Theta_1\otimes \cdots \otimes \Theta_n$ according to order relation (\ref{eq:order-coa}).
\end{proof}

At a first glance is not clear what $\inf_{(\mathcal{F},\leq^*)} \{ \Theta_1,...,\Theta_n\}$, instead, should represent.

\subsection{Common and maximal coarsening} \label{sec:common-coarsening}

Let us then introduce a new operation acting on finite collections of frames.

\begin{definition} \label{def:common-coarsening}
A \emph{common coarsening} of two frames $\Theta_1,\Theta_2$ is a set $\Omega$ such that $\exists \rho_1 : 2^{\Omega} \rightarrow 2^{\Theta_1}$ and $\rho_2 : 2^{\Omega} \rightarrow 2^{\Theta_2}$ refinings, i.e., $\Omega$ is a coarsening of both $\Theta_1$ and $\Theta_2$.
\end{definition}

\begin{theorem}\label{the:comcoa} 
If $\Theta_1,\Theta_2\in\mathcal{F}$ are elements of a family of compatible frames then they possess a common coarsening.
\end{theorem}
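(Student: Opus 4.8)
The statement asserts that any two compatible frames $\Theta_1,\Theta_2\in\mathcal{F}$ have a common coarsening. The plan is to produce an explicit common coarsening by working inside the minimal refinement $\Theta_1\otimes\Theta_2$, which exists by Axiom $A7$ (equivalently $A6$), together with its refinings $\rho_i:2^{\Theta_i}\rightarrow 2^{\Theta_1\otimes\Theta_2}$. First I would recall from Theorem \ref{the:minimal} that every singleton $\{\theta\}$ of $\Theta_1\otimes\Theta_2$ can be written as $\rho_1(\{\theta_1\})\cap\rho_2(\{\theta_2\})$ for suitable $\theta_1\in\Theta_1$, $\theta_2\in\Theta_2$. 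The idea is to use this factorization to define an equivalence relation on $\Theta_1\otimes\Theta_2$ that is coarse enough to be `read off' by both $\Theta_1$ and $\Theta_2$, and then invoke the `existence of coarsenings' axiom $A4$ to realize the quotient as a genuine frame in $\mathcal{F}$.

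The key construction is the following. On $\Theta_1\otimes\Theta_2$ (call it $\Theta$ for brevity) consider the two partitions $\Pi_1=\{\rho_1(\{\theta_1\}):\theta_1\in\Theta_1\}$ and $\Pi_2=\{\rho_2(\{\theta_2\}):\theta_2\in\Theta_2\}$ induced by the two refinings; by the properties of refinings (Proposition \ref{pro:refining}) these are indeed disjoint partitions of $\Theta$. Form the \emph{join} $\Pi_1\vee\Pi_2$ in the partition lattice of $\Theta$: the finest common coarsening of $\Pi_1$ and $\Pi_2$, whose blocks are the connected components of the graph on $\Theta$ with an edge between two elements whenever they lie in a common block of $\Pi_1$ or a common block of $\Pi_2$. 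This $\Pi_1\vee\Pi_2$ is a disjoint partition of $\Theta$, so by Axiom $A4$ there is a frame $\Omega\in\mathcal{F}$ and a refining $\rho:2^{\Omega}\rightarrow 2^{\Theta}$ realizing exactly this partition. It remains to check that $\Omega$ is a coarsening of both $\Theta_1$ and $\Theta_2$, i.e., that the composite maps factor appropriately. Since each block of $\Pi_1\vee\Pi_2$ is a union of blocks of $\Pi_1$, the refining $\rho$ composes with the inner reduction $\underline{\rho_1}$ (Definition of inner reduction) to give a refining $2^{\Omega}\rightarrow 2^{\Theta_1}$; concretely, one checks $\rho_1\circ(\underline{\rho_1}\circ\rho) = \rho$ on singletons using that $\Pi_1\vee\Pi_2$ coarsens $\Pi_1$, and then Axioms $A2$--$A3$ (identity of coarsenings/refinings) force $\underline{\rho_1}\circ\rho$ to be an honest refining from $\Omega$ to $\Theta_1$. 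The symmetric argument handles $\Theta_2$. Hence $\Omega$ is a common coarsening, and in fact this construction produces the \emph{maximal} common coarsening $\Theta_1\wedge\Theta_2$, which sets up the subsequent lattice discussion.

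There is one point of care that I expect to be the main obstacle: verifying that the join partition $\Pi_1\vee\Pi_2$ genuinely satisfies the hypothesis of Axiom $A4$ \emph{and} that the resulting $\Omega$ really sits below both $\Theta_i$ in the family, rather than merely below $\Theta$. The subtlety is that $A4$ only guarantees a coarsening of the \emph{particular} frame $\Theta=\Theta_1\otimes\Theta_2$; one must then argue that because $\Pi_1\vee\Pi_2$ refines the partition $\Pi_1$ that represents $\Theta_1$ as a coarsening of $\Theta$, transitivity of the coarsening relation (itself a consequence of Axiom $A1$, composition of refinings, applied to $\Omega\to\Theta\to$ nothing — rather, to the reductions) yields a refining $2^{\Omega}\to 2^{\Theta_1}$. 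Getting the direction of all the maps right, and confirming that the inner reduction of a refining composed with another refining is again a refining in $\mathcal{R}$ (not just a set map), is the delicate bookkeeping; everything else is a routine application of the axioms. Alternatively, a slicker route avoiding partition-lattice language: take the coarsest common coarsening candidate directly as the frame whose elements index the blocks of $\Pi_1\vee\Pi_2$, and invoke Lemma \ref{lem:auxx}'s dual idea. Either way the existence follows, and the uniqueness of the maximal such coarsening will follow from Axioms $A2$--$A3$ exactly as uniqueness of the minimal refinement does.
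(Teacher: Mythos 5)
Your proof is correct in substance, but it takes a genuinely different and much heavier route than the paper. The paper's entire proof is one line: by the proof of Theorem \ref{the:mongen} there is a unique unitary frame $\mathbf{1}\in\mathcal{F}$ (obtained from Axiom $A4$ applied to the one-block partition of any frame), and since $\mathbf{1}$ is a coarsening of \emph{every} frame, it is in particular a common coarsening of $\Theta_1$ and $\Theta_2$. That is all the theorem asks for --- bare existence. What you construct instead is the \emph{finest} common coarsening, via the join $\Pi_1\vee\Pi_2$ in the partition lattice of $\Theta_1\otimes\Theta_2$; your bookkeeping does go through (apply $A4$ to $\Theta_1$ with the partition $\{\theta_1:\rho_1(\{\theta_1\})\subseteq B\}$ indexed by the join-blocks $B$, compose with $\rho_1$ via $A1$, and use $A2$ to identify the resulting coarsening of $\Theta_1$ with the $\Omega$ you obtained from $\Theta_1\otimes\Theta_2$), but it essentially duplicates the content of the paper's later Theorem \ref{the:maximal} on the existence and uniqueness of the \emph{maximal} coarsening, which the paper proves by a separate splitting procedure that itself takes Theorem \ref{the:comcoa} as its starting point. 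So your argument buys more than is needed here (and would let one replace the splitting construction by the partition join), at the cost of front-loading all the delicate map-direction checks that the paper deliberately postpones; the paper's unit-frame observation is the minimal, and intended, argument.
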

\begin{proof} 
From the proof of Theorem \ref{the:mongen} it follows that  $\Theta_1,\Theta_2$ have at least the unit frame $\mathbf{1}$ as a common coarsening.
\end{proof}

As is the case for common refinements, among the many common coarsenings of a collection $\Theta_1,...,\Theta_n$ of frames there exists a unique one characterized by being the most refined in the group.

\begin{theorem} \label{the:maximal}
Given any collection $\Theta_1,...,\Theta_n$ of elements of a family of compatible frames $\mathcal{F}$ there exists a unique element $\Omega \in \mathcal{F}$ such that:\\
\begin{enumerate}
\item $\forall i$ there exists a refining $\rho_i : 2^\Omega \rightarrow 2^{\Theta_i}$ from $\Theta_i$ to $\Omega$;\\
\item  $\forall \omega \in \Omega$ $\not \exists A_1 \subseteq \rho_1(\{ \omega \}), \cdots , A_n \subseteq \rho_n (\{ \omega \})$ s.t. $\eta_1 (A_1) = \cdots = \eta_n(A_n)$,\\
\end{enumerate}
where $\eta_i:2^{\Theta_i} \rightarrow 2^{\Theta_1 \otimes \cdots \otimes \Theta_n}$, i.e., no subsets of the images in the various $\Theta_i$ of the same element of $\Omega$ are mapped to the same subset of the minimal refinement.
\end{theorem}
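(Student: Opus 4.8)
The plan is to prove existence and uniqueness of the \emph{maximal coarsening} $\Omega$ of $\Theta_1,\dots,\Theta_n$ by constructing it explicitly as a partition of the minimal refinement $\Theta_1\otimes\cdots\otimes\Theta_n$, and then checking it against properties (1) and (2). Throughout, write $M = \Theta_1\otimes\cdots\otimes\Theta_n$ and let $\eta_i : 2^{\Theta_i}\to 2^M$ be the refining of each $\Theta_i$ into $M$ (these exist by Theorem~\ref{the:minimal}). Since each $\eta_i(\{\theta\})$, $\theta\in\Theta_i$, is a nonempty subset of $M$ and these subsets partition $M$ (by the axioms of a refining, Definition~\ref{def:refining}), every $\Theta_i$ induces a partition $\mathcal{Q}_i = \{\eta_i(\{\theta\}):\theta\in\Theta_i\}$ of $M$. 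The idea is to take the \emph{coarsest common refinement of the partitions} $\mathcal{Q}_1,\dots,\mathcal{Q}_n$ — equivalently, the finest partition $\mathcal{Q}$ of $M$ that is coarser than every $\mathcal{Q}_i$ (the join of the $\mathcal{Q}_i$ in the partition lattice of $M$ taken in the ``coarsening'' direction, i.e.\ the connected components of the graph on $M$ whose edges link $x,y$ whenever $x,y$ lie in a common block of some $\mathcal{Q}_i$). By Axiom $A4$ (existence of coarsenings), this partition $\mathcal{Q}$ of $M$ corresponds to a frame $\Omega\in\mathcal{F}$, with a refining $2^\Omega\to 2^M$; and since each block of $\mathcal{Q}$ is a union of blocks of $\mathcal{Q}_i$, this factors through $\Theta_i$, giving refinings $\rho_i : 2^\Omega\to 2^{\Theta_i}$ with $\eta_i\circ\rho_i$ the refining $2^\Omega\to 2^M$. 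That establishes property (1).

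For property (2), take $\omega\in\Omega$ and suppose $A_i\subseteq\rho_i(\{\omega\})$ for each $i$ with $\eta_1(A_1)=\cdots=\eta_n(A_n) =: A\subseteq M$. First I would argue $A\neq\emptyset$: since refinings send nonempty sets to nonempty sets (Proposition~\ref{pro:refining}), if one $A_i$ were empty the common value would be empty only if all are, so WLOG assume all $A_i$ nonempty, whence $A\neq\emptyset$. Now pick $x\in A$. Since $A\subseteq\eta_i(A_i)\subseteq\eta_i(\rho_i(\{\omega\}))$ which is the $\mathcal{Q}$-block $B_\omega\subseteq M$ corresponding to $\omega$, the set $A$ lies inside $B_\omega$. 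On the other hand $A = \eta_i(A_i)$ is, for each $i$, a union of blocks of $\mathcal{Q}_i$ (because $\eta_i$ of a set is the union of $\eta_i$ of its singletons, and the $\eta_i(\{\theta\})$ are precisely the blocks of $\mathcal{Q}_i$). So $A$ is a nonempty subset of $B_\omega$ that is simultaneously a union of $\mathcal{Q}_i$-blocks for every $i$; if $A\subsetneq B_\omega$ this would contradict the minimality of the blocks of $\mathcal{Q}$ (a block of $\mathcal{Q}$ cannot be split into two nonempty pieces each saturated for all $\mathcal{Q}_i$, by definition of $\mathcal{Q}$ as the connected-components partition). Hence $A = B_\omega$, which forces $A_i = \rho_i(\{\omega\})$ for every $i$ — and this is exactly the assertion of property (2) once one notes the statement is really forbidding \emph{proper} such subsets; I would double-check the intended reading of the ``$\not\exists$'' against the binary case in Definition~\ref{def:common-coarsening} and the surrounding text to make sure the quantifier is over $A_i\subsetneq\rho_i(\{\omega\})$ (or equivalently not all equal to $\rho_i(\{\omega\})$).

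For uniqueness, suppose $\Omega'\in\mathcal{F}$ also satisfies (1) and (2). Property (1) for $\Omega'$ means $\Omega'$ is a common coarsening of the $\Theta_i$, hence (pulling back along the $\rho_i'$) $\Omega'$ corresponds to a partition $\mathcal{Q}'$ of $M$ coarser than every $\mathcal{Q}_i$; since $\mathcal{Q}$ is the \emph{finest} such partition, each block of $\mathcal{Q}'$ is a union of blocks of $\mathcal{Q}$, i.e.\ $\Omega$ is a refinement of $\Omega'$ and there is a refining $\sigma : 2^{\Omega'}\to 2^\Omega$. If $\Omega'\neq\Omega$ then some $\omega'\in\Omega'$ has $\sigma(\{\omega'\})$ containing at least two distinct $\omega_1,\omega_2\in\Omega$; take $A_i := \rho_i(\{\omega_1\})\subseteq\rho_i'(\{\omega'\})$, which is a \emph{proper} subset, and observe $\eta_i(A_i) = B_{\omega_1}$ is the same for all $i$ — violating property (2) for $\Omega'$. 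Hence $\Omega' = \Omega$ and uniqueness follows; one then invokes Axiom $A2$ (identity of coarsenings) to upgrade ``$\Omega'$ is the same partition of $M$'' to ``$\Omega' = \Omega$ as frames, with matching refinings''.

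The main obstacle I anticipate is purely bookkeeping rather than conceptual: translating fluidly between the three descriptions of the object — as a frame in $\mathcal{F}$, as a partition of the minimal refinement $M$, and as the connected-components/join construction — while keeping the refining maps $\rho_i$, $\eta_i$, $\sigma$ and their compositions straight, and making sure each passage is licensed by the correct axiom ($A4$ to realize partitions of $M$ as frames, $A2$/$A3$ for uniqueness of frames and maps, Theorem~\ref{the:minimal} for the existence of $M$ and the $\eta_i$, Proposition~\ref{pro:refining} for the set-theoretic behaviour of refinings). A secondary subtlety is confirming the exact logical form of condition (2) (proper subsets versus arbitrary subsets), since with arbitrary subsets the condition as literally written would be false for $\omega$ with $A_i = \rho_i(\{\omega\})$; I would state explicitly at the outset that (2) is read as ``no \emph{proper} such subsets'' and proceed accordingly.
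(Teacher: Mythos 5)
Your proposal is correct, but it takes a genuinely different route from the paper's proof. The paper proceeds \emph{bottom-up}: it starts from an arbitrary common coarsening (e.g.\ the unit frame, via Theorem~\ref{the:comcoa}) and repeatedly \emph{splits} any element $l$ admitting subsets $A_i\subset\rho_i(\{l\})$ with $\eta_1(A_1)=\cdots=\eta_n(A_n)$ into two new elements, arguing termination by finiteness of the possible bisections; uniqueness is then obtained from the auxiliary Lemma~\ref{lem:aux}. You instead construct $\Omega$ \emph{top-down} and in one step, as the join of the partitions $\mathcal{Q}_i$ of the minimal refinement $M$ in the partition lattice (connected components of the graph linking points sharing a $\mathcal{Q}_i$-block), and verify condition (2) by the saturation/connectedness argument: a nonempty $A=\eta_i(A_i)$ inside a block $B_\omega$ that is saturated for every $\mathcal{Q}_i$ must equal $B_\omega$, since otherwise $B_\omega$ would split into two nonempty pieces with no edge between them. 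Your uniqueness argument (any strictly coarser common coarsening contains two join-blocks inside one of its blocks, violating (2)) plays the role of the paper's Lemma~\ref{lem:aux}. What each buys: your construction makes existence and uniqueness essentially simultaneous and identifies the maximal coarsening with a standard lattice-theoretic object, which fits well with Section~\ref{sec:lattice-structure}; the paper's splitting procedure is algorithmic and yields an explicit bound on the number of steps, and its Lemma~\ref{lem:aux} is a reusable tool. Two points you flag deserve emphasis: you are right that condition (2) must be read as excluding the trivial choices $A_i=\rho_i(\{\omega\})$ (and the all-empty choice) — Lemma~\ref{lem:aux} confirms the paper intends $A_j\neq X_j$ for some $j$ — and the ``factoring'' of the refining $2^\Omega\to 2^M$ through each $\Theta_i$ as a refining \emph{in} $\mathcal{R}$ does require a short appeal to Axioms $A4$, $A1$ and $A2$ (realize the induced partition of $\Theta_i$ as a coarsening, then identify it with $\Omega$); this is routine bookkeeping of the kind the paper itself suppresses, but it should be written out in a final version.
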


We first need an intermediate Lemma.

\begin{lemma} \label{lem:aux}
Suppose $\Theta_1 \otimes \cdots \otimes \Theta_n$ is the minimal refinement of $\Theta_1, \cdots,\Theta_n$, with refinings $\eta_i : 2^{\Theta_i} \rightarrow 2^{\Theta_1 \otimes \cdots \otimes \Theta_n}$. Suppose also that there exist $X_1 \subseteq \Theta_1,..., X_n \subseteq \Theta_n$ with $\eta_1(X_1) = \cdots =\eta_n(X_n)$ such that:
\[
\nexists A_1\subseteq X_1,...,A_n\subseteq X_n \; s.t. \; \eta_1(A_1)=...=\eta_n(A_n)
\]
and $A_j \neq X_j$ for some $j \in [1,...,n]$.

Then, for every common coarsening $\Omega$ of $\Theta_1, \cdots ,\Theta_n$ with refinings $\rho_i : 2^{\Omega} \rightarrow 2^{\Theta_i}$, there exists $\omega \in \Omega$ such that $X_i \subseteq \rho_i (\{\omega\})$ for all $i = 1,...,n$.
\end{lemma}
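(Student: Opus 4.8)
\textbf{Proof strategy for Lemma \ref{lem:aux}.}

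The plan is to argue by contradiction, playing off the hypothesis that $X_1,\dots,X_n$ form a "minimal" family of subsets mapping to a common subset of the minimal refinement $\Theta_1 \otimes \cdots \otimes \Theta_n$ against the defining property (Theorem \ref{the:minimal}, property 2) of that minimal refinement. First I would recall exactly what it means for $\Theta_1 \otimes \cdots \otimes \Theta_n$ to be the minimal refinement: every singleton $\{\theta\}$ of it is the intersection $\eta_1(\{\theta_1\}) \cap \cdots \cap \eta_n(\{\theta_n\})$ for suitable $\theta_i \in \Theta_i$, and the refinings $\eta_i$ preserve unions, intersections and complements (Proposition \ref{pro:refining}). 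So the common value $\eta_1(X_1) = \cdots = \eta_n(X_n) =: Y \subseteq \Theta_1 \otimes \cdots \otimes \Theta_n$ decomposes as a disjoint union of singletons of the minimal refinement, and each such singleton forces a compatible choice of one element from each $X_i$.

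Next I would fix an arbitrary common coarsening $\Omega$ of $\Theta_1,\dots,\Theta_n$ with refinings $\rho_i : 2^\Omega \to 2^{\Theta_i}$, and consider the subset $Z_i \doteq \bar{\rho_i}(X_i) \subseteq \Omega$, the outer reduction (\ref{eq:outer-reduction}) of $X_i$ — that is, the set of $\omega \in \Omega$ whose image $\rho_i(\{\omega\})$ meets $X_i$. The goal is to show $Z_1 = \cdots = Z_n$ and that this common set is in fact a single element $\{\omega\}$ with $X_i \subseteq \rho_i(\{\omega\})$. The equality $Z_1 = \cdots = Z_n$ should follow because $\Omega$, being a coarsening of each $\Theta_i$, is in particular a coarsening of the minimal refinement (Proposition \ref{pro:minimal}); composing $\rho_i$ with $\eta_i$ gives the single refining $\varpi : 2^\Omega \to 2^{\Theta_1 \otimes \cdots \otimes \Theta_n}$, and then $\bar{\rho_i}(X_i) = \bar\varpi(\eta_i(X_i)) = \bar\varpi(Y)$, which does not depend on $i$. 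Call this common set $Z$.

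Then I would show $Z$ is a singleton by invoking the minimality hypothesis on the $X_i$'s. Suppose $Z$ contained two distinct elements $\omega, \omega'$. Each of $\rho_i(\{\omega\}) \cap X_i$ and $\rho_i(\{\omega'\}) \cap X_i$ is a nonempty proper-or-not subset of $X_i$; set $A_i \doteq \rho_i(\{\omega\}) \cap X_i$. Using that the $\rho_i(\{\omega\})$ partition $\Theta_i$ and that $\varpi$ is a refining, one checks $\eta_1(A_1) = \cdots = \eta_n(A_n)$ (both equal $\varpi(\{\omega\}) \cap Y$), while $A_j \subsetneq X_j$ for every $j$ since $\omega' \in Z$ contributes elements of $X_j$ outside $\rho_j(\{\omega\})$. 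This contradicts the hypothesis that no such strictly-smaller compatible family exists. Hence $Z = \{\omega\}$ for a unique $\omega$, and since $\bar{\rho_i}(X_i) = \{\omega\}$ means every element of $X_i$ lies in $\rho_i(\{\omega\})$, we get $X_i \subseteq \rho_i(\{\omega\})$ for all $i$, as required.

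The main obstacle I anticipate is the bookkeeping in the contradiction step: one must verify carefully that the $A_i \doteq \rho_i(\{\omega\}) \cap X_i$ really do satisfy $\eta_1(A_1) = \cdots = \eta_n(A_n)$ and are genuinely strictly smaller than the $X_i$ in at least one coordinate — this is where the interplay between the partition structure induced by $\rho_i$ and the intersection structure of the minimal refinement has to be handled with the properties of refinings (Proposition \ref{pro:refining}) applied to the composite map $\varpi$, rather than hand-waved. Everything else is routine manipulation of refinings, outer reductions, and the universal property of $\otimes$ from Theorem \ref{the:minimal} and Proposition \ref{pro:minimal}.
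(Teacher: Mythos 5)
Your proposal is correct and follows essentially the same route as the paper's proof: a contradiction argument that sets $A_i \doteq \rho_i(\{\omega\})\cap X_i$, uses $\eta_i(\rho_i(\{\omega\}))=\varpi(\{\omega\})$ together with the fact that refinings commute with intersection to get $\eta_1(A_1)=\cdots=\eta_n(A_n)$, and notes $A_j\subsetneq X_j$, contradicting the minimality hypothesis. Your preliminary step identifying the outer reductions $\bar{\rho}_i(X_i)$ as a common set $Z$ is a slightly cleaner organization of the same argument (the paper states the negation of the conclusion more loosely), not a different method.
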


\begin{proof}
Let us assume that such an element $\omega$ does not exist, and that some $X_i$ is covered instead by a non-singleton subset $\{ \omega_1, \cdots , \omega_k \} \subset \Omega$ of the common coarsening: 
\[
X_i \subseteq \rho_i (\{\omega_1, \cdots , \omega_k\}).
\]
Clearly, for each of its elements $\omega_j$:
\[
\eta_i (\rho_i (\omega_j) \cap X_i) = \eta_i ( \rho_i (\omega_j) ) \cap \eta_i (X_i).
\]
By definition of common coarsening, on the other hand:
\[
\eta_1 (\rho_1 (\omega_j)) = ... = \eta_n( \rho_n (\omega_j)).
\]
Therefore, since $\eta_1 (X_1) = \cdots = \eta_n (X_n)$ by hypothesis, we have that:
\[
\eta_1 (\rho_1 (\omega_j) \cap X_1 ) = \cdots = \eta_n (\rho_n ( \omega_j ) \cap X_n)
\]
with $A_i \doteq ( \rho_i ( \omega_j ) \cap X_i ) \subsetneq X_i$, which goes against what assumed.
\end{proof}

Now we can tackle the proof of Theorem \ref{the:maximal}.

\begin{proof} \emph{Existence.} The proof is constructive.\\ Let us take an arbitrary coarsening $\mathcal{L}$ of $\Theta_1,...,\Theta_n$ (which exists by Theorem \ref{the:comcoa}) and check for every $l\in\mathcal{L}$ whether there exists a collection of subsets $\{A_i\subset \rho_i(\{l\}),\;i=1,...,n\}$ such that $\eta_1(A_1)=...=\eta_n(A_n)$. If the answer is negative $\mathcal{L} = \Omega$, and we have the desired frame.\\ Otherwise we can build a new common coarsening ${\mathcal{L}}'$ of $\Theta_1,...,\Theta_n$ by simply splitting $\{l\}$ into a pair $\{l_1,l_2\}$ such that for all $i$:
\[
\rho'_i(\{l_1\}) = A_i,\hspace{5mm} \rho'_i (\{l_2\})=B_i,
\]
where $B_i \doteq \rho_i(\{l\})\setminus A_i$. This splitting does exist, for we can note that if $\rho_j (\{ l \}) \setminus A_j \neq \emptyset$ for some $j \in [1,...,n]$ then $\rho_i (\{l\}) \setminus A_i \neq \emptyset$ for all $i$. 

This splitting procedure can be repeated until there are no subsets $\{A_i\}$ satisfying condition (2). The procedure terminates, since the number of possible bisections of the images $\rho_i (\{l\})$ of $l$ in the various frames $\Theta_i$ is finite. More precisely, the maximum number of splitting steps is:
\[
\lceil \log_2 \max_{l\in{\mathcal{L}}} \min_{i=1,...,n} |\rho_i(\{l\})| \rceil.
\]

\emph{Uniqueness.} Suppose $\Omega'$ is another common coarsening satisfying condition (2), with refinings $\rho'_i : 2^{\Omega'} \rightarrow 2^{\Theta_i}$, distinct from $\Omega$. If we define $X_i\doteq \rho_i(\{\omega\})$ whenever $\omega\in\Omega$, by Lemma \ref{lem:aux} there exists $\omega' \in\Omega'$ such that $\rho_i (\{ \omega \}) \subset \rho'_i (\{ \omega' \})$. But then condition (2) implies that $\rho_i (\{ \omega \}) = \rho'_i (\{ \omega' \})$ for every pair $\omega,\omega'$, so that $\Omega = \Omega'$.
\end{proof}

\begin{definition}
We call this unique frame the \emph{maximal coarsening} of $\Theta_1,...,\Theta_n$, and denote it by $\Theta_1\oplus...\oplus\Theta_n$.
\end{definition}

\subsection{Maximal coarsening as greatest lower bound}

\begin{theorem}\label{the:inf}
If $\Omega$ is a common coarsening of a finite set of compatible frames $\Theta_1,...,\Theta_n$ then $\Omega$ is a coarsening of their maximal coarsening too, namely there exists a refining $\rho : 2^\Omega \rightarrow 2^{\Theta_1\oplus\cdots\oplus\Theta_n}$.
\end{theorem}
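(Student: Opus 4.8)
The plan is to show that the maximal coarsening $M \doteq \Theta_1 \oplus \cdots \oplus \Theta_n$ is the greatest lower bound of $\{\Theta_1,\dots,\Theta_n\}$ for the coarsening order (\ref{eq:order-coa}), i.e. that an arbitrary common coarsening $\Omega$ lies below $M$. Both $M$ and $\Omega$ are coarsenings of every $\Theta_i$; since each $\Theta_i$ is in turn a coarsening of the minimal refinement $N \doteq \Theta_1 \otimes \cdots \otimes \Theta_n$ (Theorem \ref{the:minimal}), transitivity of (\ref{eq:order-coa}) makes both $M$ and $\Omega$ coarsenings of $N$. Write $\eta_i : 2^{\Theta_i} \to 2^N$, $\rho_i : 2^M \to 2^{\Theta_i}$ and $\sigma_i : 2^\Omega \to 2^{\Theta_i}$ for the relevant refinings. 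By Axioms $A1$ and $A3$ the composite $\eta_i \circ \rho_i$ is one and the same refining $2^M \to 2^N$ for every $i$, and likewise $\eta_i \circ \sigma_i$ does not depend on $i$; hence $M$ and $\Omega$ determine well-defined partitions of $N$, namely $\mathcal{Q}_M = \{ Q_\mu \doteq \eta_i(\rho_i(\{\mu\})) : \mu \in M \}$ and $\mathcal{Q}_\Omega = \{ P_\omega \doteq \eta_i(\sigma_i(\{\omega\})) : \omega \in \Omega \}$.

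First I would invoke Lemma \ref{lem:aux}: for a fixed $\mu \in M$ put $X_i \doteq \rho_i(\{\mu\})$, so that $\eta_1(X_1) = \cdots = \eta_n(X_n)$, and observe that the characterising property (2) of the maximal coarsening in Theorem \ref{the:maximal} is exactly the hypothesis of Lemma \ref{lem:aux} for this collection $X_1,\dots,X_n$. The Lemma then yields, for each $\mu$, an $\omega \in \Omega$ with $\rho_i(\{\mu\}) \subseteq \sigma_i(\{\omega\})$ for all $i$; applying $\eta_i$ and using $\rho_i(\{\mu\}) \neq \emptyset$ together with the disjointness of the $\sigma_i(\{\omega\})$ shows that this $\omega$ is unique. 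This defines a map $g : M \to \Omega$ and, equivalently, shows $Q_\mu \subseteq P_{g(\mu)}$, i.e. $\mathcal{Q}_M$ refines $\mathcal{Q}_\Omega$. Since both are partitions of the same set $N$, each block $P_\omega$ is a nonempty union of blocks $Q_\mu$, so $g$ is surjective and its fibres $\{ g^{-1}(\{\omega\}) : \omega \in \Omega \}$ form a disjoint partition of $M$ into nonempty blocks.

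Next, by Axiom $A4$ there is a coarsening $\Omega' \in \mathcal{F}$ of $M$ corresponding to this partition, with a refining $\rho : 2^{\Omega'} \to 2^M$ whose singleton images $\rho(\{\omega'\})$ are precisely the fibres $g^{-1}(\{\omega\})$ (the correspondence $\omega' \leftrightarrow \omega$ being a bijection, as both families of sets partition $M$). The remaining — and fiddliest — step is to identify $\Omega'$ with $\Omega$. For corresponding $\omega' \leftrightarrow \omega$ I would show $(\rho_i \circ \rho)(\{\omega'\}) = \bigcup_{\mu \in g^{-1}(\{\omega\})} \rho_i(\{\mu\}) = \sigma_i(\{\omega\})$: the inclusion $\subseteq$ is immediate from the defining property of $g$, while for $\supseteq$ one chases a point $\theta \in \sigma_i(\{\omega\})$ to the unique block $\rho_i(\{\mu\})$ containing it and deduces $g(\mu) = \omega$ from $\emptyset \neq \eta_i(\{\theta\}) \subseteq Q_\mu \cap P_\omega$ (otherwise $Q_\mu \subseteq P_{g(\mu)}$ would force $Q_\mu \cap P_\omega = \emptyset$). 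Thus $\rho_i \circ \rho$ (a refining into $\Theta_i$ by Axiom $A1$) and $\sigma_i$ have matching singleton images, so Axiom $A2$ gives $\Omega' = \Omega$ and $\rho_i \circ \rho = \sigma_i$. In particular $\rho : 2^\Omega \to 2^M$ is a refining in $\mathcal{R}$, which is exactly the claim that $\Omega$ is a coarsening of $\Theta_1 \oplus \cdots \oplus \Theta_n$. The main obstacle is purely organisational: juggling the three layers of frames ($\Theta_i$, the coarsenings $M$ and $\Omega$, and the common refinement $N$) with their refinings, and feeding the combinatorial statement ``$\mathcal{Q}_M$ refines $\mathcal{Q}_\Omega$'' through Axioms $A4$ and $A2$ to produce an honest refining inside the family.
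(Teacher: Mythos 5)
Your proposal is correct, but it takes a genuinely different route from the paper's. The paper's proof of Theorem \ref{the:inf} is a two-line appeal to Theorem \ref{the:maximal}: given an arbitrary common coarsening $\Omega$, either it already satisfies the defining condition (2) of the maximal coarsening (in which case uniqueness forces $\Omega = \Theta_1\oplus\cdots\oplus\Theta_n$), or one re-runs the \emph{splitting procedure} from the existence half of that proof starting from $\Omega$; the output satisfies (2), hence by uniqueness is the maximal coarsening, and by construction each splitting step refines the previous frame, so the composite is a refining $2^\Omega \rightarrow 2^{\Theta_1\oplus\cdots\oplus\Theta_n}$. You instead bypass the splitting procedure entirely and argue via the uniqueness half's key tool, Lemma \ref{lem:aux}: applying it to $X_i = \rho_i(\{\mu\})$ for each $\mu$ in the maximal coarsening yields the map $g$ and the combinatorial fact that the partition of the minimal refinement induced by $M$ refines the one induced by $\Omega$, after which Axiom $A4$ manufactures the coarsening of $M$ corresponding to the fibres of $g$ and Axiom $A2$ identifies it with $\Omega$. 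Your version is longer but buys something real: it makes explicit the step the paper glosses over, namely that a purely set-theoretic refinement relation between partitions of $N$ actually corresponds to a refining map \emph{inside the family} $\mathcal{R}$, and it constructs that refining in one shot rather than as a composite of finitely many splittings whose membership in $\mathcal{R}$ the paper never fully justifies. The only point worth flagging is your (correct) reading of condition (2) of Theorem \ref{the:maximal} as implicitly excluding the trivial choices $A_i = X_i$ and $A_i = \emptyset$ so that it matches the hypothesis of Lemma \ref{lem:aux}; the paper's own uniqueness argument uses the same reading, so this is consistent rather than a gap.
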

\begin{proof} 
Consider a different common coarsening $\Omega'$ of $\Theta_1,...,\Theta_n$, with refinings
$\rho'_i : 2^{\Omega'} \rightarrow 2^{\Theta_i}$. If it meets condition (2) of Theorem \ref{the:maximal} then, because of the uniqueness of the maximal coarsening $\Theta_1 \oplus \cdots \oplus \Theta_n$, the frame $\Omega'$ coincides with the latter. Otherwise, the
splitting procedure of the proof of Theorem \ref{the:maximal} can be applied to generate such a frame. Again, uniqueness guarantees that the outcome is indeed the maximal coarsening, and by construction it is a refinement of $\Omega'$.
\end{proof}

In other words,

\begin{corollary} \label{cor:maximal-coarsening-inf}
The maximal coarsening $\Theta_1 \oplus \cdots \oplus \Theta_n$ of a collection of compatible frames $\Theta_1,...,\Theta_n$ is the greatest lower bound ($\inf$) of $\Theta_1,...,\Theta_n$, seen as elements of the poset $(\mathcal{F},\leq^*)$ associated with order relation (\ref{eq:order-coa}).
\end{corollary}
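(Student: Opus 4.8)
The plan is to deduce Corollary \ref{cor:maximal-coarsening-inf} almost immediately from Theorem \ref{the:maximal} (which gives existence and uniqueness of the maximal coarsening $\Theta_1\oplus\cdots\oplus\Theta_n$) together with Theorem \ref{the:inf} (which says that any common coarsening $\Omega$ of $\Theta_1,\dots,\Theta_n$ is itself a coarsening of $\Theta_1\oplus\cdots\oplus\Theta_n$). What remains is simply to re-read these two facts in the language of the poset $(\mathcal{F},\leq^*)$, where $\Theta'\leq^*\Theta$ means there is a refining $2^{\Theta'}\to 2^{\Theta}$, i.e. $\Theta'$ is a coarsening of $\Theta$.

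First I would check that $\Theta_1\oplus\cdots\oplus\Theta_n$ is a lower bound of $\{\Theta_1,\dots,\Theta_n\}$ in $(\mathcal{F},\leq^*)$. By condition (1) of Theorem \ref{the:maximal}, for each $i$ there is a refining $\rho_i:2^{\Theta_1\oplus\cdots\oplus\Theta_n}\to 2^{\Theta_i}$, which is exactly the statement $\Theta_1\oplus\cdots\oplus\Theta_n\leq^*\Theta_i$ for all $i$. Hence the maximal coarsening is a common lower bound. Second, I would show it is the \emph{greatest} such lower bound: let $\Omega\in\mathcal{F}$ be any frame with $\Omega\leq^*\Theta_i$ for all $i$, i.e. $\Omega$ is a common coarsening of $\Theta_1,\dots,\Theta_n$ in the sense of Definition \ref{def:common-coarsening}. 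Then Theorem \ref{the:inf} applies verbatim and produces a refining $2^{\Omega}\to 2^{\Theta_1\oplus\cdots\oplus\Theta_n}$, that is, $\Omega\leq^* \Theta_1\oplus\cdots\oplus\Theta_n$. So every common lower bound of $\{\Theta_1,\dots,\Theta_n\}$ lies $\leq^*$-below the maximal coarsening, which is precisely the defining property of $\inf_{(\mathcal{F},\leq^*)}(\Theta_1,\dots,\Theta_n)$ in Definition \ref{def:sup} (with the roles of $\geq$ and $\leq$ read off the order $\leq^*$).

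Finally I would note uniqueness of the infimum: in any poset the greatest lower bound, when it exists, is unique, and here its existence is guaranteed because $\mathcal{F}$ has common coarsenings at all (Theorem \ref{the:comcoa}) and the maximal coarsening realizing the $\inf$ is unique by Theorem \ref{the:maximal}. Thus $\inf_{(\mathcal{F},\leq^*)}(\Theta_1,\dots,\Theta_n)=\Theta_1\oplus\cdots\oplus\Theta_n$, completing the proof.

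I do not anticipate a genuine obstacle here: all the substantive work has already been done in Theorems \ref{the:maximal} and \ref{the:inf}, and the corollary is a matter of unwinding definitions. The only point requiring a little care is bookkeeping about the direction of the order relation $\leq^*$ versus the direction of refining maps — one must be consistent that $\Theta_1\leq^*\Theta_2$ encodes ``$\Theta_1$ is a coarsening of $\Theta_2$'' (equivalently ``$\exists\,\rho:2^{\Theta_1}\to 2^{\Theta_2}$''), so that ``common coarsening'' and ``lower bound'' genuinely coincide, and hence ``maximal coarsening'' matches ``greatest lower bound'' rather than ``least upper bound''. This mirrors exactly the dual statement Theorem \ref{the:sup}, where $\sup_{(\mathcal{F},\leq^*)}$ was identified with the minimal refinement $\otimes$.
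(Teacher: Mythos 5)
Your proof is correct and follows exactly the route the paper intends: the corollary is presented there as an immediate restatement ("In other words") of Theorem \ref{the:inf} combined with condition (1) of Theorem \ref{the:maximal}, which is precisely the two-step lower-bound/greatest-lower-bound argument you spell out. Your care with the direction of $\leq^*$ (coarsening $=$ lower bound) is the only subtle point, and you handle it correctly.
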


\subsection{The dual lattices of frames} 

Recalling the definition of lattice (Chapter \ref{cha:geo}, Definition \ref{def:lattice}), Proposition \ref{the:minimal} and Theorems \ref{the:sup}, \ref{the:maximal} and \ref{the:inf} have a straightforward consequence on the algebraic structure of families of frames \cite{cuzzolin05amai}.

\begin{corollary} \label{cor:lattice-of-frames}
Both $(\mathcal{F},\leq)$ and $(\mathcal{F},\leq^*)$ where $\mathcal{F}$ is the collection of all sets of a family of compatible frames of discernment are lattices, where:
\[
\begin{array}{ccc}
\bigwedge_i \Theta_i = \bigotimes_i \Theta_i, \; \bigvee_i \Theta_i = \bigoplus_i \Theta_i, &  & \bigwedge^*_i \Theta_i = \bigoplus_i \Theta_i, \; \bigvee^*_i \Theta_i = \bigotimes_i \Theta_i.
\end{array}
\]
\end{corollary}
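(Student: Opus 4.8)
The plan is to assemble Corollary \ref{cor:lattice-of-frames} from the results already established, treating the two order relations separately but observing that they are mutual inverses, so that one lattice structure is the dual (order-reversal) of the other. First I would recall that a lattice is a poset in which every finite collection of elements admits both a least upper bound and a greatest lower bound (Chapter \ref{cha:geo}, Definition \ref{def:lattice}), so it suffices to exhibit, for each of $(\mathcal{F},\leq)$ and $(\mathcal{F},\leq^*)$, the $\sup$ and $\inf$ of an arbitrary finite family $\Theta_1,\dots,\Theta_n$ and verify they lie in $\mathcal{F}$.

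For the poset $(\mathcal{F},\leq^*)$ associated with order relation (\ref{eq:order-coa}), Theorem \ref{the:sup} already identifies $\sup_{(\mathcal{F},\leq^*)}(\Theta_1,\dots,\Theta_n) = \Theta_1 \otimes \cdots \otimes \Theta_n$, the minimal refinement, whose existence in $\mathcal{F}$ is guaranteed by Axiom $A7$ (equivalently by Proposition \ref{the:minimal}). Dually, Theorem \ref{the:maximal} constructs the maximal coarsening $\Theta_1 \oplus \cdots \oplus \Theta_n$ as an element of $\mathcal{F}$, and Corollary \ref{cor:maximal-coarsening-inf} (building on Theorem \ref{the:inf}) shows it is the greatest lower bound $\inf_{(\mathcal{F},\leq^*)}(\Theta_1,\dots,\Theta_n)$. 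Hence $(\mathcal{F},\leq^*)$ is a lattice with $\bigvee^*_i \Theta_i = \bigotimes_i \Theta_i$ and $\bigwedge^*_i \Theta_i = \bigoplus_i \Theta_i$, which is exactly the second half of the claimed identities.

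For $(\mathcal{F},\leq)$ I would invoke the general principle that the opposite of a lattice is a lattice: since relation (\ref{eq:order-ref}) is the inverse of (\ref{eq:order-coa}) (one has $\Theta_1 \leq \Theta_2 \Leftrightarrow \Theta_2 \leq^* \Theta_1$), the roles of $\sup$ and $\inf$ simply swap. Concretely, $\bigwedge_i \Theta_i$ in $(\mathcal{F},\leq)$ equals $\bigvee^*_i \Theta_i = \bigotimes_i \Theta_i$, and $\bigvee_i \Theta_i$ equals $\bigwedge^*_i \Theta_i = \bigoplus_i \Theta_i$, matching the first pair of identities in the statement. One should double-check the direction of the inequalities against the definitions: $\Theta_1 \leq \Theta_2$ means $\Theta_1$ is a \emph{refinement} of $\Theta_2$, so the most-refined common bound (the minimal refinement $\otimes$) is the \emph{smallest} under $\leq$, i.e.\ the $\inf = \wedge$, consistent with what is claimed.

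I do not anticipate a serious obstacle, since every component has already been proved; the only real work is bookkeeping — making sure the minimal/maximal refinement operators are well-defined on \emph{finite} subcollections inside the possibly infinite family (handled by the monoid structure of Theorems \ref{the:monfin} and \ref{the:mongen}, which give associativity and commutativity so that $\bigotimes_i$ and $\bigoplus_i$ are unambiguous), and that the lattice axioms of Definition \ref{def:lattice} (associativity, commutativity, idempotence, absorption) are consequences of the poset-with-all-finite-sups-and-infs characterization rather than requiring separate verification. The subtlest point, and the one I would state carefully, is keeping the two dual relations and their $\sup$/$\inf$ straight so the final display of four identities comes out with $\otimes$ and $\oplus$ assigned to the correct meets and joins.
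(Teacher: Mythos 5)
Your proposal is correct and follows essentially the same route as the paper, which presents the corollary as a direct consequence of Theorem \ref{the:sup}, Theorem \ref{the:maximal} and Theorem \ref{the:inf} (via Corollary \ref{cor:maximal-coarsening-inf}), with the structure on $(\mathcal{F},\leq)$ obtained by order reversal. Your explicit check that the direction of $\leq$ versus $\leq^*$ assigns $\otimes$ to the meet and $\oplus$ to the join in $(\mathcal{F},\leq)$ is exactly the bookkeeping the paper leaves implicit.
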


An infinite lattice $L$ is said \emph{complete} if any arbitrary collection (even not finite) of points in $L$ admits both $\sup$ and $\inf$. In a complete lattice $L$ there exist an \emph{initial element} $\mathbf{0} \equiv \wedge L$ and a \emph{final element} $\mathbf{1} \equiv \vee L$.\\ For example, the power set $2^\Theta$ of a set $\Theta$ is a complete lattice, with $\mathbf{0} = \emptyset$ and $\mathbf{1} = \{\Theta\}$. The \emph{height} $h(x)$ or `dimension' of an element $x$ in a lattice $L$ is the length of the maximal chain from $\mathbf{0}$ to $x$. For the power set $2^\Theta$, the height of a subset $A\in 2^\Theta$ is simply its cardinality $|A|$.

Now, a family of frames $\mathcal{F}$ lacks the attribute of completeness: the axioms which define the notion of family of compatible frames do not guarantee the existence of a minimal refinement (or a maximal coarsening) for an infinite (if only countable) collection of sets. 

\section{Semimodular structure of families of frames} \label{sec:semimodular}

More can be said about the class of lattices a family of compatible frames belongs to. A special such class arises from \emph{projective geometries} \cite{Rosenbaum}, i.e., collections $L(V)$ of all the subspaces of any vector space $V$.  

\begin{definition} \label{def:modular-lattice}
A lattice $L$ is called \emph{modular} iff whenever $a,b,c \in L$, if $a\geq b$, $a\wedge c = b\wedge c$ and $a\vee c= b\vee c$ then $a=b$.
\end{definition}

Modular lattices, as many authors have shown, are related to abstract independence. Therefore, the vector independence analogy illustrated in the Introduction to this Chapter would suggest a link between families of frames and the class of modular lattices. This is not entirely true.

\subsection{Upper and lower semimodularity, Birkhoff property}

Let us introduce a few necessary algebraic notions. Given two elements $x,y \in L$ of a lattice with order relation $\geq$, we say that $x$ `covers' $y$ (denoted by $x \succ y $), whenever $x$ is greater than $y$ ($x \geq y$) and they admit no intermediate element, namely: $\nexists z \in L$ such that $x \geq z \geq y$.

\begin{definition} \label{def:sm}
A lattice $L$ is \emph{upper semimodular} \cite{dilworth44} if for each pair $x,y$ of elements of $L$, $x \succ x\wedge y$ implies $x\vee y \succ y$. A lattice $L$ is \emph{lower semimodular} if for each pair $x,y$ of elements of $L$, $x\vee y \succ y$ implies $x\succ x\wedge y$.
\end{definition}
If $L$ is upper semimodular with respect to an order relation $\leq$, than the corresponding dual lattice with order relation $\leq^*$ is lower semimodular, as:
\begin{equation} \label{eq:duality}
x\succ x\wedge y \vdash x\vee y \succ y \hspace{5mm} \Rightarrow \hspace{5mm} x \vee^* y \succ^* x
\vdash y \succ^* x \wedge^* y.
\end{equation}
For lattices of finite length, upper and lower semimodularity together imply modularity. In this sense semimodularity is indeed ``one half" of modularity.

Another related class is that of `Birkhoff' lattices.

\begin{definition} \label{def:birhoff-lattice}
A lattice $L$ is called \emph{Birkhoff} \cite{Szasz} iff whenever $a\vee b\succ a,b$ then $a,b\succ a\wedge b$. 
\end{definition}

As for modular lattices, if a lattice if both upper and lower semimodular, then it is also Birkhoff. Nevertheless the two concepts remain distinct.

\subsection{The Birkhoff lattice of frames} \label{sec:birkhoff-lattice}

Indeed, finite families of frames endowed with order relation (\ref{eq:order-ref}) are Birkhoff.

\begin{theorem}\label{the:finite}
A finite family of compatible frames $(\langle \Theta \rangle_{A_{1,..,4,7}},\leq)$ generated by a base set $\Theta$, endowed with order relation (\ref{eq:order-ref}), is a {complete Birkhoff lattice of finite length}.
\end{theorem}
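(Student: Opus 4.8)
The statement to prove is that a finite family $(\langle \Theta \rangle_{A_{1,..,4,7}},\leq)$ generated by a base set $\Theta$, with the refinement order (\ref{eq:order-ref}), is a complete Birkhoff lattice of finite length. My plan is to break this into four assertions and dispatch each in turn: (i) it is a lattice; (ii) it has finite length; (iii) finiteness-plus-length gives completeness; (iv) it satisfies the Birkhoff covering property. For (i), I would invoke Corollary \ref{cor:lattice-of-frames}, noting that $\langle \Theta \rangle_{A_{1,..,4,7}}$ is exactly the collection of all disjoint partitions of $\Theta$ together with the associated refinings (by the theorem of Section \ref{sec:family-generated-by-a-set}), hence is closed under both $\otimes$ (minimal refinement, which is $\sup$ under $\leq^*$, i.e.\ $\inf$ under $\leq$) and $\oplus$ (maximal coarsening, which is $\inf$ under $\leq^*$, i.e.\ $\sup$ under $\leq$). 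So $(\langle \Theta \rangle,\leq)$ is a lattice with $\bigwedge_i = \bigotimes_i$ and $\bigvee_i = \bigoplus_i$, with initial element $\mathbf 0 = \Theta$ itself (the finest partition, refining everything) and final element $\mathbf 1 = 1_\Theta$ the one-element partition.

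For (ii) and (iii): since the elements of $\langle \Theta \rangle$ are in bijection with the partitions of the finite set $\Theta$, the lattice is itself \emph{finite}, so every chain has length at most $|\Theta|$, giving finite length; and a finite lattice is automatically complete, with $\mathbf 0$ the finest partition and $\mathbf 1$ the trivial partition $\{\Theta\}$. I would make the identification with the partition lattice $\Pi(\Theta)$ explicit, because the standard order-theoretic facts about $\Pi(\Theta)$ then transfer directly: the covering relation $\Pi' \succ \Pi$ corresponds to merging exactly two blocks of $\Pi$ into one, and the height function is $h(\Pi) = |\Theta| - |\Pi|$ (number of blocks). This is the cleanest route and makes (iv) essentially a known fact about partition lattices.

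For (iv), the Birkhoff property: I must show that whenever $\Theta_1 \vee \Theta_2 \succ \Theta_1,\Theta_2$ then $\Theta_1,\Theta_2 \succ \Theta_1\wedge\Theta_2$. Translating into partition language with the order $\leq$ meaning ``finer than'' (so $\vee = \oplus$ = maximal common coarsening, $\wedge = \otimes$ = common refinement / meet of partitions), this is the assertion about $\Pi(\Theta)$: if the join covers each of two partitions, then each of them covers the meet. I would argue via the height function on $\Pi(\Theta)$: the partition lattice is known to be (upper) semimodular, meaning $h(\Pi_1) + h(\Pi_2) \geq h(\Pi_1\vee\Pi_2) + h(\Pi_1\wedge\Pi_2)$, with the covering characterization that $\Pi'\succ\Pi$ iff $h(\Pi') = h(\Pi)+1$. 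If $\Theta_1\vee\Theta_2$ covers both $\Theta_1$ and $\Theta_2$, then $h(\Theta_1) = h(\Theta_2) = h(\Theta_1\vee\Theta_2) - 1$; semimodularity forces $h(\Theta_1\wedge\Theta_2) \leq h(\Theta_1) + h(\Theta_2) - h(\Theta_1\vee\Theta_2) = h(\Theta_1\vee\Theta_2) - 2 = h(\Theta_1) - 1 = h(\Theta_2) - 1$, while trivially $h(\Theta_1\wedge\Theta_2) \leq \min(h(\Theta_1),h(\Theta_2))$ and $\Theta_1\wedge\Theta_2 \leq \Theta_i$; combining with the fact that a step down in height along a chain cannot skip, one concludes $h(\Theta_1\wedge\Theta_2) = h(\Theta_i) - 1$, hence $\Theta_i \succ \Theta_1\wedge\Theta_2$, which is the Birkhoff property.

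\textbf{Main obstacle.} The genuinely delicate point is establishing the semimodularity / height-additivity inequality for the family of frames \emph{intrinsically}, i.e.\ directly from Shafer's axioms and the definitions of $\otimes$ and $\oplus$ given in this chapter, rather than by quoting the classical theory of the partition lattice. I expect the safest route is to nail down rigorously that $\langle \Theta \rangle_{A_{1,..,4,7}}$ \emph{is} order-isomorphic to $\Pi(\Theta)$ (ordered by refinement) — which follows from the theorem in Section \ref{sec:family-generated-by-a-set} plus Axioms $A2$–$A3$ guaranteeing the refinings are unique — and then the covering structure and semimodularity are inherited. The remaining care is purely bookkeeping: checking that the covering relation really does correspond to a single block-merge (one direction is easy; the other requires showing no intermediate partition can be squeezed in, which is immediate once one counts blocks), and that the height function behaves as claimed. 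I would also remark that this same argument, read through duality (\ref{eq:duality}), will feed the later claims that families of frames are both upper and lower semimodular with respect to the dual pair of order relations.
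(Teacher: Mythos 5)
Your treatment of the lattice structure, finite length and completeness is fine and matches the paper, which likewise disposes of completeness by observing that a finite lattice is complete and relegates the identification with the partition lattice $\Pi(\Theta)$ to a footnote. The problem is the Birkhoff step, and it is not mere bookkeeping. Upper semimodularity gives $h(\Theta_1\wedge\Theta_2)\leq h(\Theta_1)+h(\Theta_2)-h(\Theta_1\vee\Theta_2)=h(\Theta_i)-1$, i.e.\ an \emph{upper} bound on the height of the meet, whereas to conclude $\Theta_i\succ\Theta_1\wedge\Theta_2$ in the graded lattice $\Pi(\Theta)$ you need the \emph{lower} bound $h(\Theta_1\wedge\Theta_2)\geq h(\Theta_i)-1$. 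Your closing phrase ``a step down in height along a chain cannot skip'' does not supply it: the meet of two elements each lying one level below their join can sit arbitrarily far below them. Concretely, in the family generated by $\Theta=\{1,2,3,4\}$ take $\Theta_1=\{\{1,2\},\{3,4\}\}$ and $\Theta_2=\{\{1,3\},\{2,4\}\}$: their maximal coarsening ($=\Theta_1\vee\Theta_2$ under $\leq$) is the one-element frame, which covers both (a single merge each), but their minimal refinement ($=\Theta_1\wedge\Theta_2$) is $\Theta$ itself, which neither covers, since $\{\{1,2\},\{3\},\{4\}\}$ lies strictly between. So the implication ``join covers both $\Rightarrow$ both cover meet'' fails outright in $(\langle\Theta\rangle_{A_{1,..,4,7}},\leq)$; what upper semimodularity delivers is the \emph{converse} implication.

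For what it is worth, the paper's own proof founders on the same rock: from $|\Theta_1\oplus\Theta_2|=|\Theta_i|-1$ it passes to the claim that the induced partitions of $\Theta_1\otimes\Theta_2$ have the form (\ref{eq:pi}), each merging a single pair of cells, which tacitly assumes the two refinings from the maximal coarsening split \emph{different} elements. In the example above both split the same element, $|\Theta_1\otimes\Theta_2|=|\Theta_i|+2$, and the conclusion fails. So your instinct that the ``genuinely delicate point'' lies exactly here was sound, but the difficulty cannot be patched: with Definition \ref{def:birhoff-lattice} read literally, the statement to be proved is the lower covering condition, which the upper-semimodular, non-modular partition lattice does not satisfy. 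Your height argument does become correct if the implication in Definition \ref{def:birhoff-lattice} is reversed (Sz\'asz's form of the Birkhoff condition): then $a,b\succ a\wedge b$ pins $h(a\vee b)$ between $h(a)+1$ and $h(a)+h(b)-h(a\wedge b)=h(a)+1$, and covering follows. You should either prove that version or restrict the hypothesis to pairs whose refinings split distinct elements of the maximal coarsening.
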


\begin{proof}\footnote{For an alternative proof based on the equivalence of $(\langle \Theta \rangle_{A_{1,..,4,7}} , \leq)$ to the equivalence (partition) lattice $\Pi(\Theta)$ see \cite{Szasz}, where it is proven that $(\langle \Theta \rangle_{A_{1,..,4,7}},\leq)$ is also \emph{relatively complemented.}}

\begin{itemize}
\item
The family $(\langle \Theta \rangle_{A_{1,..,4,7}},\leq)$ is \emph{complete}. Indeed, every finite lattice is complete for it does not contain any infinite collection of elements. 
\item
The family $(\langle \Theta \rangle_{A_{1,..,4,7}}, \leq)$ is \emph{Birkhoff} (Definition \ref{def:birhoff-lattice}). Consider two elements of the family $\Theta_1$, $\Theta_2$, and assume that their maximal coarsening indeed covers both frames: $\Theta_1\oplus \Theta_2 = \Theta_1 \vee \Theta_2 \succ \Theta_1,\Theta_2$. Then $\Theta_1\oplus \Theta_2$ must have cardinality:
\[
|\Theta_1\oplus \Theta_2| = |\Theta_1| - 1 = |\Theta_2| - 1,
\]
so that the associated refinings $\rho_1 : 2^{\Theta_1 \oplus \Theta_2} \rightarrow 2^{\Theta_1}$ and $\rho_2 : 2^{\Theta_1 \oplus \Theta_2} \rightarrow 2^{\Theta_2}$ leave unchanged each element of $\Theta_1 \oplus \Theta_2$ but one, replaced by two new elements.

Now, $\Theta_1$ and $\Theta_2$ also represent partitions of their minimal refinement $\Theta_1 \otimes \Theta_2$. By construction these partitions coincide in all but the elements obtained by refining the above two elements, as shown by Figure \ref{fig:part}.

\begin{figure}[ht!]
\begin{center}
\includegraphics[width = 0.25 \textwidth]{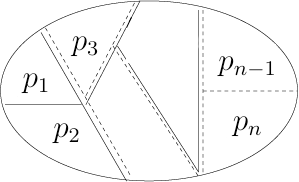}
\caption{\label{fig:part} Examples of partitions such as $\Pi_1$ (solid line) and $\Pi_2$ (dashed line) of Equation (\ref{eq:pi}).}
\end{center}
\end{figure}

Analytically:
\begin{equation} \label{eq:pi}
\Pi_1 = \big \{ p_1 \cup p_2, p_3, ..., p_n \big \}, \hspace{5mm} \Pi_2 = \big \{ p_1, ..., p_{n-2}, p_{n-1} \cup p_{n} \big \},
\end{equation}
having denoted the elements of the minimal refinement $\Theta_1 \otimes \Theta_2$ by $\{p_1,...,p_n\}$. The cardinality of the latter is then equal to $|\Theta_1 \otimes \Theta_2 | = |\Theta_1 | + 1 = |\Theta_2| + 1$. Clearly then $\Theta_1 \otimes \Theta_2$ is covered by both frames, for there cannot exist a frame with (integer) cardinality between $|\Theta_1| = |\Theta_2|$ and $|\Theta_1 \otimes \Theta_2| = |\Theta_1| + 1$.
\end{itemize}
\end{proof}

\begin{figure}[ht!]
\begin{center}
\includegraphics[width = 0.6 \textwidth]{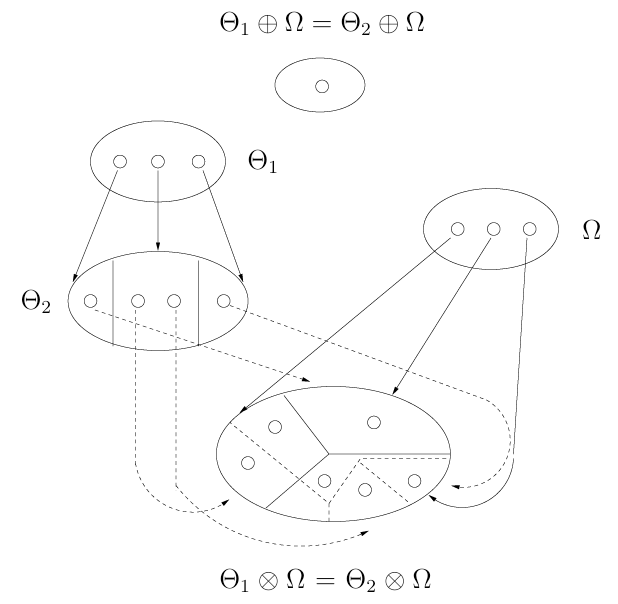}
\caption{\label{fig:nonmodular} Non-modularity of finite families of frames: a counterexample.}
\end{center}
\end{figure}

However, finite lattices of frames (regardless what order relation, $\leq$ or $\leq^*$, we pick) are \emph{not} modular: Figure \ref{fig:nonmodular} shows a simple counterexample in which the two frames on the left $\Theta_1$ and $\Theta_2$, linked by a refining (so that both $\Theta_2 \geq^* \Theta_1$ and $\Theta_1 \geq \Theta_2$), have the same minimal refinement $\Theta_1 \otimes \Omega = \Theta_2 \otimes \Omega$ and the same maximal coarsening $\Theta_1 \oplus \Omega = \Theta_2 \oplus \Omega$ (the unit frame) with the frame on the right $\Omega$.

On the other hand, the proof of Theorem \ref{the:finite} supports the (local) Birkhoff property of \emph{general} families of frames as well, within the sublattice $[\Theta_1 \oplus \cdots \oplus \Theta_n , \Theta_1 \otimes \cdots \otimes\Theta_n]$. Recall that a poset is said to have \emph{locally finite length} is each of its intervals, considered as posets, have finite length.

\begin{corollary} \label{cor:family}
The collection of sets $\mathcal{F}$ of a family of compatible frames is a \emph{locally Birkhoff lattice bounded below}, i.e., a Birhoff lattice of locally finite length with initial element.
\end{corollary}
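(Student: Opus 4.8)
The plan is to deduce the statement from the finite case, Theorem~\ref{the:finite}, by a localization argument: every finite collection of frames of $\mathcal{F}$ — in particular every pair together with its meet and join — is contained in one of the finite subfamilies $\langle\Theta\rangle_{A_{1..4,7}}$, on which Theorem~\ref{the:finite} already supplies all the lattice-theoretic structure we need. Accordingly, I would verify in turn the three components of the claim: that $\mathcal{F}$ is a lattice, that it has an initial element, and that it is Birkhoff of locally finite length.

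That $\mathcal{F}$, ordered by refinement (\ref{eq:order-ref}) or dually by coarsening (\ref{eq:order-coa}), is a lattice is exactly Corollary~\ref{cor:lattice-of-frames}, with the minimal refinement $\otimes$ and the maximal coarsening $\oplus$ realizing $\wedge$ and $\vee$ (in one order) or $\vee$ and $\wedge$ (in the other). For the initial element I would invoke Theorem~\ref{the:mongen}, which produces a unique monic frame $\mathbf{1}=\{1\}$; since $\mathbf{1}$ is a coarsening of every frame, it is a universal bound of $\mathcal{F}$ — the least element for the coarsening order, hence the required $\mathbf{0}$ — so $\mathcal{F}$ is bounded below.

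Next I would establish local finiteness of length together with the Birkhoff property. If $I[\Phi,\Psi]$ is any interval, then every frame $\Xi$ with $\Phi\le\Xi\le\Psi$ is simultaneously a refinement of $\Psi$ and a coarsening of $\Phi$, hence a coarsening of $\Psi$, hence an element of the finite subfamily $\langle\Psi\rangle_{A_{1..4,7}}$; thus $I[\Phi,\Psi]\subseteq\langle\Psi\rangle_{A_{1..4,7}}$ is finite (it is contained in the partition lattice of $\Psi$, which is finite), which is precisely locally finite length. For the Birkhoff condition, take $a,b\in\mathcal{F}$ with $a\vee b\succ a,b$; the four frames $a$, $b$, $a\wedge b$ and $a\vee b$ are all coarsenings of $a\otimes b$, and — by the same coarsening bookkeeping used for intervals — so is any frame that could lie strictly between two of them. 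Hence these frames, and all their covering relations, sit inside the finite family $\langle a\otimes b\rangle_{A_{1..4,7}}$, which is Birkhoff by Theorem~\ref{the:finite}; therefore $a,b\succ a\wedge b$. This is the local Birkhoff property that the remark preceding the corollary ascribes to the sublattice $[\Theta_1\oplus\cdots\oplus\Theta_n,\ \Theta_1\otimes\cdots\otimes\Theta_n]$, and it gives the statement.

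The delicate point I expect is exactly the compatibility of covering relations: one must check that passing from $\mathcal{F}$ to the finite sublattice $\langle a\otimes b\rangle_{A_{1..4,7}}$ neither destroys nor creates any covering among the frames in play — equivalently, that no refinement of $a$ or of $b$ living outside $\langle a\otimes b\rangle_{A_{1..4,7}}$ can interpolate between two of $a$, $b$, $a\wedge b$, $a\vee b$. This is settled by the observation that any such interpolant is forced to be a coarsening of $a\otimes b$; once that is in place, everything else reduces to quoting Corollary~\ref{cor:lattice-of-frames} and Theorem~\ref{the:finite}.
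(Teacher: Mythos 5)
Your proposal is correct and follows essentially the same route as the paper: the paper's own proof consists precisely of the remark that the Birkhoff argument of Theorem~\ref{the:finite} localizes to the sublattices $[\Theta_1\oplus\cdots\oplus\Theta_n,\ \Theta_1\otimes\cdots\otimes\Theta_n]$, plus the observation (via Theorem~\ref{the:mongen}) that the unique unit frame $\mathbf{1}$ serves as initial element — you simply spell out the localization that the paper leaves implicit. One small orientation slip: a frame $\Xi$ with $\Phi\le\Xi\le\Psi$ is a \emph{coarsening of $\Phi$} (not of $\Psi$), so the interval $I[\Phi,\Psi]$ sits inside the finite partition lattice $\langle\Phi\rangle_{A_{1..4,7}}$ of the more refined endpoint; with that correction the argument goes through unchanged.
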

\begin{proof}
It remains to point out that, by Theorem \ref{the:mongen}, every arbitrary collection of frames in $\mathcal{F}$ possesses a common coarsening $\mathbf{1}$, which plays the role of initial element of the lattice.
\end{proof}

\subsection{The upper and lower semimodular lattices of frames}

We can go a step further, and prove a stonger result: families of frames are both upper and lower semimodular lattices with respect to the dual order relations (\ref{eq:order-coa}) and (\ref{eq:order-ref}), respectively.

\begin{theorem} \label{the:semi}
$(\mathcal{F},\leq)$ is an upper semimodular lattice; $(\mathcal{F},\leq^*)$ is a lower semimodular
lattice.
\end{theorem}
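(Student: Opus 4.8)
The plan is to prove the first assertion — that $(\mathcal{F},\leq)$ is upper semimodular — and then to obtain the second one for free, since by the duality displayed in (\ref{eq:duality}) lower semimodularity with respect to $\leq^*$ is exactly upper semimodularity with respect to $\leq$. Upper semimodularity (Definition \ref{def:sm}) is a local condition: for a given pair $x,y\in\mathcal F$ it only mentions $x$, $y$, $x\wedge y=x\otimes y$ and $x\vee y=x\oplus y$ (Corollary \ref{cor:lattice-of-frames}). Fixing the ambient set to be $N:=x\otimes y$, all four of these frames lie in the interval $[N,\mathbf 1]$ of $(\mathcal F,\leq)$, because the minimal refinement is a refinement of, hence $\leq$-below, both $x$ and $y$, and the maximal coarsening is a coarsening of $N$. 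I would first record the routine observation (already alluded to in the footnote of Theorem \ref{the:finite}) that, for any $N\in\mathcal F$, the interval $[N,\mathbf 1]$ is order-isomorphic to the partition lattice $\Pi(N)$ of the finite set $N$: a coarsening $\Theta$ of $N$ corresponds to the partition $\{\rho(\{\theta\}):\theta\in\Theta\}$ induced by the refining $\rho:2^{\Theta}\to 2^{N}$ (well defined by Proposition \ref{pro:refining}), the relation $\leq$ becomes refinement of partitions, $\otimes$ becomes the common refinement (meet) and $\oplus$ the block-amalgamation (join), the frame $N$ itself corresponds to the discrete partition $\hat 0$, and every partition of $N$ occurs by Axiom $A4$.

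With this reduction the verification becomes an elementary combinatorial statement about $\Pi(N)$. Since $N=x\otimes y$, the meet $x\wedge y$ is exactly $\hat 0$ under the isomorphism, so the hypothesis $x\succ x\wedge y$ says precisely that (the partition corresponding to) $x$ is an atom of $\Pi(N)$; a standard argument — if $x$ had a block of size $\geq 3$, or two distinct non-singleton blocks, one could interpose a strictly finer non-discrete partition — forces $x$ to consist of a single two-element block $\{a,b\}$ together with singletons. The fact that the common refinement of $x$ and $y$ is $\hat 0$ then forces $a$ and $b$ to lie in distinct blocks of $y$ (otherwise $\{a,b\}$ would survive as a block of $x\wedge y$). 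Consequently $x\vee y$, which is the partition obtained by transitively closing the union of the block-equivalences of $x$ and of $y$, is simply $y$ with the two blocks containing $a$ and $b$ amalgamated into a single block; this is a covering of $y$ in $\Pi(N)$, hence $x\vee y\succ y$ in $(\mathcal F,\leq)$, which is what upper semimodularity demands. The claim for $(\mathcal F,\leq^*)$ then follows immediately from (\ref{eq:duality}).

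The part needing care — and which I would regard as the main obstacle, though it is bookkeeping rather than genuine difficulty — is keeping the two dual order relations $\leq$ and $\leq^*$ straight together with the attendant reversal of ``fine versus coarse'', and in particular checking that the covering relation, the meet $\otimes$ and the join $\oplus$ computed inside $\mathcal F$ coincide with those of $\Pi(N)$ once the ambient set has been chosen as $x\otimes y$; this rests only on Corollary \ref{cor:lattice-of-frames} and on the properties of refinings in Proposition \ref{pro:refining}. Once that identification is in place, the core argument above is short, elementary and self-contained, and the statement follows.
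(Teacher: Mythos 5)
Your proof is correct and, once the explicit reduction to the partition lattice is unwound, it is essentially the paper's own argument: covering the minimal refinement forces a single element of $x$ to be split into exactly two, the minimal-refinement hypothesis places those two pieces in distinct cells of $y$, and the join merges precisely those two cells of $y$ and hence covers it, with lower semimodularity of $(\mathcal{F},\leq^*)$ obtained from the duality (\ref{eq:duality}) exactly as you do. The only difference is packaging: the paper carries out these steps directly with refinings inside $\mathcal{F}$, whereas you route through the interval isomorphism $[x\otimes y,\mathbf{1}]\cong\Pi(x\otimes y)$ --- an identification the paper itself flags as an alternative route in the footnote to Theorem \ref{the:finite}.
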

\begin{figure}[ht!]
\begin{center}
\includegraphics[width = 0.75 \textwidth]{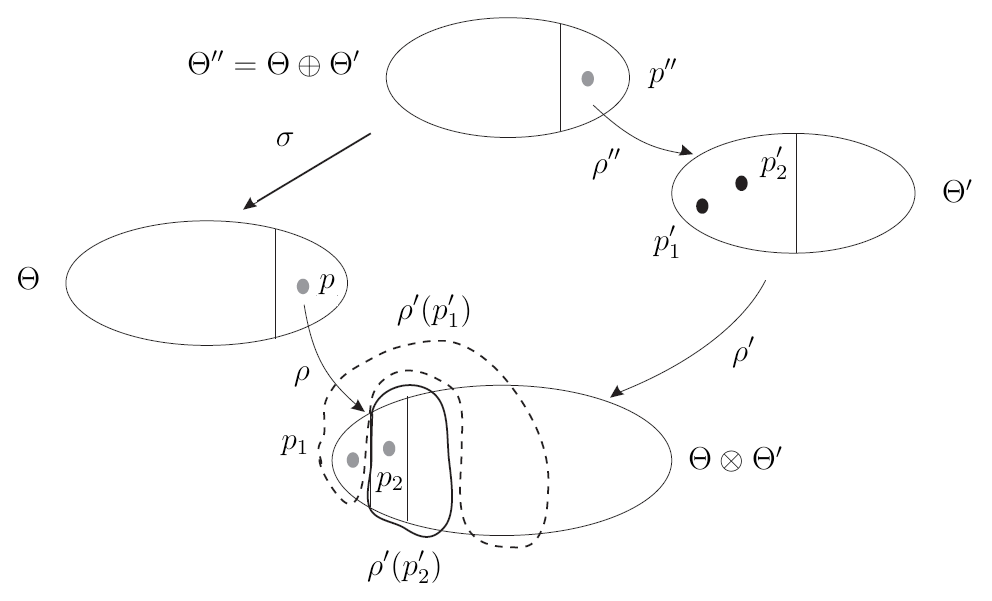}
\end{center}
\caption{Proof of the upper semimodularity of $(\mathcal{F},\leq)$. \label{fig:proof}}
\end{figure}
\begin{proof}
We just need to prove the upper semimodularity with respect to $\leq$ (\ref{eq:order-ref}).\\ Consider two compatible frames
$\Theta,\Theta'$, and suppose that $\Theta$ covers their minimal refinement $\Theta\otimes\Theta'$
(their inf with respect to $\leq$). The proof articulates into the following steps (see Figure
\ref{fig:proof}):
\begin{itemize}
\item
as $\Theta$ covers $\Theta\otimes\Theta'$ we have that $|\Theta| = |\Theta\otimes\Theta'| - 1$;
\item
this means that there exists a single element $p\in\Theta$ which is refined into a pair of
elements $\{p_1,p_2\}$ of $\Theta\otimes\Theta'$, while all other elements of $\Theta$ are left
unchanged: $\{p_1,p_2\} = \rho(p)$, where $\rho : 2^\Theta \rightarrow 2^{\Theta \otimes \Theta'}$;
\item
this in turn implies that $p_1,p_2$ each belong to the image of a different element of $\Theta'$
(otherwise $\Theta$ would itself be a refinement of $\Theta'$, and we would have
$\Theta\otimes\Theta' = \Theta$):
\[
p_1\in \rho'(p_1'), \hspace{10mm} p_2\in \rho'(p_2'),
\]
where $\rho'$ is the refining from $\Theta'$ to $\Theta \otimes \Theta'$;
\item
now, if we merge $p'_1,p'_2$ we obviously have a coarsening $\Theta''$ of $\Theta'$:
\[
\{p'_1,p'_2\} = \rho''(p''),
\]
with refining $\rho'' : 2^{\Theta''} \rightarrow 2^{\Theta'}$;
\item
but $\Theta''$ is a coarsening of $\Theta$, too, as we can build the refining $\sigma: 2^{\Theta''} \rightarrow 2^\Theta$ such that:
\[
\sigma(q) \doteq \rho'(\rho''(q)).
\]
\item
Indeed $\rho'(\rho''(q))$ is a subset of $\Theta$ $\forall q \in\Theta''$, as:
\begin{itemize}
\item
when $q = p''$ we can define:
\[
\sigma(p'') = \{ p \} \cup \big ( \rho'(p'_1) \setminus \{p_1\} \big ) \cup \big ( \rho'(p'_2) \setminus \{p_2\} \big ),
\]
as both $(\rho'(p'_1)\setminus \{p_1\})$ and $(\rho'(p'_2)\setminus \{p_2\})$ are elements of $\Theta$ that are not refined through $\rho$ when moving from $\Theta$ to $\Theta \otimes \Theta'$;
\item
when $q \neq p''$, $\rho'(\rho''(q)) \subset \Theta \otimes \Theta'$ is also a subset of $\Theta$, as all the elements of $\Theta$ but $p$ are left unchanged by $\rho$.
\end{itemize}
\item
as $|\Theta''| = |\Theta'|-1$ we have that $\Theta''$ is the maximal coarsening of $\Theta,\Theta'$: $\Theta'' =
\Theta \oplus \Theta'$;
\item
hence $\Theta \oplus \Theta'$ (the sup of $\Theta,\Theta'$ in $(\mathcal{F},\leq)$) covers $\Theta'$, and the lattice is upper semimodular (compare Definition \ref{def:sm}).
\end{itemize}
The lower semimodularity with respect to $\leq^*$ is then a consequence of (\ref{eq:duality}).
\end{proof}

In the following we will here focus on {finite} families of frames. More precisely, given a set of compatible frames $\{ \Theta_1,...,\Theta_n \}$ we will consider the set $P(\Theta)$ of all partitions of their minimal refinement $\Theta = \Theta_1\otimes \cdots \otimes \Theta_n$. As the independence condition (Definition \ref{def:indep}) involves only partitions of $\Theta_1\otimes \cdots \otimes \Theta_n$, we can conduct our analysis there. 

We will denote by $L^*(\Theta) \doteq (P(\Theta),\leq^*)$, $L(\Theta)\doteq (P(\Theta),\leq)$ the two lattices associated with the set $P(\Theta)$ of partitions of $\Theta$, endowed with order relations (\ref{eq:order-coa}), (\ref{eq:order-ref}) respectively.

\subsubsection{Example: the partition lattice $P_4$} \label{sec:partition-lattice}

Consider for example the partition lattice associated with a frame of size 4: $\Theta =
\{1,2,3,4\}$, depicted in Figure \ref{fig:p4}, with order relation $\leq^*$.
\begin{figure}[ht!]
\begin{center}
\begin{tabular}{c}
\includegraphics[width = 0.65 \textwidth]{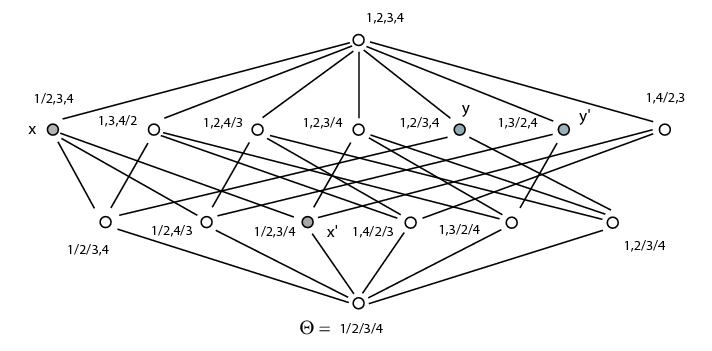}
\end{tabular}
\end{center}
\caption{The partition (lower) semimodular lattice $L^*(\Theta)$ for a frame $\Theta$ of size 4. Partitions $A_1,...,A_k$ of $\Theta$ are denoted by $A_1/.../A_k$. Partitions with the same number of elements are arranged on the same level. An edge between two nodes indicates that the bottom partition covers the top one. \label{fig:p4}}
\end{figure}
Each edge indicates here that the bottom partition covers $\succ$ the top one.

To understand how inf and sup work in the frame lattice, pick the following partitions:
\[
x = \{ 1/2,3,4 \}, \hspace{5mm} x' = \{ 1/2,3/4\}.
\]
According to the diagram the partition $x\vee^* x'$ which refines both and has smallest size is $\Theta = \{ 1/2/3/4\}$ itself. Their $\inf$ $x\wedge^* x'$ is $x$, as $x'$ is a refinement of $x$. If we pick instead the pair of partitions $y = \{ 1,2/3/4 \}$ and $y' = \{1,3/2,4\}$, we can notice that both $y,y'$ cover their $\inf$ $y \wedge^* y' = \{1,2,3,4\}$ but in turn their $\sup$ $y \vee^* y' = \Theta = \{1/2/3/4 \}$ does not cover them. Therefore, $(P(\Theta),\leq^*)$ is not upper semimodular (while it is \emph{lower} semimodular).

\chapter{Algebra of independence and conflict} \label{cha:independence}

\begin{center}
\includegraphics[width = 0.9 \textwidth]{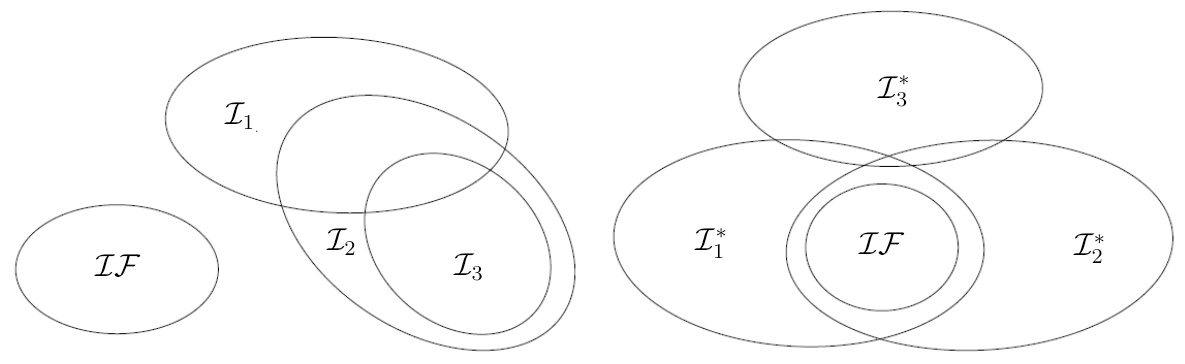}
\end{center}
\vspace{5mm}

As we recalled in Chapter \ref{cha:toe}, the theory of evidence was born as a contribution towards a mathematically rigorous description of subjective probability. In subjective probability, different observers (or `experts') of the same phenomenon possess in general different notions of what the decision space is. Different sensors may capture distinct aspects of the phenomenon to describe. Mathematically, this translates into admitting the existence of several distinct representations of this decision space at different levels of refinement. This idea is embodied in the theory of evidence by the notion of {family of frames}, which we introduced in Chapter \ref{cha:toe}, and whose algebraic properties we studied in Chapter \ref{cha:alg}. 

As we mentioned in our introduction to Chapter \ref{cha:alg}, the evidence gathered on distinct frames needs to be combined on a common frame, typically their minimal refinement.  Unfortunately, evidence fusion (at least under Dempster's orthogonal sum \cite{Dempster67,Dempster68a,Dempster69}) is guaranteed to take place in all cases if and only if the involved frames are independent \cite{Shafer76,cuzzolin05amai} as dictated by Definition \ref{def:indep}, which follows from the notion of independence of Boolean sub-algebras \cite{Sikorski}. We will denote in the following by $\mathcal{IF}$ the independence relation introduced in Definition \ref{def:indep}. As Dempster's sum assumes the conditional independence of the underlying probabilities generating belief functions through multi-valued mappings \cite{Dempster67,Dempster68a,Dempster69}, it is not surprising to realize that combinability (in Dempster's approach) and independence of frames (in Shafer's formulation of the theory of evidence) are strictly intertwined.

\subsection*{Scope of the Chapter}

In our Introduction to Chapter \ref{cha:alg} we outlined a proposal for dealing with possibly conflicting belief functions defined on different compatible frames in an algebraic setting, based on building a new collection of combinable b.f.s via a pseudo Gram-Schmidt algorithm. To investigate this possibility, we analysed the algebraic structure of families of frames and showed that they form upper and lower semimodular lattices, depending on which order relation we pick (Section \ref{sec:semimodular}). 

Now, some of the elements of a semimodular lattice possess interesting properties. Recall that if $L$ is a lattice bounded below then its \emph{atoms} are the elements of $L$ covering its initial element $\mathbf{0}$, namely:
\[ A = \big \{ a \in L \big | a \succ \mathbf{0} \big \}. \]
As a matter of fact the atoms of a semimodular lattice form a matroid \cite{harary69,Szasz}, so that a formal independence relation can be defined on them \cite{cuzzolin01bcc}. The latter can be generalised to arbitrary elements of the lattice, but the result is not univocal.

In this Chapter we take a further step forward and investigate the relation between Shafer's definition of independence of frames and these various extensions of matroidal independence to compatible frames as elements of a lattice, in order to draw some conclusions on the conjectured algebraic solution to the conflict problem. We study relationships and differences between the different forms of lattice-theoretical independence, and understand whether $\mathcal{IF}$ can be reduced to one of them. 

As a result, this Chapter poses the notion of independence of frames in a wider context by highlighting its relation with classical independence in modern algebra. Although $\mathcal{IF}$ turns out not to be a cryptomorphic form of matroidal independence, it does exhibit correlations with several extensions of matroidal independence to lattices, stressing the need for a more general, comprehensive definition of this widespread and important notion \cite{yaghlane00independence}.

\subsection*{Related Work}

Evidence combination has been widely studied \cite{zadeh86simple,yager87new} in different mathematical frameworks \cite{dubois92various} -- a comprehensive review would be impossible here. In particular, work has been done on the issue of merging conflicting evidence \cite{deutsch90study,josang03strategies,lefevre02if,wierman01measuring}, specially in critical situations in which the latter is derived from dependent sources \cite{cattaneo03isipta}. Campos and de Souza \cite{campos05nlp} have presented a method for fusing highly conflicting evidence which overcomes well known counterintuitive results. Liu \cite{liu06analyzing} has formally defined when two basic belief assignments are in conflict by means of quantitative measures of both the mass of the combined belief assigned to the emptyset before normalization, and the distance between betting commitments of beliefs. Murphy \cite{murphy00combining}, on her side, has studied a related problem: the failure to balance multiple evidence. The notion of conflicting evidence is well known in the context of sensor fusion \cite{carlson05tech}: the matter has been recently surveyed by Sentz and Ferson \cite{sentz02tech}. 

In opposition, not much work has been done on the properties of the families of compatible frames and their link with evidence combination. In \cite{Shafer87b} an analysis of the collections of all the partitions of a given frame in the context of the hierarchical representation of belief can nevertheless be found, while in \cite{Kohlas95} both the lattice-theoretical interpretation of families of frames and the meaning of the concept of independence are discussed. In \cite{cuzzolin05amai} these themes are reconsidered: the structure of Birkhoff lattice of a family of frames is proven, and the relation between Dempster's combination and independence of frames highlighted. Chapter \ref{cha:alg} is largely the result of the line of research first explored in \cite{cuzzolin05amai}.

\subsection*{Chapter Outline}

We start by characterizing the relationship between conflicting belief functions and independence of frames (Section \ref{sec:if}), proving that Dempster's combination is guaranteed {if and only if the underlying frames are independent} (Theorem \ref{the:7}). In Section \ref{sec:analogy} the classical notion of independence on matroids is recalled. Even though families of frames endowed with Shafer's independence $\mathcal{IF}$ \emph{do not} form a matroid, the form both upper and lower semimodular lattices, on which atomical matroidal independence can be extended to arbitrary elements. In Section \ref{sec:if-semimodular} we identify three diffferent extensions of  matroidal independence to arbitrary elements of the frame lattice. We discuss their interpretation, and thoroughly analyze their links with Shafer's independence of frames. Finally, in Section \ref{sec:discussion-frames} we recap what we learned about the relationship between independence of frames and the various algebraic definitions of independence, and outline the steps of a future investigation of the conjectured algebraic solution to the conflict problem.

\section{Independence of frames and Dempster's combination} \label{sec:if}

Although not equivalent to independence of sources in the original formulation of Dempster's combination, independence of frames is strictly intertwined with combinability.\\ In particular, the combination of belief functions defined on distinct frames of a family is guaranteed \emph{only for trivially interacting feature spaces}.

\begin{theorem}\label{the:7}
Let $\Theta_1, ..., \Theta_n$ a set of compatible FODs. Then the following conditions are equivalent:
\begin{enumerate}
\item all possible collections of belief functions $b_1, ..., b_n$ defined over $\Theta_1, ..., \Theta_n$, respectively, are combinable over the latter's minimal refinement $\Theta_1\otimes...\otimes\Theta_n;$
\item $\Theta_1, ..., \Theta_n$ are \emph{independent} ($\mathcal{IF}$);
\item there exists a 1-1 correspondence 
\[
\Theta_1\otimes\cdots\otimes\Theta_n \leftrightarrow \Theta_1\times\cdots\times\Theta_n,
\]
i.e., the minimal refinement of $\Theta_1, ..., \Theta_n$ is simply their Cartesian product;
\item $|\Theta_1\otimes\cdots\otimes\Theta_n|=\prod_{i=1}^n |\Theta_i|.$
\end{enumerate}
\end{theorem}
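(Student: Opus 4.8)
The plan is to prove the chain of equivalences $(3)\Leftrightarrow(4)$, then $(3)\Rightarrow(2)$, then $(2)\Rightarrow(1)$, and finally $(1)\Rightarrow(4)$ (or $(1)\Rightarrow(3)$), so that the four statements collapse into one. The easy warm-up is $(3)\Leftrightarrow(4)$: a bijection between finite sets exists precisely when they have equal cardinality, and $|\Theta_1\times\cdots\times\Theta_n|=\prod_i|\Theta_i|$ by definition, so this is immediate.

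\textbf{From a Cartesian product to independence, $(3)\Rightarrow(2)$.} The plan here is to make the canonical identification explicit. If $\Omega=\Theta_1\otimes\cdots\otimes\Theta_n$ is (identifiable with) $\Theta_1\times\cdots\times\Theta_n$, then by Theorem \ref{the:minimal} each singleton $\{\omega\}$ of $\Omega$ can be written uniquely as $\rho_1(\{\theta_1\})\cap\cdots\cap\rho_n(\{\theta_n\})$, so under the product identification $\rho_i(\{\theta_i\})=\{(x_1,\dots,x_n)\in\Omega : x_i=\theta_i\}$, i.e. $\rho_i(\{\theta_i\})$ is the ``slice'' fixing the $i$-th coordinate. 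Then for arbitrary non-empty $A_i\subseteq\Theta_i$, one has $\rho_i(A_i)=\{(x_1,\dots,x_n):x_i\in A_i\}$ (using $\rho_i(A_i)=\bigcup_{\theta_i\in A_i}\rho_i(\{\theta_i\})$ from Proposition \ref{pro:refining}), whence $\rho_1(A_1)\cap\cdots\cap\rho_n(A_n)=A_1\times\cdots\times A_n\neq\emptyset$, which is exactly condition (\ref{eq:2}) defining $\mathcal{IF}$. One should check the identification is forced, not just possible: since $\Omega$ is \emph{the} minimal refinement it is a coarsening of every common refinement, and in particular of a genuine product construction of the $\Theta_i$, so the two must coincide; alternatively invoke Proposition \ref{pro:minimal}.

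\textbf{Independence implies universal combinability, $(2)\Rightarrow(1)$.} Let $b_1,\dots,b_n$ be belief functions on $\Theta_1,\dots,\Theta_n$; their images on $\Omega$ are the vacuous extensions, whose focal elements are of the form $\rho_i(A_i)$ for $A_i$ a focal element of $b_i$. By the characterization of combinability (the Proposition after Definition \ref{def:dempster}), the orthogonal sum $\rho_1(b_1)\oplus\cdots\oplus\rho_n(b_n)$ exists iff the intersection of some choice of focal elements, one from each b.f., is non-empty — and $\rho_1(A_1)\cap\cdots\cap\rho_n(A_n)\neq\emptyset$ is precisely what $\mathcal{IF}$ guarantees for all non-empty $A_i$. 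Since focal elements are always non-empty, we are done. (One should note the $n$-fold version of the combinability criterion follows by induction from the binary one stated in the text.)

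\textbf{Closing the loop, $(1)\Rightarrow(4)$.} The plan is to argue by contraposition: if $|\Omega|<\prod_i|\Theta_i|$, then by Theorem \ref{the:minimal}(2) the map $(\theta_1,\dots,\theta_n)\mapsto\rho_1(\{\theta_1\})\cap\cdots\cap\rho_n(\{\theta_n\})$ from $\Theta_1\times\cdots\times\Theta_n$ onto the singletons of $\Omega$ is surjective but not injective (a counting argument: $|\Omega|$ singletons, $\prod_i|\Theta_i|$ tuples, and every singleton is hit), so there is a tuple $(\theta_1,\dots,\theta_n)$ with $\rho_1(\{\theta_1\})\cap\cdots\cap\rho_n(\{\theta_n\})=\emptyset$. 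Taking $A_i=\{\theta_i\}$ and building simple support functions $b_i$ each focused on $\{\theta_i\}$ with, say, mass $1$ there, their combination must have all focal-element intersections empty, so $b_1\oplus\cdots\oplus b_n$ does not exist — contradicting (1).

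\textbf{Main obstacle.} The genuinely delicate point is making the identification in $(3)\Rightarrow(2)$ rigorous rather than hand-wavy: one must be careful that ``the minimal refinement is the Cartesian product'' is a statement about a \emph{canonical} bijection compatible with all the refining maps $\rho_i$ simultaneously, and that $\rho_i$ sends a subset to the expected coordinate-slice. This is where Proposition \ref{pro:minimal} (the minimal refinement is a coarsening of every common refinement, and the only one that is a coarsening of all of them) does the real work, together with the injectivity/surjectivity clauses of Theorem \ref{the:minimal}. Everything else is bookkeeping with the properties of refinings in Proposition \ref{pro:refining}.
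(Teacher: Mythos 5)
Your proof is correct and rests on the same ingredients as the paper's own: Theorem \ref{the:minimal} to write each element of the minimal refinement as an intersection $\bigcap_i\rho_i(\{\theta_i\})$, the resulting cardinality count against $\prod_i|\Theta_i|$, and categorical (mass-one simple support) belief functions as witnesses for non-combinability. The only difference is organizational: you close a single cycle $(4)\Rightarrow(3)\Rightarrow(2)\Rightarrow(1)\Rightarrow(4)$ where the paper proves three pairwise equivalences, and your $(1)\Rightarrow(4)$ step, which tests combinability against all $n$ categorical functions at once, is if anything slightly tighter than the paper's $(1)\Rightarrow(2)$, which as written only extracts the pairwise instances of condition (\ref{eq:2}).
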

\begin{proof} $(1) \Rightarrow (2)$. We know that if $b_1,...,b_n$ are combinable then $b_i,b_j$ must be combinable $\forall
i,j=1,...,n$. Hence $\rho_i (\mathcal{C}_i) \cap \rho_j(\mathcal{C}_j)\neq \emptyset\;\forall i,j$, where
$\mathcal{C}_i$ denotes the core of $b_i$ and $\rho_i$ the refining linking $\Theta_i$ to the minimal refinement $\Theta_1\otimes...\otimes\Theta_n$. As $b_i,b_j$ can be chosen arbitrarily, their cores $\mathcal{C}_i,\mathcal{C}_j$ can be
any pair of subsets of $\Theta_i,\Theta_j$ respectively. Consequently, the previous condition can be rewritten as:
\[
\begin{array}{cc}
\rho_i(A_i) \cap \rho_j(A_j) \neq \emptyset & \forall A_i\subseteq \Theta_i, A_j\subseteq \Theta_j.
\end{array}
\]

$(2) \Rightarrow (1)$. It suffices to pick $A_i=\mathcal{C}_i$ $\forall i=1,...,n$.

$(2) \Rightarrow (3)$. We first note
that:
\[
\bigcap_i \rho_i(\theta_i^k) = \bigcap_i \rho_i(\theta_i^l) \Leftrightarrow \theta_i^k = \theta_i^l \hspace{5mm}\forall
i=1,...,n.
\] 
Indeed, if $\theta_i^k\neq \theta_i^l$ then $\rho_i (\{\theta_i^k\}) \neq \rho_i(\{\theta_i^l\})$, by definition of refining. But then:
\[
\bigcap_i \rho_i(\{\theta_i^k\}) \neq \bigcap_i \rho_i(\{\theta_i^l\}).
\] 
Therefore, the number of such intersections coincides with the number $|\Theta_1| \times \cdots \times |\Theta_n|$ of $n$-tuples of elements picked each from one of the frames.

$(3) \Rightarrow (2)$. By Proposition \ref{the:minimal} each element $\theta$ of the minimal refinement $\Theta_1\otimes \cdots \otimes \Theta_n$ corresponds to a subset of the form $\bigcap_i \rho_i(\{\theta_i^k\})$. Since by hypothesis there are $|\Theta_1|\times \cdots \times |\Theta_n|$ such elements, there exist an equal number of subsets of the above form. But this is possible only if they all are non-empty: hence, $\Theta_1,...,\Theta_n$ are independent.

$(3) \Rightarrow (4)$. Obvious.

$(4) \Rightarrow (3)$. Once again $\Theta_1\otimes \cdots \otimes \Theta_n = \big \{ \bigcap_i \rho_i(\{\theta_i\}) \; \forall \theta_i\in\Theta_i \big \}$. Hence, if its cardinality is $|\Theta_1|\times \cdots \times |\Theta_n|$ then the intersections which form its elements must all be non-empty. Each of them can be labeled by $(\theta_1,...,\theta_n)$.
\end{proof}

Indeed, as we recalled in Chapter \ref{cha:toe}, Section \ref{sec:weight-of-conflict}, any given set of belief functions is characterized by a \emph{level of conflict} $\mathcal{K}$ --- if $\mathcal{K}=\infty$ they are not combinable. A basic property of the level of conflict is the following. Given $n+1$ belief functions $b_1,...,b_{n+1}$\footnote{E.g. the projections on their minimal refinement of belief functions encoding measurements inherently defined on different compatible domains, see Chapter \ref{cha:pose} for an application to object pose estimation.}:
\[
\mathcal{K}(b_1, \cdots , b_{n+1}) = \mathcal{K}(b_1, ... ,b_n) + {\mathcal{K}}(b_1\oplus ... \oplus b_n, b_{n+1}),
\]
so that if ${\mathcal{K}}(b_i,b_j)=+\infty$ then ${\mathcal{K}}(b_i,b_j,b_k) = +\infty$ $\forall k$. 

This suggests a bottom-up technique \cite{Cuzzolin99,Cuzzolin2000,cuzzolin05isipta}. First the level of conflict is computed for each pair of belief functions $(b_i,b_j)$, $i,j=1,...,n$. Then a suitable threshold is established and a `{conflict graph}' is built in which each node represents a belief function, while an edge indicates a (pairwise) conflict level below the set threshold. Finally, the subsets of combinabile b.f.s of size $d+1$ are {recorsively} computed from those of size $d$, eventually leading to the detection of the most coherent set of features. 

This approach, however, suffers from a high computational cost when large groups of belief functions are found to be compatible.

\section{An algebraic study of independence of frames} \label{sec:analogy}

Is there an alternative to the computationally expensive conflict graph technique? Theorem \ref{the:7} suggests that belief functions never conflict when the domains on which they are defined are independent, according to Definition \ref{def:indep}.

As outlined in the introduction to Chapter \ref{cha:alg} in \cite{cuzzolin05amai}, starting from an analogy between independence of frames and linear independence, we conjectured a possible algebraic solution to the conflict problem based on a mechanism similar to the classical Gram-Schmidt algorithm for the orthogonalization of vectors. Indeed, the independence condition (\ref{eq:2}) closely resembles the condition under which a collection of vector subspaces has maximal span:
\begin{equation}\label{eq:analogy}
\begin{array}{ccc}
v_1 + \cdots + v_n \neq \vec{0}, \; \forall \vec{0} \neq v_i \in V_i & \equiv & span\{ V_1,...,V_n \} = V_1 \times \cdots \times V_n \\ \\ \rho_1(A_1) \cap \cdots \cap \rho_n(A_n) \neq \emptyset, \; \forall \emptyset \neq A_i \subseteq \Theta_i & \hspace{3mm} \equiv \hspace{3mm} & \Theta_1 \otimes \cdots \otimes \Theta_n = \Theta_1 \times \cdots \times \Theta_n,
\end{array}
\end{equation}
where $\vec{0}$ is the common origin of the vector spaces $\{ V_1,...,V_n \}$.\\
Let us call $\{ V_1,...,V_n \}$ `independent' iff each collection of non-null representative vectors $\{ v_i\in V_i, i=1,..n \}$, each member of a different subspace, are linearly independent. It follows that while a collection of compatible frames $\{ \Theta_1,...,\Theta_n \}$ are $\mathcal{IF}$ iff each selection of representative subsets $A_i\in 2^{\Theta_i}$ have non-empty intersection, a collection of vectors subspaces $\{ V_1,...,V_n \}$ are independent iff for each choice of non-null vectors $v_i\in V_i$ their sum is non-zero.

The collection of all subspaces of a vector space (or \emph{projective geometry} \cite{Rosenbaum}) forms a {modular lattice} (see Chapter \ref{cha:alg}, Definition \ref{def:modular-lattice}). As we have seen in Chapter \ref{cha:alg}, instead, families of compatible frames are semimodular lattices, hinting at a possible explanation of this analogy. Here we move on to analyze the notion of independence of frames (and its relationships with other definitions of independence in other fields of modern algebra) from an algebraic point of view.

\subsection{Matroids} \label{sec:matroids}

The paradigm of abstract independence in modern algebra is represented by the notion of \emph{matroid}, introduced by Whitney in the 1930s \cite{Whitney35}. He and other authors, among which van der Waerden \cite{vanderwaerden37}, Mac Lane \cite{maclane38}, and Teichmuller \cite{teichmuller36}, recognized at the time that several apparently different notions of dependence \cite{harary69,Rosenbaum} in algebra (such as circuits in graphs, flats in affine geometries) have many properties in common with that of linear dependence of vectors.
\begin{definition}\label{def:matroid}
A \emph{matroid} $M = (E,\mathcal{I}) $ is a pair formed by a \emph{ground set} $E$ and a collection of \emph{independent sets} $\mathcal{I}\subseteq 2^E$, which obey the following axioms:
\begin{enumerate}
\item
$\emptyset \in \mathcal{I}$;
\item
if $I\in\mathcal{I}$ and $I'\subseteq I$ then $I'\in\mathcal{I}$;
\item
if $I_1$ and $I_2$ are in $\mathcal{I}$, and $|I_1|<|I_2|$, then there is an element $e$ of $I_2 \setminus I_1$ such that $I_1 \cup e \in \mathcal{I}$.
\end{enumerate}
\end{definition}
Condition (3) is called \emph{augmentation} axiom, and is the foundation of the notion of matroidal independence, as it can be proved that a number of domain-specific independence relations can be reduced to the augmentation property. The name was coined by Whitney because of a fundamental class of matroids which arise from the the collections of linearly independent (in the ordinary sense) sets of columns of a matrix, called `{vector matroid}' \cite{Whitney35}. 

\subsection{Families of frames are not matroids}

Unfortunately,

\begin{theorem}
A family of compatible frames $\mathcal{F}$ endowed with Shafer's independence $\mathcal{IF}$ is \emph{not} a matroid.
\end{theorem}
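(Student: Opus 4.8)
The plan is to exhibit a concrete counterexample violating the augmentation axiom (condition (3) of Definition \ref{def:matroid}), since the other two matroid axioms (the empty set being independent, and independence being closed under subsets) are easily seen to hold for $\mathcal{IF}$. The ground set $E$ is taken to be the collection of (non-trivial) partitions/frames sitting below a fixed minimal refinement $\Theta$, and a subset of frames is ``independent'' precisely when it is $\mathcal{IF}$ in Shafer's sense (Definition \ref{def:indep}); hereditariness follows because any subcollection of an $\mathcal{IF}$ collection is again $\mathcal{IF}$ (dropping constraints can only make the non-emptiness condition (\ref{eq:2}) easier to satisfy), and the empty collection is vacuously independent.

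First I would set up a small explicit frame, e.g. $\Theta = \{1,2,3,4\}$ with its partition lattice $P_4$ (the one already drawn in Figure \ref{fig:p4}), and pick two independent sets $I_1, I_2$ of frames with $|I_1| < |I_2|$ such that \emph{no} element of $I_2 \setminus I_1$ can be adjoined to $I_1$ while preserving $\mathcal{IF}$. Concretely, take $I_1$ to consist of a single binary partition, say $\pi = \{\{1,2\},\{3,4\}\}$, and $I_2$ to consist of two binary partitions whose minimal refinement is all of $\Theta$ but each of which, when paired with $\pi$, fails independence --- for instance $\sigma_1 = \{\{1,3\},\{2,4\}\}$ and $\sigma_2 = \{\{1,4\},\{2,3\}\}$. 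One checks that $\{\sigma_1,\sigma_2\}$ is $\mathcal{IF}$ because $|\sigma_1 \otimes \sigma_2| = 4 = |\sigma_1|\cdot|\sigma_2|$ (invoking Theorem \ref{the:7}, equivalence of conditions (2) and (4)), whereas each of $\{\pi,\sigma_1\}$ and $\{\pi,\sigma_2\}$ fails: $\pi \otimes \sigma_i$ has cardinality $3 < 4$, or equivalently there are non-empty subsets whose $\rho$-images have empty intersection. Hence the augmentation axiom fails: $I_1$ cannot be enlarged using any element of $I_2$, so $\mathcal{F}$ with $\mathcal{IF}$ is not a matroid.

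The one genuine subtlety --- and the step I expect to require the most care --- is the bookkeeping of the refinings and the minimal-refinement cardinalities in the chosen example: one must verify, using Theorem \ref{the:minimal} and the characterization of $\mathcal{IF}$ via $|\Theta_1 \otimes \cdots \otimes \Theta_n| = \prod_i |\Theta_i|$ from Theorem \ref{the:7}, that the two partitions in $I_2$ really are mutually independent while each conflicts with the partition in $I_1$. This is a finite check: list the atoms $\bigcap_i \rho_i(\{\theta_i\})$ of each candidate minimal refinement and count how many are non-empty. I would present it as a short table of intersections rather than prose. Once the counts are in place, the violation of axiom (3) is immediate and the theorem follows. (If a cleaner counterexample on a $3$-element or $4$-element frame is available --- e.g. using the three ``diagonal'' binary partitions of $P_4$, any two of which are independent but no two of which together admit augmentation by the third since the triple has minimal refinement of size $4 < 8$ --- I would use that instead, as it makes the hereditary and augmentation checks most transparent.)
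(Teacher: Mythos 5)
Your overall strategy is exactly the paper's: both proofs exhibit a failure of the augmentation axiom (3) of Definition \ref{def:matroid}, using a singleton independent set $I_1$ that cannot be augmented from a larger independent set $I_2$. The problem is that your concrete witness does not work. The three ``diagonal'' two-block partitions of $\Theta=\{1,2,3,4\}$ are \emph{pairwise} independent: for $\pi=\{\{1,2\},\{3,4\}\}$ and $\sigma_1=\{\{1,3\},\{2,4\}\}$ the four block intersections are $\{1\},\{2\},\{3\},\{4\}$, all non-empty, so $|\pi\otimes\sigma_1|=4=|\pi|\cdot|\sigma_1|$ and $\{\pi,\sigma_1\}$ \emph{is} $\mathcal{IF}$ by Theorem \ref{the:7} --- your claim that $|\pi\otimes\sigma_i|=3$ is an arithmetic slip. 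Consequently, with $I_1=\{\pi\}$ and $I_2=\{\sigma_1,\sigma_2\}$ augmentation \emph{succeeds} (either $\sigma_i$ can be adjoined), and the counterexample collapses. The parenthetical fallback does not rescue it: the fact that the triple $\{\pi,\sigma_1,\sigma_2\}$ has minimal refinement of size $4<8$ only shows the triple is not independent; it does not produce a pair $I_1,I_2\in\mathcal{I}$ with $|I_1|<|I_2|$ violating augmentation, because the triple cannot serve as $I_2$.

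The repair is the one the paper uses: keep $I_2=\{\Theta_1,\Theta_2\}$ an independent pair, but choose the singleton $I_1=\{\Theta_3\}$ so that $\Theta_3$ fails independence with \emph{each} of $\Theta_1,\Theta_2$ --- the paper takes $\Theta_3$ to be a (non-trivial) coarsening, invoking the remark after Definition \ref{def:indep} that a frame is never independent of a non-trivial coarsening of itself. Staying inside your $P_4$ setting, $\Theta_3=\{\{1\},\{2,3,4\}\}$ works: $\{1\}\cap\{3,4\}=\emptyset$ rules out independence with $\{\{1,2\},\{3,4\}\}$ and $\{1\}\cap\{2,4\}=\emptyset$ rules it out with $\{\{1,3\},\{2,4\}\}$, while that pair itself remains $\mathcal{IF}$. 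With that substitution (and your correct observations that $\emptyset\in\mathcal{I}$ and that $\mathcal{IF}$ is hereditary), the argument goes through and coincides with the paper's.
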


\begin{proof}
In fact, $\mathcal{IF}$ does not meet the augmentation axiom (3) of Definition \ref{def:matroid}. Consider two independent compatible frames $I = \{ \Theta_1,\Theta_2 \}$. If we pick another arbitrary frame $\Theta_3$ of the family, the collection $I' = \{ \Theta_3  \}$ is trivially $\mathcal{IF}$. Suppose $\Theta_3\neq \Theta_1,\Theta_2$. Then, since $|I|>|I'|$, by
augmentation we can form a new pair of independent frames by adding any of $\Theta_1,\Theta_2$ to $\Theta_3$. But it is easy to find a counterexample, for instance by picking as $\Theta_3$ the common coarsening of $\Theta_1$ and $\Theta_2$ (compare the remark after Definition \ref{def:indep}).
\end{proof}

Matroidal independence, though, generalizes to `sister' relations in other algebraic structures, in particular semimodular and `{geometric}' lattices \cite{Stern}. Although families of frames are \emph{not} matroids, they do form (upper and lower) semimodular lattices (Chapter \ref{cha:alg}, Section \ref{sec:semimodular}). As a consequence, $\mathcal{IF}$ inherits interesting relations with some extensions of matroidal independence to semimodular lattices \cite{Birkhoff35}, as we are going to see in Section \ref{sec:if-semimodular}. Indeed, $\mathcal{IF}$ is opposed to matroidal independence (Section \ref{sec:i-lower}).

\section{Independence on lattices versus independence of frames}\label{sec:if-semimodular}

\subsection{Atom matroid of a semimodular lattice}\label{sec:atom-li}

Consider again the usual example of linear independence of vectors. By definition $\{ v_1,...,v_n \}$ are \emph{linearly independent} iff $\displaystyle \sum_i \alpha_i v_i = \vec{0}$ implies $\alpha_i = 0 \; \forall i.$

This classical definition can be given several equivalent formulations:
\begin{equation}\label{eq:i-vec}
\begin{array}{lll}
\mathcal{I}_1: \hspace{5mm} & v_j \not \subset span(v_i,i\neq j) & \forall j = 1,...,n; \\
\mathcal{I}_2: \hspace{5mm} & v_j \cap span(v_1,...,v_{j-1}) = \vec{0} \hspace{5mm} & \forall j = 2,...,n; \\
\mathcal{I}_3: \hspace{5mm} & \dim(span(v_1,...,v_n)) = n. &
\end{array}
\end{equation}
\begin{figure}[ht!]
\includegraphics[width = 0.65 \textwidth]{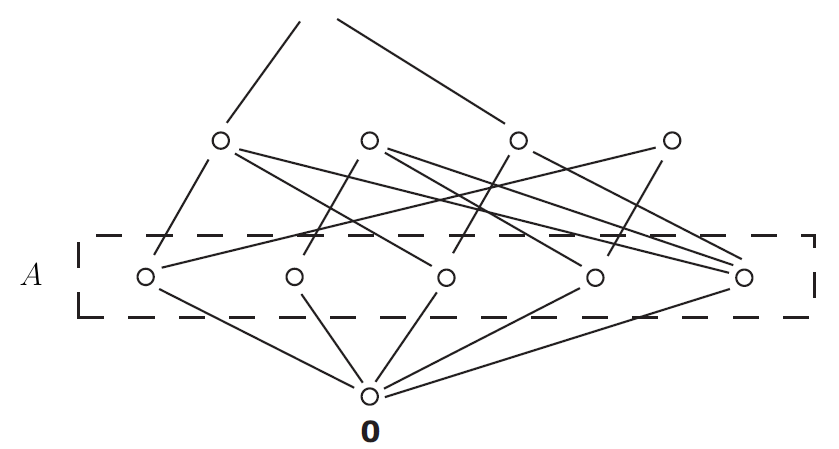}
\caption{A lattice can be represented as a (\emph{Hasse}) diagram in which covering relations are drawn as undirected edges. The atoms $A$ of a lattice which initial element $\mathbf{0}$ (\emph{bounded below}) are the elements covering $\mathbf{0}$.\label{fig:atoms}}
\end{figure}
Remember that the one-dimensional subspaces of a vector space $V$ are the atoms of the lattice $L(V)$ of all the linear subspaces of $V$, for which $span = \vee$, $\cap = \wedge$, $dim = h$ and $\mathbf{0} = \vec{0}$. Following this intuition, we can extend the relations (\ref{eq:i-vec}) to collections of arbitrary (non necessarily atomic) non-zero elements of an arbitrary semimodular lattice with initial element, as follows.

\begin{definition} \label{def:i}
The following relations on the elements of a semimodular lattice with initial element $\mathbf{0}$ can be defined:
\begin{enumerate}
\item
$\{ l_1,...,l_n \}$ are $\mathcal{I}_1$ if $\displaystyle l_j \not \leq \bigvee_{i \neq j} l_i$ (or, equivalently, $\displaystyle l_j \wedge \bigvee_{i \neq j} l_i \neq l_j$) for all $j = 1,...,n$;
\item
$\{l_1,...,l_n\}$ are $\mathcal{I}_2$ if $\displaystyle l_j \wedge \bigvee_{i < j} l_i = \mathbf{0}$ for all $j = 2,...,n$;
\item
$\{ l_1,...,l_n \}$ are $\mathcal{I}_3$ if $\displaystyle h \bigg(\bigvee_{i} l_i \bigg) = \sum_i h(l_i)$. 
\end{enumerate}
\end{definition}
These relations have been studied by several authors in the past. Our goal here is to understand their relation with independence of frames in the semimodular lattice of frames. Graphical interpretations of $\mathcal{I}_1, \mathcal{I}_2$ and $\mathcal{I}_3$ in terms of Hasse diagrams are given in Figure \ref{fig:graphical}.

\begin{figure}[ht!]
\includegraphics[width = 1 \textwidth]{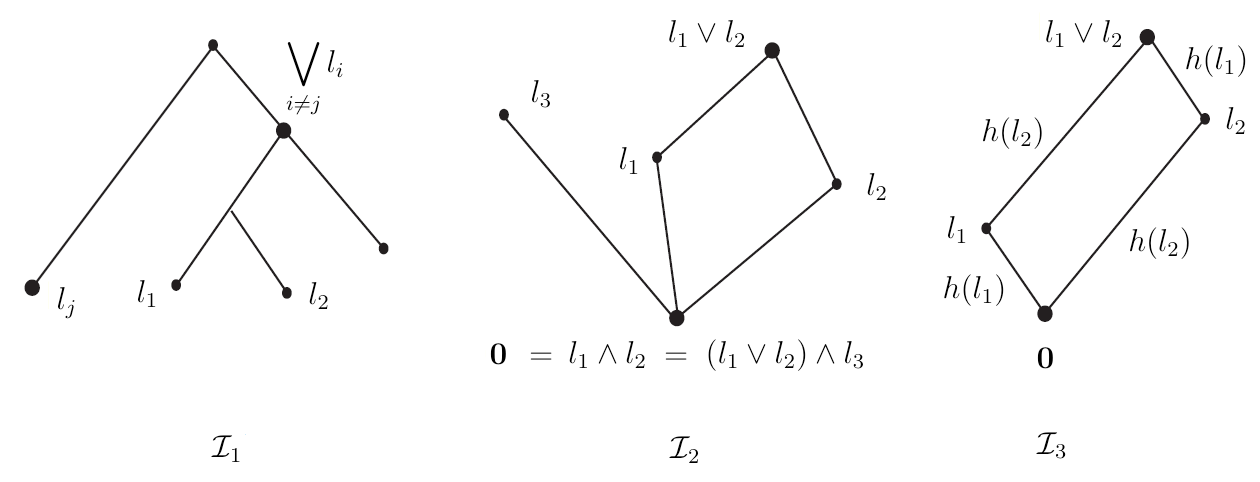}
\caption{Graphical interpretation of the relations introduced in Definition \ref{def:i}. \label{fig:graphical}}
\end{figure}

When applied to arbitrary elements of a lattice $\mathcal{I}_1$, $\mathcal{I}_2$, $\mathcal{I}_3$ are distinct, and none of them generates a matroid. However, when defined on the \emph{atoms} of an upper semimodular lattice with initial element they do coincide, and form a matroid \cite{Szasz}.

\begin{proposition} \label{pro:szasz}
The restrictions of the above relations to the set of the atoms $A$ of an upper semimodular lattice $L$ with initial element coincide, namely $\mathcal{I}_1 = \mathcal{I}_2 = \mathcal{I}_3 = \mathcal{I}$ on $A$, and $(A,\mathcal{I})$ is a matroid.
\end{proposition}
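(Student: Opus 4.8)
The plan is to prove the cycle of implications $\mathcal{I}_1 \Rightarrow \mathcal{I}_2 \Rightarrow \mathcal{I}_3 \Rightarrow \mathcal{I}_1$ for finite collections of atoms, so that the three relations of Definition~\ref{def:i} collapse to a single, order-independent relation $\mathcal{I}$ on $A$, and then to verify the three matroid axioms of Definition~\ref{def:matroid} for $(A,\mathcal{I})$ directly. The one structural fact everything rests on is an \emph{atom-adjunction lemma}: for $x \in L$ and $a \in A$, the join $a \vee x$ equals $x$ when $a \leq x$ and \emph{covers} $x$ otherwise, so that $h(a\vee x) = h(x)$ or $h(x)+1$ accordingly. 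This follows from upper semimodularity (Definition~\ref{def:sm}) applied to $a \succ a \wedge x = \mathbf{0}$, together with the fact that in a semimodular lattice of (locally) finite length all maximal chains between comparable elements have the same length, so the height $h$ is well defined (in the intended application $L$ is finite, and only the bounded sublattice generated by the finitely many atoms under consideration is ever involved; cf. Corollary~\ref{cor:family}).

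On atoms $a_1,\dots,a_n$, the implication $\mathcal{I}_1 \Rightarrow \mathcal{I}_2$ is immediate since $\bigvee_{i<j} a_i \leq \bigvee_{i\neq j} a_i$. For $\mathcal{I}_2 \Rightarrow \mathcal{I}_3$, observe that for an atom $a_j$ the condition $a_j \wedge \bigvee_{i<j} a_i = \mathbf{0}$ is equivalent to $a_j \not\leq \bigvee_{i<j} a_i$; by the adjunction lemma each partial join $\bigvee_{i\leq j} a_i$ then strictly covers $\bigvee_{i<j} a_i$, and induction from $h(a_1)=1$ yields $h(\bigvee_i a_i) = n = \sum_i h(a_i)$. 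Conversely $\mathcal{I}_3$ is manifestly symmetric in the $a_i$; and if $\mathcal{I}_2$ failed at some step $k$ for a given ordering, that step would not raise the height, forcing $h(\bigvee_i a_i) \leq n-1$ and contradicting $\mathcal{I}_3$. Hence $\mathcal{I}_3 \Rightarrow \mathcal{I}_2$ holds for \emph{every} ordering; applying this to the ordering that places a fixed $a_j$ last gives $a_j \not\leq \bigvee_{i\neq j} a_i$, i.e. $\mathcal{I}_1$. This closes the cycle, so $\mathcal{I}_1 = \mathcal{I}_2 = \mathcal{I}_3 =: \mathcal{I}$ on $A$, and membership of a set of atoms in $\mathcal{I}$ is independent of how that set is enumerated.

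It remains to check the matroid axioms. Axiom~(1), $\emptyset \in \mathcal{I}$, is vacuous. Axiom~(2): if $\{a_1,\dots,a_n\}$ is $\mathcal{I}_1$ and $I' \subseteq \{a_1,\dots,a_n\}$, then for any $a_j \in I'$ we have $\bigvee_{a_i \in I',\, i\neq j} a_i \leq \bigvee_{i\neq j} a_i$, whence $a_j \not\leq \bigvee_{a_i \in I',\, i\neq j} a_i$, so $I'$ is $\mathcal{I}_1$ as well. Axiom~(3), augmentation, is where the height characterization pays off. Let $I_1, I_2 \in \mathcal{I}$ with $|I_1| < |I_2|$, so $h(\bigvee I_1) = |I_1| < |I_2| = h(\bigvee I_2)$. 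By order-independence of $\mathcal{I}$ and the adjunction lemma, for an atom $a$ one has $I_1 \cup \{a\} \in \mathcal{I}$ iff $a \not\leq \bigvee I_1$. If augmentation failed, every $a \in I_2 \setminus I_1$ would satisfy $a \leq \bigvee I_1$; combined with the trivial $a \leq \bigvee I_1$ for $a \in I_1 \cap I_2$, this gives $\bigvee I_2 \leq \bigvee I_1$, hence $|I_2| = h(\bigvee I_2) \leq h(\bigvee I_1) = |I_1|$, a contradiction. Therefore $(A,\mathcal{I})$ is a matroid.

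The main obstacle is conceptual rather than computational: reconciling $\mathcal{I}_2$, which as stated depends on the enumeration of the atoms, with the order-free relations $\mathcal{I}_1$ and $\mathcal{I}_3$. Routing all the comparisons through the symmetric height characterization $\mathcal{I}_3$ is the device that transfers $\mathcal{I}_2$ from one ordering to another, and it is the same ingredient that makes the augmentation axiom drop out. Everything else reduces to the atom-adjunction lemma, which is precisely where upper semimodularity — rather than a weaker covering hypothesis — is used; this is consistent with the observation, to be developed in the sequel, that the matroidal independence $\mathcal{I}$ on the atoms of the frame lattice is \emph{opposed} to Shafer's $\mathcal{IF}$ rather than a reformulation of it.
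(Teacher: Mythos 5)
Your proof is correct. Note that the paper itself gives no argument for Proposition \ref{pro:szasz} — it is quoted from \cite{Szasz} — so there is nothing to diverge from; what you have written is the classical proof, resting on exactly the right ingredient, namely the covering property $a \not\leq x \Rightarrow a \vee x \succ x$ for atoms in an upper semimodular lattice, and you correctly route the order-dependent relation $\mathcal{I}_2$ through the symmetric height criterion $\mathcal{I}_3$ both to close the cycle of equivalences and to obtain augmentation. Your remark that the height function must be well defined (finite length, as in the finite frame lattices actually used) is the one hypothesis the paper's statement leaves implicit, and you handle it appropriately.
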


As the partition lattice (see Chapter \ref{cha:alg}, Section \ref{sec:partition-lattice}) has both an upper $L(\Theta)$ and lower $L^*(\Theta)$ semimodular form, we can introduce there two dual forms $\mathcal{I}_1$, $\mathcal{I}_2$, $\mathcal{I}_3$ and $\mathcal{I}^*_1$, $\mathcal{I}^*_2$, $\mathcal{I}^*_3$ of the above relations, respectively associated with $L(\Theta)$ and $L^*(\Theta)$. All these relations constitute valid {extensions of matroidal independence to all the elements of a semimodular lattice}. In the remainder of this Chapter we will investigate their relations with Shafer's independence of frames.

\subsection{Subspace lattice} \label{sec:subspace-lattice}

It can be interesting to see first how these candidate independence relations behave in the lattice of all vector subspaces $L(V)$ (see Equation (\ref{eq:i-vec})).

\begin{theorem} \label{the:li}
In the subspace lattice $L(V)$ relation ${\mathcal{I}}_2$ implies $\mathcal{I}_1$.
\end{theorem}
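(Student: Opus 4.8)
The plan is to use the concrete reading of the lattice operations in $L(V)$: the join $\vee$ is the sum (span) of subspaces, the meet $\wedge$ is intersection, the initial element $\mathbf{0}$ is the zero subspace $\{\vec 0\}$, and the height $h$ is the vector--space dimension. Under this dictionary, $\mathcal{I}_2$ for $\{l_1,\dots,l_n\}$ reads $l_j \cap (l_1+\cdots+l_{j-1}) = \{\vec0\}$ for $j=2,\dots,n$, while $\mathcal{I}_1$ reads $l_j \not\subseteq M_j$ for every $j$, where $M_j \doteq \sum_{i\neq j} l_i$. Recall also the standing assumption that all the $l_i$ are non-zero; this is precisely the hypothesis that makes $\mathcal{I}_1$ attainable (for a zero subspace $l_j$ one would trivially have $l_j\subseteq M_j$, yet $\mathcal{I}_2$ could still hold), so it must be used.

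First I would show that $\mathcal{I}_2$ implies that $l_1+\cdots+l_n$ is an internal direct sum, i.e. $\dim(l_1+\cdots+l_n) = \sum_{i=1}^n \dim l_i$ (equivalently, $\mathcal{I}_2\Rightarrow\mathcal{I}_3$ in $L(V)$). This is an induction on $n$ via the Grassmann identity $\dim(A+B)=\dim A+\dim B-\dim(A\cap B)$ with $A=l_1+\cdots+l_{n-1}$ and $B=l_n$: the $\mathcal{I}_2$ conditions for $j\le n-1$ give $\dim A = \sum_{i<n}\dim l_i$ by the inductive hypothesis, and the $\mathcal{I}_2$ condition at $j=n$ gives $\dim(A\cap l_n)=0$, whence $\dim(A+l_n)=\sum_{i\le n}\dim l_i$.

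Next, fix an index $j$ and apply the Grassmann identity once more:
\[
\dim l_j + \dim M_j - \dim(l_j\cap M_j) = \dim(l_j+M_j) = \dim\Big(\sum_i l_i\Big) = \sum_i \dim l_i,
\]
the last equality being the direct--sum fact just proved. Since always $\dim M_j \le \sum_{i\neq j}\dim l_i$, and $\sum_i\dim l_i = \dim l_j + \sum_{i\neq j}\dim l_i$, the displayed equation forces $\dim(l_j\cap M_j)\le 0$, hence $l_j\cap M_j=\{\vec0\}$. Because $l_j$ is non-zero, $l_j\not\subseteq M_j=\sum_{i\neq j}l_i$. As $j$ was arbitrary, $\{l_1,\dots,l_n\}$ satisfies $\mathcal{I}_1$, as claimed.

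I do not expect a genuine obstacle here; the only things needing care are (i) the bookkeeping in the induction of the first step, (ii) the implicit finite--length/finite--dimensionality context that legitimizes the dimension count — if one prefers to dispense with it, the same conclusion follows by an elementary element--chasing argument: take $\vec0\neq v\in l_j\cap M_j$, write $v=\sum_{i\neq j}w_i$ with $w_i\in l_i$, let $k$ be the largest index with $w_k\neq\vec0$; then either $v\in l_j\cap(l_1+\cdots+l_{j-1})$ (if $k<j$) or $w_k\in l_k\cap(l_1+\cdots+l_{k-1})$ (if $k>j$), each a meet that $\mathcal{I}_2$ declares to be $\{\vec0\}$, a contradiction — and (iii) invoking the non-triviality assumption on the $l_i$, without which the implication fails.
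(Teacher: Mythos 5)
Your proof is correct, but it follows a genuinely different route from the paper's. The paper argues by contradiction directly at the subspace level: assuming some $V_k\subset \mathrm{span}(V_i,\ i\neq k)$, it uses the $\mathcal{I}_2$ condition $V_k\cap \mathrm{span}(V_1,\dots,V_{k-1})=\vec 0$ to deduce $V_k\subset\mathrm{span}(V_{k+1},\dots,V_n)$, reads this as the existence of an $l>k$ with $V_k\cap V_l\neq\vec 0$, and then contradicts the $\mathcal{I}_2$ condition at index $l$. You instead run a dimension count: the Grassmann identity plus induction gives $\mathcal{I}_2\Rightarrow\mathcal{I}_3$ (the sum is direct), and a second application of the identity forces $l_j\cap\sum_{i\neq j}l_i=\{\vec 0\}$, whence $\mathcal{I}_1$ because $l_j\neq\{\vec 0\}$. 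Your approach buys three things: it proves the strictly stronger statement that each $l_j$ meets the span of the others trivially (not merely that it is not contained in it); it makes explicit where the non-triviality of the $l_i$ enters; and your element-chasing fallback removes the finite-dimensionality caveat. It is also the safer argument: the paper's two intermediate reductions (that $V_k\subset\mathrm{span}(\text{all others})$ together with $V_k\cap\mathrm{span}(V_1,\dots,V_{k-1})=\vec 0$ yields $V_k\subset\mathrm{span}(V_{k+1},\dots,V_n)$, and that the latter is equivalent to $V_k\cap V_l\neq\vec 0$ for some $l$) do not hold for arbitrary subspaces as literally stated --- a vector of $V_k$ may decompose with nonzero components on both sides, and a subspace can lie in a span without meeting any single summand --- so your parenthetical ``largest nonzero index'' argument is in effect the clean repair of the paper's own contradiction scheme.
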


\begin{proof}
Let us consider a number of linear subspaces $V_1,...,V_n$ of a vector space $V$ which meet ${\mathcal{I}}_2$, namely:
\[
V_2\cap V_1 = \vec{0}, \hspace{3mm} V_3\cap span(V_1,V_2) = \vec{0}, \hspace{3mm} \cdots \hspace{3mm}, V_n \cap span(V_1,...,V_{n-1}) = \vec{0}.
\]
Suppose then that there exists a subspace $V_k$ such that:
\[
V_k\subset span(V_1, \cdots ,V_{k-1}, V_{k+1}, \cdots , V_n),
\] 
i.e., the collection $V_1,...,V_n$ is not $\mathcal{I}_1$. By hypothesis $V_k\cap span(V_1,...,V_{k-1}) = \vec{0}$, so that the last condition implies $V_k \subset span(V_{k+1},...,V_n)$, which is in turn equivalent to:
\[
\exists \; l \in [k+1,...,n] \; s.t. \; V_k \cap V_l \neq  \vec{0}.
\]
But again by hypothesis $V_l \cap span(V_1,...,V_{l-1}) = \vec{0}$ which implies $V_l\cap V_k = \vec{0}$, since $k<l$. Therefore, we have a contradiction.
\end{proof}

By looking at the proof of Theorem 70, page 152 of \cite{Szasz}, restated as follows:

\begin{proposition}
If a finite set of atoms of a semimodular lattice bounded below is $\mathcal{I}_2$, then it is $\mathcal{I}_1$.
\end{proposition}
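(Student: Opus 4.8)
The plan is to adapt the argument of Theorem \ref{the:li} from the subspace lattice $L(V)$ to an arbitrary semimodular lattice $L$ bounded below, working only with the atoms of $L$. The formal structure of the proof carries over almost verbatim, with $span$ replaced by $\vee$, $\cap$ replaced by $\wedge$, and $\vec{0}$ replaced by the initial element $\mathbf{0}$; the only genuinely lattice-theoretic fact one needs beyond the definitions is that, for a pair of distinct atoms $a,b$, one has $a\wedge b=\mathbf{0}$ (two distinct atoms have no common lower bound other than $\mathbf{0}$, since an atom covers $\mathbf{0}$).

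First I would set up the contrapositive exactly as in Theorem \ref{the:li}: suppose $\{a_1,\dots,a_n\}$ is a finite set of atoms which is $\mathcal{I}_2$, so that $a_j\wedge\bigvee_{i<j}a_i=\mathbf{0}$ for every $j=2,\dots,n$, and suppose for contradiction that the set is \emph{not} $\mathcal{I}_1$, i.e.\ there is an index $k$ with $a_k\leq\bigvee_{i\neq k}a_i$. Using the $\mathcal{I}_2$ hypothesis at index $k$, namely $a_k\wedge\bigvee_{i<k}a_i=\mathbf{0}$, I want to conclude that in fact $a_k\leq\bigvee_{i>k}a_i$; this is the step where the specific structure of the lattice enters, and it is the main obstacle, since in a general lattice one cannot simply ``subtract'' a complemented part of a join. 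I expect this to go through using semimodularity together with the fact that $a_k$ is an atom: if $a_k\leq\bigvee_{i\neq k}a_i=\bigl(\bigvee_{i<k}a_i\bigr)\vee\bigl(\bigvee_{i>k}a_i\bigr)$ but $a_k\not\leq\bigvee_{i>k}a_i$, then adjoining $a_k$ to $\bigvee_{i>k}a_i$ strictly increases it (covers it, by semimodularity and atomicity), and one can run a covering/height count — along the lines of relation $\mathcal{I}_3$ and Proposition \ref{pro:szasz} — to derive that $a_k$ must meet $\bigvee_{i<k}a_i$ nontrivially, contradicting $\mathcal{I}_2$ at index $k$.

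Once $a_k\leq\bigvee_{i>k}a_i$ is established, the argument closes just as in the vector case: by semimodularity and induction on the height, $a_k\leq\bigvee_{i>k}a_i$ forces the existence of some $l\in\{k+1,\dots,n\}$ with $a_k\wedge a_l\neq\mathbf{0}$ (an atom below a join of atoms cannot be disjoint from all of them — again a height/covering count). Since $a_k$ and $a_l$ are atoms, $a_k\wedge a_l\neq\mathbf{0}$ forces $a_k=a_l$, hence $a_k\leq a_l\leq\bigvee_{i<l}a_i$ (using $k<l$), which contradicts the $\mathcal{I}_2$ hypothesis at index $l$, namely $a_l\wedge\bigvee_{i<l}a_i=\mathbf{0}$. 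This contradiction shows the set is $\mathcal{I}_1$, completing the proof.

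I would note that this is precisely the content cited as Theorem 70 of \cite{Szasz}, so an alternative (and shorter) route is simply to invoke that reference; but the self-contained argument above is worth spelling out because it makes transparent \emph{why} the restriction to atoms is essential — the covering counts that replace ``dimension subtraction'' are valid only because each $a_i$ has height $1$, and it is exactly this that fails for arbitrary lattice elements, which is why $\mathcal{I}_1,\mathcal{I}_2,\mathcal{I}_3$ diverge off the atoms (as discussed before Proposition \ref{pro:szasz}).
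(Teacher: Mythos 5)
First, note that the paper does not actually prove this statement — it is quoted as Theorem 70 of \cite{Szasz} and used as a black box — so a self-contained argument would be a genuine addition. Unfortunately, two of your intermediate steps are false, and they are precisely the points where the vector-space intuition breaks down. The reduction from $a_k\leq\bigl(\bigvee_{i<k}a_i\bigr)\vee\bigl(\bigvee_{i>k}a_i\bigr)$ and $a_k\wedge\bigvee_{i<k}a_i=\mathbf{0}$ to $a_k\leq\bigvee_{i>k}a_i$ is not a valid inference: in the upper semimodular lattice $L(\Theta)$ of partitions of a three-element frame, take the atoms $a_1=1,2/3$, $a_2=1,3/2$, $a_3=2,3/1$ and $k=2$; then $a_2\wedge a_1=\mathbf{0}$ and $a_2\leq a_1\vee a_3=\mathbf{1}$, yet $a_2\not\leq a_3$. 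Worse, the fact you lean on to close the argument — that an atom below a join of atoms must meet one of them nontrivially — is simply false: the same three atoms give $a_2\leq a_1\vee a_3$ with $a_2\wedge a_1=a_2\wedge a_3=\mathbf{0}$. This is the "three collinear points" phenomenon that makes geometric lattices nontrivial, so no covering count will rescue either step. (Incidentally, the same configuration of one-dimensional subspaces $\langle e_1\rangle,\langle e_1+e_2\rangle,\langle e_2\rangle$ shows the issue is already present in $L(V)$.)

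The correct argument does run on a height count, but routed through $\mathcal{I}_3$ rather than through your two claims. For an atom $a$ and arbitrary $x$, one has $a\wedge x=\mathbf{0}$ if and only if $a\not\leq x$ (the meet is either $\mathbf{0}$ or $a$), and upper semimodularity then gives $a\vee x\succ x$ whenever $a\not\leq x$. Hence $\mathcal{I}_2$ says that each $a_j\not\leq\bigvee_{i<j}a_i$, so each successive join covers the previous one and $h\bigl(\bigvee_i a_i\bigr)=n$. If the set were not $\mathcal{I}_1$, some $a_k\leq\bigvee_{i\neq k}a_i$ would force $\bigvee_i a_i=\bigvee_{i\neq k}a_i$, whose height is at most $n-1$ by the same covering bound applied to $n-1$ atoms — a contradiction. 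Your closing observation about why the restriction to atoms is essential is correct in spirit, but the operative mechanism is the equivalence of $a\wedge x=\mathbf{0}$ with $a\not\leq x$ for atoms (which fails for general elements), not a substitute for dimension subtraction.
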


we can note that it is based on the assumption that $\mathcal{I}_1$ is a linear independence relation \emph{among atoms}, in particular that $\mathcal{D}_1 = \overline{\mathcal{I}_1}$ satisfies the augmentation axiom (see Definition \ref{def:matroid}). In $L(V)$, on the other hand, this does not hold. Nevertheless, Theorem \ref{the:li} overcomes this difficulty by providing a proof of the implication between the two candidate independence relations.

\subsection{Boolean and lattice independence in the upper semimodular lattice $L(\Theta)$}

Let us then consider the candidate independence relations on upper semimodular form of the partition lattice, and investigate their relationships with independence of frames ($\mathcal{IF}$).

\subsubsection{Forms and semantics of extended matroidal independence}

In $L(\Theta)$ the relations introduced in Definition \ref{def:i} assume the forms:
\begin{equation}\label{eq:i1sm}
\begin{array}{lllll}
\{ \Theta_1,...,\Theta_n \} \in \mathcal{I}_1 & & \Leftrightarrow & & \displaystyle \Theta_j \otimes
\bigoplus_{i\neq j} \Theta_i \neq \Theta_j \hspace{5mm} \forall \; j = 1,...,n,
\end{array}
\end{equation}
\begin{equation}\label{eq:i2sm}
\begin{array}{lllll}
\{ \Theta_1,...,\Theta_n \} \in \mathcal{I}_2 & & \Leftrightarrow & & \displaystyle \Theta_j \otimes
\bigoplus_{i < j} \Theta_i = \Theta \hspace{5mm} \forall \; j = 2,...,n,
\end{array}
\end{equation}
\begin{equation}\label{eq:i3sm}
\begin{array}{lllll}
\{ \Theta_1,...,\Theta_n \} \in \mathcal{I}_3 & & \Leftrightarrow & & \displaystyle |\Theta| -
\Big| \bigoplus_{i =1}^n \Theta_i \Big| = \sum_{i=1}^n (|\Theta| - |\Theta_i|),
\end{array}
\end{equation}
as in the lattice $L(\Theta)$ we have $\Theta_i \wedge \Theta_j = \Theta_i \otimes \Theta_j$,
$\Theta_i \vee \Theta_j = \Theta_i \oplus \Theta_j$, $h(\Theta_i) = |\Theta| - |\Theta_i|$, and $\mathbf{0}=\Theta$.\\
They read as follows: $\{ \Theta_1,...,\Theta_n \}$ are $\mathcal{I}_1$ iff no frame $\Theta_j$ is a
refinement of the maximal coarsening of all the others. They are $\mathcal{I}_2$ iff $\forall \; j
= 2,...,n$ $\Theta_j$ does not have a non-trivial common refinement with the maximal coarsening of
all its predecessors. 

The interpretation of $\mathcal{I}_3$ is perhaps more interesting, for $\mathcal{I}_3$ is equivalent to say that the coarsening that generates $| \bigoplus_{i =1}^n \Theta_i |$ can be broken up into $n$ steps of the same length of the coarsenings that generate each of the frames $\Theta_i$ starting from $\Theta$. Namely: first $\Theta_1$ is obtained from $\Theta$ by merging $|\Theta| - |\Theta_1|$ elements, then $|\Theta| - |\Theta_2|$ elements of this new frame are merged, and so on until we get $| \bigoplus_{i =1}^n \Theta_i |$. We will return on this when discussing the dual relation on the lower semimodular lattice $L^*(\Theta)$.

To study the logical implications between these lattice-theoretic relations and independence of frames, and between themselves, we first need a useful lemma. Let $\mathbf{0}_\mathcal{F}$ denote the unique unitary frame of a family $\mathcal{F}$ of compatible frames of discernment (see Chapter \ref{cha:alg}).
\begin{lemma} \label{lem:1}
$\{ \Theta_1,...,\Theta_n \} \in \mathcal{IF}$, $n>1$ $\vdash$ $\bigoplus_{i=1}^n \Theta_i =
\mathbf{0}_\mathcal{F}$.
\end{lemma}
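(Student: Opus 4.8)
The statement asserts that if compatible frames $\Theta_1,\ldots,\Theta_n$ (with $n>1$) are independent in Shafer's sense ($\mathcal{IF}$), then their maximal coarsening $\bigoplus_{i=1}^n \Theta_i$ is the unit frame $\mathbf{0}_\mathcal{F}$ (the unique singleton frame of the family), which is the initial element of the frame lattice. The plan is to argue by contradiction: suppose $\Omega \doteq \bigoplus_{i=1}^n \Theta_i$ has cardinality at least $2$. Since $\Omega$ is a common coarsening of all the $\Theta_i$, there are refinings $\rho_i : 2^{\Omega} \rightarrow 2^{\Theta_i}$, and composing with the refinings $\eta_i : 2^{\Theta_i} \rightarrow 2^{\Theta_1\otimes\cdots\otimes\Theta_n}$ to the minimal refinement, the maps $\eta_i \circ \rho_i$ all coincide (this is exactly what it means for $\Omega$ to be a common coarsening; compare the proof of Theorem~\ref{the:maximal} and Lemma~\ref{lem:aux}).

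The key step: pick two distinct elements $\omega, \omega' \in \Omega$. Set $A_i \doteq \rho_i(\{\omega\}) \subseteq \Theta_i$ and $A_i' \doteq \rho_i(\{\omega'\}) \subseteq \Theta_i$. Each $A_i$ is non-empty (property (1) of Definition~\ref{def:refining}), and since $\rho_i$ maps $\Omega$ to a disjoint partition of $\Theta_i$, we have $A_i \cap A_i' = \emptyset$ for every $i$. Now apply the independence condition (\ref{eq:2}): because $\emptyset \neq A_i \subseteq \Theta_i$ for all $i$, we must have $\rho_1^{\otimes}(A_1)\cap\cdots\cap\rho_n^{\otimes}(A_n) \neq \emptyset$, where here I abbreviate by $\rho_i^{\otimes}$ the refinings $\eta_i$ to the minimal refinement. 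But $\eta_i(A_i) = \eta_i(\rho_i(\{\omega\})) = (\eta_i\circ\rho_i)(\{\omega\})$, and since all the composites $\eta_i\circ\rho_i$ are the same map (call it $\tau$), we get $\bigcap_i \eta_i(A_i) = \tau(\{\omega\})$; similarly $\bigcap_i \eta_i(A_i') = \tau(\{\omega'\})$. Both are non-empty. However, $\tau$ is itself a refining (composition of refinings, closed by Axiom $A1$), hence injective with $\tau(\{\omega\}) \cap \tau(\{\omega'\}) = \emptyset$ for $\omega \neq \omega'$ by property (2) of Definition~\ref{def:refining} together with Proposition~\ref{pro:refining}. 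This gives two disjoint non-empty subsets of $\Theta_1\otimes\cdots\otimes\Theta_n$, which is fine in itself — so the contradiction must be extracted more carefully.

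The cleaner route, which I would actually carry out, uses the cardinality characterisation. By Theorem~\ref{the:7}, $\mathcal{IF}$ of $\Theta_1,\ldots,\Theta_n$ is equivalent to $|\Theta_1\otimes\cdots\otimes\Theta_n| = \prod_{i=1}^n |\Theta_i|$. Meanwhile $\Omega = \bigoplus_i \Theta_i$ is a common coarsening of all the $\Theta_i$, so $|\Omega| \le |\Theta_i|$ for each $i$, hence $|\Omega| \le \min_i |\Theta_i|$. Moreover, since $\Omega$ is a coarsening of each $\Theta_i$ and each $\Theta_i$ is a coarsening of the minimal refinement $\Theta_1\otimes\cdots\otimes\Theta_n$, the map $\tau: 2^\Omega \to 2^{\Theta_1\otimes\cdots\otimes\Theta_n}$ partitions the $\prod_i |\Theta_i|$ elements of the minimal refinement into $|\Omega|$ blocks; but independence forces these $\prod_i |\Theta_i|$ elements to be labelled bijectively by $n$-tuples $(\theta_1,\ldots,\theta_n)$, and a coarsening block $\tau(\{\omega\})$ must be a union of sets of the form $\eta_i(\{\theta_i^{(k)}\})$ consistently for \emph{every} $i$ simultaneously. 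If $|\Omega| \ge 2$, a block $\tau(\{\omega\})$ is a proper non-empty subset of the minimal refinement, so $\rho_i(\{\omega\}) \subsetneq \Theta_i$ is a proper non-empty subset for every $i$ with $|\Theta_i|>1$; but then $\{\rho_i(\{\omega'\})\}_i$ for the complementary block $\omega'$ gives a selection of non-empty subsets, one per frame, whose $\eta_i$-images are pairwise disjoint from the first block's — and chasing which $n$-tuples land in $\tau(\{\omega\})$ versus $\tau(\{\omega'\})$ shows the partition induced on the product $\prod_i\Theta_i$ by $\tau$ cannot be realised as a coarsening of all coordinate projections unless it is trivial, i.e.\ unless $|\Omega|=1$.

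\textbf{Main obstacle.} The delicate point — the place where I expect to spend the real effort — is making the last implication rigorous: showing that a non-trivial common coarsening of independent frames is impossible because a block $\tau(\{\omega\})$ of the corresponding partition of $\Theta_1\times\cdots\times\Theta_n$ would have to be, simultaneously for each coordinate $i$, of the form $(\text{subset of }\Theta_i)\times\prod_{k\neq i}\Theta_k$, forcing that subset to be all of $\Theta_i$ (since $\omega$ must account for whole elements of each $\Theta_i$ under $\rho_i$), hence $|\Omega|=1$. I would phrase this via the observation that $\rho_i(\{\omega\})$ and $\rho_i(\{\omega'\})$ partition $\Theta_i$, so under the identification $\Theta_1\otimes\cdots\otimes\Theta_n \cong \Theta_1\times\cdots\times\Theta_n$ the block $\tau(\{\omega\})$ equals $\rho_1(\{\omega\})\times\cdots\times\rho_n(\{\omega\})$, a product box; but the blocks $\{\tau(\{\omega\})\}_{\omega\in\Omega}$ also partition the product, and a partition of a product set into product boxes each of which is a ``consistent'' coordinatewise refinement can have more than one block only if some coordinate is itself split — contradicting that all of $\Theta_1,\ldots,\Theta_n$ are being coarsened to the \emph{same} $\Omega$. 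Formalising ``consistent coordinatewise'' precisely, and invoking Proposition~\ref{pro:refining} (refinings preserve and reflect disjointness and inclusion) to close the argument, is the crux.
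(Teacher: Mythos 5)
There is a genuine gap: what you have written is a plan whose decisive step is missing in both of its routes, and your first route is in fact the paper's own argument abandoned one move before completion. In that first attempt you apply the independence condition (\ref{eq:2}) only to the \emph{diagonal} selection $A_i = \rho_i(\{\omega\})$ for a single $\omega$, which indeed produces no contradiction. The missing move is to apply it to a \emph{mixed} selection: take $\omega \neq \omega'$ in $\Omega \doteq \bigoplus_i \Theta_i$ and set $A_1 = \rho_1(\{\omega\})$, $A_2 = \rho_2(\{\omega'\})$, with $A_i \subseteq \Theta_i$ arbitrary non-empty for $i \geq 3$. Every $A_i$ is non-empty by property (1) of Definition \ref{def:refining}, yet
\[
\eta_1(A_1) \cap \eta_2(A_2) \;=\; \tau(\{\omega\}) \cap \tau(\{\omega'\}) \;=\; \emptyset,
\]
since $\tau = \eta_i \circ \rho_i$ is itself a refining and maps distinct singletons to disjoint sets; this contradicts (\ref{eq:2}) at once. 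That is exactly the paper's base case $n=2$; the paper then handles general $n$ by induction, though it suffices to note that $\mathcal{IF}$ of the whole collection implies $\mathcal{IF}$ of the pair $\{\Theta_1,\Theta_2\}$, hence $\Theta_1 \oplus \Theta_2 = \mathbf{0}_\mathcal{F}$, and $\bigoplus_{i=1}^n \Theta_i$, being a common coarsening of $\Theta_1$ and $\Theta_2$, is a coarsening of $\Theta_1\oplus\Theta_2 = \mathbf{0}_\mathcal{F}$ by Theorem \ref{the:inf}, so it equals $\mathbf{0}_\mathcal{F}$.

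Your second route is not only admittedly unfinished but rests on a wrong identification. Under the bijection $\Theta_1\otimes\cdots\otimes\Theta_n \leftrightarrow \Theta_1\times\cdots\times\Theta_n$ one has $\eta_i(A_i) \cong \Theta_1\times\cdots\times A_i\times\cdots\times\Theta_n$, a coordinate \emph{slab}, not a box; and since $\tau = \eta_i\circ\rho_i$ for \emph{every} $i$ (by Axiom $A3$ there is a unique refining from $\Omega$ to the minimal refinement), the single set $\tau(\{\omega\})$ must equal $n$ different slabs simultaneously --- it is not their intersection $\rho_1(\{\omega\})\times\cdots\times\rho_n(\{\omega\})$ as you claim. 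Stated correctly, the argument closes immediately: equating the slab in coordinate $1$ with the slab in coordinate $2$ forces $\rho_1(\{\omega\})=\Theta_1$ and $\rho_2(\{\omega\})=\Theta_2$, whence $\omega$ exhausts $\Omega$ and $|\Omega|=1$. As written, however, neither route reaches a contradiction, so the lemma is not yet proved.
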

\emph{Proof}. We prove Lemma \ref{lem:1} by induction. For $n=2$, let us suppose that $\{ \Theta_1,\Theta_2 \}$ are $\mathcal{IF}$. Then $\rho_1(A_1) \cap \rho_2(A_2) \neq \emptyset$ $\forall A_1 \subseteq \Theta_1, A_2 \subseteq \Theta_2$, $A_1,A_2\neq\emptyset$ ($\rho_i$ denotes as usual the refining from $\Theta_i$ to $\Theta_1\otimes \Theta_2$). Suppose by absurd that their common coarsening contains more than a single element, $\Theta_1\oplus \Theta_2 = \{a,b\}$. But then
\[
\rho_1(\rho^1(a)) \cap \rho_2(\rho^2(b)) = \emptyset
\]
(where $\rho^i$ denotes the refining between $\Theta_1\oplus \Theta_2$ and $\Theta_i$), going against the hypothesis.\\ \emph{Induction step}. Suppose that the thesis is true for $n-1$. We know that $\{ \Theta_1,...,\Theta_n \} \in \mathcal{IF}$ implies $\{\Theta_i, i\neq j\} \in \mathcal{IF}$. By inductive hypothesis, the latter implies:
\[
\begin{array}{ccc}
\displaystyle \bigoplus_{i\neq j} \Theta_i = \mathbf{0}_\mathcal{F} & & \forall j=1,...,n.
\end{array}
\]
Then, since $\mathbf{0}_\mathcal{F}$ is a coarsening of $\Theta_j$ $\forall$ $j$,
$\displaystyle
\Theta_j \oplus \bigoplus_{i\neq j} \Theta_i = \Theta_j \oplus \mathbf{0}_\mathcal{F} =
\mathbf{0}_\mathcal{F}$. $\Box$

\subsubsection{Pairs of frames}

Let us consider first the special case of collections of just two frames.
For $n=2$ the three relations $\mathcal{I}_1,\mathcal{I}_2,\mathcal{I}_3$ read respectively as:
\begin{equation}\label{eq:in2}
\Theta_1 \otimes \Theta_2 \neq \Theta_1,\Theta_2, \hspace{5mm} \Theta_1 \otimes \Theta_2 = \Theta,
\hspace{5mm} |\Theta| + |\Theta_1 \oplus \Theta_2| = |\Theta_1| + |\Theta_2|.
\end{equation}
It is interesting to remark that $\{ \Theta_1,\Theta_2 \} \in \mathcal{I}_1$ implies $\Theta_1,\Theta_2\neq \Theta$. 

\begin{theorem} \label{the:n2}
The following relationships between the various form of independence, when applied to pairs of frames $\Theta_1,\Theta_2$ considered as elements of $L(\Theta)$, hold:
\begin{enumerate}
\item
$\{ \Theta_1,\Theta_2 \} \in \mathcal{IF} \vdash \{ \Theta_1,\Theta_2 \} \in \mathcal{I}_1$ if $\Theta_1,\Theta_2 \neq \mathbf{0}_F$;
\item
$\{ \Theta_1,\Theta_2 \} \in \mathcal{I}_1 \not \vdash  \{ \Theta_1,\Theta_2 \} \in \mathcal{IF}$;
\item
$\{ \Theta_1,\Theta_2 \} \in \mathcal{I}_2 \vdash \{ \Theta_1,\Theta_2 \} \in \mathcal{I}_1$ iff $\Theta_1,\Theta_2\neq\Theta$;
\item
$\{ \Theta_1,\Theta_2 \} \in \mathcal{I}_3 \vdash \{ \Theta_1,\Theta_2 \} \in \mathcal{I}_1$ iff $\Theta_1,\Theta_2\neq\Theta$;
\item
$\{ \Theta_1,\Theta_2 \} \in \mathcal{IF} \wedge \mathcal{I}_3$ iff $\Theta_i = \mathbf{0}_F$ and $\Theta_j = \Theta$,
\end{enumerate}
where $\mathbf{0}_F$ is the unique unitary frame of $L(\Theta)$.
\end{theorem}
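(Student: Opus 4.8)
The plan is to treat Theorem \ref{the:n2} as a list of five small lemmas, all of which reduce to unwinding the three characterisations in Equation (\ref{eq:in2}) together with Lemma \ref{lem:1} and Theorem \ref{the:7}. Throughout I write $\Theta=\Theta_1\otimes\Theta_2$ and use the lattice identities $h(\Theta_i)=|\Theta|-|\Theta_i|$, $\Theta_i\wedge\Theta_j=\Theta_i\otimes\Theta_j$, $\Theta_i\vee\Theta_j=\Theta_i\oplus\Theta_j$ in $L(\Theta)$, so each statement becomes an elementary claim about cardinalities of coarsenings and refinements.

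For item (1): if $\{\Theta_1,\Theta_2\}\in\mathcal{IF}$ then by Lemma \ref{lem:1} we have $\Theta_1\oplus\Theta_2=\mathbf{0}_F$, the unit frame. Were the pair not $\mathcal{I}_1$, then (by the $n=2$ form (\ref{eq:in2}) of $\mathcal{I}_1$) $\Theta_1\otimes\Theta_2$ would equal $\Theta_1$ or $\Theta_2$, i.e. one frame would be a refinement of the other; but then their maximal coarsening would be the coarser of the two, which has cardinality $>1$ whenever $\Theta_1,\Theta_2\neq\mathbf{0}_F$, contradicting $\Theta_1\oplus\Theta_2=\mathbf{0}_F$. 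For item (2) I just need one explicit counterexample: two frames with disjoint-in-the-minimal-refinement singleton images but satisfying $\Theta_1\otimes\Theta_2\neq\Theta_1,\Theta_2$ — the family-generated-by-a-small-set example (e.g. $|\Theta|=4$, take $\Theta_1=\{12/3/4\}$, $\Theta_2=\{1/2/34\}$ which are $\mathcal{I}_1$ but whose refinings can be chosen to violate (\ref{eq:2})), exhibiting a pair of non-empty subsets with empty intersection. For items (3) and (4): in the $n=2$ case, $\mathcal{I}_2$ reads $\Theta_1\otimes\Theta_2=\Theta$ and $\mathcal{I}_3$ reads $|\Theta|+|\Theta_1\oplus\Theta_2|=|\Theta_1|+|\Theta_2|$; in either case I show $\Theta_1\otimes\Theta_2$ cannot equal $\Theta_1$ or $\Theta_2$ unless one of them is all of $\Theta$ — the forward direction is immediate, and for the converse I observe that if, say, $\Theta_1=\Theta$ then $\Theta_1\otimes\Theta_2=\Theta=\Theta_1$, so $\mathcal{I}_1$ fails while $\mathcal{I}_2$ (and a short count for $\mathcal{I}_3$) still holds. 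Item (5) is a degenerate-case computation: $\{\Theta_1,\Theta_2\}\in\mathcal{IF}$ forces $\Theta_1\oplus\Theta_2=\mathbf{0}_F$ (cardinality $1$) by Lemma \ref{lem:1}, while $\mathcal{I}_3$ with $n=2$ forces $|\Theta|+1=|\Theta_1|+|\Theta_2|$; combining this with $|\Theta_i|\le|\Theta|$ and $|\Theta_i|\ge 1$ pins down $\{|\Theta_1|,|\Theta_2|\}=\{1,|\Theta|\}$, hence one frame is $\mathbf{0}_F$ and the other is $\Theta$ (and one checks this configuration is indeed simultaneously $\mathcal{IF}$ and $\mathcal{I}_3$).

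The one genuine obstacle is item (2), since it is the only part requiring a concrete construction rather than bookkeeping: I need a pair of partitions of a common minimal refinement $\Theta$ that is $\mathcal{I}_1$ in the lattice sense (neither is a refinement of the other, equivalently $\Theta_1\otimes\Theta_2$ is strictly finer than both) yet fails Shafer's $\mathcal{IF}$, i.e. admits non-empty $A_1\subseteq\Theta_1$, $A_2\subseteq\Theta_2$ with $\rho_1(A_1)\cap\rho_2(A_2)=\emptyset$. The cleanest route is to take $\Theta$ of size $4$ and the two partitions $\Theta_1=\{\{1,2\},\{3\},\{4\}\}$, $\Theta_2=\{\{1\},\{2\},\{3,4\}\}$: their minimal refinement is $\{1/2/3/4\}=\Theta$ so $\mathcal{I}_1$ holds, but picking $A_1=\{\{3\}\}$, $A_2=\{\{1\}\}$ gives disjoint images, so $\mathcal{IF}$ fails; I should double-check, via Theorem \ref{the:7}(4), that $|\Theta|=4<|\Theta_1|\cdot|\Theta_2|=9$, confirming non-independence. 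Once that example is fixed, the remaining four items are short enough that I would present them inline without belabouring the arithmetic. I would structure the write-up as five numbered paragraphs mirroring the statement, opening with the shared notational reductions and Lemma \ref{lem:1}, then doing (1), (3), (4), (5) as cardinality arguments and (2) by the explicit counterexample above.
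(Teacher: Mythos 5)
Your items (1)--(4) track the paper's own proof closely: (1) is the paper's argument re-routed through Lemma \ref{lem:1} (the paper instead appeals to the observation after Definition \ref{def:indep} that a proper coarsening relation destroys $\mathcal{IF}$ --- both routes are fine); (2) is the same counterexample strategy with a different concrete pair, and your pair does work, since the minimal refinement of $\{12/3/4\}$ and $\{1/2/34\}$ is the discrete partition of a $4$-element set (so $\mathcal{I}_1$ holds) while $\rho_1(\{3\})\cap\rho_2(\{1\})=\emptyset$ and $4<9=|\Theta_1|\cdot|\Theta_2|$; (3) and (4) are the same cardinality bookkeeping the paper performs (it dismisses (3) as trivial and proves (4) via the contrapositive $\neg\mathcal{I}_1\vdash\neg\mathcal{I}_3$ unless one frame equals $\Theta$). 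One cosmetic point: in (2) the refinings are not ``chosen'' --- they are unique by Axiom $A3$ --- so the failure of (\ref{eq:2}) is a property of the partitions themselves, but this does not affect the argument.

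The genuine gap is in item (5). You assert that $|\Theta_1|+|\Theta_2|=|\Theta|+1$ together with $1\le|\Theta_i|\le|\Theta|$ ``pins down'' $\{|\Theta_1|,|\Theta_2|\}=\{1,|\Theta|\}$. It does not: for $|\Theta|=4$ the values $|\Theta_1|=2$, $|\Theta_2|=3$ satisfy all three constraints. The missing ingredient is exactly the multiplicativity consequence of $\mathcal{IF}$ that you invoke in item (2) but drop here: by Theorem \ref{the:7}, independence gives $|\Theta_1\otimes\Theta_2|=|\Theta_1|\,|\Theta_2|$, and since $\Theta_1\otimes\Theta_2$ is itself a partition of $\Theta$ it has at most $|\Theta|$ blocks, whence $|\Theta_1|\,|\Theta_2|\le|\Theta|$. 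Combined with $|\Theta_1|+|\Theta_2|=|\Theta|+1$ this yields $|\Theta_1|+|\Theta_2|\ge|\Theta_1|\,|\Theta_2|+1$, i.e.\ $(|\Theta_1|-1)(|\Theta_2|-1)\le 0$, which is what actually forces one cardinality to equal $1$ and hence, back-substituting, the other to equal $|\Theta|$. This is precisely the inequality the paper's proof runs; without it your deduction in (5) does not close, even though the stated conclusion (and your verification that the degenerate pair $\{\mathbf{0}_\mathcal{F},\Theta\}$ is indeed both $\mathcal{IF}$ and $\mathcal{I}_3$) is correct.
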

\begin{proof}
Let us consider the conjectured properties in the given order.
\begin{enumerate}
\item
If $\{ \Theta_1,\Theta_2 \}$ are $\mathcal{IF}$ then $\Theta_1$ is not a refinement of $\Theta_2$, and vice-versa, unless one of them is $\mathbf{0}_\mathcal{F}$. But then they are $\mathcal{I}_1$ ($\Theta_1\otimes\Theta_2 \neq \Theta_1,\Theta_2$).
\item
We can give a counterexample (see Figure \ref{fig:th4})
\begin{figure}[ht!]
\includegraphics[width = 0.65 \textwidth]{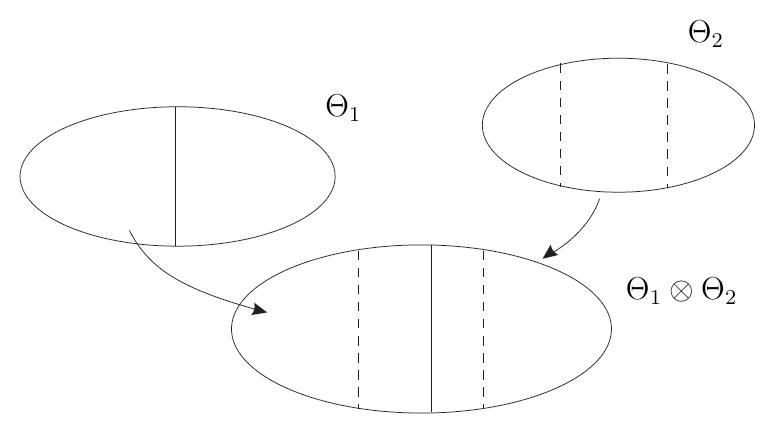}
\caption{Counterexample for the conjecture $\mathcal{I}_1 \vdash \mathcal{IF}$ of Theorem \ref{the:n2}. \label{fig:th4}}
\end{figure}
in which $\{ \Theta_1,\Theta_2 \}$ are $\mathcal{I}_1$ (as none of them is a refinement of the other one) but their minimal refinement $\Theta_1\otimes\Theta_2$ has cardinality $4 \neq |\Theta_1|\cdot |\Theta_2| = 6$ (hence they are not $\mathcal{IF}$).
\item
Trivial.
\item
$\mathcal{I}_3 \vdash \mathcal{I}_1$ is equivalent to $\neg \mathcal{I}_1 \vdash \neg \mathcal{I}_3$. But $\{ \Theta_1,\Theta_2 \} \in \neg \mathcal{I}_1$ reads as $\Theta_1 \otimes \Theta_2 = \Theta_i$, which is in turn equivalent to $\Theta_1 \oplus \Theta_2 = \Theta_j$. I.E.,
\[
\{ \Theta_1,\Theta_2 \} \in \mathcal{I}_3 \equiv |\Theta| + |\Theta_j| = |\Theta_i| + |\Theta_j| \equiv |\Theta| = |\Theta_i|.
\]
But then $\{ \Theta_1,\Theta_2 \} \in\mathcal{I}_3 \vdash \{ \Theta_1,\Theta_2 \} \in \mathcal{I}_1$ iff $\Theta_1,\Theta_2 \neq \Theta$.
\item
As $\{ \Theta_1,\Theta_2 \}$ are $\mathcal{IF}$, by Lemma \ref{lem:1} $|\Theta_1\oplus \Theta_2| = 1$, so that $\{ \Theta_1,\Theta_2 \} \in \mathcal{I}_3$ is equivalent to 
\begin{equation} \label{eq:diamond}
|\Theta| + 1 = |\Theta_1| + |\Theta_2|. 
\end{equation}
Now, by definition:
\[
|\Theta| \geq | \Theta_1 \otimes \Theta_2| = |\Theta_1| |\Theta_2|
\]
(the last passage holding as those frames are $\mathcal{IF}$).\\ Therefore $\{ \Theta_1,\Theta_2 \} \in\mathcal{IF}$ and $\{ \Theta_1,\Theta_2 \} \in \mathcal{I}_3$ together imply:
\[
|\Theta_1| + |\Theta_2| = |\Theta| + 1 \geq |\Theta_1| |\Theta_2| + 1 
\]
which is equivalent to
\[
|\Theta_1| -1 \geq |\Theta_1| |\Theta_2| - |\Theta_2| = |\Theta_2| (|\Theta_1| - 1) \equiv |\Theta_2| \leq 1.
\]
The latter holds iff $\Theta_2 = \mathbf{0}_\mathcal{F}$, which in turn implies that $|\Theta_2| = 1$. By (\ref{eq:diamond}) we have $|\Theta_1| = |\Theta|$, i.e., $\Theta_1 = \Theta$.
\end{enumerate}
\end{proof}

In the `singular' case $\Theta_1 = \mathbf{0}_\mathcal{F}, \Theta_2 = \Theta$, by Equation (\ref{eq:in2}) the pair $\{ \mathbf{0}_\mathcal{F}, \Theta \}$ is both $\mathcal{I}_2$ and $\mathcal{I}_3$, but not $\mathcal{I}_1$. Besides, two frames can be both $\mathcal{I}_2$ and $\mathcal{IF}$ without being singular in the above sense. The pair of frames $\{ y,y' \}$ in Chapter \ref{cha:alg}, Figure \ref{fig:p4} provides such an example, as $y\otimes y' = \Theta$ ($\mathcal{I}_2$) and $\{ y,y' \}$ are $\mathcal{IF}$ (easy to check).

As it well known that \cite{Szasz} on an upper semimodular lattice (such as $L(\Theta)$)
\begin{proposition} \label{pro:szasz2}
$\mathcal{I}_3 \vdash \mathcal{I}_2$.
\end{proposition}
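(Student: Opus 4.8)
The statement to prove is Proposition~\ref{pro:szasz2}: on an upper semimodular lattice such as $L(\Theta)$, the relation $\mathcal{I}_3$ implies $\mathcal{I}_2$. I would prove the contrapositive: if a collection $\{l_1,\dots,l_n\}$ fails $\mathcal{I}_2$, then it fails $\mathcal{I}_3$. Recall that $\mathcal{I}_2$ failing means there exists an index $j\in\{2,\dots,n\}$ with $l_j \wedge \bigvee_{i<j} l_i \neq \mathbf{0}$, while $\mathcal{I}_3$ failing means $h\big(\bigvee_i l_i\big) < \sum_i h(l_i)$ (the inequality $\leq$ always holds by submodularity of the height function, so failure is strictness). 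The key tool is the semimodular inequality for the height/rank function: on an upper semimodular lattice of (locally) finite length, $h(x\vee y) + h(x\wedge y) \leq h(x) + h(y)$ for all $x,y$, with equality forced in the modular case but generally strict.

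The plan is as follows. First I would record the submodular inequality $h(x \vee y) \leq h(x) + h(y) - h(x\wedge y)$, which is the standard consequence of upper semimodularity for lattices of finite length (this is where we use that $L(\Theta)$ is a genuine semimodular lattice, Theorem~\ref{the:semi}, restricted to the finite sublattice of partitions of $\Theta$). Next, I would set $s_k \doteq \bigvee_{i \leq k} l_i$ for $k = 1,\dots,n$, so that $s_k = s_{k-1} \vee l_k$, and apply the submodular inequality telescopically:
\[
h(s_n) = h\Big(\bigvee_i l_i\Big) \leq \sum_{k=1}^n h(l_k) - \sum_{k=2}^n h(s_{k-1}\wedge l_k).
\]
Each term $h(s_{k-1}\wedge l_k)$ is $\geq 0$, and it is $>0$ precisely when $s_{k-1}\wedge l_k \neq \mathbf{0}$, i.e. when $l_k \wedge \bigvee_{i<k} l_i \neq \mathbf{0}$. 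Therefore, if $\{l_1,\dots,l_n\}$ is not $\mathcal{I}_2$, at least one such term is strictly positive, giving $h(\bigvee_i l_i) < \sum_i h(l_i)$, so $\{l_1,\dots,l_n\}$ is not $\mathcal{I}_3$. This establishes $\mathcal{I}_3 \vdash \mathcal{I}_2$. For the concrete lattice $L(\Theta)$ one substitutes $h(\Theta_i) = |\Theta| - |\Theta_i|$, $\wedge = \otimes$, $\vee = \oplus$, $\mathbf{0} = \Theta$, recovering the explicit cardinality forms in Equations~(\ref{eq:i2sm})--(\ref{eq:i3sm}), but the abstract argument is cleaner and applies verbatim.

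The main obstacle is justifying the submodular (semimodular) rank inequality in the precise setting at hand: one must be sure that the height function $h$ is well-defined and finite on the relevant elements — which is guaranteed since we restrict to the finite partition lattice of $\Theta = \Theta_1\otimes\cdots\otimes\Theta_n$ — and that upper semimodularity genuinely yields $h(x\vee y)\leq h(x)+h(y)-h(x\wedge y)$ rather than only the covering-relation statement of Definition~\ref{def:sm}. The standard fact (see Stern, \emph{Semimodular Lattices}, or Szász) is that for a lattice of finite length, upper semimodularity in the covering sense is equivalent to the submodular rank inequality; I would either cite this directly from \cite{Szasz} or give the short induction on $h(x) + h(y)$ that derives the inequality from the covering condition. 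Everything else is the telescoping bookkeeping above, which is routine. It is also worth noting that this proof shows $\mathcal{I}_3 \vdash \mathcal{I}_2$ on \emph{any} upper semimodular lattice of finite length, consistent with the fact that the converse fails in general (the two relations only coincide on atoms, Proposition~\ref{pro:szasz}).
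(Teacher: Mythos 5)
Your proof is correct. Note that the paper itself gives no argument for this proposition at all --- it is stated as a known fact with a bare citation to Sz\'asz --- so there is no internal proof to compare against; what you have written is the standard self-contained derivation, and it is sound. The two ingredients you rely on are both valid: (i) in a lattice of finite length, upper semimodularity in the covering sense of Definition~\ref{def:sm} is equivalent to the submodular rank inequality $h(x\vee y)+h(x\wedge y)\leq h(x)+h(y)$ (the Jordan--H\"older chain condition guarantees $h$ is well defined, and the finiteness needed here is supplied by restricting to the partition lattice of $\Theta_1\otimes\cdots\otimes\Theta_n$, as you observe); and (ii) the telescoping over $s_k=\bigvee_{i\leq k}l_i$ together with the fact that $h(x)>0$ iff $x\neq\mathbf{0}$ turns any failure of $\mathcal{I}_2$ into a strict drop below $\sum_i h(l_i)$, hence a failure of $\mathcal{I}_3$. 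Your remark that the inequality $h(\bigvee_i l_i)\leq\sum_i h(l_i)$ always holds, so that failing $\mathcal{I}_3$ means strict inequality rather than inequality in an unknown direction, is exactly the point that makes the contrapositive clean. The only thing I would insist on in a final write-up is that you actually discharge the step from the covering condition to the rank inequality (either by the short induction you mention or by a precise citation), since that is the one place where the argument leans on a nontrivial structural fact rather than bookkeeping.
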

the overall picture formed by the different lattice-extended matroidal independence relations for \emph{pairs} of frames (excluding singular cases) is as in Figure \ref{fig:relations-upper-n2}. Independence of frames and the strictest form $\mathcal{I}_3$ of extended matroidal independence are mutual exclusive, and are both stronger than the weakest form $\mathcal{I}_1$. Some of those features are retained by the general case too.

\begin{figure}[ht!]
\centering
\includegraphics[width = 0.5 \textwidth]{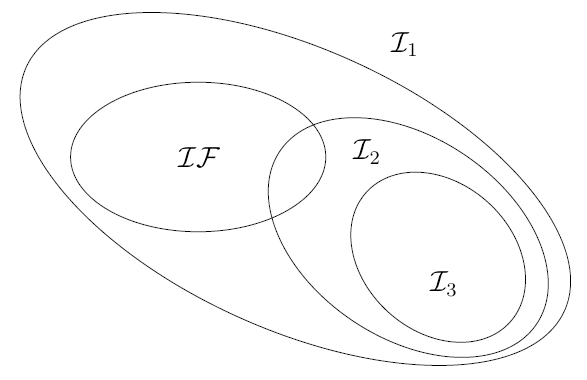}
\caption{Relations between independence of frames $\mathcal{IF}$ and the different extensions $\mathcal{I}_1, \mathcal{I}_2$ and $\mathcal{I}_3$ of matroidal independence to pairs of frames as elements of the upper semimodular lattice $L(\Theta)$ (from Theorem \ref{the:n2}). \label{fig:relations-upper-n2}}
\end{figure}

\subsubsection{General case, $n>2$}

The situation is somehow different in the general case of a collection of $n$ frames. $\mathcal{IF}$ and $\mathcal{I}_1$, in particular, turn out to be incompatible.
\begin{theorem} \label{the:upper-ifi1}
If $\{ \Theta_1,...,\Theta_n \} \in \mathcal{IF}$, $n>2$ then $\{ \Theta_1,...,\Theta_n \} \in \neg \mathcal{I}_1$.
\end{theorem}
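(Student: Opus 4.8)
The plan is to exploit Lemma \ref{lem:1}, which tells us that a collection of mutually independent frames ($\mathcal{IF}$) with $n>1$ has maximal coarsening equal to the unitary frame: $\bigoplus_{i=1}^n \Theta_i = \mathbf{0}_\mathcal{F}$. Now fix an index $j$ and look at the subcollection $\{\Theta_i : i\neq j\}$. Since independence of frames is hereditary (any subcollection of an $\mathcal{IF}$ collection is $\mathcal{IF}$ --- this follows directly from Definition \ref{def:indep}, or can be read off Theorem \ref{the:7}), this subcollection is itself $\mathcal{IF}$, and it has $n-1\geq 2$ elements because $n>2$. Hence Lemma \ref{lem:1} applies again to it, giving $\bigoplus_{i\neq j}\Theta_i = \mathbf{0}_\mathcal{F}$.

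The second step is to recall that $\mathbf{0}_\mathcal{F}$ is a coarsening of every frame in the family (it is the unit of the monoid $(\mathcal{F},\otimes)$ and the initial element of the lattice, see Chapter \ref{cha:alg}, Theorem \ref{the:mongen}), so in particular it is a coarsening of $\Theta_j$. Therefore $\Theta_j$ is a common refinement of $\Theta_j$ and of $\mathbf{0}_\mathcal{F} = \bigoplus_{i\neq j}\Theta_i$, whence the minimal refinement of these two frames is a coarsening of $\Theta_j$; but $\Theta_j$ is trivially a refinement of itself, so by the defining property of the minimal refinement (Proposition \ref{the:minimal}) we get
\[
\Theta_j \otimes \bigoplus_{i\neq j}\Theta_i = \Theta_j \otimes \mathbf{0}_\mathcal{F} = \Theta_j.
\]
Comparing this with the form (\ref{eq:i1sm}) of $\mathcal{I}_1$ in the lattice $L(\Theta)$, which requires $\Theta_j \otimes \bigoplus_{i\neq j}\Theta_i \neq \Theta_j$ for \emph{all} $j$, we see that the $\mathcal{I}_1$ condition fails (for every $j$, in fact, not just one). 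Hence $\{\Theta_1,\dots,\Theta_n\}\in\neg\mathcal{I}_1$, which is the claim.

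I expect this argument to be essentially routine given Lemma \ref{lem:1}; the only point requiring a little care is the invocation of heredity of $\mathcal{IF}$ and the observation that dropping one frame from a collection of $n>2$ still leaves at least two frames, so that Lemma \ref{lem:1} (whose statement requires $n>1$) can legitimately be reapplied. The mild subtlety worth a sentence in the writeup is why $\mathbf{0}_\mathcal{F}\otimes\Theta_j=\Theta_j$: this is exactly the unit property of the commutative monoid $(\mathcal{F},\otimes)$ established in Chapter \ref{cha:alg}, so it can simply be cited. No case analysis or computation beyond this is needed.
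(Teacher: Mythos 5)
Your proof is correct and follows essentially the same route as the paper's: heredity of $\mathcal{IF}$ under taking subcollections, Lemma \ref{lem:1} applied to the $(n-1)$-element subcollection $\{\Theta_i : i\neq j\}$ (legitimate since $n-1>1$), and the unit property $\Theta_j \otimes \mathbf{0}_\mathcal{F} = \Theta_j$ to conclude that condition (\ref{eq:i1sm}) fails for every $j$. No gaps.
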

\begin{proof}
If $\{ \Theta_1,...,\Theta_n \}$ are $\mathcal{IF}$ then any collection formed by some of those frames is
$\mathcal{IF}$ (otherwise we could find empty intersections in $\Theta_1\otimes \cdots \otimes
\Theta_n$). But then, by Lemma \ref{lem:1}:
\[
\bigoplus_{i\in L\subset\{1,...,n\}} \Theta_i = \mathbf{0}_\mathcal{F}
\]
for all subsets $L$ of $\{1,...,n\}$ with at least 2 elements: $|L|>1$.\\
Thus, as $L = \big \{ i\neq j, i \in \{1,...,n\} \big \}$ has cardinality $n-1>1$ (as $n>2$) we have that
$\bigoplus_{i\neq j} \Theta_i = \mathbf{0}_\mathcal{F}$ for all $j\in\{1,...,n\}$. Therefore:
\[
\Theta_j \otimes \bigoplus_{i\neq j} \Theta_i = \Theta_j \otimes \mathbf{0}_\mathcal{F} = \Theta_j
\hspace{5mm} \forall j \in \{1,...,n\},
\]
and $\{ \Theta_1,...,\Theta_n \}$ are not $\mathcal{I}_1$.
\end{proof}

Indeed, $\mathcal{IF}$ is incompatible with $\mathcal{I}_2$ as well.

\begin{theorem} \label{the:upper-ifi2}
If $\{ \Theta_1,...,\Theta_n \} \in \mathcal{IF}$, $n>2$ then $\{ \Theta_1,...,\Theta_n \} \in \neg\mathcal{I}_2$.
\end{theorem}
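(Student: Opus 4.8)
The plan is to unfold $\mathcal{I}_2$ in the upper semimodular lattice $L(\Theta)$ via Equation (\ref{eq:i2sm}): the collection is $\mathcal{I}_2$ iff $\Theta_j\otimes\bigoplus_{i<j}\Theta_i=\Theta$ for every $j=2,\dots,n$, where $\Theta=\Theta_1\otimes\cdots\otimes\Theta_n$ is the top of $L(\Theta)$. I would then exploit the fact, used repeatedly in Section \ref{sec:if}, that every subcollection of an $\mathcal{IF}$ collection is itself $\mathcal{IF}$ (otherwise an empty intersection would already appear inside $\Theta$). Combined with Lemma \ref{lem:1}, this gives $\bigoplus_{i<j}\Theta_i=\mathbf{0}_\mathcal{F}$ whenever $j-1\ge 2$, i.e.\ for all $j=3,\dots,n$ (which is a non-empty range since $n>2$). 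Since $\mathbf{0}_\mathcal{F}$ is the unit of the monoid $(\mathcal{F},\otimes)$, this collapses the $\mathcal{I}_2$ requirement at those indices to $\Theta_j\otimes\mathbf{0}_\mathcal{F}=\Theta_j$, so that $\mathcal{I}_2$ would force $\Theta_j=\Theta$ for every $j\ge 3$.

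The heart of the argument is then to show that $\Theta_j=\Theta$ is incompatible with $\{\Theta_1,\dots,\Theta_n\}\in\mathcal{IF}$ except in a degenerate situation. Pick $j=3$; I would argue as follows. If $\Theta_3=\Theta=\Theta_1\otimes\cdots\otimes\Theta_n$ then $\Theta_3$ is a common refinement of all the $\Theta_i$, so for each $i\ne 3$ the frame $\Theta_i$ is a coarsening of $\Theta_3$. By the remark following Definition \ref{def:indep}, a collection containing a frame $\Theta_i$ with $|\Theta_i|>1$ that is a coarsening of another frame in the collection cannot be $\mathcal{IF}$; alternatively, applying Theorem \ref{the:7} to $\{\Theta_1,\Theta_2,\Theta_3\}$ (a size-$3$ $\mathcal{IF}$ subcollection) gives $|\Theta_1\otimes\Theta_2\otimes\Theta_3|=|\Theta_1||\Theta_2||\Theta_3|$, while $\Theta_1\otimes\Theta_2\otimes\Theta_3=\Theta=\Theta_3$ forces $|\Theta_1||\Theta_2|=1$. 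Either way we conclude $\Theta_1=\Theta_2=\mathbf{0}_\mathcal{F}$ (and, running the same argument for all $i\ne 3$, $\Theta_i=\mathbf{0}_\mathcal{F}$ for every $i\ne 3$). Now test $\mathcal{I}_2$ at $j=2$: it requires $\Theta_2\otimes\Theta_1=\mathbf{0}_\mathcal{F}\otimes\mathbf{0}_\mathcal{F}=\mathbf{0}_\mathcal{F}$ to equal $\Theta=\Theta_3$, i.e.\ $\Theta_3=\mathbf{0}_\mathcal{F}$, whence all frames collapse to $\mathbf{0}_\mathcal{F}$. Thus, outside the wholly trivial collection $\Theta_1=\cdots=\Theta_n=\mathbf{0}_\mathcal{F}$ (in which $\Theta=\mathbf{0}_\mathcal{F}$ and $\mathcal{I}_2$ holds vacuously, a singular case to be excluded exactly as in Theorem \ref{the:n2}), the requirement $\mathcal{I}_2$ is violated, so $\{\Theta_1,\dots,\Theta_n\}\in\neg\mathcal{I}_2$.

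The main obstacle I anticipate is bookkeeping around the singular/degenerate configurations rather than any deep difficulty: one must be explicit that $\mathcal{I}_2$ as written in (\ref{eq:i2sm}) is sensitive to the chosen ordering of the frames, so the negation $\neg\mathcal{I}_2$ is established by exhibiting, for the given labelling, an index where the defining equality fails, and one must carve out the all-unit collection where the statement literally fails. Everything else reduces to the monoid identities for $\otimes$ and $\mathbf{0}_\mathcal{F}$, Lemma \ref{lem:1}, Theorem \ref{the:7}, and the coarsening remark after Definition \ref{def:indep}, all of which are available. A clean write-up would fix the $n>2$ hypothesis at the outset, dispose of the trivial case in one line, apply Lemma \ref{lem:1} to the prefixes of length $\ge 2$, and finish with the $j=3$ versus $j=2$ contradiction as above.
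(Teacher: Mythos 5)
Your proposal is correct, and its first half coincides with the paper's own argument: both exploit the fact that subcollections of $\mathcal{IF}$ collections are $\mathcal{IF}$, together with Lemma \ref{lem:1}, to get $\bigoplus_{i<k}\Theta_i=\mathbf{0}_\mathcal{F}$ for every $k\geq 3$, so that $\mathcal{I}_2$ would force $\Theta_k\otimes\mathbf{0}_\mathcal{F}=\Theta_k$ to equal $\Theta$ at those indices. You diverge in the finishing move. The paper argues directly that $\Theta_k\neq\Theta$ for all $k$: if some $\Theta_k=\Theta$, then any $\Theta_i\neq\mathbf{0}_\mathcal{F}$ in the collection is a non-trivial coarsening of $\Theta_k$, so the pair $\{\Theta_i,\Theta_k\}$ fails $\mathcal{IF}$, contradicting the subcollection property; hence $\Theta_k\otimes\bigoplus_{i<k}\Theta_i=\Theta_k\neq\Theta$ already at $k=3$. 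You instead invoke the cardinality identity of Theorem \ref{the:7} to show that $\Theta_3=\Theta$ forces every other frame to be $\mathbf{0}_\mathcal{F}$, and then push the $j=2$ instance of $\mathcal{I}_2$ to obtain the full collapse $\Theta=\mathbf{0}_\mathcal{F}$ — a longer but equally valid route to the same contradiction. A genuine merit of your write-up is that you explicitly isolate the all-unit collection $\Theta_1=\cdots=\Theta_n=\mathbf{0}_\mathcal{F}$, which is simultaneously $\mathcal{IF}$ and $\mathcal{I}_2$ and on which the statement as written literally fails; the paper's proof glosses over this by asserting, without justification, that some frame in the collection is distinct from $\mathbf{0}_\mathcal{F}$. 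Your reading of $\neg\mathcal{I}_2$ as requiring only a single failing index for the given ordering is also the correct one.
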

\begin{proof}
If $\{ \Theta_1,...,\Theta_n \} \in \mathcal{IF}$ then $\{ \Theta_1,...,\Theta_{k-1} \} \in \mathcal{IF}$ for all $k = 3,...,n$. But by Lemma \ref{lem:1} this implies $\bigoplus_{i<k} \Theta_i = \mathbf{0}_\mathcal{F}$, so that:
\[
\Theta_k \otimes \bigoplus_{i<k} \Theta_i = \Theta_k \hspace{5mm} \forall k>2.
\]
Now, $\{ \Theta_1,...,\Theta_n \} \in \mathcal{IF}$ with $n>2$ implies $\Theta_k \neq
\Theta$ $\forall k$. The latter holds because, as $n>2$, there is at least one frame $\Theta_i$ in the collection
$\Theta_1,...,\Theta_n$ distinct from $\mathbf{0}_\mathcal{F}$, and clearly $\{ \Theta_i,\Theta\}$
are not $\mathcal{IF}$ (as $\Theta_i$ is a non-trivial coarsening of $\Theta$). Hence:
\[
\Theta_k \otimes \bigoplus_{i<k} \Theta_i \neq \Theta \hspace{5mm} \forall k>2,
\]
which is, in fact, a much stronger condition than $\neg\mathcal{I}_2$.
\end{proof}
A special case is that in which one of the frames is $\Theta$ itself. By Definitions (\ref{eq:i1sm}) and (\ref{eq:i2sm}) of $\mathcal{I}_1$ and $\mathcal{I}_2$, if $\exists j: \Theta_j = \Theta$ then $\{ \Theta_1,...,\Theta_n \} \in \mathcal{I}_2$ $\vdash$ $\{ \Theta_1,...,\Theta_n \} \in \neg\mathcal{I}_1$. From Proposition \ref{pro:szasz2} it follows that
\begin{corollary} \label{cor:upper-ifi3}
If $\{ \Theta_1,...,\Theta_n \} \in \mathcal{IF}$, $n>2$ then $\{ \Theta_1,...,\Theta_n \} \in \neg\mathcal{I}_3$.
\end{corollary}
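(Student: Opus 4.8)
\textbf{Proof proposal for Corollary \ref{cor:upper-ifi3}.}

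The plan is to derive this corollary as an immediate consequence of Theorem \ref{the:upper-ifi2} together with Proposition \ref{pro:szasz2}, using the familiar logical fact that an implication is equivalent to the contrapositive of its converse. First I would recall what has just been established: Theorem \ref{the:upper-ifi2} tells us that whenever $\{ \Theta_1,\ldots,\Theta_n \} \in \mathcal{IF}$ with $n>2$, the collection fails to satisfy $\mathcal{I}_2$, i.e., $\{ \Theta_1,\ldots,\Theta_n \} \in \neg\mathcal{I}_2$. Proposition \ref{pro:szasz2}, valid on the upper semimodular lattice $L(\Theta)$, asserts $\mathcal{I}_3 \vdash \mathcal{I}_2$. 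Taking the contrapositive of the latter yields $\neg\mathcal{I}_2 \vdash \neg\mathcal{I}_3$.

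The proof then chains these two facts. Assume $\{ \Theta_1,\ldots,\Theta_n \} \in \mathcal{IF}$ with $n>2$. By Theorem \ref{the:upper-ifi2}, $\{ \Theta_1,\ldots,\Theta_n \} \in \neg\mathcal{I}_2$. By the contrapositive of Proposition \ref{pro:szasz2}, $\neg\mathcal{I}_2$ implies $\neg\mathcal{I}_3$, so $\{ \Theta_1,\ldots,\Theta_n \} \in \neg\mathcal{I}_3$. This is exactly the assertion of the corollary, and the argument is complete.

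There is essentially no obstacle here: the substance of the work has already been done in Theorem \ref{the:upper-ifi2} (whose proof relies on Lemma \ref{lem:1} and the observation that no $\Theta_k$ can equal $\Theta$ when $n>2$ and the frames are $\mathcal{IF}$) and in the standard lattice-theoretic fact Proposition \ref{pro:szasz2}. The only thing to be careful about is to make sure the hierarchy $\mathcal{I}_3 \vdash \mathcal{I}_2$ is being applied on the correct ($L(\Theta)$, upper semimodular) side of the partition lattice, which it is, since all three relations $\mathcal{I}_1,\mathcal{I}_2,\mathcal{I}_3$ in this subsection are defined on $L(\Theta)$ via Equations (\ref{eq:i1sm})--(\ref{eq:i3sm}). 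One could, alternatively, give a self-contained argument directly from the definitions: from $\{ \Theta_1,\ldots,\Theta_n\}\in\mathcal{IF}$ and Lemma \ref{lem:1} one gets $\bigoplus_{i<k}\Theta_i = \mathbf{0}_\mathcal{F}$ for each $k$, hence $\bigoplus_{i=1}^n\Theta_i = \mathbf{0}_\mathcal{F}$, whose height is $|\Theta|-1$, while $\sum_i h(\Theta_i) = \sum_i(|\Theta|-|\Theta_i|)$ would have to equal $|\Theta|-1$ for $\mathcal{I}_3$ to hold --- a count that fails as soon as more than one $\Theta_i$ is non-trivial, which must happen when $n>2$. But the one-line deduction from Theorem \ref{the:upper-ifi2} and Proposition \ref{pro:szasz2} is cleaner, and that is the route I would take.
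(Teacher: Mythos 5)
Your proof is correct and is exactly the paper's argument: the corollary is stated there as an immediate consequence of Theorem \ref{the:upper-ifi2} (which gives $\neg\mathcal{I}_2$) combined with Proposition \ref{pro:szasz2} ($\mathcal{I}_3 \vdash \mathcal{I}_2$ on the upper semimodular lattice), whose contrapositive yields $\neg\mathcal{I}_3$. Your remark about checking that the hierarchy is applied on the $L(\Theta)$ side is apt, and the optional self-contained height count is a reasonable (if unnecessary) supplement.
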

Theorems \ref{the:n2} and \ref{the:upper-ifi2} and Corollary \ref{cor:upper-ifi3} considered together imply that $\mathcal{IF}$ and $\mathcal{I}_3$ are incompatible in all significant cases.
\begin{corollary} \label{cor:upper-ifi3-final}
If $\{ \Theta_1,...,\Theta_n \}$ are $\mathcal{IF}$ then they are not $\mathcal{I}_3$, unless $n=2$, $\Theta_1 = \mathbf{0}_\mathcal{F}$ and $\Theta_2 = \Theta$.
\end{corollary}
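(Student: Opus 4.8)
The plan is to derive Corollary \ref{cor:upper-ifi3-final} as an immediate logical consequence of the results that precede it, essentially by case analysis on $n$. The statement says: if $\{\Theta_1,\ldots,\Theta_n\}$ are $\mathcal{IF}$ then they are not $\mathcal{I}_3$, unless $n=2$, $\Theta_1 = \mathbf{0}_\mathcal{F}$ and $\Theta_2 = \Theta$. Since $\mathcal{IF}$ for a single frame is trivial and carries no content, the interesting range is $n \geq 2$; I will note that $n=1$ is degenerate (or excluded, as the implicit convention in this section seems to be $n>1$ in Lemma \ref{lem:1}). So the proof splits into exactly two cases.

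First I would dispatch the case $n>2$. Here Corollary \ref{cor:upper-ifi3} already states directly that $\{\Theta_1,\ldots,\Theta_n\}\in\mathcal{IF}$ with $n>2$ implies $\neg\mathcal{I}_3$, so there is nothing left to prove: the ``unless'' clause is never triggered because it requires $n=2$. Then I would turn to the case $n=2$. By part (5) of Theorem \ref{the:n2}, a pair $\{\Theta_1,\Theta_2\}$ is simultaneously $\mathcal{IF}$ and $\mathcal{I}_3$ if and only if $\Theta_i = \mathbf{0}_\mathcal{F}$ and $\Theta_j = \Theta$ (for some assignment of $i,j \in \{1,2\}$). Combining the two cases: whenever $\{\Theta_1,\ldots,\Theta_n\}$ are $\mathcal{IF}$, they fail to be $\mathcal{I}_3$ except in precisely the situation $n=2$ with one frame equal to $\mathbf{0}_\mathcal{F}$ and the other equal to the minimal refinement $\Theta$, which is exactly the claimed exceptional case. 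I would write this up as roughly: ``If $n>2$, Corollary \ref{cor:upper-ifi3} gives $\neg\mathcal{I}_3$ directly. If $n=2$, part (5) of Theorem \ref{the:n2} shows $\mathcal{IF}\wedge\mathcal{I}_3$ forces $\{\Theta_1,\Theta_2\}=\{\mathbf{0}_\mathcal{F},\Theta\}$. Hence the claim.''

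I do not anticipate any real obstacle here, since this corollary is purely a bookkeeping consolidation of Theorem \ref{the:n2}(5), Theorem \ref{the:upper-ifi2}, and Corollary \ref{cor:upper-ifi3}. The only subtlety worth a sentence is making the quantifier over $i,j$ in the exceptional clause explicit — the statement writes ``$\Theta_1 = \mathbf{0}_\mathcal{F}$ and $\Theta_2 = \Theta$'' but $\mathcal{IF}$ and $\mathcal{I}_3$ are symmetric in the two frames, so strictly one should say ``up to relabeling'' or ``$\{\Theta_1,\Theta_2\} = \{\mathbf{0}_\mathcal{F},\Theta\}$''; I would insert a parenthetical remark to that effect so the reader is not confused. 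If one wanted to be scrupulous about the $n=1$ boundary, a half-line noting that a single frame is vacuously $\mathcal{IF}$ and that $\mathcal{I}_3$ for a singleton $\{h(\Theta_1)=h(\Theta_1)\}$ is also vacuously true — so $n=1$ would technically be another exception — would be in order; but given the running convention $n>1$ in this chapter I would simply restrict attention to $n\geq 2$ and mention this in passing.

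\begin{proof}
The assertion merely collects the results already established. Recall that for a single frame ($n=1$) both $\mathcal{IF}$ and $\mathcal{I}_3$ are vacuous, so, in keeping with the convention adopted throughout this Chapter, we consider $n\geq 2$.

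If $n>2$, Corollary \ref{cor:upper-ifi3} states directly that $\{\Theta_1,\ldots,\Theta_n\}\in\mathcal{IF}$ implies $\{\Theta_1,\ldots,\Theta_n\}\in\neg\mathcal{I}_3$; since the exceptional clause of the statement requires $n=2$, it is never invoked and there is nothing further to prove.

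If $n=2$, part (5) of Theorem \ref{the:n2} asserts that $\{\Theta_1,\Theta_2\}\in\mathcal{IF}\wedge\mathcal{I}_3$ holds if and only if one of the two frames equals $\mathbf{0}_\mathcal{F}$ and the other equals $\Theta$, i.e.\ $\{\Theta_1,\Theta_2\} = \{\mathbf{0}_\mathcal{F},\Theta\}$ (the condition is symmetric in the two frames, as are $\mathcal{IF}$ and $\mathcal{I}_3$). Hence a pair of $\mathcal{IF}$ frames fails to be $\mathcal{I}_3$ unless it is of this form.

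Combining the two cases, whenever $\{\Theta_1,\ldots,\Theta_n\}$ are $\mathcal{IF}$ they are not $\mathcal{I}_3$, with the sole exception of $n=2$, $\Theta_i = \mathbf{0}_\mathcal{F}$, $\Theta_j = \Theta$.
\end{proof}
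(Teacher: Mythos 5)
Your proof is correct and follows exactly the route the paper intends: the text states that the corollary is obtained by combining Theorem \ref{the:n2}(5) (the $n=2$ case) with Corollary \ref{cor:upper-ifi3} (the $n>2$ case), which is precisely your case split. The remark about symmetry in $i,j$ is a reasonable clarification but does not change the substance.
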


\subsection{Boolean and lattice independence in the lower semimodular lattice $L^*(\Theta)$} \label{sec:i-lower}

\subsubsection{Forms and semantics of extended matroidal independence} \label{sec:semantics}

Analogously, the extended matroidal independence relations associated with the lower semimodular lattice $L^*(\Theta)$
read as:
\begin{equation} \label{eq:i1sm*}
\{ \Theta_1,...,\Theta_n \} \in \mathcal{I}^*_1 \hspace{3mm} \Leftrightarrow \hspace{3mm} \displaystyle \Theta_j \oplus
\bigotimes_{i\neq j} \Theta_i \neq \Theta_j \hspace{5mm} \forall \; j = 1,...,n,
\end{equation}
\begin{equation} \label{eq:i2sm*}
\{ \Theta_1,...,\Theta_n \} \in \mathcal{I}^*_2 \hspace{3mm} \Leftrightarrow \hspace{3mm} \displaystyle \Theta_j \oplus
\bigotimes_{i = 1}^{j - 1} \Theta_i = \mathbf{0}_\mathcal{F} \hspace{5mm} \forall \;j = 2,...,n,
\end{equation}
\begin{equation} \label{eq:i3sm*}
\{ \Theta_1,...,\Theta_n \} \in \mathcal{I}^*_3 \hspace{3mm} \Leftrightarrow \hspace{3mm} \displaystyle \Big| \bigotimes_{i
=1}^n \Theta_i \Big| - 1 = \sum_{i=1}^n (|\Theta_i| - 1),
\end{equation}
as $\Theta_i \wedge^* \Theta_j = \Theta_i \oplus \Theta_j$, $\Theta_i \vee^* \Theta_j = \Theta_i \otimes \Theta_j$, $h^*(\Theta_i) = |\Theta_i| - 1$, and $\mathbf{0} = \mathbf{0}_\mathcal{F}$.

As in the upper semimodular case, these relations have quite interesting semantics. The frames $\{ \Theta_1,...,\Theta_n \}$ are $\mathcal{I}^*_1$ iff none of them is a coarsening of the minimal refinement of all the others. In other words, \emph{there is no proper subset of $\{\Theta_1,...,\Theta_n\}$ which has still $\Theta_1 \otimes ... \otimes \Theta_n$ as common refinement}.\\
They are $\mathcal{I}^*_2$ iff $\forall j>1$ $\Theta_j$ does not have a non-trivial common coarsening with the minimal refinement of its predecessors.\\ Finally, the third form $\mathcal{I}^*_3$ of extended matroidal independence relation can be naturally interpreted in terms of probability spaces. As the dimension of the polytope of probability measures definable on a domain of size $k$ is $k - 1$, $\Theta_1,...,\Theta_n$ are $\mathcal{I}^*_3$ \emph{iff the dimension of the probability polytope for the minimal refinement is the sum of the dimensions of the polytopes associated with the individual frames}:
\begin{equation} \label{eq:dimp}
\{\Theta_1,...,\Theta_n\} \in \mathcal{I}^*_3 \equiv dim \mathcal{P}_{\bigotimes_{i =1}^n \Theta_i} = \sum_i \dim \mathcal{P}_{\Theta_i}.
\end{equation}
From this remark the following analogy between independence of frames and $\mathcal{I}_3$ follows. While the equivalent condition for $\mathcal{IF}$
\begin{equation} \label{eq:crosses}
\Theta_1 \otimes \cdots \otimes \Theta_n = \Theta_1 \times \cdots \times \Theta_n
\end{equation}
states that the minimal refinement is the Cartesian product of the individual frames, Equation (\ref{eq:dimp}) affirms that under $\mathcal{I}^*_3$ {the probability simplex of the minimal refinement is the Cartesian product of the individual probability simplices}. We will consider their relationship in more detail in Section \ref{sec:discussion-frames}.

\subsubsection{General case} \label{sec:general}

\begin{theorem} \label{the:lower-ifi1*}
If $\{\Theta_1,...,\Theta_n\} \in \mathcal{IF}$ and $\Theta_j \neq \mathbf{0}_\mathcal{F}$ $\forall j = 1,...,n$, then $\{\Theta_1,...,\Theta_n\} \in \mathcal{I}^*_1$.
\end{theorem}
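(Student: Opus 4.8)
The plan is to prove the contrapositive: assume $\{\Theta_1,\dots,\Theta_n\}\in\neg\mathcal{I}^*_1$ and derive that either they are not $\mathcal{IF}$ or some $\Theta_j=\mathbf{0}_\mathcal{F}$. By the definition (\ref{eq:i1sm*}) of $\mathcal{I}^*_1$ in the lower semimodular lattice $L^*(\Theta)$, the negation means there exists an index $j$ such that
\[
\Theta_j \oplus \bigotimes_{i\neq j}\Theta_i = \Theta_j,
\]
i.e.\ $\Theta_j$ is a coarsening of $\bigotimes_{i\neq j}\Theta_i$ (equivalently $\Theta_j \geq^* \bigotimes_{i\neq j}\Theta_i$ fails in the $\leq$ direction — more precisely $\bigotimes_{i\neq j}\Theta_i$ is a refinement of $\Theta_j$). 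In plain terms, the minimal refinement of the other $n-1$ frames already discerns every proposition discerned by $\Theta_j$, so $\Theta_1\otimes\cdots\otimes\Theta_n = \bigotimes_{i\neq j}\Theta_i$.

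First I would translate this into a cardinality statement. Since $\bigotimes_{i\neq j}\Theta_i$ is a common refinement of $\Theta_j$ together with the $\Theta_i$, $i\neq j$, it is a refinement of $\Theta_1\otimes\cdots\otimes\Theta_n$; but it is also (being the minimal refinement of a subcollection) a coarsening of $\Theta_1\otimes\cdots\otimes\Theta_n$. Hence the two frames coincide and
\[
\big|\Theta_1\otimes\cdots\otimes\Theta_n\big| \;=\; \Big|\bigotimes_{i\neq j}\Theta_i\Big| \;\leq\; \prod_{i\neq j}|\Theta_i|,
\]
the last inequality holding for any collection of compatible frames. Now I would invoke Theorem \ref{the:7}: if $\{\Theta_1,\dots,\Theta_n\}$ were $\mathcal{IF}$, then by condition (4) of that theorem we would have $|\Theta_1\otimes\cdots\otimes\Theta_n| = \prod_{i=1}^n|\Theta_i|$. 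Combining,
\[
\prod_{i=1}^n |\Theta_i| \;\leq\; \prod_{i\neq j}|\Theta_i|,
\]
which forces $|\Theta_j|\leq 1$, i.e.\ $\Theta_j$ is the unit frame $\mathbf{0}_\mathcal{F}$ (using the uniqueness of the monic frame established in Theorem \ref{the:mongen}). This is exactly the excluded case, so under the hypotheses of the theorem we reach a contradiction, and $\{\Theta_1,\dots,\Theta_n\}\in\mathcal{I}^*_1$ follows.

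The one subtlety I want to handle carefully — and the step I expect to be the main obstacle — is the claim that $\bigotimes_{i\neq j}\Theta_i$ is simultaneously a refinement \emph{and} a coarsening of $\Theta_1\otimes\cdots\otimes\Theta_n$ once $\Theta_j\oplus\bigotimes_{i\neq j}\Theta_i=\Theta_j$. The coarsening direction is immediate since the full minimal refinement is a common refinement of all $n$ frames, hence of the subcollection, hence refines $\bigotimes_{i\neq j}\Theta_i$ by Proposition \ref{pro:minimal}. For the refinement direction I need: $\Theta_j\oplus\bigotimes_{i\neq j}\Theta_i=\Theta_j$ says (by the lattice correspondence $\wedge^* = \oplus$, and Corollary \ref{cor:maximal-coarsening-inf}) that $\Theta_j$ is the greatest lower bound of $\Theta_j$ and $\bigotimes_{i\neq j}\Theta_i$ in $(\mathcal{F},\leq^*)$, i.e.\ $\Theta_j \leq^* \bigotimes_{i\neq j}\Theta_i$ — so $\bigotimes_{i\neq j}\Theta_i$ is a refinement of $\Theta_j$, making it a common refinement of all $n$ frames and therefore a refinement of their minimal refinement $\Theta_1\otimes\cdots\otimes\Theta_n$ (again Proposition \ref{pro:minimal}). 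Once these two containments are pinned down the rest is the short cardinality computation above; I would also note in passing that the cleaner route, if available, is to observe directly that $\neg\mathcal{I}^*_1$ means a proper subcollection of $\{\Theta_1,\dots,\Theta_n\}$ already has $\Theta_1\otimes\cdots\otimes\Theta_n$ as common refinement, which is manifestly incompatible with the Cartesian-product characterization (\ref{eq:crosses}) of $\mathcal{IF}$ unless the dropped frame is trivial.
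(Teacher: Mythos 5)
Your proof is correct, but it follows a genuinely different route from the paper's. The paper argues by explicit construction: assuming $\mathcal{IF}$ and $\neg\mathcal{I}^*_1$, it notes that $\Theta_1\otimes\cdots\otimes\Theta_n=\bigotimes_{i\neq j}\Theta_i$, writes each singleton $\theta$ of that frame as an intersection $\bigcap_{i\neq j}\rho_i(\theta_i)$ (Theorem \ref{the:minimal}), and then uses $|\Theta_j|>1$ to pick a $\theta_j\in\Theta_j$ whose image lies in a \emph{different} cell of the partition induced by $\Theta_j$, producing an explicit empty intersection $\rho_j(\theta_j)\cap\bigcap_{i\neq j}\rho_i(\theta_i)=\emptyset$ that contradicts $\mathcal{IF}$. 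You instead run a global counting argument: from $\neg\mathcal{I}^*_1$ you get $|\Theta_1\otimes\cdots\otimes\Theta_n|=|\bigotimes_{i\neq j}\Theta_i|\leq\prod_{i\neq j}|\Theta_i|$, while condition (4) of Theorem \ref{the:7} forces $|\Theta_1\otimes\cdots\otimes\Theta_n|=\prod_{i=1}^n|\Theta_i|$, whence $|\Theta_j|\leq 1$. Both arguments are sound; yours is shorter, leans on the already-proved cardinality characterization of $\mathcal{IF}$, and makes completely transparent where the hypothesis $\Theta_j\neq\mathbf{0}_\mathcal{F}$ enters (it is exactly the statement $|\Theta_j|>1$), whereas the paper's version is self-contained at the level of refinings and exhibits a concrete witness of the failure of independence. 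Your careful handling of the two containments (that $\bigotimes_{i\neq j}\Theta_i$ is both a coarsening and a refinement of the full minimal refinement, hence of the same cardinality) is exactly the point the paper disposes of in a parenthetical, and you justify it properly via Proposition \ref{pro:minimal}. One small nit: in your closing aside, ``already has $\Theta_1\otimes\cdots\otimes\Theta_n$ as common refinement'' should read ``as \emph{minimal} refinement'' --- every subcollection trivially has it as a common refinement --- but this does not affect your main argument.
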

\begin{proof}
Let us suppose that $\{\Theta_1,...,\Theta_n\}$ are $\mathcal{IF}$ but not $\mathcal{I}^*_1$, i.e., $\exists j: \Theta_j$ coarsening of $\bigotimes_{i\neq j} \Theta_i$ (and therefore $\Theta_1 \otimes \cdots \otimes \Theta_n = \bigotimes_{i\neq j} \Theta_i$). 

We need to prove that $\exists A_1\subset \Theta_1, ..., A_n\subset\Theta_n$ s.t.:
\[
\rho_1(A_1) \cap \cdots \cap \rho_n(A_n) = \emptyset,
\]
where $\rho_i$ denotes the refining from $\Theta_i$ to $\Theta_1\otimes \cdots \otimes \Theta_n$.\\ Since $\Theta_j$ is a coarsening of $\bigotimes_{i\neq j} \Theta_i$ then there exists a partition $\Pi_j$ of $\bigotimes_{i\neq j} \Theta_i$ associated with $\Theta_j$, and a refining $\rho$ from $\Theta_j$ to $\bigotimes_{i\neq j} \Theta_i$.\\ As $\{\Theta_i,i\neq j\}$ are $\mathcal{IF}$, for all $\theta\in \bigotimes_{i\neq j} \Theta_i$ there exist $\theta_i \in \Theta_i$, $i\neq j$ s.t.
\[
\{ \theta \} = \bigcap_{i\neq j} \rho_i(\theta_i),
\]
where $\rho_i$ is the refining from $\Theta_i$ to $\bigotimes_{i\neq j} \Theta_i$ (remember that $\Theta_1 \otimes \cdots \otimes \Theta_n = \bigotimes_{i\neq j} \Theta_i$). 

Now, $\theta$ belongs to a certain element $A$ of the partition $\Pi_j$. By hypothesis ($\Theta_j \neq \mathbf{0}_\mathcal{F}$ $\forall j$) $\Pi_j$ contains at least two elements. But then we can choose an element $\{ \theta_j \} = \rho^{-1}(B)$ of $\Theta_j$ which is refined to a different element $B$ of the disjoint partition $\Pi_j$. In that case we obviously get:
\[
\rho_j(\theta_j) \cap \bigcap_{i\neq j} \rho_i(\theta_i) = \emptyset,
\]
which implies that $\{\Theta_i,i=1,...,n\} \in \neg \mathcal{IF}$ against the hypothesis.
\end{proof}
Does $\mathcal{IF}$ imply $\mathcal{I}^*_1$ even when $\exists \Theta_i = \mathbf{0}_\mathcal{F}$? The answer is negative. $\{\Theta_1,...,\Theta_n\} \in \neg \mathcal{I}^*_1$ means that $\exists i$ s.t. $\Theta_j$ is a coarsening of $\bigotimes_{i\neq j} \Theta_i$. But if $\Theta_i = \mathbf{0}_\mathcal{F}$ then $\Theta_i$ \emph{is} a coarsening of $\bigotimes_{i\neq j} \Theta_i$.\\
The reverse implication does not hold: $\mathcal{IF}$ and $\mathcal{I}_1$ are distinct.
\begin{theorem}
$\{\Theta_1,...,\Theta_n\} \in \mathcal{I}^*_1$ $\nvdash$ $\{\Theta_1,...,\Theta_n\} \in \mathcal{IF}$.
\end{theorem}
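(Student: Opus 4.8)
The plan is to exhibit a single explicit counterexample — a collection $\{\Theta_1,\dots,\Theta_n\}$ that is $\mathcal{I}^*_1$ in the lower semimodular lattice $L^*(\Theta)$ but fails to be $\mathcal{IF}$. Since $\mathcal{I}^*_1$ says merely that \emph{no} frame is a coarsening of the minimal refinement of the others, and $\mathcal{IF}$ (by Theorem \ref{the:7}) is the much stronger requirement $|\Theta_1\otimes\cdots\otimes\Theta_n| = \prod_i|\Theta_i|$, it should be easy to make the first hold while the second fails. The simplest case to attempt is $n=2$: pick two frames $\Theta_1,\Theta_2$, neither of which is a coarsening of $\Theta_1\otimes\Theta_2$ (this just means neither $\Theta_1$ nor $\Theta_2$ equals their minimal refinement, i.e. $\Theta_1\otimes\Theta_2 \neq \Theta_1,\Theta_2$), but whose minimal refinement has cardinality strictly less than $|\Theta_1|\cdot|\Theta_2|$.

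Concretely, I would reuse the counterexample already drawn in Figure \ref{fig:th4}: take $\Theta_1$ and $\Theta_2$ to be partitions of a common frame $\Theta$ such that $|\Theta_1|=2$, $|\Theta_2|=3$, and $|\Theta_1\otimes\Theta_2| = 4 \neq 6 = |\Theta_1|\cdot|\Theta_2|$. First I would verify $\mathcal{I}^*_1$: since $|\Theta_1\otimes\Theta_2|=4 > |\Theta_1|, |\Theta_2|$, neither $\Theta_1$ nor $\Theta_2$ can be a coarsening of $\Theta_1\otimes\Theta_2$ other than as a proper coarsening, hence $\Theta_j \oplus \bigotimes_{i\neq j}\Theta_i = \Theta_j \oplus (\Theta_1\otimes\Theta_2)$, which equals $\Theta_j$ only if $\Theta_j$ refines $\Theta_1\otimes\Theta_2$ — impossible on cardinality grounds. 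So condition (\ref{eq:i1sm*}) holds for $n=2$. Then I would observe that by Theorem \ref{the:7}, condition (4), $\mathcal{IF}$ for the pair would force $|\Theta_1\otimes\Theta_2| = 6$, contradicting $|\Theta_1\otimes\Theta_2|=4$; hence $\{\Theta_1,\Theta_2\}\notin\mathcal{IF}$. This completes the argument.

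The writeup would therefore consist of: (i) a sentence pointing to Figure \ref{fig:th4} (or reconstructing the two partitions explicitly, e.g. on $\Theta=\{1,2,3,4\}$ take $\Theta_1 = \{1,2/3,4\}$ and $\Theta_2$ a suitable 3-element partition of a refinement, adjusting so that $|\Theta_1\otimes\Theta_2|<|\Theta_1||\Theta_2|$); (ii) the verification of $\mathcal{I}^*_1$ via the cardinality observation above; and (iii) the appeal to Theorem \ref{the:7}(4) to rule out $\mathcal{IF}$. The only mild subtlety — the part I would be most careful about — is making sure the chosen pair genuinely lives inside a single family of compatible frames and that the displayed partitions really do have a minimal refinement of the claimed size (i.e. that the two partitions are not "too aligned"); but this is exactly what Figure \ref{fig:th4} already supplies, so no real obstacle is expected. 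If a cleaner $n=2$ example is wanted, one can also take $\Theta_1=\Theta_2$ to be the \emph{same} non-trivial, non-unitary coarsening of $\Theta$: then $\Theta_1\otimes\Theta_2 = \Theta_1 \neq \Theta_1$ is false, so that degenerate choice fails $\mathcal{I}^*_1$ and must be avoided — which is why the Figure \ref{fig:th4} configuration (distinct, suitably "crossing" partitions) is the right one to cite.

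\begin{proof}
It suffices to produce two frames $\Theta_1,\Theta_2$, elements of a common family of compatible frames, which are $\mathcal{I}^*_1$ but not $\mathcal{IF}$. Consider the configuration of Figure \ref{fig:th4}, where $|\Theta_1| = 2$, $|\Theta_2| = 3$, and their minimal refinement satisfies $|\Theta_1 \otimes \Theta_2| = 4$. Since $|\Theta_1 \otimes \Theta_2| = 4 > |\Theta_1|$ and $|\Theta_1 \otimes \Theta_2| = 4 > |\Theta_2|$, neither $\Theta_1$ nor $\Theta_2$ can be a refinement of $\Theta_1 \otimes \Theta_2$; hence for $j = 1,2$ we have $\Theta_j \oplus \bigotimes_{i \neq j} \Theta_i = \Theta_j \oplus (\Theta_1 \otimes \Theta_2) \neq \Theta_j$, because $\Theta_j \oplus (\Theta_1\otimes\Theta_2) = \Theta_j$ would require $\Theta_j$ to be a coarsening of itself obtained from a strictly finer frame only if $\Theta_j$ refined $\Theta_1\otimes\Theta_2$, which is impossible on cardinality grounds. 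Thus $\{\Theta_1,\Theta_2\}$ meets condition (\ref{eq:i1sm*}), i.e. $\{\Theta_1,\Theta_2\} \in \mathcal{I}^*_1$.

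On the other hand, by Theorem \ref{the:7}, condition (4), the pair $\{\Theta_1,\Theta_2\}$ would be $\mathcal{IF}$ only if $|\Theta_1 \otimes \Theta_2| = |\Theta_1| \cdot |\Theta_2| = 2 \cdot 3 = 6$. But $|\Theta_1 \otimes \Theta_2| = 4 \neq 6$, so $\{\Theta_1,\Theta_2\} \notin \mathcal{IF}$. This shows that $\mathcal{I}^*_1$ does not imply $\mathcal{IF}$.
\end{proof}
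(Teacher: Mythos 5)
Your overall strategy is the same as the paper's: exhibit a pair of frames, neither a coarsening of the other, whose minimal refinement is strictly smaller than the product of their cardinalities, and invoke Theorem \ref{the:7}, condition (4), to rule out $\mathcal{IF}$. The paper does exactly this with the configuration of Figure \ref{fig:fig1}; your use of the Figure \ref{fig:th4} pair is an equally valid witness, and your refutation of $\mathcal{IF}$ via the cardinality count $4 \neq 6$ is correct.

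However, the step in which you verify $\mathcal{I}^*_1$ is wrong as written. For $n=2$ and $j=1$ the expression $\bigotimes_{i\neq j}\Theta_i$ is a product over the single index $i=2$, hence equals $\Theta_2$, not $\Theta_1\otimes\Theta_2$. Worse, the quantity you actually compute, $\Theta_j \oplus (\Theta_1\otimes\Theta_2)$, is \emph{always} equal to $\Theta_j$: each $\Theta_j$ is by construction a coarsening of the minimal refinement $\Theta_1\otimes\Theta_2$, so their maximal coarsening is $\Theta_j$ itself. Taken literally, your computation therefore shows that the pair \emph{violates} the condition you wrote down --- the opposite of what you intend --- and the accompanying claim that ``neither $\Theta_1$ nor $\Theta_2$ can be a refinement of $\Theta_1\otimes\Theta_2$'' addresses the wrong relation (both are coarsenings of it by definition). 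The condition you actually need is $\Theta_1\oplus\Theta_2\neq\Theta_1$ and $\Theta_1\oplus\Theta_2\neq\Theta_2$, i.e., neither frame is a coarsening of the other. This does hold for the Figure \ref{fig:th4} pair, and the cardinality data you quote proves it, but by a different route: if $\Theta_1$ were a coarsening of $\Theta_2$ then $\Theta_1\otimes\Theta_2=\Theta_2$ would have cardinality $3$, and if $\Theta_2$ were a coarsening of $\Theta_1$ then $\Theta_1\otimes\Theta_2=\Theta_1$ would have cardinality $2$; since $|\Theta_1\otimes\Theta_2|=4$, neither holds. With that repair the counterexample is complete and matches the paper's argument.
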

\begin{proof}
We need a simple counterexample. Consider two frames $\Theta_1$ and $\Theta_2$ in which $\Theta_1$ is not a coarsening of $\Theta_2$ ($\Theta_1,\Theta_2$ are $\mathcal{I}^*_1$). Then $\Theta_1,\Theta_2 \neq \Theta_1\otimes \Theta_2$ but it easy to find an example (see Figure \ref{fig:fig1}) in which $\Theta_1,\Theta_2$ are not $\mathcal{IF}$.
\end{proof}
\begin{figure}[ht!]
\begin{center}
\includegraphics[width = 0.55 \textwidth]{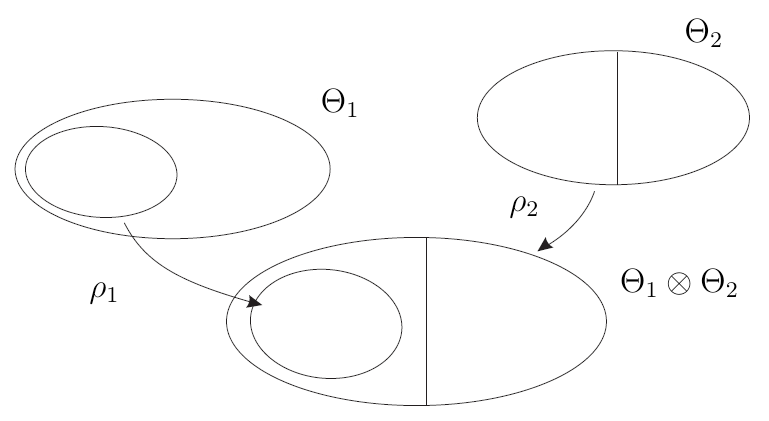}
\end{center}
\caption{A counterexample to $\mathcal{I}^*_1 \vdash \mathcal{IF}$. \label{fig:fig1}}
\end{figure}
Besides, as in the upper semimodular case, $\mathcal{I}^*_2$ does not imply $\mathcal{I}^*_1$.
\begin{theorem}
$\{\Theta_1,...,\Theta_n\} \in \mathcal{I}^*_2$ $\nvdash$ $\{\Theta_1,...,\Theta_n\} \in \mathcal{I}^*_1$.
\end{theorem}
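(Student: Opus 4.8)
The statement to prove is that $\mathcal{I}^*_2$ does not imply $\mathcal{I}^*_1$ in the lower semimodular lattice $L^*(\Theta)$; equivalently, I need to exhibit a finite collection of compatible frames (partitions of a common frame $\Theta$) which satisfies the relation $\mathcal{I}^*_2$ of Equation (\ref{eq:i2sm*}) but fails $\mathcal{I}^*_1$ of Equation (\ref{eq:i1sm*}). The plan is to construct such a counterexample explicitly, by analogy with the upper semimodular situation treated just above (where a similar non-implication was established), and then verify the two conditions by direct computation on a small partition lattice.

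First I would recall the two conditions in concrete terms. For a collection $\{\Theta_1,\dots,\Theta_n\}$ of partitions of $\Theta$, the relation $\mathcal{I}^*_2$ requires $\Theta_j \oplus \bigotimes_{i=1}^{j-1}\Theta_i = \mathbf{0}_\mathcal{F}$ for all $j=2,\dots,n$, i.e.\ each frame, taken in sequence, has only the trivial common coarsening with the minimal refinement of its predecessors. The relation $\mathcal{I}^*_1$ requires $\Theta_j \oplus \bigotimes_{i\neq j}\Theta_i \neq \Theta_j$ for all $j$, i.e.\ no $\Theta_j$ is a coarsening of the minimal refinement of \emph{all} the others. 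The key observation driving the construction is that $\mathcal{I}^*_2$ only constrains \emph{initial segments} of the ordered list, whereas $\mathcal{I}^*_1$ is symmetric in all the frames and looks at the join of the complementary set; so I want the last frame $\Theta_n$ to be harmless when combined with its predecessors' minimal refinement, yet to be recoverable as a coarsening once \emph{all} of the other frames are present. The simplest way to arrange this is to take $n=3$ and let $\Theta_3$ be a coarsening of $\Theta_1 \otimes \Theta_2$ (so that $\bigotimes_{i\neq 3}\Theta_i = \Theta_1\otimes\Theta_2 \otimes\Theta_3 = \Theta_1\otimes\Theta_2$ has $\Theta_3$ as a coarsening, violating $\mathcal{I}^*_1$), while still ensuring $\Theta_2 \oplus \Theta_1 = \mathbf{0}_\mathcal{F}$ and $\Theta_3 \oplus (\Theta_1\otimes\Theta_2) = \mathbf{0}_\mathcal{F}$, which secures $\mathcal{I}^*_2$.

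Concretely, I would work in the partition lattice of $\Theta = \{1,2,3,4\}$ (the lattice $P_4$ of Section \ref{sec:partition-lattice}) and set, say, $\Theta_1 = \{1,2/3,4\}$, $\Theta_2 = \{1,3/2,4\}$, and $\Theta_3 = \{1,4/2,3\}$ — three distinct two-block partitions whose pairwise minimal refinement is already the full frame $\Theta$. Then: $\Theta_1 \otimes \Theta_2 = \Theta$, so $\Theta_3$ is trivially a coarsening of $\bigotimes_{i\neq 3}\Theta_i = \Theta$, whence $\Theta_3 \oplus \bigotimes_{i\neq 3}\Theta_i = \Theta_3 \neq \Theta_3$ fails — wait, that reads $\Theta_3 = \Theta_3$, so $\mathcal{I}^*_1$ is indeed violated (the defining inequality of $\mathcal{I}^*_1$ does not hold for $j=3$). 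For $\mathcal{I}^*_2$ I must check $\Theta_2 \oplus \Theta_1$ and $\Theta_3 \oplus (\Theta_1\otimes\Theta_2) = \Theta_3 \oplus \Theta$; the latter is $\Theta_3 \oplus \Theta = \Theta_3 \neq \mathbf{0}_\mathcal{F}$, so this particular choice fails $\mathcal{I}^*_2$ as well. So the real subtlety — and the main obstacle — is engineering a last frame that is simultaneously a coarsening of the full minimal refinement of the others \emph{and} has trivial common coarsening with that refinement, which is impossible when the predecessors' refinement is $\Theta$ itself. The fix is to not let $\Theta_1\otimes\Theta_2$ be all of $\Theta$: I would instead take $\Theta$ of size $6$ (or use a larger ground set), choose $\Theta_1,\Theta_2$ so that $\Theta_1\otimes\Theta_2$ is a proper refinement of $\Theta$, pick $\Theta_3$ to be a genuine non-trivial coarsening of $\Theta_1\otimes\Theta_2$ that nonetheless shares only the unit frame as a common coarsening with $\Theta_1\otimes\Theta_2$ (possible precisely because $\Theta_1\otimes\Theta_2 \subsetneq \Theta$ leaves room), and verify $\Theta_2\oplus\Theta_1 = \mathbf{0}_\mathcal{F}$ directly. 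I expect the bulk of the work to be this careful selection of the three partitions and the routine but fiddly verification — drawing the relevant Hasse sub-diagram and checking the $\oplus$ (greatest lower bound, i.e.\ maximal coarsening) computations — rather than any conceptual difficulty; the whole proof is a counterexample of the same flavour as Figures \ref{fig:th4} and \ref{fig:fig1}, and I would accompany it with such a figure.
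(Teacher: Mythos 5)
Your overall plan --- exhibit a counterexample, exploiting the fact that $\mathcal{I}^*_2$ only constrains each frame against its \emph{predecessors} while $\mathcal{I}^*_1$ is symmetric in all the frames --- is the same as the paper's (whose proof simply displays such a configuration in a figure), and you correctly diagnose why your first attempt on a four-element frame fails. The gap is in your proposed fix. You ask for a $\Theta_3$ that is a \emph{non-trivial} coarsening of $\Theta_1\otimes\Theta_2$ and yet has only the unit frame as a common coarsening with $\Theta_1\otimes\Theta_2$. This is impossible on any ground set: if $\Theta_3$ is a coarsening of a frame $X$, then $\Theta_3$ is itself a common coarsening of $\Theta_3$ and $X$ (indeed the maximal one), so $\Theta_3\oplus X=\Theta_3$, which equals $\mathbf{0}_\mathcal{F}$ only when $\Theta_3$ is the one-block partition. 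Enlarging $\Theta$ so that $\Theta_1\otimes\Theta_2$ is strictly coarser than $\Theta$ does not ``leave room'': the obstruction has nothing to do with how fine $\Theta_1\otimes\Theta_2$ is, only with the assumed coarsening relation between $\Theta_3$ and it. Put differently, the two conditions you are trying to satisfy simultaneously at the last index --- $\neg\mathcal{I}^*_1$ at $j=n$, i.e.\ $\Theta_n\oplus\bigotimes_{i<n}\Theta_i=\Theta_n$, and $\mathcal{I}^*_2$ at $j=n$, i.e.\ $\Theta_n\oplus\bigotimes_{i<n}\Theta_i=\mathbf{0}_\mathcal{F}$ --- jointly force $\Theta_n=\mathbf{0}_\mathcal{F}$, so no non-degenerate counterexample of the shape you describe exists.

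The repair is to place the violation of $\mathcal{I}^*_1$ at an index $j<n$, which is exactly where the asymmetry you identified actually pays off: $\mathcal{I}^*_2$ says nothing about how $\Theta_1$ relates to the minimal refinement of the \emph{later} frames. For instance, on $\Theta=\{a,b,c,d\}$ take $\Theta_1=\{a/b,c,d\}$, $\Theta_2=\{a,b/c,d\}$, $\Theta_3=\{a,c/b,d\}$. Then $\Theta_2\oplus\Theta_1=\mathbf{0}_\mathcal{F}$ (the only nonempty set that is simultaneously a union of blocks of both partitions is $\Theta$ itself); $\Theta_1\otimes\Theta_2=\{a/b/c,d\}$ and $\Theta_3\oplus\{a/b/c,d\}=\mathbf{0}_\mathcal{F}$, so the triple is $\mathcal{I}^*_2$. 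However $\Theta_2\otimes\Theta_3$ is the finest partition $\{a/b/c/d\}$, of which $\Theta_1$ is trivially a coarsening, so $\Theta_1\oplus(\Theta_2\otimes\Theta_3)=\Theta_1$ and $\mathcal{I}^*_1$ fails at $j=1$. This is the same flavour of counterexample as the paper's figure --- the last frame is chosen so that an \emph{earlier} frame becomes a coarsening of the join of all the others --- but with the failing index made explicit.
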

\begin{proof} Figure \ref{fig:fig2} shows a counterexample to the conjecture $\mathcal{I}^*_2
\vdash \mathcal{I}^*_1$. Given $\Theta_1 \otimes \cdots \otimes \Theta_{j-1}$ and $\Theta_j$, one
possible choice of $\Theta_{j+1}$ s.t. $\Theta_1,...,\Theta_{j+1}$ are $\mathcal{I}^*_2$ but not
$\mathcal{I}^*_1$ is shown.
\end{proof}
\begin{figure}[ht!]
\begin{center}
\includegraphics[width = 0.9 \textwidth]{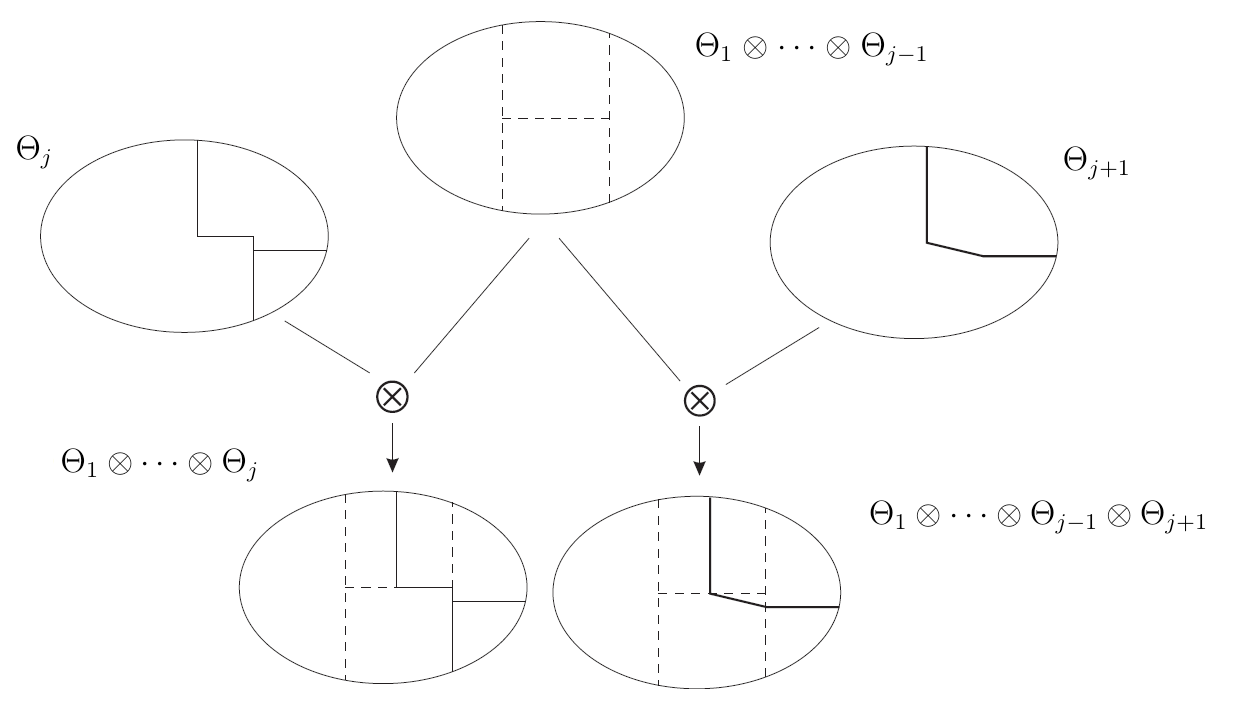}
\end{center}
\caption{A counterexample to $\mathcal{I}^*_2 \vdash \mathcal{I}^*_1$. \label{fig:fig2}}
\end{figure}
$\mathcal{IF}$ is a stronger condition than $\mathcal{I}^*_2$ as well.
\begin{theorem} \label{the:lower-ifi2*}
$\{\Theta_1,...,\Theta_n\} \in \mathcal{IF}$ $\vdash$ $\{\Theta_1,...,\Theta_n\} \in \mathcal{I}^*_2$.
\end{theorem}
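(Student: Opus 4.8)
The plan is to unwind the definition of $\mathcal{I}^*_2$ (Equation (\ref{eq:i2sm*})): it asks that for every $j=2,\dots,n$ the maximal coarsening $\Theta_j \oplus \bigotimes_{i<j}\Theta_i$ equals the unit frame $\mathbf{0}_\mathcal{F}$. So it suffices to prove, for each such $j$, that the \emph{pair} $\big\{\Theta_j,\ \bigotimes_{i<j}\Theta_i\big\}$ is itself independent ($\mathcal{IF}$), and then invoke Lemma \ref{lem:1} with $n=2$, which gives exactly $\Theta_j \oplus \bigotimes_{i<j}\Theta_i = \mathbf{0}_\mathcal{F}$. The only work is therefore to establish independence of that pair.

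To do this I would use the characterisation of Theorem \ref{the:7}. First, since $\{\Theta_1,\dots,\Theta_n\}\in\mathcal{IF}$, every sub-collection is $\mathcal{IF}$ as well (otherwise one could exhibit an empty intersection inside $\Theta_1\otimes\cdots\otimes\Theta_n$); in particular both $\{\Theta_1,\dots,\Theta_j\}$ and $\{\Theta_1,\dots,\Theta_{j-1}\}$ are $\mathcal{IF}$. By Theorem \ref{the:7}, condition (4), this yields $|\Theta_1\otimes\cdots\otimes\Theta_j| = \prod_{i\le j}|\Theta_i|$ and $\big|\bigotimes_{i<j}\Theta_i\big| = \prod_{i<j}|\Theta_i|$. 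Because $\otimes$ is associative (the monoid structure of Theorems \ref{the:monfin}, \ref{the:mongen}), the frame $\Theta_1\otimes\cdots\otimes\Theta_j$ is precisely the minimal refinement of the pair $\big\{\bigotimes_{i<j}\Theta_i,\ \Theta_j\big\}$, and its cardinality is $\prod_{i\le j}|\Theta_i| = \big|\bigotimes_{i<j}\Theta_i\big|\cdot|\Theta_j|$. Applying the implication (4)$\Rightarrow$(2) of Theorem \ref{the:7} to this pair shows it is $\mathcal{IF}$, and Lemma \ref{lem:1} then closes the argument for that $j$. Letting $j$ range over $2,\dots,n$ gives $\{\Theta_1,\dots,\Theta_n\}\in\mathcal{I}^*_2$.

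The conceptual subtlety — and the step where care is needed — is that $\bigotimes_{i<j}\Theta_i$ is \emph{not} one of the original frames, so one cannot simply apply Lemma \ref{lem:1} to the whole family: although $\mathbf{0}_\mathcal{F}$ is trivially a common coarsening of $\Theta_j$ and $\bigotimes_{i<j}\Theta_i$, one must rule out any \emph{nontrivial} common coarsening, which is exactly what independence of the pair buys us. An alternative, more hands-on route to the same pairwise independence claim avoids the cardinality count and works directly from Definition \ref{def:indep}: each element of $\bigotimes_{i<j}\Theta_i$ is of the form $\bigcap_{i<j}\rho_i^{(j-1)}(\{\theta_i\})$ by Proposition \ref{the:minimal}, and since refinings preserve intersections (Proposition \ref{pro:refining}) its image in $\Theta_1\otimes\cdots\otimes\Theta_j$ is $\bigcap_{i<j}\rho_i(\{\theta_i\})$; intersecting with $\rho_j(\{\theta_j\})$ for any $\theta_j\in\Theta_j$ gives $\bigcap_{i\le j}\rho_i(\{\theta_i\})\neq\emptyset$ because $\{\Theta_1,\dots,\Theta_j\}\in\mathcal{IF}$. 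Either way the heart of the matter is the reduction to a two-frame statement plus Lemma \ref{lem:1}.
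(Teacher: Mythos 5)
Your proof is correct and follows essentially the same route as the paper: both reduce the claim to showing that the pair $\{\Theta_j,\ \bigotimes_{i<j}\Theta_i\}$ is $\mathcal{IF}$ via the Cartesian-product/cardinality equivalences of Theorem \ref{the:7}, and then invoke Lemma \ref{lem:1} to conclude that their maximal coarsening is $\mathbf{0}_\mathcal{F}$. If anything, your write-up is more careful than the paper's (which works with $\bigotimes_{i\neq j}\Theta_i$ rather than the $\bigotimes_{i<j}\Theta_i$ actually required by the definition of $\mathcal{I}^*_2$, and leaves the reduction to sub-collections implicit).
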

\begin{proof}
We first need to show that $\{\Theta_1,...,\Theta_n\}$ are $\mathcal{IF}$ iff $\forall j=1,...,n$ the pair $\{ \Theta_j, \otimes_{i\neq j} \Theta_i \}$ is $\mathcal{IF}$. As a matter of fact (\ref{eq:crosses}) can be written as:
\[
\Theta_j \otimes \bigotimes_{i\neq j} \Theta_i = \Theta_j \times \Big ( \times_{i\neq j} \Theta_i \Big)
\equiv \Big \{\Theta_j,\bigotimes_{i\neq j} \Theta_i \Big \} \in \mathcal{IF}.
\]
But then by Lemma \ref{lem:1} we get as desired.
\end{proof}
It follows from Theorems \ref{the:lower-ifi1*} and \ref{the:lower-ifi2*} that, unless some frame is unitary,
\begin{corollary}
$\{\Theta_1,...,\Theta_n\} \in \mathcal{IF} \vdash \{\Theta_1,...,\Theta_n\} \in \mathcal{I}^*_1 \wedge \mathcal{I}^*_2$.
\end{corollary}
i.e., \emph{independence of frames is a more demanding requirement than both the first two forms of lattice-theoretic independence}.\\ Note that the converse is false. Think of a pair of frames ($n=2$), for which
\[
\begin{array}{ccc}
\Theta_1 \oplus \Theta_2 \neq \Theta_1,\Theta_2 \; (\{ \Theta_1,\Theta_2\} \in \mathcal{I}^*_1), & \hspace{5mm} & \Theta_1 \oplus \Theta_2
= \mathbf{0}_\mathcal{F} \; (\{ \Theta_1,\Theta_2\} \in \mathcal{I}^*_2).
\end{array}
\]
Such conditions are met, for instance, by the counterexample of Figure \ref{fig:fig1} (in which the two frames are not $\mathcal{IF}$).

\begin{theorem} \label{the:if-i3*}
If a collection $\{ \Theta_1,...,\Theta_n \}$ of compatible frames is $\mathcal{IF}$ then it is not $\mathcal{I}_3$, unless $n=2$ and one of the frames is the trivial partition.
\end{theorem}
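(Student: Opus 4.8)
The plan is to reduce Theorem~\ref{the:if-i3*} to the results already proved about the upper semimodular lattice, namely Theorem~\ref{the:n2} and Corollary~\ref{cor:upper-ifi3-final}. The key observation is that the relation $\mathcal{I}_3$ appearing in the statement is the \emph{same} relation whether we read it in $L(\Theta)$ or in $L^*(\Theta)$: condition (\ref{eq:i3sm}) and condition (\ref{eq:i3sm*}) are algebraically equivalent. Indeed, expanding the right-hand side of (\ref{eq:i3sm}) gives $n|\Theta| - \sum_i |\Theta_i|$ on the right and $|\Theta| - |\bigoplus_i \Theta_i|$ on the left, which rearranges to $|\bigoplus_i \Theta_i| - 1 = \sum_i(|\Theta_i|-1) - (n-1)(|\Theta|-1) + \cdots$; more directly, one checks that $h$ and $h^*$ on the partition lattice satisfy $h(\Theta_i) + h^*(\Theta_i) = |\Theta| - 1$, and that $\bigvee$ in one lattice is $\bigwedge$ in the other, so the `additivity of height over the join' condition transcribes into exactly the additivity condition on the dual lattice. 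So the first step is to record this equivalence cleanly as a lemma, being careful with the arithmetic of the binomial-free cardinality identities.

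Once that is in place, the theorem follows almost immediately. First I would invoke the structural results already available: if $\{\Theta_1,\dots,\Theta_n\}$ is $\mathcal{IF}$ with $n>2$, then by Corollary~\ref{cor:upper-ifi3-final} the collection is not $\mathcal{I}_3$ in $L(\Theta)$, hence (by the equivalence lemma) not $\mathcal{I}_3$ in $L^*(\Theta)$ either — so there is nothing left to prove in the case $n>2$. For $n=2$, Theorem~\ref{the:n2}(5) states precisely that $\{\Theta_1,\Theta_2\}$ is simultaneously $\mathcal{IF}$ and $\mathcal{I}_3$ if and only if $\Theta_i = \mathbf{0}_\mathcal{F}$ and $\Theta_j = \Theta$ for $\{i,j\} = \{1,2\}$ — i.e. exactly when one of the two frames is the trivial (unitary) partition. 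This matches the stated exception. Thus the whole argument is a matter of transporting earlier results across the lattice duality.

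Alternatively, and perhaps more transparently, I would give a direct proof using Lemma~\ref{lem:1} and the semantic reading (\ref{eq:dimp}) of $\mathcal{I}^*_3$. If $\{\Theta_1,\dots,\Theta_n\}$ is $\mathcal{IF}$, then by the $\mathcal{IF}$ characterisation (\ref{eq:crosses}) we have $|\bigotimes_i \Theta_i| = \prod_i |\Theta_i|$, while $\mathcal{I}^*_3$ demands $\prod_i |\Theta_i| - 1 = \sum_i (|\Theta_i| - 1)$. Writing $a_i = |\Theta_i| \ge 1$, this is the elementary inequality/identity question: when does $\prod a_i - 1 = \sum (a_i - 1)$? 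Expanding for $n=2$ gives $a_1 a_2 - 1 = a_1 + a_2 - 2$, i.e. $(a_1-1)(a_2-1) = 0$, so one frame is unitary; and a short induction (or convexity argument) shows that for $n \ge 3$, with all $a_i \ge 1$, one has $\prod a_i - 1 \ge \sum(a_i-1)$ with equality only if at most one $a_i > 1$, which would force all but one frame to be $\mathbf{0}_\mathcal{F}$, contradicting (via the remark after Definition~\ref{def:indep}) that $\{\Theta_i,\Theta\}$ are $\mathcal{IF}$ — unless the non-unitary frame equals $\Theta$, but then again a collection containing $\mathbf{0}_\mathcal{F}$ and two or more further frames cannot combine an extra non-trivial coarsening. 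Sorting out this last bookkeeping — precisely which degenerate configurations survive — is the only delicate point; everything else is the arithmetic of $\prod a_i - 1$ versus $\sum(a_i - 1)$.

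The main obstacle I anticipate is not conceptual but notational: making sure the case analysis at $n=2$ with unitary frames is stated so that it genuinely coincides with the ``singular'' exception already identified in Theorem~\ref{the:n2}(5) and Corollary~\ref{cor:upper-ifi3-final}, and confirming that the height function identities on the partition lattice are exactly $h^*(\Theta_i) = |\Theta_i| - 1$ and $h(\Theta_i) = |\Theta| - |\Theta_i|$ as used in (\ref{eq:i3sm}) and (\ref{eq:i3sm*}), so that the ``$\mathcal{I}_3$ is self-dual'' lemma is airtight. Given that, the proof is short: state the duality lemma, then cite Corollary~\ref{cor:upper-ifi3-final} for $n>2$ and Theorem~\ref{the:n2}(5) for $n=2$.
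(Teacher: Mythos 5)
Your proposal contains two routes, and they are not equally sound. The primary route — transporting Corollary~\ref{cor:upper-ifi3-final} and Theorem~\ref{the:n2}(5) across a ``duality lemma'' asserting that condition (\ref{eq:i3sm}) and condition (\ref{eq:i3sm*}) are algebraically equivalent — does not work, because that lemma is false. The two conditions express height-additivity over the \emph{join}, but the join in $L(\Theta)$ is the maximal coarsening $\bigoplus_i \Theta_i$ while the join in $L^*(\Theta)$ is the minimal refinement $\bigotimes_i \Theta_i$, and these are unrelated cardinalities: there is no identity forcing $|\bigotimes_i \Theta_i| - |\bigoplus_i \Theta_i|$ to equal $(n-1)(|\Theta|-1)$, which is what your equivalence would require. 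Concretely, take $\Theta = \{1,2,3,4\}$, $\Theta_1 = \{\{1\},\{2,3,4\}\}$, $\Theta_2 = \{\{1,2,3\},\{4\}\}$: then $\Theta_1 \otimes \Theta_2 = \{\{1\},\{2,3\},\{4\}\}$ has size $3$, so $\mathcal{I}^*_3$ holds ($3-1 = 1+1$), yet $\Theta_1 \oplus \Theta_2 = \mathbf{0}_\mathcal{F}$ and $|\Theta| - 1 = 3 \neq 4 = \sum_i(|\Theta|-|\Theta_i|)$, so $\mathcal{I}_3$ fails. The mismatch is also visible in the exceptional cases themselves: Theorem~\ref{the:n2}(5) forces \emph{both} $\Theta_i = \mathbf{0}_\mathcal{F}$ \emph{and} $\Theta_j = \Theta$, whereas the present theorem's exception only requires one frame to be the trivial partition (the other is arbitrary, since $\{\mathbf{0}_\mathcal{F},\Theta_2\}$ is always both $\mathcal{IF}$ and $\mathcal{I}^*_3$). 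Two relations with different exceptional sets cannot be the same relation, so no appeal to the upper-semimodular results can close the case $n>2$ here.

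Your ``alternative'' direct route, however, is correct and is in fact exactly the paper's proof: combine $|\bigotimes_i \Theta_i| = \prod_i |\Theta_i|$ from (\ref{eq:crosses}) with $|\bigotimes_i \Theta_i| - 1 = \sum_i(|\Theta_i|-1)$ from (\ref{eq:i3sm*}) to get $\sum_i a_i - \prod_i a_i = n-1$, which for $n=2$ reads $(a_1-1)(a_2-1)=0$, and for $n \geq 3$ fails by your inequality $\prod_i a_i - 1 \geq \sum_i (a_i-1)$ with equality only when at most one $a_i > 1$. The ``bookkeeping'' you worry about resolves cleanly: since a family of compatible frames contains a \emph{unique} unitary frame (Theorem~\ref{the:mongen}), a collection of $n \geq 3$ distinct frames has at least two members of cardinality $\geq 2$, so equality is strictly violated and no degenerate configuration survives. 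I would drop the duality route entirely and present only the arithmetic argument.
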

\begin{proof}
According to Equation (\ref{eq:crosses}), $\{ \Theta_1,...,\Theta_n \}$ are $\mathcal{IF}$ iff $|\otimes \Theta_i | = \prod_i |\Theta_i|$, while according to (\ref{eq:i3sm*}) they are $\mathcal{I}$ iff $|\Theta_1\otimes \cdots
\otimes \Theta_n| - 1 = \sum_i (|\Theta_i|-1)$. Those conditions are both met iff
\[
\sum_i |\Theta_i| - \prod_i |\Theta_i| = n-1
\]
which happens only if $n=2$ and either $\Theta_1 = \mathbf{0}_\mathcal{F}$ or $\Theta_1 = \mathbf{0}_\mathcal{F}$.
\end{proof}
Instead of being algebraically related notions, independence of frames and matroidicity work against each other. As the former derives from independence of Boolean subalgebras of a Boolean algebra \cite{Sikorski}, this is likely to have interesting wider implications on the relationship between independence in those two fields of mathematics.

\section{Perspectives} \label{sec:discussion-frames}

\subsection{On abstract independence}

Figure \ref{fig:relations} illustrates what we have learned in this Chapter about the relations between independence of frames and the various extensions of matroidal independence to semimodular lattices, in both the upper (left) and lower (right) semimodular lattice of frames. 
Only the general case of a collection of more than two non-atomic frames is shown for sake of simplicity: special cases ($\Theta_i=\mathbf{0}_\mathcal{F}$ for $L^*(\Theta)$, $\Theta_i = \Theta$ for $L(\Theta)$) are also neglected.

\begin{figure}[ht!]
\centering
\begin{tabular}{cc}
\includegraphics[width = 0.9 \textwidth]{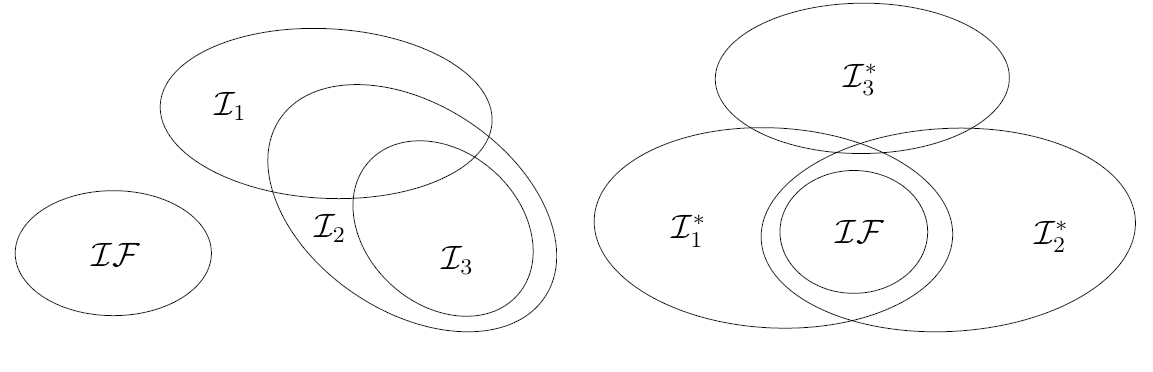}
\end{tabular}
\caption{Left: Relations between independence of frames $\mathcal{IF}$ and the various extended forms of matroidal independence on the upper semimodular lattice $L(\Theta)$. Right: Relations on the lower semimodular lattice $L^*(\Theta)$. \label{fig:relations}}
\end{figure}

In the upper semimodular case, minding the special case in which one of the frames is $\Theta$ itself, independence of frames $\mathcal{IF}$ is mutually exclusive with all lattice-theoretic relations $\mathcal{I}_1,\mathcal{I}_2,\mathcal{I}_3$ (Theorems \ref{the:upper-ifi1}, \ref{the:upper-ifi2} and Corollary \ref{cor:upper-ifi3}) unless we consider two non-atomic frames, for which $\mathcal{IF}$ implies	 $\mathcal{I}_1$ (Theorem \ref{the:n2}). In fact they are \emph{the negation} of each other in the case of atoms of $L(\Theta)$ (frames of size $n-1$), when $\mathcal{I} = \mathcal{I}_1 = \mathcal{I}_2 = \mathcal{I}_3$ is trivially true for all frames, while $\mathcal{IF}$ is never met. The exact relation between $\mathcal{I}_1$ and $\mathcal{I}_2$, $\mathcal{I}_3$ is not yet understood, but we know that the latter implies the former when dealing with pairs.\\
In the lower semimodular case $\mathcal{IF}$ is \emph{a stronger condition} than both $\mathcal{I}^*_1$ and $\mathcal{I}^*_2$ (Theorems \ref{the:lower-ifi1*}, \ref{the:lower-ifi2*}). On the other side, notwithstanding the analogy expressed by Equation (\ref{eq:dimp}), $\mathcal{IF}$ is mutually exclusive with the third independence relation even in its lower semimodular incarnation.

Some common features do emerge: the first two forms of lattice independence are always trivially met by atoms of the related lattice. Moreover, independence of frames and the third form of lattice independence are mutually exclusive in both cases.

The lower semimodular case is clearly the most interesting. Indeed, on $L(\Theta)$ independence of frames and lattice-theoretic independence are basically unrelated (see Figure \ref{fig:relations}-left). Their lower semimodular counterparts, instead, although distinct from  $\mathcal{IF}$, have meaningful links with it. The knowledge of which collections of frames are $\mathcal{I}^*_1$, $\mathcal{I}^*_2$ and $\mathcal{I}^*_3$ tells us much about collections of $\mathcal{IF}$ frames, as the latter are necessarily in:
\[
\mathcal{I}^*_1 \cap \mathcal{I}^*_2 \cap \neg \mathcal{I}^*_3.
\]
We know that $\mathcal{IF}$ is \emph{strictly} included in $\mathcal{I}^*_1 \cap \mathcal{I}^*_2$ (Section \ref{sec:general}), but the possibility that independence of frames may indeed coincide with $\mathcal{I}^*_1 \cap \mathcal{I}^*_2 \cap \neg \mathcal{I}^*_3$  still needs to be explored.

\subsection{On the conflict problem: towards a pseudo Gram-Schmidt procedure?} \label{sec:gram-schmidt}

The problem of conflicting belief functions, so important for sensor fusion applications, is inherently related to the notion of independence of frames (Theorem \ref{the:7}). This in turn display similarities with the notion of independence of vector subspaces, which have suggested a possible algebraic solution to the conflict problem. These similarities can be recapped as in the following Table.
\[
\begin{array}{ccc}
\sum_i v_i \neq \vec{0} & \Longleftrightarrow &  v_1+ ... + v_n \neq \vec{0}, \;\forall \vec{0} \neq v_i\in V_i \\ & & \Big \Updownarrow\\
\bigcap_i V_i = \vec{0} & \Longleftrightarrow & span\{V_1,...,V_n\} = \mathbb{R}^{d_1}\times\cdots\times \mathbb{R}^{d_n}  \\
\\ \bigoplus_i \Theta_i = \mathbf{0}_{\mathcal{F}} & \Longleftrightarrow & \Theta_1\otimes \cdots \otimes \Theta_n =
\Theta_1\times \cdots \times \Theta_n\\ & & \Big \Updownarrow\\ \bigcap A_i \neq \bigwedge & \Longleftrightarrow &
\rho_1(A_1) \cap \cdots \cap \rho_n(A_n)\neq \emptyset, \; \forall \emptyset \neq A_i \subseteq \Theta_i,
\end{array}
\]
where $\bigcap A_i \neq \bigwedge$ (with $\bigwedge$ the initial element of a Boolean algebra) is the independence condition for Boolean sub-algebras \cite{Sikorski} (Equation \ref{eq:independence-boolean}).

In Chapter \ref{cha:alg} we have seen that families of frames form upper semimodular, lower semimodular, and Birkhoff lattices, but not modular lattices (unlike projective geometries). Here the analogy breaks down for, while the atoms of a Birkhoff lattice do form a matroid (therefore admitting the notion of independence), this matroid cannot be trivially extended to arbitrary elements of the lattice. As a consequence a true independence relation cannot be defined for frames of a family, although various extensions (as we have seen in detail) can be defined and do display relationships with Boolean independence of frames.

Given a collection of arbitrary elements of a vector space, the well known {Gram-Schmidt algorithm} is able to generate another collection of independent vectors spanning the same subspace. The main ingredients of the algorithm are the notion of linear independence of vectors and a mechanisms for projecting vectors onto the linear subspace generated by other vectors.\\ We can then imagine a `pseudo Gram-Schmidt' procedure resting on the algebraic structure of Birkhoff lattice of commutative monoids (shared by both compatible frames and linear subspaces), and the associated independence relation. This algorithm, starting from a set of belief functions $b_i : 2^{\Theta_i} \rightarrow [0,1]$ defined over a finite collection of FODs $\Theta_1,\cdots,\Theta_n$, would create a new collection of \emph{independent} frames of the same family:
\[
\begin{array}{ccc}
\Theta_1,...,\Theta_n\in{\mathcal{F}} & \longrightarrow & \Theta'_1,...,\Theta'_m\in{\mathcal{F}},
\end{array}
\]
with $m \neq n$ in general, and the same minimal refinement:
\[
\Theta_1\otimes \cdots \otimes \Theta_n = \Theta'_1\otimes \cdots \otimes \Theta'_m.
\]
Once projected the $n$ original b.f.s $b_1,...,b_n$ onto the new set of frames (which could be done as the new frames would belong to the same family) we would achieve a set of \emph{surely combinable} belief functions $b'_1,...,b'_m$, \emph{equivalent}, in some sense, to the previous one. 

The search for a formal definition of the equivalence of possibly non-combinable collections of belief functions is the most intriguing element of this proposal: it is reasonable to conjecture that Dempster's combination will have to be involved.

\section{Conclusive comments}

In the last two Chapters we have given a rather exhaustive description of families of compatible frames in terms of the algebraic structures they form: Boolean sub-algebras (as in their original definition), monoids, upper and lower semimodular lattices. Many of those structures come with a characteristic form of `independence', not necessarily derived from classical matroidal independence.\\
We compared them with Shafer's notion of independence of frames, with the final goal of pursuing an algebraic interpretation of independence in the theory of evidence. Although $\mathcal{IF}$ cannot be straighforwardly explained in terms of classical matroidal independence, it does possess interesting relations with the latter's extended forms on semimodular lattices. 

Independence of frames is actually opposed to matroidal independence (Theorem \ref{the:if-i3*}). Although this can be seen as a negative result in the perspective of finding an algebraic solution to the problem of merging conflicting belief functions on non-independent frames (Chapter \ref{cha:alg}, Section \ref{sec:gram-schmidt}), we now understand much better where independence of frames stands from an algebraic point of view. New lines of research have opened as a result, e.g. concerning a possible explanation of independence of frames as independence of flats in a geometric lattice \cite{cuzzolin08isaim-matroid}. We believe that the prosecution of this study may in the future shed more light on both the nature of independence of sources in the theory of subjective probability, and the relationship between matroidal and Boolean independence in discrete mathematics, pointing out the necessity for a more general, comprehensive definition of this very important notion.

As a last remark, the implications of our algebraic description of families of frames go beyond a potential algebraic solution to the problem of conflicting evidence. Many concepts of the theory of evidence are inherently connected to the structure of the underlying domains. For example, the notion of \emph{support function} (Definition \ref{def:support}) rests on that of refining, and may quite possibly be reformulated using the algebraic language developed here. Its analysis in the context of the lattice structure of $\mathcal{F}$ (Corollary \ref{cor:lattice-of-frames}) could eventually lead to a novel solution to the {canonical decomposition} problem (Section \ref{sec:canonical-decomposition}), alternative to Smets' and Kramosil's (see Chapter \ref{cha:geo}).

\part{Visions}

\chapter{Data association and the total belief theorem} \label{cha:total}

\begin{center}
\includegraphics[width = 0.45 \textwidth]{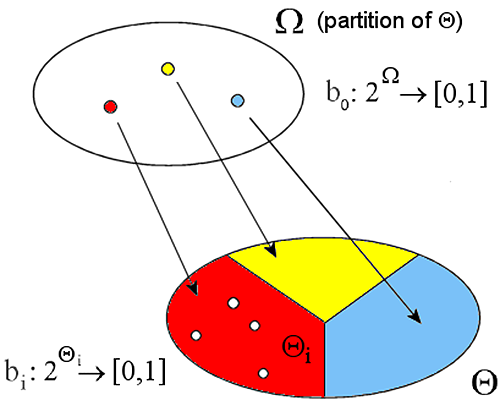}
\end{center}
\vspace{5mm}

\emph{Data association} \cite{DaveyDefence,Karlsson01,smets2004kalman,ristic06if,ayoun01data} is one of the more intensively studied computer vision applications for its important role in the implementation of automated defense systems, and its connections to the field of {structure from motion}, i.e., the reconstruction of a rigid scene from a sequence of images.\\ In data association a number of feature points moving in the 3D space are tracked by one or more cameras, appearing in an image sequence as unlabeled dots (i.e., the correspondences between points in two consecutive frames are not known) \cite{martinerie92dataassociation}. A typical example is provided by a set of markers set at fixed positions on a moving articulated body - in order to reconstruct the trajectory of each `target' of the cloud (each marker placed on the underlying body) we need to associate feature points belonging to pairs of consecutive images,  $I(k)$ and $I(k+1)$.

A popular approach (the \emph{joint probabilistic data association} or JPDA filter, \cite{HagerCVPR98,HagerPAMI2001,Shalom88}) rests on designing a number of Kalman filters (each associated with a feature point), whose aim is to predict the future position of the target, in order to generate the most likely labeling for the cloud of points in the next image. Unfortunately, this method suffers from a number of drawbacks: for instance, when
several feature points converge to the same, small region (\emph{coalescence}, \cite{Bloem95}) the algorithm cannot tell them apart anymore. A number of techniques have been proposed to overcome this sort of problems, in particular the \emph{condensation} algorithm \cite{Blake96} due to Michael Isard and Andrew Blake.

\subsection*{Scope of the Chapter}

In the first part of this Chapter we prospect an evidential solution to the \emph{model-based} data association task, in which feature points are the images of fixed locations on an articulated body whose topological model is \emph{known}. The bottom line of the approach is to express the prior, logical information carried by the body model in term of belief functions on a suitable frame of discernment. This piece of evidence can then be combined with the probabilistic information carried by the usual set of Kalman filters, each associated with a target as in the classical JPDA approach. A tracking process can then be set up, in which the current belief estimate of all model-to-features associations is continually refined by means of the new, incoming evidence.

As we shall see in the following, a rigid motion constraint can be derived from each link in the topological model of the moving body. This constraint, however, can be expressed in a \emph{conditional} way only -- in order to test the rigidity of the motion of two observed feature points at time $k$ we need to know the correct association between points of the model and feature points at time $k - 1$. Hence, the task of combining \emph{conditional} belief functions arises. 

Unfortunately, as we have seen in Chapter \ref{cha:toe}, the theory is not currently equipped with any result analogous to the total probability theorem of classical probability theory. In the second part of the Chapter we will therefore provide a formal statement of the problem, which consists in combining (conditional) belief functions defined over disjoint subsets of a frame of discernment, while simultaneously constraining the resulting total belief function to meet a prior condition represented as a belief function over a coarsening of the original frame. As there are several admissible solutions to this problem, that of minimum size is sought. 

This `total belief' setting is shown in this Chapter to be equivalent to building a square linear system with positive solution, whose columns are associated with the focal elements of the candidate total belief function. We introduce a class of linear transformations of the columns of candidate solution systems, and show that such candidate solutions form the nodes of a graph whose edges are transformations of the above class. We claim that there is always at least one path leading to a valid total belief function whatever starting point in the graph we pick.

We do not provide a full solution to the total belief problem here, but we focus on the restricted case in which the a-priori belief function only has disjoint focal elements.

\section{The data association problem} \label{sec:data-association}

Let us first formally define the {data association} problem.\\ Given a sequence of images $\{I(k),k\}$, each containing a number of feature points $\{z_i(k)\}$ which are projections of 3D locations in the real world, we want to find the correspondences $z_i(k)\longleftrightarrow z_j(k+1)$ between feature points of two consecutive images that correspond to the same material point. The task is complicated by the fact that sometimes one or more material points are \emph{occluded}, i.e., they do not appear in the current image. In other cases \emph{false features} may be generated by defects of the vision system. Overall, the number of feature points at each time instant is in general variable, and not all visible feature points are images of actual material points.

\subsection{Joint probabilistic data association} \label{sec:jpda}

In the \emph{joint probabilistic data association} framework (see \cite{Shalom88} for a more detailed illustration) each feature point is tracked by an individual Kalman filter \cite{Jung97,smets2004kalman}, whose purpose is to generate a prediction of the latter's future position. These predictions are then used to estimate the most probable feature labeling in the following image.

Let us assume that each feature measurement $z(k+1)$ at time $k+1$, conditioned by the past observations $Z^k={[Z(j)]}_{j=1..k}$ (where $Z(j)={\{z_i(j)\}}_{i=1..m_j}$ is the set of measurements at time $j$), has a normal distribution with mean $\hat{z}(k|k+1)$ and variance $S(k+1)$.\\ For each target $\gamma$ we call \emph{validation region} 
\[
\tilde{V}_{k+1}(\gamma) = \big \{ z: V'(k+1)S^{-1}(k+1)V(k+1) \leq \gamma \big \} 
\]
the zone outside which it is improbabile to find measurements associated with target $\gamma$. Given our Gaussian assumption, $\tilde{V}_{k+1}(\gamma)$ is an ellipse centered in the mean prediction $\hat{z}(k|k+1)$. Assuming that a classical linear model for the measurements is adopted,
\begin{equation}\label{eq:a}
\left\{
\begin{array}{l}
x(k+1) = F(k)x(k) + v(k) \\ z(k) = H(k)x(k) + w(k),
\end{array}
\right.
\end{equation}
we can distinguish a number of different filtering approaches:
\begin{itemize}
\item \emph{nearest neighbor}: the measurement which is the closest to the prediction in the validation region is used to update the filter;
\item \emph{splitting}: multiple hypotheses are formulated by considering all the measurements in the validation region;
\item \emph{PDA filter}: the probability of it being the correct association is computed for each measurement at the current time instant $k$;
\item \emph{optimal Bayesian filter}: the same probability is calculated for \emph{entire series} of measurements, rather than at the current time only.
\end{itemize}
The assumption behind the PDA filter is that the state $x(k)$ of system (\ref{eq:a}) is also normally distributed around
the current prediction $\hat{x}(k|k-1)$ with variance $P(k|k-1)$. 

To get an estimation equation we need to compute $\beta_i(k) = P[\theta_i(k)|Z^k]$ for $i=0 ... m_k$, where $\theta_i,\theta_0$ represent the following hypotheses: $\theta_i(k)=\{z_i(k)$ true$\}$, $\theta_0(k)=\{$all the measurements are false$\}$. Then the update equation for the state becomes:
\[
\hat{x}(k|k) = E[x(k)|Z^k] = \sum_i \hat{x}_i(k|k)\beta_i(k).
\]
The equations for state and output predictions, instead, come from the standard Kalman filter formulation \cite{Moore95}:
\[
\left\{
\begin{array}{l}
\hat{x}(k+1|k)=F(k)\hat{x}(k|k)+G(k)u(k)\\ \hat{z}(k+1|k)=H(k+1)\hat{x}(k+1|k).
\end{array}
\right.
\]
A simple variant of the PDA filter is the \emph{joint probabilistic data association} (JPDA) filter, which focuses on the \emph{joint association} event:
\[
\bar{\theta} = \bigcap_{j=1..m_k}\theta_{jt_j},\hspace{5mm} j=1,...,m_k,\; t=0,...,T,
\]
where $\theta_{jt_j}$ is the event `measurement $j$ is associated with target $t$'. A \emph{validation matrix} is then
defined as $\Omega=[\omega_{jt}]$, where $\omega_{jt}=1$ when $z_j$ is found within the ellipse associated to
target $t$. An \emph{admissible event} is a matrix of the same kind $\hat{\Omega}=[\hat{\omega}_{jt}]$, subject to
the following constraints:
\[
\sum_t \hat{\omega}_{jt}(\theta)=1,\;\;\;\delta_t(\theta)=\sum_j\hat{\omega_{jt}}(\theta)\leq 1.
\]
The sum $\delta_t(\theta)$ of column $t$'s components of $\hat{\Omega}$ is called target $t$'s \emph{detection indicator}. The filter's equations are obtained as in the single target's case.

\subsection{Model-based data association} \label{sec:model-based-data-association}

As anticipated in the Introduction, JPDA suffers from a number of drawbacks. When several features converge to the same region of space (a phenomenon called \emph{coalescence} \cite{Bloem95}), for instance, the algorithm cannot tell them apart anymore. The `{condensation}' algorithm is one technique that has been proposed to address this issue. Here we are bring forward a radically different solution to a slightly different problem: \emph{model-based} data association. 

Namely, we assume that the targets represent fixed positions on an articulated body, and that we know which pairs of markers are connected by a rigid link. Clearly, this information is equivalent to the knowledge of a \emph{topological model} of the articulated body, in the form of an undirected graph whose edges represent the rigid motion constraints coupling pairs of targets (see Figure \ref{fig:topological-model}). We can then exploit this \emph{a-priori} information to solve the association task in those critical situations in which several target points fall within the validation region of a single Kalman filter. This knowledge can be expressed as a set of {logical constraints} on the admissible relative positions of the markers, and consequently on those of the feature points.

\begin{figure}[ht!]
\begin{center}
\begin{tabular}{c}
\includegraphics[width = 0.8 \textwidth]{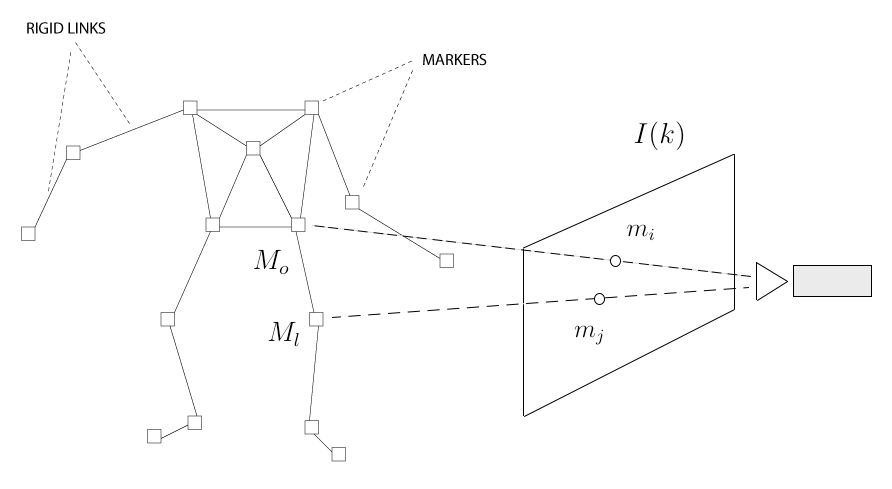}
\end{tabular}
\caption{Topological model of a human body: adjacency relations between pairs of markers are shown are undirected edges. \label{fig:topological-model}}
\end{center}
\end{figure}

A suitable environment in which to combine logical and probabilistic pieces of evidence, exploiting the information carried by the model, is naturally provided by evidential reasoning.\\ What exactly can be inferred from a topological model of the body? We can identify, for instance:
\begin{itemize}
\item a \emph{prediction} constraint, encoding the likelihood of a measurement in the current image being associated with
a measurement of the past image;
\item an \emph{occlusion} constraint, expressing the chance of a given marker $M_l$ of the model being occluded in the
current image;
\item a \emph{metric} constraint, representing what we know about the lengths of the various rigid links, lengths that can be learned from the history of past associations;
\item a \emph{topological} or \emph{rigid motion} constraint acting on pairs of markers linked by an edge in the topological model (e.g. $M_o$ and $M_l$ in Figure \ref{fig:topological-model}).
\end{itemize}
Now, all these constraints can be expressed as belief functions over a suitable frame of discernment. For instance, the metric one can be implemented by checking which pairs of feature points $(m_i,m_j),\;i,j=1,...,n(k)$ (where $n(k)$ is the number of observed features at time $k$) are at the same distance as any given pair $(M_o,M_l),\;o,l=1,...,N$ of model points, within a certain tolerance. 

For each rigid link $M_o - M_l$ the metric constraint can then be represented as a belief function $b$ on the frame of discernment:
\[
\Theta_{M_{o l}}^k = \big \{ (m_i,m_j),\;i,j=1,...,n(k) \big \}
\]
of all the pairs of observed feature points at time $k$, with basic probability assignment:
\begin{equation} \label{eq:metric}
m_b(A) = \left\{\begin{array}{ll} 1-p & A = \big \{ (m_i,m_j):\|m_i-m_j\|\simeq \|M_o - M_l \| \big \}\\ \\ p & A = \Theta_{M_{o l}}^k.
\end{array}
\right.
\end{equation}
Here $p$ is the probability of occlusion of at least one of $M_o$ and $M_l$, estimated by means of a statistic analysis of past model-to-feature associations. 

The likelihood values $\beta_i(k)$ generated by a battery of classical Kalman filters can be encoded in the same way: quantitative and logical pieces of information may all be combined in the framework of belief calculus.

\subsection{The family of the past-present associations frames}

\begin{figure}[ht!]
\begin{center}
\begin{tabular}{c}
\includegraphics[width = 0.85 \textwidth]{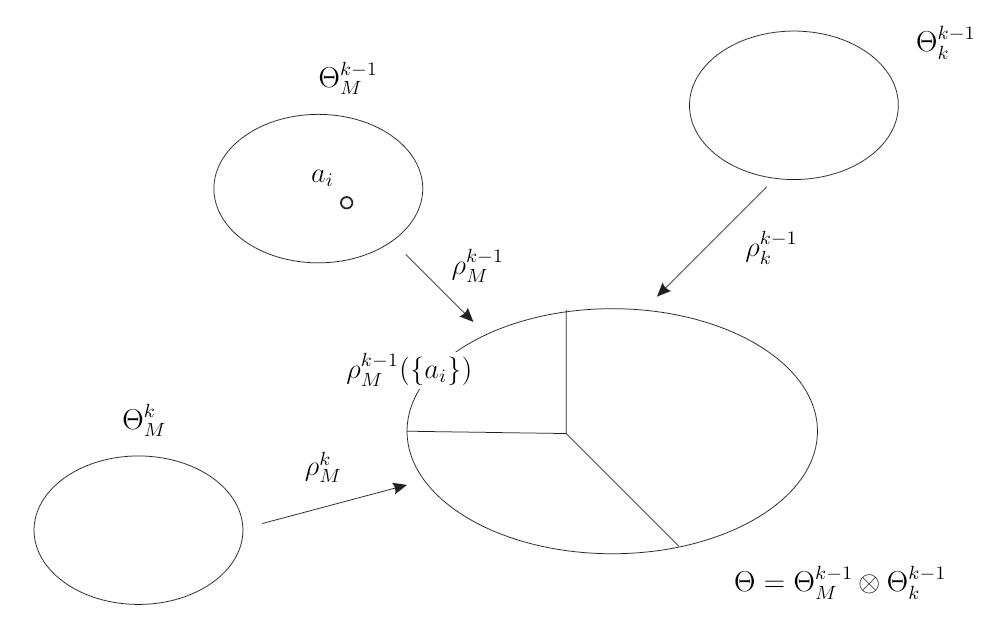}
\end{tabular}
\caption{\label{fig:totalbda} The family of past and present association frames. All the constraints of the model-based association problem are combined over the common refinement $\Theta$ and then re-projected onto the current association frame to yield a belief estimate of the current feature-to-model association.}
\end{center}
\end{figure}

By observing the nature of the constraints introduced above, we can note that the information carried by predictions of filters and occlusions inherently concerns \emph{associations between feature points belonging to consecutive images}, rather than points of the model. In fact, they are independent from any assumptions on the model of the underlying articulated body. Some constraints can be expressed \emph{instantaneously} in the frame of the \emph{current} feature-to-model associations: the metric constraint is a natural example. Some others, however, depend on the {model-to-measurement association} estimated at the previous step. This is the case for belief functions encoding information on the motion of the articulated body, which are expression of topological and rigid motion constraints. 

These three classes of belief functions are defined over distinct frames, elements of a family of compatible frames, representing \emph{past model-to-feature} associations
\[
\Theta_{M}^{k-1}\doteq \Big \{ m_i(k-1) \leftrightarrow M_j,\;\forall i=1,...,n(k-1)\;\forall j=1,...,M \Big \},
\]
\emph{feature-to-feature} associations
\[
\Theta_{k}^{k-1}\doteq \Big \{m_i(k-1) \leftrightarrow m_j(k),\;\forall i=1,...,n(k-1)\;\forall j=1,...,n(k) \Big \},
\]
and \emph{current model-to-feature} associations
\[
\Theta_{M}^{k}\doteq \Big \{ m_i(k) \leftrightarrow M_j,\;\forall i=1,...,n(k)\;\forall j=1,...,M \Big \},
\]
respectively. Note that the individual metric constraint (\ref{eq:metric}) is defined on a frame ($\Theta_{M_{o l}}^k$) which is a
coarsening of $\Theta_{M}^{k}$. 

As all these frames form a compatible family, all the available evidence can then be combined on their \emph{minimal refinement} (see Theorem \ref{the:minimal}), the \emph{combined association} frame $\Theta_{M}^{k-1} \otimes \Theta_{k}^{k-1}$ (Figure \ref{fig:totalbda}). Projecting the resulting belief function back onto the current association frame $\Theta_{M}^{k}$ produces the current best estimate.

A serious complication comes from the fact that, as we said, constraints of the third type can be expressed in a \emph{conditional} way only (i.e., given an estimate of the feature-to-model association at time $k-1$). Consequently, the computation of a belief estimate of the current feature-to-model association requires \emph{combining a set of conditional belief functions} \cite{Slobodova97,slobodova94conditional,kohlas88b}, induced by the conditional constraint on the combined association frame $\Theta = \Theta_{M}^{k-1} \otimes \Theta_{k}^{k-1}$. 

More precisely, the rigid motion constraint generates an entire set of belief functions $b_i : 2^{\rho_M^{k-1}(\{ a_i\})} \rightarrow [0,1]$, each defined over an element $\rho_M^{k-1}(\{a_i\})$ of the disjoint partition of $\Theta = \Theta_{M}^{k-1} \otimes \Theta_{k}^{k-1}$ induced on the combined frame by its coarsening $\Theta_{M}^{k-1}$ (see Figure \ref{fig:totalbda} again). Here $a_i \in \Theta_{M}^{k-1}$ is the $i$-th possible association at time $k-1$. 

In order for us to obtain a belief estimate, these conditional belief functions must be reduced to a single \emph{total belief function}, that is eventually pooled with those generated by all the other constraints.

\section{The total belief theorem} \label{sec:total-belief}

Let us now abstract from the data association problem and state the conditions an overall, total belief function $b$ must obey, given a set of conditional functions $b_i : 2^{\Pi_i} \rightarrow [0,1]$ over $N$ of the elements $\Pi_i$ of the partition $\Pi=\{\Pi_1,...,\Pi_{|\Omega|}\}$ of a frame $\Theta$ induced by a coarsening $\Omega$.\vspace{3mm}

\begin{enumerate}
\item \emph{A-priori constraint}: the restriction (\ref{eq:restriction}) on the coarsening $\Omega$ of the frame $\Theta$ of the candidate total belief function $b$ must coincide with a given \emph{a-priori} b.f. $b_0:2^\Omega \rightarrow [0,1]$.
\end{enumerate}\vspace{3mm}

In the data association problem, in particular, the \emph{a-priori} constraint is represented by the belief function encoding the estimate of the past feature-to-model association $M\leftrightarrow m(k-1)$, defined over $\Theta_{k}^{k-1}$ (Figure \ref{fig:totalbda}). It ensures that the total function is compatible with the last available estimate.\vspace{3mm}

\begin{enumerate} \addtocounter{enumi}{1}
\item \emph{Conditional constraint}: the belief function $b \oplus b_{\Pi_i}$ obtained by conditioning the total belief function $b$ with respect to each element $\Pi_i$ of the partition $\Pi$ must coincide with the corresponding given conditional belief function $b_i$:
\[
b \oplus b_{\Pi_i} = b_i \hspace{5mm} \forall i=1,...,N
\]
\end{enumerate}\vspace{3mm}
\noindent where $m_{\Pi_i} : 2^\Theta \rightarrow [0,1]$ is such that:
\begin{equation} \label{eq:pi-i}
m_{\Pi_i}(A)=\left\{\begin{array}{ll} 1 & A=\Pi_i\\ \\ 0 & A \subseteq \Theta, A\neq \Pi_i.
\end{array} \right.
\end{equation}

\subsection{Formulation} 

We can then formulate the generalization of the total probability theorem to the theory of belief functions -- the \emph{total belief theorem} -- as follows (Figure \ref{fig2}).

\begin{theorem} \label{the:total-belief}
\cite{Zhou2017uai} Suppose $\Theta$ and $\Omega$ are two frames of discernment, and $\rho:2^{\Omega}\rightarrow2^{\Theta}$ the unique refining between them. Let $b_0$ be a belief function defined over $\Omega = \{ \omega_1,...,\omega_{|\Omega|} \}$. Suppose there exists a collection of belief functions ${b_i} : 2^{\Pi_i} \rightarrow [0,1]$, where $\Pi = \{ \Pi_1,...,\Pi_{|\Omega|} \}$, $\Pi_i = \rho(\{\omega_i\})$, is the partition of $\Theta$ induced by its coarsening $\Omega$. 

Then, there exists a belief function $b : 2^{\Theta} \rightarrow [0,1]$ such that:
\begin{enumerate}
\item 
$b_0$ is the restriction of $b$ to $\Omega$, $b_0 = b|_{\Omega}$ (Equation (\ref{eq:restriction}), Chapter \ref{cha:toe});\\
\item 
$b \oplus b_{\Pi_i} = b_i$ $\forall i=1,...,|\Omega|$, where $ b_{\Pi_i}$ is the categorical belief function with b.p.a. (\ref{eq:pi-i});
\end{enumerate}
\end{theorem}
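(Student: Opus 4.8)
The plan is to work entirely with basic probability assignments and to construct the total mass function $m$ explicitly, then verify it satisfies both constraints. The key observation is that, by Dempster's rule of conditioning (Equation (\ref{eq:dempster-conditioning})), the condition $b \oplus b_{\Pi_i} = b_i$ forces the focal elements of $b$ that intersect $\Pi_i$ to be distributed, after intersecting with $\Pi_i$, exactly like the focal elements of $b_i$, up to the normalization factor $1 - b(\overline{\Pi_i})$. Dually, the a-priori constraint together with Equation (\ref{eq:restriction}) says that the mass $m$ restricted via inner reduction to $\Omega$ must reproduce $m_0$; concretely, for each focal element $B \subseteq \Omega$ of $b_0$, the total mass assigned to subsets $A$ of $\Theta$ with $\underline{\rho}(A) = B$ (equivalently, subsets $A$ that are unions of some of the blocks $\Pi_j$, $j$ ranging over $B$, and nothing else) must sum to $m_0(B)$.

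\textbf{Key steps.} First I would set up the bookkeeping: a candidate total b.f. $b$ has focal elements $A \subseteq \Theta$, and each such $A$ projects onto a focal element $\underline{\rho}(A) = \{\omega_j : \Pi_j \subseteq A\}$ of $b_0$ and, for each $i$ with $\omega_i \in \underline{\rho}(A)$ or more generally $A \cap \Pi_i \neq \emptyset$, contributes $A \cap \Pi_i$ to the focal structure of $b_i$. Second, I would restrict attention to the special case stated in the excerpt — where $b_0$ has \emph{disjoint} focal elements $B_1, \ldots, B_k$ (a partition-like structure on a subset of $\Omega$) — because then the problem decouples: the refined blocks $\{\Pi_j : \omega_j \in B_\ell\}$ group into $k$ super-blocks $\Gamma_\ell = \bigcup_{\omega_j \in B_\ell} \Pi_j$, and one can seek a total b.f. whose focal elements each lie inside a single $\Gamma_\ell$. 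Third, within each $\Gamma_\ell$ I would build the mass assignment by a matching/counting argument: we must produce focal elements $A \subseteq \Gamma_\ell$ with masses $m(A)$ so that (a) $\sum_{A \subseteq \Gamma_\ell} m(A) = m_0(B_\ell)$, and (b) for each $i$ with $\Pi_i \subseteq \Gamma_\ell$, the collection $\{A \cap \Pi_i : m(A) > 0\}$ with renormalized masses equals the b.p.a. of $b_i$. This is exactly the statement that a nonnegative matrix with prescribed ``marginal-like'' projections onto each $\Pi_i$ and prescribed total exists — and I expect to construct it by an explicit greedy/product construction (effectively taking $A$ to range over sets of the form $\bigcup_i C_i$ with $C_i$ a focal element of $b_i$, with mass a suitably normalized product), then check the two constraints hold by direct computation using $b(A) = \sum_{B \subseteq A} m(B)$ and the formula for $pl$.

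\textbf{Main obstacle.} The hard part will be showing that the constructed $m$ is genuinely a b.p.a. — i.e. that all masses are nonnegative and sum to one — \emph{and} that the conditional constraint is met \emph{exactly} rather than merely up to the right normalization. The normalization factors $1 - b(\overline{\Pi_i})$ appearing in Dempster's conditioning depend on $b$ itself, which is what we are constructing, so there is a fixed-point flavour: one must choose the total masses so that these induced normalizers come out consistent with the prescribed $b_i$. In the disjoint-focal-elements case this should be tractable because $b(\overline{\Pi_i})$ can be computed block by block, but in the general case (not attempted here) the coupling across overlapping focal elements of $b_0$ is precisely what makes a clean closed-form construction elusive. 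A secondary obstacle is the non-uniqueness: many total b.f.s satisfy the constraints, so if one wants the minimal-size solution (as flagged in the data-association motivation) an additional optimization argument over the graph of ``column transformations'' described in the excerpt would be needed; for the existence statement of Theorem~\ref{the:total-belief} itself, however, exhibiting one valid $b$ suffices, and the explicit product construction restricted to the disjoint case delivers it.
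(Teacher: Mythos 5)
Your construction is correct, and it is a genuinely different --- and in fact more complete --- route than the one the book takes. The book never actually proves the theorem: it derives structural constraints on the focal elements of a candidate $b$ (Lemmas \ref{lem:a-priori}--\ref{lem:number}), recasts the \emph{restricted} problem as finding a positive solution of an underdetermined linear system with the \emph{minimal} number of columns, and then only sketches an existence argument via column substitutions, explicitly conceding that a complete constructive proof is still missing. Your product construction sidesteps all of that precisely because it does not aim at minimality. Concretely: for each focal element $E_k$ of $b_0$ and each choice of one focal element $C_i$ of $b_i$ for every $i$ with $\omega_i \in E_k$, set $m\left(\bigcup_{i \in E_k} C_i\right) = m_0(E_k)\prod_{i \in E_k} m_i(C_i)$. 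Non-negativity and normalization are immediate; the restriction to $\Omega$ is $b_0$ because each such focal element has outer reduction exactly $E_k$ and the $m_i$'s each sum to one; and conditioning on $\Pi_i$ returns $b_i$ because the unnormalized mass of $e_i^j$ is $m_i(e_i^j)\sum_{k:\,\omega_i\in E_k} m_0(E_k) = m_i(e_i^j)\, pl_0(\{\omega_i\})$ while the Dempster normalizer $1-b(\overline{\Pi_i})$ is exactly $pl_0(\{\omega_i\})$, so the two cancel. This dissolves the ``fixed-point'' obstacle you flag as the hard part: the normalizer depends only on $b_0$, not on the unknown $b$. Notice, moreover, that nothing in this computation uses disjointness of the focal elements of $b_0$, so your construction proves the theorem in full generality (modulo the degenerate case $pl_0(\{\omega_i\})=0$, where the conditioning on $\Pi_i$ is undefined for \emph{any} admissible $b$ and the statement needs a caveat). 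What you give up relative to the book's machinery is minimality: your $b$ has $\prod_i n_i$ focal elements inside each $\rho(E_k)$ rather than the minimal $\sum_i(n_i-1)+1$, which is what the linear systems, transformable columns and solution graphs are really about --- but minimality is not part of the statement being proved.

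One slip to repair in your setup paragraph: the restriction constraint at the mass level involves the \emph{outer} reduction $\bar{\rho}(A)=\{\omega : \rho(\{\omega\})\cap A\neq\emptyset\}$, not the inner reduction, and the focal elements of $b$ are emphatically \emph{not} unions of whole blocks $\Pi_j$ --- if they were, every $A\cap\Pi_i$ would equal $\Pi_i$ and the conditional constraint would force each $b_i$ to be vacuous. The correct condition (Lemma \ref{lem:a-priori} together with Lemma \ref{lem:conditional}) is that each focal element of $b$ is contained in $\rho(E_k)$ and meets every block $\Pi_j$ with $\omega_j\in E_k$, intersecting it in a single focal element of $b_j$. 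Your actual construction satisfies exactly this, so the error does not propagate, but the parenthetical ``unions of some of the blocks \ldots and nothing else'' should be removed.
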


\begin{figure}[ht!]
\begin{center}
\includegraphics[width = 0.65 \textwidth]{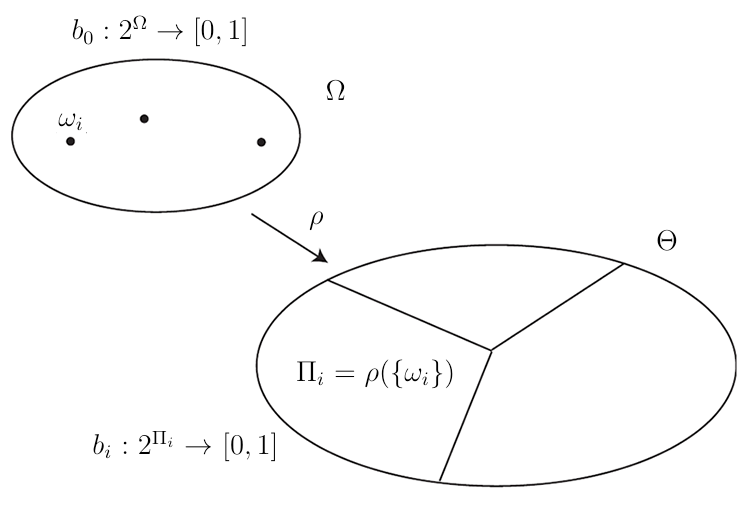} 
\caption{Pictorial representation of the total belief theorem hypotheses.\label{fig2}}
\end{center}
\end{figure}

\subsection{Effect of the a-priori constraint}

The \emph{a-priori} constraint induces an interesting condition on the focal elements of the total function $b$.

\begin{lemma} \label{lem:a-priori}
Let $\rho$ be the refining between $\Omega$ and $\Theta$, and denote by $e_{(.)}$ an arbitrary focal element of a valid total belief function $b$. Then, the inner reduction (\ref{eq:inner-reduction}) $\bar{\rho}(e_{(.)})$ of $e_{(.)}$ is a focal element of the \emph{a-priori} belief function $b_0$.\\ In other words, there exists a focal element $E_k \in \mathcal{E}_{b_0}$ of $b_0$ such that $e_{(.)}$ is a subset of $\rho(E_k)$ and all the projections $\rho(\omega)$, $\omega \in E_k$ of singleton elements of $E_k$ have non-empty intersections with $e_{(.)}$:
\[
\begin{array}{l}
\forall e_{(.)}\in{\mathcal{E}}_b \hspace{3mm} \exists E_k\in\mathcal{E}_{b_0} \hspace{3mm} s.t. \hspace{3mm} e_{(.)} \subset \rho({{\mathcal{E}}_k}) \hspace{3mm} \bigwedge \hspace{3mm} e_{(.)} \cap\rho(\omega) \neq \emptyset\;\forall \omega \in E_k.
\end{array}
\]
\end{lemma}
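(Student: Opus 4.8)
The goal is to show that for any focal element $e$ of a valid total belief function $b$, its inner reduction $\bar\rho(e)$ is a focal element of the a-priori belief function $b_0$. The plan is to start from the a-priori constraint $b_0 = b|_\Omega$ and unpack it through the mass-restriction formula (\ref{eq:restriction}) of Chapter \ref{cha:toe}, which relates $m_0$ on $\Omega$ to $m_b$ on $\Theta$ via $m_0(A) = \sum_{A = \bar\rho(B)} m_b(B)$ for $A \subseteq \Omega$, $B \subseteq \Theta$. The key observation will be that each term $m_b(e)$ with $e$ a focal element of $b$ contributes its mass to exactly one mass value $m_0(\bar\rho(e))$ on $\Omega$; since $m_b(e) > 0$ by hypothesis, the sum $m_0(\bar\rho(e))$ is strictly positive, hence $\bar\rho(e)$ is a focal element of $b_0$, i.e. $\bar\rho(e) \in \mathcal{E}_{b_0}$.

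First I would record the precise statement of the restriction relation and verify that $\bar\rho(e)$ is well-defined (the inner reduction $\bar\rho(A) = \{\omega \in \Omega : \rho(\{\omega\}) \cap A \neq \emptyset\}$ in the notation of (\ref{eq:outer-reduction}); note the excerpt's Lemma statement writes ``inner reduction'' but the displayed containment condition $e \cap \rho(\omega) \neq \emptyset$ for all $\omega$ in the reduced set is exactly the behaviour of the reduction that takes non-empty intersections, so I would use whichever of $\underline\rho, \bar\rho$ matches (\ref{eq:restriction}) and keep the notation consistent). Then I would set $E_k := \bar\rho(e)$ and check the two asserted properties directly from the definition of the reduction: (i) $e \subseteq \rho(E_k)$, because every singleton of $\Theta$ lying in $e$ belongs to some $\rho(\{\omega\})$ with $\omega \in \Omega$, and such an $\omega$ satisfies $\rho(\{\omega\}) \cap e \neq \emptyset$, hence $\omega \in E_k$, so that singleton lies in $\rho(E_k)$; and (ii) for every $\omega \in E_k = \bar\rho(e)$ we have $\rho(\{\omega\}) \cap e \neq \emptyset$ by the very definition of $\bar\rho$. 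These are essentially immediate from Proposition \ref{pro:refining} (the algebraic properties of refinings: $\rho$ preserves unions, complements, intersections, and inclusions).

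The substantive step is to show $E_k \in \mathcal{E}_{b_0}$, i.e. $m_0(E_k) > 0$. Here I would argue that in formula (\ref{eq:restriction}), $m_0(E_k) = \sum_{B : \bar\rho(B) = E_k} m_b(B)$, and since all the summands $m_b(B)$ are non-negative and $e$ itself is one of the subsets $B$ appearing in this sum (as $\bar\rho(e) = E_k$ by construction), we get $m_0(E_k) \geq m_b(e) > 0$. This is the crux; everything else is bookkeeping on refinings. The main obstacle I anticipate is a purely notational one: making sure the reduction used in the a-priori constraint (\ref{eq:restriction}) is the one whose fibre description $\{B : \bar\rho(B) = E_k\}$ genuinely contains $e$, and reconciling this with the Lemma's phrase ``inner reduction'' versus the intersection-based condition in its conclusion — I would resolve this by checking both reductions $\underline\rho, \bar\rho$ against the conclusion's explicit condition $e \cap \rho(\omega) \neq \emptyset$ and adopting the matching one throughout, noting that for a focal element $e$ of a \emph{valid} total belief function the two-sided constraints (a-priori plus conditional) will in fact force $\underline\rho(e) = \bar\rho(e)$, so the distinction is harmless. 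A secondary, minor point is to confirm that no focal element $e$ of $b$ can have $\bar\rho(e) = \emptyset$: this would require $e = \emptyset$, excluded since $b(\emptyset) = 0$.
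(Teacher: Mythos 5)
Your proof is correct and is essentially the intended argument: the paper omits the proof (deferring to Shafer's monograph), and the natural route is exactly the one you take --- set $E_k := \bar{\rho}(e)$, check the two set-theoretic conditions directly from the definition of the intersection-based reduction together with the fact that $\rho$ maps $\Omega$ onto a disjoint partition covering $\Theta$, and then obtain $m_0(E_k) \geq m_b(e) > 0$ from the restriction formula (\ref{eq:restriction}) imposed by the a-priori constraint $b_0 = b|_\Omega$. You are also right that the wording ``inner reduction (\ref{eq:inner-reduction})'' is a slip in the statement: both the symbol $\bar{\rho}$ appearing in (\ref{eq:restriction}) and the displayed condition $e_{(.)} \cap \rho(\omega) \neq \emptyset$ identify the outer reduction (\ref{eq:outer-reduction}) as the relevant map. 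One caveat: your parenthetical claim that the two-sided constraints force $\underline{\rho}(e) = \bar{\rho}(e)$ for a focal element of a valid total belief function is false --- Lemma \ref{lem:conditional} forces $e \cap \Pi_i$ to equal a focal element of $b_i$, in general a proper subset of $\Pi_i$, so $\underline{\rho}(e)$ is typically strictly smaller than $\bar{\rho}(e)$ (possibly empty). Fortunately that remark is not load-bearing: your argument works with the outer reduction throughout and stands without it.
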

The proof is rather straightforward, and can be found in \cite{Shafer76}. 

\subsection{Effect of conditional constraints and structure of the total focal elements}

Conditional constraints (2), on the other hand, provide the structure all the focal elements of the candidate total belief function $b$ must adhere to.\\ Let us denote by $e_{(.)}^k$ any focal element of $b$ which is a subset of $\rho({E}_k)$, where $E_k$ is again an arbitrary focal element of the a-priori b.f. $b_0$.

\begin{lemma} \label{lem:conditional}
Each focal element $e_{(.)}^k$ of a total belief function $b$ is the union of \emph{exactly one} focal element of each of the conditional belief functions whose domain $\Pi_i$ is a subset of $\rho(E_k)$, where $E_k$ is the smallest focal element of the a-priori belief function $b_0$ s.t. $e_{(.)}^k \subset \rho(E_k)$. Namely:
\begin{equation} \label{eq:structure}
\displaystyle e_{(.)}^k = \bigcup_{i : \Pi_i \subset \rho(E_k)} e_i^{j_i}
\end{equation}
where $e_i^{j_i}\in{\mathcal{E}}_{b_i}$ $\forall i.$
\end{lemma}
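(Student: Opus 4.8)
\textbf{Proof strategy for Lemma \ref{lem:conditional}.} The plan is to work out, for a fixed focal element $E_k$ of the a-priori belief function $b_0$, what the conditioning constraint $b \oplus b_{\Pi_i} = b_i$ forces on those focal elements $e$ of $b$ satisfying $e \subset \rho(E_k)$. The key computational tool is Dempster's rule of conditioning: conditioning $b$ on the categorical belief function $b_{\Pi_i}$ focused on $\Pi_i$ amounts to intersecting every focal element of $b$ with $\Pi_i$, discarding empty intersections, summing masses of coincident intersections, and renormalizing. So the focal elements of $b \oplus b_{\Pi_i}$ are exactly the nonempty sets of the form $e \cap \Pi_i$ for $e \in \mathcal{E}_b$. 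The conditional constraint says this family must equal $\mathcal{E}_{b_i}$, the focal elements of the given conditional belief function on $\Pi_i$.

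First I would fix $E_k$ minimal with $e_{(.)}^k \subset \rho(E_k)$ and recall from Lemma \ref{lem:a-priori} that $\bar\rho(e_{(.)}^k) = E_k$, i.e. $e_{(.)}^k$ meets $\rho(\omega) = \Pi_{(\omega)}$ for every $\omega \in E_k$ and meets no other $\Pi_i$. Thus $e_{(.)}^k$ decomposes as the disjoint union $\bigcup_{i : \Pi_i \subset \rho(E_k)} (e_{(.)}^k \cap \Pi_i)$, and each piece $e_{(.)}^k \cap \Pi_i$ is nonempty. By the conditioning computation above, $e_{(.)}^k \cap \Pi_i$ is a focal element of $b \oplus b_{\Pi_i} = b_i$, hence $e_{(.)}^k \cap \Pi_i = e_i^{j_i}$ for some $e_i^{j_i} \in \mathcal{E}_{b_i}$. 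This already gives $e_{(.)}^k = \bigcup_{i : \Pi_i \subset \rho(E_k)} e_i^{j_i}$, the union of \emph{at least} one focal element of each relevant conditional b.f.; the remaining point is that it is \emph{exactly} one — which is immediate here because the intersection $e_{(.)}^k \cap \Pi_i$ is a single subset of $\Theta$, so it is a single focal element of $b_i$, not a union of several. I would also note the converse direction implicit in the statement: for the family $\{e \cap \Pi_i : e \in \mathcal{E}_b,\, e \cap \Pi_i \neq \emptyset\}$ to exhaust $\mathcal{E}_{b_i}$, every focal element of $b_i$ must be hit, so the focal elements of $b$ lying inside $\rho(E_k)$ must collectively realize, via their $\Pi_i$-slices, all of $\mathcal{E}_{b_i}$ for each $i$ with $\Pi_i \subset \rho(E_k)$.

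The main obstacle I anticipate is bookkeeping rather than conceptual: one must be careful that conditioning $b$ on $\Pi_i$ does not pick up contributions from focal elements of $b$ that are \emph{not} subsets of a single $\rho(E_k)$ — i.e. focal elements straddling several a-priori focal elements. Lemma \ref{lem:a-priori} is exactly what rules this out: every focal element $e$ of $b$ has $\bar\rho(e)$ equal to some focal element of $b_0$, so $e \subset \rho(\bar\rho(e))$ and $e$ meets $\Pi_i$ only when $\Pi_i \subset \rho(\bar\rho(e))$. Hence when I compute $b \oplus b_{\Pi_i}$, the only focal elements of $b$ contributing are those whose inner reduction contains $\omega_i$, and the argument localizes cleanly to each block $\rho(E_k)$. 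A secondary subtlety is the renormalization constant in Dempster's rule of conditioning: I would check that it does not affect the \emph{set} of focal elements (it only rescales masses), which is all Lemma \ref{lem:conditional} asserts. With these two observations in place the decomposition (\ref{eq:structure}) follows directly, and a detailed verification of the mass identities can be deferred, since the lemma is a statement purely about the combinatorial structure of $\mathcal{E}_b$.
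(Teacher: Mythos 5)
Your proof is correct and follows essentially the same route as the paper's: Dempster conditioning on $b_{\Pi_i}$ forces each nonempty slice $e_{(.)}^k \cap \Pi_i$ to be a single focal element of $b_i$, and the minimality of $E_k$ (equivalently, Lemma \ref{lem:a-priori}) forces every such slice to be nonempty, so $e_{(.)}^k$ is the disjoint union of exactly one focal element per relevant $\Pi_i$. The extra bookkeeping you flag (straddling focal elements, the normalization constant) is handled implicitly in the paper but does not change the argument.
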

\begin{proof}
Since $b \oplus b_{\Pi_i} = b_i$, where $m_{\Pi_i}(\Pi_i) = 1$, by Dempster's rule it necessarily follows that:
\[
e^k_{(.)}\cap \Pi_i = e_i^{j_i}
\]
for some focal element $e_i^{j_i}$ of $b_i$. Furthermore, $e^k_{(.)} \cap \Pi_i$ must be non-empty for all $i$, for if there existed an integer $l$ such that $e^k_{(.)}\cap \Pi_l=\emptyset$ for some $\Pi_l$ we would have:
\[
\bar{\rho}(e^k_{(.)}) \subsetneq E_k,
\]
contradicting the assumption that $E_k$ is the smallest focal elements of $b_0$ whose image contains $e^k_{(.)}$.
\end{proof}

Note that (\ref{eq:structure}) is a union of \emph{disjoint} elements.\\ Lemma \ref{lem:conditional} is very important, for it describes the general structure of focal elements of a total belief function $b$. As represented in Figure \ref{fig:total-focal-elements}, each f.e. of $b$ can be seen as an `elastic band' covering a single focal element for each conditional b.f.s $b_i$.

\begin{figure}[ht!]
\begin{center}
\includegraphics[width = 0.7 \textwidth]{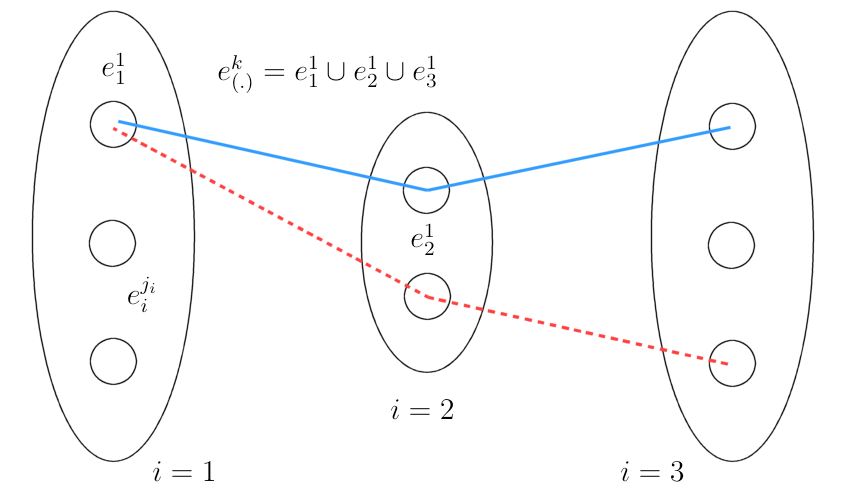} 
\caption{Pictorial representation of the structure of the focal elements of a total belief function $b$ lying in the image $\rho(E_k) = \Pi_1 \cup \Pi_2 \cup \Pi_3$ of a focal element of $b_0$ of cardinality 3. For each $i=1,2,3$ an admissible focal element $e^k_{(.)}$ of $b$ must be such that $e^k_{(.)} \cap \Pi_i = e_i^{j_i}$ for some $j_i$. Set-theoretical relationships between focal elements of the individual conditional b.f.s $b_i$ are irrelevant, and are not represented in this diagram. \label{fig:total-focal-elements}}
\end{center}
\end{figure}

It also determines a constraint on the minimum number of focal elements lying within the image of each focal element of the prior $b_0$ a total belief function $b$ must possess.

\begin{lemma} \label{lem:number}
Let $\rho : 2^\Omega \rightarrow 2^\Theta$ and let $E_k$ be a focal element of $b_0$.\\
The minimum number of focal elements $e^k_{(.)}$ of the total belief function $b$ which are subsets of $\rho({{E}}_k)$ is:
\[
n = \sum_{i=1,...,|{E}_k|}(n_i-1),
\]
where $n_i = |\mathcal{E}_{b_i}|$ is the number of focal elements of the $i$-th conditional belief function.
\end{lemma}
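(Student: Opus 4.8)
The plan is to combine the structural description of Lemma \ref{lem:conditional} with a counting argument showing that the focal elements of $b$ sitting inside $\rho(E_k)$ must, jointly, "cover" every focal element of each conditional $b_i$ with $\Pi_i \subset \rho(E_k)$, and that the cheapest way to do this uses exactly $\sum_i (n_i-1)$ of them. First I would set up notation: fix a focal element $E_k = \{\omega_{i_1},\dots,\omega_{i_m}\}$ of $b_0$ with $m = |E_k|$, write $\Pi_{i_1},\dots,\Pi_{i_m}$ for the corresponding blocks of the partition $\Pi$, and let $b_{i_1},\dots,b_{i_m}$ be the attached conditional belief functions with focal-element counts $n_{i_1},\dots,n_{i_m}$. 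Let $\mathcal{E}_b^{(k)} \doteq \{ e \in \mathcal{E}_b : \bar\rho(e) = E_k \}$ be the collection of focal elements of the (candidate) total belief function $b$ whose inner reduction is exactly $E_k$ — by Lemma \ref{lem:a-priori} every focal element of $b$ lands in exactly one such class, and by Lemma \ref{lem:conditional} each $e \in \mathcal{E}_b^{(k)}$ has the form $e = \bigcup_{t=1}^{m} e_{i_t}^{j_t}$ with $e_{i_t}^{j_t} \in \mathcal{E}_{b_{i_t}}$ a single focal element of $b_{i_t}$.

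The core of the argument is a surjectivity/covering requirement coming from the conditional constraint $b \oplus b_{\Pi_{i_t}} = b_{i_t}$. I would argue that for each $t$ and each focal element $e_{i_t}^{j}$ of $b_{i_t}$, there must exist at least one $e \in \mathcal{E}_b^{(k)}$ with $e \cap \Pi_{i_t} = e_{i_t}^{j}$: indeed, intersecting $b$ with $b_{\Pi_{i_t}}$ and applying Dempster's rule (Definition \ref{def:dempster}) shows that the mass $m_{b_{i_t}}(e_{i_t}^{j}) > 0$ can only arise as a (normalized) sum of masses $m_b(e)$ over focal elements $e$ of $b$ with $e \cap \Pi_{i_t} = e_{i_t}^{j}$, and focal elements of $b$ outside $\rho(E_k)$ cannot contribute an intersection equal to $e_{i_t}^j$ unless their inner reduction also contains $\omega_{i_t}$ — a case that can be absorbed into the same counting or excluded by minimality. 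Hence the map $e \mapsto e \cap \Pi_{i_t}$ from $\mathcal{E}_b^{(k)}$ to $\mathcal{E}_{b_{i_t}}$ is onto, for every $t$. This gives $m$ simultaneous surjectivity conditions on the single finite set $\mathcal{E}_b^{(k)}$.

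Next I would translate this into the lower bound. Think of each $e \in \mathcal{E}_b^{(k)}$ as an $m$-tuple $(j_1,\dots,j_m) \in \{1,\dots,n_{i_1}\} \times \cdots \times \{1,\dots,n_{i_m}\}$ recording which focal element of each $b_{i_t}$ it picks up. The surjectivity conditions say that the set of chosen tuples must hit every value in every coordinate. The minimum number of tuples needed to hit all $n_{i_t}$ values in coordinate $t$ for all $t$ simultaneously is $\max_t n_{i_t}$ if we were free to choose tuples arbitrarily — but here there is an extra rigidity: the union $\bigcup_t e_{i_t}^{j_t}$ must be a genuine focal element of $b$, and distinct focal elements of $b$ are distinct \emph{sets}, hence distinct tuples. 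I need to show the true minimum is $\sum_t (n_{i_t} - 1)$. The upper bound (construction) is the easy half: start from one "base" tuple $(1,1,\dots,1)$, then for each coordinate $t$ and each value $j = 2,\dots,n_{i_t}$ add the tuple that agrees with the base tuple except in coordinate $t$ where it equals $j$; this is $1 + \sum_t (n_{i_t}-1)$ tuples — wait, this overshoots by one, so instead I would reuse the base tuple itself as the "$j=1$" witness in coordinate $t=1$ and only add $n_{i_1}-1$ new ones for $t=1$, then $n_{i_t}-1$ for $t\geq 2$, total $\sum_t(n_{i_t}-1)$ after a careful bookkeeping. The lower bound is where I expect the main obstacle: I must rule out any cleverer packing, i.e. show no family of fewer than $\sum_t(n_{i_t}-1)$ distinct tuples can be coordinatewise-surjective onto $\prod_t \{1,\dots,n_{i_t}\}$ subject to the set-distinctness constraint. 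The clean way is a "degrees of freedom" count: a coordinatewise-surjective family $F$ with $|F| = N$ determines, for each coordinate $t$, a surjection $F \twoheadrightarrow \{1,\dots,n_{i_t}\}$; summing the identity $\sum_{j} (\text{fiber size} - 1) = N - n_{i_t} \geq 0$ over $t$ and then combining with the observation that the fibers cannot all be singletons in every coordinate simultaneously (else $N = n_{i_t}$ for all $t$ and the tuples would coincide when some $n_{i_t}=1$), one is forced to $N \geq \sum_t (n_{i_t} - 1)$ after handling the edge case $n_{i_t} = 1$ (a conditional with a single focal element contributes nothing, consistent with the $(n_i-1)$ term vanishing). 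I would present the lower bound as an induction on $m$: for $m=1$ it is immediate ($N \geq n_{i_1}$, but the claim for $m=1$ reads $N \geq n_{i_1}-1$, which is weaker and also follows — here the $-1$ slack is genuine and will need the remark that when $m=1$ the block $\Pi_{i_1}$ coincides with $\rho(E_k)$ so one focal element can be "shared" with a neighbouring block, exactly as in Lemma \ref{lem:conditional}'s elastic-band picture), and the inductive step peels off one block, uses the induction hypothesis on the remaining $m-1$ blocks, and charges the $n_{i_m}-1$ extra tuples needed to cover the new coordinate. Finally I would assemble: summing the per-block minima $\sum_{t}(n_{i_t}-1)$ and re-indexing over $i = 1,\dots,|E_k|$ gives exactly $n = \sum_{i=1,\dots,|E_k|}(n_i - 1)$, completing the proof.
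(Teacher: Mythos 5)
Your proposal correctly sets up the structure (each focal element of $b$ inside $\rho(E_k)$ is an $m$-tuple of focal elements, one per conditional $b_{i_t}$, and surjectivity of $e \mapsto e\cap\Pi_{i_t}$ onto $\mathcal{E}_{b_{i_t}}$ is forced by $m_{b_{i_t}}(e_{i_t}^j)>0$), but the lower-bound step fails because the combinatorial statement you are trying to prove is false. The minimum size of a coordinatewise-surjective family of \emph{distinct} tuples in $\prod_t\{1,\dots,n_{i_t}\}$ is $\max_t n_{i_t}$, not $\sum_t(n_{i_t}-1)$: for $m=2$, $n_1=n_2=3$, the diagonal family $(1,1),(2,2),(3,3)$ consists of three distinct tuples and hits every value in every coordinate, whereas $\sum_t(n_{i_t}-1)=4$. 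You flag an ``extra rigidity,'' but the only rigidity you invoke is distinctness of the tuples, and the diagonal already satisfies that. Consequently your ``degrees of freedom'' count and the proposed induction on $m$ cannot close the gap --- no purely set-theoretic covering argument can, because covering alone genuinely admits families of size $\max_t n_{i_t}$.

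The missing ingredient is the \emph{mass} constraint, which is where the paper's proof actually lives. Each conditional constraint $b\oplus b_{\Pi_i}=b_i$ is, via Dempster's rule, a system of linear equations $\sum_{e\cap\Pi_i=e_i^j} m_b(e) \propto m_{b_i}(e_i^j)$, $j=1,\dots,n_i$, of which exactly $n_i-1$ are independent (the last follows from normalization of $m_{b_i}$). Since these $\sum_i(n_i-1)$ equations are linearly independent as functionals of the unknown masses (cf.\ Lemma \ref{lem:full-rank}) and the right-hand sides $m_{b_i}(e_i^j)$ are arbitrary positive numbers, a solution can exist for all admissible conditional b.p.a.s only if the number of unknowns --- i.e.\ of focal elements of $b$ inside $\rho(E_k)$ --- is at least the number of independent equations; a system whose coefficient matrix has fewer columns than its row rank is inconsistent for generic right-hand sides. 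This is why the diagonal family, though combinatorially admissible, is ruled out: with $\max_t n_{i_t}$ unknowns one would need $m_{b_1}(e_1^j)=m_{b_2}(e_2^j)=\cdots$ for all $j$, which fails for generic conditionals. Your upper-bound construction (a base tuple plus one-coordinate perturbations) is essentially the right witness that $\sum_i(n_i-1)$ suffices, so repairing the proof only requires replacing the covering lower bound by this rank-versus-unknowns count.
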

\begin{proof}
Let us call $e_i^j$ the $j$-th focal element of $b_i$. Since $b \oplus \Pi_i = b_i$ for all $i$, we have that, by definition of Dempster's rule:
\begin{equation} \label{eq:lemma-number}
m_{b_i} (e_i^j) = \frac{\sum_{e^k_{(.)} \cap \Pi_i = e_i^j} m_b (e^k_{(.)}) \cdot 1 }{Z},
\end{equation}
where $Z$ is just a normalization factor. The minumum number of focal elements of $b$ inside $\rho(E_k)$ is then equal to the number of contraints of the form (\ref{eq:lemma-number}) imposed by all the conditional b.f.s with domain within $\rho(E_k)$. For each $i$ Dempster's sum (\ref{eq:lemma-number})  enforces $n_i -1$ constraints, for the $n_i$-th is a linear combination of the other due to the normalization constraint acting on the focal elements of $b_i$.\\ The thesis easily follows.
\end{proof}
Note that if $E_k$ is the only focal element of the prior b.f. $b_0$, the usual normalization constraint (this time acting on the focal element \emph{of $b$}) needs to be added, setting the minimum number of focal elements of $b$ to: 
\[
n = \sum_{i=1,...,|{E}_k|}(n_i-1) + 1.
\]

\section{The restricted total belief theorem} \label{sec:restricted-total-belief}

If we enforce the \emph{a-priori} function $b_0$ to have only \emph{disjoint} focal elements (i.e., $b_0$ to be the vacuous extension of a Bayesian function defined on some coarsening of $\Omega$), we have what we call the \emph{restricted total belief theorem}. This is the case, for instance, of the data association problem illustrated above. There, the prior b.f. $b_0$ is usually a simple support function whose core contains only a few disjoint focal elements.\\ In this special case it suffices to solve the $K = |\mathcal{E}_{b_0}|$ sub-problems obtained by considering each focal element $E_k$ of $b_0$ \emph{separately}, and then combine the resulting partial solutions by simply weighing the resulting basic probability assignments using the a-priori mass $m_{b_0}(E_k)$, to obtain a fully normalized total belief function. 


As we will see in the following, for each individual focal element of $b_0$ the task of finding a suitable solution to the total belief problem translates into a linear algebra problem.

\subsection{A simple case study} \label{sec:case-study}


Suppose that the considered focal element $E$ of $b_0$ has cardinality three, so that its image $\rho(E)$ covers three partitions $\Pi_1$, $\Pi_2$ and $\Pi_3$ of $\Omega$. Suppose also that: the conditional belief function $b_1$ defined on $\Pi_1$ has two focal elements $e_1^1$ and $e_1^2$; the conditional b.f. $b_2$ defined on $\Pi_2$ has a single focal element $e_2^1$; $b_3$ on $\Pi_3$ has two focal elements, $e_3^1$ and $e_3^2$ (see Figure \ref{fig:case-study}).

\begin{figure}[ht!]
\begin{center}
\includegraphics[width = 0.6 \textwidth]{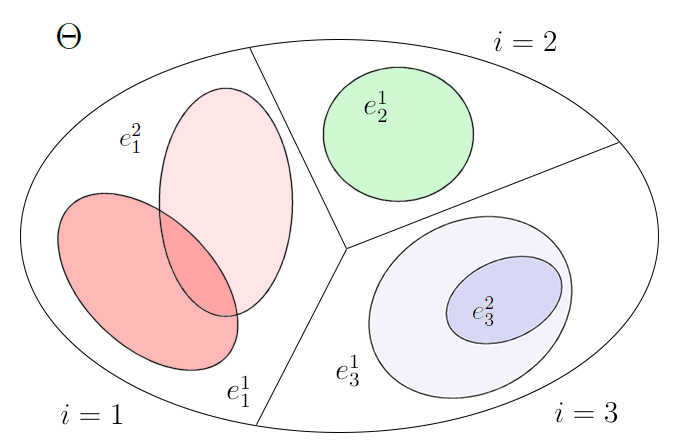} 
\caption{The conditional belief functions considered in our case study. Once again the set-theoretical relations between their focal elements are immaterial to the study's solution. \label{fig:case-study}}
\end{center}
\end{figure}

Clearly in this example $n_1 = 2$, $n_2 = 1$ and $n_3 = 2$. The number of possible focal elements which satisfy the structure proven in Lemma \ref{lem:conditional} is therefore $n_{max} = n_1 \times n_2 \times n_3 = 4$. They are listed as follows:
\begin{equation} \label{eq:fe-case-study}
\begin{array}{lllllll}
e_1 & = & e_1^1 & \cup & e_2^1 & \cup & e_3^1; \\
e_2 & = & e_1^1 & \cup & e_2^1 & \cup & e_3^2; \\
e_3 & = & e_1^2 & \cup & e_2^1 & \cup & e_3^1; \\
e_4 & = & e_1^2 & \cup & e_2^1 & \cup & e_3^2. 
\end{array}
\end{equation}
In order to meet the conditional constraints of the total belief problem, the total belief function $b$ with focal elements $\{ e_1, e_2, e_3, e_4\}$ ought to satisfy the following equalities:
\begin{equation} \label{eq:conditional-constraints-case-study}
\left \{ \begin{array}{l} b \oplus b_{\Pi_1} = b_1; \\ b \oplus b_{\Pi_2} = b_2; \\ b \oplus b_{\Pi_3} = b_3,  \end{array} \right .
\end{equation}
which translate into the following three sets of constraints:
\[
\begin{array}{ccc}
\left \{ \begin{array}{l} m_1(e_1^1) = m(e_1) + m(e_2) \\ m_1(e_1^2) = m(e_3) + m(e_4) \end{array} \right .
&
\left \{ \begin{array}{l} m_2(e_2^1) = \sum_i m(e_i) = 1 \end{array} \right .
&
\left \{ \begin{array}{l} m_3(e_3^1) = m(e_1) + m(e_3) \\ m_3(e_3^2) = m(e_2) + m(e_4), \end{array} \right .
\end{array}
\]
where $m_i$ denotes the b.p.a. of $b_i$ and $m$ that of the candidate total function $b$.

Now, the last constraint in each set is a direct consequence of (or is equal to) the normalization equality $m(e_1) + m(e_2) + m(e_3) + m(e_4) = 1$. Therefore, the conditional constraints (\ref{eq:conditional-constraints-case-study}) amount in the end to the following linear system:
\begin{equation} \label{eq:system-case-study}
\left \{ \begin{array}{l} m(e_1) + m(e_2) = m_1(e_1^1)  \\ m(e_1) + m(e_3) = m_3(e_3^1) \\ m(e_1) + m(e_2) + m(e_3) + m(e_4) = 1. \end{array} \right .
\end{equation}
As the latter is underdetermined, we have an infinite plurality of solutions in the vector:
\[
\vec{x} = [m(e_1), m(e_2), m(e_3), m(e_4) ]', 
\]
which form an entire linear variety of normalized sum functions (see Chapter \ref{cha:geo}, Section \ref{sec:normalized-sum-functions}). Possibly, some of these solutions will have all positive components, i.e., they will correspond to admissible belief functions. In particular, we are interested in solutions with the \emph{minimal} number of focal element, in this case $n = 3$. Note that this confirms the result of Lemma \ref{lem:number}, as $1 + \sum_i (n_1 - 1) = 1 + 1 + 0 +1 = 3$ (taking into account the normalization constraint).

System (\ref{eq:system-case-study}) can be written as $A \vec{x} = \vec{b}$, where $\vec{b} = [m_1(e_1^1), m_3(e_3^1), 1]'$ and:
\[
A = \left [ \begin{array}{cccc} 1 & 1 & 0 & 0 \\ 1 & 0 & 1 & 0 \\ 1 & 1 & 1 & 1   \end{array} \right ].
\]
The matrix has full row rank 3, as rows are all linearly independent.
By selecting any three columns from $A$, then, we obtain a linear system with a unique solution. There are $\binom{4}{3} = 4$ possible column selections, which yield the following matrices for the resulting four linear systems:
\[
\begin{array}{cccc}
\left [ \begin{array}{ccc} 1 & 1 & 0 \\ 1 & 0 & 1 \\ 1 & 1 & 1 \end{array} \right ],
&
\left [ \begin{array}{ccc} 1 & 1 & 0 \\ 1 & 0 & 0 \\ 1 & 1 & 1 \end{array} \right ],
&
\left [ \begin{array}{ccc} 1 & 0 & 0 \\ 1 & 1 & 0 \\ 1 & 1 & 1 \end{array} \right ],
&
\left [ \begin{array}{ccc} 1 & 0 & 0 \\ 0 & 1 & 0 \\ 1 & 1 & 1 \end{array} \right ],
\end{array} 
\]
whose solutions are, respectively:
\[
\begin{array}{cc}
\left \{ 
\begin{array}{l}
m(e_1) = m_1(e_1^1) + m_3(e_3^1) - 1 \\
m(e_2) = 1 - m_3(e_3^1) \\
m(e_3) = 1 - m_1(a_1^1),
\end{array}
\right .
&
\left \{ 
\begin{array}{l}
m(e_1) = m_3(e_3^1) \\
m(e_2) = m_1(e_1^1) - m_3(e_3^1)  \\
m(e_4) = 1 - m_1(e_1^1) ,
\end{array}
\right .
\\ \\
\left \{ 
\begin{array}{l}
m(e_1) = m_1(e_1^1) \\
m(e_3) = m_3(e_3^1) - m_1(e_1^1) \\
m(e_4) = 1 - m_3(e_3^1),
\end{array}
\right .
&
\left \{ 
\begin{array}{l}
m(e_2) = m_1(e_1^1) \\
m(e_3) = m_3(e_3^1) \\
m(e_4) = 1 - m_1(e_1^1) - m_3(e_3^1).
\end{array}
\right .
\end{array} 
\]
We can notice a number of facts:
\begin{enumerate}
\item
minimal solutions can have negative components, i.e, amount to normalized sum function rather than proper belief functions;
\item
nevertheless, there always exists a solution with all positive components, i.e., a proper total belief function.
\end{enumerate}
As for 1), looking at the first candidate minimal solution we can notice that the first component $m(e_1) = m_1(e_1^1) + m_3(e_3^1) - 1$ is not guaranteed to be non-negative: therefore, it will yield an admissible belief function only if $m_1(e_1^1) + m_3(e_3^1) < 1$.
However (Point 2)), we can notice that in the latter case, the fourth candidate minimal solution is admissible, as $m(e_4) = 1 - m_1(e_1^1) - m_3(e_3^1) > 0$.

Similarly, whenever the second solution is non-admissible ($m_1(e_1^1) - m_3(e_3^1) < 0$) the third one is ($m_3(e_3^1) - m_1(e_1^1) > 0$). In conclusion, no matter what the actual b.p.a.s of $b_1$, $b_2$ and $b_3$ are, there always exists an admissible total b.f.

In the following we will work towards proving that this is the case in the general setting as well.

\subsection{Candidate minimal solution systems} \label{sec:candidate-solution-systems}

In the general case, let $N$ be the number of singleton elements of a given focal element $E$ of $b_0$ (the number of partition elements of $\Theta$ covered by $\rho(E)$). From the proof of Lemma \ref{lem:number}, an in particular by Equation (\ref{eq:lemma-number}), a candidate solution to the restricted total belief problem (more precisely, to the subproblem associated with $E$) is the solution to a linear system with $n_{min} = \sum_{i=1,...,N}(n_i - 1) + 1$ equations and $n_{max} = \prod_i n_i$ unknowns:
\begin{equation} \label{eq:candidate-solution}
A \vec{x} = \vec{b},
\end{equation}
where each column of $A$ is associated with an admissible (i.e., meeting the structure of Lemma \ref{lem:conditional}) focal element $e_j$ of the candidate total belief function, $\vec{x} = [m_b (e_1), \cdots, m_b(e_n)]$ and $n = n_{min}$ is the number of equalities generated by the $N$ conditional constraints.

Each solution system has the form:
\begin{equation} \label{eq:solution-system}
\left \{ 
\begin{array}{ll}
\displaystyle \sum_{e_j \cap \Pi_i  = e_i^{j_i}} m(e_j) = m_i(e_i^{j_i}) & \forall i=1,...,N, \; \forall j_i = 1,..., n_i -1
\\
\displaystyle \sum_{j} m_b (e_j) = 1.
\end{array} 
\right .
\end{equation}
where, again, $e_i^{j_i} \subset \Pi_i$ denotes the $j_i$-th focal element of $b_i$.

Since it is straightforward to prove that
\begin{lemma} \label{lem:full-rank}
The rows of the solution system (\ref{eq:solution-system}) are linearly independent.
\end{lemma}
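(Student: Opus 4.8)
The plan is to exploit the very explicit combinatorial structure of the coefficient matrix $A$ of the solution system (\ref{eq:solution-system}). By Lemma \ref{lem:conditional}, every admissible focal element $e_j$ of a candidate total belief function corresponds bijectively to a tuple $(k_1,\dots,k_N)\in\prod_{i=1}^N\{1,\dots,n_i\}$, where $k_i$ records the focal element $e_i^{k_i}$ of the $i$-th conditional belief function that $e_j$ meets $\Pi_i$ in. I would index the columns of $A$ by these tuples and the rows by the pairs $(i,j_i)$ with $1\le i\le N$, $1\le j_i\le n_i-1$, together with one extra ``normalization'' row. With this labelling the entry of $A$ in row $(i,j_i)$ and column $(k_1,\dots,k_N)$ is $1$ if $k_i=j_i$ and $0$ otherwise, while the normalization row is the all-ones vector.

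The key observation is that in any column $(k_1,\dots,k_N)$, for each fixed $i$ there is \emph{at most one} row $(i,j_i)$ with a nonzero entry, namely $j_i=k_i$, and only when $k_i\le n_i-1$. Hence a vanishing linear combination $\sum_{i,j_i}\lambda_{i,j_i}\,(\text{row }(i,j_i))+\mu\,(\text{normalization row})=\vec 0$ reduces, column by column, to the scalar identities
\[
\sum_{i\,:\,k_i\le n_i-1}\lambda_{i,k_i}+\mu=0\qquad\text{for every }(k_1,\dots,k_N).
\]
First I would evaluate this at the column $(n_1,n_2,\dots,n_N)$, where the sum on the left is empty, to conclude $\mu=0$. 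Then, for each $i$ and each $j_i\le n_i-1$, I would evaluate it at the column picking $k_i=j_i$ and $k_\ell=n_\ell$ for all $\ell\ne i$, obtaining $\lambda_{i,j_i}+\mu=\lambda_{i,j_i}=0$. Since this exhausts every coefficient, the rows are linearly independent.

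There is essentially no hard step here: the argument is a direct back-substitution made possible by the product structure of the admissible focal elements. The only points requiring care are bookkeeping. One must check that the column indexed by the tuple of ``last'' focal elements $(n_1,\dots,n_N)$ is indeed an admissible focal element in the sense of Lemma \ref{lem:conditional} (it is, since admissibility requires only choosing one focal element from each $b_i$ with $\Pi_i\subset\rho(E)$, with no compatibility condition between the choices), and one must keep straight that the $n_i$-th conditional equation has deliberately been discarded for each $i$ in (\ref{eq:solution-system}) — precisely because, were it retained for even one $i$, it would force $\sum_j m_b(e_j)=1$ and make the normalization row redundant, whereas after the discards the normalization row is genuinely independent, exactly as the evaluation at $(n_1,\dots,n_N)$ shows. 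I would also note the degenerate case $N=1$, where the surviving rows are the $n_1-1$ equations $m_b(e_1^{j_1})=m_1(e_1^{j_1})$ plus normalization, and the same evaluation argument applies verbatim.
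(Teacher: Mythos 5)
Your proof is correct and follows essentially the same idea as the paper's: both arguments exhibit, for each row of the $0/1$ incidence matrix, a witness column that isolates that row from the ones it must be distinguished from. Your version is in fact more explicit than the one-line proof in the text, since you handle the normalization row and the interaction between constraint blocks for different $i$ directly, via the columns $(n_1,\dots,n_N)$ and those with $k_i=j_i$, $k_\ell=n_\ell$ for $\ell\neq i$.
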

\begin{proof} 
It suffices to point out that each new constraint (row of $A$) involves candidate focal elements that are not involved in the constraints preceeding it (for if $e_j \cap \Pi_i  = e_i^{j_i}$ then obviously $e_j \cap \Pi_i  \neq e_i^{k}$ for all $k<j_i$).
\end{proof}
any system of equation obtained by selecting $n_{min}$ columns from $A$ has a unique solution.

A \emph{minimal} solution to the restricted total belief problem (\ref{eq:solution-system}) (i.e., a solution with the minimum number of focal elements) is then uniquely determined by the solution of a system of equations obtained by selecting $n_{min}$ columns from the $n_{max}$ columns of $A$. Additionally, we need to look for minimal solutions which are admissible belief functions, i.e., we have to identify a selection of $n_{min}$ columns from $A$ such that the resulting square linear system has a solution with all positive components.

\subsection{Transformable columns} \label{sec:transformable-columns}

Consider an arbitrary minimal candidate solution system, obtained by choosing $n_{min}$ elements from the set of columns of $A$. As we have seen in our case study, some columns may potentially correspond to \emph{negative} components of the solution.\\ Nevertheless, the particular form of the square linear systems involved suggests a way to reach an admissible solution by applying a series of linear transformations (or, equivalently, a series of column substitutions) which may eventually lead to a solution whose components are all positive.

Namely, each row of the solution system (\ref{eq:solution-system}) enforces the sum of the masses of the selected focal elements of $b$ to be positive (as $ m_i(e_i^{j_i}) > 0$ for all $i, j_i$). Therefore, whenever $m(e_k) < 0$ for some component $k$ of the solution there must exist for all $i$ at least another focal element $e_{l_i}$ of the total belief function which coincides with $e_k$ over the related partition element $\Pi_i$:
\[
e_k \cap \Pi_i = e_{l_i} \cap \Pi_i.
\]
In other words, looking at the $A$ matrix of the candidate minimal solution system, whenever a column possesses a `1' in a certain row there is at least one (but possibly more) other column with a `1' in the same row.

We say that $e_k$ is a \emph{transformable} column, and call such columns $e_{l_i}$ the `companions' of $e_k$.

\subsubsection{Case study}

Going back to the case study of Section \ref{sec:case-study}, the four possible total focal elements can be represented as `elastic bands' (see Figure \ref{fig:total-focal-elements}) as in the following diagram:

\begin{figure}[ht!]
\begin{center}
\includegraphics[width = 0.65 \textwidth]{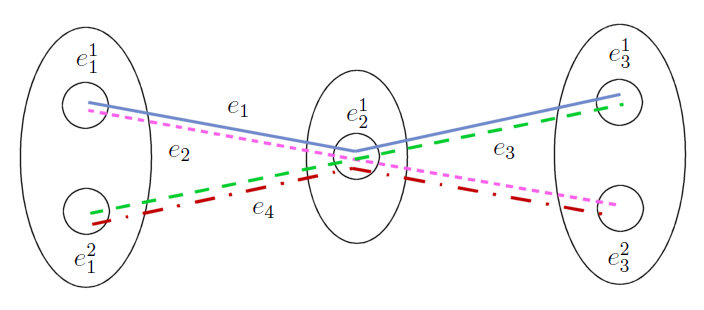}
\caption{\label{fig:solution-case-study} Graphical representation of the four possible focal elements (\ref{eq:fe-case-study}) of the case study of Section \ref{sec:case-study}.}
\end{center}
\end{figure}
Graphically, the `companions' of a focal element $e_k$ on a partition element $i$ are those $e_j$s which cover on that partition the same node (focal element of $b_i$). For instance, the companions of $e_1$ are $e_2$ (on $\Pi_1$), $e_2,$ $e_3$ and $e_4$ (on $\Pi_2$) and $e_3$ (on $\Pi_3$). Clearly a focal element can be a companion for another on several partitions of $\Theta$ -- we will discuss this point later.

Now, if we select $e_1$, $e_2$ and $e_4$ (first, second and fourth column of the complete solution system) to form a candidate minimal solution system, we can notice that $e_1$ is not `covered' for all $i$ (namely on $\Pi_3$), and therefore cannot correspond to a negative solution component (is not `transformable'). The same is true for $e_4$. The only transformable column is $e_2$.

As we suggested above, we can actually replace the transformable column $e_2$ with another one (namely $e_3$) by applying the following linear transformation:
\[
e_2 \mapsto e'_2 = - e_2 + (e_1 + e_4) = \left [ \begin{array}{c} -1 +1 +0 \\ 0 +1 +0 \\ -1 +1 +1 \end{array} \right ] = \left [ \begin{array}{c} 0 \\ 1 \\ 1 \end{array} \right ] = e_3,
\]
in which the column to be replaced is subtracted, while its companions are added in to yield another column corresponding to an admissible focal element. We will study the effect of such a transformation on the solution of a candidate minimal system in the following.

\subsection{A class of linear transformations}

\begin{definition} \label{def:column-transformation}
We define a class $\mathcal{T}$ of transformations acting on transformable columns $e$ of a candidate minimal solution system via the following formal sum:
\begin{equation} \label{eq:column-transformation}
e \mapsto e' = -e + \sum_{i\in \mathcal{C}} e_i -\sum_{j\in\mathcal{S}} e_j
\end{equation}
where $\mathcal{C}$, $|\mathcal{C}|<N$ is a covering set of companions of $e$ (i.e.,  every component of $e$ is covered by at least one of them), and a number of \emph{selection} columns $\mathcal{S}$, $|\mathcal{S}|=|\mathcal{C}|-2$, are employed to compensate the side effect of $\mathcal{C}$ to yield an admissible column (i.e., a candidate focal element meeting the structure of Lemma \ref{lem:conditional}).
\end{definition}
We call the elements of $\mathcal{T}$ \emph{column substitutions}.

A sequence of column substitutions induces a discrete path in the solution space: the values of the solution components associated with each column vary, and in a predictable way. If we denote by $s<0$ the (negative) solution component associated with the old column $e$:
\begin{enumerate}
\item the new column $e'$ has as solution component $-s>0$;
\item the solution component associated with each companion column decreases by $|s|$;
\item the solution component associated with each selection increases by $|s|$;
\item all other columns retain the old values of their solution components.
\end{enumerate}
The proof is a direct consequence of the linear nature of the transformation (\ref{eq:column-transformation}).

Clearly, if we choose to substitute the column with the most negative solution component, the overall effect is that: the most
negative component is changed into a positive one; components associated with selection columns become more positive (or less negative); as for companion columns, while some of them may end up being assigned negative solution components, in absolute value these will be smaller than $|s|$ (since their initial value was positive). Hence:

\begin{theorem} \label{the:column-transformation}
Column substitutions of the class $\mathcal{T}$ reduce the absolute value of the most negative solution component.
\end{theorem}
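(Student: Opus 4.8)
\textbf{Proof strategy for Theorem \ref{the:column-transformation}.}

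The plan is to track exactly how each component of the unique solution of a minimal candidate system changes under one column substitution from the class $\mathcal{T}$, and then to observe that the single most negative component is mapped to a strictly positive one while no other component can overtake it in absolute value. First I would set up notation: let $A$ be the $n_{min} \times n_{min}$ matrix of the current candidate minimal solution system, obtained by selecting $n_{min}$ admissible columns; let $\vec{x}$ be its unique solution (guaranteed unique by Lemma \ref{lem:full-rank}, since the selected columns are linearly independent); and let $e$ be the transformable column whose associated component $s = m_b(e) < 0$ is the most negative among all components of $\vec{x}$. The substitution replaces $e$ by $e' = -e + \sum_{i \in \mathcal{C}} e_i - \sum_{j \in \mathcal{S}} e_j$ as in Equation (\ref{eq:column-transformation}), where $\mathcal{C}$ is a covering set of companions of $e$ and $\mathcal{S}$ a set of selection columns with $|\mathcal{S}| = |\mathcal{C}| - 2$.

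Next I would verify the four bullet claims preceding the theorem, which form the computational heart of the argument. Because the substitution is a single elementary change of basis of the column space fixing $\vec{b}$, the new solution $\vec{x}'$ can be read off directly: writing the identity $\vec{b} = A\vec{x} = \sum_k x_k e_k$ and substituting $e = -e' + \sum_{i\in\mathcal{C}} e_i - \sum_{j\in\mathcal{S}} e_j$, the coefficient of $e$ is removed, $e'$ acquires coefficient $-s > 0$, each companion column in $\mathcal{C}$ has its coefficient decreased by $|s|$, each selection column in $\mathcal{S}$ has its coefficient increased by $|s|$, and every other column keeps its old coefficient. One must check that $\{e'\} \cup (\text{old columns} \setminus \{e\})$ is still a linearly independent set of $n_{min}$ admissible columns, so that $\vec{x}'$ is genuinely the unique solution of the new minimal system; admissibility of $e'$ (that it meets the structure of Lemma \ref{lem:conditional}) is exactly what the choice of $\mathcal{S}$ in Definition \ref{def:column-transformation} is designed to ensure, and linear independence follows because the transformation matrix is triangular with unit diagonal up to the single $-1$ entry in the $e$-slot.

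Finally I would assemble the conclusion. The new component $-s = |s|$ is positive. Any selection column's component only increases, so it cannot be the new most negative one. The delicate case is a companion column $e_i$: its component was positive in $\vec{x}$ (a column is called a companion precisely because it shares a `1' in some row with $e$, and if its component had been negative we would have chosen \emph{it} as the column to substitute, contradicting maximality of $|s|$), hence after subtracting $|s|$ its new value is strictly greater than $-|s|$, i.e. strictly greater in absolute value consideration than $s$. All untouched components keep their old values, each of which is $\geq s$ by maximality. Therefore every component of $\vec{x}'$ is strictly greater than $s$, so the most negative component of $\vec{x}'$ has strictly smaller absolute value than $|s|$ (or $\vec{x}'$ has no negative component at all). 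The main obstacle I anticipate is not this final bookkeeping but the implicit claim that a covering set $\mathcal{C}$ of companions with $|\mathcal{C}| < N$ always exists together with a compatible selection set $\mathcal{S}$ making $e'$ admissible — i.e. that the class $\mathcal{T}$ is non-empty for every transformable column; this combinatorial fact about the structure of the focal-element `elastic bands' (Figure \ref{fig:total-focal-elements}) is what must be established carefully, and it is where a reader should expect the real work of the surrounding development to lie.
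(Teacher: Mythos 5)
Your computational core is exactly the paper's argument: writing $\vec{b}=\sum_k x_k e_k$ and eliminating $e$ via $e=-e'+\sum_{i\in\mathcal{C}}e_i-\sum_{j\in\mathcal{S}}e_j$ yields precisely the four update rules listed before the theorem (the new column receives $-s>0$, companions lose $|s|$, selections gain $|s|$, all other components are unchanged), and the paper's proof is nothing more than this observation plus the final bookkeeping. Your extra care about linear independence of the new column set and admissibility of $e'$ is sound, and your closing worry about whether $\mathcal{C}$ and $\mathcal{S}$ always exist is well placed — but it is not part of this theorem, and the paper itself concedes in Section \ref{sec:sketch-proof} that some transformable columns admit no transformation of class $\mathcal{T}$.

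The genuine gap is in the bookkeeping step for companions. You assert that a companion's component must have been positive ``because otherwise we would have chosen it, contradicting maximality of $|s|$.'' Maximality of $|s|$ only gives $x_i \ge s$ for every component; it does not force companions to be nonnegative. A companion with $x_i \in (s,0)$ is entirely consistent with $e$ being the most negative column, and for such a companion the new value $x_i-|s|$ lies in $[2s,s)$, i.e. it is \emph{more} negative than $s$: the substitution would then increase, not decrease, the absolute value of the most negative component. So positivity of the companions' components is load-bearing, and your justification of it fails. In fairness, this is also the soft spot of the paper's own argument, which disposes of it with the parenthetical ``since their initial value was positive'' and no proof; the row constraints $\sum_{e_j\cap\Pi_i=e_i^{j_i}} m(e_j)=m_i(e_i^{j_i})>0$ only bound the \emph{sum} of the companions sharing a row with $e$ (which must exceed $|s|$), not each companion individually, so they settle the matter only when $e$ has a single companion per row. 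A complete proof must either establish companion positivity structurally or weaken the statement. A further minor point: if another untouched column ties with $e$ at value $s$, the most negative component is not strictly reduced, so strictness also needs a uniqueness assumption.
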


\subsection{Sketch of an existence proof} \label{sec:sketch-proof}

We can think of using Theorem \ref{the:column-transformation} to prove that there always exists a selection of columns of $A$ (focal elements of the total belief function) such that the resulting square linear system has a positive vector as a solution. This can be done in a constructive way, by applying a transformation of the type (\ref{eq:column-transformation}) recursively to the column associated with the most negative component, to obtain a path in the solution space which eventually lead to the desired solution.

The following sketch of an existence proof for the restricted total belief theorem exploits the effects on solution components of colum substitutions of type $\mathcal{T}$:
\begin{enumerate}
\item at each column substitution the most negative solution component {decreases} by Theorem \ref{the:column-transformation};
\item if we keep substituting the most negative variable we keep obtaining \emph{distinct} linear systems, for at each step the transformed column is assigned a positive solution component and therefore, if we follow the proposed procedure, \emph{cannot be changed back to a negative one} by applying transformations of class $\mathcal{T}$;
\item this implies that there can be no cycles in the associated path in the solution space;
\item the number $\binom{n_{max}}{n_{min}}$ of solution systems is obviously finite, hence the procedure must terminate.
\end{enumerate}

Incidentally, the (Euclidean) \emph{length} $\| \vec{x} - \vec{x}' = \mathcal{T}(\vec{x}) \|_2$ of a transition in the solution space of the total belief problem is, trivially:
\[
\sqrt{\sum_{i=1}^n  ( \vec{x}_i - \vec{x}'_i )^2} = \sqrt{4 s^2 + \sum_{i \in \mathcal{C}} s^2 + \sum_{i \in \mathcal{S}} s^2
} = \sqrt{s^2 (4 + |\mathcal{C}| + |\mathcal{S}|)} = \sqrt{2} |s| \sqrt{|\mathcal{S}| + 3},
\]
where $s$ is the solution component related to the substituted column. Simple counterexamples show that the shortest path to an admissible system \emph{is not necessarily composed by longest (greedy) steps}. This means that algorithms based on greedy choices or dynamic programming cannot work, for the problem does not seem to meet the `{optimal substructure}' property.

If every transformable column (possessing companions on every partition $\Pi_i$ of $\Theta$) was $\mathcal{T}$-transformable the procedure could not terminate with a minimal solution system with negative solution components, for in that case they would have one or more  companions on each partition $\Pi_i$.\\ Unfortunately, counterexamples show that there are `transformable' columns (associated with negative solution components) which do not admit a transformation of the type (\ref{eq:column-transformation}). Although they do have companions on every partition $\Pi_i$, such counterexamples do not admit a complete collection of `selection' columns.

\subsection{Solution graphs and types of candidate solutions} \label{sec:solution-graphs}

To better understand the complexity of the problem, and in particular address the issue with the number of admissible solutions to the (restricted) total belief problem, it is useful to define an {adjacency} relation between solution systems. 

We say that a candidate minimal solution system $\sigma$ is adjacent to another system $\tau$ if $\tau$ can be obtained from $\sigma$ by substituting a column by means of a transformation of the form (\ref{eq:column-transformation}) (and vice-versa)\footnote{Note that column substitutions of the form  (\ref{eq:column-transformation}) are reversible: what we claimed above is that if we keep replacing columns with the most negative solution component we can never go back to systems we have already visited.}.\\ This allows us to rearrange the candidate minimal solution systems related to a problem of a given size $\{n_i,\;i=1,...,N\}$ into a \emph{solution graph}.

\begin{figure}[ht!]
\begin{center}
\includegraphics[width = 0.5 \textwidth]{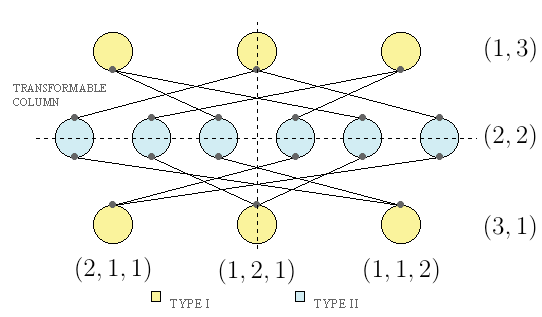} 
 \caption{\label{fig:3x2} The solution graph associated with the restricted total belief problem with $N=2$, $n_1=3$ and $n_2=2$.}
\end{center}
\end{figure}

\subsubsection{Examples of solution graphs}

It can be educational to see some significant examples of solution graphs, in order to infer their properties and general structure.

Figure \ref{fig:3x2} shows the solution graph formed by all the candidate solution systems for the problem of size $N=2$, $n_1 = 3$, $n_2 = 2$. The number of possible minimal solution systems is: 
\[
\binom{n_{max} = n_1 \cdot n_2 = 6}{n_{min} = (n_1 - 1) + (n_2 - 1) +1 = 4} = 12. 
\]
The twelve candidate solution systems can be arranged into a matrix whose rows and columns are labeled respectively with the counts $(c_1, ... , c_{n_i})$ of focal elements of the associate candidate total function (columns of the solution system) containing each focal element of $b_i$. For instance, the label $(2,1,1)$ indicates that, of the $(n_1 - 1) + (n_2 - 1) +1 = 2 +1 +1 = 4$ focal elements of the minimal total b.f. $b$ generated by solution systems in that entry of the matrix, 2 cover $e_1^1$, one covers $e_1^2$ and one $e_1^3$. 

We can also observe that the candidate solution systems for this problem can be arranged in two classes according to the number of transformable columns they possess and the number of admissible $\mathcal{T}$ transformations (edges) for each transformable column. Type II systems (in blue, central row)  possess two transformable columns, each admitting only one column substitution of type $\mathcal{T}$; type I systems, instead (in yellow, top and bottom rows) only have one transformable column which can be substituted, however, in two different ways.

In the perspective of proving the existence of a solution to the restricted total belief problem, it is interesting to note that the graph of Figure \ref{fig:3x2} can be rearranged to form a {chain} of solution systems: its edges form a single, closed loop. This implies that we can reach any solution system starting from any other: starting from an initial non-admissible solution we can reach an admissible one via column substitutions of the proposed type\footnote{Incidentally, such a chain is composed by `rings' whose central node is a type I system connected to a pair of type II systems. Two consecutive rings are linked by a type II system.}.

\begin{figure}[ht!]
\begin{center}
\includegraphics[width = 0.6 \textwidth]{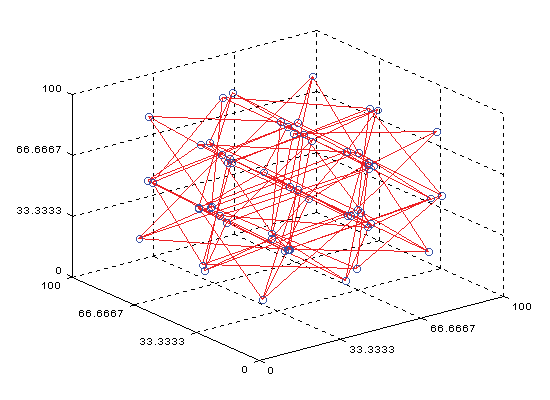} 
\caption{\label{fig:2x2x2} The solution graph associated with the restricted total belief problem of size $N=3$, $n_1 = n_2 = n_3 = 2$.}
\end{center}
\end{figure}
Figure \ref{fig:2x2x2} shows a more complex example, in which the overall symmetry of its solution graph emerges.

\subsection{Graph symmetries, types of solution systems and number of admissible solutions}

Once again, we can produce counterexamples in which the same entry does contain systems of different type. Hence, the `type' of a solution system (how many transformable columns it has, and in how many ways they can be substituted) must be induced by some other global property of the solution graph. Let 
\begin{equation} \label{eq:group}
G = S_{n_1} \times \cdots \times S_{n_N}
\end{equation}
be the group of {permutations of focal elements} of all the conditional belief functions $b_i$. The group is trivially the product of the permutation groups $S_{n_i}$ acting on the collections of focal elements of each individual conditional belief function $b_i$. As it alters the ordering of the focal elements of each $b_i$, $G$ acts on a solution system by moving it to a different location of the solution graph. \\ Given a solution system $\sigma$, the \emph{orbit} induced by the action of $G$ on $\sigma$ is the set of all solution systems (nodes of the graph) obtained by some permutation of the focal elements within at least some of the partition elements $\Pi_i$.

The following conjecture originates from the study of the structure of several significant solution graphs. \vspace{3mm}

\textbf{Conjecture.} The {orbits} of $G$ coincide with the types of solution systems.\vspace{3mm}

The conjecture is quite reasonable, for the behavior of a solution system in terms of transformable columns depends only on the cardinality of the collections of focal elements containing each focal element of $b_i$ for each $i=1,...,N$. It does not depend on which specific e.f. is assigned to which collection. From group theory we known that the orbits of $G$ are disjoint, forming therefore a partition of the set of nodes of the graph, as they should if they indeed represented types of solution system.

The number of orbits of (\ref{eq:group}) could be related to the number of admissible minimal solutions to the restricted total belief problem. We will pursue this line of research in the near future.


\section{Conclusive comments} \label{sec:conclusive-comments}

The notion of transformable column seems to point in the right direction. The algorithm of Section \ref{sec:sketch-proof} can be interpreted as the proof of existence of an optimal path within a graph: having chosen an arbitrary node $\sigma$ of the graph, there exists {at least} one path to a different node which corresponds to a system with positive solution (an admissible total belief function).

Unfortunately, we still do not have a complete constructive proof of the restricted total belief theorem, for wanting of a more general class of linear transformations $\mathcal{T}'$ applicable to \emph{any} column with negative solution component. After detecting such a class of transformations an investigation of the properties of the associated optimal paths will be in place, together with a global analysis of the structure of solution graphs and their mutual relationships. \\
The structure of solution graphs for a number of significant special cases suggests that each graph contain a number of `copies' of solution graphs related to lower size problems. For instance, the graph for the problem $N=2$, $n_1 = 4$, $n_2 = 2$ is composed by 32 nodes (candidate minimal solution systems) arranged in 8 chains, each isomorphic to the graph of Figure \ref{fig:3x2} associated with the problem $N =2$, $n_1 = 3$, $n_2=2$. Each system of the larger graph is covered by 3 of these chains. \\ Such inclusion relationships between graphs of problems of different size are potentially extremely useful in the perspective of addressing the other major missing element of the restricted total belief problem -- the computation of the number of admissible solutions (proper minimal-size total belief functions).\\
Our conjecture about the relationship between the action of the group of permutations $G$ and the global symmetry of the solution graph also needs to be investigated.

Finally, while we know the minimal number of focal elements of the total function in the restricted case (in which the a-priori belief function $b_0$ has disjoint focal elements), we still do not understand what the similar bound should look like in the general case of an arbitrary prior, or whether the presence of intersecting focal elements in $b_0$ does at all influence the structure of the focal elements of the sought total function (in other words, how does Lemma \ref{lem:conditional} generalize to the case of arbitrary prior belief functions). 

These challenges are open for the whole belief functions community to be taken on. As for us we will keep working towards their solution, and investigate the fascinating relationships between the total belief problem and transversal matroids \cite{Oxley}, on one hand, and positive linear systems \cite{Farina}, on the other, in an ongoing attempt to bridge the mathematics of uncertainty with combinatorics, algebra and geometry.

\chapter{Belief Modeling Regression} \label{cha:pose}

\begin{center}
\includegraphics[width = \textwidth]{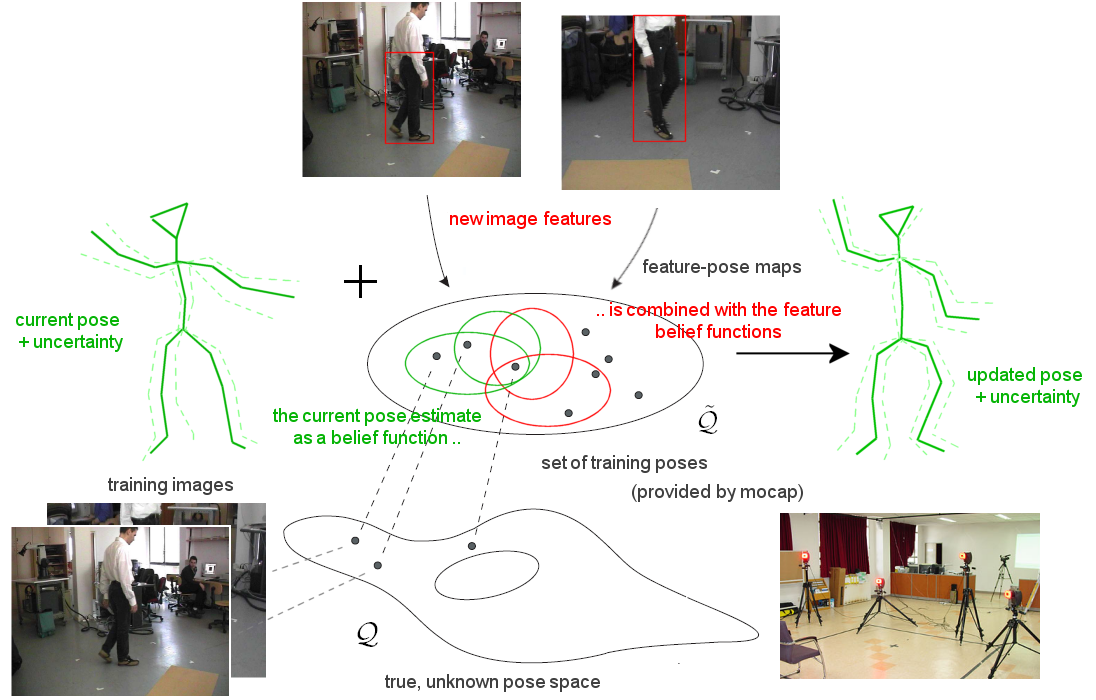}
\end{center}
\vspace{5mm}

\emph{Pose estimation} is another well studied problem in computer vision. Given an image sequence capturing the motion and evolution of an object of interest, the problem consists in estimating the position and orientation of the object at each time instant, along with its internal configuration or \emph{pose}. Such estimation is typically based on two pillars: the extraction of salient measurements or \emph{features} from the available images and, when present, a \emph{model} of the structure and kinematics of the moving body. Pose estimation is, among others, a fundamental ingredient of {motion capture}, i.e., the reconstruction of the motion of a person throughout a video sequence, usually for animation purposes in the movie industry or for medical analysis of posture and gait. Other major applications include human-computer interaction, image retrieval on the internet, robotics (Figure \ref{fig:pose-applications}).

\subsection*{Related Work}

Current methodologies for pose estimation can roughly be classified into `model-based', `learning-based' and `example-based` approaches. The former \cite{Deutscher00,Sidenbladh00a} presuppose an explicitly known parametric body model: pose recovery is typically achieved by matching the pose variables to a forward rendered model based on the extracted features. Initialization is often difficult, and the pose optimization process can be subject to local minima \cite{Sminchisescu03kinematic}. In contrast, \emph{learning-based} approaches \cite{agarwal06pami,bb71611,946721,1068941} exploit the fact that typical (human) motions involve a far smaller set of poses than the kinematically possible ones, and learn a model that directly recovers pose estimates from observable image quantities. Such methods \cite{1099953,bb71628,bb33622,niyogi96afgr} are appealing and generally faster, due to the lower dimensionality of the models employed, and typically provide a better predictive performance when the training set is comprehensive. On the other hand, they sometimes require heavy training to produce a decent predictive model, and the resulting description can lack generalization power.

\begin{figure}[ht!]
\begin{center}
\includegraphics[width = 0.9 \textwidth]{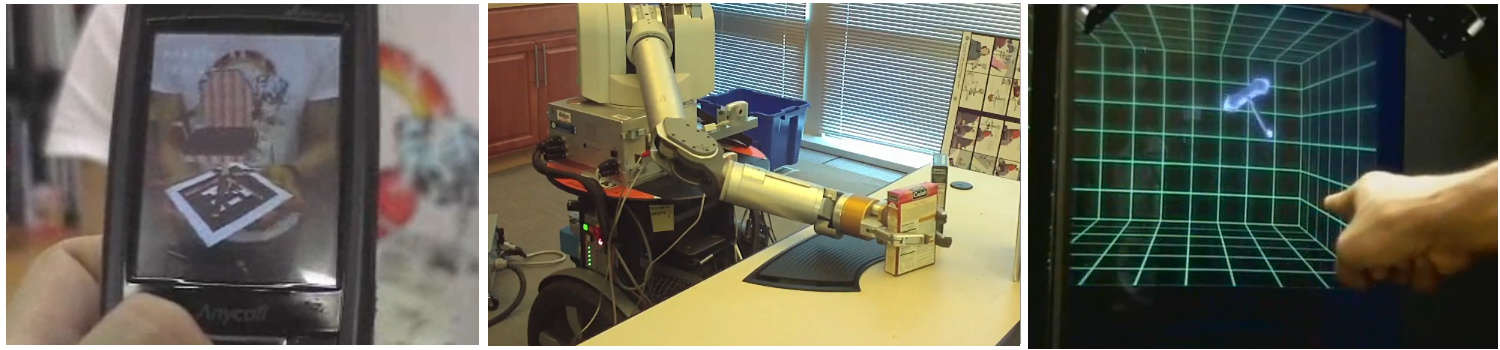}
\end{center}
\caption{Some applications of pose estimation: human-machine interaction, robotics, virtual reality. \label{fig:pose-applications}} 
\end{figure}
\emph{Example-based} methods, which explicitly store a set of training examples whose 3D poses are known, estimate pose by searching for training image(s) similar to the given input image and interpolating from their poses \cite{946721,Athitsos04cvpr}. They can then be used to initialize model-based methods in a `smart' way, as in the  monitoring of an automobile driver's head movements provided in \cite{niyogi96afgr}. No prior analytic structure of the pose space is incorporated in the estimation process, although the training data itself do amount to a rough approximation of the configuration space. 

Most of these methods share a common architecture. Vectors of feature measurements (such as moments of silhouette images \cite{rosales00human-motion}, multi-scale edge direction histograms \cite{dalal05cvpr}, distribution of shape contexts \cite{agarwal06pami}, and Harr-like features \cite{viola01cvpr}) are extracted from each individual image. Indeed, the integration of multiple cues is crucial in pose estimation to increase both the resolution/accuracy of the estimation and its robustness \cite{Hel-Or95,Darrell98,Moeslund00,Sminchisescu01,Sidenbladh03}.
Then, the likely pose of the object is predicted by feeding this feature vector to \emph{a map from the features space to the pose space}, which is learned from a training set of examples or a model whose parameters are learned from the training data, and whose purpose is to (globally or locally) represent the relationship between image and pose. This mapping, albeit unknown, is bound to be (in general) one-to-many: more than one object configuration can generate the same feature observation, because of occlusions, self-occlusions and the ambiguities induced by the perspective image projection model.

Since only limited information is provided to us in the training session, only an approximation of the true feature-pose mapping can be learned. The accuracy of the estimation depends on the forcibly limited size and distribution of the available examples, which are expensive and time-consuming to collect. This has suggested in the past to consider a more constrained, activity-based setting to constrain the search space of possible poses.\\ In \cite{rosales00human-motion}, for instance, an inverse mapping between image silhouette moments and 2D joint configurations is learned, for each cluster obtained by fitting a Gaussian mixture to 2D joint configurations via the EM algorithm. In \cite{agarwal06pami} a Relevant Vector Machine (RVM) is used to infer human pose from a silhouette's shape descriptor, while more recently an extension to mixtures of RVMs has been proposed by Thayananthan et al. \cite{Thayananthan06eccv}. In \cite{mori06pami}, a number of exemplar 2D views of the human body are stored; the locations of the body joints are manually marked and labeled. The input image is then matched via `shape context matching' to each stored view, and the locations of the body joints in the matched exemplar view are transferred to the test image. Other approaches include Local Weighted Regression \cite{946721}, BoostMap \cite{Athitsos04cvpr}, Bayesian Mixture of Experts \cite{1068941} and Gaussian Process Regression (GPR) \cite{rasmussen06gpr}.

The accuracy of example-based approaches critically depends on the amount and representativeness of the training data. Queries can be potentially computationally expensive, and need to be performed quickly and accurately \cite{946721,Athitsos04cvpr}. In addition, example-based approaches often have problems when working in high-dimensional configuration spaces, as it is difficult to collect enough examples to densely cover them. 

\subsection*{Scope of the Chapter}

In this Chapter we describe a Belief Modeling Regression (BMR) \cite{cuzzolin13fusion,gong2018tfs} framework for example-based pose estimation based on the theory of evidence. Our framework uses the finite amount of evidence provided in a training session to build, given a new feature value, a belief function on the set of training poses. In this context we favour the interpretation of belief functions as convex sets of probability distributions (credal sets, see Chapter \ref{cha:state}, Section \ref{sec:credal-sets}), according to which a belief function on the pose space is equivalent to a {set of linear constraints on the actual conditional pose distribution (given the features)}. Regression is made possible by learning during training a refining (in the evidence-theoretical sense of Definition \ref{def:refining}) between an approximation of each feature space, obtained via Expectation-Maximization, and the set of training poses. At test time each feature value, encoded as a set of likelihoods, translates into a belief function on the set of training poses. This determines a convex sets of distributions there, which in turn generates an interval of pose estimates.

Multiple features are necessary to obtain decent accuracy in terms of pose estimation. All single-feature refinings are collected in an `evidential model' of the object: the information they carry is fused before estimating the object's pose in the belief framework, allowing a limited resolution for the individual features to translate into a relatively high estimation accuracy (in a similar way to tree-based classifiers \cite{Meynet08} or boosting approaches, in which weak features are combined to form a strong classifier). The size of the resulting convex set of probabilities reflects the amount of training information available: the larger and more densely distributed within the pose space the training set is, the narrower the resulting credal set. Both a {pointwise estimate} of the current pose and a {measure of its accuracy} \cite{melkonyan06ijar} can then be obtained. In alternative, a separate pose estimate can be computed for each vertex of the credal set, in a robust statistical fashion \cite{berger90jspim,seidenfeld93dilation}.

As we show in the last part of the Chapter, an evidential model essentially provides a constraint on the family of admissible feature-to-pose maps, in terms of smooth upper and lower bounds. All mappings (even discontinuous, or 1-many) within those smooth bounds are possible under the model. The width of this space of mappings reflects the uncertainty induced by the size and distribution of the available training set.

\subsection*{Chapter Outline} 

The Chapter is structured as follows.\\ First (Section \ref{sec:problem}) the scenario and assumptions of the problem are laid down. The learning of an `evidential model' of the body from the learning data, based on approximations of the unknown feature-to-pose maps, is described in Section \ref{sec:model-learning}. In Section \ref{sec:estimation} the special class of Dirichlet belief functions is proposed to model the uncertainty due to the scarcity of the training data. From the belief estimate which results from their conjunctive combination either a pointwise estimate or a set of extremal estimates of the pose can be extracted. The computational complexity of learning and estimation algorithms is also analyzed.\\ In Section \ref{sec:assessing} model assessment criteria based on the theory of families of compatible frames are discussed.\\ Section \ref{sec:results} illustrates the performance of Belief Modeling Regression in an application to human pose recovery, showing how BMR outperforms our implementation of both Relevant Vector Machine and Gaussian Process Regression. Section \ref{sec:discussion} discusses motivation and advantages of the proposed approach in comparison with other competitors, and analyzes approaches alternative to Dirichlet modeling for belief function inference. Finally, Section \ref{sec:tracking} outlines an extension of Belief Modeling Regression to fully-fledged tracking, in which temporal consistency is achieved via the total belief theorem (extensively considered in Chapter \ref{cha:total}).

\section{Scenario} \label{sec:problem}

We consider the following scenario:
\begin{itemize}
\item
the available evidence comes in the form of a training set of images containing sample poses of an \emph{unspecified} object;
\item
we only know that the latter's configuration can be described by a vector $q \in \mathcal{Q} \subset \mathbb{R}^D$ in a pose space $\mathcal{Q}$ which is a subset of $\mathbb{R}^D$;
\item
a source of ground truth exists which provides for each training image $I_k$ the configuration $q_k$ of the object portrayed in the image;
\item
the location of the object within each training image is known, in the form of a bounding box containing the object of interest.
\end{itemize}
In a training session the object explores its range of possible configurations, and a set of poses is collected to form a finite approximation $\tilde{\mathcal{Q}}$ of the parameter space:
\begin{equation}\label{eq:qtilde}
\tilde{\mathcal{Q}} \doteq \Big \{ q_k,k=1,...,T \Big \}.
\end{equation}
At the same time a number $N$ of distinct features are extracted from the available image(s), within the available bounding box:
\begin{equation}\label{eq:fea}
\begin{array}{cc}
\tilde{\mathcal{Y}} \doteq \Big \{ y_i(k), k = 1,...,T \Big \},& i = 1,...,N.
\end{array}
\end{equation}

In order to collect $\tilde{\mathcal{Q}}$ we need a source of ground truth to tell us what pose the object is in at each instant $k$ of the training session. One option is to use a motion capture system, as it is done in \cite{rosales00human-motion} for the human body tracking problem. After applying a number of reflective markers in fixed positions of the moving object, the system is able to provide by triangulation the 3D locations of the markers throughout the training motion. Since we do not know the parameter space of the object, it is reasonable to use as body pose vector the collection of all marker's 3D locations. 

Based on this evidence, at test time:
\begin{itemize}
\item
a supervised localization algorithm (trained in the training stage using the annotation provided in terms of bounding boxes, e.g. \cite{felzenszwalb-2010}) is employed to locate the object within each test image: image features are only extracted from within the resulting bounding box;
\item
such features are exploited to produce an estimate of the object's configuration, together with a measure of how reliable this estimate is.
\end{itemize}

\section{Learning evidential models} \label{sec:model-learning}

\subsection{Building feature-pose maps} \label{sec:approximate-feature-spaces}

Consider an image feature $y$, whose values live in a feature space $\mathcal{Y}$, and let us denote by $\rho^* : \mathcal{Y} \rightarrow 2^{\mathcal{Q}}$ the unknown mapping linking the feature space $\mathcal{Y}$ to the collection $2^{\mathcal{Q}} = \{ Q \subseteq \mathcal{Q} \}$ of sets of object poses. We seek to learn from the training data an approximation $\tilde{\rho}$ of this unknown mapping, which is applicable to any feature value, and ideally produces only admissible object configurations. {In fact, as evidence is limited, we can only constrain $\tilde{\rho}$ to have output in the space $\mathbb{R}^D$ the true pose space $\mathcal{Q}$ is embedded into.}\\
We propose to obtain such an approximation by applying \emph{EM clustering} \cite{moore-veryfast} to the training data (\ref{eq:qtilde}), (\ref{eq:fea}), individually for each feature component.

Consider the $N$ sequences of feature values $\{y_i(k), k=1,...,T\}$, $i=1,...,N$, acquired during training. EM clustering can be applied to them to obtain a Mixture of Gaussians (MoG)
\begin{equation}\label{eq:mog}
\Big \{ \Gamma_i^j, j=1,...,n_i \Big \}, \;\;\; \Gamma_i^j \sim \mathcal{N}(\mu_i^j,\Sigma_i^j)
\end{equation}
with $n_i$ Gaussian components, separately for each feature space (the range $\mathcal{Y}_i\subset \mathbb{R}^{d_i}$ of the unknown feature function $y_i:\mathcal{I}\rightarrow \mathcal{Y}_i$ acting on the set of all images $\mathcal{I}$). MoG models are often employed in bottom-up pose estimation\footnote{For instance, in \cite{1068941} several `expert' predictions are combined in a Gaussian mixture model. In \cite{Rosales00} conditional distributions are also assumed to be Gaussian mixtures.}, as their parameters can be speedily estimated via the EM algorithm \cite{moore-veryfast}.

Here we use the learnt MoG (\ref{eq:mog}) to build a particle-based discrete approximation of the unknown feature pose mapping. The former induces an implicit partition
\begin{equation} \label{eq:approximate-feature-space}
\Theta_i \doteq \Big \{ \mathcal{Y}_i^1, \cdots , \mathcal{Y}_i^{n_i} \Big \}
\end{equation}
of the $i$-th feature range, where $\mathcal{Y}_i^j = \big\{ y\in \mathcal{Y}_i \; s.t. \; \Gamma_i^j(y) > \Gamma_i^l(y) \; \forall l\neq j \big \}$ is the region of $\mathcal{Y}_i$ in which the $j$-th Gaussian component dominates all the others (Figure \ref{fig:implicit}-right). We call (\ref{eq:approximate-feature-space}) the $i$-th {`approximate' feature space}. The purpose here, however, is to model the feature-pose relation in an efficient way rather than to approximate the actual feature space.
\begin{figure}[ht!]
\begin{center}
\vspace{-0mm}
\begin{tabular}{cc}
\includegraphics[width = 0.47\textwidth]{compat.png} & \includegraphics[width = 0.53\textwidth]{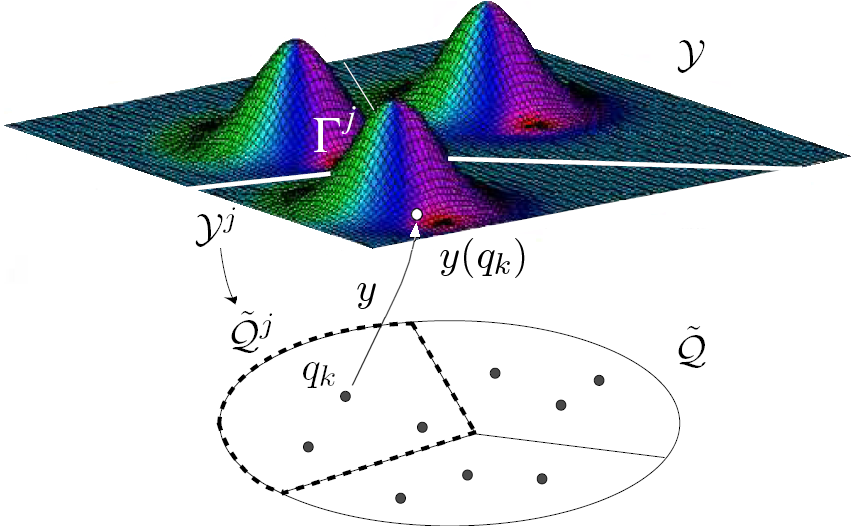}
\end{tabular}
\vspace{-0mm}
\end{center}
\caption{Left: a probability measure $P$ on $\Omega$ induces a belief function $b$ on $\Theta$ through a multi-valued mapping $\rho$. Right: a Mixture of Gaussian learned via EM from the training features defines an implicit partition on the set of training poses $\tilde{\mathcal{Q}}$. A set of Gaussian densities $\{ \Gamma^j, j=1,...,n \}$ on the range $\mathcal{Y}$ of a feature function $y$ define a partition of $\mathcal{Y}$ into disjoint regions $\{ \mathcal{Y}^j \}$. Each of those regions $\mathcal{Y}^j$ is in correspondence with the set $\tilde{\mathcal{Q}}^j$ of sample poses $q_k$ whose feature value $y(q_k)$ falls inside $\mathcal{Y}^j$.\label{fig:implicit}} \vspace{-0mm}
\end{figure}

In virtue of the fact that features are computed during training in synchronous with the true poses provided by the source of ground truth, each element $\mathcal{Y}_i^j$ of the approximate feature space is associated with the set of training poses $q_k \in \mathcal{Q}_k$ whose $i$-th feature value falls in $\mathcal{Y}_i^j$ (Figure \ref{fig:implicit}-right again):
\begin{equation}\label{eq:rhoi}
{\rho}_i : \mathcal{Y}_i^j \mapsto \tilde{\mathcal{Q}}_i^j \doteq \Big \{ q_k\in\tilde{\mathcal{Q}} :
y_i(k)\in \mathcal{Y}_i^j \Big \}.
\end{equation}
Applying EM clustering separately to each training feature sequence (\ref{eq:fea}) yields therefore both $N$ approximate feature spaces $\Theta_i = \{ \mathcal{Y}_i^1, \cdots , \mathcal{Y}_i^{n_i} \}$, $i=1,...,N$, and $N$ maps (\ref{eq:rhoi}) from each of them to the approximate pose space (the set of training poses) $\tilde{\mathcal{Q}}$.
The learned feature-pose maps (\ref{eq:rhoi}) amount to constraints on the unknown feature pose maps $\rho^*_i : \mathcal{Y} \rightarrow 2^\mathcal{Q}$, built upon the evidence available in the specific regions covered by training feature/pose pairs (see Section \ref{sec:discussion}). 

Just as their unknown counterparts $\rho^*_i$, the $\rho_i$s are inherently \emph{multi-valued}, i.e., they map elements of each approximate feature space $\Theta_i$ to \emph{sets} of training poses. The number $n_i$ of clusters can be estimated by cross-validation. Here we will set it to a fixed value for each feature space.

\subsection{Continuous mapping via belief functions} \label{sec:continuous-mapping}

The maps (\ref{eq:rhoi}) only apply to partitions of the feature range, and cannot be used directly to map individual feature values. The structure provided by the learned MoGs (\ref{eq:mog}) can nevertheless be used to build universal mappings. Given the mixture (\ref{eq:mog}), each new feature value $y_i$ can be represented by its soft assignments 
\begin{equation}\label{eq:soft-assignment}
y_i \mapsto \Big [ \Gamma_i^1(y_i), \Gamma_i^2(y_i), \cdots, \Gamma_i^{n_i}(y_i) \Big ]
\end{equation}
to each mixture component. The density values (\ref{eq:soft-assignment}) constitute a vector of coordinates of the feature value in the feature range $\mathcal{Y}_i$: in this interpretation, the MoG approximation of $\mathcal{Y}_i$ provides an atlas of coordinate charts on the feature space itself. Rather than mapping $y$ we can use (\ref{eq:rhoi}) to map the associated coordinates (soft assignments) (\ref{eq:soft-assignment}), extending the `particle'-like information on the shape of $\rho^*_i$ given by a learnt refining (\ref{eq:rhoi}) to map any test feature value.

By normalizing (\ref{eq:soft-assignment}), each (test) feature value is associated with a probability distribution on the approximate feature space $\Theta_i$. By comparing Figures \ref{fig:implicit}-left and \ref{fig:implicit}-right (see Chapter \ref{cha:state}, Section \ref{sec:multivalued} as well), it is clear that the maps (\ref{eq:rhoi}) are multi-valued mappings linking the question $Q_1$ ``to which Gaussian component of the MoG (\ref{eq:mog}) does the new feature value $y$ belong" to the question $Q_2$ ``what is the object pose whose observed feature value is $y$". 

Now, it follows from Chapter \ref{cha:state}, Section \ref{sec:multivalued}, that the probability distribution associated with any feature value induces a belief function on the (approximate) pose range $\tilde{\mathcal{Q}}$ (Equation (\ref{eq:belvalue})). Overall, the learnt universal feature-pose mapping is a cascade of soft assignment and refining-based multi-valued mapping:
\begin{equation} \label{eq:mapping-bmr}
y_i \in \mathcal{Y}_i \stackrel{(112	)}{\mapsto} \Big [ \Gamma_i^1(y_i), \Gamma_i^2(y_i), \cdots, \Gamma_i^{n_i}(y_i) \Big ] \mapsto p_i = [p_i(\mathcal{Y}_i^1),...,p_i(\mathcal{Y}_i^{n_i})] \stackrel{(17)}{\mapsto} b_i:2^{\tilde{\mathcal{Q}}} \rightarrow [0,1]
\end{equation}
where 
\[
p_i(\mathcal{Y}_i^j) = \frac{\Gamma_i^j(y_i)}{\sum_k \Gamma_i^k(y_i)}, 
\]
associating any test feature value $y_i$ with a belief function $b_i$ on the set of training poses $\tilde{\mathcal{Q}}$.

\subsection{Training algorithm} \label{sec:evidential-models}

In the training stage the body moves in front of the camera(s), exploring its configuration space, while a sequence of training poses $\tilde{\mathcal{Q}} = \{ q_k,k = 1,...,T \}$ is provided by a source of ground truth (for instance a motion capture system, Section \ref{sec:problem}). The sample images are annotated by a bounding box indicating the location of the object within each image. At the same time:

\begin{enumerate}
\item
for each time instant $k$, a number of feature values are computed from the region of interest of each available image: $\{ y_i(k), k = 1,...,T \}$, $i = 1,...,N$;
\item
EM clustering is applied to each feature sequence $\{ y_i(k), k = 1,...,T \}$ (after setting the number of clusters $n_i$), yielding:
\begin{enumerate}
\item $N$ approximate feature spaces $\Theta_i = \{ \mathcal{Y}_i^j, j = 1,...,n_i \}$, i.e., the implicit partitions of the feature ranges $\mathcal{Y}_i$ associated with the EM clusters (Section \ref{sec:approximate-feature-spaces});
\item
$N$ maps (\ref{eq:rhoi}) $\rho_i : \mathcal{Y}_i^j \in \Theta_i \mapsto \tilde{\mathcal{Q}}_i^j \doteq \{q_k\in\tilde{\mathcal{Q}} : y_i(k)\in \mathcal{Y}_i^j\}$ mapping EM feature clusters to sets of sample training poses in the approximate pose space $\tilde{\mathcal{Q}}$.
\end{enumerate}
\end{enumerate}

As the applications (\ref{eq:rhoi}) map approximate feature spaces to disjoint partitions of the approximate pose space $\tilde{\mathcal{Q}}$ they are refinings, and $\tilde{\mathcal{Q}}$ is a common refinement (Definition \ref{def:1}) for the collection of approximate feature spaces $\Theta_1,...,\Theta_N$. 

The collection of FODs $\tilde{\mathcal{Q}}, \Theta_1,...,\Theta_N$ along with the refinings $\rho_1,...,\rho_N$ is characteristic of: the object to track, the chosen features functions $y_i$, and the actual training data.\\ We call it the \emph{evidential model} (Figure \ref{fig:model}) of the object.


\section{Regression} \label{sec:estimation}

Once an evidential model has been learned from the available training set, it can be used to provide robust estimates of the pose of the moving object when new evidence becomes available.

\subsection{Dirichlet belief function modeling of soft assignments} \label{sec:mf}

When one or more test images are acquired, new visual features $y_1,...,y_N$ are extracted. Such feature values can be mapped by the learnt universal mappings (\ref{eq:mapping-bmr}) to a collection of belief functions $b_1,...,b_N$ on the set of training poses $\tilde{\mathcal{Q}}$. From Chapter \ref{cha:state}, Section \ref{sec:credal-sets}, each $b_i$ corresponds to a convex set of probability distributions, whose width encodes the uncertainty on the pose value due to the uncertainty on the analytical form of the true, unknown feature-pose map $\rho^*_i$.

In addition, belief functions allow us to take into account the scarcity of the training samples, by introducing uncertainty on the soft assignment (\ref{eq:soft-assignment}) itself. This can be done by assigning some mass $m(\Theta_i)$ to the whole approximate feature space, prior to applying the refining $\rho_i$. This encods the fact that there are other samples out there which, if available, would alter the shape of the MoG approximation of $\mathcal{Y}_i$ in unpredictable ways.\\ Namely, we map the soft assignment (\ref{eq:soft-assignment}) to a \emph{Dirichlet belief function} \cite{jsang06normalising}, with basic probability assignment:
\begin{equation}\label{eq:dirichlet}
m_i : 2^{\Theta_i} \rightarrow [0,1], \;\;\; m_i(\mathcal{Y}_i^j) = \frac{\displaystyle \Gamma_i^j(y_i)}{\displaystyle \sum_k \Gamma_i^k(y_i)} \big ( 1 - m_i(\Theta_i) \big ).
\end{equation}
The b.p.a. (\ref{eq:dirichlet}) `discounts' \cite{jiang08new,bell93discounting} the probability distribution obtained by simply normalizing the likelihoods (\ref{eq:soft-assignment}) by assigning some mass $m_i(\Theta_i)$ to the entire FOD $\Theta_i$.

As we need to discount the limited accuracy achieved by using as coordinates in $\mathcal{Y}_i$ those derived by the MoG representation $\Theta_i$, a plausible choice is
\[
m_i(\Theta_i) = \frac{1}{n_i}.
\]
Indeed, when $n_i \rightarrow \infty$ the discount factor tends to zero, and the approximate feature space converges (in theory) to the real thing. In addition, as $n_i$ cannot be greater than the number of training pairs $T$, such a discounting factor also takes into account the limited number of training samples.
\begin{figure}[ht!]
\begin{center}
\vspace{-0mm}
\includegraphics[width = \textwidth]{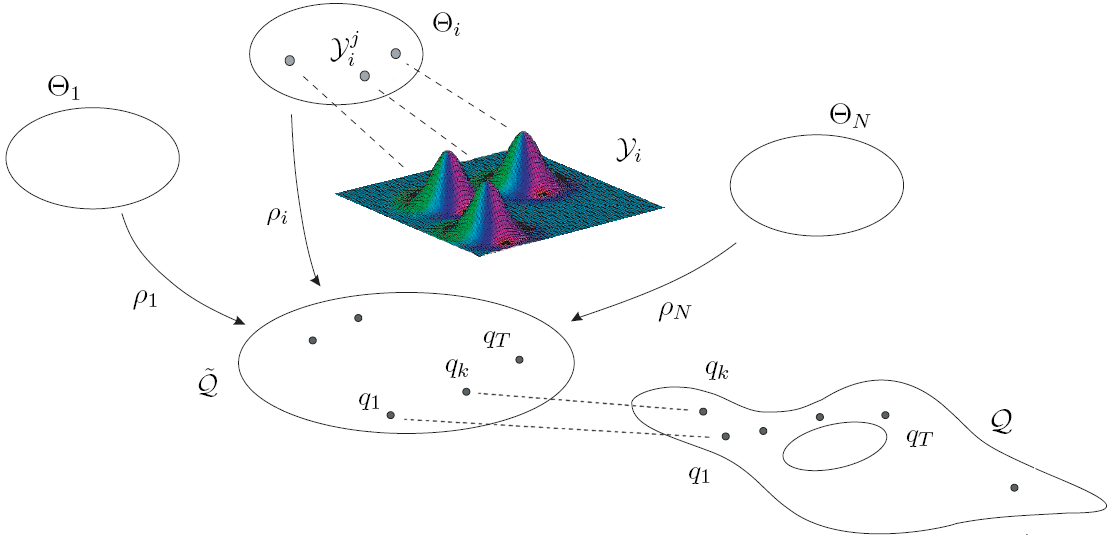}
\vspace{-0mm}
\end{center}
\caption{\label{fig:model} Evidential model. The EM clustering of each feature set collected in the training stage yields an approximate feature space $\Theta_i = \{ \mathcal{Y}_i^j, j=1,...,n_i \}$. Refining maps $\rho_i$ between each approximate feature space and $\tilde{\mathcal{Q}} = \{ q_1,...,q_T\}$ (the training approximation of the unknown pose space $\mathcal{Q}$) are learned, allowing at test time the fusion on $\tilde{\mathcal{Q}}$ of the evidence gathered on $\Theta_1,...,\Theta_N$.} \vspace{-0mm}
\end{figure}

\subsection{Cue integration} \label{sec:combination}

If we assume that the belief functions induced by test feature values are generated by `independent' sources of information they can be combined by means of Dempster's rule of combination (Definition \ref{def:dempster} or \cite{cuzzolin04smcb}), or its TBM variant the conjunctive rule of combination \cite{smets94transferable}.
\begin{definition} \label{def:conjunctive}
The \emph{conjunctive combination} of two belief functions $b_1,b_2:2^\Theta \rightarrow [0,1]$ is a new belief function $b_1 \Ocap b_2$ on the same FOD whose focal elements are all the possible intersections of focal elements
of $b_1$ and $b_2$ respectively, and whose b.p.a. is given by:
\begin{equation}\label{eq:conjunctive}
m_{b_1 \Ocap b_2}(A) = \sum_{B \cap C = A} m_{b_1}(B) \; m_{b_2}(C).
\end{equation}
\end{definition}
Definition \ref{def:conjunctive}, just like Dempster's rule, can be extended to the combination of an arbitrary number of belief functions.

While it is axiomatically justifiable as the only combination rule which meets a number of sensible requirements such as least commitment, specialization, associativity and commutativity \cite{smets07analyzing}, the conjunctive combination also amounts to assuming that the sources of evidence to merge are both reliable and independent. The current consensus is that different combination rules \cite{sentz02tech,smets93dis} are to be employed under different assumptions \cite{smets07analyzing}.\\
It is rather difficult, however, to decide in which situations the sources of information can indeed be considered independent: this is the case for features extracted from one or more views of the same object. An alternative point of view, supported by Shenoy, maintains instead that rather than employing a battery of combination rules whose applicability to a given problem is difficult to establish, we should adopt models which do meet the independence of sources assumption, as it happens in probability theory. We support this view here, and will test the adequacy of the assumption empirically in Section \ref{sec:results}.

\subsection{Belief estimate} \label{sec:belief-estimate}

The measurement belief functions 
\[
\big \{ b_i:2^{\Theta_i} \rightarrow [0,1], i=1,...,N \big \} 
\]
inferred from the test feature values $y_1,...,y_N$ via (\ref{eq:dirichlet}) are then mapped to belief functions 
\[
\big \{ b'_i:2^{\tilde{\mathcal{Q}}} \rightarrow [0,1] , i=1,...,N \big \} 
\]
on the approximate pose space $\tilde{\mathcal{Q}}$ by vacuous extension (recall Chapter \ref{cha:toe}, Definition \ref{def:vacuous}): $\forall A \subset \tilde{\mathcal{Q}}$
\begin{equation} \label{eq:vacuous-pose}
m'_i (A) = \left \{ \begin{array}{ll} m_i(A_i) & \exists A_i \subset \Theta_i \; s.t. \; A = \rho_i(A_i);  \\ 0 & otherwise. \end{array} \right.
\end{equation}
The resulting b.f.s on $\tilde{\mathcal{Q}}$ are combined by conjunctive combination (\ref{eq:conjunctive}). The result is a belief function $\hat{b} = b'_1 \Ocap \cdots \Ocap b'_N$ on $\tilde{\mathcal{Q}}$ which is right to call the \emph{belief estimate} of the object pose.

\subsubsection{Example} \label{sec:estimate-example}

It is important to understand how sophisticated a description of the object's pose a belief function is, as opposed to any estimate in the form of a `precise' probability distribution (including complex multi-modal or particle-based descriptions based on Monte-Carlo methods). A belief estimate $\hat{b}$ of the pose represents indeed an entire convex collection of probabilities (credal set) on the approximate pose space (see Chapter \ref{cha:state}, Section \ref{sec:credal-sets}). 

Suppose that the approximate pose space contains just three samples: $\tilde{\mathcal{Q}} = \{ q_1, q_2, q_3 \}$. Suppose also that the evidence combination process delivers a belief estimate $\hat{b}$ with b.p.a.:
\begin{equation}\label{eq:ex-bpa}
\begin{array}{ccccc}
\hat{m}(\{q_1,q_2\}) = 1/3, & & \hat{m}(\{q_3\}) = 1/6, & & \hat{m}(\{q_1,q_2,q_3\}) = 1/2.
\end{array}
\end{equation}
\begin{figure}[ht!]
\vspace{-0mm}
\begin{center}
\includegraphics[width = 0.75 \textwidth]{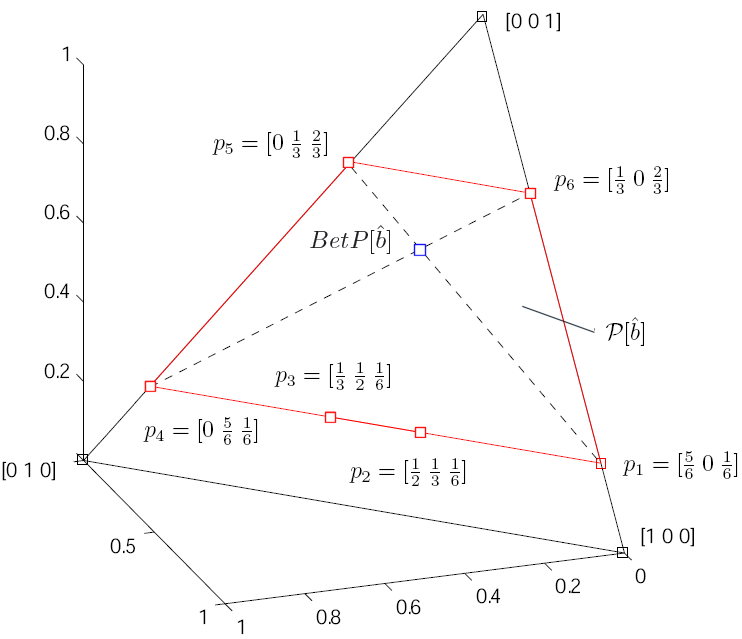}
\vspace{-0mm}
\end{center}
\caption{The convex set of probability distributions $\mathcal{P}[\hat{b}]$ (in red) associated with the belief function $\hat{b}$ (\ref{eq:ex-bpa}) on the approximate parameter space
$\tilde{\mathcal{Q}} = \{q_1,q_2,q_3\}$, displayed on the triangle of all probability distributions on $\tilde{\mathcal{Q}}$. The pignistic approximation $BetP[\hat{b}]$ (the center of mass of $\mathcal{P}[b]$, in blue) is also shown.\label{fig:ex-convex}} \vspace{-0mm}
\end{figure}
By (\ref{eq:prho}), the vertices of $\mathcal{P}[\hat{b}]$ are those probabilities generated by reassigning the mass of each focal element to any one of its singletons. There are $\prod_k |A_k|$ such possible choices, where $\{A_k\}$ is the list of focal elements of $\hat{b}$. As our belief estimate (\ref{eq:ex-bpa}) has 3 focal events of size $1,2$ and 3, the corresponding credal set $\mathcal{P}[\hat{b}]$ will be the convex closure of $1 \cdot 2 \cdot 3 = 6$ probability distributions, namely:
\[
\begin{array}{cccccc}
p_1 = [ \frac{5}{6} \; 0 \; \frac{1}{6} ], & p_2 = [ \frac{1}{2} \; \frac{1}{3} \;
\frac{1}{6} ], & p_3 = [\frac{1}{3} \; \frac{1}{2} \; \frac{1}{6} ], &
p_4 = [0 \; \frac{5}{6} \; \frac{1}{6} ], & p_5 = [0 \; \frac{1}{3} \; \frac{2}{3}
], & p_6 = [\frac{1}{3} \; 0 \; \frac{2}{3} ].
\end{array}
\]
Figure \ref{fig:ex-convex} shows the credal set (a polygon) associated with the belief estimate (\ref{eq:ex-bpa}) in the simplex of all probability distributions on $\tilde{\mathcal{Q}}$ (a triangle in this case). Here each probability distribution is represented as a point of $\mathbb{R}^3$. The larger the polygon, the greater the uncertainty of the estimate. In the figure $\mathcal{P}[\hat{b}]$ covers almost all the probability simplex, displaying a large degree of imprecision of the estimate due to lack of evidence.

\subsection{Computing expected pose estimates} \label{sec:mean-pose-estimates}

Point-wise information on the object's pose can be extracted from $\hat{b}$ in two different ways.

\subsubsection{Extracting a set of extremal point-wise estimates} \label{sec:extremal}

Each of the vertices (\ref{eq:prho}) of the credal set associated with the belief estimate $\hat{b}$ is a probability distribution on the approximate pose space $\tilde{\mathcal{Q}}$. We can then compute the associated expected pose as:
\begin{equation}\label{eq:mean-value}
\hat{q} = \sum_{k = 1}^T p(q_k) q_k.
\end{equation}
The set of such `extremal' estimates describes therefore an entire polytope of expected pose values in the object's pose space $\mathcal{Q}$. In the example, the expected poses for the vertices $p_1,p_4,p_5,p_6$ of $\mathcal{P}[b]$ are: 
\[
\hat{q}[p_1] = \frac{5}{6} q_1 + \frac{1}{6} q_3, \hspace{5mm} \hat{q}[p_4] = \frac{5}{6} q_2 + \frac{1}{6} q_3, \hspace{5mm} \hat{q}[p_5] = \frac{1}{3} q_2 + \frac{2}{3} q_3, \hspace{5mm} \hat{q}[p_1] = \frac{1}{3} q_1 + \frac{2}{3} q_3.
\]

\subsubsection{Extracting a point-wise estimate} \label{sec:pointwise}

An alternative way to extract a \emph{single} pose estimate $\hat{q}$ from the belief estimate consists in approximating $\hat{b}$ with a probability $\hat{p}$ on $\tilde{\mathcal{Q}}$, and then computing its mean value as above. The problem has been indeed extensively studied. In particular, Smets' \emph{pignistic function} \cite{smets05ijar} (see Chapter \ref{cha:state}, Equation (\ref{eq:pignistic-probability})):
\begin{equation}\label{eq:pignistic}
BetP[b](x) = \sum_{A \supseteq x} \frac{m_b(A)}{|A|} \;\;\; \forall x \in \Theta
\end{equation}
has been proposed within the framework of the Transferable Belief Model (\cite{smets05ijar}, Chapter \ref{cha:state} Section \ref{sec:tbm}) as the unique transformation which meets a number sensible of rationality principles. Geometrically, $BetP$ is nothing but the barycenter of the convex set of probabilities $\mathcal{P}[b]$ associated with $b$ (see Figure \ref{fig:ex-convex}). As such, it is quite consistent with the interpretation of belief functions as credal sets of probabilities.\\ Although other transforms such as the `relative plausibility of singletons' \cite{voorbraak89efficient,cuzzolin10amai,cuzzolin12ijar} and the `intersection probability' \cite{cuzzolin07smcb} have been proposed (compare Chapter \ref{cha:state}, Section \ref{sec:transformation}), the performances of the different pointwise transformations in the human pose tests presented here have been proven to be empirically comparable.

In the following, therefore, we will simply adopt the pignistic transform.

\subsection{Handling of conflict} \label{sec:conflict}

The mass the conjunctive combination (\ref{eq:conjunctive}) assigns to the empty set measures the extent to which the pieces of evidence to combine are in conflict. In the case of the evidential model, this mass is assigned by the input b.f.s (learned from the available feature values) to contradictory (disjoint) focal elements. 

In our pose estimation scenario, conflict can arise when combining feature evidence via $b'_1 \Ocap \cdots \Ocap b'_N$ for basically two reasons:
\begin{enumerate}
\item
the object is localized in an imprecise way (due to limitations of the trained detector), so that background features conflicting with the foreground information are also extracted; 
\item
occlusions are present, generating conflict for similar reasons. 
\end{enumerate}
A critical case is that in which all the focal elements of a particular measurement belief function have empty intersection with those of the other b.f.s to combine -- all the mass is assigned to $\emptyset$, and no estimation is possible.

When modeling the scarcity of training pairs via Dirichlet belief functions (Section \ref{sec:mf}), however, this extreme scenario never materializes, as each individual b.f. always has $\Theta_i$ as a focal element. In case of disagreement, then, some mass is always assigned to the focal elements of the remaining belief functions. As we argue in Section \ref{sec:inference}, combining Dirichlet belief functions amounts to assume that all the \emph{partial} combinations of feature evidence should be given some credit. Maybe, the reasoning goes, only a \emph{subset} of features is telling the truth \cite{schubert98fast}. Under the assumption that most features come from the foreground, this brings robustness to localization errors and presence of occlusions. 

In the following, therefore, we do not employ any explicit conflict resolution mechanism.

\subsection{Pose estimation algorithm} \label{sec:alg-pose}

Let us summarize the whole pose estimation procedure. Given an evidential model of the moving body with $N$ feature spaces, and given at time $t$ one or more test images, possibly coming from different cameras:

\begin{enumerate}
\item
the object detector learned during training is applied to the test image(s), returning for each of them a bounding box roughly containing the object of interest;
\item
$N$ feature values are extracted from the resulting bounding boxes, as during training;
\item
the likelihoods $\{\Gamma_i^j(y_i(t)), j=1,...,n_i\}$ of each feature value $y_i(t)$ with respect to the appropriate learned Mixture of Gaussian distribution on $\mathcal{Y}_i$ are computed (\ref{eq:soft-assignment});
\item
for each feature $i = 1,...,N$, a separate belief function
\[
b_i(t):2^{\Theta_i}\rightarrow [0,1] 
\]
on the appropriate feature space $\Theta_i$ is built from the set of likelihoods $\{\Gamma_i^j(y_i(t)), j = 1,...,n_i\}$ as in
Section \ref{sec:mf};
\item
all the resulting b.f.s $\{ b_i(t):2^{\Theta_i}\rightarrow [0,1] ,i=1,...,N \}$ are projected onto $\tilde{\mathcal{Q}}$ by vacuous extension (\ref{eq:vacuous-pose}), yielding a set of belief functions on $\tilde{\mathcal{Q}}$:
\[
\big \{ b'_i:2^{\tilde{\mathcal{Q}}} \rightarrow [0,1] , i=1,...,N \big \}
\]
\item
their conjunctive combination $\hat{b}(t) \doteq b'_1(t) \Ocap \cdots \Ocap b'_N(t)$ is computed via (\ref{eq:conjunctive});
\item
either:
\begin{enumerate}
\item the pignistic transform (\ref{eq:pignistic}) is applied to $\hat{b}(t)$, yielding a distribution on $\tilde{\mathcal{Q}}$ from which an expected pose estimate $\hat{q}(t)$ is obtained by (\ref{eq:mean-value}), or:
\item
the vertices (\ref{eq:prho}) of the convex set of probabilities $\mathcal{P}[\hat{b}(t)]$ associated with the current belief estimate $\hat{b}(t)$ are computed, and a mean pose estimate (\ref{eq:mean-value}) obtained for each one of them.
\end{enumerate}
\end{enumerate}


\subsection{Computational cost} \label{sec:computational}

\subsubsection{Learning} 

EM's computational cost is easy to assess, as the algorithm usually takes a constant number of steps to converge, $c \sim 5-10$, while at each step the whole observation sequence of length $T$ is processed, yielding $O(cNnT)$ (where again $N$ is the number of features, $n$ the average number of EM clusters, $T$ the number of samples collected in the training stage). This is quite acceptable for real-world applications, since this has to be done just once in the training session.\\ In the experiments of Section \ref{sec:results} the whole learning procedure in Matlab required some 17.5 seconds for each execution of EM on a rather old Athlon 2.2 GHz processor with $N = 5$ features, $n_i = n = 5$ states for each feature space, and $T = 1726$.

\subsubsection{Estimation}

Although the conjunctive combination (\ref{eq:conjunctive}) is exponential in complexity if naively implemented, fast implementations of $\Ocap$ exist, under additional constraints \cite{Denoeux02ijar}. Numerous approximation schemes have been proposed, based on Monte-Carlo techniques \cite{moral99montecarlo}. Furthermore, the particular form of the belief functions we use in the estimation process needs to be taken into account. Dirichlet b.f.s (\ref{eq:dirichlet}) have $n_i + 1$ non-zero focal elements, reducing the computational complexity of their pairwise combination from $O(2^{2n})$ (associated with the mass multiplication of all possible $2^n$ focal elements of the first b.f. and all the focal elements of the second b.f.) to $O(n^2)$. The computational cost of the other steps of the algorithm is negligible when compared to that of belief combination.


\section{Assessing evidential models} \label{sec:assessing}

A number of aspects of the evidential model architecture are strictly related to fundamental questions of the example-based pose estimation problem: 
\begin{enumerate}
\item
whether the model is self-consistent, i.e., whether it produces the correct ground truth pose values when presented with the training feature data; 
\item
what resolutions $\{n_i, i = 1,...,N\}$ of the features' MoG representations are adequate to guarantee a sufficient accuracy of the learned feature-pose mapping, and through the latter of the estimation process itself; 
\item
whether the training set of poses $\mathcal{\tilde{Q}}$ is a proper approximation of the unknown parameter space $\mathcal{Q}$ (see Figure \ref{fig:model}).
\end{enumerate}

As it turns out, those issues are related to discussing, respectively: 1) whether $\mathcal{\tilde{Q}}$ is the minimal refinement (Theorem \ref{the:minimal}) of the approximate feature spaces $\Theta_i$; 2) whether the selected features space are independent, in a way which we will precise in the following; 3) whether a flag can be derived to indicate the need to update the evidential model by adding more training poses.

\subsection{Model consistency and $\tilde{\mathcal{Q}}$ as minimal refinement} \label{sec:model-resolution}

In order for the model to return the correct ground truth pose when presented with a set of training feature values $\{y_i(k), i=1,...,N\}$ it is necessary that each sample in the training set $\mathcal{\tilde{Q}}$ be characterized by a distinct set of feature MoG components. Namely, no two training poses $q_1, q_2$ are allowed to be associated with feature components falling in the same cluster for each approximate feature space: 
\[
\not\exists q_1,q_2 \; s.t. \; y_i(q_1), y_i(q_2) \in \mathcal{Y}_i^{j_i} \; \forall i 
\]
for the same $j_1,...,j_N$.  

Imagine that the $N$ feature vector components $y_1,...,y_N$ generated by a test image are such that:
$
y_1 \in \mathcal{Y}_1^{j_1}, \cdots, y_N \in \mathcal{Y}_N^{j_N}.
$
Each piece of evidence $y_i \in \mathcal{Y}_i^{j_i}$ implies that the object's pose lies within the subset $\rho_i(\mathcal{Y}_i^{j_i})$ of the training set $\tilde{\mathcal{Q}}$. The estimated pose must then fall inside the set:
\begin{equation}\label{eq:intersections}
\rho_1(\mathcal{Y}_1^{j_1}) \cap \cdots \cap \rho_N(\mathcal{Y}_N^{j_N}) \subset
\tilde{\mathcal{Q}}.
\end{equation}
Sample object poses in the same intersection of the above form are \emph{indistinguishable} under the given evidential model. The collection of all the non-empty intersections of the form (\ref{eq:intersections}) is nothing but the {minimal refinement} $\Theta_1 \otimes \cdots \otimes \Theta_N$ of the FODs $\Theta_1,...,\Theta_N$ (recall Theorem \ref{the:minimal}). 

It follows that:
\begin{theorem}
Any two poses of the training set can be distinguished under the evidential model iff $\tilde{\mathcal{Q}}$ is the minimal refinement of $\Theta_1,...,\Theta_N$.
\end{theorem}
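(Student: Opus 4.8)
The statement to prove is an \emph{if and only if}: any two training poses $q_1, q_2 \in \tilde{\mathcal{Q}}$ are distinguishable under the evidential model precisely when $\tilde{\mathcal{Q}}$ is the minimal refinement $\Theta_1 \otimes \cdots \otimes \Theta_N$. The plan is to unwind both directions using the characterisation of the minimal refinement given in Theorem \ref{the:minimal}, namely that $\Theta_1 \otimes \cdots \otimes \Theta_N$ is (up to the canonical bijection) the frame whose singletons are exactly the non-empty intersections $\rho_1(\{\mathcal{Y}_1^{j_1}\}) \cap \cdots \cap \rho_N(\{\mathcal{Y}_N^{j_N}\})$, and that it is a coarsening of every other common refinement (Proposition \ref{pro:minimal}). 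I would first fix notation by recalling that ``$q_1, q_2$ are distinguishable under the model'' means precisely that there is \emph{no} choice of clusters $j_1, \ldots, j_N$ with $q_1, q_2 \in \rho_1(\mathcal{Y}_1^{j_1}) \cap \cdots \cap \rho_N(\mathcal{Y}_N^{j_N})$; equivalently, that $q_1$ and $q_2$ are never jointly assigned to the same element of every $\Theta_i$ simultaneously. This is the definition to be connected to minimality.

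\textbf{Necessity ($\Rightarrow$).} Suppose all distinct pairs in $\tilde{\mathcal{Q}}$ are distinguishable. By construction the maps $\rho_i : \Theta_i \to 2^{\tilde{\mathcal{Q}}}$ are refinings (shown in Section \ref{sec:evidential-models}), so $\tilde{\mathcal{Q}}$ is a common refinement of $\Theta_1,\ldots,\Theta_N$, and hence by Proposition \ref{pro:minimal} the minimal refinement $\Theta_1 \otimes \cdots \otimes \Theta_N$ is a coarsening of $\tilde{\mathcal{Q}}$; call the associated refining $\eta : 2^{\Theta_1 \otimes \cdots \otimes \Theta_N} \to 2^{\tilde{\mathcal{Q}}}$. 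Each element of $\Theta_1 \otimes \cdots \otimes \Theta_N$ corresponds to a non-empty intersection $I = \rho_1(\mathcal{Y}_1^{j_1}) \cap \cdots \cap \rho_N(\mathcal{Y}_N^{j_N})$, and $\eta$ maps that element to exactly this subset $I \subseteq \tilde{\mathcal{Q}}$. If some such $I$ contained two distinct poses $q_1 \neq q_2$, then $q_1, q_2$ would lie in a common intersection of the form (\ref{eq:intersections}), contradicting distinguishability. Hence every $I$ is a singleton, so $\eta$ maps singletons to singletons; being a refining it is also onto (its images partition $\tilde{\mathcal{Q}}$ and exhaust it), so $\eta$ is a bijection on singletons, i.e. $\tilde{\mathcal{Q}} = \Theta_1 \otimes \cdots \otimes \Theta_N$.

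\textbf{Sufficiency ($\Leftarrow$).} Conversely, suppose $\tilde{\mathcal{Q}} = \Theta_1 \otimes \cdots \otimes \Theta_N$. Then by Theorem \ref{the:minimal}, each $q \in \tilde{\mathcal{Q}}$ is (the image of) a singleton of the minimal refinement, hence of the form $\{q\} = \rho_1(\mathcal{Y}_1^{j_1(q)}) \cap \cdots \cap \rho_N(\mathcal{Y}_N^{j_N(q)})$ for a unique tuple $(j_1(q),\ldots,j_N(q))$. If $q_1, q_2$ were indistinguishable, there would exist a single tuple $(j_1,\ldots,j_N)$ with $q_1, q_2 \in \rho_1(\mathcal{Y}_1^{j_1}) \cap \cdots \cap \rho_N(\mathcal{Y}_N^{j_N})$; but this intersection is the image of one element of $\Theta_1 \otimes \cdots \otimes \Theta_N = \tilde{\mathcal{Q}}$, hence a singleton, forcing $q_1 = q_2$. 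So all distinct pairs are distinguishable. The main obstacle here is not any deep argument but a bookkeeping one: being careful about the identification between the abstract minimal-refinement frame and the concrete set $\tilde{\mathcal{Q}}$, and making sure the ``distinguishability'' notion is pinned down so that it is literally the negation of ``belonging to a common intersection (\ref{eq:intersections}).'' Once that translation is in place, both directions are short applications of Theorem \ref{the:minimal} and Proposition \ref{pro:minimal}.
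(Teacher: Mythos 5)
Your proof is correct and follows essentially the same route as the paper's: both directions reduce distinguishability to the statement that every non-empty intersection of the form (\ref{eq:intersections}) is a singleton containing exactly one training pose, which by Theorem \ref{the:minimal} is precisely the condition for $\tilde{\mathcal{Q}}$ to coincide with $\Theta_1 \otimes \cdots \otimes \Theta_N$. Your extra care with the refining $\eta$ and Proposition \ref{pro:minimal} only makes explicit the identification the paper leaves implicit.
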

\begin{proof}

$\Rightarrow$: if any two sample poses can be distinguished under the model, i.e., for all $k, k'$
\begin{equation}\label{eq:proof1}
q_{k'} \not \in \rho_1(\mathcal{Y}_1^{j_1}) \cap \cdots \cap \rho_N(\mathcal{Y}_N^{j_N}) \ni q_k,
\end{equation}
it follows that each intersection of the form (\ref{eq:intersections}) cannot contain more than one sample pose, otherwise there would exist a pair violating (\ref{eq:proof1}) (note that the intersection can instead be empty). Furthermore, each sample pose $q_k$ falls within such an intersection, the one associated with the visual words $\mathcal{Y}_1^{j_1}, \cdots, \mathcal{Y}_N^{j_N}$ s.t. $y_1(q_k) \in \mathcal{Y}_1^{j_1}$, ..., $y_N(q_k) \in \mathcal{Y}_1^{j_N}$. Hence, the minimal refinement of $\Theta_1$,...,$\Theta_N$ has as elements (\ref{eq:intersections}) all and only the individual sample poses (elements of $\tilde{\mathcal{Q}}$): therefore, $\tilde{\mathcal{Q}} = \Theta_1 \otimes \cdots \otimes \Theta_N$.

$\Leftarrow$: if $\tilde{\mathcal{Q}}$ is the minimal refinement of $\Theta_1$,...,$\Theta_N$ then for all $q_k\in \tilde{\mathcal{Q}}$ we have that $\{ q_k \} = \rho_1(\mathcal{Y}_1^{j_1}) \cap \cdots \cap \rho_N(\mathcal{Y}_N^{j_N})$ holds for some unique selection of feature components $\mathcal{Y}_1^{j_1}, \cdots, \mathcal{Y}_N^{j_N}$, distinct for each training pose. Any two different sample poses belong therefore to different intersections of the form (\ref{eq:intersections}), i.e., they can be distinguished under the model.
\end{proof}

The `self-consistency' of the model can then be measured by the ratio between the cardinality of the minimal refinement of $\Theta_1,...,\Theta_N$, and that of the actual approximate parameter space $\tilde{\mathcal{Q}}$:
\[
\frac{1}{T} \leq \frac{|\bigotimes_i \Theta_i|}{|\tilde{\mathcal{Q}}|} \leq 1.
\]
It is hence desirable, in the training stage, to select a collection of features which brings the minimal refinement $\Theta_1\otimes \cdots \otimes \Theta_N$ as close as possible to $\tilde{\mathcal{Q}}$: sometimes the addition of new features will be desirable in order to resolve any ambiguities.

\subsection{Complementarity of features and model quantization} \label{sec:model-complementarity}

When the approximate feature spaces $\Theta_i$ are independent (Definition \ref{def:indep}), for each combination of feature clusters $\mathcal{Y}_1^{j_1}, \cdots, \mathcal{Y}_N^{j_N}$ there exists a {unique} sample pose $q_k$ characterized by feature values in those clusters: 
\begin{equation} \label{eq:combination}
\{ q_k \} = \rho_1(\mathcal{Y}_1^{j_1}) \cap \cdots \cap \rho_N(\mathcal{Y}_N^{j_N}). 
\end{equation}
In this case different cues carry \emph{complementary} pieces of information about the object's pose -- to resolve an individual sample pose $q_k$ you need to measure all its feature values.\\ When the approximate feature spaces are \emph{not} independent, on the other hand, two situations may materialize: while in some cases fewer than $N$ feature values may be enough to resolve some training poses, in general each combination of feature values will yield a whole set of training poses.

If the model is self-consistent ($|\tilde{\mathcal{Q}}|=|\otimes_i \Theta_i|$, Section \ref{sec:model-resolution}) and the chosen features are complementary (i.e., they are such that $|\otimes_i \Theta_i| = \prod_i |\Theta_i|$), we have that $T = |\tilde{\mathcal{Q}}| \sim n_1\times ... \times n_N$: assuming $n_i = const = n$ this yields $n \sim \sqrt[N]{T}$. Given a realistic sampling of the parameter space with $T = 20000$, the use of $N = 9$ complementary features allows us to require no more than $\sqrt[9]{20000} \sim 3$ MoG components for each feature space in order to ensure a decent accuracy of the estimate.

This shows the clear advantage of encoding feature-pose maps \emph{separately}: as long as the chosen features are uncorrelated, a relatively coarse MoG representation for each feature space allows us to achieve a reasonable resolution in terms of pose estimates\footnote{In analogy to what proposed in \cite{viola01cvpr} or \cite{Meynet08}, where trees of classifiers are used for face pose estimation.}.

\subsection{On conflict and the relation between approximate and actual pose space} \label{sec:distribution}

Ideally, the set $\tilde{\mathcal{Q}}$ of training poses, as an approximation of the actual pose space $\mathcal{Q}$, should be somehow `dense' in $\mathcal{Q}$: $\forall q\in{{\mathcal{Q}}}$ there should be a sample $q_k$ such that $\|q-q_k\|<\epsilon$ for some $\epsilon$ small enough. Clearly, such a condition is hard to impose.\\ The distribution of the training poses within $\mathcal{Q}$ has nevertheless a number of consequences on the estimation process:

1) as the true pose space $\mathcal{Q}$ is typically non-linear, while the pose estimate is a linear combination of sample poses (see Section \ref{sec:belief-estimate}), the pointwise estimate can be non-admissible (fall outside $\mathcal{Q}$). This can be fixed by trying to make the feature spaces independent, as in that case every sample pose $q_k$ is characterized by a different combination (\ref{eq:combination}) of feature clusters.\\ Under this assumption, any set of test feature values $y_1 \in \mathcal{Y}_1^{j_1},..., y_N \in \mathcal{Y}_N^{j_N}$ generates a belief estimate in which a single sample pose $q_k$ is dominant. As a consequence, its credal set (Section \ref{sec:extremal}) is of limited extension around a single sample pose, and the risk of non-admissibility is reduced.

2) there can exist regions of $\mathcal{Q}$ characterized by combinations of feature clusters which are not in the model:
\[
\exists q \in \mathcal{Q} : \forall \mathcal{Y}_1^{j_1} \in \Theta_1, ..., \mathcal{Y}_N^{j_N} \in \Theta_N \;\;\; q \not \in \rho_1(\mathcal{Y}_1^{j_1}) \cap \cdots \cap \rho_N(\mathcal{Y}_N^{j_N}).
\]
This generates high level of conflict $m(\emptyset)$ in the conjunctive combination (\ref{eq:conjunctive}) (although combination is always guaranteed for Dirichlet belief functions, see above), a flag of the inadequacy of the current version of the evidential model. This calls, whenever new ground truth information can be provided, for an {update of the model} by incorporating the sample poses causing the problem.


\section{Results on human pose estimation} \label{sec:results}

We tested our Belief Modeling Regression technique in a rather challenging setup, involving the pose estimation of human arms and legs from two well separated views. While the bottom line of the evidential approach is doing the best we can with the available examples, regardless the dimensionality of the pose space, and without having at our disposal prior information on the object at hand, we ran test on articulated objects (one arm and a pair of legs) with a reasonably limited number of degrees of freedom to show what can be achieved in such a case. We demonstrate how the BMR technique outperforms competitors such as Relevant Vector Machines and GPR.

\subsection{Setup: two human pose estimation experiments} \label{sec:experimental-setup}

To collect the necessary ground truth we used a marker-based motion capture system \cite{Rosales00,Howe99} built by E-motion, a Milan firm. The number of markers used was 3 for the arm (yielding a pose space $\mathcal{Q}\subset \mathbb{R}^9$, using as pose components the 3D coordinates of the marker), and 6 for the pair of legs ($\mathcal{Q}\subset \mathbb{R}^{18}$). The person was filmed by two uncalibrated DV cameras (Figure \ref{fig:capture}).\\ In the training stage of the first experiment we asked the subject to make his arm follow a trajectory (approximately) covering the pose space of the arm itself, keeping his wrist locked and standing on a fixed spot on the floor to limit the intrinsic dimensionality of the pose space (resulting in 2 d.o.f.s for the shoulder and 3 for the elbow).\\ In the second experiment we tracked the subject's legs, assuming that the person was walking normally on the floor, and collected a training set by sampling a random walk on a small rectangular section of the floor. This is similar to what is done in other works, where the set of examples are taken for a particular family of motions/trajectories, normally associated with action categories such as the walking gait.\\ The length of the training sequences was 1726 frames for the arm and 1952 frames for the legs.

While the number of degrees of freedom was limited by constraining the articulated object (person) to performing motions of a specific class (walking versus brandishing an arm), the tests are sufficiently complex to allow us to illustrate the traits of the BMR approach to pose estimation. In addition, in both experiments the background was highly non-static, with people coming in and out the scene and flickering monitors. The object of interest would also occlude itself a number of times on at least one of the two views (e.g. sometimes one leg would occlude the other when seen from the left camera), making the experimental setup quite realistic.

\subsection{Automatic annotation of training images} \label{sec:annotation}

Under the assumptions listed in Section \ref{sec:problem} in the training stage the images ought to be annotated via a bounding box, providing a rough localization of the unknown object.
\begin{figure}[ht!]
\vspace{-0mm}
\begin{center}
\includegraphics[width = \textwidth]{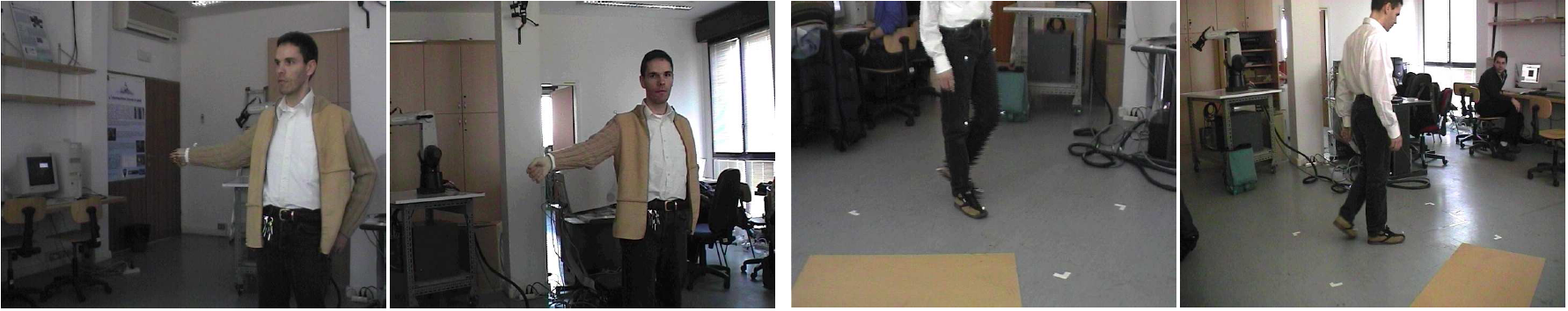}
\vspace{-0mm}
\end{center}
\caption{\label{fig:capture} Two human body-part pose estimation experiments. Left: training images of a person standing still and moving his right arm. Right: training images of the person walking inside a rectangle on the floor.}\vspace{-0mm}
\end{figure}

To simulate this annotation process, and isolate the performance of the proposed example based estimation approach from that of the object detector employed, in these tests we used color-based segmentation to separate the object of interest from the non-static background, implemented via a colorimetric analysis of the body of interest (Figure \ref{fig:feature}). Pixels were clustered in the RGB space; the cluster associated with the yellow sweater (in the arm experiment) or the black pants (legs one) was detected, and pixels belonging to that cluster assigned to the foreground. Finally, the minimal bounding box containing the  silhouette of the segmented foreground pixels was detected. 

Note that \emph{this is just a way of automatically generate, rather than manually construct, the bounding box annotation required in the assumptions of the initial scenario}: the notion that no a-priori information on the object of interest needs to be employed still holds.

\subsection{Feature extraction and modeling} \label{sec:features}

For these tests we decided to build an extremely simple feature vector for each image directly from the bounding box, as the collection $\max(row)$, $\min(row)$, $\max(col)$, $\min(col)$ of the row and column indexes defining the box (Figure \ref{fig:feature}).
\begin{figure}[ht!]
\vspace{-0mm}
\begin{center}
\includegraphics[width = 1 \textwidth]{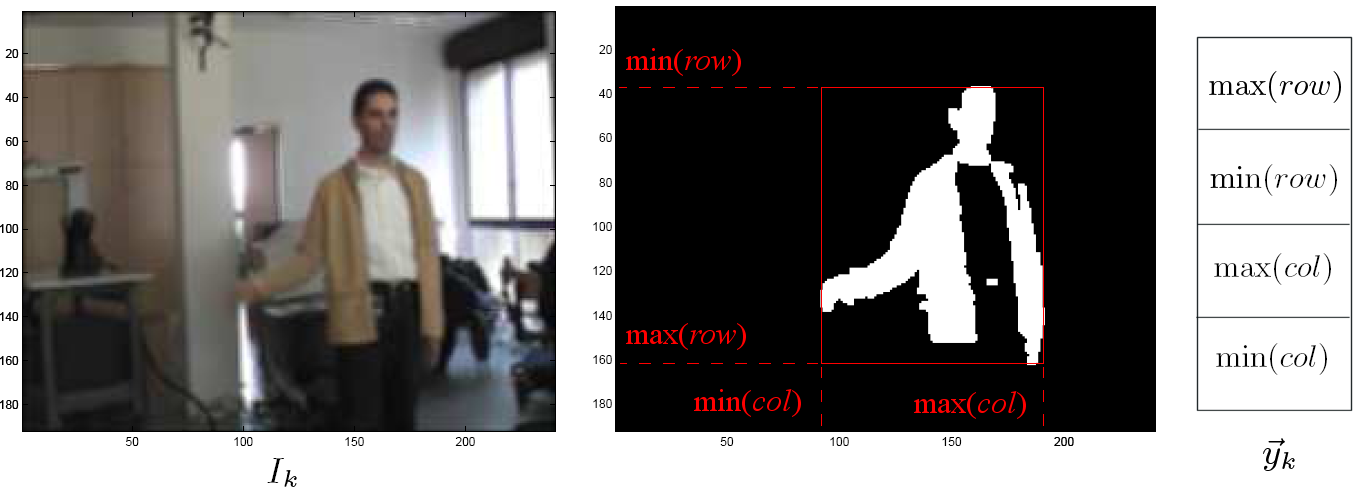}
\end{center}
\vspace{-0mm}
\caption{\label{fig:feature} Feature extraction process. Left: a training image $I_k$ in the arm experiment. Middle: the object of interest is color segmented and the bounding box containing the foreground is detected to simulate localization annotation. Right: the row and column indices of the vertices of the bounding box are collected in a feature vector $\vec{y}_k$.} \vspace{-0mm}
\end{figure}
As two views were available at all times, at each time instant two feature vectors of dimension 4 were computed from the two views.

In the arm experiment we built \emph{three} different evidential models from these vectors: one using $N=2$ features ($\max(row)$ and $\max(col)$) from the left view only, and a Mixture of Gaussians with $n_i = n = 5$ components for both feature spaces; a second model for the right view only, with $N=3$ feature spaces (associated with the components $\max(row)$, $\min(col)$ and $\max(col)$) and $n_i = n = 5$ MoG components for each feature space; an overall model in which both the 2 features from the left view and the 3 features from the right one were considered, yielding a model with $N=5$ feature spaces with the same MoG representation.

In the leg experiment, instead, we built two models with $N = 6$ feature spaces (the $\max(row)$, $\min(col)$ and $\max(col)$ feature components from both views), but characterized by a different number of Gaussian components ($n=4$ or $n=5$, respectively) to test the influence of the quantization level on the quality of the mapping (and therefore of the estimates).

\subsection{Performance} \label{sec:performance}

To measure the accuracy of the estimates produced by the different evidential models, we acquired a testing sequence for each of the two experiments and compared the results with the ground truth provided by the motion capture equipment. 
\begin{figure}[ht!]
\vspace{-0mm}
\begin{center}
\begin{tabular}{cc}
\includegraphics[width = 0.48 \textwidth]{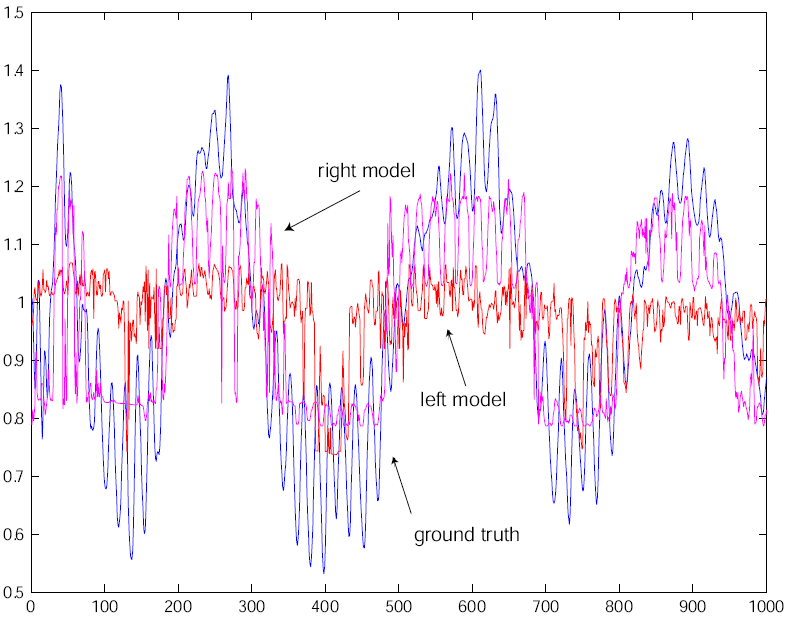}
&
\includegraphics[width = 0.48 \textwidth]{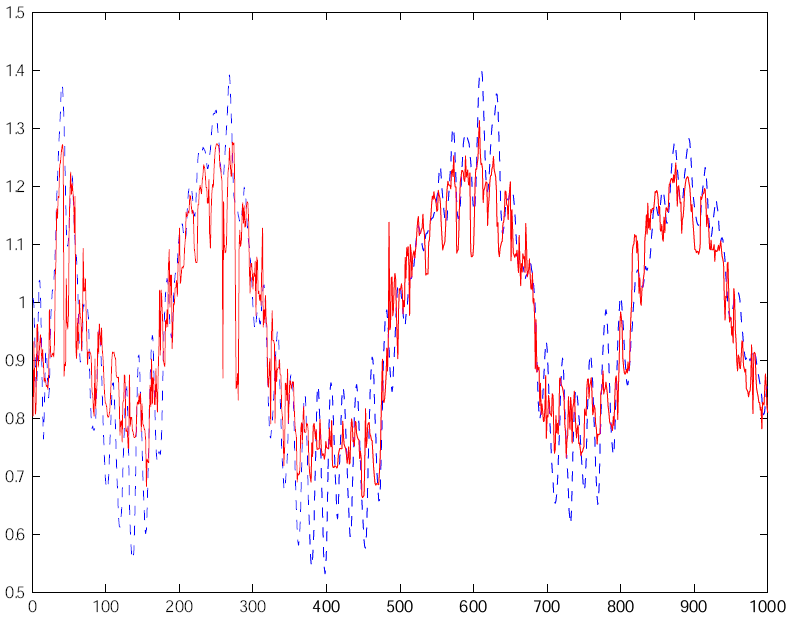}
\\
\includegraphics[width = 0.48 \textwidth]{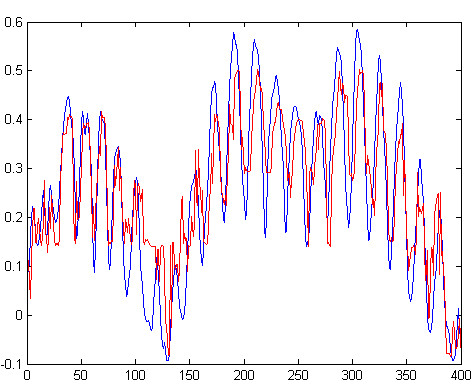}
&
\includegraphics[width = 0.48 \textwidth]{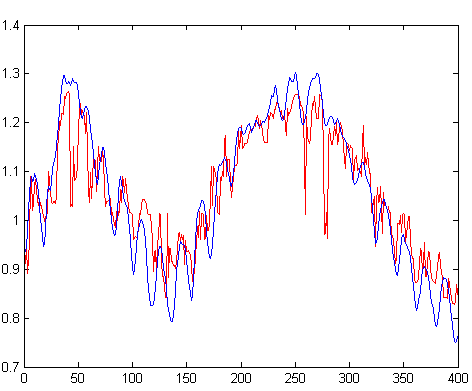}
\end{tabular}
\vspace{-0mm}
\end{center}
\caption{\label{fig:ambi} Top left: pose estimates of component 9 of the pose vector (Y coordinate of the hand marker) produced by the left (red) and right (magenta) model compared to the ground truth (blue), plotted against time. Top right: the sequence of pose estimates yielded by the overall model (which uses features computed in both left and right images) is plotted in (solid) red against the ground truth in (dashed) blue. Bottom: performance of the overall model on components 1 (left) and 6 (right) of the pose vector, for the first 400 frames of the test sequence. The pignistic function is here used to compute the pointwise estimates.} \vspace{-0mm}
\end{figure}

\subsubsection{Arm experiment} \label{sec:arm-experiment}

In the arm experiment the test sequence was 1000 frames long. Pointwise pose estimates were extracted from belief estimates via pignistic transform (\ref{eq:pignistic}). As the anecdotal evidence of Figure \ref{fig:ambi}-top left indicates, the estimates of the single-view models were of rather poor quality. Indeed, recalling the discussion of Section \ref{sec:model-resolution}, the minimal refinements $\bigotimes\Theta_i$ for the left-view and the right-view models were of size 22 and 80 respectively, signalling a poor model resolution. In opposition, the estimates obtained by exploiting image evidence from both views (Figure \ref{fig:ambi}-top right) were clearly better than a simple selection of the best partial estimate at each instant. This was confirmed by a minimal refinement $\bigotimes\Theta_i$ for the overall model with cardinality equal to 372 (the $N=5$ features encoded by a MoG with $n=5$ components were enough to resolve 372 of the 1700+ sample poses), with 139 sample poses individually resolved by some particular combination of the $N=5$ feature values. Figure \ref{fig:ambi}-bottom illustrates similar results for components 1 and 6 of the pose vector, in the same experiment.

We also measured the Euclidean distance between real and expected 3D locations of each marker over the whole testing sequence. For the arm experiment, the average estimation errors were 17.3, 7.95, 13.03, and 2.7 centimeters for the markers `hand', `wrist', `elbow' and `shoulder', respectively. As during testing the features were extracted from the estimated foreground, and no significant occlusions were present, the conflict between the different feature components was negligible throughout the test sequence.

\subsubsection{Lower and upper estimates associated with the credal estimate} \label{sec:interval-pose}

As each belief estimate $\hat{b}$ amounts to a convex set $\mathcal{P}[\hat{b}]$ of probability distributions on $\tilde{\mathcal{Q}}$, an expected pose estimate can be computed for each of its vertices (\ref{eq:prho}).
\begin{figure}[ht!]
\vspace{-0mm}
\begin{center}
\begin{tabular}{c}
\includegraphics[width = 0.9 \textwidth]{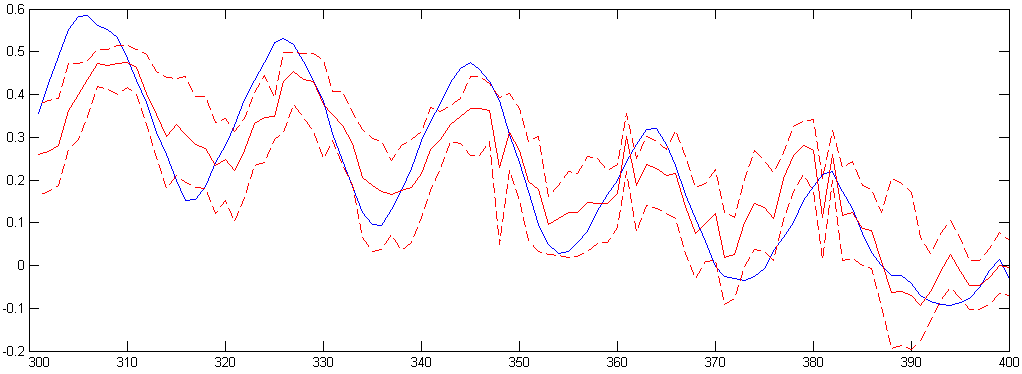} \vspace{-0mm} \\ \includegraphics[width = 0.9 \textwidth]{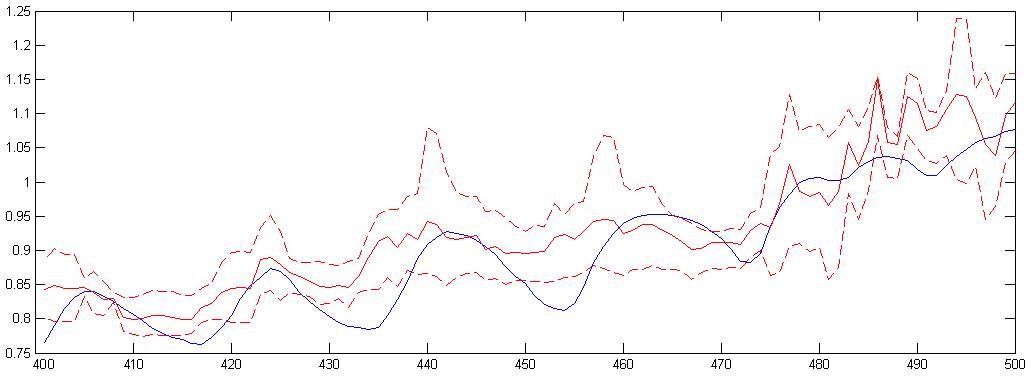} \vspace{-0mm} \\ \includegraphics[width = 0.9 \textwidth]{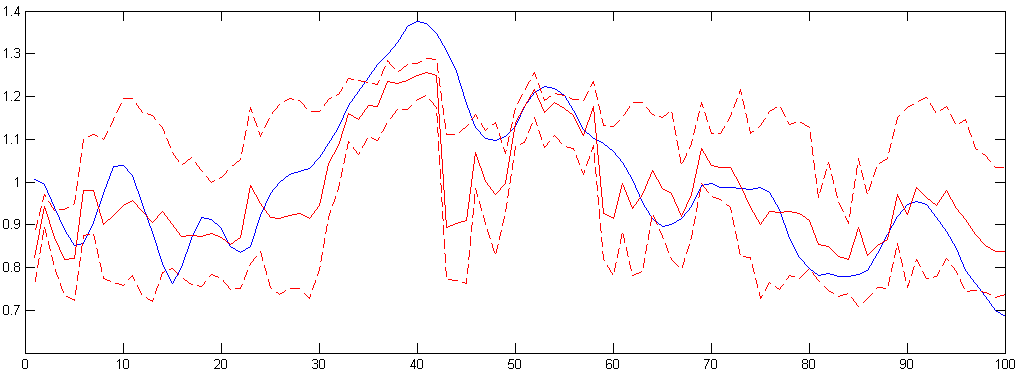} \vspace{-0mm}
\end{tabular}
\end{center} \vspace{-0mm}
\caption{\label{fig:interval} Plots of lower and upper expected pose estimates ((\ref{eq:mean-value}), in dashed red) generated by the credal sets associated with the sequence of belief estimates $\hat{b}(k)$, versus the pignistic estimate (solid red) and the ground truth (in blue). Top: component 1 of the pose vector, test sequence from $k=300$ to $k=399$. Middle: component 6, test frames from $k=400$ to $k=499$. Bottom: component 9, test frames from $k=1$ to $k=100$.} \vspace{-0mm}
\end{figure}
The BMR approach can therefore provide a robust pose estimate, for instance by computing for each instant $t$ the maximal and minimal expected value (over the vertices of $\mathcal{P}[\hat{b}]$) of each component of the pose vector.

Figure \ref{fig:interval} plots these upper and lower bounds to the expected pose values in the arm experiment, for three different components of the pose vector, over three distinct subsequences of the test sequence.\\ As it can be clearly observed, even for the rather poor (feature-wise) evidential model built here, most of the time the true pose falls within the provided interval of expected pose estimates. Quantitatively, the percentage of test frames in which this happens for the twelve pose components is 49.25\%, 44.92\%, 49.33\%, 50.50\%, 48.50\%, 48.33\%, 49.17\%, 54.42\%, 49.67\%, 51.50\%, 39.33\% and 43.50\%, respectively. We can also measure the average Euclidean distance between the true pose estimate and the \emph{boundary} of the interval of expected poses, for the four markers and along the entire test sequence: we obtain average 3D distances of 7.84cm, 3.85cm, 5.78cm and 2.07cm for the four markers, respectively. These give a better indication of the robustness of the BMR approach than errors measured with respect to a central expected pose estimate (see Figure \ref{fig:visual}-right for a comparison).
\begin{figure}[ht!]
\vspace{-0mm}
\begin{center}
\includegraphics[width = \textwidth]{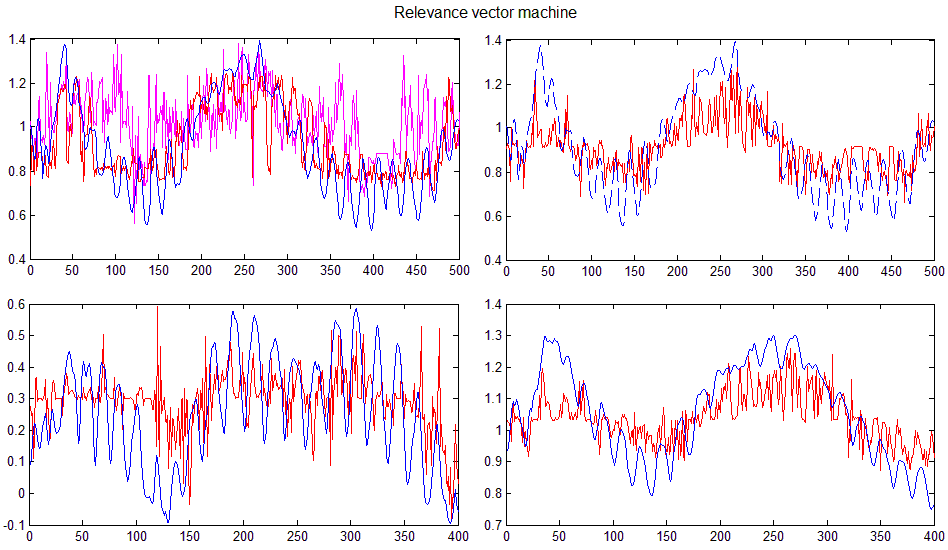}
\end{center}
\vspace{-0mm}
\caption{\label{fig:rvm} Top left: pose estimates of component 9 of the pose vector (Y coordinate of the hand) produced by an RVM using only the left (red) and right (magenta) features, compared to the ground truth (blue), plotted against time. Top right: pose estimates yielded by a RVM regression model which uses features computed in both left and right images plotted in (solid) red against the ground truth in (dashed) blue. Bottom: performance of the overall model on components 1 (left) and 6 (right) of the pose vector, for the first 400 frames of the test sequence.} \vspace{-0mm}
\end{figure}

Note that in these tests the pose estimate interval was computed \emph{using just a subset of the true vertices} of the belief estimate, for mere computational reasons. The true interval is indeed wider, and therefore associated with even lower average estimation errors. 

\subsubsection{Comparison with GP and RVM regression} It is interesting to compare BMR's performance with that of two well established regression approaches: {Gaussian Process Regression} \cite{bo09cvpr,rudovic11iccv} and \emph{Relevant Vector Machines} (RVMs) \cite{tipping01rvm}. The latter are used to build feature-pose maps in, for instance, \cite{agarwal06pami} and \cite{Thayananthan06eccv}. Figure \ref{fig:rvm} shows the estimates produced by a RVM on the same test sequences and components of Figure \ref{fig:ambi}. It is clear from a visual comparison of Figures \ref{fig:rvm} and \ref{fig:ambi} that our approach significantly outperforms a standard RVM implementation. Quantitatively, the average Euclidean distances between real and estimated 3D location of each marker over the whole arm testing sequence were, in the RVM tests, 31.2, 13.6, 23.0, and 4.5 centimeters for the markers `hand', `wrist', `elbow' and `shoulder', respectively.

Figure \ref{fig:gpr} shows instead the estimates produced by Gaussian Process Regression for the same experimental setting of Figures \ref{fig:rvm} and \ref{fig:ambi}. A visual inspection of Figures \ref{fig:gpr} and \ref{fig:ambi} shows a rather comparable performance with that of the BMR approach, although the partial models obtained from left and right view features only seem to perform relatively poorly.\\
Quantitatively, however, the average Euclidean distances between real and estimated 3D location of each marker over the whole arm testing sequence were, in the GPR tests, 25.0, 10.6, 18.6, and 7.0 centimeters for the markers `hand', `wrist', `elbow' and `shoulder', respectively, showing how our belief-theoretical approach clearly outperforms this competitor as well.
\begin{figure}[ht!]
\vspace{-0mm}
\begin{center}
\includegraphics[width = \textwidth]{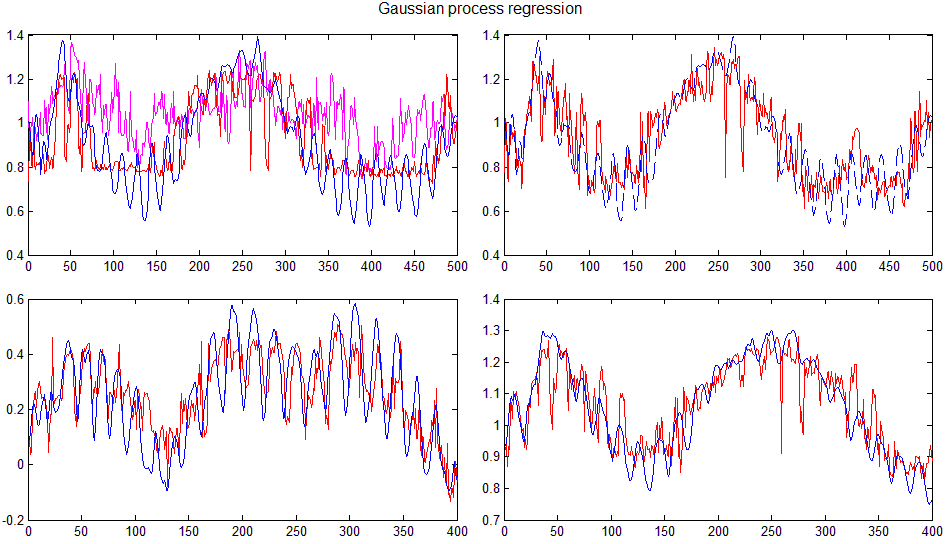}
\end{center}
\vspace{-0mm}
\caption{\label{fig:gpr} GPR pose estimates in the same experimental setting of Figures \ref{fig:ambi} and \ref{fig:rvm}.} \vspace{-0mm}
\end{figure}

Figure \ref{fig:gpr-confidence} plots the confidence intervals of the estimates produced by Gaussian Process Regression for the same test sequences of Figure \ref{fig:interval}. A confidence level of 95\% (corresponding to an interval of two standard deviations) is used.\\ We want to stress, however, the difference between the confidence band (shown in Figure \ref{fig:gpr-confidence}) associated with a \emph{single} Gaussian distribution on the outputs (poses) (such as the prediction function $p(q|y,\tilde{Q},\tilde{y})$ of a GPR) which is characterized by a \emph{single} mean estimate and a (co)-variance, and the \emph{interval} of \emph{expected (mean) poses} associated with a belief estimate (which amounts to entire family of probability distributions) shown in Figure \ref{fig:interval}.\\ This is a consequence of the second-order uncertainty encoded by belief functions, as opposed to single classical probability distributions. Indeed, for each vertex of the credal estimate produced by BMR we could also compute (besides an expectation) a covariance and a confidence band: the cumulated confidence bands for all Probability Distribution Functions (PDFs) in the credal estimate would be a fairer comparison for the single confidence band depicted in Figure \ref{fig:gpr-confidence}, and would better illustrate the approach's robustness.

\subsubsection{Testing models of different resolutions in the legs experiment}

Figure \ref{fig:legs} shows BMR's performance in the leg experiment, for a 200-frame-long test sequence. Again, the pignistic transform was adopted to extract a pointwise pose estimate at each time instant. The estimates generated by two models with the same number of feature spaces ($N=6$) but different number of MoG components per feature space ($n=5$, red; $n=4$, magenta) are shown, to analyze the effect of quantization on the model. 
\begin{figure}[ht!]
\vspace{-0mm}
\begin{center}
\includegraphics[width = 0.75 \textwidth]{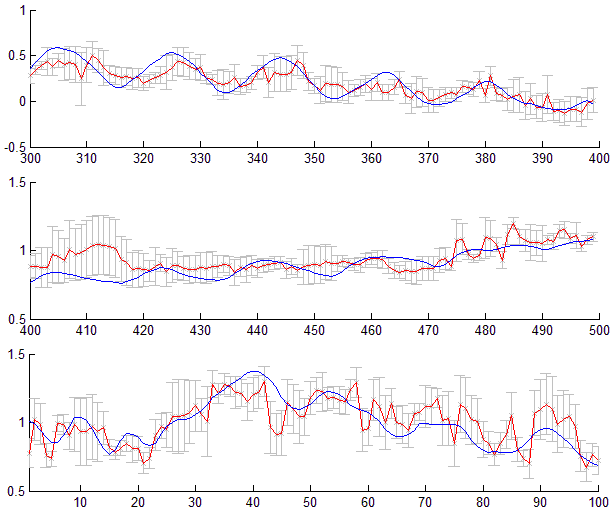}
\end{center}
\vspace{-0mm}
\caption{\label{fig:gpr-confidence} The confidence intervals (at two standard deviations) associated with GPR estimates (in red) are plotted here against the ground truth (in blue) for the same test sequences of Figure \ref{fig:interval}.} \vspace{-0mm}
\end{figure}

The results were a bit less impressive (but still good), mainly due to the difficulty of automatically segmenting a pair of black pants against a dark background (see Figure \ref{fig:capture}-right). Again, this cannot be considered an issue of the BMR approach, as annotation is supposed to be given in the training stage. A quantitative assessment returned average estimation errors (for the pignistic expected pose estimate and the model with $n=5$) of 25.41, 19.29, 21.84, 19.88, 23.00, and 22.71 centimeters, respectively, for the six markers (located on thigh, knee and toe for each leg). Consider that the cameras were located at a distance of about three meters. No significant differences in accuracy could be observed when reducing the number of MoG components to 4. As in the arm experiment, no significant conflict was reported. In this sense these tests did not allow us to illustrate the ability of the evidential approach to detect foreground features in the case of occlusions or imprecise localization: more challenging tests will need to be run in the near future.
\begin{figure}[ht!]
\vspace{-0mm}
\begin{center}
\includegraphics[width =\textwidth]{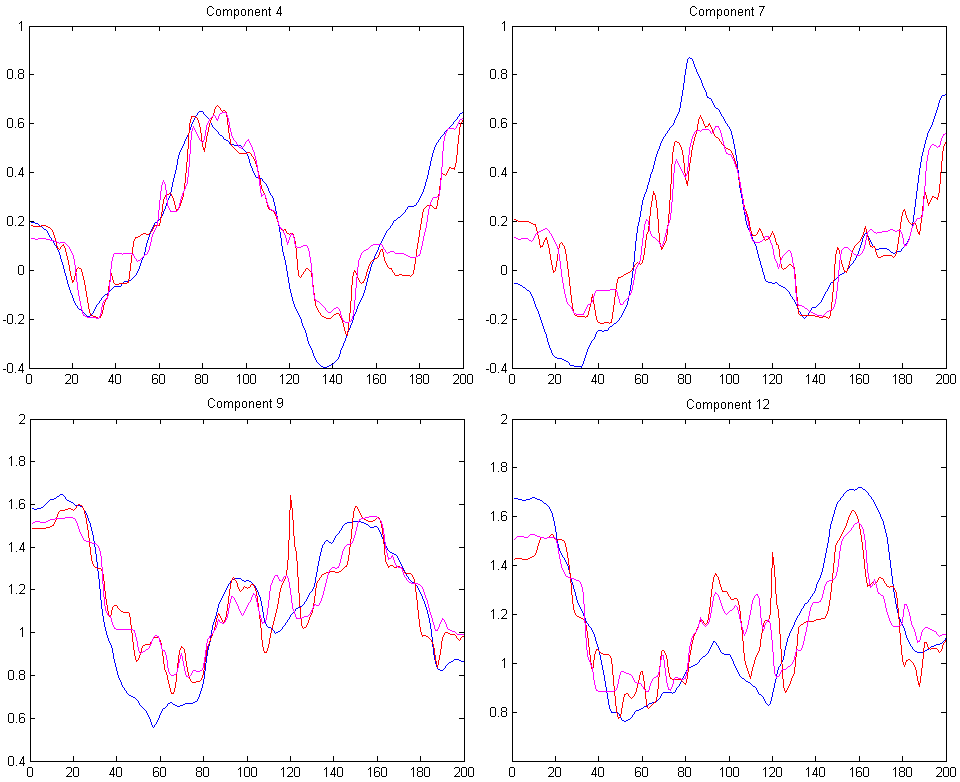}
\end{center}
\vspace{-0mm}
\caption{\label{fig:legs} Performance of two versions of the two-view evidential model with $N = 6$ feature spaces, in the leg experiment, on a test sequence of length 200. The pignistic expected pose is computed for a number of MoG components equal to $n_i = n = 5$ for each feature space (red), and a model with $n_i = n = 4$ (magenta), and plotted versus the ground truth (blue). The estimates for components 4, 7, 9 and 12 of the 18-dimensional pose vector (the 3D coordinates of each of the 6 markers) are shown.} \vspace{-0mm}
\end{figure}

\subsubsection{When ground truth is not available: visual estimates}

When ground truth is not available in the training stage, the pignistic probability $\hat{p}$ on $\tilde{\mathcal{Q}}$ extracted from the belief estimate $\hat{b}$ can be used to render, given a test image, a visual estimate in terms of the weighted sum of sample images:
\begin{equation} \label{eq:vis}
\hat{I} = \sum_{k=1,...,T} \hat{p}(q_k) \cdot I(k). 
\end{equation}
Figure \ref{fig:visual} compares the results of this visual estimate with the corresponding, real, test image. The accuracy of this visual reconstruction can be easily appreciated. Some fuzzyness is present, due to the fact that the visual estimate is the extrapolation of possibly many sample images, and expresses the degree to which the estimate is inaccurate.
\begin{figure}[ht!]
\vspace{-0mm}
\begin{center}
\includegraphics[width = 0.95\textwidth]{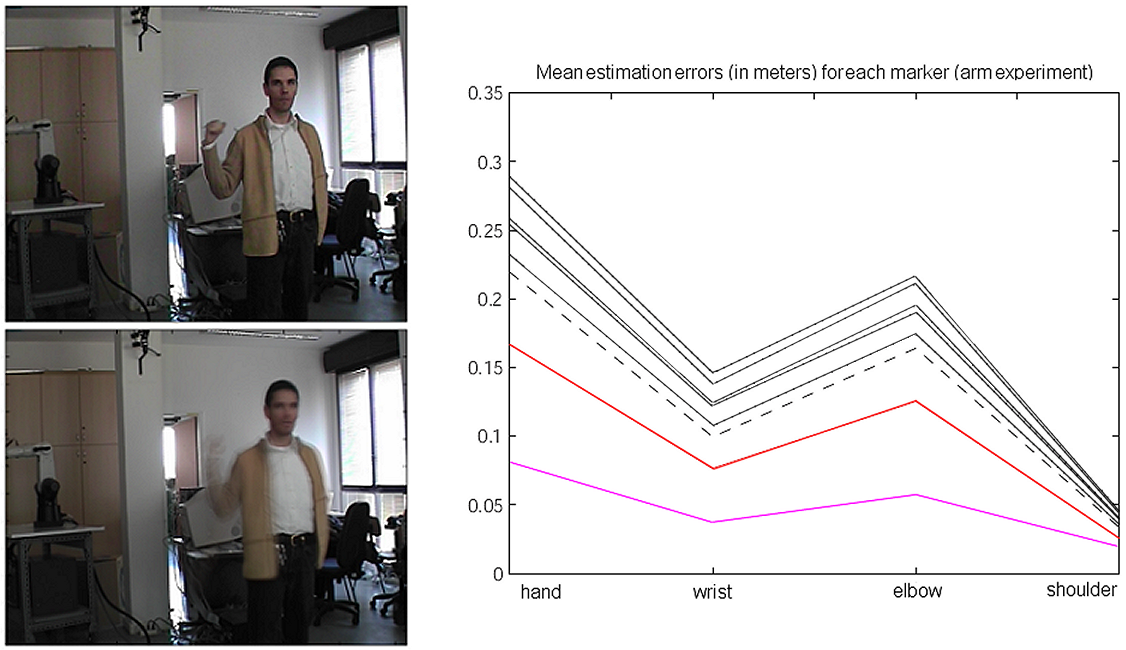}
\end{center}
\vspace{-0mm}
\caption{\label{fig:visual} Left: visual comparison between a real test image of the arm experiment (top) and the
corresponding visual reconstruction (bottom) (\ref{eq:vis}). Right: mean estimation errors for each of the four markers throughout the test sequence of the arm experiment. The errors delivered by the multiple feature space model are compared to those produced by a number of single (vectorial) feature space model. When each scalar feature component is considered separately and combined by conjunctive rule (solid red), rather than being piled up in a single observation vector, the performance is significantly superior. The dashed black line corresponds to a single feature space with $n=30$ Gaussian components. Solid black lines are associated with a quantization level of $n = 20,10,7,5$ and 3, respectively. In magenta the average 3D distance from the interval of expected poses (Section \ref{sec:interval-pose}) delivered by the evidential model of Figure \ref{fig:ambi} is plotted.} \vspace{-0mm}
\end{figure}

\subsubsection{Conjunctive combination versus vectorization} \label{sec:comparison-hmm}

Finally, it is interesting to assess the advantage of combining a number of separate belief functions for each component of the feature vector, rather than piling up all the features in a single observation vector. As a term of comparison, therefore, we applied the same estimation scheme to a \emph{single} feature space, composed by whole feature vectors, rather than the collection of spaces associated with individual feature components. We applied EM to the set of training feature vectors, with a varying number $n$ of MoG clusters.

Figure \ref{fig:visual}-right plots the different average estimation errors for the four markers in the arm experiment along the whole testing sequence of length 1000, as produced by the two-view, multiple feature space evidential model versus a single feature space one generated by applying EM to whole feature vectors. The pignistic function was again used here to compute the point-wise expected estimate. The solid red line represents the performance of the multiple feature space model, versus a number of black lines associated with single feature space models with a number of MoG clusters $n$ equal to 3, 5, 7, 10, 20 and 30, respectively. 

In a way, these tests \emph{compare the efficacy of the conjunctive combination of belief functions to that of vectorization as a data fusion mechanism}. Not only the former proves to be superior, but the plot suggests that, after a certain threshold, increasing the number of MoG components does not improve estimation performance anymore. Figure \ref{fig:compare_single} visually compares the quality of the estimates for two components (2 and 4) of the pose vector on a 100-frame long sub-sequence of the testing sequence. Even though (in principle) there is no reason why quantizing a single, vectorial feature space should yield poor performances, in practice it is impossible to learn the parameters of a Mixture of Gaussians with a number of states comparable to the product $n_1 \cdot ... \cdot n_N$ of the number of clusters of the $N$ separate feature spaces. The EM algorithm is unable to converge: the best we can get to is a few dozen states, a number insufficient to guarantee an adequate estimation accuracy.
\begin{figure}[ht!]
\vspace{-0mm}
\begin{center}
\includegraphics[width = \textwidth]{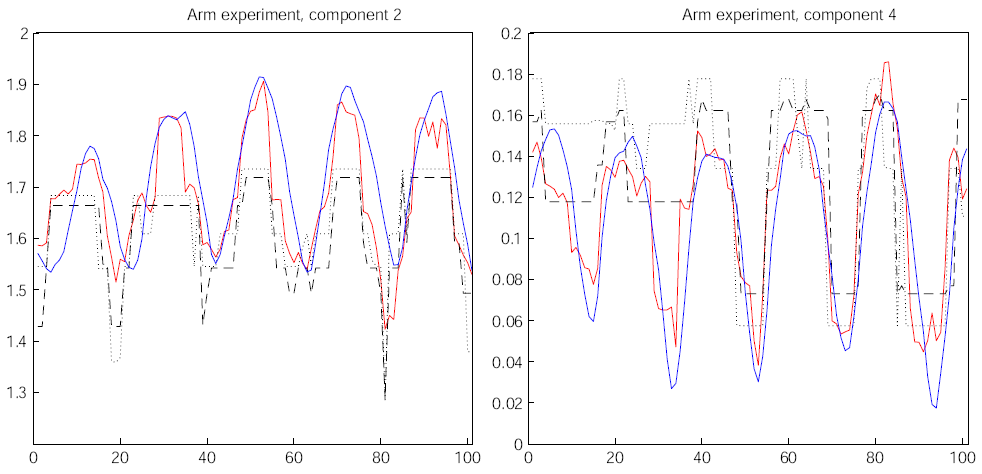}
\end{center}
\vspace{-0mm}
\caption{Visual comparison between the estimates yielded by the belief combination of scalar features (solid red line) and those produced by a single (vectorial) feature space with $n=20$ (dotted black) and $n=30$ (dashed black) Gaussian components, versus the provided ground truth (solid blue), in the arm experiment. A 100-frame long interval of the testing sequence is considered. Results for components 2 and 4 of the pose vector are shown. \label{fig:compare_single}} \vspace{-0mm}
\end{figure}

\section{Discussion} \label{sec:discussion}

We wish to conclude by discussing the methodological justification of the proposed regression framework, in the light of the problem to solve and in comparison with similar approaches, in particular Gaussian Process regression.

\subsection{Justification}

In the scenario depicted in Section \ref{sec:problem}, any regression method we design needs to represent the feature-to-pose mapping $y \mapsto q$, which is unknown.\\ Consider first the case of a single feature function. 

\subsubsection{Naive interpolation} 

The training data $\{ \tilde{\mathcal{Q}}, \tilde{\mathcal{Y}} \}$ already provides us with a first, rough approximation of the unknown mapping. A naive regression approach may, for instance, apply to any test feature value $y$ a simple linear interpolator 
\[
y \mapsto \sum_k w_k q_k 
\]
with coefficients $w_k$ depending on some distance $d(y,y_k)$ between $y$ and each training feature $y_k$. We obtain a one-to-one, piecewise linear map (see Figure \ref{fig:justification}-left) which, when the training samples are dense in the unknown pose space $\mathcal{Q}$, deliver a decent approximation of the (also unknown) feature-to-pose mapping. 

Such a naive interpolator, however, does not allow us to express any uncertainty due to lack of training information. Also, although the source of ground truth provides a single pose value $q_k$ for each sample feature value $y_k$, (self-)occlusions and projection ambiguities mean that each observed feature value $y$ (including the sample feature values $y_k$) can be generated by a continuum $\mathcal{Q}(y)$ of admissible poses. In particular, this is true for the extremely simple bounding box features implemented in Section \ref{sec:features}.\\ When presented with a training feature value $y_k$ during testing, our naive interpolator associates it with the corresponding training pose $q_k$, which is in fact only one of a continuous set of poses $\mathcal{Q}(y_k)$ that could have generated that particular feature value.

\begin{figure}[ht!]
\begin{center}
\includegraphics[width = \textwidth]{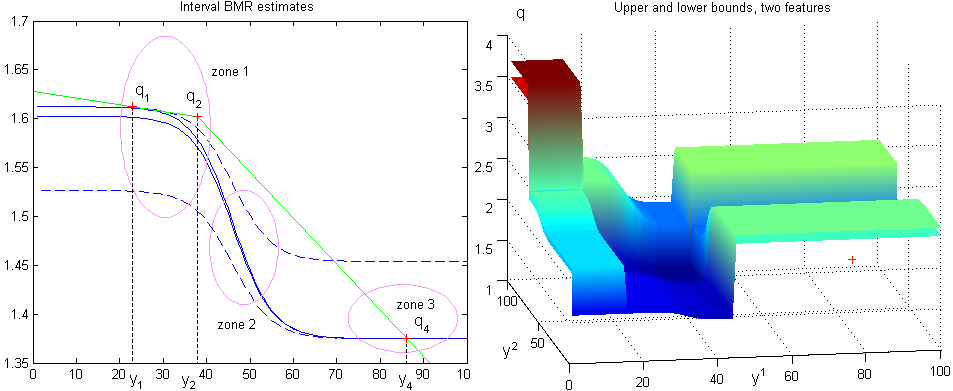}
\end{center}
\caption{Left: lower and upper bounds to the pose estimate generated by a single-feature evidential model. We picked component $c=2$ of the sample poses $q_1$, $q_2$ and $q_4$ of the training sequence of the arm experiment of Section \ref{sec:results}, and built a single-feature evidential model using as training feature values $y_1=23$, $y_2=38$ and $y_4=86$ and $n=2$ EM clusters (corresponding to $\{q_1,q_2\}$ and to $\{q_4\}$). A simple linear interpolator (in green) yields a 1-1, piecewise feature-to-pose map which does a decent job when the samples are dense in $\mathcal{Q}$. Using Bayesian belief functions to encode feature values, the uncertainty on the feature value in each cluster is smoothly propagated to the entire range of feature values, obtaining the solid blue lower and upper bounds. Using Dirichlet belief functions delivers wider, more cautious bounds (dashed blue). Right: the conjunctive combination (\ref{eq:conjunctive}) of multiple features generates a framework with much expressive power in terms of the family of mappings modeled. Here the complex, but still smooth, shape of the lower and upper bounds generated by an evidential model with two feature spaces in the same toy experiment is shown. \label{fig:justification}}
\end{figure}

\subsubsection{Intervals of pose estimates in the cluster-refining framework} 

The most interesting and realistic situation is that in which the training samples are sparse in $\mathcal{Q}$. Even assuming that the sought map is one-to-one (which is not), any regressor will be uncertain about the value of the pose far from the available samples. What we need, is to be able to: 1- express the uncertainty on the value of the map far from the samples, but also 2- express the fact that to any $y$ may correspond an entire set $\mathcal{Q}(y)$ of poses.
The evidential modeling framework of Section \ref{sec:approximate-feature-spaces} addresses, to some extent, both these questions, as: i) it provides \emph{an interval} of (rather than pointwise) estimates in the regions of $\mathcal{Q}$ covered by samples (clusters); ii) it describes the uncertainty on the value of the pose far from the samples. Let us see how.

In the regions covered by samples the interval of possible poses is estimated on the basis of the interval $[q_{min},q_{max}]$ of sample pose values in the cluster $\tilde{\mathcal{Q}}_k$, where: 
\[
\begin{array}{cc}
q_{min} = \min_{q_k \in \tilde{\mathcal{Q}}_k} q_k, & q_{max} = \max_{q_k \in \tilde{\mathcal{Q}}_k} q_k. 
\end{array}
\]
The rationale is that if close feature values $y_k$ yield different poses $q_k$s, this may signal what we called `inherent' ambiguity (in that region of $\mathcal{Q}\times \mathcal{Y}$ each feature value $y$ may be generated by a wide interval $\mathcal{Q}(y)$ of admissible poses): see zone 1 in Figure \ref{fig:justification}-left. 

Far from the clusters (zone 2 of Figure \ref{fig:justification}-left) this interval uncertainty is propagated via Equation (\ref{eq:soft-assignment}), by assigning a total weight $\sum_k w_k = \Gamma^j(y)/Z$ to the ensemble of samples of each cluster (equivalently, by defining a belief function on the collection $\tilde{\mathcal{Q}}$ of sample poses).\\ Given an interpolator function $\mathcal{I} : \mathcal{Y} \rightarrow \mathcal{Q}$:
\begin{equation} \label{eq:interpolator}
\hat{q} = \mathcal{I}(\{ p_k,q_k \}, y ) 
\end{equation}
mapping a feature value $y$ to a pose vector $\hat{q}$ (given a certain probability distribution $\{p_k\}$ on the training poses $\{q_k\}$), this translates into an interval of admissible poses for each test feature $y$. 

The overall effect of the cluster-refining framework which is the building block of our evidential model is a `robustification' of the estimates produced by the chosen interpolator function $\mathcal{I}$ (observe Figure \ref{fig:justification}, zone 2 for the case of the expectation interpolator of Equation (\ref{eq:mean-value})). Still, isolated values which form clusters on their own are taken at face value (zone 3, as in GPR): this is an undesirable but unlikely result of EM clustering, which takes place whenever the number of clusters $n$ is much smaller than the number of training poses $T$.

\subsubsection{Expressive power in terms of a family of mapping} 

In a single-feature evidential model, then, the learned refining does not constitute an approximation of the true feature-pose map under the model, but determines a constraint on the latter associated with a whole family of feature-pose mappings compatible with the given training observations. Such admissible maps are those and only those which would generate the learned refinings given the same training data. They form an $\infty$-dimensional family, bounded by an upper and a lower admissible feature-to-pose functions. We can prove that these lower and upper mappings are smooth, due to the smoothness of the Gaussian likelihoods $\Gamma$. 

For sake of simplicity we consider here a single feature model, and set to zero the mass $m(\Theta)=0$ of the approximate feature space $\Theta$ (compare Section (\ref{sec:mf})).

\begin{theorem} \label{the:bounds}
Suppose the interpolator function (\ref{eq:mean-value}) is used to infer a pose estimate $\hat{q}(y)$ from a feature value $y$, given a probability distribution $\{ p_k, k=1,...,T \}$ on the set of training poses $\tilde{\mathcal{Q}} = \{ q_k, k=1,...,T \}$. Then, for each component $q^c$ of the pose vector $q$, both the upper bound $\sup \hat{q}^c(y)$ and the lower bound $\inf \hat{q}^c(y)$ to the admissible pose estimates under a single-feature evidential model for all possible test feature values $y \in \mathcal{Y}$ are smooth functions of $y$.
\end{theorem}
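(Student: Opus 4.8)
The plan is to reduce the statement to an explicit closed form for the two bounds as a ratio of sums of Gaussian densities with a strictly positive denominator, and then invoke the fact that Gaussian densities are $C^\infty$. First I would make explicit the belief function on $\tilde{\mathcal{Q}}$ that a test feature value $y$ induces in the single-feature model with $m(\Theta)=0$. Since the approximate feature space $\Theta=\{\mathcal{Y}^1,\dots,\mathcal{Y}^n\}$ is (by construction, Section \ref{sec:approximate-feature-spaces}) a partition of $\mathcal{Y}$ and $\rho$ is a refining, the images $\tilde{\mathcal{Q}}^j\doteq\rho(\{\mathcal{Y}^j\})$ form a \emph{disjoint} partition of $\tilde{\mathcal{Q}}$ (and each is non-empty, by condition 1 of Definition \ref{def:refining}). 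Normalising the soft assignment (\ref{eq:soft-assignment}) gives the Bayesian b.f. $p_j(y)=\Gamma^j(y)/\sum_k\Gamma^k(y)$ on $\Theta$, which the multi-valued mapping $\rho$ carries (via (\ref{eq:belvalue})) to a belief function $b_y$ on $\tilde{\mathcal{Q}}$ whose focal elements are exactly $\tilde{\mathcal{Q}}^1,\dots,\tilde{\mathcal{Q}}^n$ with masses $m_{b_y}(\tilde{\mathcal{Q}}^j)=p_j(y)$.

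Next I would extremise the linear functional $q\mapsto\hat q^c=\sum_k p_k q_k^c$ over the credal set $\mathcal{P}[b_y]$. Because the focal elements are disjoint, a probability $\{p_k\}\in\mathcal{P}[b_y]$ is precisely one satisfying $\sum_{q_k\in\tilde{\mathcal{Q}}^j}p_k=p_j(y)$, $p_k\ge 0$; maximising (resp.\ minimising) $\sum_k p_k q_k^c$ under these constraints is a separable linear program, solved by putting all the mass $p_j(y)$ of the $j$-th block onto the sample pose of that block with the largest (resp.\ smallest) $c$-th coordinate. This assignment is exactly one of the vertices (\ref{eq:prho}) of $\mathcal{P}[b_y]$, so the supremum and infimum over the whole credal set are attained. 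Writing the model-dependent constants
\[
M_j^c\doteq\max_{q_k\in\tilde{\mathcal{Q}}^j}q_k^c,\qquad m_j^c\doteq\min_{q_k\in\tilde{\mathcal{Q}}^j}q_k^c,
\]
we obtain the closed forms
\[
\sup\hat q^c(y)=\sum_{j=1}^n p_j(y)M_j^c=\frac{\sum_{j=1}^n\Gamma^j(y)M_j^c}{\sum_{k=1}^n\Gamma^k(y)},\qquad
\inf\hat q^c(y)=\frac{\sum_{j=1}^n\Gamma^j(y)m_j^c}{\sum_{k=1}^n\Gamma^k(y)}.
\]

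Finally I would conclude by smoothness: each $\Gamma^j\sim\mathcal{N}(\mu^j,\Sigma^j)$ is $C^\infty$ on $\mathcal{Y}$, the numerators are finite linear combinations of such densities and hence $C^\infty$, and the denominator $\sum_k\Gamma^k(y)$ is a sum of everywhere strictly positive Gaussian densities, hence nowhere-vanishing. A quotient of a $C^\infty$ function by a nowhere-vanishing $C^\infty$ function is $C^\infty$, so $\sup\hat q^c$ and $\inf\hat q^c$ are smooth functions of $y$, proving the claim. I expect the only genuinely delicate step to be the extremisation argument — confirming that the sup and inf of the expected pose over the entire credal set are reached at the block-wise extreme vertices — which here follows cleanly from the disjointness of the focal elements (for a general belief function one would have to track the full vertex set (\ref{eq:prho}), and the multiple-feature case of Figure \ref{fig:justification}-right would require composing this argument with the conjunctive rule (\ref{eq:conjunctive}), which is outside the scope of this statement). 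Re-introducing $m(\Theta)=1/n$ only adds the focal element $\tilde{\mathcal{Q}}$ with mass $m(\Theta)$, replacing the bounds by $(1-m(\Theta))\sum_j p_j(y)M_j^c+m(\Theta)\max_k q_k^c$ and its lower analogue — still a quotient of smooth functions by a positive sum of Gaussians — so the conclusion is unaffected.
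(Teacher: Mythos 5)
Your proof is correct and follows essentially the same route as the paper's: exploit the disjointness of the focal elements $\tilde{\mathcal{Q}}^j$ to decompose the extremisation over the credal set into independent per-block linear programs, solve each by concentrating the block mass $\Gamma^j(y)/Z$ on the extremal sample coordinate, and conclude smoothness from that of the Gaussian components. If anything you are slightly more careful than the paper, which describes the result as a ``linear combination of the smooth functions $\Gamma^j(y)$'' while silently absorbing the $y$-dependent normaliser $Z=\sum_k\Gamma^k(y)$; your observation that the quotient is smooth because the denominator is a nowhere-vanishing sum of Gaussians closes that small gap, and your remark on the $m(\Theta)=1/n$ case covers a situation the paper explicitly sets aside.
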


\begin{proof}
We only prove the statement for the upper bound. A dual proof can be easily derived for the lower bound. The former quantity reads as:
\[
\sup_{p \in \mathcal{P}[\hat{b}(y)]} \sum_{k=1}^T p_k(y) q^c_k,
\]
where $\hat{b}(y)$ is the belief estimate generated by a test feature value $y$, and $\mathcal{P}[\hat{b}(y)]$ is the corresponding credal set. Since we consider a single feature model with $m(\Theta)=0$, $\hat{b}(y)$ has $n$ focal elements $\tilde{\mathcal{Q}}^1,...,\tilde{\mathcal{Q}}^n$ with mass $m(\tilde{\mathcal{Q}}^j) = \Gamma^j(y)/Z$, with $Z$ a normalization factor. Each is the image of a EM cluster in the feature space. Together they form a disjoint partition of $\tilde{\mathcal{Q}}$, so that:
\[
\sum_{q_k \in \tilde{\mathcal{Q}}^j} p_k(y) = m(\tilde{\mathcal{Q}}^j) = \frac{\Gamma^j(y)}{Z}. 
\]
Therefore, we can decompose the upper bound as:
\[
\sup \hat{q}^c(y) = \sup \sum_{q_k \in \tilde{\mathcal{Q}}} p_k(y) q^c_k = \sup \Big ( \sum_{j=1}^n \sum_{q_k \in \tilde{\mathcal{Q}}^j} p_k(y) q^c_k \Big ) = \sum_{j=1}^n \sup \Big ( \sum_{q_k \in \tilde{\mathcal{Q}}^j} p_k(y) q^c_k \Big ).
\]
But
\[
\sup \Big ( \sum_{q_k \in \tilde{\mathcal{Q}}^j} p_k(y) q^c_k \Big ) = \frac{\Gamma^j(y)}{Z} \sup_{q_k \in \tilde{\mathcal{Q}}^j} q^c_k,
\]
for the $\sup$ is obtained by assigning all mass $\frac{\Gamma^j(y)}{Z}$ to the sample $q_k$ with the largest pose component value. The quantity $\sup_{q_k \in \tilde{\mathcal{Q}}^j} q^c_k$ does not depend on the test feature value $y$, but is a function of the samples in the considered cluster $j$. Therefore
\[
\sup \hat{q}^c(y) = \frac{1}{Z} \sum_{j=1}^n \Gamma^j(y) \sup_{q_k \in \tilde{\mathcal{Q}}^j} q^c_k
\]
is a smooth function, the linear combination of the smooth functions $\Gamma^j(y)$ with coefficients $\sup_{q_k \in \tilde{\mathcal{Q}}^j} q^c_k$.
\end{proof}

These lower and upper bounds are depicted as solid blue lines in the example of Figure \ref{fig:justification}-left. Within those smooth bounds, any one-to-many mapping is admissible, even discontinuous ones: a quite realistic situation, for the actual pose space $\mathcal{Q}$ can have holes composed by non-admissible poses, generating discontinuities in the feature-pose map.\\ The width of this family of mappings is a function of the number $n$ of EM clusters:
\[
\sup \hat{q}(y) - \inf \hat{q}(y) =  \frac{1}{Z} \sum_{j=1}^n \Gamma^j(y) \Big ( \sup_{q_k \in \tilde{\mathcal{Q}}^j} q_k - \inf_{q_k \in \tilde{\mathcal{Q}}^j} q_k \Big ).
\]
A low $n$ amounts to a cautious approach in which training feature values are not `trusted', and the inherent ambiguity (number of training samples in $\mathcal{Q}(y)$) is higher. Many clusters ($n \rightarrow |\tilde{\mathcal{Q}}|$) indicate that we much trust the one-to-one mapping provided by the interpolator over the samples.

\subsubsection{Effect of $m_i(\Theta_i)$ and sparsity of samples} 

An additional element in our regression framework is constituted by the mass $m_i(\Theta_i)$ assigned to the whole approximate feature space $\Theta_i$ (Section \ref{sec:mf}). Its effect on the family of admissible maps is to further expand the band of estimates, depending on the number of available training samples (as $m_i(\Theta_i)=1/T$). In Figure \ref{fig:justification}-left the expanded upper and lower bounds due to the use of Dirichlet belief functions are depicted as dashed blue lines. It can be appreciated how these are still smooth functions of $y$.


\subsubsection{Choice of an interpolation operator}

The shape of the family of mappings represented by a single-feature evidential model (i.e., of its lower and upper bounds) is also determined by the choice of the interpolation operator $\hat{q} = \mathcal{I}(\{ p_k,q_k \}, y )$. In Section \ref{sec:estimation} the interpolator function was the expectation operator $\hat{q} = \sum_k p_k q_k$, but other choices are of course possible.\\
Different interpolators generate different families of feature-pose mappings. 

\subsubsection{Fusion of individual features} 

An additional layer of sophistication is introduced by the combination of distinct features via the conjunctive combination of the associated belief functions. This produces a rather complex families of compatible feature-to-pose mappings. Figure \ref{fig:justification}-right illustrates the shape of the lower and upper bounds generated by an evidential model with $N=2$ feature spaces. It can be noted that, despite their more complex shape, these bounds are still smooth.

\subsection{Differences and similarities with Gaussian Process Regression} \label{sec:comparison-gpr}

It can be interesting to compare the behavior of BMR with that of the classical \emph{Gaussian Process Regression} (GPR) \cite{rasmussen06gpr}. The latter assumes that any finite set of observations are drawn from a multivariate Gaussian distribution. According to~\cite{rasmussen06gpr}, a Gaussian process is defined as `{a collection of random variables, any finite number of which have (consistent) joint Gaussian distribution'. It is then completely specified by a mean $m(\mathbf{s})$ and a covariance $k(\mathbf{s},\mathbf{s}^\prime)$ function over the samples' (observations) domain, and it can be seen as a distribution over functions: 
\begin{equation} \label{eq:gp}
\mathbf{\zeta}(\mathbf{s})  \sim \mathcal{GP}_{j}(m(\mathbf{s}),k(\mathbf{s},\mathbf{s}^\prime)),
\end{equation}
where $m(\mathbf{s}) = E[\mathbf{\zeta}(\mathbf{s})]$, and  
\begin{equation} \label{eq:cov}
k(\mathbf{s},\mathbf{s}^\prime) =  E[(\mathbf{\zeta}(\mathbf{s}) -  m(\mathbf{s})) (\mathbf{\zeta}(\mathbf{s}^\prime) -m(\mathbf{s}^\prime))].
\end{equation}

If the covariance function (\ref{eq:cov}) depends on a set of hyperparameters, given a training set of noisy observations $\{(s_k=y_k,\zeta_k = q_k)\}_{K=1,...,T}$, and assuming the prediction noise to be Gaussian, we can find the optimal hyperparameters of the Gaussian Process $\mathcal{GP}$ which best fits the data by maximizing the log marginal likelihood (see~\cite{rasmussen06gpr} for more details).

With the optimal hyperparameters, we obtain a Gaussian prediction distribution in the space of targets (poses):
\begin{equation}
\label{eq:GPRreference}
\mathcal{N}\big (\mathbf{k}(s^{\ast}, \mathbf{s})^{T}[K + \sigma_{noise}^{2}I]^{-1}\varPsi^{\prime}, 
k(s^{\ast},s^{\ast})+\sigma_{noise}^{2}-\mathbf{k}(s^{\ast},\mathbf{s})^{T}[K + \sigma_{noise}^{2}I]^{-1}\mathbf{k}(s^{\ast},\mathbf{s}) \big),
\end{equation}
where $K$ is the covariance matrix calculated from the training image features $\mathbf{s}$ and $\sigma_{noise}$ is the covariance of the Gaussian noise. This is equivalent to having an entire family of regression models, all of which agree with the sample observations. 

Both GP and BM Regression model a family of feature-to-pose mappings, albeit of a rather different nature. In Gaussian Process Regression, mappings are one-to-one, and a Gaussian Process amounts to a probability distribution over the set of mappings. The form of the family of mappings actually modeled is determined by the choice of a covariance function, which also determines a number of characteristics of the mappings such as periodicity, continuity, etcetera. After conditioning a Gaussian Process by the training data, we obtain a prediction function (Equation~(\ref{eq:GPRreference})) on $\mathcal{Q}$ which follows a Gaussian distribution (given a test observation and the trained model parameters). The predicted mean and variance vary according to the test observations. In particular the training samples are assumed correct and trustworthy: as a result, the posterior GP has zero uncertainty there.

In opposition, Belief Modeling Regression produces a random set, an entire convex set of discrete but arbitrary PDFs, but on the set of sample poses $\tilde{\mathcal{Q}}$, rather than on $\mathcal{Q}$. As we have seen, given an interpolation function this random set corresponds to a constrained family of mappings, rather than a distribution over the possible maps as in GPR. The resulting mappings are arbitrary and interval-like, as long as they meet the upper and lower constraints, or, equivalently, as long as they generate the learned refinings under the training data. The shape of the family of mappings does depend on the chosen interpolation operator, while its width is a function of the number of clusters $n$ and the mass of the whole feature space $\Theta$. A feature of BMR is that uncertainty is present even in correspondence of sample feature values (see above).

Different is the treatment of the uncertainty induced by the scarcity of samples (i.e., far from the samples). In GPR the standard deviation of the prediction function is influenced by both the type of prior GP selected and the distance from the samples. In BMR the width of the interval of pose estimates is influenced by both the number $n_i$ of EM feature clusters, and the mass $m(\Theta_i)$ Dirichlet belief functions assign to the whole (approximate) feature space.

\subsection{Different inference mechanisms} \label{sec:inference}

Dirichlet belief functions are not the only possible way of inferring a belief function from a set of likelihoods. Another option is to normalize the likelihoods (\ref{eq:mog}) generated by the MoG, obtaining a probability (or \emph{Bayesian}  b.f.) on $\Theta_i = \{ \mathcal{Y}_i^1, \cdots, \mathcal{Y}_i^{n_i}\}$:
\begin{equation}\label{eq:bayesian-mf}
m_i(\mathcal{Y}_i^j) =  \Gamma_i^j(y_i) \Big/ \sum_k \Gamma_i^k(y_i).
\end{equation}
Alternatively, the likelihood values can be used to build a \emph{consonant} belief function (see Chapter \ref{cha:toe}, Definition \ref{def:consonant}), i.e., a b.f. whose focal elements $A_1 \subset \cdots \subset A_m$ are nested, as in \cite{Shafer76}:
\begin{equation}\label{eq:shafer}
b_i(A) = 1 -  \max_{j : \mathcal{Y}_i^j \in A^c }{ \Gamma_i^j(y_i)} \Big /
\max_{j}{\Gamma_i^j(y_i)}.
\end{equation}
The three different Bayesian (\ref{eq:bayesian-mf}), consonant (\ref{eq:shafer}), and Dirichlet (\ref{eq:dirichlet}) inference algorithms seem to produce comparable results in terms of pointwise estimates, at least under the experimental setting of Section \ref{sec:results}, characterized by low conflict. Significant differences emerge, however, if we investigate the nature of the belief estimate the different inference techniques generate.

In the Bayesian case, as the belief functions on the individual feature spaces $\Theta_i$ have disjoint (singleton) focal elements, their projection onto $\tilde{\mathcal{Q}}$ also has disjoint focal elements. The conjunctive combination of all such b.f.s yields again a belief estimate $\hat{b}$ whose focal elements are disjoint (Figure \ref{fig:appendix1}-left). This means that a region of the pose space is supported by $\hat{b}$ only to the extent by which it is supported by \emph{all} the individual features. 

If the belief functions built on the available feature spaces are Dirichlet, their projections onto $\tilde{\mathcal{Q}}$ all have the whole $\tilde{\mathcal{Q}}$ as a focal element. Therefore, their conjunctive combination (\ref{eq:conjunctive}) will have as f.e.s not only all the intersections of the form
$
\rho_1(A_1) \cap \cdots \cap \rho_N(A_N)
$
for all possible selections of a single focal element $A_i$ for each measurement function $b_i$, but also all the intersections
$
\rho_{i_1}(A_{i_1}) \cap \cdots \cap \rho_{i_m}(A_{i_m})
$
(where $i_1,...,i_m$ index {any subset} of features), and the whole approximate pose space $\tilde{\mathcal{Q}}$ (Figure \ref{fig:appendix1}-middle). This is equivalent to say that all {partial} combinations of feature evidence are given some credit, for maybe only a {subset} of features is telling the truth. When conflict among different feature models is present, this amounts to a cautious approach in which \emph{the most consensual group of features is given support}. The more so whenever the remaining features are highly discounted as less reliable ($m_i(\Theta_i)$ is high). 

Finally, in the consonant case the conjunctive combination of single-feature belief functions yields a belief estimate $\hat{b}$ whose focal elements also form chains of nested sets of poses: one can say that the resulting belief estimate is `multi-modal', with a focus on a few regions of the (approximate) pose space (Figure \ref{fig:appendix1}-right).

\begin{figure}[ht!]
\vspace{-0mm}
\begin{center}
\includegraphics[width=\textwidth]{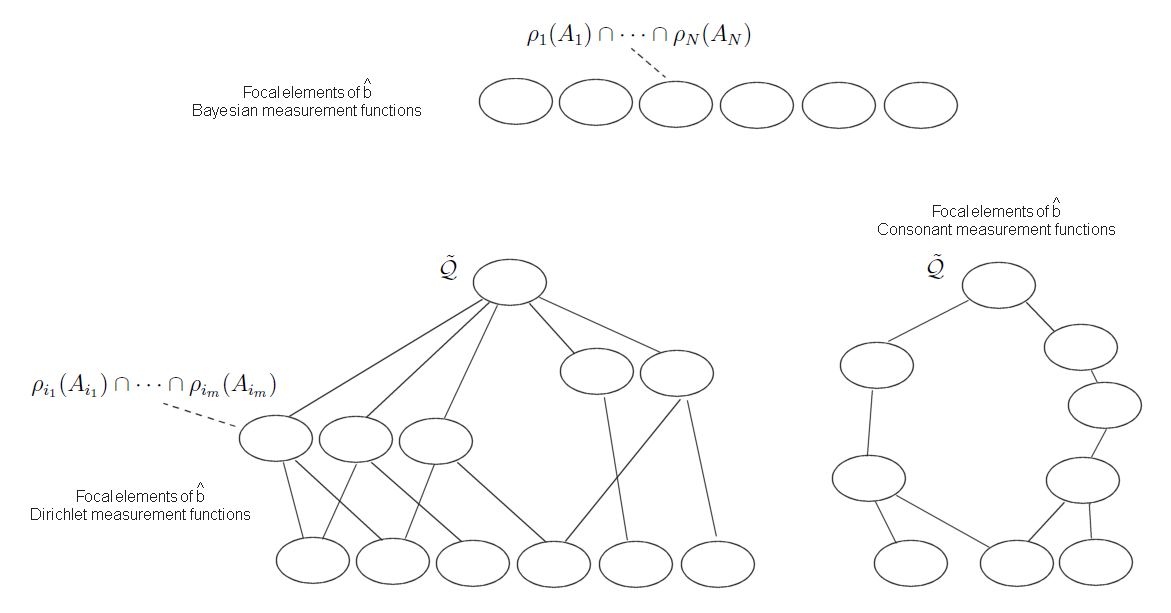}
\vspace{-0mm}
\end{center}
\caption{Left: when using Bayesian belief functions to encode feature values all the focal elements of the belief estimate $\hat{b}$ on $\tilde{\mathcal{Q}}$ are disjoint, namely intersections $\rho_1(A_1) \cap \cdots \cap \rho_N(A_N)$ of one focal element per feature. Focal elements are depicted as ellipses, while edges between them indicate inclusion $\subset$. Middle: in the Dirichlet case all the intersections $\rho_{i_1}(A_{i_1}) \cap \cdots \cap \rho_{i_m}(A_{i_m})$ generated by groups of $m$ features are also focal elements of $\hat{b}$ with nonzero mass. Right: in the consonant case a number of chains of nested f.e.s describe a multi-modal support for different regions of the pose space. \label{fig:appendix1}}
\vspace{-0mm}
\end{figure}

It is interesting to compare these three approaches by looking at the associated credal sets as well. The size of the credal set represented by a belief estimate is a function of two distinct sources of uncertainty: that associated with the belief function $b_i$ we build on each approximate feature space $\Theta_i$, and the multi-valued mapping from features to poses. Even when the former is a probability (Bayesian case), the multi-valued mapping still induces a belief function on $\tilde{\mathcal{Q}}$. 

Consider a restricted, toy model obtained from just the first four training poses $q_1, q_2, q_3$ and $q_4$ in the arm experiment. This way $\tilde{\mathcal{Q}}$ has size 4, and the probability simplex there (see Figure \ref{fig:ex-convex} again) is 3-dimensional and can be visualized.
\begin{figure}[ht!]
\vspace{-0mm}
\begin{center}
\includegraphics[width=\textwidth]{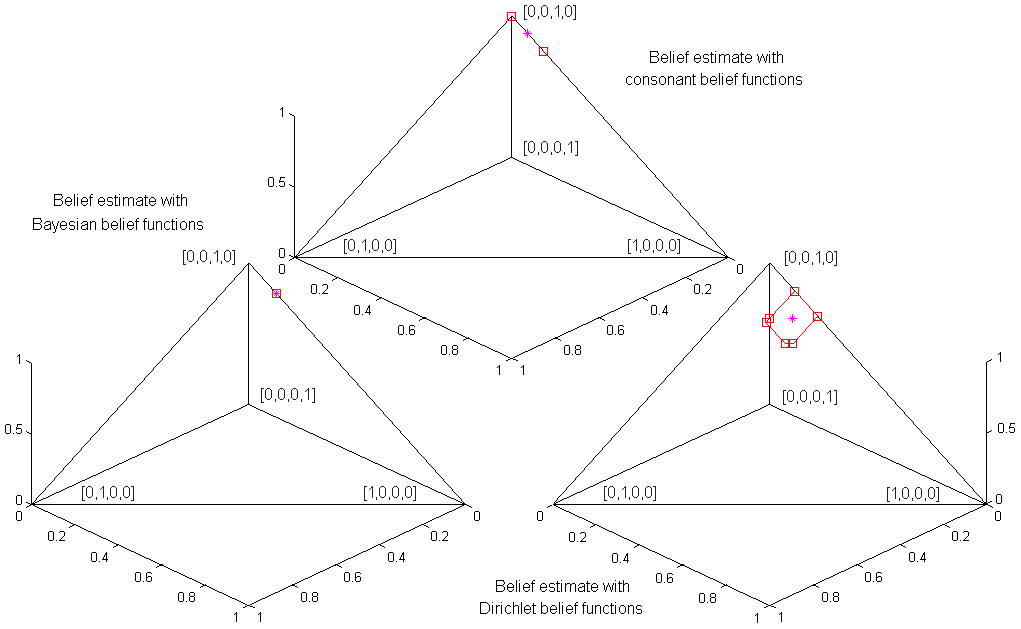}
\vspace{-0mm}
\end{center}
\caption{Left: belief estimate, represented by a credal set in the simplex of all probability distributions on $\tilde{\mathcal{Q}}$, generated by using Bayesian measurement belief functions for frame $k=13$ of the arm experiment, for a model learned from the first 4 sample poses only. Middle: belief estimate for the same frame, generated using consonant measurement functions. Right: belief estimate produced via Dirichlet measurement functions. \label{fig:appendix2}}
\vspace{-0mm}
\end{figure}
Figure \ref{fig:appendix2} depicts the credal sets generated by the three inference mechanisms (under the above toy model) in correspondence of frame 13 of the test feature sequence of the arm experiment. One can note how the credal set in the Bayesian case is the narrower (in fact in this example it reduces to a single point, although not in general), while it is the widest in the Dirichlet case. The latter amounts therefore to a more cautious approach to estimation allowing a wider uncertainty band around the central expected value (which is the one we adopt in the tests, Figure \ref{fig:interval}). In the Bayesian case, instead, all the uncertainty in the belief estimate comes from the multi-valued nature of refining maps. 

While the size and shape of the credal set varies, the pignistic probability (magenta star) is pretty close in the Bayesian and Dirichlet cases. Empirical evidence seems therefore to suggest that, when conflict is limited, pointwise estimates in the three cases are fairly close (while differing in the attached degree of uncertainty).

\section{Towards evidential tracking} \label{sec:tracking}

To conclude, we outline feasible options for extending the proposed belief-theoretical approach to fully fledged tracking, in which the temporal information provided by a time series of feature values is exploited to help pose estimation. Suppose you have a belief estimate $\hat{b}(t)$ of the pose at time $t$, and a fresh set of features at time $t+1$. The simplest way of ensuring the temporal consistence of the estimated pose is to combine the current estimate $\hat{b}(t)$ with the evidence provided by the new feature values. Namely, the latter will induce a belief estimate $\hat{b}(t+1)$ via the algorithm of Section \ref{sec:alg-pose}; this has then to be combined with the old estimate by conjunctive combination, yielding an overall, `smoothed' version of the estimate: $\hat{b}(t) \Ocap \hat{b}(t+1)$.\\ This approach, however, can easily lead to a drifting of the estimates, as no motion model whatsoever is employed. In addition, it can be argued that in this way features at time $t$ condition the estimates at time $t+1$ just as feature at time $t+1$ do, which is wrong as a matter of principle.

The use of a motion model encoding the dynamics of the object to track is more sensible: however, if this model were to be a-priori we would violate the assumptions of the example-based scenario of Section \ref{sec:problem}. The way to go is \emph{learning a motion model from the training set}, in the same way as we learn feature-pose maps from it. Assuming that the temporal dependency satisfies a Markovian-like condition, i.e., that the pose at time $t+1$ only depends on the pose at time $t$, the following framework can be formulated.

\subsection{Learning a motion model from the training set}

Consider a frame of discernment $\tilde{\dot{\mathcal{Q}}} = \tilde{{\mathcal{Q}}} \times \tilde{{\mathcal{Q}}}$ whose elements $(q_k,q_{k'})$ can be interpreted as transitions $q_k \mapsto q_{k'}$ from sample pose $q_k$ to sample pose $q_{k'}$. This frame is trivially partitioned into $T$ disjoint subsets $\tilde{\dot{\mathcal{Q}}} = \tilde{\mathcal{Q}}_1 \cup \cdots \cup \tilde{\mathcal{Q}}_T$, each of them $\tilde{\mathcal{Q}}_k$ associated with a sample pose $q_k$, and collecting all possible $q_k \mapsto q_{k'}$ transitions originating from $q_k$. We can then mine the information carried by the training set, and infer for each element of this partition a belief function $m_k : 2^{\tilde{\mathcal{Q}}_k} \rightarrow [0,1]$ with the following b.p.a.: 
\[
m_k (q_k \mapsto q_{k'}) = (1 - \epsilon) \cdot \frac{\# transitions \; from \; q_k \; to \; q_{k'}}{\# times \; q_k \; appears \; in \; \tilde{{\mathcal{Q}}} }, \hspace{5mm} m_k (\tilde{\mathcal{Q}}_k) = \epsilon. 
\]
The discounting factor $\epsilon$ is a measure of how well the motion model learned from the training set approximates the true, unknown model of the object's dynamics: in the ideal case $m_k (\tilde{\mathcal{Q}}_k) = 0$. As this is achieved only by collecting an infinite number of samples, $\epsilon = \frac{1}{\# times \; q_k \; appears \; in \; \tilde{{\mathcal{Q}}}}$ is a reasonable albeit imperfect choice for such a factor. The training set also provides a-priori information on the sample poses themselves (i.e., on the elements $\tilde{\mathcal{Q}}_k$ of the considered disjoint partition of $\tilde{\dot{\mathcal{Q}}}$), which can be encoded as a probability distribution on the partition itself: 
\[
m_0 (\tilde{\mathcal{Q}}_k) = \frac{\# times \; q_k \; appears \; in \; \tilde{{\mathcal{Q}}}}{T}.
\]

\subsection{Tracking process}

Given the above belief functions with b.p.a.s $m_1$, ..., $m_T$ defined on the individual elements $\tilde{\mathcal{Q}}_1$, ..., $\tilde{\mathcal{Q}}_T$ of the partition, and the a-priori distribution $m_0$ on the latter, we need to derive a single belief function on the transition frame $\tilde{\dot{\mathcal{Q}}}$. This amount to solving the total belief theorem \cite{cuzzolin01thesis}, formulated in Chapter \ref{cha:total}, Theorem \ref{the:total-belief}. 

\begin{figure}[t!]
\begin{center}
\includegraphics[width=0.5 \textwidth]{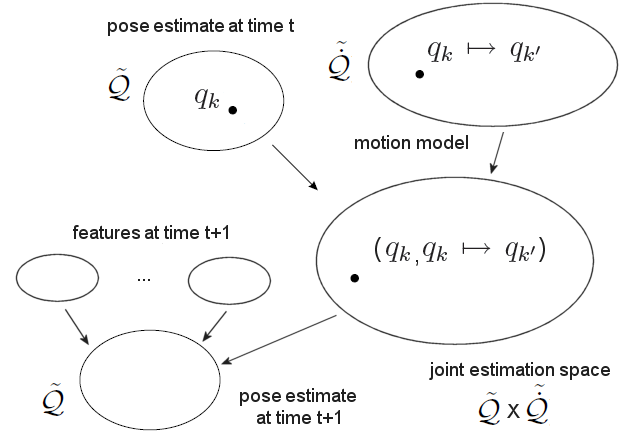}
\end{center}
\caption{Diagram of the proposed evidential tracking process. \label{fig:tracking}}
\end{figure}

Once such a total belief function on $\tilde{\dot{\mathcal{Q}}}$ representing the learned motion model is obtained, it can be combined with the current belief pose estimate $\hat{b}(t)$ (which is defined on $\tilde{\mathcal{Q}}$) on the joint estimation space $\tilde{\dot{\mathcal{Q}}} \times \tilde{\mathcal{Q}}$, the Cartesian product of the two (Figure \ref{fig:tracking}). The resulting belief function can later be projected back onto the approximate pose space $\tilde{\mathcal{Q}}$, where it represents \emph{the predicted pose given the pose at time $t$ and the learned motion model}.\\ Finally, the latter is combined with the belief functions inferred from the available feature measurements at time $t+1$, yielding a belief estimate of the pose $\hat{q}(t+1)$ which incorporates both the current feature evidence and the predictions based on the motion model learned in the training stage. 

\section{Conclusive comments}

In this conclusing Chapter we have illustrated a novel approach to example-based pose estimation, in which the available evidence comes in the form of a training set of images containing sample poses of an unspecified object, whose location within those images is provided. Ground truth is available in the training stage in the form of the configurations of these sample poses. An evidential model of the object is learned from the training data, under weak likelihood models built separately for each feature, and is exploited to estimate the pose of the object in any test image. Framing the problem within belief calculus is natural as feature-pose maps induce belief functions in the pose space, and it allows to exploit the available, limited evidence without additional assumptions, with the goal of producing the most sensible possible estimate with an attached degree of reliability.

The approach has been tested in a fairly challenging human pose recovery setup and shown to outperform popular competitors, demonstrating its potential even in the presence of poor feature representations. These results open a number of interesting directions: a proper empirical testing of object localization algorithms in conjunction with the proposed Belief Modeling Regression approach; an efficient conflict resolution mechanism able to discriminate as much as possible foreground from background features; the testing of the framework on higher-dimensional pose spaces; the full development of the outlined evidential tracking approach.

\part{Conclusions}

\chapter{Conclusions} \label{cha:conclusions}

The aim of this Book was not to prove that the generalization of probability theory due to the work of Dempster and Shafer in the first place, but also of Smets, Shenoy, Denoeux, Kohlas and others, is \emph{the} right way to cope with uncertainty in all practical situations arising in computer vision and other fields of applied science. As we have largely shown in our extensive review of Chapter \ref{cha:state}, the debate on the philosophical foundations of uncertainty theory is still raging. To date, however, the most widespread view supports the notion that no one formalism can be seen as superior to all others -- the choice, in fact, should depend on the specific application at hand \cite{smets98which,scozzafava94subjective}. Partecipating in this debate, at any rate, is beyond the scope of this Book.\\ We hope instead to have at least succeeded in giving a flavor of the richness of the mathematical formalism of the theory of evidence, in terms of both its applicability to many important problems (in particular within computer vision), and of the variety of sophisticated theoretical issues generated by the greater mathematical complexity and `internal structure' belief functions possess when compared to classical probabilities. Once again we would like to stress that all the theoretical advances presented here are direct consequences of the formulation of evidential solutions to the vision applications of Part III, which have been illustrated in detail later in the volume only for pedagogical reasons.

The number of facts and properties yet to be understood remains daunting, and only a brief mentioning of some of them is possible here. The geometric analysis of Chapter \ref{cha:geo} is only at its initial stage, even though interesting results have already been achieved. We now have a picture of the behavior of belief functions as geometrical objects, but the questions which initially motivated this approach are still to be addressed. General expressions for the natural consonant and probabilistic transformations belief functions based on the principle of `external', Dempster-based behavior proposed in Section \ref{sec:approx} are to be found. The geometric `language' we introduced, made possible by the commutativity of Dempster's rule and convex closure, appears promising -- especially for what concerns the canonical decomposition of a generic separable support function. Some steps in this direction have been recently taken in \cite{cuzzolin2019springer}.

A fascinating consequence of our algebraic study of families of compatible frames, given their strong connection to the very definition of support function, could be a new interpretation of the notion of families of compatible support functions -- an issue closely related to the future potential solution of the conflict problem via the formulation of a pseudo Gram-Schmidt algorithm we suggested in Chapter \ref{cha:independence}. The canonical decomposition problem itself could be approached from a different angle by integrating geometrical and algebraic tools to study the geometric interplays of belief spaces associated with different compatible frames. The notion of \emph{lattice of convex hulls} \cite{Stern} could prove very useful in the pursue of this line of research.

The notion of series of random variables or `random process' is widely used in a variety of engineering applications, including control engineering and computer vision -- think of the Kalman filter formalism. The geometric form of conditional subspaces derived in Chapter \ref{cha:geo} could be employed to understand the properties and features of what it is natural to call \emph{series of belief functions}, namely objects of the form:
\[
\lim_{n\rightarrow \infty} (b_1 \oplus \cdots \oplus b_n).
\]
A study of their asymptotic properties would kickstart the formulation of the evidential analogous of a random process.

The generalization of the total probability theorem illustrated in Chapter \ref{cha:total} is, in our view, a good first step towards a satisfactory description of the combination of conditional functions in the evidential framework. Our analysis of the restricted case provides important hints on the general treatment of the total belief theorem, in terms of the multiplicity, structure and relationships of the solutions. In the future we intend to investigate in more detail several alternative views of the total belief problem, an analysis which could enrich our understanding of the theory of belief functions and its connections with apparently unrelated fields of mathematics.\\ Homology theory, for instance, provides a natural framework in which to pose the graph-theoretical problems which arise from the search of candidate total functions. Indeed, the collection of linear systems associated with candidate solutions form a \emph{simplicial complex} (a structured collection of simplices, see \cite{Stern}). The transformations (\ref{eq:column-transformation}) associated with the edges of a solution graph resemble the formal sum of a `{chain}':
\[
c_k=\sum_i g_i\sigma_i,
\]
where $\sigma_i$ is a $k$-dimensional simplex of the complex and $g_i\in G$ is an arbitrary element of a group of transformations $G$.

Conditional subspaces constitute a bridge between the operations of conditioning with respect to \emph{functions} and conditioning with respect to \emph{events}. It suffices to recall that:
\[
\langle b \rangle = Cl(b \oplus b_A, \; A \subset \mathcal{C}_b),
\]
where $b \oplus b_A = b|_A$ is the belief function $b$ conditioned by the event $A$. Via this fact, the total belief theorem can be formulated as a geometric problem as well.

Finally, control engineers could easily spot a different interpretation of candidate total solutions in term of {positive linear systems}. The link is apparent whenever one compares the graph-theoretical layout of candidate solutions with the `{influence graphs}' of linear systems with all positive coefficients, or rearranges their $A$ matrices into matrices with binary $0/1$ entries. Potentially elegant theoretical results could then be achieved by expressing problems concerning conditional generalized probabilities in terms of well-known system-theoretical issues such as, for instance, controllability.

The regression framework proposed in Chapter \ref{cha:pose} to solve the example-based pose estimation problem is just an example of the potential the application of evidential reasoning to difficult, real-world problems can express. Researchers in the theory of evidence tend (at times) to form a small clique of insiders, focussing on theoretical questions of little interest to the outside world. Attempts to generate impact on problems relevant to larger academic communities are limited, as attested by the diminishing numbers of belief functions papers published at UAI, IJACAI or ECAI. The Belief Modeling Regression framework shows that belief functions can compete with and outperform popular machine learning apparata such as GPs and RVMs to tackle problems of widespread interest. 

We hope this will encourange others to rise up to the challenge and compete with more established formalisms on their own ground.

\addtocontents{toc}{}

\end{document}